\title[Coherent choice functions in terms of desirability]{Interpreting, axiomatising and representing\\ coherent choice functions\\ in terms of desirability}
\author{Jasper De Bock \and Gert de Cooman}
\address{Ghent University, ELIS, FLip}
\email{\{jasper.debock,gert.decooman\}@ugent.be}
\DeclarePairedDelimiter{\group}{(}{)}
\DeclarePairedDelimiter{\set}{\{}{\}}
\DeclarePairedDelimiter{\structure}{\langle}{\rangle}
\DeclarePairedDelimiter{\card}{\vert}{\vert}
\newcommand{\naturals}{\mathbb{N}}
\newcommand{\naturalswithzero}{\mathbb{N}_0}
\newcommand{\reals}{\mathbb{R}}
\newcommand{\posreals}{\reals_{>0}}
\newcommand{\unit}{[0,1]}
\newcommand{\states}{\mathcal{X}}
\newcommand{\rewards}{\mathcal{R}}
\newcommand{\rrewards}[1][\rho]{\mathcal{R}_{#1}}
\newcommand{\statesandrewards}{\states\times\rewards}
\newcommand{\statesandrrewards}{\states\times\rrewards}
\newcommand{\gbls}{\mathcal{L}}
\newcommand{\posgbls}{\gbls_{>0}}
\newcommand{\gblson}[1]{\gbls(#1)}
\newcommand{\gblsonstates}{\gblson{\states}}
\newcommand{\gblsonrewards}{\gblson{\rewards}}
\newcommand{\gblsonstatesandrewards}{\gblson{\statesandrewards}}
\newcommand{\gblsonstatesandrrewards}{\gblson{\statesandrrewards}}
\newcommand{\strictposgbls}{\gbls_{\mathrm{sp}}}
\newcommand{\hls}{\mathcal{H}}
\newcommand{\hlson}[1]{\hls(#1)}
\newcommand{\hlsonstatesandrewards}{\hlson{\statesandrewards}}
\newcommand{\difspace}{\mathcal{D}} % we may want to choose a different symbol here, multiple use.
\newcommand{\difspaceon}[1]{\difspace(#1)}
\newcommand{\difspaceonstatesandrewards}{\difspaceon{\statesandrewards}}
\newcommand{\gblgt}{>}
\newcommand{\gblgteq}{\geqslant}
\newcommand{\optlt}[1][]{\prec_{#1}}
\newcommand{\optgt}[1][]{\succ_{#1}}
\newcommand{\optlteq}[1][]{\preceq_{#1}}
\newcommand{\preflt}[1][]{\lhd_{#1}}
\newcommand{\prefgt}[1][]{\rhd_{#1}}
\newcommand{\hlprefgt}[1][]{\rhd^\ast_{#1}}
\newcommand{\hlsetlt}{\lhd^\ast}
\newcommand{\hlsetgt}{\rhd^\ast}
\newcommand{\hlgt}[1][]{\succ^\ast_{#1}}
\newcommand{\bestreward}{\top}
\newcommand{\worstreward}{\bot}
\newcommand{\opt}[1][]{u_{#1}}
\newcommand{\altopt}[1][]{v_{#1}}
\newcommand{\altopttoo}[1][]{w_{#1}}
\newcommand{\aopt}[1][]{a_{#1}}
\newcommand{\bopt}[1][]{b_{#1}}
\newcommand{\opts}{\mathcal{V}} % old version
\newcommand{\posopts}{\opts_{\optgt0}}
\newcommand{\nonposopts}{\opts_{\optlteq0}}
\newcommand{\singposopts}{\opts_{\optgt0}^{\mathrm{s}}}
\newcommand{\archopts}{\opts_{\mathrm{a}}}
\newcommand{\optset}[1][]{A_{#1}}
\newcommand{\altoptset}[1][]{B_{#1}}
\newcommand{\altoptsettoo}[1][]{C_{#1}}
\newcommand{\optsets}{\mathcal{Q}}
\newcommand{\assessment}{\mathcal{A}}
\newcommand{\desirset}[1][]{D_{#1}}
\newcommand{\maxdesirset}[1][]{\hat{D}_{#1}}
\newcommand{\desirsets}{\mathbf{D}}
\newcommand{\cohdesirsets}{\overline{\desirsets}}
\newcommand{\totcohdesirsets}{\overline{\desirsets}_{\mathrm{T}}}
\newcommand{\convcohdesirsets}{\overline{\desirsets}_{\mathrm{M}}}
\newcommand{\archconvcohdesirsets}{\overline{\desirsets}_{\mathrm{M,A}}}
\newcommand{\archcohdesirsets}{\overline{\desirsets}_{\mathrm{A}}}
\newcommand{\maxdesirsets}{\hat{\desirsets}}
\newcommand{\rejectset}[1][]{K_{#1}}
\newcommand{\maxrejectset}[1][]{\hat{K}_{#1}}
\newcommand{\natexdesirset}{\closure_{\cohdesirsets}}
\newcommand{\natexrejectset}{\closure_{\cohrejectsets}}
\newcommand{\convnatexrejectset}{\closure_{\convcohrejectsets}}
\newcommand{\rejectsets}{\mathbf{K}}
\newcommand{\cohrejectsets}{\overline{\rejectsets}}
\newcommand{\archcohrejectsets}{\overline{\rejectsets}_{\mathrm{A}}}
\newcommand{\totcohrejectsets}{\overline{\rejectsets}_{\mathrm{T}}}
\newcommand{\convcohrejectsets}{\overline{\rejectsets}_{\mathrm{M}}}
\newcommand{\archconvcohrejectsets}{\overline{\rejectsets}_{\mathrm{M,A}}}
\newcommand{\maxrejectsets}{\hat{\rejectsets}}
\newcommand{\chainofrejectsets}{\mathcal{K}}
\newcommand{\setofdesirsets}{\mathcal{D}}
\newcommand{\choicefun}[1][]{C_{#1}}
\newcommand{\rejectfun}[1][]{R_{#1}}
\newcommand{\gamblifier}[1][\rho]{\gbf_{#1}}
\newcommand{\hl}[1][]{H_{#1}}
\newcommand{\althl}[1][]{G_{#1}}
\newcommand{\althltoo}[1][]{F_{#1}}
\newcommand{\cuhl}{U}
\newcommand{\hlrejectfun}[1][]{R^\ast_{#1}}
\newcommand{\hlset}[1][]{A^\ast_{#1}}
\newcommand{\althlset}[1][]{B^\ast_{#1}}
\newcommand{\althlsettoo}[1][]{C^\ast_{#1}}
\newcommand{\hlsets}{\mathcal{Q}^\ast}
\newcommand{\gbl}[1][]{h_{#1}}
\newcommand{\altgbl}[1][]{g_{#1}}
\newcommand{\altgbltoo}[1][]{f_{#1}}
\newcommand{\cset}[3][]{\set[#1]{#2\colon#3}}
\newcommand{\then}{\Rightarrow}
\newcommand{\ifandonlyif}{\Leftrightarrow}
\newcommand{\upset}[1]{{\uparrow\!\!{#1}}}
\newcommand{\tosingletons}[1]{#1^{\mathrm{s}}}
\newcommand{\lowprev}[1][]{\underline{P}_{#1}}
\newcommand{\linprev}[1][]{P_{#1}}
\newcommand{\cohlowprevs}{\underline{\mathbf{P}}}
\newcommand{\linprevs}{\mathbf{P}}
\DeclareMathOperator{\posi}{posi}
\DeclareMathOperator{\setposi}{Posi}
\DeclareMathOperator{\chull}{conv}
\DeclareMathOperator{\linspan}{span}
\DeclareMathOperator{\SU}{Su}
\DeclareMathOperator{\RN}{Rn}
\DeclareMathOperator{\RP}{Rp}
\DeclareMathOperator{\RS}{Rs}
\DeclareMathOperator{\closure}{cl}
\DeclareMathOperator{\arch}{arch}
\DeclareMathOperator{\Arch}{Arch}
\DeclareMathOperator{\gbf}{gbf}
\DeclareMathOperator{\starify}{st}
\DeclareMathOperator{\Starify}{St}
\newtheorem{theorem}{Theorem}
\newtheorem{proposition}[theorem]{Proposition}
\newtheorem{lemma}[theorem]{Lemma}
\newtheorem{corollary}[theorem]{Corollary}
\theoremstyle{definition}
\newtheorem{definition}[theorem]{Definition}
\theoremstyle{remark}
\newtheorem*{runningexample}{Example}
\newcommand{\stilltodo}[1]{\textcolor{orange}{#1}}
\begin{document}
\begin{abstract}
Choice functions constitute a simple, direct and very general mathematical framework for modelling choice under uncertainty. 
In particular, they are able to represent the set-valued choices that appear in imprecise-probabilistic decision making. 
We provide these choice functions with a clear interpretation in terms of desirability, use this interpretation to derive a set of basic coherence axioms, and show that this notion of coherence leads to a representation in terms of sets of strict preference orders. 
By imposing additional properties such as totality, the mixing property and Archimedeanity, we obtain representation in terms of sets of strict total orders, lexicographic probability systems, coherent lower previsions or linear previsions.
\end{abstract}
\maketitle

%KEYWORDS
%choice functions
%sets of desirable options
%sets of desirable option sets
%coherence
%representation theorem
%natural extension theorem

\section{Introduction}\label{sec:introduction}
Choice functions provide an elegant unifying mathematical framework for studying set-valued choice: when presented with a set of options, they generally return a subset of them.
If this subset is a singleton, it provides a unique optimal choice or decision. 
But if the answer contains multiple options, these are incomparable and no decision is made between them. 
%This then typically indicates that the available information does not suffice to reasonably arrive at a single decision. 
Such set-valued choices are a typical feature of decision criteria based on imprecise-probabilistic uncertainy models, which aim to make reliable decisions in the face of severe uncertainty. 
Maximality and E-admissibility are well-known examples. 
When working with a choice function, however, it is immaterial whether it is based on such a decision criterion.
The primitive objects on this approach are simply the set-valued choices themselves, and the choice function that represents all these choices serves as an uncertainty model in and by itself.

The seminal work by Seidenfeld et al.~\cite{seidenfeld2010} has shown that a strong advantage of working with choice functions is that they allow us to impose axioms on choices, aimed at characterising what it means for choices to be rational and internally consistent. 
This is also what we want to do here, but we believe our angle of approach to be novel and unique: rather than think of choice intuitively, we provide it with a concrete interpretation in terms of desirability~\cite{couso2011,walley2000,cooman2010,cooman2011b} or binary preference~\cite{seidenfeld1995}. 
Another important feature of our approach is that we consider a very general setting, where the options form an abstract real vector space; horse lotteries and gambles correspond to special cases.

The basic structure of our paper is as follows. 
We start in Section~\ref{sec:choice:functions:and:interpretation} by introducing choice functions and our interpretation for them. 
Next, in Section~\ref{sec:coherent:sets:of:option:sets}, we develop an alternative but equivalent way of describing these choice functions: sets of desirable option sets.
We use our interpretation to suggest and motivate a number of rationality, or coherence, axioms for such sets of desirable option sets, and show in Section~\ref{sec:back:to:choice} what are the corresponding coherence axioms for choice (or rejection) functions.
Section~\ref{sec:binary:choice} deals with the special case of binary choice, and its relation to the theory of sets of desirable options \cite{couso2011,walley2000,cooman2010,cooman2011b} and binary preference.
This is important because our main result in Section~\ref{sec:representation} shows that any coherent choice model can be represented in terms of sets of such binary choice models.
In the remaining Sections~\ref{sec:totality}--\ref{sec:archimedeanity}, we consider additional axioms or properties, such as totality, the mixing property, and an Archimedean property, and prove corresponding representation results. 
This includes representations in terms of sets of strict total orders, sets of lexicographic probability systems, sets of coherent lower previsions and sets of linear previsions.
%Section~\ref{sec:conclusion} concludes our contribution.
To facilitate the reading, proofs and intermediate results have been relegated to the Appendix.

\section{Choice functions and their interpretation}\label{sec:choice:functions:and:interpretation}

A choice function $\choicefun$ is a set-valued operator on sets of options. 
In particular, for any set of options $\optset$, the corresponding value of $\choicefun$ is a subset $\choicefun\group{\optset}$ of $\optset$. 
The options themselves are typically actions amongst which a subject wishes to choose. We here follow a very general approach where these options constitute an abstract real vector space $\opts$ provided with a---so-called \emph{background}---vector ordering $\optlteq$ and a strict version $\optlt$. 
The elements $\opt$ of $\opts$ are called \emph{options} and $\opts$ is therefore called the \emph{option space}.
We let $\posopts\coloneqq\cset{\opt\in\opts}{\opt\optgt0}$.
The purpose of a choice function is to represent our subject's choices between such options.
% We will also use the notation $\singposopts\coloneqq\cset{\set{\opt}}{\opt\in\posopts}$.

Our motivation for adopting this general framework where options are elements of abstract vector spaces, rather than the more familiar one that focuses on choice between, say, horse lotteries \cite{aumann1962,aumann1964,nau2006,seidenfeld1995,seidenfeld2010}, is its applicability to various contexts.

A typical set-up that is customary in decision theory, for example, is one where every option has a corresponding reward that depends on the state of a variable~$X$, about which the subject is typically uncertain. 
Hence, the reward is uncertain too. 
As a special case, therefore, we can consider that the variable~$X$ takes values~$x$ in some set of states~$\states$. 
The reward that corresponds to a given option is then a function $\opt$ on $\states$. 
If we assume that this reward can be expressed in terms of a real-valued linear utility scale, this allows us to identify every act with a real-valued map on $\states$.
These maps are often taken to be bounded and are then called \emph{gambles} on $X$. 
In that context, we can consider the different gambles on $X$ as our options, and the vector space $\opts$ as the set of all such gambles. 
Two popular vector orderings on $\opts$ then correspond to choosing
\begin{equation*}
\posopts\coloneqq\cset{\opt\in\opts}{\opt\gblgteq0\text{ and }\opt\neq0}
\text{~or~}
\posopts\coloneqq\cset{\opt\in\opts}{\inf\opt>0},
\end{equation*}
where $\gblgteq$ represents the point-wise ordering of gambles, defined by
\begin{equation*}
\opt\gblgteq\altopt\ifandonlyif\group{\forall x\in\states}\opt(x)\geq\altopt(x). 
\end{equation*}

A more general framework, which allows us to dispense with the linearity assumption of the utility scale, consists in considering as option space the linear space of all bounded real-valued maps on the set $\statesandrewards$, where $\rewards$ is a (finite) set of rewards.
Zaffalon and Miranda \cite{zaffalon2017:incomplete:preferences} have shown that, in a context of binary preference relations, this leads to a theory that is essentially equivalent to the classical horse lottery approach.
It tends, however, to be more elegant, because a linear space is typically easier to work with than a convex set of horse lotteries.
Van Camp \cite{2017vancamp:phdthesis} has shown that this idea can be straightforwardly extended from binary preference relations to the more general context of choice functions.
We follow his lead in focusing on linear spaces of options here.

In both of the above-mentioned cases, the options are still bounded real-valued maps.
In fairly recent work, Van Camp et al.~\cite{2017vancamp:phdthesis,vancamp2015:indifference} have shown that a notion of indifference can be associated with choice functions quite easily, by moving from the original option space to its quotient space with respect to the linear subspace of all options that are assessed to be equivalent to the zero option.
Even when the original options are real-valued maps, the elements of the quotient space will be equivalence classes of such maps---affine subspaces of the original option space---which can no longer be straightforwardly identified with real-valued maps.
This provides even more incentives for considering options to be vectors in some abstract linear space $\opts$. 

Having introduced and motivated our abstract option space $\opts$, sets of options can now be identified with subsets of $\opts$, which we call \emph{option sets}. 
We restrict our attention here to \emph{finite} option sets and will use $\optsets$ to denote the set of all such finite subsets of $\opts$, including the empty set.

\begin{definition}[Choice function]\label{def:choicefunction}
A \emph{choice function} $\choicefun$ is a map from $\optsets$ to $\optsets$ such that $\choicefun\group{\optset}\subseteq\optset$ for every $\optset\in\optsets$. 
\end{definition}
\noindent
Options in $\optset$ that do not belong to $\choicefun\group{\optset}$ are said to be \emph{rejected}. 
This leads to an alternative but equivalent representation in terms of rejection functions: the \emph{rejection function} $\rejectfun[\choicefun]$ corresponding to a choice function $\choicefun$ is a map from $\optsets$ to $\optsets$, defined by $\rejectfun[\choicefun]\group{\optset}\coloneqq\optset\setminus\choicefun\group{\optset}$ for all $\optset\in\optsets$.

Alternatively, a rejection function $\rejectfun$ can also be independently defined as a map from $\optsets$ to $\optsets$ such that $\rejectfun\group{\optset}\subseteq\optset$ for all $\optset\in\optsets$.
The corresponding choice function $\choicefun[\rejectfun]$ is then clearly defined by $\choicefun[\rejectfun]\group{\optset}\coloneqq\optset\setminus\rejectfun\group{\optset}$ for all $\optset\in\optsets$.
\noindent
Since a choice function is completely determined by its rejection function, any interpretation for rejection functions automatically implies an interpretation for choice functions. 
This allows us to focus on the former.

Our interpretation for rejection functions---and therefore also for choice functions---now goes as follows. 
Consider a subject whose uncertainty is represented by a rejection function $\rejectfun$, or equivalently, by a choice function $\choicefun[\rejectfun]$. 
Then for a given option set $\optset\in\optsets$, the statement that an option $\opt\in\optset$ is rejected from $\optset$---that is, that $\opt\in\rejectfun\group{\optset}$---is taken to mean that \emph{there is at least one option $\altopt$ in $\optset$ that our subject strictly prefers over $\opt$}.

If we denote the strict preference of one option $\altopt$ over another option $\opt$ by $\altopt\prefgt\opt$, this can be written succinctly as
\begin{equation}\label{eq:interpretation:rejectfuns}
\group{\forall\optset\in\optsets}
\group{\forall\opt\in\optset}
\group[\big]{\opt\in\rejectfun\group{\optset}
\ifandonlyif
\group{\exists\altopt\in\optset}\altopt\prefgt\opt}.
\end{equation}
In this paper, such a statement---as well as statements such as those in Equations~\eqref{eq:interpretation:rejectfuns:after:irreflexivity:and:additivity} and~\eqref{eq:interpretation:rejectfuns:after:irreflexivityandadditivity:Gert}---will be interpreted as providing information about a strict preference relation~$\prefgt$, that may or may not be known or specified.
The only requirements that we impose on~$\prefgt$ is that it should be a strict partial order that extends the background ordering~$\optgt$ and is compatible with the vector space operations on $\opts$:
\begin{enumerate}[label=$\protect{\prefgt[{\arabic*}]}$.,ref=$\protect{\prefgt[{\arabic*}]}$,start=0,leftmargin=*]
\item\label{ax:prefgt:irreflexive} 
$\prefgt$ is irreflexive: for all $\opt\in\opts$, $\opt\not\prefgt\opt$;
\item\label{ax:prefgt:transitive} $\prefgt$ is transitive: for all $\opt,\altopt,\altopttoo\in\opts$, $\opt\prefgt\altopt$ and $\altopt\prefgt\altopttoo$ imply that also $\opt\prefgt\altopttoo$;
\item\label{ax:prefgt:background} for all $\opt,\altopt\in\opts$, $\opt\optgt\altopt$ implies that $\opt\prefgt\altopt$;
\item\label{ax:prefgt:addition}
for all $\opt,\altopt,\altopttoo\in\opts$, $\opt\prefgt\altopt$ implies that---so is equivalent with---$\opt+\altopttoo\prefgt\altopt+\altopttoo$;
\item\label{ax:prefgt:multiplication} for all $\opt,\altopt\in\opts$ and all $\lambda>0$, $\opt\prefgt\altopt$ implies that---so is equivalent with---$\lambda\opt\prefgt\lambda\altopt$.
\end{enumerate}
We then call such a preference ordering~$\prefgt$ \emph{coherent}.
It follows from Axioms~\ref{ax:prefgt:irreflexive} and~\ref{ax:prefgt:addition} that we can rewrite Equation~\eqref{eq:interpretation:rejectfuns} as
\begin{equation}\label{eq:interpretation:rejectfuns:after:irreflexivity:and:additivity}
\group{\forall\optset\in\optsets}
\group{\forall\opt\in\optset}
\group[\Big]{\opt\in\rejectfun\group{\optset}
\ifandonlyif
\group{\exists\altopt\in\optset}\altopt-\opt\prefgt0
\ifandonlyif
\group{\exists\altopt\in\optset\setminus\set{\opt}}\altopt-\opt\prefgt0},
\end{equation}
where we use Axiom~\ref{ax:prefgt:addition} for the first equivalence, and Axiom~\ref{ax:prefgt:irreflexive} for the second. 
Both equivalences can be conveniently turned into a single one if we no longer require that $\opt$ should belong to $\optset$ and consider statements of the form $\opt\in\rejectfun\group{\optset\cup\set{u}}$. 
Equation~\eqref{eq:interpretation:rejectfuns:after:irreflexivity:and:additivity} then turns into
\begin{equation}\label{eq:interpretation:rejectfuns:after:irreflexivityandadditivity:Gert}
\group{\forall\opt\in\opts}\group{\forall\optset\in\optsets}
\Big(\opt\in\rejectfun\group{\optset\cup\set{\opt}}
\ifandonlyif
\group{\exists\altopt\in\optset}\altopt-\opt\prefgt0\Big),
\end{equation}
So, according to our interpretation, the statement that $\opt$ is rejected from $\optset\cup\set{\opt}$ is taken to mean that the option set
\begin{equation}\label{eq:specialminus}
\optset-\opt\coloneqq\cset{\altopt-\opt}{\altopt\in\optset}
\end{equation}
contains at least one option that, according to~ $\prefgt$, is strictly preferred to the zero option~$0$.

% We can get rid of the restriction that $\opt\in\optset$ here: the statement that $\opt\in\opts$ is rejected from $\optset\cup\set{u}$ is taken to mean that the option set $\optset-\opt$ contains at least option that is strictly preferred to the zero option $0$.
% \begin{proof}
% It is clear that by restricting $\opt$ to the set $\optset$ in Equation~\eqref{eq:interpretation:rejectfuns:after:additivity}, Equation~\eqref{eq:interpretation:rejectfuns} follows.
% So assume that~\eqref{eq:interpretation:rejectfuns} holds, and consider any $\optset\in\optsets$ and $\opt\in\opts$.
% Then since $\opt\in\optset\cup\set{\opt}$, we can apply~\eqref{eq:interpretation:rejectfuns} to derive the following chain of equivalences:
% \begin{align*}
% \opt\in\rejectfun\group{\optset\cup\set{\opt}}
% &\ifandonlyif\group{\exists\altopt\in\optset\cup\set{\opt}}\altopt\prefgt\opt
% \ifandonlyif\group{\exists\altopt\in\optset}\altopt\prefgt\opt\\
% &\ifandonlyif\group{\exists\altopt\in\optset}\altopt-\opt\prefgt0
% \ifandonlyif\group{\exists\altopttoo\in\optset-\opt}\altopttoo\prefgt0,
% \end{align*}
% where the second and third equivalences follow from Axiom~\ref{ax:prefgt:strict:and:addition}.
% \end{proof}

\section{Coherent sets of desirable option sets}\label{sec:coherent:sets:of:option:sets}

A crucial observation at this point is that our interpretation for rejection functions does not require our subject to specify the strict preference $\prefgt$.
Instead, all that is needed is for her to specify option sets $\optset\in\optsets$ that---to her---contain at least one option that is strictly preferred to the zero option $0$. 
Options that are strictly preferred to zero---so options $\opt$ for which $\opt\prefgt0$---are also called \emph{desirable}, which is why we will call such option sets \emph{desirable option sets} and collect them in a \emph{set of desirable option sets} $\rejectset\subseteq\optsets$. 
Our interpretation therefore allows a modeller to specify her beliefs by specifying a \emph{set of desirable option sets} $\rejectset\subseteq\optsets$.

As can be seen from Equations~\eqref{eq:interpretation:rejectfuns:after:irreflexivityandadditivity:Gert} and~\eqref{eq:specialminus}, such a set of desirable option sets $\rejectset$ completely determines a rejection function $\rejectfun$ and its corresponding choice function $\choicefun[\rejectfun]$:
\begin{equation}\label{eq:interpretation:rejectfuns:after:irreflexivity:and:additivity:intermsof:K:Gert}
\group{\forall\opt\in\opts}
\group{\forall\optset\in\optsets}
\group[\big]{\opt\in\rejectfun\group{\optset\cup\set{\opt}}
\ifandonlyif\optset-\opt\in\rejectset}.
\end{equation}
\emph{Our interpretation, together with the basic Axioms~\ref{ax:prefgt:irreflexive} and~\ref{ax:prefgt:addition}, therefore allows the study of rejection and choice functions to be reduced to the study of sets of desirable option sets.}

We let $\rejectsets$ denote the set of all sets of desirable option sets $\rejectset\subseteq\optsets$, and consider any such~$\rejectset\in\rejectsets$. 
The first question to address is when to call $\rejectset$ \emph{coherent}: which properties should we impose on a set of desirable option sets in order for it to reflect a rational subject's beliefs? We propose the following axiomatisation, using $(\lambda,\mu)>0$ as a shorthand notation for `$\lambda\geq0$, $\mu\geq0$ and $\lambda+\mu>0$'.

\begin{definition}[Coherence for sets of desirable option sets]\label{def:coherence:rejectset}
A set of desirable option sets $\rejectset\subseteq\optsets$ is called \emph{coherent} if it satisfies the following axioms:
\begin{enumerate}[label=$\mathrm{K}_{\arabic*}$.,ref=$\mathrm{K}_{\arabic*}$,leftmargin=*,start=0]
\item\label{ax:rejects:removezero} if $\optset\in\rejectset$ then also $\optset\setminus\set{0}\in\rejectset$, for all $\optset\in\optsets$;
\item\label{ax:rejects:nozero} $\set{0}\notin\rejectset$;
\item\label{ax:rejects:pos} $\set{\opt}\in\rejectset$, for all $\opt\in\posopts$;
\item\label{ax:rejects:cone} if $\optset[1],\optset[2]\in\rejectset$ and if, for all $\opt\in\optset[1]$ and $\altopt\in\optset[2]$, $(\lambda_{\opt,\altopt},\mu_{\opt,\altopt})>0$, then also\footnote{The following simple example might help the reader understand what this axiom allows for. Consider any two $a,b\in\opts$, let $A_1=A_2=A\coloneqq\{a,b\}$ and choose $(\lambda_{a,a},\mu_{a,a})=(1,0)$, $(\lambda_{a,b},\mu_{a,b})=(1,1)$, $(\lambda_{b,a},\mu_{b,a})=(1,1)$ and $(\lambda_{b,b},\mu_{b,b})=(1,1)$. Then if $A\in\rejectset$, it follows from Axiom~\ref{ax:rejects:cone} that also $\{a,a+b,2b\}\in\rejectset$.}
\begin{equation*}
\cset{\lambda_{\opt,\altopt}\opt+\mu_{\opt,\altopt}\altopt}{\opt\in\optset[1],\altopt\in\optset[2]}
\in\rejectset;
\end{equation*}
\item\label{ax:rejects:mono} if $\optset[1]\in\rejectset$ and $\optset[1]\subseteq\optset[2]$, then also $\optset[2]\in\rejectset$, for all $\optset[1],\optset[2]\in\optsets$.
\end{enumerate}
We denote the set of all coherent sets of desirable option sets by $\cohrejectsets$.
\end{definition}
\noindent
This axiomatisation is entirely based on our interpretation and the following three axioms for desirability:
\begin{enumerate}[label=$\mathrm{d}_{\arabic*}$.,ref=$\mathrm{d}_{\arabic*}$,start=1,leftmargin=*]
\item\label{ax:desirability:nozero} 
$0$ is not desirable;
\item\label{ax:desirability:pos} all $\opt\in\posopts$ are desirable;
\item\label{ax:desirability:cone} if $\opt,\altopt$ are desirable and $(\lambda,\mu)>0$, then $\lambda\opt+\mu\altopt$ is desirable.
\end{enumerate}
Each of these three axioms follows trivially from our assumptions on the preference relation $\prefgt$: \ref{ax:desirability:nozero} follows from~\ref{ax:prefgt:irreflexive}, \ref{ax:desirability:pos} follows from~\ref{ax:prefgt:background} and~\ref{ax:desirability:cone} follows from~\ref{ax:prefgt:transitive} and~\ref{ax:prefgt:multiplication}.\footnote{Conversely, under Axiom~\ref{ax:prefgt:addition} for the preference relation~$\prefgt$, the three Axioms~\ref{ax:desirability:nozero}--\ref{ax:desirability:cone} imply the remaining Axioms~\ref{ax:prefgt:irreflexive}--\ref{ax:prefgt:background} and~\ref{ax:prefgt:multiplication}.}

That the coherence Axioms~\ref{ax:rejects:removezero}--\ref{ax:rejects:mono} are implied by our rationality requirements~\ref{ax:desirability:nozero}--\ref{ax:desirability:cone} for the concept of desirability, can now be seen as follows.
Since a desirable option set is by definition a set of options that contains at least one desirable option, Axiom~\ref{ax:rejects:mono} is immediate. 
%The other three axioms follow from our axioms for desirability. 
Axioms~\ref{ax:rejects:removezero} and~\ref{ax:rejects:nozero} follow naturally from \ref{ax:desirability:nozero}, and Axiom~\ref{ax:rejects:pos} is an immediate consequence of \ref{ax:desirability:pos}.
The argument for Axiom~\ref{ax:rejects:cone} is more subtle. 
Since $\optset[1]$ and $\optset[2]$ are two desirable option sets, there must be at least one desirable option $\opt\in\optset[1]$ and one desirable option $\altopt\in\optset[2]$. 
Since for these two options, the positive linear combination $\lambda_{\opt,\altopt}\opt+\mu_{\opt,\altopt}\altopt$ is again desirable by \ref{ax:desirability:cone}, at least one of the elements of the option set $\cset{\lambda_{\opt,\altopt}\opt+\mu_{\opt,\altopt}\altopt}{\opt\in\optset[1],\altopt\in\optset[2]}$ must be a desirable option. 
Hence, it must be a desirable option set.

\section{Coherent rejection functions}\label{sec:back:to:choice}

Now that we have formulated our basic rationality requirements~\ref{ax:rejects:removezero}--\ref{ax:rejects:mono} for sets of desirable option sets $\rejectset$, we are in a position to use their correspondence~\eqref{eq:interpretation:rejectfuns:after:irreflexivity:and:additivity:intermsof:K:Gert} with rejection  functions~$\rejectfun$ to derive the equivalent rationality requirements for the latter.

Equation~\eqref{eq:interpretation:rejectfuns:after:irreflexivity:and:additivity:intermsof:K:Gert} already allows us to derive a first and very basic axiom for rejection functions---and a very similar one for choice functions, left implicit here---without imposing \emph{any} requirements on sets of desirable option sets~$\rejectset$:
\begin{enumerate}[label=$\mathrm{R}_{\arabic*}$.,ref=$\mathrm{R}_{\arabic*}$,start=0,leftmargin=*]
\item\label{ax:rejectfun:addition} for all $\optset\in\optsets$ and $\opt\in\optset$, $\opt\in\rejectfun\group{\optset}$ if and only if $0\in\rejectfun\group{\optset-\opt}$.
\end{enumerate}
Alternatively, we can also consider a slightly different---but clearly equivalent---version that perhaps displays better the invariance of rejection functions under vector addition:
\begin{enumerate}[label=$\mathrm{R}_{\arabic*}'$.,ref=$\mathrm{R}_{\arabic*}'$,start=0,leftmargin=*]
\item\label{ax:rejectfun:addition:too} for all $\optset\in\optsets$, $\opt\in\optset$ and $\altopt\in\opts$, $\opt\in\rejectfun\group{\optset}$ if and only if $\opt+\altopt\in\rejectfun\group{\optset+\altopt}$.
\end{enumerate}

When we do impose requirements on sets of desirable option sets~$\rejectset$, Equation~\eqref{eq:interpretation:rejectfuns:after:irreflexivity:and:additivity:intermsof:K:Gert} allows us to turn them into requirements for rejection (and hence also choice) functions. 
In particular, we will see in Proposition~\ref{prop:axioms:rejection:sets:and:functions:coherence} below that our Axioms~\ref{ax:rejects:removezero}--\ref{ax:rejects:mono} imply that
\begin{enumerate}[label=$\mathrm{R}_{\arabic*}$.,ref=$\mathrm{R}_{\arabic*}$,leftmargin=*]
\item\label{ax:rejectfun:not:everything:rejected} $\rejectfun\group{\emptyset}=\emptyset$, and $\rejectfun\group{\optset}\neq\optset$ for all $\optset\in\optsets\setminus\set{\emptyset}$;
\item\label{ax:rejectfun:pos} $0\in\rejectfun\group{\set{0,\opt}}$, for all $\opt\in\posopts$;
\item\label{ax:rejectfun:cone} if $\optset[1],\optset[2]\in\optsets$ such that $0\in\rejectfun\group{\optset[1]\cup\set{0}}$ and $0\in\rejectfun\group{\optset[2]\cup\set{0}}$ and if $(\lambda_{\opt,\altopt},\mu_{\opt,\altopt})>0$ for all $\opt\in\optset[1]$ and $\altopt\in\optset[2]$, then also
\begin{equation*}
0\in\rejectfun\group{\cset{\lambda_{\opt,\altopt}\opt+\mu_{\opt,\altopt}\altopt}{\opt\in\optset[1],\altopt\in\optset[2]}\cup\set{0}};
\end{equation*}
\item\label{ax:rejectfun:senalpha} 
if $\optset[1]\subseteq\optset[2]$ then also $\rejectfun\group{\optset[1]}\subseteq\rejectfun\group{\optset[2]}$, for all $\optset[1],\optset[2]\in\optsets$.
%if $\opt\in\rejectfun\group{\optset[1]}$ and $\optset[1]\subseteq\optset[2]$, then also $\opt\in\rejectfun\group{\optset[2]}$, for all $\optset[1],\optset[2]\in\optsets$ and $\opt\in\optset[1]$;
\end{enumerate}
% A clearly equivalent version of Axiom~\ref{ax:rejectfun:senalpha} is the following:
% \begin{enumerate}[label=$\mathrm{R}_{\arabic*}'$.,ref=$\mathrm{R}_{\arabic*}'$,start=4,leftmargin=*]
% \item\label{ax:rejectfun:senalpha:too} if $\optset[1]\subseteq\optset[2]$ then also $\rejectfun\group{\optset[1]}\subseteq\rejectfun\group{\optset[2]}$, for all $\optset[1],\optset[2]\in\optsets$.
% \end{enumerate}
Axiom~\ref{ax:rejectfun:senalpha} is Sen's condition~$\alpha$ \cite{sen1971,sen1977}. 
Arthur Van Camp (private communication) has proved in a direct manner that Aizermann's condition \cite{aizerman1985} can be derived from our Axioms~\ref{ax:rejects:removezero}--\ref{ax:rejects:mono} as well.
Indirectly, this can also be inferred from our representation results further on; see Theorem~\ref{theo:coherentrepresentation:twosided} in Section~\ref{sec:representation}, and the discussion following it.

We will call \emph{coherent} any rejection function that satisfies the five properties~\ref{ax:rejectfun:addition}--\ref{ax:rejectfun:senalpha} above.

\begin{definition}[Coherence for rejection and choice functions]\label{def:coherence:rejectfun}
A rejection function $\rejectfun$ is called \emph{coherent} if it satisfies the Axioms~\ref{ax:rejectfun:addition}--\ref{ax:rejectfun:senalpha}. 
A choice function $\choicefun$ is called \emph{coherent} if the associated rejection function $\rejectfun[\choicefun]$ is.
\end{definition}

Our next result establishes that these notions of coherence are perfectly compatible with the coherence for sets of desirable option sets that we introduced in Section~\ref{sec:coherent:sets:of:option:sets}.

\begin{proposition}\label{prop:axioms:rejection:sets:and:functions:coherence}
Consider any set of desirable option sets $\rejectset\in\rejectsets$ and any rejection function $\rejectfun$ that are connected by Equation~\eqref{eq:interpretation:rejectfuns:after:irreflexivity:and:additivity:intermsof:K:Gert}. 
Then $\rejectset$ is coherent if and only if $\rejectfun$ is.
\end{proposition}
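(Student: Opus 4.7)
My plan is to go through the K-axioms and R-axioms side by side, using the correspondence~\eqref{eq:interpretation:rejectfuns:after:irreflexivity:and:additivity:intermsof:K:Gert} to translate between them. Two preliminary observations will streamline things. First,~\eqref{eq:interpretation:rejectfuns:after:irreflexivity:and:additivity:intermsof:K:Gert} directly implies Axiom~\ref{ax:rejectfun:addition}: for $u\in A$, both sides of~\ref{ax:rejectfun:addition} translate through~\eqref{eq:interpretation:rejectfuns:after:irreflexivity:and:additivity:intermsof:K:Gert} to the single statement $(A\setminus\set{u})-u\in\rejectset$. Second, applying~\eqref{eq:interpretation:rejectfuns:after:irreflexivity:and:additivity:intermsof:K:Gert} with $u=0$ to the two different decompositions $A\cup\set{0}$ and $(A\setminus\set{0})\cup\set{0}$ of the same underlying set forces the equivalence $A\in\rejectset\iff A\setminus\set{0}\in\rejectset$, so Axiom~\ref{ax:rejects:removezero} is likewise automatic from~\eqref{eq:interpretation:rejectfuns:after:irreflexivity:and:additivity:intermsof:K:Gert}. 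Axioms~\ref{ax:rejectfun:addition} and~\ref{ax:rejects:removezero} therefore need no further verification in either direction.

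The correspondences~\ref{ax:rejects:pos}$\leftrightarrow$\ref{ax:rejectfun:pos} and~\ref{ax:rejects:cone}$\leftrightarrow$\ref{ax:rejectfun:cone} then follow by specialising~\eqref{eq:interpretation:rejectfuns:after:irreflexivity:and:additivity:intermsof:K:Gert} to $u=0$, which matches the form $0\in\rejectfun(B\cup\set{0})\iff B\in\rejectset$ appearing in these R-axioms. The correspondence~\ref{ax:rejects:mono}$\leftrightarrow$\ref{ax:rejectfun:senalpha} is cleanest through the derived form $u\in\rejectfun(A)\iff(A\setminus\set{u})-u\in\rejectset$, obtained from~\eqref{eq:interpretation:rejectfuns:after:irreflexivity:and:additivity:intermsof:K:Gert} by taking $B:=A\setminus\set{u}$; under this form, upward monotonicity of $\rejectset$ is exactly Sen's condition~$\alpha$ for $\rejectfun$, and in the R$\Rightarrow$K direction one simply applies~\ref{ax:rejectfun:senalpha} to the inclusion $A_1\cup\set{0}\subseteq A_2\cup\set{0}$.

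What remains is the correspondence~\ref{ax:rejects:nozero}$\leftrightarrow$\ref{ax:rejectfun:not:everything:rejected}. The direction R$\Rightarrow$K is immediate by specialising~\ref{ax:rejectfun:not:everything:rejected} to $A=\set{0}$ and translating. In the direction K$\Rightarrow$R, the empty case of~\ref{ax:rejectfun:not:everything:rejected} is trivial and the singleton case $A=\set{u}$ reduces via the connection to $\emptyset\notin\rejectset$, which follows from~\ref{ax:rejects:nozero} through~\ref{ax:rejects:mono}. The genuinely non-trivial case, and the main obstacle of the proof, is to show that $\rejectfun(A)\neq A$ whenever $\abs*{A}\geq 2$.

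For this last step I would argue by contradiction. Assuming $\rejectfun(A)=A$, the connection yields $B_u:=(A\setminus\set{u})-u\in\rejectset$ for every $u\in A$. The plan is then to apply~\ref{ax:rejects:cone} to two judiciously chosen sets $B_u$ and $B_w$ with inhomogeneous coefficients---using $(\lambda,\mu)=(1,0)$ or $(0,1)$ to project non-canceling pairs onto a single fixed target such as $v-u$, and $(\lambda,\mu)=(1,1)$ on the unique canceling pair to produce $0$---in order to obtain a two-element set of the form $\set{0,v-u}\in\rejectset$. Applying~\ref{ax:rejects:removezero} then extracts $\set{v-u}\in\rejectset$, a symmetric argument gives $\set{u-v}\in\rejectset$, and a final use of~\ref{ax:rejects:cone} with $(\lambda,\mu)=(1,1)$ on these two singletons produces $\set{0}\in\rejectset$, contradicting~\ref{ax:rejects:nozero}. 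The delicate combinatorial heart of this argument is arranging the pair $(B_u,B_w)$ and the coefficient choices so that the $\abs*{B_u}\cdot\abs*{B_w}$-element output of~\ref{ax:rejects:cone} really does collapse onto $\set{0,v-u}$: explicit verification works easily for $\abs*{A}\in\set{2,3}$, but for larger $\abs*{A}$ an inductive approach---first reducing to a smaller witness set with the same bad property before extracting singletons---seems the natural route.
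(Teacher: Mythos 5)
Your proposal is correct, and on the one genuinely hard point---showing $\rejectfun\group{\optset}\neq\optset$ when $\card{\optset}\geq2$ from K-coherence---it takes a route that is genuinely different from, and more elementary than, the paper's. The paper derives this step from the invariance $\setposi\group{\rejectset}=\rejectset$ (Proposition~\ref{prop:ax:rejects:cone:equivalents}): writing $\optset=\set{\altopt[1],\dots,\altopt[n]}$, it puts all $n$ sets $\optset[k]\coloneqq\optset-\altopt[k]$ into $\rejectset$, takes a single simultaneous $n$-ary positive combination, and for each selection tuple chooses coefficients along a cycle of the selection map $\phi$ so that the telescoping sum vanishes, landing $\set{0}\in\rejectset$ in one shot; crucially, the proof of $\setposi\group{\rejectset}=\rejectset$ rests on the representation theorem (Theorem~\ref{theo:rejectsets:representation}) and hence on Zorn's lemma, so the paper's argument is short but logically heavy. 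You instead stay with the binary Axiom~\ref{ax:rejects:cone} and induct on $\card{\optset}$, and the reduction you anticipated does close: with $B_x\coloneqq\group{\optset\setminus\set{x}}-x\in\rejectset$ for all $x\in\optset$, fix distinct $x,w\in\optset$ and apply~\ref{ax:rejects:cone} to $B_x$ and $B_w$, choosing $(1,1)$ on every pair whose first component is $w-x$ and $(1,0)$ on all remaining pairs; the pairs $\group{w-x,\,b-w}$ then output $b-x$ for $b\in\optset\setminus\set{w}$ (which is $0$ at $b=x$), all other pairs output their first component, and the result is exactly $\set{0}\cup\group[\big]{\group{\optset\setminus\set{w,x}}-x}\in\rejectset$, so Axioms~\ref{ax:rejects:removezero} and~\ref{ax:rejects:mono} give $\group[\big]{\group{\optset\setminus\set{w}}\setminus\set{x}}-x\in\rejectset$ for every $x\in\optset\setminus\set{w}$, i.e.\ the smaller witness set $\optset\setminus\set{w}$ inherits the bad property; at $\card{\optset}=2$ the single pair $\group{w-x,\,x-w}$ with $(1,1)$ yields $\set{0}\in\rejectset$, contradicting~\ref{ax:rejects:nozero}. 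What each approach buys: yours is entirely self-contained within Axioms~\ref{ax:rejects:removezero}--\ref{ax:rejects:mono}, needs no representation machinery, and incidentally sidesteps a small blemish in the paper's orbit argument (the $\phi$-orbit is in general only \emph{eventually} periodic, so the telescoping sum should start at a point on the cycle rather than at $1$); the paper's version buys brevity at this spot and reuses the closure fact $\setposi\group{\rejectset}=\rejectset$, which it needs elsewhere anyway. All your remaining translations---\ref{ax:rejectfun:addition} and~\ref{ax:rejects:removezero} built into the connection, the empty and singleton cases of~\ref{ax:rejectfun:not:everything:rejected} via $\emptyset\notin\rejectset$, and the K$_2$--K$_4$/R$_2$--R$_4$ correspondences by specialising the connection at $\opt=0$---match the paper's essentially verbatim.
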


We will from now on work directly with (coherent) sets of desirable option sets and will use the collective term \emph{(coherent) choice models} for (coherent) choice functions, rejection functions, and sets of desirable option sets. 
Of course, our primary motivation for studying coherent sets of desirable option sets is their connection with the other two choice models. 
This being said, it should however also be clear that our results do not depend on this connection. 
The theory of sets of desirable option sets that we are about to develop can therefore be used independently as well.

\section{The special case of binary choice}\label{sec:binary:choice}

According to our interpretation, the statement that $\optset$ belongs to a set of desirable option sets $\rejectset$ is taken to mean that $\optset$ contains at least one desirable option. 
This implies that singletons play a special role: for any $\opt\in\opts$, stating that $\set{\opt}\in\rejectset$ is equivalent to stating that $\opt$ is desirable. For any set of desirable option sets $\rejectset$, these singleton assessments are captured completely by the set of options
\begin{equation}\label{eq:desir:from:choice}
\desirset[\rejectset]\coloneqq\cset{\opt\in\opts}{\set{\opt}\in\rejectset}
\end{equation}
that, according to $\rejectset$, are definitely desirable---preferred to $0$.
A set of desirable option sets $\rejectset\in\rejectsets$ that is completely determined by such singleton assessments is called \emph{binary}.

\begin{definition}[Binary set of desirable option sets]\label{def:binary}
We call a set of desirable option sets $\rejectset$ \emph{binary} if
\vspace{-5pt}
\begin{equation}\label{eq:binary}
\optset\in\rejectset\ifandonlyif(\exists\opt\in\optset)\set{\opt}\in\rejectset,
\text{ for all $\optset\in\optsets$}.
\end{equation}
\end{definition}

% For a set of desirable option sets $\rejectset\in\rejectsets$ that is binary, the non-singleton assessments are redundant, in the sense that they are implied by the singleton ones. 
% Indeed, a set of desirable option sets $\rejectset$ is binary if and only if it can be represented much more compactly by the set $\desirset[\rejectset]$, in the sense that
% \begin{equation*}
% \optset\in\rejectset
% \ifandonlyif(\exists\opt\in\optset)\opt\in\desirset[\rejectset]
% \ifandonlyif A\cap\desirset[\rejectset]\neq\emptyset,
% \text{ for all $\optset\in\optsets$},
% \end{equation*}
% which follows immediately by combining Equations~\eqref{eq:binary} and~\eqref{eq:desir:from:choice}.

In order to explain how any binary set of desirable option sets $\rejectset$ is indeed completely determined by $\desirset[\rejectset]$, we need a way to associate a rejection function with sets of options such as $\desirset[\rejectset]$. To that end, we consider the notion of a \emph{set of desirable options}: a subset $\desirset$ of $\opts$ whose interpretation will be that it consists of the options $\opt\in\opts$ that our subject considers desirable. We denote the set of all such sets of desirable options $\desirset\subseteq\opts$ by $\desirsets$. 

With any $\desirset\in\desirsets$, our interpretation for rejection functions in Section~\ref{sec:choice:functions:and:interpretation} inspires us to associate a set of desirable option sets $\rejectset[\desirset]$, defined by
\begin{equation}\label{eq:desirset:to:rejectset}
\rejectset[\desirset]
\coloneqq\cset{\optset\in\optsets}{\optset\cap\desirset\neq\emptyset}.
\end{equation}
It turns out that a set of desirable options sets $\rejectset$ is binary if and only if it has the form $\rejectset[\desirset]$, and the unique \emph{representing} $\desirset$ is then given by $\desirset[\rejectset]$.

\begin{proposition}\label{prop:binary:characterisation}
A set of desirable options sets $\rejectset\in\rejectsets$ is binary if and only if there is some~$\desirset\in\desirsets$ such that $\rejectset=\rejectset[\desirset]$. 
This $\desirset$ is then necessarily unique, and equal to $\desirset[\rejectset]$.
\end{proposition}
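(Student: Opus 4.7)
The plan is to prove both directions by directly unwinding the definitions of $\desirset[\rejectset]$ and $\rejectset[\desirset]$, together with the binarity condition in Equation~\eqref{eq:binary}. Uniqueness will be handled as a simple corollary.

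For the ``only if'' direction, I would assume $\rejectset$ is binary and show that $\rejectset = \rejectset[\desirset[\rejectset]]$; this will provide the desired $\desirset$, namely $\desirset[\rejectset]$. Fix any $\optset\in\optsets$. By the definition of $\rejectset[\cdot]$ in Equation~\eqref{eq:desirset:to:rejectset}, $\optset\in\rejectset[\desirset[\rejectset]]$ iff $\optset\cap\desirset[\rejectset]\neq\emptyset$, i.e.\ iff there exists $\opt\in\optset$ with $\opt\in\desirset[\rejectset]$. By the definition of $\desirset[\rejectset]$ in Equation~\eqref{eq:desir:from:choice}, this is equivalent to the existence of $\opt\in\optset$ with $\set{\opt}\in\rejectset$, which by binarity of $\rejectset$ is in turn equivalent to $\optset\in\rejectset$. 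Chaining these equivalences gives $\rejectset=\rejectset[\desirset[\rejectset]]$, establishing existence.

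For the ``if'' direction, I would assume $\rejectset=\rejectset[\desirset]$ for some $\desirset\in\desirsets$, and verify the binarity condition in Equation~\eqref{eq:binary}. First, observe that for any $\opt\in\opts$, $\set{\opt}\in\rejectset=\rejectset[\desirset]$ iff $\set{\opt}\cap\desirset\neq\emptyset$ iff $\opt\in\desirset$. Then for any $\optset\in\optsets$: $\optset\in\rejectset$ iff $\optset\cap\desirset\neq\emptyset$ iff there exists $\opt\in\optset$ with $\opt\in\desirset$, iff there exists $\opt\in\optset$ with $\set{\opt}\in\rejectset$. This is exactly Equation~\eqref{eq:binary}.

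For uniqueness, suppose $\rejectset=\rejectset[\desirset]$ for some $\desirset\in\desirsets$. Using the observation in the preceding paragraph, $\opt\in\desirset$ iff $\set{\opt}\in\rejectset$ iff $\opt\in\desirset[\rejectset]$, so $\desirset=\desirset[\rejectset]$. There are really no obstacles here: the statement is essentially a tautological bookkeeping result about the two definitions, and the only thing to be careful about is to use the binarity assumption precisely at the single step where it is needed (namely to convert the existence of a desirable singleton element into membership of the whole set in $\rejectset$).
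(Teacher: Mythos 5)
Your proof is correct and takes essentially the same route as the paper's: both directions are handled by the same chains of equivalences unwinding Equations~\eqref{eq:desir:from:choice}, \eqref{eq:desirset:to:rejectset} and~\eqref{eq:binary}, with binarity invoked at precisely the same single step, and uniqueness obtained from the observation that $\opt\in\desirset\ifandonlyif\set{\opt}\in\rejectset[\desirset]\ifandonlyif\opt\in\desirset[\rejectset]$. Nothing further is needed.
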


Just like we did for sets of desirable option sets in Section~\ref{sec:coherent:sets:of:option:sets}, we can use the basic rationality principles~\ref{ax:desirability:nozero}--\ref{ax:desirability:cone} for the notion of desirability---or binary preference---to infer basic rationality criteria for sets of desirable options.
When they do, we call them coherent.

\begin{definition}[Coherence for sets of desirable options]\label{def:cohdesir}
A set of desirable options $\desirset\in\desirsets$ is called \emph{coherent} if it satisfies the following axioms:\footnote{The Axioms~\ref{ax:desirs:nozero}--\ref{ax:desirs:cone} for sets of desirable options should not be confused with the rationality criteria~\ref{ax:desirability:nozero}--\ref{ax:desirability:cone} for our primitive notion of desirability---or binary preference. Like the Axioms~\ref{ax:rejects:removezero}--\ref{ax:rejects:mono}, they are only derived from these primitive assumptions on the basis of their interpretation.}
\begin{enumerate}[label=$\mathrm{D}_{\arabic*}$.,ref=$\mathrm{D}_{\arabic*}$,leftmargin=*]
\item\label{ax:desirs:nozero} $0\notin\desirset$;
%\item$\posopts\subseteq\desirset$;
\item\label{ax:desirs:pos} $\posopts\subseteq\desirset$;
% \item$\posi(\desirset)=\desirset$.
\item\label{ax:desirs:cone} if $\opt,\altopt\in\desirset$ and $(\lambda,\mu)>0$, then $\lambda\opt+\mu\altopt\in\desirset$.
\end{enumerate}
We denote the set of all coherent sets of desirable options by $\cohdesirsets$.
\end{definition}
\noindent
% It is an immediate consequence of \ref{ax:desirs:nozero}--\ref{ax:desirs:cone} that
% \begin{enumerate}[label=$\mathrm{D}_{\arabic*}$.,ref=$\mathrm{D}_{\arabic*}$,leftmargin=*,start=4]
% \item\label{ax:desirs:nononpos} $\nonposopts\cap\desirset=\emptyset$,
% \end{enumerate}
So a coherent set of desirable options is a convex cone [Axiom~\ref{ax:desirs:cone}] in $\opts$ that does not contain $0$ [Axiom~\ref{ax:desirs:nozero}] and includes~$\posopts$ [Axiom~\ref{ax:desirs:pos}].
% and has nothing in common with the non-positive orthant~$\nonposopts$ [Axiom~\ref{ax:desirs:nononpos}].
Sets of desirable options are an abstract version of the sets of desirable gambles that have an important part in the literature on imprecise probability models \cite{couso2011,cooman2010,quaeghebeur2012:itip,walley2000}.
This abstraction was first introduced and studied in great detail in~\cite{cooman2011b,quaeghebeur2015:statement}.

Our next result shows that the coherence of a binary set of desirable option sets is completely determined by the coherence of its corresponding set of desirable options.

\begin{proposition}\label{prop:coherence:for:binary}
Consider any binary set of desirable option sets $\rejectset\in\rejectsets$ and let $\desirset[\rejectset]\in\desirsets$ be its corresponding set of desirable options. 
Then $\rejectset$ is coherent if and only if $\desirset[\rejectset]$~is.
Conversely, consider any set of desirable options $\desirset\in\desirsets$ and let $\rejectset[\desirset]$ be its corresponding binary set of desirable option sets, then $\rejectset[\desirset]$ is coherent if and only if $\desirset$ is.
\end{proposition}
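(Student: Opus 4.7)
The plan is to reduce the two halves of the statement to a single bi-implication and then verify the correspondence axiom by axiom. I would first invoke Proposition~\ref{prop:binary:characterisation}: for any binary $\rejectset\in\rejectsets$ we have $\rejectset=\rejectset[\desirset[\rejectset]]$, and for any $\desirset\in\desirsets$ the set $\rejectset[\desirset]$ is binary with $\desirset[\rejectset[\desirset]]=\desirset$. So the maps $\desirset\mapsto\rejectset[\desirset]$ and $\rejectset\mapsto\desirset[\rejectset]$ are mutually inverse bijections between $\desirsets$ and the binary elements of $\rejectsets$, which makes both halves of the proposition logically equivalent to the single claim: for $\desirset\in\desirsets$, $\rejectset[\desirset]$ is coherent if and only if $\desirset$ is coherent. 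I would prove this single equivalence and then remark that both halves of the proposition follow.

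For the direction \emph{$\desirset$ coherent $\then\rejectset[\desirset]$ coherent}, I would verify each of \ref{ax:rejects:removezero}--\ref{ax:rejects:mono} in turn. Axiom~\ref{ax:rejects:removezero} uses that $0\notin\desirset$ by~\ref{ax:desirs:nozero}, so if $\optset\cap\desirset\neq\emptyset$ then $(\optset\setminus\set{0})\cap\desirset=\optset\cap\desirset\neq\emptyset$. Axiom~\ref{ax:rejects:nozero} is immediate from~\ref{ax:desirs:nozero} since $\set{0}\in\rejectset[\desirset]\ifandonlyif 0\in\desirset$. Axiom~\ref{ax:rejects:pos} is the same rephrasing for singletons: $\set{\opt}\in\rejectset[\desirset]\ifandonlyif\opt\in\desirset$, so~\ref{ax:desirs:pos} gives it. Axiom~\ref{ax:rejects:mono} is trivial from the definition of $\rejectset[\desirset]$ via set intersection. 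For Axiom~\ref{ax:rejects:cone}, which is the only substantial step, I would pick witnesses $\opt\in\optset[1]\cap\desirset$ and $\altopt\in\optset[2]\cap\desirset$ guaranteed by $\optset[1],\optset[2]\in\rejectset[\desirset]$, and apply~\ref{ax:desirs:cone} with $(\lambda_{\opt,\altopt},\mu_{\opt,\altopt})>0$ to conclude $\lambda_{\opt,\altopt}\opt+\mu_{\opt,\altopt}\altopt\in\desirset$, exhibiting a desirable element of the combined option set.

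For the converse \emph{$\rejectset[\desirset]$ coherent $\then\desirset$ coherent}, the axioms for $\desirset$ follow by instantiating those for $\rejectset[\desirset]$ at singletons. Specifically, \ref{ax:desirs:nozero} is \ref{ax:rejects:nozero} read through $\set{0}\in\rejectset[\desirset]\ifandonlyif 0\in\desirset$; \ref{ax:desirs:pos} is \ref{ax:rejects:pos} read through $\set{\opt}\in\rejectset[\desirset]\ifandonlyif\opt\in\desirset$; and for \ref{ax:desirs:cone}, given $\opt,\altopt\in\desirset$, i.e.\ $\set{\opt},\set{\altopt}\in\rejectset[\desirset]$, I would apply~\ref{ax:rejects:cone} to $\optset[1]=\set{\opt}$ and $\optset[2]=\set{\altopt}$ with the single pair of coefficients $(\lambda,\mu)>0$, obtaining $\set{\lambda\opt+\mu\altopt}\in\rejectset[\desirset]$, hence $\lambda\opt+\mu\altopt\in\desirset$.

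I do not expect any real obstacle: once the bijection from Proposition~\ref{prop:binary:characterisation} is invoked, the rest is a routine axiom-by-axiom check. The only place where one needs to pay attention is the $\ref{ax:desirs:cone}\leftrightarrow\ref{ax:rejects:cone}$ correspondence, where the arbitrary family of coefficients $(\lambda_{\opt,\altopt},\mu_{\opt,\altopt})$ in~\ref{ax:rejects:cone} collapses, in the binary case, to a single pair of coefficients applied to any chosen pair of witnesses; this works cleanly in both directions.
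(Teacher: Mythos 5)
Your proposal is correct and follows essentially the same route as the paper: the paper's proof just cites Propositions~\ref{prop:from:rejection:to:desirability} and~\ref{prop:fromCohDtoCohK}, whose proofs contain exactly your axiom-by-axiom checks (the witness argument for \ref{ax:rejects:cone} in one direction, singleton instantiation of \ref{ax:rejects:nozero}--\ref{ax:rejects:cone} in the other), and combines them with Proposition~\ref{prop:binary:characterisation} just as you do. Your folding of both halves into the single equivalence ``$\rejectset[\desirset]$ is coherent if and only if $\desirset$ is'' via the mutually inverse maps is a harmless repackaging, correctly justified by $\desirset[{\rejectset[\desirset]}]=\desirset$ and, for binary $\rejectset$, $\rejectset=\rejectset[{\desirset[\rejectset]}]$.
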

\noindent
So the binary coherent sets of desirable option sets are given by $\cset{\rejectset[\desirset]}{\desirset\in\cohdesirsets}$, allowing us to call any coherent set of desirable option sets in $\cohrejectsets\setminus\cset{\rejectset[\desirset]}{\desirset\in\cohdesirsets}$ \emph{non-binary}.

What makes coherent sets of desirable options $\desirset\in\cohdesirsets$---and hence also coherent binary sets of desirable option sets---particularly interesting is that they induce a binary preference order $\prefgt[\desirset]$---a strict vector ordering---on $\opts$, defined by
\begin{equation}\label{eq:desirset:to:prefgt}
\opt\prefgt[\desirset]\altopt\ifandonlyif\opt-\altopt\in\desirset
\text{ for all $\opt,\altopt\in\opts$}.
\end{equation}
The preference order $\prefgt[\desirset]$ is coherent---satisfies Axioms~\ref{ax:prefgt:irreflexive}--\ref{ax:prefgt:multiplication}--- and furthermore fully characterises $\desirset$: one can easily see that $\opt\in\desirset$ if and only if $\opt\prefgt[\desirset]0$. 
Hence, coherent sets of desirable options and coherent binary sets of desirable option sets are completely determined by a single binary strict preference order between options. This is of course the reason why we reserve the moniker \emph{binary} for choice models that are essentially based on \emph{singleton} assessments.

%, but recall that singleton assessments correspond to simple desirability statements, and that desirability is a condensed way of talking about binary preference; see the discussion near the end of Section~\ref{sec:choice:functions:and:interpretation}.

\section{Representation in terms of sets of desirable options}\label{sec:representation}

It should be clear---and it should be stressed---at this point that making a direct desirability assessment for an option $\opt$ typically requires more of a subject than making a typical desirability assessment for an option set $\optset$: the former requires that our subject should state that $\opt$ is desirable, while the latter only requires the subject to state that some option in $\optset$ is desirable, but not to specify which.
It is this difference---this greater latitude in making assessments---that guarantees that our account of choice is much richer than one that is purely based on binary preference. In the framework of sets of desirable option sets, it is for instance possible to express the belief that \emph{at least} one of two options $\opt$ or $\altopt$ is desirable, while remaining undecided about which of them actually is; in order to express this belief, it suffices to state that $\set{\opt,\altopt}\in\rejectset$. 
This is not possible in the framework of sets of desirable options. Sets of desirable option sets therefore constitute a much more general uncertainty framework than sets of desirable options.

%Because we interpret them in terms of a primitive notion of desirability---or binary preference---, one might be inclined to think that sets of desirable option sets are simply an alternative and more convoluted representation for sets of desirable options. 
%Not so: we intend to show now that sets of desirable option sets constitute a much more general uncertainty framework than sets of desirable options. 

So while it is nice that there are sets of desirable option sets $\rejectset[\desirset]$ that are completely determined by a set of desirable options $\desirset$, such binary choice models are typically \emph{not} what we are interested in here: using $\rejectset[\desirset]$ is equivalent to using $\desirset$ here, so there is no benefit in using the more convoluted model $\rejectset[\desirset]$ to represent choice.
No, it is the \emph{non-binary} coherent choice models that we have in our sights.
If we replace such a non-binary coherent set of desirable option sets $\rejectset$ by its corresponding set of desirable options $\desirset[\rejectset]$, we lose information, because then necessarily $\rejectset[{\desirset[\rejectset]}]\subset\rejectset$. 
Choice models are therefore more expressive than sets of desirable options.
But it turns out that our coherence axioms lead to a representation result that allows us to still use sets of desirable options, or rather, sets of them, to completely characterise \emph{any} coherent choice model.

\begin{theorem}\label{theo:coherentrepresentation:twosided}
A set of desirable option sets $\rejectset\in\rejectsets$ is coherent if and only if there is a non-empty set $\setofdesirsets\subseteq\cohdesirsets$ of coherent sets of desirable options such that $\rejectset=\bigcap\cset{\rejectset[\desirset]}{\desirset\in\setofdesirsets}$. 
The largest such set $\setofdesirsets$ is then\/ $\cohdesirsets\group{\rejectset}\coloneqq\cset{\desirset\in\cohdesirsets}{\rejectset\subseteq\rejectset[\desirset]}$.
\end{theorem}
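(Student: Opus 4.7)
The plan is to establish both containments and the maximality claim, with the bulk of the work concentrated in a separation lemma. For the ``if'' direction, suppose $\rejectset=\bigcap_{\desirset\in\setofdesirsets}\rejectset[\desirset]$ for a non-empty $\setofdesirsets\subseteq\cohdesirsets$. By Proposition~\ref{prop:coherence:for:binary} each $\rejectset[\desirset]$ is coherent, and every one of Axioms~\ref{ax:rejects:removezero}--\ref{ax:rejects:mono} lifts from the factors to their intersection: K0, K3 and K4 are implication-type conditions that trivially survive intersection; K1 holds because $\set{0}\notin\rejectset[\desirset]$ (since $0\notin\desirset$ by~\ref{ax:desirs:nozero}); and K2 holds because $\posopts\subseteq\desirset$ for every $\desirset\in\cohdesirsets$ by~\ref{ax:desirs:pos}.

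For the ``only if'' direction, set $\rejectset_\ast\coloneqq\bigcap_{\desirset\in\cohdesirsets\group{\rejectset}}\rejectset[\desirset]$. The inclusion $\rejectset\subseteq\rejectset_\ast$ is immediate from the definition of $\cohdesirsets\group{\rejectset}$, and maximality is also immediate: if $\rejectset=\bigcap_{\desirset\in\setofdesirsets}\rejectset[\desirset]$, then each $\desirset\in\setofdesirsets$ satisfies $\rejectset\subseteq\rejectset[\desirset]$ and thus belongs to $\cohdesirsets\group{\rejectset}$. So everything reduces to the separation lemma: for every $\optset_0\in\optsets\setminus\rejectset$, there exists $\desirset\in\cohdesirsets$ with $\rejectset\subseteq\rejectset[\desirset]$ and $\desirset\cap\optset_0=\emptyset$---such a $\desirset$ lies in $\cohdesirsets\group{\rejectset}$ and witnesses $\optset_0\notin\rejectset_\ast$.

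To produce this $\desirset$, I apply Zorn's lemma to
\[
\Phi\coloneqq\cset{\desirset\subseteq\opts}{\desirset\text{ is a convex cone},\ \posopts\subseteq\desirset,\ 0\notin\desirset,\ \desirset\cap\optset_0=\emptyset,\ \desirset\cap\altoptset\neq\emptyset\text{ for all }\altoptset\in\rejectset},
\]
ordered by inclusion, whose elements are precisely the members of $\cohdesirsets\group{\rejectset}$ disjoint from $\optset_0$. Chain-unions in $\Phi$ remain in $\Phi$ (the cone property, inclusion of $\posopts$, avoidance of $0$ and of $\optset_0$, and intersection of every $\altoptset\in\rejectset$ are all preserved by directed unions), so Zorn applies once $\Phi$ is known to be non-empty. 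A useful preliminary observation is $\optset_0\cap\posopts=\emptyset$: any $\opt\in\optset_0\cap\posopts$ would yield $\set{\opt}\in\rejectset$ by~\ref{ax:rejects:pos} and then $\optset_0\in\rejectset$ by~\ref{ax:rejects:mono}, contradicting $\optset_0\notin\rejectset$.

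The main obstacle, and the genuinely substantive step, is showing $\Phi\neq\emptyset$. I would proceed by contrapositive: assume $\Phi=\emptyset$, so every ``selection'' $\sigma$ choosing $\sigma\group{\altoptset}\in\altoptset$ for each $\altoptset\in\rejectset$ yields a convex cone $\posi\group{\posopts\cup\cset{\sigma\group{\altoptset}}{\altoptset\in\rejectset}}$ that meets $\optset_0\cup\set{0}$, with each such meeting witness involving only finitely many selected options and finitely many elements of $\posopts$. A compactness argument over the product $\prod_{\altoptset\in\rejectset}\altoptset$ (a product of finite discrete spaces, compact by Tychonoff) reduces matters to a finite subfamily $\altoptset_1,\dots,\altoptset_n\in\rejectset$ together with finitely many singletons drawn from $\posopts$ (each in $\rejectset$ by~\ref{ax:rejects:pos}). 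Iterating~\ref{ax:rejects:cone} over these $n+k$ sets with coefficients tailored per selection so that each resulting combination equals the corresponding witness in $\optset_0\cup\set{0}$ produces a single set in $\rejectset$ contained in $\optset_0\cup\set{0}$; applying~\ref{ax:rejects:removezero} to strip any occurrence of $0$ and then~\ref{ax:rejects:mono} delivers $\optset_0\in\rejectset$, the desired contradiction. I expect the combinatorial core---packaging the per-selection witnesses into a single set of $\rejectset$ via iterated~\ref{ax:rejects:cone}---to be factored out as a standalone lemma in the appendix, as this is where the axiomatisation really pays its dues.
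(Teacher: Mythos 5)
Your ``if'' direction, your maximality argument, and your reduction of the ``only if'' direction to a separation lemma are all sound, and your architecture is genuinely different from the paper's: the paper never works with selections or compactness, but instead applies Zorn's Lemma at the level of coherent sets of desirable \emph{option sets} that include $\rejectset$ and avoid a fixed $\altoptset[o]\notin\rejectset$ (Lemma~\ref{lem:Zorncoherence}), and then shows every maximal element of that family is binary by proving that any non-binary element is strictly dominated within it---the delicate $\rejectset^{**}$-construction of Lemma~\ref{lem:Kstarstar} together with the $\RN$ operator. Your equivalence ``$\Phi=\emptyset$ iff every selection cone $\posi\group{\posopts\cup\cset{\sigma\group{\altoptset}}{\altoptset\in\rejectset}}$ meets $\optset_0\cup\set{0}$'' is correct, as is the Tychonoff reduction to finitely many $\altoptset[1],\dots,\altoptset[n]\in\rejectset$ (absorbing the finitely many elements of $\posopts$ as singletons via~\ref{ax:rejects:pos}). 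A minor remark: your appeal to Zorn is superfluous---once $\Phi\neq\emptyset$, \emph{any} element of $\Phi$ is the required witness.

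The genuine gap is the step you describe as ``iterating~\ref{ax:rejects:cone} with coefficients tailored per selection''. What you need there is exactly closure of $\rejectset$ under the $\setposi$ operator of Equation~\eqref{eq:setposi}: for $\optset[1],\dots,\optset[n]\in\rejectset$ and coefficient vectors chosen \emph{per tuple} $\opt[1:n]\in\times_{k=1}^n\optset[k]$, the set of all resulting combinations again belongs to $\rejectset$. This does not follow from iterating~\ref{ax:rejects:cone} pairwise: in a staged combination, the ratio between the coefficients attached to options from the earlier sets is fixed before the later coordinates are seen. Concretely, combining $\set{a}$ and $\set{b}$ first and then folding in $\set{x_1,x_2}$ can never directly yield $\set{a+2b+x_1,\;2a+b+x_2}$, since the $a{:}b$ ratio would have to be simultaneously $1{:}2$ and $2{:}1$; such sets can only be reached through ad hoc pass-through devices (zero coefficients, rescaled duplicate copies), and the general bookkeeping---collisions between coinciding intermediate combinations, vanishing coefficient sub-vectors---is precisely the hard combinatorial content your sketch elides. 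Worse, the closure property you need \emph{is} Proposition~\ref{prop:ax:rejects:cone:equivalents} of the paper, and there it is proved \emph{from} the representation theorem (Theorem~\ref{theo:rejectsets:representation}), so citing it would make your argument circular. To complete your route you would have to supply an independent, purely axiomatic proof of (a finite form of) $\setposi\group{\rejectset}\subseteq\rejectset$; that lemma is essentially where the whole difficulty of the theorem lives, and it is the work that the paper's Zorn-plus-Lemma~\ref{lem:Kstarstar} machinery does instead. If you did prove it, your compactness argument would go through and would constitute an attractive alternative proof.
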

\noindent 
Due to the one-to-one correspondence between coherent sets of desirable options~$\desirset$ and coherent preference orders~$\prefgt[\desirset]$, this representation result tells us that working with a coherent set of desirable option sets $\rejectset$ is equivalent to working with the set of those coherent preference orders~$\prefgt[\desirset]$ for which $\rejectset\subseteq\rejectset[\desirset]$. 
For the rejection function $\rejectfun$ that corresponds to $\rejectset$ through Equation~\eqref{eq:interpretation:rejectfuns:after:irreflexivity:and:additivity:intermsof:K:Gert}, $\opt\in\rejectfun(\optset)$ means that $\opt$ is dominated in $\optset$ for all these representing coherent preference orders $\prefgt[\desirset]$. 
Similarly, $\opt\in\choicefun[\rejectfun](\optset)$ means that $\opt$ is undominated according to at least one of these representing coherent preference orders $\prefgt[\desirset]$.
This effectively tells us that our coherence axioms~\ref{ax:rejects:removezero}--\ref{ax:rejects:mono} for choice models characterise a generalised type of choice under Levi's notion of E-admissibility \cite{levi1980a,troffaes2007,vancamp2015:indifference}, but with representing preference orders $\prefgt[\desirset]$ that need not be total orders based on comparing expectations.

Interestingly, any potential property of sets of desirable option sets that is preserved under taking arbitrary intersections, and that the binary choice models satisfy, is inherited from the binary models through the representation result of Theorem~\ref{theo:coherentrepresentation:twosided}. 
It is easy to see that this applies in particular to Aizermann's condition \cite{aizerman1985}.

\section{Imposing totality}\label{sec:totality}

We have just shown that every coherent choice model~$\rejectset$ can be represented by a collection of coherent sets of desirable options~$\desirset$. 
This leads us to wonder whether it is possible to achieve representation using only particular types of coherent~$\desirset$, and, if yes, for which types of coherent sets of desirable option sets~$\rejectset$---and hence for which types of rejection functions~$\rejectfun$ and choice functions~$\choicefun$---this is possible.
In this section, we clear the air by starting with a rather simple case, where we restrict attention to \emph{total} sets of desirable options~$\desirset$, corresponding to total preference orders $\prefgt[\desirset]$. 

\begin{definition}[Totality for sets of desirable options]\label{def:totaldesirs}
We call a set of desirable options $\desirset\in\desirsets$ \emph{total} if it is coherent and
\begin{enumerate}[label=$\mathrm{D}_{\mathrm{T}}$.,ref=$\mathrm{D}_{\mathrm{T}}$,leftmargin=*]
\item\label{ax:desirs:totality} for all $\opt\in\opts\setminus\set{0}$, either $\opt\in\desirset$ or $-\opt\in\desirset$.
\end{enumerate}
The set of all total sets of desirable options is denoted by $\totcohdesirsets$.
\end{definition}
\noindent
That the binary preference order $\prefgt[\desirset]$ corresponding to a total set of desirable options $\desirset$ is indeed a total order can be seen as follows. 
For all $\opt,\altopt\in\opts$ such that $\opt\neq\altopt$, the property~\ref{ax:desirs:totality} implies that either $\opt-\altopt\in\desirset$ or $\altopt-\opt\in\desirset$. 
Hence, for all $\opt,\altopt\in\opts$, we have that either $\opt=\altopt$, $\opt\prefgt[\desirset]\altopt$ or $\altopt\prefgt[\desirset]\opt$, which indeed makes $\prefgt[\desirset]$ a total order.

It was shown in~\cite{couso2011,cooman2010} that what we call \emph{total} sets of desirable options here, are precisely the \emph{maximal} or undominated coherent ones, i.e.~those coherent $\desirset\in\cohdesirsets$ that are not included in any other coherent set of desirable option sets: $\group{\forall\desirset'\in\cohdesirsets}\group{\desirset\subseteq\desirset'\then\desirset=\desirset'}$.
The question of which types of binary sets of desirable option sets $\rejectset[\desirset]$ the total $\desirset$ correspond to, is answered by the following definition and proposition.

\begin{definition}[Totality for sets of desirable option sets]\label{def:totalrejects}
We call a set of desirable option sets $\rejectset\in\rejectsets$ \emph{total} if it is coherent and
\begin{enumerate}[label=$\mathrm{K}_{\mathrm{T}}$.,ref=$\mathrm{K}_{\mathrm{T}}$,leftmargin=*]
\item\label{ax:rejects:totality} 
$\set{\opt,-\opt}\in\rejectset$ for all $\opt\in\opts\setminus\set{0}$.
\end{enumerate}
The set of all total sets of desirable options is denoted by $\totcohrejectsets$.
\end{definition}

\begin{proposition}\label{prop:totalbinaryiff}
For any set of desirable options $\desirset\in\desirsets$, $\desirset$ is total if and only if~$\rejectset[\desirset]$ is, so $\rejectset[\desirset]\in\totcohrejectsets\ifandonlyif\desirset\in\totcohdesirsets$.
\end{proposition}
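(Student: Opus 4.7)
The plan is to reduce the claim to a direct unfolding of definitions, using Proposition~\ref{prop:coherence:for:binary} to handle the coherence part.

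First, by Proposition~\ref{prop:coherence:for:binary}, $\desirset\in\cohdesirsets$ if and only if $\rejectset[\desirset]\in\cohrejectsets$. Hence the coherence half of the two totality definitions (Definitions~\ref{def:totaldesirs} and~\ref{def:totalrejects}) already matches up, and all that remains is to show, under the assumption that both $\desirset$ and $\rejectset[\desirset]$ are coherent, that Axiom~\ref{ax:desirs:totality} for~$\desirset$ is equivalent to Axiom~\ref{ax:rejects:totality} for~$\rejectset[\desirset]$.

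For this, I would simply rewrite Axiom~\ref{ax:rejects:totality} using the defining identity \eqref{eq:desirset:to:rejectset}. For every $\opt\in\opts\setminus\set{0}$,
\begin{equation*}
\set{\opt,-\opt}\in\rejectset[\desirset]
\ifandonlyif
\set{\opt,-\opt}\cap\desirset\neq\emptyset
\ifandonlyif
\group{\opt\in\desirset\text{ or }-\opt\in\desirset},
\end{equation*}
which is exactly Axiom~\ref{ax:desirs:totality}. Quantifying over all such~$\opt$ gives the equivalence of the two totality axioms, and combined with the coherence equivalence from Proposition~\ref{prop:coherence:for:binary} this yields the stated biconditional $\rejectset[\desirset]\in\totcohrejectsets\ifandonlyif\desirset\in\totcohdesirsets$.

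There is no real obstacle here: the proposition is essentially a sanity check that the abstract totality axiom for sets of desirable option sets was defined so as to match, under the binary correspondence $\desirset\mapsto\rejectset[\desirset]$, the classical totality axiom for sets of desirable options. The entire argument is a one-line unfolding, and the only substantive ingredient imported from elsewhere is Proposition~\ref{prop:coherence:for:binary}.
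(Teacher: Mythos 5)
Your proof is correct and follows essentially the same route as the paper's: the paper also disposes of the coherence halves via Proposition~\ref{prop:coherence:for:binary} and establishes the equivalence of Axiom~\ref{ax:desirs:totality} with Axiom~\ref{ax:rejects:totality} by exactly your chain of equivalences $\set{\opt,-\opt}\in\rejectset[\desirset]\ifandonlyif\set{\opt,-\opt}\cap\desirset\neq\emptyset\ifandonlyif\group{\opt\in\desirset\text{ or }-\opt\in\desirset}$. The only cosmetic difference is that you frame the coherence assumption as a hypothesis for the axiom equivalence, whereas (as your own unfolding shows) that equivalence holds unconditionally.
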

\noindent
So a binary $\rejectset$ is total if and only if its corresponding $\desirset[\rejectset]$ is. 
For general total sets of desirable option sets $\rejectset\in\totcohrejectsets$, which are not necessarily binary, we nevertheless still have representation in terms of total binary ones.

\begin{theorem}\label{theo:totalrepresentation:twosided}
A set of desirable option sets $\rejectset\in\rejectsets$ is total if and only if there is a non-empty set $\setofdesirsets\subseteq\totcohdesirsets$ of total sets of desirable options such that $\rejectset=\bigcap\cset{\rejectset[\desirset]}{\desirset\in\setofdesirsets}$. 
The largest such set $\setofdesirsets$ is then\/ $\totcohdesirsets\group{\rejectset}\coloneqq\cset{\desirset\in\totcohdesirsets}{\rejectset\subseteq\rejectset[\desirset]}$.
\end{theorem}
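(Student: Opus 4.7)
My plan is to derive this theorem almost immediately from the coherent representation result (Theorem~\ref{theo:coherentrepresentation:twosided}) together with Proposition~\ref{prop:totalbinaryiff}, by exploiting the special role that the axiom \ref{ax:rejects:totality} plays inside representing intersections. The crucial observation, which I expect will do most of the work, is that a coherent $\desirset\in\cohdesirsets$ with $\rejectset\subseteq\rejectset[\desirset]$ must automatically be total whenever $\rejectset$ itself is total: for then each pair $\set{\opt,-\opt}$ with $\opt\neq 0$ lies in $\rejectset$, hence in $\rejectset[\desirset]$, which by \eqref{eq:desirset:to:rejectset} forces $\opt\in\desirset$ or $-\opt\in\desirset$. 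So $\cohdesirsets\group{\rejectset}\subseteq\totcohdesirsets$, and in particular $\cohdesirsets\group{\rejectset}=\totcohdesirsets\group{\rejectset}$.

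For the \emph{if} direction, I start with $\rejectset=\bigcap\cset{\rejectset[\desirset]}{\desirset\in\setofdesirsets}$ for some non-empty $\setofdesirsets\subseteq\totcohdesirsets$. Since $\totcohdesirsets\subseteq\cohdesirsets$, Theorem~\ref{theo:coherentrepresentation:twosided} gives coherence of $\rejectset$ for free. To check \ref{ax:rejects:totality}, I fix any $\opt\in\opts\setminus\set{0}$; by Proposition~\ref{prop:totalbinaryiff} every $\rejectset[\desirset]$ with $\desirset\in\setofdesirsets$ is itself total, so $\set{\opt,-\opt}\in\rejectset[\desirset]$ for each such $\desirset$, and hence $\set{\opt,-\opt}\in\rejectset$ by taking the intersection. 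Therefore $\rejectset\in\totcohrejectsets$.

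For the \emph{only if} direction, I assume $\rejectset\in\totcohrejectsets$. Since $\rejectset$ is coherent, Theorem~\ref{theo:coherentrepresentation:twosided} supplies the non-empty set $\cohdesirsets\group{\rejectset}$ and yields $\rejectset=\bigcap\cset{\rejectset[\desirset]}{\desirset\in\cohdesirsets\group{\rejectset}}$. The key observation above then upgrades this to $\cohdesirsets\group{\rejectset}=\totcohdesirsets\group{\rejectset}$, so $\totcohdesirsets\group{\rejectset}$ is a non-empty subset of $\totcohdesirsets$ and gives the desired representation $\rejectset=\bigcap\cset{\rejectset[\desirset]}{\desirset\in\totcohdesirsets\group{\rejectset}}$.

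Finally, for the maximality statement I verify that $\totcohdesirsets\group{\rejectset}$ contains every $\setofdesirsets$ that represents $\rejectset$: if $\desirset\in\setofdesirsets\subseteq\totcohdesirsets$, then from $\rejectset=\bigcap\cset{\rejectset[\desirset']}{\desirset'\in\setofdesirsets}\subseteq\rejectset[\desirset]$ one gets $\desirset\in\totcohdesirsets\group{\rejectset}$. The only real content of the argument is the observation that totality of $\rejectset$ propagates to each representing $\desirset$; everything else is bookkeeping on top of Theorem~\ref{theo:coherentrepresentation:twosided} and Proposition~\ref{prop:totalbinaryiff}, so I do not anticipate a substantive obstacle.
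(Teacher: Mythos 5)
Your proposal is correct and follows essentially the same route as the paper's proof: both directions rest on Theorem~\ref{theo:coherentrepresentation:twosided}, the transfer of totality between $\desirset$ and $\rejectset[\desirset]$ (your key observation unfolds \eqref{eq:desirset:to:rejectset} directly where the paper invokes Proposition~\ref{prop:totalbinaryiff}, an immaterial difference), and the stability of Axiom~\ref{ax:rejects:totality} under non-empty intersections. The identification $\cohdesirsets\group{\rejectset}=\totcohdesirsets\group{\rejectset}$ and the maximality bookkeeping also match the paper's argument.
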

\noindent
This representation result shows that our total choice models correspond a generalised type of choice under Levi's notion of E-admissibility \cite{levi1980a,troffaes2007}, but with representing preference orders $\prefgt[\desirset]$ that are now maximal, or undominated.
They correspond what Van Camp et al.~\cite[Section~4]{vancamp2015:indifference} have called \emph{M-admissible} choice models. 
Our discussion above provides an axiomatic characterisation for such choice models.

We conclude our study of totality by characterising what it means for a rejection function to be total.

\begin{proposition}\label{prop:totalityforrejectionfunctions}
Consider any set of desirable option sets~$\rejectset\in\rejectsets$ and any rejection function~$\rejectfun$ that are connected by Equation~\eqref{eq:interpretation:rejectfuns:after:irreflexivity:and:additivity:intermsof:K:Gert}. 
Then $\rejectset$ is total if and only if $\rejectfun$ is coherent and satisfies
\begin{enumerate}[label=$\mathrm{R}_{\mathrm{T}}$.,ref=$\mathrm{R}_{\mathrm{T}}$,leftmargin=*]
\item\label{ax:rejectfun:total} $0\in\rejectfun\group{\set{0,\opt,-\opt}}$, for all $\opt\in\opts\setminus\set{0}$.
\end{enumerate}
\end{proposition}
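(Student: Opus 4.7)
My plan is to reduce the statement to a one-line substitution into the defining Equation~\eqref{eq:interpretation:rejectfuns:after:irreflexivity:and:additivity:intermsof:K:Gert}. By Proposition~\ref{prop:axioms:rejection:sets:and:functions:coherence}, coherence of $\rejectset$ and coherence of $\rejectfun$ are already known to be equivalent under this correspondence. Since, by Definition~\ref{def:totalrejects}, totality of $\rejectset$ means coherence plus the single additional axiom~\ref{ax:rejects:totality}, it therefore suffices to verify that, under the correspondence, axiom~\ref{ax:rejects:totality} on $\rejectset$ is equivalent to axiom~\ref{ax:rejectfun:total} on $\rejectfun$.

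To establish this equivalence, I instantiate Equation~\eqref{eq:interpretation:rejectfuns:after:irreflexivity:and:additivity:intermsof:K:Gert} with $\opt=0$ and $\optset=\set{\opt[0],-\opt[0]}$ for any fixed $\opt[0]\in\opts\setminus\set{0}$. Using that $\set{\opt[0],-\opt[0]}\cup\set{0}=\set{0,\opt[0],-\opt[0]}$ and $\set{\opt[0],-\opt[0]}-0=\set{\opt[0],-\opt[0]}$, the equation specialises to
\begin{equation*}
0\in\rejectfun\group{\set{0,\opt[0],-\opt[0]}}
\ifandonlyif
\set{\opt[0],-\opt[0]}\in\rejectset.
\end{equation*}
Quantifying over all $\opt[0]\in\opts\setminus\set{0}$ yields precisely the equivalence of axioms~\ref{ax:rejects:totality} and~\ref{ax:rejectfun:total}, which combined with the coherence equivalence closes the argument in both directions.

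I do not expect any real obstacle. The only point to watch is that the defining equation ranges over arbitrary $\opt\in\opts$ and $\optset\in\optsets$, with no requirement that $\opt\notin\optset$; this is exactly the form given in Equation~\eqref{eq:interpretation:rejectfuns:after:irreflexivity:and:additivity:intermsof:K:Gert}, so the substitution $\opt=0$, $\optset=\set{\opt[0],-\opt[0]}$ is legitimate and the proof reduces to this direct translation.
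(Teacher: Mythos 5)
Your proof is correct and follows essentially the same route as the paper's: both reduce the coherence part to Proposition~\ref{prop:axioms:rejection:sets:and:functions:coherence} and then translate axiom~\ref{ax:rejects:totality} into~\ref{ax:rejectfun:total} by instantiating Equation~\eqref{eq:interpretation:rejectfuns:after:irreflexivity:and:additivity:intermsof:K:Gert} with $\opt=0$ and $\optset=\set{\opt[0],-\opt[0]}$. The only difference is presentational---the paper spells out the two implications separately while you state the biconditional once---and your closing remark that the equation imposes no requirement $\opt\notin\optset$ correctly disposes of the one point that needed checking.
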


\section{Imposing the mixing property}\label{sec:mixture}

Totality is, of course, a very strong requirement, and it leads to a very special and restrictive type of representation.
We therefore now turn to weaker requirements, and their consequences for representation.
One such additional property, which sometimes pops up in the literature about choice and rejection functions, is the following \emph{mixing property} \cite{seidenfeld2010,2017vancamp:phdthesis}, which asserts that an option that is rejected continues to be rejected if one removes mixed options---convex combinations of other options in the option set:\footnote{Van Camp \cite{2017vancamp:phdthesis} refers to this property as `convexity', but we prefer to stick to the original name suggested by Seidenfeld et al.~\cite{seidenfeld2010} for the sake of avoiding confusion. We nevertheless want to point out that in a context that focuses on rejection rather than choice, the term `\emph{unmixing}' would be preferable, because the rejection is preserved under \emph{removing} mixed options---whereas the choice is preserved under \emph{adding} mixed options.}
\begin{enumerate}[label=$\mathrm{R}_{\mathrm{M}}$.,ref=$\mathrm{R}_{\mathrm{M}}$,leftmargin=*,start=5]
\item\label{ax:rejectfun:removepositivecombinations} 
if $\optset\subseteq\altoptset\subseteq\chull\group{\optset}$ then also $\rejectfun\group{\altoptset}\cap\optset\subseteq\rejectfun\group{\optset}$, for all $\optset,\altoptset\in\optsets$, 
\end{enumerate}
where $\chull\group{\cdot}$ is the \emph{convex hull operator}, defined by
\begin{equation}\label{eq:convexhulloperator}
\chull(V)
\coloneqq\cset[\bigg]{\sum_{k=1}^n\lambda_k\opt[k]}
{n\in\naturals,\lambda_k\in\posreals,\sum_{k=1}^n\lambda_k=1,\opt[k]\in V}
\text{ for all $V\subseteq\opts$}.
\end{equation}
$\naturals$ is the set of natural numbers, or in other words all positive integers, excluding~$0$, and $\posreals$ is the set of all (strictly) positive reals.
A rejection function that satisfies this mixing property is called \emph{mixing}.

The following result characterises the mixing property in terms of the corresponding set of desirable option sets. 
We provide two equivalent conditions: one in terms of the convex hull operator, and one in terms of the $\posi\group{\cdot}$ operator, which, for any subset~$V$ of~$\opts$, returns the set of all positive linear combinations of its elements:
\begin{equation}\label{eq:posioperator}
\posi(V)\coloneqq\cset[\bigg]{\sum_{k=1}^n\lambda_k\opt[k]}{n\in\naturals,\lambda_k\in\posreals,\opt[k]\in V}.
\end{equation}

\begin{proposition}\label{prop:axioms:rejection:sets:and:functions:mixing}
Consider any set of desirable option sets~$\rejectset\in\rejectsets$ and any rejection function~$\rejectfun$ that are connected by Equation~\eqref{eq:interpretation:rejectfuns:after:irreflexivity:and:additivity:intermsof:K:Gert}. 
Then $\rejectfun$ is coherent and mixing if and only if~$\rejectset$ is coherent and satisfies any---and hence both---of the following conditions:
\begin{enumerate}[label=$\mathrm{K}_{\mathrm{M}}$.,ref=$\mathrm{K}_{\mathrm{M}}$,leftmargin=*]
\item\label{ax:rejects:removepositivecombinations} if $\altoptset\in\rejectset$ and $\optset\subseteq\altoptset\subseteq\posi\group{\optset}$, then also $\optset\in\rejectset$, for all $\optset,\altoptset\in\optsets$;
\end{enumerate}
\begin{enumerate}[label=$\mathrm{K}'_{\mathrm{M}}$.,ref=$\mathrm{K}'_{\mathrm{M}}$,leftmargin=*]
\item\label{ax:rejects:removeconvexcombinations} if $\altoptset\in\rejectset$ and $\optset\subseteq\altoptset\subseteq\chull\group{\optset}$, then also $\optset\in\rejectset$, for all $\optset,\altoptset\in\optsets$.
\end{enumerate}
\end{proposition}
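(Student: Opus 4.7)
The plan is to build on Proposition~\ref{prop:axioms:rejection:sets:and:functions:coherence}, which already equates coherence of $\rejectfun$ with coherence of $\rejectset$. Under this coherence, the remaining task reduces to showing, on the one hand, that \ref{ax:rejects:removepositivecombinations} is equivalent to \ref{ax:rejects:removeconvexcombinations}, and on the other hand, that the mixing property \ref{ax:rejectfun:removepositivecombinations} is equivalent to \ref{ax:rejects:removeconvexcombinations}.

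For $\ref{ax:rejects:removepositivecombinations}\Leftrightarrow\ref{ax:rejects:removeconvexcombinations}$, the direction $\Rightarrow$ is immediate since $\chull\group{\optset}\subseteq\posi\group{\optset}$. The converse is the first real step and relies on a rescaling trick: given $\altoptset\in\rejectset$ with $\optset\subseteq\altoptset\subseteq\posi\group{\optset}$, for each $b\in\altoptset\setminus\optset$ fix a representation $b=\sum_{k}\lambda_{k,b}\,a_{k,b}$ with $a_{k,b}\in\optset$ and $\lambda_{k,b}>0$, set $s_b\coloneqq\sum_{k}\lambda_{k,b}$, and put $s_a\coloneqq1$ for $a\in\optset$ (so $a=1\cdot a\in\chull\group{\optset}$ trivially). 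Applying Axiom~\ref{ax:rejects:cone} with $\optset[1]=\optset[2]=\altoptset$, $\lambda_{x,y}=1/s_x$ and $\mu_{x,y}=0$ delivers $\altoptset'\coloneqq\cset{b/s_b}{b\in\altoptset}\in\rejectset$, and by construction $\optset\subseteq\altoptset'\subseteq\chull\group{\optset}$, so \ref{ax:rejects:removeconvexcombinations} gives $\optset\in\rejectset$.

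For $\ref{ax:rejectfun:removepositivecombinations}\Leftrightarrow\ref{ax:rejects:removeconvexcombinations}$, both directions use Equation~\eqref{eq:interpretation:rejectfuns:after:irreflexivity:and:additivity:intermsof:K:Gert}, together with Axioms~\ref{ax:rejects:removezero} and~\ref{ax:rejects:mono} to deal with the zero option. For $\Leftarrow$, take $\optset\subseteq\altoptset\subseteq\chull\group{\optset}$ and $\opt\in\rejectfun\group{\altoptset}\cap\optset$; translating gives $\altoptset-\opt\in\rejectset$, and since any convex combination $\sum_k\lambda_k a_k$ of $\optset$-elements yields $\sum_k\lambda_k a_k-\opt=\sum_k\lambda_k(a_k-\opt)$, we have $\optset-\opt\subseteq\altoptset-\opt\subseteq\chull\group{\optset-\opt}$, so \ref{ax:rejects:removeconvexcombinations} yields $\optset-\opt\in\rejectset$, i.e.\ $\opt\in\rejectfun\group{\optset}$. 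For $\Rightarrow$, take $\altoptset\in\rejectset$ with $\optset\subseteq\altoptset\subseteq\chull\group{\optset}$; translating with $\opt=0$ gives $0\in\rejectfun\group{\altoptset\cup\set{0}}$, the pair $(\optset\cup\set{0},\altoptset\cup\set{0})$ satisfies the hypothesis $\optset\cup\set{0}\subseteq\altoptset\cup\set{0}\subseteq\chull\group{\optset\cup\set{0}}$ of \ref{ax:rejectfun:removepositivecombinations}, and so $0\in\rejectfun\group{\optset\cup\set{0}}$, which translates back to $\optset\in\rejectset$.

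The main obstacle is the rescaling construction: one must verify that Axiom~\ref{ax:rejects:cone} can legitimately be invoked with coefficients depending only on the first index, so that the resulting combination set is \emph{exactly} $\altoptset'$ rather than something larger, and that the scaling factors $s_b$ can be chosen consistently with the constraint $s_a=1$ for $a\in\optset$. The only other delicate point is the bookkeeping around the zero option when shuttling between $\rejectset$- and $\rejectfun$-statements through Equation~\eqref{eq:interpretation:rejectfuns:after:irreflexivity:and:additivity:intermsof:K:Gert}, especially in the application of \ref{ax:rejectfun:removepositivecombinations} that requires adjoining $0$ to both sets before and stripping it off via Axiom~\ref{ax:rejects:removezero} after.
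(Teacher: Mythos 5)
Your proof is correct, and it uses the same essential ingredients as the paper's---the only difference is how the implications are factored. The paper proves the cycle \ref{ax:rejectfun:removepositivecombinations} $\then$ \ref{ax:rejects:removepositivecombinations} $\then$ \ref{ax:rejects:removeconvexcombinations} $\then$ \ref{ax:rejectfun:removepositivecombinations}, deploying the rescaling trick \emph{inside} the first implication: it rescales the posi-witness $\altoptset$ into a conv-witness $\tilde{\altoptset}\in\rejectset$ via its Lemma~\ref{lem:localrescaling} and then applies \ref{ax:rejectfun:removepositivecombinations} once, through the adjoin-$0$ manoeuvre. You instead isolate the equivalence \ref{ax:rejects:removepositivecombinations} $\ifandonlyif$ \ref{ax:rejects:removeconvexcombinations} purely at the level of $\rejectset$ (a fact the paper never states in isolation, obtaining the converse implication only around the cycle), and then prove \ref{ax:rejectfun:removepositivecombinations} $\ifandonlyif$ \ref{ax:rejects:removeconvexcombinations} separately; this buys a cleaner statement that the two $\rejectset$-axioms coincide under coherence alone, independently of any rejection function. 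Your worry about invoking Axiom~\ref{ax:rejects:cone} with coefficients depending only on the first index is unfounded and is resolved exactly as you do it: with $\optset[1]=\optset[2]=\altoptset$, $\lambda_{x,y}=1/s_x$ and $\mu_{x,y}=0$, the combination set collapses to $\cset{x/s_x}{x\in\altoptset}$ since the value is independent of $y$---this is precisely the paper's Lemma~\ref{lem:localrescaling}, which you have rederived inline. One cosmetic slip: in your closing paragraph you say the zero option must be stripped off ``via Axiom~\ref{ax:rejects:removezero}'', but no such step is needed---Equation~\eqref{eq:interpretation:rejectfuns:after:irreflexivity:and:additivity:intermsof:K:Gert} with $\opt=0$ directly yields $0\in\rejectfun\group{\optset\cup\set{0}}\ifandonlyif\optset\in\rejectset$, since $\optset-0=\optset$; your actual proof steps already use only this, so nothing breaks.
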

\noindent
In the context of sets of desirable options in linear spaces, we prefer to use the $\posi\group{\cdot}$ operator because it fits more naturally with Axiom~\ref{ax:desirability:cone}. 
We therefore adopt Axiom~\ref{ax:rejects:removepositivecombinations} as our definition for mixingness.

\begin{definition}[Mixing property for sets of desirable option sets]\label{def:mixingrejects}
We call a set of desirable option sets $\rejectset\in\rejectsets$ \emph{mixing} if it is coherent and satisfies \ref{ax:rejects:removepositivecombinations}. 
The set of all mixing sets of desirable options is denoted by $\convcohrejectsets$.
\end{definition}

We now proceed to show that these mixing sets of desirable option sets allow for a representation in terms of sets of desirable options that are themselves mixing, in the following sense.

\begin{definition}[Mixing property for sets of desirable options]\label{def:mixingdesirs}
We call a set of desirable options $\desirset\in\desirsets$ \emph{mixing} if it is coherent and 
\begin{enumerate}[label=$\mathrm{D}_{\mathrm{M}}$.,ref=$\mathrm{D}_{\mathrm{M}}$,leftmargin=*]
\item for all $\optset\in\optsets$, if $\posi(\optset)\cap\desirset\neq\emptyset$, then also $\optset\cap\desirset\neq\emptyset$.\label{ax:desirs:mixing}
\end{enumerate}
We denote the set of all mixing sets of desirable options by $\convcohdesirsets$.
\end{definition}

The \emph{binary} elements of $\convcohrejectsets$ are precisely the ones based on such a mixing set of desirable options; they can be represented by a single element of $\convcohdesirsets$.

\begin{proposition}\label{prop:mixingbinaryiffD}
For any set of desirable options $\desirset\in\desirsets$, $\rejectset[\desirset]$ is mixing if and only if~$\desirset$ is, so $\rejectset[\desirset]\in\convcohrejectsets\ifandonlyif\desirset\in\convcohdesirsets$.
\end{proposition}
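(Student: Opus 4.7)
The plan is to exploit the very concrete bridge \eqref{eq:desirset:to:rejectset} between $\desirset$ and $\rejectset[\desirset]$, namely that $A \in \rejectset[\desirset]$ if and only if $A$ contains a witness in $\desirset$. Coherence is handled for free by Proposition~\ref{prop:coherence:for:binary}, so the entire argument reduces to transferring the mixing clause across the correspondence.

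For the forward direction, I assume $\rejectset[\desirset]$ is mixing and pick any finite $\optset \in \optsets$ with $\posi(\optset) \cap \desirset \neq \emptyset$. I then choose a witness $\altopt \in \posi(\optset) \cap \desirset$ and consider the enlarged option set $\altoptset \coloneqq \optset \cup \set{\altopt}$. Because $\altopt \in \desirset$, the singleton $\set{\altopt}$ meets $\desirset$, so $\altoptset$ does too, meaning $\altoptset \in \rejectset[\desirset]$. Moreover, $\optset \subseteq \altoptset$ by construction, and $\altoptset \subseteq \posi\group{\optset}$ because $\optset \subseteq \posi\group{\optset}$ trivially and $\altopt \in \posi\group{\optset}$ by choice. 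Applying \ref{ax:rejects:removepositivecombinations} to $\rejectset[\desirset]$ yields $\optset \in \rejectset[\desirset]$, i.e.\ $\optset \cap \desirset \neq \emptyset$, which is~\ref{ax:desirs:mixing}.

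For the backward direction, I assume $\desirset$ is mixing and take any $\optset,\altoptset \in \optsets$ with $\altoptset \in \rejectset[\desirset]$ and $\optset \subseteq \altoptset \subseteq \posi\group{\optset}$. By definition of $\rejectset[\desirset]$, there is some $\altopt \in \altoptset \cap \desirset$. Since $\altoptset \subseteq \posi\group{\optset}$, this $\altopt$ lies in $\posi\group{\optset} \cap \desirset$, so \ref{ax:desirs:mixing} delivers an element of $\optset \cap \desirset$, i.e.\ $\optset \in \rejectset[\desirset]$, which is \ref{ax:rejects:removepositivecombinations}.

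No step looks like a genuine obstacle: both directions hinge only on choosing the right witness and on the identity $\optset \subseteq \posi\group{\optset}$. The only subtlety worth flagging in the write-up is that the enlarged set $\altoptset = \optset \cup \set{\altopt}$ used in the forward direction is automatically finite (so it really lies in $\optsets$), and that coherence of either side is inherited from the other via Proposition~\ref{prop:coherence:for:binary}, so it need not be re-verified here.
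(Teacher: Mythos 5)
Your proof is correct and follows essentially the same route as the paper's: both directions delegate coherence to Proposition~\ref{prop:coherence:for:binary} and then transfer the mixing clause across Equation~\eqref{eq:desirset:to:rejectset}, using exactly the same witness construction $\altoptset\coloneqq\optset\cup\set{\altopt}$ with $\optset\subseteq\altoptset\subseteq\posi\group{\optset}$ when passing from \ref{ax:rejects:removepositivecombinations} to \ref{ax:desirs:mixing}. Nothing needs changing.
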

\noindent
For general mixing sets of desirable option sets that are not necessarily binary, we nevertheless still obtain a representation theorem analogous to Theorems~\ref{theo:coherentrepresentation:twosided} and~\ref{theo:totalrepresentation:twosided}, where the representing sets of desirable options are now mixing.

\begin{theorem}\label{theo:mixingrepresentation:twosided}
A set of desirable option sets $\rejectset\in\rejectsets$ is mixing if and only if there is a non-empty set $\setofdesirsets\subseteq\convcohdesirsets$ of mixing sets of desirable options such that $\rejectset=\bigcap\cset{\rejectset[\desirset]}{\desirset\in\setofdesirsets}$. 
The largest such set $\setofdesirsets$ is then\/ $\convcohdesirsets\group{\rejectset}\coloneqq\cset{\desirset\in\convcohdesirsets}{\rejectset\subseteq\rejectset[\desirset]}$.
\end{theorem}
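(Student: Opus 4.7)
The plan is to mirror the structure of Theorems~\ref{theo:coherentrepresentation:twosided} and~\ref{theo:totalrepresentation:twosided}, with the main new work being that the separating set of desirable options must itself be mixing.

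For the easy direction, assume $\rejectset=\bigcap\cset{\rejectset[\desirset]}{\desirset\in\setofdesirsets}$ for some non-empty $\setofdesirsets\subseteq\convcohdesirsets$. By Proposition~\ref{prop:mixingbinaryiffD}, each $\rejectset[\desirset]$ with $\desirset\in\convcohdesirsets$ is itself mixing (and coherent). Then the observation is that the coherence axioms \ref{ax:rejects:removezero}--\ref{ax:rejects:mono} and the mixing axiom \ref{ax:rejects:removepositivecombinations} are all preserved under arbitrary intersections: for \ref{ax:rejects:removepositivecombinations}, if $\altoptset\in\bigcap_{\desirset\in\setofdesirsets}\rejectset[\desirset]$ and $\optset\subseteq\altoptset\subseteq\posi(\optset)$, then $\altoptset\in\rejectset[\desirset]$ and hence $\optset\in\rejectset[\desirset]$ for every $\desirset\in\setofdesirsets$, so $\optset$ lies in the intersection. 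Coherence follows analogously (or directly from Theorem~\ref{theo:coherentrepresentation:twosided}).

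For the hard direction, assume $\rejectset\in\convcohrejectsets$ and set $\setofdesirsets\coloneqq\convcohdesirsets(\rejectset)$. The inclusion $\rejectset\subseteq\bigcap\cset{\rejectset[\desirset]}{\desirset\in\setofdesirsets}$ holds by construction, so the work is to prove the reverse inclusion by contraposition: given any $\optset\in\optsets\setminus\rejectset$, I need to exhibit a mixing $\desirset^{*}\in\convcohdesirsets$ with $\rejectset\subseteq\rejectset[\desirset^{*}]$ and $\optset\cap\desirset^{*}=\emptyset$. I would apply Zorn's lemma to the family
\begin{equation*}
\mathcal{F}(\optset)\coloneqq\cset{\desirset\in\cohdesirsets}{\rejectset\subseteq\rejectset[\desirset]\text{ and }\desirset\cap\optset=\emptyset},
\end{equation*}
which is non-empty by Theorem~\ref{theo:coherentrepresentation:twosided} and closed under unions of chains (unions of convex cones are convex cones, disjointness from the finite set $\optset$ is preserved, and the condition $\rejectset\subseteq\rejectset[\cdot]$ is monotone in $\desirset$). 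Let $\desirset^{*}$ be a maximal element.

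The main obstacle, and the only genuinely new content beyond the proof of Theorem~\ref{theo:coherentrepresentation:twosided}, is showing that this maximal $\desirset^{*}$ is automatically mixing. Suppose not: then there is some $\altoptset\in\optsets$ with $\posi(\altoptset)\cap\desirset^{*}\neq\emptyset$ but $\altoptset\cap\desirset^{*}=\emptyset$. By maximality of $\desirset^{*}$, for each $\altopt\in\altoptset$ the coherent closure $\posi(\desirset^{*}\cup\set{\altopt})$ fails to lie in $\mathcal{F}(\optset)$, which means it either meets $\optset$ or else misses some member of $\rejectset$. Collecting these witnesses across all $\altopt\in\altoptset$ and combining them with the positive combination in $\posi(\altoptset)\cap\desirset^{*}$ via Axiom~\ref{ax:rejects:cone}, I would build an option set $\altoptset'\in\rejectset$ that satisfies the hypothesis of the mixing axiom~\ref{ax:rejects:removepositivecombinations} for $\rejectset$, and whose reduction forces $\optset$ itself (or some subset of it already known to be outside $\rejectset$) into $\rejectset$—the desired contradiction. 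This combinatorial step is where the mixing property of $\rejectset$ is indispensable; without it, the coherence Axioms~\ref{ax:rejects:removezero}--\ref{ax:rejects:mono} alone would not suffice to rule out such an $\altoptset$.

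The largest-set claim follows immediately: if any representing $\setofdesirsets\subseteq\convcohdesirsets$ witnesses the identity $\rejectset=\bigcap\cset{\rejectset[\desirset]}{\desirset\in\setofdesirsets}$, then $\rejectset\subseteq\rejectset[\desirset]$ for every $\desirset\in\setofdesirsets$, so $\setofdesirsets\subseteq\convcohdesirsets(\rejectset)$; and the preceding argument shows that $\convcohdesirsets(\rejectset)$ itself is a representing set.
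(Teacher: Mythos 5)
Your easy direction and your largest-set argument are fine and coincide with the paper's. The hard direction, however, has a genuine gap at exactly the step you flag as the main obstacle, and your route differs from the paper's in a way that matters. First, your case analysis of why $\posi(\desirset^{*}\cup\set{\altopt})$ falls outside $\mathcal{F}(\optset)$ is wrong: since $\desirset^{*}\subseteq\posi(\desirset^{*}\cup\set{\altopt})$ and $\desirset\mapsto\rejectset[\desirset]$ is monotone, the condition $\rejectset\subseteq\rejectset[\cdot]$ can \emph{never} fail for the extension, so ``misses some member of $\rejectset$'' is an impossible case; the case you omit is incoherence, i.e.\ $0\in\posi(\desirset^{*}\cup\set{\altopt})$. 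Second, and more seriously, tracing the witnesses through does not close the argument. In the incoherence case one can reduce the witness set, and in the remaining cases each $\altopt[k]$ in the mixing-failure set $\altoptset$ yields $\opt[k]=\mu_k\altopt[k]+d_k$ with $\opt[k]\in\optset$, $\mu_k>0$, $d_k\in\desirset^{*}\cup\set{0}$; combining these with the element of $\posi(\altoptset)\cap\desirset^{*}$ only produces a \emph{positive combination} of elements of $\optset$ lying in $\desirset^{*}$, i.e.\ $\posi(\optset)\cap\desirset^{*}\neq\emptyset$. That contradicts nothing: your family only enforces $\optset\cap\desirset^{*}=\emptyset$, and concluding $\optset\cap\desirset^{*}\neq\emptyset$ from $\posi(\optset)\cap\desirset^{*}\neq\emptyset$ would require $\desirset^{*}$ to be mixing---precisely what you are trying to prove, so the argument is circular. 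Strengthening the family to demand $\posi(\optset)\cap\desirset=\emptyset$ fixes the maximality step but breaks non-emptiness: showing that \emph{some} representing $\desirset$ avoids all of $\posi(\optset)$ is essentially the theorem again (mixingness of $\rejectset$ gives avoidance of each finite $\altoptset$ with $\optset\subseteq\altoptset\subseteq\posi(\optset)$, via \ref{ax:rejects:removepositivecombinations} and Theorem~\ref{theo:rejectsets:representation}, but passing to simultaneous avoidance needs a new compactness-type argument you do not supply). Finally, your plan to ``build an option set $\altoptset'\in\rejectset$'' has no mechanism behind it: the inclusion $\rejectset\subseteq\rejectset[\desirset^{*}]$ lets you extract information \emph{from} members of $\rejectset$, but gives you no way to certify new sets \emph{as} members of $\rejectset$, which is what an application of \ref{ax:rejects:removepositivecombinations} would require.

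The paper sidesteps all of this by running Zorn's lemma one level up, over sets of desirable option \emph{sets} rather than sets of desirable options: it works in the family $\cset{\rejectset'\in\convcohrejectsets}{\rejectset\subseteq\rejectset'\text{ and }\altoptset[o]\notin\rejectset'}$, where mixingness is part of the membership condition, is preserved along chains (Lemma~\ref{lem:chainmixing}, Lemma~\ref{lem:Zornconvexity}), and is actively maintained when strictly enlarging a non-binary element, via the operator $\RP$ and Proposition~\ref{prop:removal:of:posis:gewijzigde:axiomas} applied to the construction of Lemma~\ref{lem:Kstarstar}; the maximal element is then shown to be \emph{binary} (the long combinatorial argument in the proof of Theorem~\ref{theo:convex:rejectsets:representation}, which also preprocesses $\altoptset[o]$ so that it contains no non-positive options and no options that are positive combinations of its other elements), and Propositions~\ref{prop:binary:characterisation} and~\ref{prop:mixingbinaryiffD} convert it into a mixing $\desirset$. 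In short: your architecture (maximality among coherent $\desirset$'s, then ``maximal implies mixing'') is not merely unproved but, as formulated, cannot yield the needed contradiction; the paper's proof indicates that the invariant one must carry through the Zorn argument is mixingness of the choice model itself, not a property one can recover afterwards from maximality at the binary level.
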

\noindent 
This representation result is akin to the one proved by Seidenfeld et al \cite{seidenfeld2010}, but without the additional condition of weak Archimedeanity they impose.
In order to better explain this, and to provide this result with some extra intuition, we take a closer look at the mixing sets of desirable options that make up our representation. 
The following result is an equivalent characterisation of such sets.

\begin{proposition}\label{prop:mixingequalslexicographic}
Consider any set of desirable options $\desirset\in\cohdesirsets$ and let $\desirset^\mathrm{c}\coloneqq\opts\setminus\desirset$. 
Then $\desirset$ is mixing if and only if\/ $\posi(\desirset^\mathrm{c})=\desirset^\mathrm{c}$.
\end{proposition}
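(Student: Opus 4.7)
The plan is to prove both implications directly from the definitions, exploiting the fact that $\posi(\cdot)$ is built from \emph{finite} positive linear combinations, so that the restriction to finite option sets $\optset\in\optsets$ in~\ref{ax:desirs:mixing} is not a genuine restriction when testing membership in $\posi(\desirset^\mathrm{c})$. No extra ingredient beyond the definitions and the monotonicity of $\posi$ (i.e.\ $A\subseteq B\then\posi(A)\subseteq\posi(B)$) seems to be needed.

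For the ``only if'' direction, I assume that $\desirset$ is mixing. The inclusion $\desirset^\mathrm{c}\subseteq\posi(\desirset^\mathrm{c})$ is immediate since every $\opt\in\desirset^\mathrm{c}$ equals $1\cdot\opt$. For the converse, I take any $\altopt\in\posi(\desirset^\mathrm{c})$, write $\altopt=\sum_{k=1}^n\lambda_k\opt[k]$ with $\lambda_k\in\posreals$ and $\opt[k]\in\desirset^\mathrm{c}$, and consider the finite option set $\optset\coloneqq\set{\opt[1],\dots,\opt[n]}\in\optsets$. Then $\altopt\in\posi(\optset)$, so if $\altopt$ belonged to $\desirset$, mixingness would yield $\optset\cap\desirset\neq\emptyset$, contradicting $\optset\subseteq\desirset^\mathrm{c}$. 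Hence $\altopt\in\desirset^\mathrm{c}$.

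For the ``if'' direction, I assume $\posi(\desirset^\mathrm{c})=\desirset^\mathrm{c}$ and take any $\optset\in\optsets$ with $\posi(\optset)\cap\desirset\neq\emptyset$. I proceed by contraposition: if $\optset\cap\desirset=\emptyset$, then $\optset\subseteq\desirset^\mathrm{c}$, so monotonicity of $\posi$ gives $\posi(\optset)\subseteq\posi(\desirset^\mathrm{c})=\desirset^\mathrm{c}$, which contradicts the standing hypothesis $\posi(\optset)\cap\desirset\neq\emptyset$. Therefore $\optset\cap\desirset\neq\emptyset$, establishing~\ref{ax:desirs:mixing}.

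The argument is essentially bookkeeping; there is no real obstacle. The only point worth flagging is the match-up between the two formulations: mixingness quantifies over finite $\optset$, whereas $\posi(\desirset^\mathrm{c})$ is taken over the (potentially infinite) set $\desirset^\mathrm{c}$, but this mismatch is immediately absorbed by the fact that every element of $\posi(\desirset^\mathrm{c})$ is already produced from a finite sub-family of $\desirset^\mathrm{c}$.
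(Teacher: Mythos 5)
Your proof is correct and follows essentially the same route as the paper's own argument: both directions proceed by contradiction, reducing membership in $\posi(\desirset^\mathrm{c})$ to a finite option set $\optset\subseteq\desirset^\mathrm{c}$ and invoking~\ref{ax:desirs:mixing}, respectively using monotonicity of $\posi$ together with $\posi(\desirset^\mathrm{c})=\desirset^\mathrm{c}$. Your explicit remark that finiteness of option sets is harmless because every element of $\posi(\desirset^\mathrm{c})$ arises from a finite sub-family is exactly the observation the paper's proof uses implicitly.
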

\noindent
So we see that the coherent sets of desirable options that are also mixing are precisely those whose complement is again a convex cone.\footnote{Recall that coherent sets of desirable options are convex cones because of Axiom~\ref{ax:desirs:cone}.} 
They are therefore identical to the \emph{lexicographic} sets of desirable options sets introduced by Van Camp et al.~\cite{2017vancamp:phdthesis,2018vancamp:lexicographic}. 
What makes this particularly relevant and interesting is that these authors have shown that when $\opts$ is the set of all gambles on some finite set~$\states$ and $\posopts=\cset{\opt\in\opts}{\opt\geq0\text{ and }\opt\neq0}$, then the sets of desirable options in $\cohdesirsets$ that are lexicographic---and therefore mixing---are exactly the ones that are representable by some lexicographic probability system that has no non-trivial Savage-null events. 
This is, of course, the reason why they decided to call such coherent sets of desirable options \emph{lexicographic}.
Because of this connection, it follows that in their setting, Theorem~\ref{theo:mixingrepresentation:twosided} implies that every mixing choice model can be represented by a set of lexicographic probability systems. 

Due to the equivalence between coherent lexicographic sets of desirable options and mixing ones on the one hand, and between total sets of desirable options and maximal coherent ones on the other, the following proposition is an immediate consequence of a similar result by Van Camp et al.~\cite{2017vancamp:phdthesis,2018vancamp:lexicographic}. 
It shows that the total sets of desirable options constitute a subclass of the mixing ones: mixingness is a weaker requirement than totality.

\begin{proposition}\label{prop:mixingiftotal}
Every total set of desirable options is mixing: $\totcohdesirsets\subseteq\convcohdesirsets$.
\end{proposition}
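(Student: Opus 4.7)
The plan is to combine the characterisation of mixingness given by Proposition~\ref{prop:mixingequalslexicographic} with the totality axiom~\ref{ax:desirs:totality} and a short contradiction argument. So fix any $\desirset\in\totcohdesirsets$. By Proposition~\ref{prop:mixingequalslexicographic}, it suffices to prove that $\posi\group{\desirset^\mathrm{c}}=\desirset^\mathrm{c}$. The inclusion $\desirset^\mathrm{c}\subseteq\posi\group{\desirset^\mathrm{c}}$ is immediate (take $n=1$, $\lambda_1=1$ in Equation~\eqref{eq:posioperator}), so the only thing to do is show the reverse inclusion $\posi\group{\desirset^\mathrm{c}}\subseteq\desirset^\mathrm{c}$.

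I would argue this by contradiction. Suppose, towards a contradiction, that there is some $\opt\in\posi\group{\desirset^\mathrm{c}}$ with $\opt\in\desirset$. By definition of $\posi\group{\cdot}$, write $\opt=\sum_{k=1}^n\lambda_k\opt[k]$ with $\lambda_k>0$ and $\opt[k]\in\desirset^\mathrm{c}$ for every $k$. If all $\opt[k]$ were equal to $0$, then $\opt=0$, which would contradict $\opt\in\desirset$ because of Axiom~\ref{ax:desirs:nozero}. So let $I\coloneqq\cset{k\in\set{1,\dots,n}}{\opt[k]\neq0}$ be non-empty; then clearly also $\opt=\sum_{k\in I}\lambda_k\opt[k]$.

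For every $k\in I$, since $\opt[k]\neq0$ and $\opt[k]\notin\desirset$, totality~\ref{ax:desirs:totality} forces $-\opt[k]\in\desirset$. A straightforward induction using Axiom~\ref{ax:desirs:cone} (with successive coefficient pairs $(1,\lambda_{k})>0$) then yields $-\opt=\sum_{k\in I}\lambda_k\group{-\opt[k]}\in\desirset$. One more application of Axiom~\ref{ax:desirs:cone} with $\group{\lambda,\mu}=\group{1,1}>0$ to the two elements $\opt$ and $-\opt$ of $\desirset$ gives $0=\opt+\group{-\opt}\in\desirset$, contradicting Axiom~\ref{ax:desirs:nozero}. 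Hence no such $\opt$ exists, which establishes $\posi\group{\desirset^\mathrm{c}}\subseteq\desirset^\mathrm{c}$ and completes the proof.

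The argument is essentially frictionless once Proposition~\ref{prop:mixingequalslexicographic} is invoked, so there is no real obstacle; the only minor point deserving care is isolating the zero terms in the positive combination before applying totality, since totality is asserted only for non-zero options.
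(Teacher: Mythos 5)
Your proof is correct. The kernel of your argument---isolate the zero terms in the positive combination, invoke totality~\ref{ax:desirs:totality} to conclude that $-\opt[k]\in\desirset$ for each non-zero $\opt[k]\notin\desirset$, apply Axiom~\ref{ax:desirs:cone} repeatedly to obtain $-\opt\in\desirset$, and rule out $\opt\in\desirset$ because $0\in\desirset$ would violate Axiom~\ref{ax:desirs:nozero}---is exactly the computation in the paper's own proof. The only difference is the wrapper: the paper verifies Axiom~\ref{ax:desirs:mixing} directly in contrapositive form (if $\optset\cap\desirset=\emptyset$ for a finite $\optset$, then $\posi\group{\optset}\cap\desirset=\emptyset$), whereas you route through the complement-cone characterisation of Proposition~\ref{prop:mixingequalslexicographic}, showing $\posi\group{\desirset^\mathrm{c}}=\desirset^\mathrm{c}$. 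This detour is legitimate: total sets are coherent by Definition~\ref{def:totaldesirs}, so that proposition applies, and its proof does not rely on Proposition~\ref{prop:mixingiftotal}, so there is no circularity. What your route buys is a slightly cleaner statement-level reduction---mixingness becomes the purely geometric assertion that the complement is again a convex cone, which is also the link to lexicographic models that the paper emphasises; what the paper's route buys is self-containedness, needing nothing beyond the coherence axioms and Definition~\ref{def:mixingdesirs}. Your closing caveat about handling zero terms before applying totality corresponds precisely to the case split ($I=\emptyset$ versus $I\neq\emptyset$) in the paper's version.
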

\noindent
By combining this result with Theorems~\ref{theo:totalrepresentation:twosided} and~\ref{theo:mixingrepresentation:twosided}, it follows that every total set of desirable options sets is mixing, and similarly for rejection and choice functions. 
So mixingness is implied by totality for non-binary choice models as well. 
Since totality is arguably the more intuitive of the two, one might therefore be inclined to discard the mixing property in favour of totality. 
We have nevertheless studied the mixing property in some detail, because it can be combined with other properties, such as the notions of Archimedeanity studied in the next section. 
As we will see, this combination leads to very intuitive representation, where the role of lexicographic probability systems is taken over by expectation operators---called linear previsions.

\section{Imposing Archimedeanity}\label{sec:archimedeanity}

There are a number of ways a notion of Archimedeanity can be introduced for preference relations and choice models \cite{aumann1962,aumann1964,seidenfeld1995,seidenfeld2010,nau2006}. 
Its aim is always to guarantee that the real number system is expressive enough, or more precisely, that the preferences expressed by the models can be represented by (sets of) \emph{real-valued} probabilities and utilities, rather than, say, probabilities and utilities expressed using hyper-reals.
%They seem to go back to Savage's foundational work \cite{savage1972} on probability and utility.
%\stilltodo{Also check, and add reference to, von Neumann and Morgenstern.}
Here, we consider a notion of Archimedeanity that is close in spirit to an idea explored by Walley \cite{walley1991,walley2000} in his discussion of so-called \emph{strict desirability}.
%It turns out to extend the notion of Archimedeanity suggested by \stilltodo{von Neumann, Savage, Aumann, Nau?}, and it is rather different from---stronger than---the notion of weak Archimedeanity that was suggested in a choice function context by Seidenfeld et al.~\cite{seidenfeld1995,seidenfeld2010}. 
%\stilltodo{Also refer to Fabio's related work.}

For the sake of simplicity, we will restrict ourselves to a particular case of our abstract framework,\footnote{It is possible to introduce a version of our notion of Archimedeanity in our general framework as well, but explaining how this works would take up much more space than we are allowed in this conference paper.} where $\opts\coloneqq\gbls(\states)$ is the set of all gambles on a set of states $\states$ and $\posopts=\strictposgbls(\states)\coloneqq\cset{\opt\in\gbls(\states)}{\inf\opt>0}$. 
We identify every real number $\mu\in\reals$ with the constant gamble that takes the value $\mu$, and then define Archimedeanity as follows.

\begin{definition}[Archimedean set of desirable options]\label{def:gamblearchimedean:sets}
We call a set of desirable options $\desirset\in\desirsets$ \emph{Archimedean} if it is coherent and satisfies the following openness condition:
\begin{enumerate}[label=$\mathrm{D}_{\mathrm{A}}$.,ref=$\mathrm{D}_{\mathrm{A}}$,leftmargin=*]
\item\label{ax:desirs:gamblearch} for all $\opt\in\desirset$, there is some $\epsilon\in\posreals$ such that $\opt-\epsilon\in\desirset$.
\end{enumerate}
We denote the set of all Archimedean sets of desirable options by $\archcohdesirsets$, and let $\archconvcohdesirsets$ be the set of all Archimedean sets of desirable options that are also mixing.
\end{definition}

%Elements of such an Archimedean set of desirable options are what Walley \cite{walley1991} has called \emph{strictly desirable gambles}. \textcolor{orange}{*** really? didn't he include the background? ***}
What makes Archimedean and mixing Archimedean sets of desirable options particularly interesting, is that they are in a one-to-one correspondence with coherent lower previsions and linear previsions \cite{walley1991}, respectively.

\begin{definition}[Coherent lower prevision and linear prevision]\label{def:cohlp}
A \emph{coherent lower prevision} $\lowprev$ on $\gblson{\states}$ is a real-valued map on $\gbls(\states)$ that satisfies
\begin{enumerate}[label=$\mathrm{LP}_{\arabic*}$.,ref=$\mathrm{LP}_{\arabic*}$,leftmargin=*]
\item\label{ax:lowprev:inf} $\lowprev(\opt)\geq\inf\opt$ for all $\opt\in\gblson{\states}$;
\item\label{ax:lowprev:lambda} $\lowprev(\lambda\opt)=\lambda\lowprev(\opt)$ for all $\opt\in\gblson{\states}$ and $\lambda\in\posreals$;
\item\label{ax:lowprev:super} $\lowprev(\opt+\altopt)\geq\lowprev(\opt)+\lowprev(\altopt)$ for all $\opt,\altopt\in\gblson{\states}$;
\end{enumerate}
A \emph{linear prevision} $\linprev$ on $\gblson{\states}$ is a coherent lower prevision that additionally satisfies
\begin{enumerate}[label=$\mathrm{P}_{\arabic*}$.,ref=$\mathrm{P}_{\arabic*}$,leftmargin=*,start=3]
\item\label{ax:linprev:additive}$\linprev(\opt+\altopt)=\linprev(\opt)+\linprev(\altopt)$ for all $\opt,\altopt\in\gblson{\states}$;
\end{enumerate}
We denote the set of all coherent lower previsions on $\gblson{\states}$ by $\cohlowprevs$ and let $\linprevs$ be the subset of all linear previsions.
\end{definition}

In order to make the above-mentioned one-to-one correspondences explicit, we introduce the following maps.
With any set of desirable options $\desirset$ in $\desirsets$, we associate a (possibly extended) real functional $\lowprev[\desirset]$ on $\gblson{\states}$, defined by
\begin{equation}\label{eq:lpfromdesir}
\lowprev[\desirset](\opt)\coloneqq\sup\cset{\mu\in\reals}{\opt-\mu\in\desirset},
\text{ for all $\opt\in\gblson{\states}$}.
\end{equation}
Conversely, with any (possibly extended) real functional $\lowprev$ on $\gblson{\states}$, we associate a set of desirable options
\begin{equation}\label{eq:desirfromlp}
\desirset[\lowprev]\coloneqq\cset{\opt\in\gblson{\states}}{\lowprev(\opt)>0}.
\end{equation}
Our next result shows that these two maps lead to an isomorphism between $\archcohdesirsets$ and $\cohlowprevs$, and similarly for $\archconvcohdesirsets$ and $\linprevs$.

\begin{proposition}\label{prop:isomorphismsmarch}
For any Archimedean set of desirable options $\desirset$, $\lowprev[\desirset]$ is a coherent lower prevision on $\gbls(\states)$ and $\desirset[{\lowprev[\desirset]}]=\desirset$. 
If $\desirset$ is moreover mixing, then $\lowprev[\desirset]$ is a linear prevision. 
Conversely, for any coherent lower prevision $\lowprev$ on $\gbls(\states)$, $\desirset[\lowprev]$ is an Archimedean set of desirable options and $\lowprev[{\desirset[\lowprev]}]=\lowprev$. 
If $\lowprev$ is furthermore a linear prevision, then $\desirset[\lowprev]$ is mixing.
\end{proposition}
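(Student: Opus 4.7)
The plan is to prove the four assertions in the order they are stated, linking them through the defining formulas~\eqref{eq:lpfromdesir} and~\eqref{eq:desirfromlp}. For the four axioms of a coherent lower prevision I will rely only on Axioms~\ref{ax:desirs:nozero}--\ref{ax:desirs:cone} together with the Archimedean property~\ref{ax:desirs:gamblearch}; the mixing part will invoke the equivalent characterisation $\posi(\desirset^{\mathrm c})=\desirset^{\mathrm c}$ from Proposition~\ref{prop:mixingequalslexicographic}.

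First, for $\desirset\in\archcohdesirsets$, I would check that $\lowprev[\desirset](\opt)$ is a finite real number: a lower bound $\inf\opt$ for the supremum in~\eqref{eq:lpfromdesir} comes from $\opt-\mu\in\posopts\subseteq\desirset$ whenever $\mu<\inf\opt$, which also yields~\ref{ax:lowprev:inf}; a finite upper bound comes from the observation that $\mu>\sup\opt$ forces $-(\opt-\mu)\in\posopts\subseteq\desirset$, so that $\opt-\mu\in\desirset$ would contradict \ref{ax:desirs:nozero} via \ref{ax:desirs:cone}. Positive homogeneity~\ref{ax:lowprev:lambda} follows from the scaling part of \ref{ax:desirs:cone} applied to $\opt-\mu\in\desirset\ifandonlyif\lambda\opt-\lambda\mu\in\desirset$, and superadditivity~\ref{ax:lowprev:super} follows from the additive part of \ref{ax:desirs:cone}: if $\opt-\mu$ and $\altopt-\nu$ both lie in $\desirset$, then so does their sum $(\opt+\altopt)-(\mu+\nu)$.

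To establish $\desirset[{\lowprev[\desirset]}]=\desirset$ I would argue that $\opt\in\desirset[{\lowprev[\desirset]}]$ means some $\mu>0$ with $\opt-\mu\in\desirset$ exists, whence $\opt=(\opt-\mu)+\mu\in\desirset$ by \ref{ax:desirs:cone} and \ref{ax:desirs:pos}; the converse uses the Archimedean Axiom~\ref{ax:desirs:gamblearch} directly. For the reverse direction, with $\lowprev\in\cohlowprevs$, Axioms~\ref{ax:desirs:nozero}--\ref{ax:desirs:cone} for $\desirset[\lowprev]$ are routine from \ref{ax:lowprev:inf}--\ref{ax:lowprev:super} once one notes that $\lowprev(0)=0$ (by superadditivity and $\lowprev(\opt)\geq\inf\opt$) and that $\lowprev$ shifts correctly under constants. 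This constant-shift identity $\lowprev(\opt-\epsilon)=\lowprev(\opt)-\epsilon$, derived from superadditivity plus $\lowprev(c)=c$ for constants $c\in\reals$, immediately yields Archimedeanity~\ref{ax:desirs:gamblearch} by taking $\epsilon=\lowprev(\opt)/2$. The same identity reduces $\lowprev[{\desirset[\lowprev]}](\opt)=\sup\cset{\mu}{\lowprev(\opt-\mu)>0}$ to $\sup\cset{\mu}{\mu<\lowprev(\opt)}=\lowprev(\opt)$.

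For the mixing statements, the cleanest route goes through Proposition~\ref{prop:mixingequalslexicographic}. If $\lowprev$ is a linear prevision, then $\desirset[\lowprev]^{\mathrm c}=\cset{\opt\in\gblson{\states}}{\linprev(\opt)\leq0}$ is manifestly a convex cone by \ref{ax:linprev:additive} and \ref{ax:lowprev:lambda}, so $\desirset[\lowprev]\in\archconvcohdesirsets$. Conversely, if $\desirset\in\archconvcohdesirsets$, then $\desirset^{\mathrm c}$ is a convex cone, and I can upgrade the superadditivity inequality in~\ref{ax:lowprev:super} to equality by contradiction: if $\lowprev[\desirset](\opt+\altopt)>\lowprev[\desirset](\opt)+\lowprev[\desirset](\altopt)$, split the slack as $\mu=\mu_1+\mu_2$ with $\mu_i>\lowprev[\desirset]$ of the respective argument, so that $\opt-\mu_1$ and $\altopt-\mu_2$ both lie in $\desirset^{\mathrm c}$, whereas their sum $(\opt+\altopt)-\mu\in\desirset$, contradicting the cone property of $\desirset^{\mathrm c}$. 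I expect this linearity step to be the main obstacle, because it is the only one that does not reduce to a one-line manipulation of the defining identities; the rest is bookkeeping on the two dualities~\eqref{eq:lpfromdesir}--\eqref{eq:desirfromlp}.
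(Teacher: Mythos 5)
Your proof is correct, and on the two mixing statements it takes a genuinely different route from the paper's. The paper verifies subadditivity of $\lowprev[\desirset]$ by applying Axiom~\ref{ax:desirs:mixing} directly to the two-element option set $\optset\coloneqq\set{\opt[\epsilon],\altopt[\epsilon]}$ with $\opt[\epsilon]\coloneqq\opt-(\lowprev[\desirset](\opt)+\epsilon)$ and $\altopt[\epsilon]\coloneqq\altopt-(\lowprev[\desirset](\altopt)+\epsilon)$, using that $\opt[\epsilon]+\altopt[\epsilon]\in\posi(\optset)$; and it proves that a linear prevision yields a mixing $\desirset[\lowprev]$ by decomposing any $\opt\in\posi(\optset)\cap\desirset[\lowprev]$ as $\sum_k\lambda_k\opt[k]$ and locating a $k^*$ with $\lowprev(\opt[k^*])>0$. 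You instead funnel both directions through Proposition~\ref{prop:mixingequalslexicographic}, reducing them to the statement that $\desirset^{\mathrm{c}}$ is a convex cone; this is logically legitimate (that proposition's proof does not rely on Proposition~\ref{prop:isomorphismsmarch}) and it actually buys you a cleaner converse: the paper's displayed step invokes only superadditivity, $\lowprev(\opt)\geq\sum_k\lambda_k\lowprev(\opt[k])$, from which the existence of $k^*$ with $\lowprev(\opt[k^*])>0$ does not follow—the inequality points the wrong way, and the equality supplied by Axiom~\ref{ax:linprev:additive} is what is really needed, which your cone computation uses explicitly. Your constant-shift identity $\lowprev(\opt+c)=\lowprev(\opt)+c$ (valid: $\lowprev(0)=0$ and hence $\lowprev(c)=c$ follow from~\ref{ax:lowprev:inf} and~\ref{ax:lowprev:super}) likewise unifies the Archimedeanity of $\desirset[\lowprev]$ and the identity $\lowprev[{\desirset[\lowprev]}]=\lowprev$, which the paper establishes with two separate one-sided estimates; the remaining parts of your argument coincide with the paper's. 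Two compressed steps deserve an explicit line in a final write-up: in your mixing-implies-linear argument you must choose $\mu\coloneqq\mu_1+\mu_2<\lowprev[\desirset](\opt+\altopt)$, which the assumed strict gap makes possible, and the conclusion $(\opt+\altopt)-\mu\in\desirset$ then needs the down-set property of $\cset{\nu}{\altopttoo-\nu\in\desirset}$, i.e.\ the same \ref{ax:desirs:pos}+\ref{ax:desirs:cone} trick you already used to show $\desirset[{\lowprev[\desirset]}]\subseteq\desirset$.
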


The import of these correspondences is that any representation in terms of sets of Archimedean (mixing) sets of desirable options is effectively a representation in terms of sets of coherent lower (or linear) previsions. 
As we will see, these kinds of representations can be obtained for sets of desirable option sets---and hence also rejection and choice functions---that are themselves Archimedean in the following sense.

\begin{definition}[Archimedean set of desirable option sets]\label{def:gamblearchimedean:setsofsets}
We call a set of desirable option sets $\rejectset\in\rejectsets$ \emph{Archimedean} if it is coherent and satisfies
\begin{enumerate}[label=$\mathrm{K}_{\mathrm{A}}$.,ref=$\mathrm{K}_{\mathrm{A}}$,leftmargin=*]
\item\label{ax:rejects:gamblearch} for all $\optset\in\rejectset$, there is some $\epsilon\in\posreals$ such that $\optset-\epsilon\in\rejectset$.
\end{enumerate}
We denote the set of all Archimedean sets of desirable option sets by $\archcohrejectsets$, and let $\archconvcohrejectsets$ be the set of all Archimedean sets of desirable options that are also mixing.
\end{definition}
\noindent
This notion easily translates from sets of desirable option sets to rejection functions.

\begin{proposition}
\label{prop:gamblearchimedeanityforrejectionfunctions}
Consider any set of desirable option sets~$\rejectset\in\rejectsets$ and any rejection function~$\rejectfun$ that are connected by Equation~\eqref{eq:interpretation:rejectfuns:after:irreflexivity:and:additivity:intermsof:K:Gert}. 
Then $\rejectset$ is Archimedean if and only if $\rejectfun$ is coherent and satisfies
\begin{enumerate}[label=$\mathrm{R}_{\mathrm{A}}$.,ref=$\mathrm{R}_{\mathrm{A}}$,leftmargin=*]
\item\label{ax:rejectfun:gamblearch} for all $\optset\in\optsets$ and $\opt\in\opts$ such that $\opt\in\rejectfun(\optset\cup\set{\opt})$, there is some $\epsilon\in\posreals$ such that $\opt\in\rejectfun((\optset-\epsilon)\cup\set{\opt})$.
\end{enumerate}
\end{proposition}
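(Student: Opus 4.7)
The plan is to leverage the correspondence in Equation~\eqref{eq:interpretation:rejectfuns:after:irreflexivity:and:additivity:intermsof:K:Gert} together with Proposition~\ref{prop:axioms:rejection:sets:and:functions:coherence} to reduce the claim to a purely syntactic translation between~\ref{ax:rejects:gamblearch} and~\ref{ax:rejectfun:gamblearch}. Since $\rejectset$ is coherent iff $\rejectfun$ is, by Proposition~\ref{prop:axioms:rejection:sets:and:functions:coherence}, it suffices—under the common assumption of coherence—to prove that \ref{ax:rejects:gamblearch} holds for $\rejectset$ if and only if \ref{ax:rejectfun:gamblearch} holds for $\rejectfun$.

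For the forward implication, I would fix an arbitrary $\optset\in\optsets$ and $\opt\in\opts$ with $\opt\in\rejectfun\group{\optset\cup\set{\opt}}$. By Equation~\eqref{eq:interpretation:rejectfuns:after:irreflexivity:and:additivity:intermsof:K:Gert}, this is equivalent to $\optset-\opt\in\rejectset$. Applying \ref{ax:rejects:gamblearch} to the option set $\optset-\opt\in\optsets$, I obtain some $\epsilon\in\posreals$ with $(\optset-\opt)-\epsilon\in\rejectset$. Using the elementary identity $(\optset-\opt)-\epsilon=(\optset-\epsilon)-\opt$ and then invoking Equation~\eqref{eq:interpretation:rejectfuns:after:irreflexivity:and:additivity:intermsof:K:Gert} in the reverse direction yields $\opt\in\rejectfun\group{(\optset-\epsilon)\cup\set{\opt}}$, which is exactly the conclusion of \ref{ax:rejectfun:gamblearch}.

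For the converse, I would start from an arbitrary $\altoptset\in\rejectset$ and specialise \ref{ax:rejectfun:gamblearch} to the case $\opt=0$, $\optset\coloneqq\altoptset$. Then $\altoptset=\altoptset-0$, and Equation~\eqref{eq:interpretation:rejectfuns:after:irreflexivity:and:additivity:intermsof:K:Gert} gives $0\in\rejectfun\group{\altoptset\cup\set{0}}$. Axiom~\ref{ax:rejectfun:gamblearch} then produces some $\epsilon\in\posreals$ with $0\in\rejectfun\group{(\altoptset-\epsilon)\cup\set{0}}$; a final application of Equation~\eqref{eq:interpretation:rejectfuns:after:irreflexivity:and:additivity:intermsof:K:Gert} translates this back to $\altoptset-\epsilon\in\rejectset$, establishing \ref{ax:rejects:gamblearch}.

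I do not expect any serious obstacle: the proposition is essentially a dictionary entry, and the only mild subtlety is being careful that the translation works for \emph{every} $\altoptset\in\optsets$, which is handled cleanly by taking $\opt=0$ so that the shift in \ref{ax:rejectfun:gamblearch} degenerates. No new machinery beyond Proposition~\ref{prop:axioms:rejection:sets:and:functions:coherence} and the defining Equation~\eqref{eq:interpretation:rejectfuns:after:irreflexivity:and:additivity:intermsof:K:Gert} is required.
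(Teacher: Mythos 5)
Your proof is correct and follows essentially the same route as the paper's: coherence is transferred via Proposition~\ref{prop:axioms:rejection:sets:and:functions:coherence}, the forward direction applies~\ref{ax:rejects:gamblearch} to $\optset-\opt$ and uses $(\optset-\opt)-\epsilon=(\optset-\epsilon)-\opt$, and the converse specialises~\ref{ax:rejectfun:gamblearch} at $\opt=0$, exactly as in the paper's argument. No gaps.
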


A first and basic result is that our notion of Archimedeanity for sets of desirable option sets is compatible with that for sets of desirable options.

\begin{proposition}\label{prop:gamblearchbinaryiffD}
For any set of desirable options $\desirset\in\desirsets$, $\rejectset[\desirset]$ is Archimedean (and mixing) if and only if~$\desirset$ is, so $\rejectset[\desirset]\in\archcohrejectsets\ifandonlyif\desirset\in\archcohdesirsets$ and $\rejectset[\desirset]\in\archconvcohrejectsets\ifandonlyif\desirset\in\archconvcohdesirsets$.
\end{proposition}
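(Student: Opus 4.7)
The plan is to reduce the equivalence to the singleton-versus-set-of-singletons duality that is already captured by Proposition~\ref{prop:coherence:for:binary} (coherence transfers) and Proposition~\ref{prop:mixingbinaryiffD} (mixingness transfers), so that only the Archimedean part~\ref{ax:rejects:gamblearch} vs.~\ref{ax:desirs:gamblearch} has to be verified directly. Once that is done, the second biconditional of the statement follows by combining the Archimedean equivalence with the already-established mixing equivalence.

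For the forward implication, I would assume $\rejectset[\desirset]\in\archcohrejectsets$. Then $\desirset$ is coherent by Proposition~\ref{prop:coherence:for:binary}. To establish~\ref{ax:desirs:gamblearch}, pick any $\opt\in\desirset$. By Equation~\eqref{eq:desirset:to:rejectset}, the singleton $\set{\opt}$ belongs to $\rejectset[\desirset]$, so Axiom~\ref{ax:rejects:gamblearch} applied to this option set yields some $\epsilon\in\posreals$ with $\set{\opt}-\epsilon\in\rejectset[\desirset]$. But $\set{\opt}-\epsilon=\set{\opt-\epsilon}$ by Equation~\eqref{eq:specialminus}, so $\opt-\epsilon\in\desirset$, which is~\ref{ax:desirs:gamblearch}.

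For the converse, I would start with $\desirset\in\archcohdesirsets$ and again invoke Proposition~\ref{prop:coherence:for:binary} to get coherence of $\rejectset[\desirset]$. To prove~\ref{ax:rejects:gamblearch}, pick any $\optset\in\rejectset[\desirset]$, so that there is some $\opt\in\optset\cap\desirset$ by Equation~\eqref{eq:desirset:to:rejectset}. Axiom~\ref{ax:desirs:gamblearch} yields some $\epsilon\in\posreals$ with $\opt-\epsilon\in\desirset$; since $\opt-\epsilon\in\optset-\epsilon$, we have $(\optset-\epsilon)\cap\desirset\neq\emptyset$, and therefore $\optset-\epsilon\in\rejectset[\desirset]$, as required.

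Finally, the combined statement $\rejectset[\desirset]\in\archconvcohrejectsets\ifandonlyif\desirset\in\archconvcohdesirsets$ is obtained by intersecting the two biconditionals: Archimedeanity transfers by the argument just sketched, and mixingness transfers by Proposition~\ref{prop:mixingbinaryiffD}. I do not anticipate any genuine obstacle here; the only subtlety worth stating explicitly is the use of~\eqref{eq:specialminus} to identify $\set{\opt}-\epsilon$ with $\set{\opt-\epsilon}$ in the forward direction, which is what makes the singleton case of~\ref{ax:rejects:gamblearch} equivalent to~\ref{ax:desirs:gamblearch}.
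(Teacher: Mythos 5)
Your proposal is correct and matches the paper's own proof essentially step for step: the paper likewise reduces to the Archimedean axiom via Propositions~\ref{prop:coherence:for:binary} and~\ref{prop:mixingbinaryiffD}, proves the transfer from $\desirset$ to $\rejectset[\desirset]$ by shifting a witness $\opt\in\optset\cap\desirset$ to $\opt-\epsilon\in(\optset-\epsilon)\cap\desirset$, and proves the reverse direction through the singleton identification $\set{\opt}-\epsilon=\set{\opt-\epsilon}$. The only (immaterial) difference is the order in which you treat the two implications.
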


In order to state our representation results for Archimedean choice models that are not necessarily binary, we require a final piece of machinery: a topology on $\archcohdesirsets$ and $\archconvcohdesirsets$, or equivalently, a notion of closedness. 
We do this by defining the convergence of a sequence of Archimedean sets of desirable options $\set{\desirset[n]}_{n\in\naturals}$ in terms of the point-wise convergence of the corresponding sequence of coherent lower previsions:
\begin{equation}\label{eq:archimedeanlimit}
\lim_{n\to+\infty}\desirset[n]=\desirset
\ifandonlyif
\group[\big]{\forall\opt\in\gblson{\states}}\lim_{n\to+\infty}\lowprev[{\desirset[n]}](\opt)=\lowprev[\desirset](\opt).
\end{equation}
A set $\setofdesirsets\subseteq\archcohdesirsets$ of Archimedean sets of desirable options is then called \emph{closed} if it contains all of its limit points, or equivalently, if the corresponding set of coherent lower previsions---or linear previsions when $\setofdesirsets\subseteq\archconvcohdesirsets$---is closed with respect to point-wise convergence.

Our final representation results state that a set of desirable option sets $\rejectset$ is Archimedean if and only if it can be represented by such a closed set, and if $\rejectset$ is moreover mixing, the elements of the representing closed set are as well.

\begin{theorem}[Representation for Archimedean choice functions]\label{theo:rejectsets:representation:gamblearch:twosided}
A set of desirable option sets $\rejectset\in\rejectsets$ is Archimedean if and only if there is some non-empty closed set $\setofdesirsets\subseteq\archcohdesirsets$ of Archimedean sets of desirable options such that $\rejectset=\bigcap\cset{\rejectset[\desirset]}{\desirset\in\setofdesirsets}$. 
The largest such set~$\setofdesirsets$ is then\/ $\archcohdesirsets(\rejectset)\coloneqq\cset{\desirset\in\archcohdesirsets}{\rejectset\subseteq\rejectset[\desirset]}$.
\end{theorem}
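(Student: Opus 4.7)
My plan is to handle the two implications separately, working throughout with the identification between Archimedean sets of desirable options and coherent lower previsions provided by Proposition~\ref{prop:isomorphismsmarch}. The key reformulation I will exploit is that, by constant additivity of a coherent lower prevision, $\optset-\epsilon\in\rejectset[\desirset]$ is equivalent to $\max_{\opt\in\optset}\lowprev[\desirset](\opt)>\epsilon$, which turns both $\mathrm{K}_\mathrm{A}$ and the convergence notion~\eqref{eq:archimedeanlimit} into uniform strict-positivity statements.

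For the ``if'' direction, assume $\rejectset=\bigcap\cset{\rejectset[\desirset]}{\desirset\in\setofdesirsets}$ for some non-empty closed $\setofdesirsets\subseteq\archcohdesirsets$. Coherence of $\rejectset$ is immediate from Proposition~\ref{prop:coherence:for:binary} and Theorem~\ref{theo:coherentrepresentation:twosided}. To verify $\mathrm{K}_\mathrm{A}$, I fix $\optset\in\rejectset$ and consider the map $\varphi\colon\setofdesirsets\to\reals$ defined by $\varphi(\desirset)\coloneqq\max_{\opt\in\optset}\lowprev[\desirset](\opt)$. This map is continuous in the convergence~\eqref{eq:archimedeanlimit} and strictly positive on $\setofdesirsets$ by the choice of $\optset$. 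The set $\cohlowprevs$ is compact in the topology of point-wise convergence, since it sits inside the product of compact intervals $\prod_{\opt\in\gblson{\states}}[\inf\opt,\sup\opt]$; closedness of $\setofdesirsets$ therefore makes it compact too, and $\varphi$ attains a strictly positive minimum $\epsilon$, which witnesses $\optset-\epsilon/2\in\rejectset$.

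For the ``only if'' direction, assume $\rejectset$ is Archimedean and let $\setofdesirsets\coloneqq\archcohdesirsets(\rejectset)$. The inclusion $\rejectset\subseteq\bigcap\cset{\rejectset[\desirset]}{\desirset\in\setofdesirsets}$ and the maximality of $\setofdesirsets$ among representing sets are immediate from the definition. Closedness again uses $\mathrm{K}_\mathrm{A}$: if $\desirset[n]\to\desirset$ with each $\desirset[n]\in\setofdesirsets$, then for every $\optset\in\rejectset$ I invoke the Archimedean slack to obtain $\epsilon>0$ with $\optset-\epsilon\in\rejectset$, so $\max_{\opt\in\optset}\lowprev[{\desirset[n]}](\opt)>\epsilon$ for every $n$; point-wise convergence of $\lowprev[{\desirset[n]}]$ to $\lowprev[\desirset]$ then yields $\max_{\opt\in\optset}\lowprev[\desirset](\opt)\geq\epsilon>0$, hence $\desirset\in\setofdesirsets$.

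The main obstacle is the reverse inclusion $\bigcap\cset{\rejectset[\desirset]}{\desirset\in\setofdesirsets}\subseteq\rejectset$, which also establishes non-emptiness of $\setofdesirsets$. Given $\optset_0\notin\rejectset$, I must produce an Archimedean $\desirset\in\cohdesirsets$ with $\rejectset\subseteq\rejectset[\desirset]$ but $\optset_0\cap\desirset=\emptyset$. My plan is a Hahn--Banach-style separation in $\gblson{\states}$: Theorem~\ref{theo:coherentrepresentation:twosided} already yields a merely coherent separator, and the delicate step is to upgrade it to an Archimedean one. The Archimedean property $\mathrm{K}_\mathrm{A}$ of $\rejectset$ supplies an open slack on the $\rejectset$-side that converts the lattice-theoretic separation of Theorem~\ref{theo:coherentrepresentation:twosided} into a genuinely topological one, allowing the separating functional to be realised as a coherent lower prevision $\lowprev$ whose induced set $\desirset[\lowprev]$ is automatically Archimedean by Proposition~\ref{prop:isomorphismsmarch} and still separates. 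The subtle point is exhibiting this open slack uniformly enough to preserve strict inequalities on both sides; I expect to handle this by combining $\mathrm{K}_\mathrm{A}$ with the same compactness of $\cohlowprevs$ used in the ``if'' direction, so that a single positive $\epsilon$ can be extracted simultaneously from the finitely many options involved in $\optset_0$.
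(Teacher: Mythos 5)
Your ``if'' direction and your closedness argument for $\archcohdesirsets(\rejectset)$ are essentially sound, and the former is a genuine (and arguably cleaner) variation on the paper's route: where the paper argues \emph{ex absurdo}, extracting a point-wise convergent subsequence from witnesses $\optset-\nicefrac{1}{n}\notin\rejectset[{\desirset[n]}]$ and deriving $\lowprev(\opt)\leq0$ against $\lowprev(\opt)>0$, you observe directly that $\varphi(\desirset)\coloneqq\max_{\opt\in\optset}\lowprev[\desirset](\opt)$ is continuous and strictly positive on $\setofdesirsets$ and use compactness to bound it away from zero. Both routes share the same topological fine print that the paper itself glosses over: its notion of closedness is sequential (Equation~\eqref{eq:archimedeanlimit}), so your attained minimum needs topological compactness of $\setofdesirsets$ just as the paper's subsequence extraction needs sequential compactness of $\cohlowprevs$; if you want to match the paper's conventions exactly, run your argument on a sequence $\desirset[n]$ with $\varphi(\desirset[n])<\nicefrac{1}{n}$ instead of invoking attainment. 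Your closedness proof for $\archcohdesirsets(\rejectset)$ is the paper's, slightly streamlined (convergence of the finite max replaces the pigeonhole-plus-subsequence step).

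The genuine gap is the crux of the ``only if'' direction, which you leave as a plan rather than a proof: producing, for each $\optset[0]\notin\rejectset$, an \emph{Archimedean} $\desirset$ with $\rejectset\subseteq\rejectset[\desirset]$ and $\optset[0]\cap\desirset=\emptyset$. Your proposed mechanism---a topological Hahn--Banach upgrade in which ``a single positive $\epsilon$ is extracted uniformly'' via compactness of $\cohlowprevs$---is not carried out, and it aims at the wrong side: the slack is needed for \emph{every} $\optset\in\rejectset$ (to keep the representation), not for the finitely many options in $\optset[0]$, and no single $\epsilon$ can serve all of $\rejectset$ (consider the singletons $\set{\delta}\in\rejectset$ for arbitrarily small constants $\delta>0$). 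Nor is any uniformity needed. The missing idea is an order-theoretic one-liner: replace each coherent representer $\desirset\in\cohdesirsets\group{\rejectset}$ from Theorem~\ref{theo:coherentrepresentation:twosided} by its interior $\arch\group{\desirset}\coloneqq\cset{\opt\in\desirset}{\group{\exists\epsilon\in\posreals}\opt-\epsilon\in\desirset}$. This set is coherent and Archimedean whenever $\desirset$ is coherent (Lemmas~\ref{lem:effectofgamblearch:coh} and~\ref{lem:effectofgamblearch:gamblearch}); since $\arch\group{\desirset}\subseteq\desirset$, avoidance of $\optset[0]$ is automatic---so your worry about ``preserving strict inequalities on both sides'' evaporates; and Archimedeanity of $\rejectset$ makes the replacement harmless for representation (Lemma~\ref{lem:gamblearch:transformrepresentation}): given $\optset\in\rejectset$, axiom~\ref{ax:rejects:gamblearch} yields a \emph{per-option-set} $\epsilon$ with $\optset-\epsilon\in\rejectset\subseteq\rejectset[\desirset]$, so some $\opt\in\optset$ satisfies $\opt-\epsilon\in\desirset$ and hence $\opt\in\arch\group{\desirset}$. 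Taking $\setofdesirsets\coloneqq\cset{\arch\group{\desirset}}{\desirset\in\cohdesirsets\group{\rejectset}}$ then gives the representation and, via Lemma~\ref{lem:effectofgamblearch:preservegamblearch}, the identification $\setofdesirsets=\archcohdesirsets(\rejectset)$, after which your own closedness argument finishes the proof.
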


\begin{theorem}[Representation for Archimedean mixing choice functions]\label{theo:convex:rejectsets:representation:gamblearch:twosided}
A set of desirable option sets $\rejectset\in\rejectsets$ is mixing and Archimedean if and only if there is some non-empty closed set $\setofdesirsets\subseteq\archconvcohdesirsets$ of mixing and Archimedean sets of desirable options such that $\rejectset=\bigcap\cset{\rejectset[\desirset]}{\desirset\in\setofdesirsets}$. 
The largest such set~$\setofdesirsets$ is then\/ $\archconvcohdesirsets(\rejectset)\coloneqq\cset{\desirset\in\archconvcohdesirsets}{\rejectset\subseteq\rejectset[\desirset]}$.
\end{theorem}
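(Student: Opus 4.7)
The plan is to prove both directions of the equivalence, closely paralleling the approach of Theorem~\ref{theo:rejectsets:representation:gamblearch:twosided} but with the additional constraint of mixingness carried through at every step. Throughout I will rely on the isomorphism $\desirset\leftrightarrow\lowprev[\desirset]$ of Proposition~\ref{prop:isomorphismsmarch}, which identifies $\archconvcohdesirsets$ with the set $\linprevs$ of linear previsions, and the characterisation (Proposition~\ref{prop:mixingequalslexicographic}) of mixing coherent sets of desirable options as those whose complement is also a convex cone.

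For the \emph{if} direction, suppose $\rejectset=\bigcap\cset{\rejectset[\desirset]}{\desirset\in\setofdesirsets}$ for some non-empty closed $\setofdesirsets\subseteq\archconvcohdesirsets$. Coherence of $\rejectset$ follows because each $\rejectset[\desirset]$ is coherent (Proposition~\ref{prop:coherence:for:binary}) and the coherence axioms \ref{ax:rejects:removezero}--\ref{ax:rejects:mono} are closed under arbitrary non-empty intersections; the same observation handles the mixing axiom \ref{ax:rejects:removepositivecombinations}, since each $\rejectset[\desirset]$ is mixing by Proposition~\ref{prop:mixingbinaryiffD} and the property is preserved by intersections. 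The Archimedean axiom \ref{ax:rejects:gamblearch} is the delicate point: for $\optset\in\rejectset$ and each $\desirset\in\setofdesirsets$ we obtain $\opt_\desirset\in\optset$ with $\lowprev[\desirset](\opt_\desirset)>0$, but we need a single $\epsilon>0$ that works uniformly over $\setofdesirsets$. Since $\optset$ is finite, the map $f\colon\desirset\mapsto\max_{\opt\in\optset}\lowprev[\desirset](\opt)$ is a finite maximum of pointwise-continuous functionals, hence continuous on the space of linear previsions (equipped with pointwise convergence). This space embeds into the compact product $\prod_{\opt\in\gblson\states}[\inf\opt,\sup\opt]$, so a closed subset $\setofdesirsets$ is compact and $f$ attains its strictly positive infimum $\epsilon>0$. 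Any $0<\epsilon'<\epsilon$ then witnesses $\optset-\epsilon'\in\rejectset[\desirset]$ for every $\desirset\in\setofdesirsets$, so $\optset-\epsilon'\in\rejectset$.

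For the \emph{only if} direction, set $\setofdesirsets\coloneqq\archconvcohdesirsets(\rejectset)$. The inclusion $\rejectset\subseteq\bigcap\cset{\rejectset[\desirset]}{\desirset\in\setofdesirsets}$ is immediate from the definition. The reverse inclusion requires a separation argument: given $\optset\in\optsets\setminus\rejectset$, I must exhibit a linear prevision $\linprev$ with $\linprev(\opt)\leq0$ for every $\opt\in\optset$ and such that every $\altoptset\in\rejectset$ meets $\desirset[\linprev]=\{\altopt:\linprev(\altopt)>0\}$. The natural route is to leverage Theorem~\ref{theo:mixingrepresentation:twosided} to produce a mixing $\desirset\in\convcohdesirsets$ with $\rejectset\subseteq\rejectset[\desirset]$ and $\optset\cap\desirset=\emptyset$; by Proposition~\ref{prop:mixingequalslexicographic} both $\desirset$ and its complement are convex cones, so $\desirset$ is one half-space of a hyperplane arrangement. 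Using the Archimedeanity of $\rejectset$ one can perturb each $\altoptset\in\rejectset$ by a uniform $\epsilon>0$ (so $\altoptset-\epsilon\in\rejectset\subseteq\rejectset[\desirset]$) to ensure that the separation can be realised by a bona fide linear functional rather than one requiring non-standard (e.g.~lexicographic) comparisons. Invoking the Hahn--Banach/supporting-hyperplane principle on the convex cone $\desirset$ then supplies the desired linear prevision $\linprev$, whose associated $\desirset[\linprev]$ lies in $\archconvcohdesirsets(\rejectset)$ and excludes $\optset$.

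Finally, the closedness of $\setofdesirsets=\archconvcohdesirsets(\rejectset)$ follows from the Archimedeanity of $\rejectset$: if $\desirset[n]\to\desirset$ in $\archconvcohdesirsets$ and each $\desirset[n]\in\setofdesirsets$, then for $\altoptset\in\rejectset$ we pick $\epsilon>0$ with $\altoptset-\epsilon\in\rejectset$, so some $\opt\in\altoptset$ (by finiteness of $\altoptset$ and pigeon-hole over infinitely many $n$) satisfies $\lowprev[{\desirset[n]}](\opt)\geq\epsilon$, and passing to the limit yields $\lowprev[\desirset](\opt)\geq\epsilon>0$, whence $\altoptset\in\rejectset[\desirset]$; that the limit remains a linear prevision uses that pointwise limits preserve additivity. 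The non-emptiness of $\setofdesirsets$ is a consequence of the separation step applied to any $\optset\notin\rejectset$, or more directly to the empty option set. The principal obstacle I anticipate is the separation argument in the second paragraph: producing a \emph{linear} prevision (rather than merely a mixing $\desirset$) that simultaneously separates $\optset$ and respects all of $\rejectset$ is where the Archimedean hypothesis must be used in a non-trivial way, via a compactness-plus-Hahn--Banach synthesis, and where the analogy with Theorem~\ref{theo:rejectsets:representation:gamblearch:twosided} needs the extra ingredient that $\desirset^\mathrm{c}$ is itself a convex cone.
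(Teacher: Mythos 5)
Your \emph{if} direction and your closedness argument are essentially sound and match the paper's mechanics: the paper also reduces closedness of $\archconvcohdesirsets(\rejectset)$ to the compactness of $\linprevs$ under point-wise convergence plus a pigeonhole over the finite option set, and your direct compactness argument for Axiom~\ref{ax:rejects:gamblearch} (minimising $\desirset\mapsto\max_{\opt\in\optset}\lowprev[\desirset](\opt)$ over the compact set $\setofdesirsets$) is a clean contrapositive-free variant of the paper's \emph{ex absurdo} subsequence extraction; both versions quietly identify sequential closedness with topological closedness in the weak* topology, so you are no worse off there than the paper itself.

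The genuine gap is in your \emph{only if} direction. First, your perturbation step asserts a \emph{uniform} $\epsilon\in\posreals$ with $\altoptset-\epsilon\in\rejectset$ for every $\altoptset\in\rejectset$; Axiom~\ref{ax:rejects:gamblearch} only supplies an $\epsilon_{\altoptset}$ depending on each option set, and no uniform bound exists in general, so the premise of your Hahn--Banach synthesis is unavailable. Second, and more fundamentally, the Hahn--Banach/supporting-hyperplane route cannot deliver what you need: a mixing $\desirset$ produced by Theorem~\ref{theo:mixingrepresentation:twosided} is \emph{lexicographic} (Proposition~\ref{prop:mixingequalslexicographic}), and a lexicographic ordering is precisely one that is in general \emph{not} representable by a single linear functional --- any supporting linear prevision $\linprev$ at $0$ typically has $\desirset[\linprev]\subset\desirset$ strictly, and nothing in a separation argument guarantees that $\rejectset\subseteq\rejectset[{\desirset[\linprev]}]$ survives the shrinkage. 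The paper's actual mechanism sidesteps separation entirely: it takes the representing family $\convcohdesirsets\group{\rejectset}$ already furnished by Theorem~\ref{theo:convex:rejectsets:representation} and replaces each $\desirset$ by its Archimedean interiorisation $\arch\group{\desirset}=\cset{\opt\in\desirset}{\group{\exists\epsilon\in\posreals}\opt-\epsilon\in\desirset}$, which is coherent, Archimedean and mixing (Lemmas~\ref{lem:effectofgamblearch:coh}--\ref{lem:effectofgamblearch:lex}), and hence corresponds to a linear prevision by Proposition~\ref{prop:isomorphismsmarch}. The crux --- and the only place the Archimedeanity of $\rejectset$ is used, \emph{per option set} rather than uniformly --- is Lemma~\ref{lem:gamblearch:transformrepresentation}: for $\optset\in\rejectset$ one has $\optset-\epsilon\in\rejectset\subseteq\rejectset[\desirset]$ for some $\epsilon$ depending on $\optset$, so some $\opt\in\optset$ satisfies $\opt-\epsilon\in\desirset$, whence $\opt\in\arch\group{\desirset}$ and $\rejectset=\bigcap\cset{\rejectset[\arch\group{\desirset}]}{\desirset\in\convcohdesirsets\group{\rejectset}}$. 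Without this interiorisation step (or an equivalent substitute) your construction of the representing set, your non-emptiness claim, and your identification of $\archconvcohdesirsets(\rejectset)$ as the largest representer all remain unproven.
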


If we combine Theorem~\ref{theo:convex:rejectsets:representation:gamblearch:twosided} with the correspondence result of Proposition~\ref{prop:isomorphismsmarch}, we see that Axioms~\ref{ax:rejects:removezero}--\ref{ax:rejects:mono} together with~\ref{ax:rejects:removepositivecombinations} and~\ref{ax:rejects:gamblearch} characterise exactly those choice models that are based on E-admissibility with respect to a closed---but not necessarily convex---set of linear previsions. 
In much the same way, Theorem~\ref{theo:rejectsets:representation:gamblearch:twosided} can be seen to characterise a generalised notion of E-admissibility, where the representing objects are coherent lower previsions. 
Walley--Sen maximality \cite{troffaes2007,walley1991} can be regarded as a special case of this generalised notion, where only a single representing coherent lower prevision is needed.

\section{Conclusion}\label{sec:conclusion}

The main conclusion of this paper is that the language of desirability \emph{is} capable of representing non-binary choice models, provided we extend it with a notion of disjunction, allowing statements such as `at least one of these two options is desirable'. 
When we do so, the resulting framework of sets of desirable options turns out to be a very flexible and elegant tool for representing set-valued choice. 
Not only does it include E-admissibility and maximality, it also opens up a range of other types of choice functions that have so far received little to no attention. 
All of these can be represented in terms of sets of strict preference orders or---if additional properties are imposed---in terms of sets of strict total orders, sets of lexicographic probability systems, sets of coherent lower previsions or sets of linear previsions.

Another important conclusion is that our axiomatisation for general (possibly non-binary) choice models allows for representations in terms of `atomic' models, which in themselves represent binary choice.
However, this should of course not be taken to mean that our choice models are essentially binary.
Indeed, it follows readily from our representation theorems that the binary aspects $\desirset[\rejectset]$ of a non-binary choice model $\rejectset$ are captured by the intersections of the representing sets of desirable options, but the representation is much more powerful than that, because it also extends to the non-binary aspects of choice.

This distinction between the binary level and the non-binary one also leads us to the following important words of caution, which are akin to an earlier observation made by Quaeghebeur \cite{quaeghebeur2015:partial:partial}. 
At the binary level, choice is represented by a set of desirable options, which can---under certain assumptions such as Archimedeanity---be identified with a convex closed set of linear previsions.
We have also seen in Theorem~\ref{theo:convex:rejectsets:representation:gamblearch:twosided} that the (binary \emph{and} non-binary) aspects of mixing and Archimedean choice can be fully represented by a closed set of mixing and Archimedean sets of desirable options, each of which is, by Proposition~\ref{prop:isomorphismsmarch}, equivalent to a linear prevision.
So, in this case there is a representation in terms of a set of linear previsions both at the binary level and at the general (binary \emph{and} non-binary) level, but these two sets of linear previsions will typically be different, and they play a very different role.
To put it very bluntly: sets of linear previsions à la Walley \cite{walley1991} should not be confused with sets of linear previsions---credal sets---à la Levi \cite{levi1980a}.  

To conclude, what we have done here, in a very specific sense, is to introduce a way of dealing with statements of the type `there is some option in the option set~$A$ that is strictly preferred to~$0$'.
Axioms such as \ref{ax:rejects:removezero}--\ref{ax:rejects:mono} can then be seen as the logical axioms---for deriving new statements from old---that govern this language.
Our representation theorems provide a \emph{semantics} for this language in terms of desirability, and they show that the corresponding logical system is sound and complete.

In our future work on this topic, we intend to investigate how we can let go of the closedness condition in Theorems~\ref{theo:rejectsets:representation:gamblearch:twosided} and~\ref{theo:convex:rejectsets:representation:gamblearch:twosided}. 
We expect to have to turn to other types of Archimedeanity; variations on Seidenfeld et al.'s weak Archimedeanity \cite{seidenfeld2010,cozman2018:evenlyconvex} come to mind. 
We also intend to show in more detail how the existing work on choice models for horse lotteries \cite{seidenfeld2010} fits nicely within our more general and abstract framework of choice on linear option spaces.
And finally, we intend to further develop conservative inference methods for coherent choice functions, by extending our earlier natural extension results~\cite{debock2018} to the more general setting that we have considered here.

\section*{Acknowledgements}
This work owes a large intellectual debt to Teddy Seidenfeld, who introduced us to the topic of choice functions.
His insistence that we ought to pay more attention to non-binary choice if we wanted to take imprecise probabilities seriously, is what eventually led to this work.

The discussion in Arthur Van Camp's PhD~thesis \cite{2017vancamp:phdthesis} was the direct inspiration for our work here, and we would like to thank Arthur for providing a pair of strong shoulders to stand on.

As with most of our joint work, there is no telling, after a while, which of us two had what idea, or did what, exactly. 
We have both contributed equally to this paper.
But since a paper must have a first author, we decided it should be the one who took the first significant steps: Jasper, in this case.

\bibliographystyle{plain}
\bibliography{general}

\begin{thebibliography}{10}

\bibitem{aizerman1985}
Mark~A. Aizerman.
\newblock New problems in the general choice theory.
\newblock {\em Social Choice and Welfare}, 2(4):235--282, 1985.

\bibitem{aumann1962}
Robert~J. Aumann.
\newblock Utility theory without the completeness axiom.
\newblock {\em Econometrica}, 30:445--462, 1962.

\bibitem{aumann1964}
Robert~J. Aumann.
\newblock Utility theory without the completeness axiom: a correction.
\newblock {\em Econometrica}, 32:210--212, 1964.

\bibitem{couso2011}
In\'es Couso and Seraf\'{\i}n Moral.
\newblock Sets of desirable gambles: conditioning, representation, and precise
  probabilities.
\newblock {\em International Journal of Approximate Reasoning},
  52(7):1034--1055, 2011.

\bibitem{cozman2018:evenlyconvex}
Fabio~Gagliardi Cozman.
\newblock Evenly convex credal sets.
\newblock {\em International Journal of Approximate Reasoning}, 103:124--138,
  December 2018.

\bibitem{debock2018}
Jasper De~Bock and Gert {d}e Cooman.
\newblock A desirability-based axiomatisation for coherent choice functions.
\newblock In {\em Uncertainty Modelling in Data Science (Proceedings of SMPS
  2018)}, pages 46--53, 2018.

\bibitem{cooman2011b}
Gert {de}~Cooman and Enrique Miranda.
\newblock Irrelevance and independence for sets of desirable gambles.
\newblock {\em Journal of Artificial Intelligence Research}, 45:601--640, 2012.

\bibitem{cooman2010}
Gert {d}e Cooman and Erik Quaeghebeur.
\newblock Exchangeability and sets of desirable gambles.
\newblock {\em International Journal of Approximate Reasoning}, 53(3):363--395,
  2012.
\newblock Special issue in honour of Henry E.~Kyburg, Jr.

\bibitem{levi1980a}
Isaac Levi.
\newblock {\em The Enterprise of Knowledge}.
\newblock MIT Press, London, 1980.

\bibitem{maass2003:phd}
Sebastian Maa\ss.
\newblock {\em Exact functionals, functionals preserving linear inequalities,
  L\'evy's metric}.
\newblock PhD thesis, University of Bremen, 2003.

\bibitem{nau2006}
Robert Nau.
\newblock The shape of incomplete preferences.
\newblock {\em Annals Of Statistics}, 34(5):2430--2448, 2006.

\bibitem{quaeghebeur2015:partial:partial}
Erik Quaeghebeur.
\newblock Partial partial preference order orders.
\newblock In Thomas Augustin, Serena Doria, Enrique Miranda, and Erik
  Quaeghebeur, editors, {\em ISIPTA '15: Proceedings of the Ninth International
  Symposium on Imprecise Probability: Theories and Applications}, page 347.

\bibitem{quaeghebeur2012:itip}
Erik Quaeghebeur.
\newblock Introduction to imprecise probabilities.
\newblock chapter Desirability. John Wiley \& Sons, 2014.

\bibitem{quaeghebeur2015:statement}
Erik Quaeghebeur, Gert de~Cooman, and Filip Hermans.
\newblock Accept {\&} reject statement-based uncertainty models.
\newblock {\em International Journal of Approximate Reasoning}, 57:69--102,
  2015.

\bibitem{seidenfeld1995}
Teddy Seidenfeld, Mark~J. Schervish, and Jay~B. Kadane.
\newblock A representation of partially ordered preferences.
\newblock {\em The Annals of Statistics}, 23:2168--2217, 1995.
\newblock Reprinted in \cite{seidenfeld1999}, pp.~69--129.

\bibitem{seidenfeld1999}
Teddy Seidenfeld, Mark~J. Schervish, and Jay~B. Kadane.
\newblock {\em Rethinking the Foundations of Statistics}.
\newblock Cambridge University Press, Cambridge, 1999.

\bibitem{seidenfeld2010}
Teddy Seidenfeld, Mark~J. Schervish, and Joseph~B. Kadane.
\newblock Coherent choice functions under uncertainty.
\newblock {\em Synthese}, 172(1):157--176, 2010.

\bibitem{sen1971}
Amartya Sen.
\newblock Choice functions and revealed preference.
\newblock {\em The Review of Economic Studies}, 38(3):307--317, 1971.

\bibitem{sen1977}
Amartya Sen.
\newblock Social choice theory: A re-examination.
\newblock {\em Econometrica}, 45:53--89, 1977.

\bibitem{troffaes2007}
Matthias C.~M. Troffaes.
\newblock Decision making under uncertainty using imprecise probabilities.
\newblock {\em International Journal of Approximate Reasoning}, 45(1):17--29,
  2007.

\bibitem{troffaes2013:lp}
Matthias C.~M. Troffaes and Gert {d}e Cooman.
\newblock {\em Lower Previsions}.
\newblock Wiley, 2014.

\bibitem{2017vancamp:phdthesis}
Arthur Van~Camp.
\newblock {\em Choice Functions as a Tool to Model Uncertainty}.
\newblock PhD thesis, Ghent University, Faculty of Engineering and
  Architecture, 2018.

\bibitem{2018vancamp:lexicographic}
Arthur Van~Camp, Gert {d}e Cooman, and Enrique Miranda.
\newblock Lexicographic choice functions.
\newblock {\em International Journal of Approximate Reasoning}, pages 97--119,
  2018.

\bibitem{vancamp2015:indifference}
Arthur Van~Camp, Gert De~Cooman, Enrique Miranda, and Erik Quaeghebeur.
\newblock Modelling indifference with choice functions.
\newblock In Thomas Augustin, Serena Doria, Enrique Miranda, and Erik
  Quaeghebeur, editors, {\em ISIPTA '15 : proceedings of the ninth
  international symposium on imprecise probability : theories and
  applications}, pages 305--314. Society for Imprecise Probability: Theories
  and Applications, 2015.

\bibitem{walley1991}
Peter Walley.
\newblock {\em Statistical Reasoning with Imprecise Probabilities}.
\newblock Chapman and Hall, London, 1991.

\bibitem{walley2000}
Peter Walley.
\newblock Towards a unified theory of imprecise probability.
\newblock {\em International Journal of Approximate Reasoning}, 24:125--148,
  2000.

\bibitem{zaffalon2017:incomplete:preferences}
Marco Zaffalon and Enrique Miranda.
\newblock Axiomatising incomplete preferences through sets of desirable
  gambles.
\newblock {\em Journal Of Artificial Intelligence Research}, 60:1057--1126,
  2017.

\end{thebibliography}

\appendix
\section{Proofs and intermediate results}\label{app:proofs}

% Checked by Gert, with minor changes.

\subsection{Terminology and notation used only in the appendix}\label{sec:extraterminology}
For any subset $V$ of $\opts$ we consider its set of linear combinations, or linear span
\begin{equation*}
\linspan(V)\coloneqq\cset[\bigg]{\sum_{k=1}^n\lambda_k\opt[k]}{n\in\naturals,\lambda_k\in\reals,\opt[k]\in V}.
\end{equation*}
We also consider several operators on---transformations of---the set~$\rejectsets$ of all sets of desirable option sets. 
The first is denoted by~$\RN\group{\cdot}$, and allows us to add smaller option sets by removing from any option set elements of $\nonposopts\coloneqq\cset{\opt\in\opts}{\opt\optlteq0}$:
\begin{equation*}
\RN\group{\rejectset}\coloneqq\cset{\optset\in\optsets}{(\exists\altoptset\in\rejectset)\altoptset\setminus\nonposopts\subseteq\optset\subseteq\altoptset},
\text{ for all $\rejectset\in\rejectsets$}.
\end{equation*}
The second is denoted by~$\RS\group{\cdot}$, and allows us to add smaller option sets by removing from any option set positive combinations from some of its other elements:
\begin{equation*}
\RP\group{\rejectset}
\coloneqq\cset{\optset\in\optsets}{(\exists\altoptset\in\rejectset)\optset\subseteq\altoptset\subseteq\posi\group{\optset}},
\text{ for all $\rejectset\in\rejectsets$}.
\end{equation*}
The final one is denoted by $\setposi\group{\cdot}$---not to be confused with $\posi\group{\cdot}$---and defined for all $\rejectset\in\rejectsets$ by
\begin{align}
\setposi\group{\rejectset}\coloneqq\bigg\{
\bigg\{
\sum_{k=1}^n\lambda_{k}^{\opt[1:n]}\opt[k]
\colon
\opt[1:n]\in\times_{k=1}^n\optset[k]
\bigg\}
\colon
&n\in\naturals,(\optset[1],\dots,\optset[n])\in\rejectset^n,\notag\\[-11pt]
&\big(\forall\opt[1:n]\in\times_{k=1}^n\optset[k]\big)\,\lambda_{1:n}^{\opt[1:n]}>0
\bigg\},
\label{eq:setposi}
\end{align}
where we use the notations $\opt[1:n]$ and $\lambda_{1:n}^{\opt[1:n]}$ for $n$-tuples of options $\opt[k]$ and real numbers $\lambda_{k}^{\opt[1:n]}$, $k\in\set{1,\dots,n}$, so $\opt[1:n]\in\opts^{n}$ and $\lambda_{1:n}^{\opt[1:n]}\in\reals^{n}$.
We also use `$\lambda_{1:n}^{\opt[1:n]}>0$' as a shorthand for `$\lambda_k^{\opt[1:n]}\geq0$ for all $k\in\set{1,\dots,n}$ and $\sum_{k=1}^n\lambda_k^{\opt[1:n]}>0$'. 

\subsection{Proofs and intermediate results for Section~\ref{sec:binary:choice}}

\begin{proof}[Proof of Proposition~\ref{prop:binary:characterisation}]
%\color{blue}
% Proof checked, with minor cosmegtic changes, by Gert
First assume that there is some $\desirset\in\desirsets$ such that $\rejectset=\rejectset[\desirset]$. 
Then for all $\optset\in\optsets$:
\begin{align*}
\optset\in\rejectset
\ifandonlyif
\optset\in\rejectset[\desirset]
\ifandonlyif
\optset\cap\desirset\neq\emptyset
&\ifandonlyif
(\exists\opt\in\optset)\set{\opt}\cap\desirset\neq\emptyset\\
&\ifandonlyif
(\exists\opt\in\optset)\set{\opt}\in\rejectset[\desirset]
\ifandonlyif
(\exists\opt\in\optset)\set{\opt}\in\rejectset.
\end{align*}
It therefore follows from Definition~\ref{def:binary} that $\rejectset$ is binary. 

Furthermore, for any $\opt\in\opts$, we find that
\begin{equation*}
\opt\in\desirset
\ifandonlyif
\set{\opt}\cap\desirset\neq\emptyset
\ifandonlyif
\set{\opt}\in\rejectset[\desirset]
\ifandonlyif
\set{\opt}\in\rejectset
\ifandonlyif
\opt\in\desirset[\rejectset].
\end{equation*}
So we find that $\desirset$ is equal to $\desirset[\rejectset]$, and therefore necessarily unique.

Finally, we assume that $\rejectset$ is binary. 
Let $\desirset\coloneqq\desirset[\rejectset]$. 
Then for all $\optset\in\optsets$:
\begin{equation*}
\optset\in\rejectset
\ifandonlyif
(\exists\opt\in\optset)\set{\opt}\in\rejectset
\ifandonlyif
(\exists\opt\in\optset)\opt\in\desirset[\rejectset]
\ifandonlyif
\optset\cap\desirset[\rejectset]\neq\emptyset
\ifandonlyif
\optset\cap\desirset\neq\emptyset
\ifandonlyif\optset\in\rejectset[\desirset],
\end{equation*}
where the first equivalence follows from Definition~\ref{def:binary} and the fact that $\rejectset$ is binary. 
Hence, we find that $\rejectset=\rejectset[\desirset]$.
\end{proof}

\begin{corollary}\label{corol:binaryiff}
A set of desirable option sets $\rejectset\in\rejectsets$ is binary if and only if $\rejectset[{\desirset[\rejectset]}]=\rejectset$. 
\end{corollary}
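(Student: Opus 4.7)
The plan is to derive this corollary directly from Proposition~\ref{prop:binary:characterisation}, which already does almost all of the work. The proposition characterises binary sets of desirable option sets as precisely those of the form $\rejectset[\desirset]$ for some $\desirset\in\desirsets$, and moreover pins down that representing $\desirset$ uniquely as $\desirset[\rejectset]$. So the corollary should fall out by simply reading off what this says in the special case $\desirset=\desirset[\rejectset]$.

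For the forward implication, I would assume that $\rejectset$ is binary. Then by Proposition~\ref{prop:binary:characterisation}, there exists some $\desirset\in\desirsets$ with $\rejectset=\rejectset[\desirset]$, and the uniqueness part of that proposition identifies this $\desirset$ as $\desirset[\rejectset]$. Substituting yields $\rejectset=\rejectset[{\desirset[\rejectset]}]$, as required.

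For the converse implication, I would assume $\rejectset[{\desirset[\rejectset]}]=\rejectset$. Since $\desirset[\rejectset]\in\desirsets$ by definition of the operator in Equation~\eqref{eq:desir:from:choice}, this equation exhibits $\rejectset$ as being of the form $\rejectset[\desirset]$ for some $\desirset\in\desirsets$, namely $\desirset=\desirset[\rejectset]$. Applying Proposition~\ref{prop:binary:characterisation} once more then yields that $\rejectset$ is binary.

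There is no real obstacle here; the proof is essentially a one-line consequence of Proposition~\ref{prop:binary:characterisation}. The only thing to be careful about is to explicitly invoke both the existence and the uniqueness clauses of that proposition in the correct directions, rather than trying to reprove them from Equations~\eqref{eq:desir:from:choice} and~\eqref{eq:desirset:to:rejectset} directly.
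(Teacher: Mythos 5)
Your proof is correct and matches the paper exactly: the paper's own proof of Corollary~\ref{corol:binaryiff} simply states that it is an immediate consequence of Proposition~\ref{prop:binary:characterisation}, and your two directions (existence plus uniqueness for the forward implication, exhibiting $\rejectset$ as $\rejectset[{\desirset[\rejectset]}]$ for the converse) spell out precisely that intended argument.
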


\begin{proof}%[Proof of Corollary~\ref{corol:binaryiff}]
% Checked with minor cosmetic changes by Gert
Immediate consequence of Proposition~\ref{prop:binary:characterisation}.
\end{proof}

\begin{proposition}\label{prop:from:rejection:to:desirability}
For any coherent set of desirable option sets $\rejectset$, $\desirset[\rejectset]$ is a coherent set of desirable options, and $\rejectset[{\desirset[\rejectset]}]\subseteq\rejectset$.
\end{proposition}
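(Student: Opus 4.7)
The plan is to unpack the two definitions $\desirset[\rejectset]=\cset{\opt\in\opts}{\set{\opt}\in\rejectset}$ and $\rejectset[\desirset]=\cset{\optset\in\optsets}{\optset\cap\desirset\neq\emptyset}$ and then verify each required property by directly invoking one of the coherence axioms \ref{ax:rejects:removezero}--\ref{ax:rejects:mono} on $\rejectset$. The proof splits cleanly into two independent parts.

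First I would verify that $\desirset[\rejectset]$ satisfies the three axioms \ref{ax:desirs:nozero}--\ref{ax:desirs:cone} for coherent sets of desirable options. Axiom \ref{ax:desirs:nozero} ($0\notin\desirset[\rejectset]$) is, by the definition of $\desirset[\rejectset]$, literally the statement $\set{0}\notin\rejectset$, which is \ref{ax:rejects:nozero}. Axiom \ref{ax:desirs:pos} ($\posopts\subseteq\desirset[\rejectset]$) unfolds to ``$\set{\opt}\in\rejectset$ for every $\opt\in\posopts$'', which is exactly \ref{ax:rejects:pos}. For \ref{ax:desirs:cone}, given $\opt,\altopt\in\desirset[\rejectset]$ and $(\lambda,\mu)>0$, I have $\set{\opt}\in\rejectset$ and $\set{\altopt}\in\rejectset$; I then apply \ref{ax:rejects:cone} with $\optset[1]=\set{\opt}$, $\optset[2]=\set{\altopt}$ and the single coefficient pair $(\lambda_{\opt,\altopt},\mu_{\opt,\altopt})=(\lambda,\mu)$ to conclude $\set{\lambda\opt+\mu\altopt}\in\rejectset$, i.e.\ $\lambda\opt+\mu\altopt\in\desirset[\rejectset]$.

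Second, I would establish $\rejectset[{\desirset[\rejectset]}]\subseteq\rejectset$ in one short step. Pick any $\optset\in\rejectset[{\desirset[\rejectset]}]$; by the definition of $\rejectset[\cdot]$ there is some $\opt\in\optset$ with $\opt\in\desirset[\rejectset]$, which by the definition of $\desirset[\cdot]$ means $\set{\opt}\in\rejectset$. Since $\set{\opt}\subseteq\optset$, the monotonicity axiom \ref{ax:rejects:mono} yields $\optset\in\rejectset$, giving the desired inclusion.

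There is no real obstacle here: every step is an immediate application of a single coherence axiom, and Axiom \ref{ax:rejects:removezero} is not even needed. The proposition is essentially a sanity check confirming that the maps $\rejectset\mapsto\desirset[\rejectset]$ and $\desirset\mapsto\rejectset[\desirset]$ behave as advertised, and that the ``binary extraction'' $\desirset[\rejectset]$ always underestimates $\rejectset$ (with equality precisely in the binary case, as already recorded in Corollary~\ref{corol:binaryiff}).
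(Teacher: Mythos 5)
Your proof is correct and coincides essentially step for step with the paper's own argument: Axioms~\ref{ax:desirs:nozero} and~\ref{ax:desirs:pos} are read off directly from~\ref{ax:rejects:nozero} and~\ref{ax:rejects:pos}, Axiom~\ref{ax:desirs:cone} follows by applying~\ref{ax:rejects:cone} to the singletons $\set{\opt}$ and $\set{\altopt}$, and the inclusion $\rejectset[{\desirset[\rejectset]}]\subseteq\rejectset$ is obtained via~\ref{ax:rejects:mono} exactly as in the paper. Your side remark that~\ref{ax:rejects:removezero} plays no role here is also accurate.
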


\begin{proof}%[Proof of Proposition~\ref{prop:from:rejection:to:desirability}]
% Checked and improved by Gert
We first prove that $\desirset[\rejectset]$ is coherent, or equivalently, that it satisfies Axioms~\ref{ax:desirs:nozero}--\ref{ax:desirs:cone}.
For Axiom~\ref{ax:desirs:nozero}, observe that $0\in\desirset[\rejectset]$ implies that $\set{0}\in\rejectset$, contradicting Axiom~\ref{ax:rejects:nozero}.
For Axiom~\ref{ax:desirs:pos}, observe that for any $\opt\in\opts$, $\opt\in\desirset[\rejectset]$ is equivalent to $\set{\opt}\in\rejectset$, and take into account Axiom~\ref{ax:rejects:pos}.
And, finally, for Axiom~\ref{ax:desirs:cone}, observe that $\opt,\altopt\in\desirset[\rejectset]$ implies that $\set{\opt},\set{\altopt}\in\rejectset$, and that Axiom~\ref{ax:rejects:cone} then implies that $\set{\lambda\opt+\mu\altopt}\in\rejectset$, or equivalently, that $\lambda\opt+\mu\altopt\in\desirset[\rejectset]$, for any choice of $(\lambda,\mu)>0$.

For the last statement, consider any $\optset\in\rejectset[{\desirset[\rejectset]}]$, meaning that $\optset\cap\desirset[\rejectset]\neq\emptyset$.
Consider any $\opt\in\optset\cap\desirset[\rejectset]$, then on the one hand $\opt\in\desirset[\rejectset]$, so $\set{\opt}\in\rejectset$.
But since on the other hand also $\opt\in\optset$, we see that $\set{\opt}\subseteq\optset$, and therefore Axiom~\ref{ax:rejects:mono} guarantees that $\optset\in\rejectset$.
\end{proof}

\begin{proposition}\label{prop:fromCohDtoCohK}
% Gert made this slightly more specific, because this is needed further on.
For any set of desirable options $\desirset\in\desirsets$, $\desirset=\desirset[{\rejectset[\desirset]}]$.
If, moreover, $\desirset$ is coherent, then $\rejectset[\desirset]$ is a coherent set of desirable option sets.
\end{proposition}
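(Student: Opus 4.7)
The proof breaks into two clearly separated parts, each handled by unwinding the definitions of $\rejectset[\desirset]$ and $\desirset[\rejectset]$ from Equations~\eqref{eq:desirset:to:rejectset} and~\eqref{eq:desir:from:choice}.

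For the identity $\desirset=\desirset[{\rejectset[\desirset]}]$, I would simply chain the defining equivalences: for any $\opt\in\opts$,
\begin{equation*}
\opt\in\desirset[{\rejectset[\desirset]}]
\ifandonlyif
\set{\opt}\in\rejectset[\desirset]
\ifandonlyif
\set{\opt}\cap\desirset\neq\emptyset
\ifandonlyif
\opt\in\desirset,
\end{equation*}
so the two sets coincide. This part requires no coherence assumption.

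For the second part, I would assume $\desirset$ satisfies \ref{ax:desirs:nozero}--\ref{ax:desirs:cone} and verify the five axioms \ref{ax:rejects:removezero}--\ref{ax:rejects:mono} for $\rejectset[\desirset]$ in turn, each time reducing the question about $\rejectset[\desirset]$ to the corresponding statement about $\desirset$ via the relation $\optset\in\rejectset[\desirset]\ifandonlyif\optset\cap\desirset\neq\emptyset$. Concretely: for~\ref{ax:rejects:removezero}, if $\optset\cap\desirset\neq\emptyset$ pick any witness $\opt$; by~\ref{ax:desirs:nozero} we have $\opt\neq0$, so the witness survives in $\optset\setminus\set{0}$. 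Axiom~\ref{ax:rejects:nozero} is a direct restatement of~\ref{ax:desirs:nozero}, and~\ref{ax:rejects:pos} is a direct restatement of~\ref{ax:desirs:pos}. For~\ref{ax:rejects:mono}, the implication $\optset[1]\cap\desirset\neq\emptyset$ and $\optset[1]\subseteq\optset[2]$ gives $\optset[2]\cap\desirset\neq\emptyset$ immediately.

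The only step requiring any genuine argument is \ref{ax:rejects:cone}: pick witnesses $\opt^\ast\in\optset[1]\cap\desirset$ and $\altopt^\ast\in\optset[2]\cap\desirset$; since $(\lambda_{\opt^\ast,\altopt^\ast},\mu_{\opt^\ast,\altopt^\ast})>0$, axiom~\ref{ax:desirs:cone} yields $\lambda_{\opt^\ast,\altopt^\ast}\opt^\ast+\mu_{\opt^\ast,\altopt^\ast}\altopt^\ast\in\desirset$, and this element belongs by construction to $\cset{\lambda_{\opt,\altopt}\opt+\mu_{\opt,\altopt}\altopt}{\opt\in\optset[1],\altopt\in\optset[2]}$, so that set intersects $\desirset$ and hence lies in $\rejectset[\desirset]$. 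I do not expect any real obstacle here; the entire proposition is a routine check that the definitions are set up so that the interpretation of `contains a desirable option' translates each desirability axiom into its set-level counterpart.
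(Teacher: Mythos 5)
Your proof is correct and follows essentially the same route as the paper's: the first identity via the same chain of definitional equivalences (with no coherence needed), and the second part by verifying Axioms~\ref{ax:rejects:removezero}--\ref{ax:rejects:mono} one by one, including the identical witness-plus-Axiom~\ref{ax:desirs:cone} argument for~\ref{ax:rejects:cone}.
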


\begin{proof}%[Proof of Proposition~\ref{prop:fromCohDtoCohK}]
% Checked and corrected by Gert
% Changed the proof a little bit in order to adapt to the change in formulation of the result.
For the first statement, simply observe that
\begin{equation*}
\opt\in\desirset[{\rejectset[\desirset]}]
\ifandonlyif
\set{\opt}\in\rejectset[\desirset]
\ifandonlyif
\set{\opt}\cap\desirset\neq\emptyset
\ifandonlyif
\opt\in\desirset,
\text{ for all $\opt\in\opts$}.
\end{equation*}
For the second statement, assume that $\desirset$ is coherent, then we  need to prove that $\rejectset[\desirset]$ is coherent, or equivalently, that it satisfies Axioms~\ref{ax:rejects:removezero}--\ref{ax:rejects:mono}.
For Axiom~\ref{ax:rejects:removezero}, observe that $\optset\cap\desirset\neq\emptyset$ implies that $(\optset\setminus\set{0})\cap\desirset\neq\emptyset$ because we know from the coherence of $\desirset$ [Axiom~\ref{ax:desirs:nozero}] that $0\notin\desirset$. 
For Axiom~\ref{ax:rejects:nozero}, observe that Equation~\eqref{eq:desirset:to:rejectset} implies that $\set{0}\in\rejectset[\desirset]\ifandonlyif0\in\desirset$, and use Axiom~\ref{ax:desirs:nozero}. 
For Axiom~\ref{ax:rejects:pos}, observe that $\set{\opt}\in\rejectset[\desirset]$ is equivalent to $\opt\in\desirset$ for all $\opt\in\opts$, and take into account the coherence of $\desirset$ [Axiom~\ref{ax:desirs:pos}].
For Axiom~\ref{ax:rejects:cone}, consider any $\optset[1],\optset[2]\in\rejectset[\desirset]$, and let $\optset\coloneqq\cset{\lambda_{\opt,\altopt}\opt+\mu_{\opt,\altopt}\altopt}{\opt\in\optset[1],\altopt\in\optset[2]}$ for any particular choice of the $(\lambda_{\opt,\altopt},\mu_{\opt,\altopt})>0$ for all $\opt\in\optset[1]$ and $\altopt\in\altopt[2]$.
Then $\optset[1]\cap\desirset\neq\emptyset$ and $\optset[2]\cap\desirset\neq\emptyset$, so we can fix any $\opt[1]\in\optset[1]\cap\desirset$ and $\opt[2]\in\optset[2]\cap\desirset$.
The coherence of $\desirset$ [Axiom~\ref{ax:desirs:cone}] then implies that $\lambda_{\opt[1],\altopt[2]}\opt[1]+\mu_{\opt[1],\altopt[2]}\altopt[2]\in\desirset$, and therefore also $\optset\cap\desirset\neq\emptyset$, whence indeed $\optset\in\rejectset[\desirset]$.
And, finally, that $\rejectset[\desirset]$ satisfies Axiom~\ref{ax:rejects:mono} is an immediate consequence of its definition~\eqref{eq:desirset:to:rejectset}.
\end{proof}

\begin{proof}[Proof of Proposition~\ref{prop:coherence:for:binary}]
% Checked and clarified a bit by Gert, added a proof for the additional statement, needed further on
We begin with the first statement.
First, suppose that $\desirset[\rejectset]$ is coherent. 
Proposition~\ref{prop:fromCohDtoCohK} then implies that $\rejectset[{\desirset[\rejectset]}]$ is coherent.
Hence, since we know from Proposition~\ref{corol:binaryiff} and the assumed binary character of~$\rejectset$ that $\rejectset=\rejectset[{\desirset[\rejectset]}]$, we find that $\rejectset$ is coherent.
Next, suppose that $\rejectset$ is coherent. 
Proposition~\ref{prop:from:rejection:to:desirability} then implies that $\desirset[\rejectset]$ is coherent as well.

We now turn to the second statement.
First, assume that $\desirset$ is coherent, then Proposition~\ref{prop:fromCohDtoCohK} guarantees that $\rejectset[\desirset]$ is coherent too.
Next, assume that $\rejectset[\desirset]$ is coherent, then we infer from Proposition~\ref{prop:from:rejection:to:desirability} that $\desirset[{\rejectset[\desirset]}]$ is coherent, and from Proposition~\ref{prop:fromCohDtoCohK} that $\desirset[{\rejectset[\desirset]}]=\desirset$. 
\end{proof}

\begin{lemma}\label{lem:binaryalternative}
A coherent set of desirable option sets $\rejectset$ is binary if and only if
\begin{equation}\label{eq:lem:binaryalternative}
\group{\forall\optset\in\rejectset\colon\card{\optset}\geq2}
\group{\exists\opt\in\optset}
\optset\setminus\set{\opt}\in\rejectset.
\end{equation}
\end{lemma}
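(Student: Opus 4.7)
The plan is to handle the two directions separately, using the monotonicity Axiom~\ref{ax:rejects:mono} for one direction and a finite induction on cardinality for the other.

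For the forward direction, assume $\rejectset$ is binary and fix any $\optset\in\rejectset$ with $\card{\optset}\geq2$. By Definition~\ref{def:binary}, there exists some $\opt\in\optset$ with $\set{\opt}\in\rejectset$. Since $\card{\optset}\geq2$, we can pick some $\altopt\in\optset$ with $\altopt\neq\opt$; then $\set{\opt}\subseteq\optset\setminus\set{\altopt}$, and Axiom~\ref{ax:rejects:mono} yields $\optset\setminus\set{\altopt}\in\rejectset$, which is what we need.

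For the backward direction, assume~\eqref{eq:lem:binaryalternative} holds. I would first note that~$\emptyset\notin\rejectset$: otherwise, by Axiom~\ref{ax:rejects:mono}, we would have $\set{0}\in\rejectset$, contradicting Axiom~\ref{ax:rejects:nozero}. So every $\optset\in\rejectset$ has $\card{\optset}\geq1$. I would then prove, by induction on $n\coloneqq\card{\optset}\in\naturals$, the claim: \emph{for every $\optset\in\rejectset$ with $\card{\optset}=n$, there exists some $\opt\in\optset$ with $\set{\opt}\in\rejectset$.} The base case $n=1$ is trivial. For the induction step with $n\geq2$, the hypothesis~\eqref{eq:lem:binaryalternative} gives some $\opt\in\optset$ with $\optset\setminus\set{\opt}\in\rejectset$; applying the induction hypothesis to this smaller set in $\rejectset$ yields some $\opt^\ast\in\optset\setminus\set{\opt}\subseteq\optset$ with $\set{\opt^\ast}\in\rejectset$. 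This proves the `only if' part of~\eqref{eq:binary}. The `if' part is immediate from Axiom~\ref{ax:rejects:mono}: whenever some $\opt\in\optset$ satisfies $\set{\opt}\in\rejectset$, the inclusion $\set{\opt}\subseteq\optset$ forces $\optset\in\rejectset$.

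There is no real obstacle here. The only subtlety worth flagging is the need to rule out $\emptyset\in\rejectset$ before starting the induction, which is settled in one line using Axioms~\ref{ax:rejects:mono} and~\ref{ax:rejects:nozero}; everything else reduces to monotonicity and an induction whose termination is guaranteed by the finiteness of option sets in $\optsets$.
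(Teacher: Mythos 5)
Your proof is correct, and it is organised differently from the paper's. The forward direction is the same in substance (the paper extracts a desirable option $\altopt\in\optset\cap\desirset[\rejectset]$ via Corollary~\ref{corol:binaryiff} and removes a different element; you invoke Definition~\ref{def:binary} directly and conclude with Axiom~\ref{ax:rejects:mono}). The real difference is in the converse: the paper proves it \emph{ex absurdo}, routing through Proposition~\ref{prop:from:rejection:to:desirability} to get $\rejectset[{\desirset[\rejectset]}]\subseteq\rejectset$ and then running a descent that strips elements from a putative counterexample $\optset$ while preserving the invariant $\optset\cap\desirset[\rejectset]=\emptyset$, until the cardinality constraint is violated. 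You instead verify Definition~\ref{def:binary} head-on, by a positive induction on $\card{\optset}$ showing that every $\optset\in\rejectset$ contains a singleton member of $\rejectset$, with the `if' half of~\eqref{eq:binary} handled by Axiom~\ref{ax:rejects:mono} alone. The combinatorial engine is identical---repeated application of~\eqref{eq:lem:binaryalternative}, terminating by finiteness of option sets---but your packaging is more self-contained: it needs only Axioms~\ref{ax:rejects:nozero} and~\ref{ax:rejects:mono} and avoids both auxiliary results, and your preliminary observation that $\emptyset\notin\rejectset$ (which grounds the base case) is exactly the point the paper also uses, elsewhere, by combining the same two axioms. What the paper's formulation buys in exchange is that it directly establishes the identity $\rejectset=\rejectset[{\desirset[\rejectset]}]$, which is the form of binarity (via Corollary~\ref{corol:binaryiff}) that its later representation arguments consume; your version would need that one extra appeal to Corollary~\ref{corol:binaryiff} if the identity itself were wanted. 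Either way, your argument is complete and there is no gap.
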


\begin{proof}
% Checked and improved by Gert
First assume that $\rejectset$ is binary. 
We then know from Corollary~\ref{corol:binaryiff} that $\rejectset=\rejectset[{\desirset[\rejectset]}]$, implying that $\optset\in\rejectset\ifandonlyif\optset\cap\desirset[\rejectset]\neq\emptyset$, for all $\optset\in\optsets$.
Consider any $\optset\in\rejectset$ such that $\card{\optset}\geq2$. 
Then there is some $\altopt\in\optset\cap\desirset[\rejectset]$ such that $\optset=\set{\altopt}\cup(\optset\setminus\set{\altopt})$. 
But then $\card{\optset\setminus\set{\altopt}}\geq1$, so we can consider an element $\opt\in\optset\setminus\set{\altopt}$.
Since clearly $\altopt\in(\optset\setminus\set{\opt})\cap\desirset[\rejectset]$, we see that $(\optset\setminus\set{\opt})\cap\desirset[\rejectset]\neq\emptyset$ and therefore, that $\optset\setminus\set{\opt}\in\rejectset$.

Next assume that Equation~\eqref{eq:lem:binaryalternative} holds. 
Because of Corollary~\ref{corol:binaryiff}, it suffices to show that $\rejectset[{\desirset[\rejectset]}]=\rejectset$. 
We infer from Proposition~\ref{prop:from:rejection:to:desirability} that $\desirset[\rejectset]$ is a coherent set of desirable options, and that $\rejectset[{\desirset[\rejectset]}]\subseteq\rejectset$.
Assume {\itshape ex absurdo} that $\rejectset[{\desirset[\rejectset]}]\subset\rejectset$, so there is some $\optset\in\rejectset$ such that $\optset\notin\rejectset[{\desirset[\rejectset]}]$, or equivalently, such that $\optset\cap\desirset[\rejectset]=\emptyset$.
But then we must have that $\card{\optset}\geq2$, because otherwise $\optset=\set{\altopt}$ with $\altopt\notin\desirset[\rejectset]$ and therefore $\optset=\set{\altopt}\notin\rejectset$, a contradiction.
But then it follows from Equation~\eqref{eq:lem:binaryalternative} that there is some $\opt[1]\in\optset$ such that $\optset[1]\coloneqq\optset\setminus\set{\opt[1]}\in\rejectset$.
Since it follows from $\optset\cap\desirset[\rejectset]=\emptyset$ that also $\optset[1]\cap\desirset[\rejectset]=\emptyset$, we see that also $\optset[1]\notin\rejectset[{\desirset[\rejectset]}]$.
We can now repeat the same argument with $\optset[1]$ instead of $\optset$ to find that it must be that $\card{\optset[1]}\geq2$, so there is some $\opt[2]\in\optset[1]$ such that $\optset[2]\coloneqq\optset[1]\setminus\set{\opt[2]}\in\rejectset$ and $\optset[2]\notin\rejectset[{\desirset[\rejectset]}]$. 
Repeating the same argument over and over again will eventually lead to a contradiction with $\card{\optset[n]}\geq2$.
Hence it must be that $\rejectset[{\desirset[\rejectset]}]=\rejectset$.
\end{proof}

\subsection{Proofs and intermediate results for Section~\ref{sec:representation}}

\begin{lemma}\label{lem:replacing:by:dominating:options}
Consider any set of desirable option sets $\rejectset\in\rejectsets$ that satisfies Axioms~\ref{ax:rejects:pos} and\/~\ref{ax:rejects:cone}.
Consider any $\optset\in\rejectset$.
Then for any $\altopt\in\optset$ and any $\altopt'\in\opts$ such that $\altopt\optlteq\altopt'$, the option set $\altoptset\coloneqq\set{\altopt'}\cup\group{\optset\setminus\set{\altopt}}$ obtained by replacing $\altopt$ in $\optset$ with the dominating option $\altopt'$ still belongs to $\rejectset$: $\altoptset\in\rejectset$.
\end{lemma}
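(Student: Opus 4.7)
The plan is to split into two cases based on whether $\altopt=\altopt'$ or $\altopt\optlt\altopt'$, and in the non-trivial case to invoke Axiom~\ref{ax:rejects:cone} with $\optset$ and the singleton $\set{\altopt'-\altopt}$, choosing the coefficients so that the resulting option set is exactly $\altoptset$.

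First, if $\altopt=\altopt'$ then $\altoptset=\optset\in\rejectset$ and there is nothing to prove. So assume from now on that $\altopt\optlt\altopt'$, which gives $\altopt'-\altopt\in\posopts$. By Axiom~\ref{ax:rejects:pos} we then have $\set{\altopt'-\altopt}\in\rejectset$.

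Next, I would apply Axiom~\ref{ax:rejects:cone} with $\optset[1]\coloneqq\optset$ and $\optset[2]\coloneqq\set{\altopt'-\altopt}$, both of which lie in $\rejectset$. For each pair $(\opt,\altopt'-\altopt)$ with $\opt\in\optset$, I set
\begin{equation*}
(\lambda_{\opt,\altopt'-\altopt},\mu_{\opt,\altopt'-\altopt})\coloneqq
\begin{cases}
(1,1)&\text{if }\opt=\altopt,\\
(1,0)&\text{if }\opt\in\optset\setminus\set{\altopt},
\end{cases}
\end{equation*}
so that $(\lambda_{\opt,\altopt'-\altopt},\mu_{\opt,\altopt'-\altopt})>0$ in both cases. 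The option set produced by Axiom~\ref{ax:rejects:cone} is then
\begin{equation*}
\cset{\lambda_{\opt,\altopt'-\altopt}\opt+\mu_{\opt,\altopt'-\altopt}(\altopt'-\altopt)}{\opt\in\optset}
=\set{\altopt+(\altopt'-\altopt)}\cup(\optset\setminus\set{\altopt})
=\set{\altopt'}\cup(\optset\setminus\set{\altopt})=\altoptset,
\end{equation*}
and this set belongs to $\rejectset$, as required.

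There is no genuine obstacle here; the only subtlety is the degenerate case $\altopt=\altopt'$, for which $\altopt'-\altopt=0\notin\posopts$ and Axiom~\ref{ax:rejects:pos} cannot be applied, but this case is handled trivially by noting that $\altoptset=\optset$. All other steps are direct applications of Axioms~\ref{ax:rejects:pos} and~\ref{ax:rejects:cone}.
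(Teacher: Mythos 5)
Your proof is correct and takes essentially the same route as the paper's: after dispatching the degenerate case $\altopt=\altopt'$ (which the paper handles as a without-loss-of-generality assumption), you use Axiom~\ref{ax:rejects:pos} to place $\set{\altopt'-\altopt}$ in $\rejectset$ and then apply Axiom~\ref{ax:rejects:cone} to $\optset$ and this singleton with coefficients $(1,1)$ at $\altopt$ and $(1,0)$ elsewhere, exactly as in the paper. No gaps.
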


\begin{proof}
% Checked by Gert
We may assume without loss of generality that $\optset\neq\emptyset$ and that $\altopt'\neq\altopt$.
Let $\altopt''\coloneqq\altopt'-\altopt$, then $\altopt''\in\posopts$, and therefore Axiom~\ref{ax:rejects:pos} implies that $\set{\altopt''}\in\rejectset$.
Applying Axiom~\ref{ax:rejects:cone} for $\optset$ and $\set{\altopt''}$ allows us to infer that $\cset{\lambda_{\opt}\opt+\mu_{\opt}\altopt''}{\opt\in\optset}\in\rejectset$ for all possible choices of $(\lambda_{\opt},\mu_{\opt})>0$.
Choosing $(\lambda_{\opt},\mu_{\opt})\coloneqq(1,0)$ for all $\opt\in\optset\setminus\set{\altopt}$ and $(\lambda_{\altopt},\mu_{\altopt})\coloneqq(1,1)$ yields in particular that $\altoptset=\set{\altopt'}\cup(\optset\setminus\set{\altopt})\in\rejectset$.
\end{proof}

\begin{lemma}\label{lem:replacing:nonpositives:by:zero}
Consider any set of desirable option sets $\rejectset\in\rejectsets$ that satisfies Axioms~\ref{ax:rejects:pos} and\/~\ref{ax:rejects:cone}.
Consider any $\optset\in\rejectset$ such that $\optset\cap\nonposopts\neq\emptyset$ and any $\altopt\in\optset\cap\nonposopts$, and construct the option set $\altoptset\coloneqq\set{0}\cup\group{\optset\setminus\set{\altopt}}$ by replacing $\altopt$ with $0$.
Then still $\altoptset\in\rejectset$.
\end{lemma}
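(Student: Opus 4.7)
The plan is to derive this lemma as a near-immediate corollary of the preceding Lemma~\ref{lem:replacing:by:dominating:options}, after a small case split on whether $\altopt$ equals the zero option or is strictly below it.

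First I would observe that if $\altopt=0$, then since $\altopt\in\optset$ we have $0\in\optset$, so $\altoptset=\set{0}\cup\group{\optset\setminus\set{0}}=\optset$, and hence $\altoptset\in\rejectset$ holds trivially from the assumption $\optset\in\rejectset$. This case split is necessary because $\nonposopts$ contains $0$, and in that degenerate situation the ``replacement'' does not change anything.

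In the remaining case, $\altopt\in\nonposopts\setminus\set{0}$, so $\altopt\optlt0$, which is the same as $\altopt\optlteq 0$ together with $\altopt\neq0$. In particular, the zero option dominates $\altopt$ in the background ordering, i.e.~$\altopt\optlteq 0$. I would then invoke Lemma~\ref{lem:replacing:by:dominating:options} with the choice $\altopt'\coloneqq 0$, which is allowed precisely because $\altopt\optlteq\altopt'$. The conclusion of that lemma is exactly that $\set{0}\cup\group{\optset\setminus\set{\altopt}}=\altoptset\in\rejectset$, as required.

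There is no real obstacle here; the only subtlety is that the preceding lemma's proof secretly relies on $\altopt''\coloneqq\altopt'-\altopt$ lying in $\posopts$, which forces $\altopt'\neq\altopt$. That is exactly why the case $\altopt=0$ must be handled separately, and why once we have done so the invocation of Lemma~\ref{lem:replacing:by:dominating:options} is unproblematic. No additional use of Axioms~\ref{ax:rejects:pos} or~\ref{ax:rejects:cone} beyond what is already encapsulated in that earlier lemma is needed.
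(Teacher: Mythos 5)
Your proof is correct and matches the paper's, which simply cites Lemma~\ref{lem:replacing:by:dominating:options} (applied with $\altopt'\coloneqq0$, legitimate since $\altopt\in\nonposopts$ means $\altopt\optlteq0$) as giving the result immediately. Your explicit case split on $\altopt=0$ is harmless but not actually needed: that lemma is stated for the non-strict dominance $\altopt\optlteq\altopt'$, and its own proof disposes of the degenerate case $\altopt'=\altopt$ via the ``without loss of generality'' reduction, so the invocation goes through verbatim even when $\altopt=0$.
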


\begin{proof}
% Checked by Gert
Immediate consequence of Lemma~\ref{lem:replacing:by:dominating:options}.
\end{proof}

\begin{proposition}\label{prop:ax:rejects:RN:equivalents}
$\RN\group{\rejectset}=\rejectset$ for any coherent set of desirable option sets $\rejectset\in\cohrejectsets$.
\end{proposition}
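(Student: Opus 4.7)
The plan is to prove the two inclusions $\rejectset \subseteq \RN(\rejectset)$ and $\RN(\rejectset) \subseteq \rejectset$ separately. The first is trivial: given $\optset \in \rejectset$, taking $\altoptset \coloneqq \optset$ in the definition of $\RN(\cdot)$ gives $\altoptset \setminus \nonposopts \subseteq \optset \subseteq \altoptset$, so $\optset \in \RN(\rejectset)$. This direction uses nothing beyond the definition of $\RN$.

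For the nontrivial direction $\RN(\rejectset) \subseteq \rejectset$, I would fix $\optset \in \RN(\rejectset)$ and pick a witness $\altoptset \in \rejectset$ with $\altoptset \setminus \nonposopts \subseteq \optset \subseteq \altoptset$. The key observation is that every element of $\altoptset \setminus \optset$ lies in $\nonposopts$. The idea is then to transform $\altoptset$ into $\optset$ by first turning all these non-positive surplus elements into $0$ (using Lemma~\ref{lem:replacing:nonpositives:by:zero}, whose hypotheses~\ref{ax:rejects:pos} and~\ref{ax:rejects:cone} hold by coherence of $\rejectset$) and then removing the resulting $0$ if needed (using Axiom~\ref{ax:rejects:removezero}).

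Concretely, enumerate $\altoptset \setminus \optset = \{\altopt_{1},\dots,\altopt_{k}\} \subseteq \nonposopts$. Starting from $\altoptset_{0} \coloneqq \altoptset \in \rejectset$, define $\altoptset_{i} \coloneqq \{0\} \cup (\altoptset_{i-1} \setminus \{\altopt_{i}\})$ for $i = 1,\dots,k$. Each $\altopt_{i}$ lies in $\altoptset_{i-1} \cap \nonposopts$ (since $\altopt_{i} \in \nonposopts$ and $\altopt_{i}$ has not been removed at earlier steps — the earlier steps only add $0$ and remove the distinct options $\altopt_{1},\dots,\altopt_{i-1}$), so Lemma~\ref{lem:replacing:nonpositives:by:zero} yields $\altoptset_{i} \in \rejectset$. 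After $k$ steps we reach $\altoptset_{k} = \{0\} \cup \optset \in \rejectset$. If $0 \in \optset$ then $\altoptset_{k} = \optset$ and we are done; otherwise, Axiom~\ref{ax:rejects:removezero} gives $\optset = \altoptset_{k} \setminus \{0\} \in \rejectset$.

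The only mild subtlety I foresee is the bookkeeping that ensures each $\altopt_{i}$ still belongs to $\altoptset_{i-1}$ when we invoke the lemma; this is fine because the $\altopt_{i}$ are distinct and only $0$ (which differs from each $\altopt_{i}$ whenever $\altopt_{i} \neq 0$, and the case $\altopt_{i} = 0$ is trivial since then $\{0\} \cup (\altoptset_{i-1} \setminus \{0\}) = \altoptset_{i-1}$) is added by the preceding steps. Apart from this, the proof is a straightforward assembly of Axiom~\ref{ax:rejects:removezero} and Lemma~\ref{lem:replacing:nonpositives:by:zero}; I do not anticipate any genuine obstacle.
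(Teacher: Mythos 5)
Your proof is correct and follows essentially the same route as the paper's: both handle $\rejectset\subseteq\RN\group{\rejectset}$ trivially from the definition, and both settle the converse inclusion by repeatedly applying Lemma~\ref{lem:replacing:nonpositives:by:zero} to turn non-positive options into $0$ and then removing that $0$ via Axiom~\ref{ax:rejects:removezero}. The only cosmetic difference is that you replace just the surplus elements of $\altoptset\setminus\optset$ and land directly on $\set{0}\cup\optset$, whereas the paper replaces all of $\altoptset\cap\nonposopts$ to obtain $\altoptset\setminus\nonposopts\in\rejectset$ and then closes with one extra appeal to monotonicity [Axiom~\ref{ax:rejects:mono}], which your variant avoids.
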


\begin{proof}
% Checked and corrected by Gert
That $\rejectset\subseteq\RN\group{\rejectset}$ is an immediate consequence of the definition of the $\RN$ operator. 
To prove that $\RN\group{\rejectset}\subseteq\rejectset$, consider any $\optset\in\RN\group{\rejectset}$, which means that there is some $\altoptset\in\rejectset$ such that $\altoptset\setminus\nonposopts\subseteq\optset\subseteq\altoptset$. 
We need to prove that $\optset\in\rejectset$. 
Since $\rejectset$ satisfies Axiom~\ref{ax:rejects:mono}, it suffices to prove that $\altoptset\setminus\nonposopts\in\rejectset$.

If $\altoptset\cap\nonposopts=\emptyset$, then $\altoptset\setminus\nonposopts=\altoptset\in\rejectset$. 
Therefore, without loss of generality, we may assume that $\altoptset\cap\nonposopts\neq\emptyset$. 
For any $\opt\in\altoptset\cap\nonposopts$, Lemma~\ref{lem:replacing:nonpositives:by:zero} implies that we may replace $\opt$ by $0$ and still be guaranteed that the resulting set belongs to $\rejectset$. 
Hence, we can replace all elements of $\altoptset\cap\nonposopts$ with $0$ and still be guaranteed that the result $\altoptset'\coloneqq\set{0}\cup(\altoptset\setminus\nonposopts)$ belongs to $\rejectset$.
Applying Axiom~\ref{ax:rejects:removezero} now guarantees that, indeed, $\altoptset\setminus\nonposopts=\altoptset'\setminus\set{0}\in\rejectset$.
\end{proof}

\begin{proposition}\label{prop:removal:of:nonpositives:gewijzigde:axiomas}
%% Met oorspronkelijke axioma's en nummering
% Consider any set of desirable option sets $\rejectset\in\rejectsets$.
% Then\/ $\RN\group{\rejectset}$ satisfies Axiom~\ref{ax:rejects:removezero}.
% Moreover, if $\rejectset$ satisfies Axioms~\ref{ax:rejects:nonempty}, \ref{ax:rejects:pos}, \ref{ax:rejects:cone} and\/~\ref{ax:rejects:mono} and does not contain $\set{0}$, then so does\/ $\RN\group{\rejectset}$.
Consider any set of desirable option sets $\rejectset\in\rejectsets$.
Then\/ $\RN\group{\rejectset}$ satisfies Axiom~\ref{ax:rejects:removezero}.
Moreover, if $\rejectset$ satisfies Axioms~\ref{ax:rejects:nozero}, \ref{ax:rejects:pos}, \ref{ax:rejects:cone} and\/~\ref{ax:rejects:mono} and does not contain $\emptyset$, then so does\/ $\RN\group{\rejectset}$.
\end{proposition}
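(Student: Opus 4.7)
The plan is to verify each inherited property in turn, exploiting the witness structure in the definition of $\RN(\cdot)$: any $\optset\in\RN(\rejectset)$ comes equipped with some $\altoptset\in\rejectset$ satisfying $\altoptset\setminus\nonposopts\subseteq\optset\subseteq\altoptset$. I would first dispatch the easier cases. Axiom~\ref{ax:rejects:removezero} follows immediately because $0\in\nonposopts$, so any witness $\altoptset$ for $\optset$ is also a witness for $\optset\setminus\set{0}$; this step needs no hypothesis on $\rejectset$ at all. Axiom~\ref{ax:rejects:pos} is similar: for $\opt\in\posopts$, $\set{\opt}\in\rejectset$ by the assumption, and $\set{\opt}$ is its own witness since $\opt\notin\nonposopts$. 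Axiom~\ref{ax:rejects:mono} uses that, for $\optset[1]\subseteq\optset[2]$ with a witness $\altoptset$ for $\optset[1]$, the set $\altoptset\cup\optset[2]$ lies in $\rejectset$ by Axiom~\ref{ax:rejects:mono} applied to $\rejectset$, and is clearly a witness for $\optset[2]$.

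For Axiom~\ref{ax:rejects:nozero} and the claim that $\emptyset\notin\RN(\rejectset)$, the two cases reduce to the same observation: a hypothetical witness $\altoptset$ for either $\set{0}$ or $\emptyset$ would satisfy $\altoptset\setminus\nonposopts=\emptyset$, i.e.\ $\altoptset\subseteq\nonposopts$. Since $\emptyset\notin\rejectset$ by hypothesis, $\altoptset$ is non-empty, and I would apply Lemma~\ref{lem:replacing:nonpositives:by:zero} iteratively --- whose hypotheses, Axioms~\ref{ax:rejects:pos} and~\ref{ax:rejects:cone}, are assumed --- to replace each element of $\altoptset$ by $0$ while remaining inside $\rejectset$; after finitely many steps this would yield $\set{0}\in\rejectset$, contradicting Axiom~\ref{ax:rejects:nozero}.

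The main work is Axiom~\ref{ax:rejects:cone}. Given $\optset[1],\optset[2]\in\RN(\rejectset)$ with witnesses $\altoptset[1],\altoptset[2]\in\rejectset$, and coefficients $(\lambda_{\opt,\altopt},\mu_{\opt,\altopt})>0$ for $\opt\in\optset[1]$ and $\altopt\in\optset[2]$, the plan is to extend the coefficient family to every pair in $\altoptset[1]\times\altoptset[2]$ by setting $(1,0)$ whenever $\opt\in\altoptset[1]\setminus\optset[1]$, and $(0,1)$ whenever $\opt\in\optset[1]$ and $\altopt\in\altoptset[2]\setminus\optset[2]$. Applying Axiom~\ref{ax:rejects:cone} for $\rejectset$ then yields that the combined set $\altoptset$ of all extended linear combinations lies in $\rejectset$. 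By construction, every element of $\altoptset$ outside the desired combined set $\optset$ over $\optset[1]\times\optset[2]$ equals an option from either $\altoptset[1]\setminus\optset[1]$ or $\altoptset[2]\setminus\optset[2]$, and both of these sets lie in $\nonposopts$ by the witness property. This gives $\altoptset\setminus\nonposopts\subseteq\optset\subseteq\altoptset$, certifying $\optset\in\RN(\rejectset)$. I expect the careful bookkeeping in this last step --- verifying that the extended coefficients remain in the positive cone $(\cdot,\cdot)>0$ and that every ``new'' combination really does collapse to a single nonpositive option --- to be the main obstacle in the proof.
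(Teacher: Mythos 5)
Your proposal is correct and follows essentially the same route as the paper's proof: the same witness bookkeeping for Axioms~\ref{ax:rejects:removezero}, \ref{ax:rejects:pos} and~\ref{ax:rejects:mono}, the same reduction of the $\set{0}$ and $\emptyset$ cases to a witness contained in $\nonposopts$ handled via Lemma~\ref{lem:replacing:nonpositives:by:zero}, and the same extension of the coefficient family by $(1,0)$ and $(0,1)$ on $(\altoptset[1]\setminus\optset[1])\times\altoptset[2]$ and $\optset[1]\times(\altoptset[2]\setminus\optset[2])$ for Axiom~\ref{ax:rejects:cone}. The ``bookkeeping'' you flag as the main obstacle indeed goes through exactly as you sketch, since the extra combinations collapse to single options lying in $\nonposopts$ by the witness property.
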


\begin{proof}
% Checked, corrected and improved by Gert
The proof of the first statement is trivial.
For the second statement, assume that $\rejectset$ does not contain $\emptyset$, and satisfies Axioms~\ref{ax:rejects:nozero}, \ref{ax:rejects:pos}, \ref{ax:rejects:cone} and~\ref{ax:rejects:mono}.

To prove that $\RN\group{\rejectset}$ satisfies Axiom~\ref{ax:rejects:nozero} and does not contain $\emptyset$, assume \emph{ex absurdo} that $\emptyset\in\RN\group{\rejectset}$ or $\set{0}\in\RN\group{\rejectset}$. 
We then find that there is some $\altoptset\in\rejectset$ such that $\altoptset\setminus\nonposopts\subseteq\emptyset\subseteq\altoptset$ or that there is some $\altoptset\in\rejectset$ such that $\altoptset\setminus\nonposopts\subseteq\set{0}\subseteq\altoptset$. 
In both cases, it follows that $\altoptset\subseteq\nonposopts$. 
If $\altoptset=\emptyset$, this contradicts our assumption that $\rejectset$ does not contain $\emptyset$. 
If $\altoptset\neq\emptyset$, it follows from Lemma~\ref{lem:replacing:nonpositives:by:zero} that we can replace every $\opt\in\altoptset$ by $0$ and still be guaranteed that the resulting option set~$\set{0}$ belongs to $\rejectset$, contradicting our assumption that $\rejectset$ satisfies Axiom~\ref{ax:rejects:nozero}.

To prove that $\RN\group{\rejectset}$ satisfies Axiom~\ref{ax:rejects:pos}, simply observe that the operator $\RN$ never removes option sets from a set of desirable option sets, so the option sets $\set{\opt}$, $\opt\in\posopts$, which belong to $\rejectset$ by Axiom~\ref{ax:rejects:pos}, will also belong to the larger $\RN\group{\rejectset}$.

To prove that $\RN\group{\rejectset}$ satisfies Axiom~\ref{ax:rejects:cone}, consider any $\optset[1],\optset[2]\in\RN\group{\rejectset}$, meaning that there are $\altoptset[1],\altoptset[2]\in\rejectset$ such that $\altoptset[1]\setminus\nonposopts\subseteq\optset[1]\subseteq\altoptset[1]$ and $\altoptset[2]\setminus\nonposopts\subseteq\optset[2]\subseteq\altoptset[2]$.
For any $\opt\in\optset[1]$ and $\altopt\in\optset[2]$, we choose $(\lambda_{\opt,\altopt},\mu_{\opt,\altopt})>0$, and let 
\begin{equation*}
\optset
\coloneqq\cset{\lambda_{\opt,\altopt}\opt+\mu_{\opt,\altopt}\altopt}{\opt\in\optset[1],\altopt\in\optset[2]}.
\end{equation*}
Then we have to prove that $\optset\in\RN\group{\rejectset}$.
Since $\rejectset$ satisfies Axiom~\ref{ax:rejects:cone}, we infer from $\altoptset[1],\altoptset[2]\in\rejectset$ that
\begin{align*}
\altoptsettoo
\coloneqq&\cset{\lambda_{\opt,\altopt}\opt+\mu_{\opt,\altopt}\altopt}
{\opt\in\optset[1],\altopt\in\optset[2]}\\
&\qquad\cup\cset{1\opt+0\altopt}{\opt\in\altoptset[1]\setminus\optset[1],\altopt\in\altoptset[2]}
\cup\cset{0\opt+1\altopt}{\opt\in\optset[1],\altopt\in\altoptset[2]\setminus\optset[2]}\\
%&\qquad\cup\cset{1\opt+1\altopt}{\opt\in\altoptset[1]\setminus\optset[1],\altopt\in\altoptset[2]\setminus\optset[2]}\\
=&\,\optset\cup
\cset{\opt}{\opt\in\altoptset[1]\setminus\optset[1],\altopt\in\altoptset[2]}
\cup\cset{\altopt}{\opt\in\optset[1],\altopt\in\altoptset[2]\setminus\optset[2]}
%(\altoptset[1]\setminus\optset[1])\cup(\altoptset[2]\setminus\optset[2])\cup\cset{\opt+\altopt}{\opt\in\altoptset[1]\setminus\optset[1],\altopt\in\altoptset[2]\setminus\optset[2]}
\end{align*}
belongs to $\rejectset$ as well. 
Furthermore, since $\altoptset[1]\setminus\nonposopts\subseteq\optset[1]$ and $\altoptset[2]\setminus\nonposopts\subseteq\optset[2]$ imply that $\altoptset[1]\setminus\optset[1]\subseteq\nonposopts$ and $\altoptset[2]\setminus\optset[2]\subseteq\nonposopts$, we see that
\begin{equation*}
%(\altoptset[1]\setminus\optset[1])\cup(\altoptset[2]\setminus\optset[2])\cup\cset{\opt+\altopt}{\opt\in\altoptset[1]\setminus\optset[1],\altopt\in\altoptset[2]\setminus\optset[2]}
\cset{\opt}{\opt\in\altoptset[1]\setminus\optset[1],\altopt\in\altoptset[2]}
\cup\cset{\altopt}{\opt\in\optset[1],\altopt\in\altoptset[2]\setminus\optset[2]}
\subseteq\nonposopts.
\end{equation*}
Hence, $\altoptsettoo\setminus\nonposopts\subseteq\optset\subseteq\altoptsettoo$. 
Since $\altoptsettoo\in\rejectset$, this implies that, indeed, $\optset\in\RN\group{\rejectset}$.

Finally, to prove that $\RN\group{\rejectset}$ satisfies Axiom~\ref{ax:rejects:mono}, consider any $\optset[1]\in\RN\group{\rejectset}$ and any $\optset[2]\in\optsets$ such that $\optset[1]\subseteq\optset[2]$.
We need to prove that $\optset[2]\in\rejectset$.
That $\optset[1]\in\RN\group{\rejectset} $ implies that there is some $\altoptset[1]\in\rejectset$ such that $\altoptset[1]\setminus\nonposopts\subseteq\optset[1]\subseteq\altoptset[1]$.
Let $\altoptset[2]\coloneqq\altoptset[1]\cup\group{\optset[2]\setminus\optset[1]}$, then $\altoptset[1]\subseteq\altoptset[2]$ and therefore also $\altoptset[2]\in\rejectset$, because $\rejectset$ satisfies Axiom~\ref{ax:rejects:mono}.
We now infer from $\altoptset[1]\setminus\nonposopts\subseteq\optset[1]\subseteq\altoptset[1]$ that
\begin{equation*}
\altoptset[2]\setminus\nonposopts
\subseteq\group{\altoptset[1]\setminus\nonposopts}\cup\group{\optset[2]\setminus\optset[1]}
\subseteq\optset[1]\cup\group{\optset[2]\setminus\optset[1]}
\subseteq\altoptset[1]\cup\group{\optset[2]\setminus\optset[1]}.
\end{equation*}
Since $\optset[1]\cup\group{\optset[2]\setminus\optset[1]}=\optset[2]$, this allows us to conclude that $\altoptset[2]\setminus\nonposopts\subseteq\optset[2]\subseteq\altoptset[2]$, and therefore, since $\altoptset[2]\in\rejectset$, that, indeed, $\optset[2]\in\RN\group{\rejectset}$.
\end{proof}

\begin{lemma}\label{lem:affinespaces}
Let $\optset,\altoptset\in\optsets$ be option sets, and consider any non-zero $\opt[o]\in\opts$. 
Then there are $\alpha_{\opt}$, $\opt\in\optset$ such that $\opt-\alpha_{\opt}\opt[o]\notin\altoptset$ and $\opt-\alpha_{\opt}\opt[o]\neq\altopt-\alpha_{\altopt}\opt[o]$ for all $\opt,\altopt$ in $\optset$.
\end{lemma}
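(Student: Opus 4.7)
The plan is a straightforward counting argument exploiting that $\optset$ and $\altoptset$ are finite, while $\reals$ is infinite and $\opt[o]\neq0$. I would choose the scalars $\alpha_{\opt}$ one at a time, for each $\opt\in\optset$ in turn, and argue that at each step only finitely many real values are excluded, so an admissible choice always exists.

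More concretely, enumerate $\optset=\set{\opt[1],\dots,\opt[n]}$ and suppose that admissible scalars $\alpha_{\opt[1]},\dots,\alpha_{\opt[i-1]}$ have already been chosen. For $\alpha_{\opt[i]}$ I must rule out two kinds of bad values. First, the values $\alpha$ for which $\opt[i]-\alpha\opt[o]\in\altoptset$: for each fixed $\altopt\in\altoptset$ the equation $\opt[i]-\alpha\opt[o]=\altopt$ has at most one solution in $\alpha$, because $\opt[o]\neq0$ and we are in a real vector space (so $\lambda\opt[o]=0$ forces $\lambda=0$); hence this forbids at most $\card{\altoptset}$ real numbers. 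Second, for each $j<i$, the condition $\opt[i]-\alpha\opt[o]=\opt[j]-\alpha_{\opt[j]}\opt[o]$ rearranges to $(\alpha-\alpha_{\opt[j]})\opt[o]=\opt[j]-\opt[i]$, again fixing at most one value of $\alpha$ (no value if $\opt[j]-\opt[i]$ is not a scalar multiple of $\opt[o]$). So in total only finitely many values of $\alpha$ are forbidden, and since $\reals$ is infinite, a permissible $\alpha_{\opt[i]}$ exists.

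Iterating this choice for $i=1,\dots,n$ yields scalars $\alpha_{\opt}$ with the required properties. I do not foresee any real obstacle here; the only mild subtlety is to notice that the second requirement must be phrased symmetrically in $\opt$ and $\altopt$, which the inductive construction automatically guarantees.
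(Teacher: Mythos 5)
Your proof is correct, and it rests on the same basic fact as the paper's—that for fixed $\opt$ the candidates $\opt-\alpha\opt[o]$ form an infinite line (since $\opt[o]\neq0$) from which only finitely many points are forbidden—but the bookkeeping is genuinely different. The paper partitions $\optset$ into its intersections with the affine lines $\opt+\linspan\set{\opt[o]}$ and, within each line, picks as many distinct points avoiding the finite $\altoptset$ as that line contains elements of $\optset$; collisions across different lines are then automatic, because $\opt-\alpha_{\opt}\opt[o]$ stays on the line of $\opt$. You instead choose greedily, element by element, excluding at step $i$ at most $\card{\altoptset}+(i-1)$ scalars and appealing to the infinitude of $\reals$. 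Your route makes explicit what the paper leaves tacit: that $\alpha\mapsto\opt-\alpha\opt[o]$ is injective because $\lambda\opt[o]=0$ forces $\lambda=0$, so each constraint kills at most one $\alpha$; and since your argument shows the set of \emph{excluded scalars} is finite, it delivers for free the strengthening $\alpha_{\opt}>0$, which is the form in which Lemma~\ref{lem:Kstarstar} actually invokes this lemma (the paper's coset phrasing also yields it, but less directly). The paper's route buys brevity and the geometric picture behind the lemma's name. One trivial slip in your write-up: $\opt[i]-\alpha\opt[o]=\opt[j]-\alpha_{\opt[j]}\opt[o]$ rearranges to $(\alpha-\alpha_{\opt[j]})\opt[o]=\opt[i]-\opt[j]$, not $\opt[j]-\opt[i]$; either way at most one value of $\alpha$ is excluded, so nothing is affected.
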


\begin{proof}
% Checked by Gert, after a correction in the formulation of the lemma
Partition the finite set $\optset$ into the finite number of disjoint subsets $\optset[k]$ of options $\opt$ belonging to the same affine space $\cset{\opt+\beta\opt[o]}{\beta\in\reals}=\opt+\linspan\set{\opt[o]}$ parallel to $\linspan\set{\opt[o]}$.
When $\optset[k]$ has $n_k$ elements, choose $n_k$ different options in the corresponding affine space that are not in the finite $\altoptset$, which is always possible.
\end{proof}

\begin{lemma}\label{lem:Kstarstar}
Consider a coherent set of desirable option sets $\rejectset\in\cohrejectsets$ and any $\optset[o]\in\rejectset$ such that $\card{\optset[o]}\geq2$ and $\optset[o]\setminus\set{\opt}\notin\rejectset$ for all $\opt\in\optset[o]$. 
Choose any $\opt[o]\in\optset[o]$ and let
\begin{equation}\label{eq:lem:Kstarstar}
\rejectset^{**}
\coloneqq
\cset[\Big]{\cset[\big]{\lambda_{\altopt}\altopt+\mu_{\altopt}\opt[o]}{\altopt\in\altoptset}}
{\altoptset\in\rejectset,(\forall\altopt\in\altoptset)(\lambda_{\altopt},\mu_{\altopt})>0}.
\end{equation}
Then $\rejectset^{*}\coloneqq\RN\group{\rejectset^{**}}$ is a coherent set of desirable option sets that is a superset of $\rejectset$ and contains $\set{\opt[o]}$. 
Furthermore, $\set{\opt[o]}\notin\rejectset$ and $\opt[o]\not\optlteq0$.
\end{lemma}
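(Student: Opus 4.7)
The plan is to verify the five assertions in the order (e), (d), (b), (c), (a), with (a) carrying most of the work. For (e), I argue by contradiction: if $\opt[o] \optlteq 0$, then $\opt[o] \in \nonposopts \cap \optset[o]$, and Lemma~\ref{lem:replacing:nonpositives:by:zero} allows me to replace $\opt[o]$ by $0$ inside $\optset[o] \in \rejectset$ to obtain $\set{0} \cup (\optset[o] \setminus \set{\opt[o]}) \in \rejectset$; this in any case forces $\optset[o] \setminus \set{\opt[o]} \in \rejectset$ (directly when $0$ already lies in $\optset[o] \setminus \set{\opt[o]}$, and through Axiom~\ref{ax:rejects:removezero} otherwise), contradicting the hypothesis on $\optset[o]$. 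For (d), if $\set{\opt[o]} \in \rejectset$, picking any $\altopt \in \optset[o] \setminus \set{\opt[o]}$ (non-empty since $\card{\optset[o]} \geq 2$) and invoking Axiom~\ref{ax:rejects:mono} on the inclusion $\set{\opt[o]} \subseteq \optset[o] \setminus \set{\altopt}$ already gives the contradiction $\optset[o] \setminus \set{\altopt} \in \rejectset$. Both (b) and (c) are immediate from the definition of $\rejectset^{**}$ together with the inclusion $\rejectset^{**} \subseteq \RN(\rejectset^{**}) = \rejectset^*$: every $\optset \in \rejectset$ belongs to $\rejectset^{**}$ via $(\lambda_\altopt, \mu_\altopt) = (1, 0)$ for all $\altopt \in \optset$, while $\set{\opt[o]}$ arises by taking $\altoptset = \optset[o]$ with $(\lambda_\altopt, \mu_\altopt) = (0, 1)$ throughout.

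For (a), my strategy is to apply Proposition~\ref{prop:removal:of:nonpositives:gewijzigde:axiomas}, which reduces coherence of $\rejectset^*$ to showing that $\rejectset^{**}$ satisfies Axioms~\ref{ax:rejects:nozero}--\ref{ax:rejects:mono} and does not contain $\emptyset$. The empty set is excluded because \eqref{eq:lem:Kstarstar} only produces $\emptyset$ from $\altoptset = \emptyset$, and $\emptyset \notin \rejectset$ for any coherent $\rejectset$. Axiom~\ref{ax:rejects:pos} for $\rejectset^{**}$ is handled by taking $\altoptset = \set{\opt}$ for $\opt \in \posopts$ with $(\lambda, \mu) = (1, 0)$, and Axiom~\ref{ax:rejects:mono} by extending $\altoptset$ via Axiom~\ref{ax:rejects:mono} for $\rejectset$ and assigning $(1, 0)$ to every extra element. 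For Axiom~\ref{ax:rejects:nozero}, assume $\set{0} \in \rejectset^{**}$: there are $\altoptset \in \rejectset$ and $(\lambda_\altopt, \mu_\altopt) > 0$ with $\lambda_\altopt \altopt + \mu_\altopt \opt[o] = 0$ for every $\altopt \in \altoptset$. Since $\opt[o] \neq 0$ by~(e), we are forced to have $\lambda_\altopt > 0$ and hence $\altopt = -c_\altopt \opt[o]$ with $c_\altopt \geq 0$; after removing $0$ via Axiom~\ref{ax:rejects:removezero} for $\rejectset$, Axiom~\ref{ax:rejects:cone} applied to this reduced $\altoptset$ with itself using coefficients $(1/c_\altopt, 0)$ yields $\set{-\opt[o]} \in \rejectset$, a second application of Axiom~\ref{ax:rejects:cone} to $\optset[o]$ and $\set{-\opt[o]}$ with weights $(1, 1)$ on the pair $(\opt[o], -\opt[o])$ and $(1, 0)$ elsewhere produces $\set{0} \cup (\optset[o] \setminus \set{\opt[o]}) \in \rejectset$, and Axiom~\ref{ax:rejects:removezero} once more yields the contradiction.

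The principal obstacle is Axiom~\ref{ax:rejects:cone} for $\rejectset^{**}$. Given $\optset[1], \optset[2] \in \rejectset^{**}$ generated by $\altoptset[1], \altoptset[2] \in \rejectset$ with coefficient families $(\lambda^{(i)}, \mu^{(i)})$, each element of the target product set expands as $a\altopt + b\altopttoo + c\opt[o]$ with $a, b, c \geq 0$ and $a + b + c > 0$. My plan is to apply Axiom~\ref{ax:rejects:cone} for $\rejectset$ to $\altoptset[1]$ and $\altoptset[2]$ with carefully chosen coefficients $(\alpha_{\altopt, \altopttoo}, \beta_{\altopt, \altopttoo}) > 0$ to produce an intermediate $\altoptsettoo \in \rejectset$ whose elements encode the $a\altopt + b\altopttoo$ contributions, and then to reintroduce the $c\opt[o]$ shifts via the $\rejectset^{**}$-machinery applied to $\altoptsettoo$ with $(1, c)$-type weights. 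The technical difficulty is combinatorial bookkeeping: the map $\altopt \mapsto \lambda^{(1)}_\altopt \altopt + \mu^{(1)}_\altopt \opt[o]$ from $\altoptset[1]$ onto $\optset[1]$ need not be injective, pairs with $a + b = 0$ produce pure $\opt[o]$-multiples that must be accommodated separately, and distinct pairs may collide into a common element of $\altoptsettoo$. I would rely on Lemma~\ref{lem:affinespaces} to furnish distinct $\opt[o]$-shifts that prevent unwanted coincidences, thereby enabling a consistent element-by-element coefficient assignment on $\altoptsettoo$.
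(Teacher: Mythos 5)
Your overall architecture matches the paper's: reduce coherence of $\rejectset^*$ to Axioms~\ref{ax:rejects:nozero}--\ref{ax:rejects:mono} plus $\emptyset\notin\rejectset^{**}$ via Proposition~\ref{prop:removal:of:nonpositives:gewijzigde:axiomas}, and your treatments of (d), (e), \ref{ax:rejects:nozero} (a harmless variant that first extracts $\set{-\opt[o]}\in\rejectset$ before applying \ref{ax:rejects:cone} to $\optset[o]$, where the paper folds both steps into a single application with pairs $(\delta_{\altopt},1)$), \ref{ax:rejects:pos}, and the superset claims are all sound. But your argument for Axiom~\ref{ax:rejects:mono} has a genuine gap. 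You propose to extend $\altoptset[1]$ to $\altoptset[2]\coloneqq\altoptset[1]\cup\group{\optset[2]\setminus\optset[1]}$ and assign $(1,0)$ to every extra element. The definition~\eqref{eq:lem:Kstarstar} assigns \emph{one} coefficient pair to each element of $\altoptset[2]$, so this fails whenever $\group{\optset[2]\setminus\optset[1]}\cap\altoptset[1]\neq\emptyset$: take $\altoptset[1]=\set{\altopt}$ with $(\lambda_{\altopt},\mu_{\altopt})=(1,1)$, so $\optset[1]=\set{\altopt+\opt[o]}$, and $\optset[2]=\set{\altopt+\opt[o],\altopt}$; then $\altoptset[2]=\set{\altopt}$ is a singleton whose image under any single pair cannot be the two-element set $\optset[2]$. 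This is precisely where the paper invokes Lemma~\ref{lem:affinespaces}: the new elements $\opt\in\optset[2]\setminus\optset[1]$ are represented not by themselves but by shifted options $\opt-\alpha_{\opt}\opt[o]$ chosen outside $\altoptset[1]$ and pairwise distinct, with pairs $(1,\alpha_{\opt})$. You mention Lemma~\ref{lem:affinespaces}, but you deploy it in the \ref{ax:rejects:cone} step, where the paper does not use it at all.

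This mislocation also undermines your sketch for \ref{ax:rejects:cone}. The paper resolves the many-to-one bookkeeping there (several pairs $(\altopt[1],\altopt[2])$ collapsing onto one element of the intermediate set $\altoptset$, including the degenerate case $\alpha'_{\altopt[1],\altopt[2]}\lambda_{1,\altopt[1]}+\beta'_{\altopt[1],\altopt[2]}\lambda_{2,\altopt[2]}=0$) not by perturbing to force injectivity, but by selecting \emph{one} representative pair per element of $\altoptset$, obtaining $\altoptsettoo'\subseteq\altoptsettoo$ with $\altoptsettoo'\in\rejectset^{**}$, and then appealing to Axiom~\ref{ax:rejects:mono} \emph{for $\rejectset^{**}$} to conclude $\altoptsettoo\in\rejectset^{**}$. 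So the monotonicity axiom must be established before the cone axiom, and it carries the collision-handling burden via Lemma~\ref{lem:affinespaces}; it is unclear that $\opt[o]$-shifts can repair collisions in the cone step directly (e.g.\ when the paper sets $\gamma_{\altopt[1],\altopt[2]}=0$, the colliding elements still demand distinct $\delta$-coefficients on a single element of $\altoptset$). As written, your \ref{ax:rejects:cone} argument leans on a \ref{ax:rejects:mono} proof that does not go through, so both steps need the paper's repair.
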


\begin{proof}[Proof of Lemma~\ref{lem:Kstarstar}]
% Checked, improved and corrected by Gert. Rather many important corrections, please check carefully
To prove that $\set{\opt[o]}\notin\rejectset$, assume {\itshape ex absurdo} that $\set{\opt[o]}\in\rejectset$.
Since $\card{\optset[o]\setminus\set{\opt[o]}}\geq1$, we can pick any element $\altopt\in\optset[o]\setminus\set{\opt[o]}$, and then $\set{\opt[o]}\subseteq\optset[o]\setminus\set{\altopt}$ and therefore $\optset[o]\setminus\set{\altopt}\in\rejectset$ by Axiom~\ref{ax:rejects:mono}, contradicting the assumptions.
To prove that $\opt[o]\not\optlteq0$, assume {\itshape ex absurdo} that $\opt[o]\in\nonposopts$, then we infer that also $\optset[o]\setminus\set{\opt[o]}\in\rejectset$ [use Proposition~\ref{prop:ax:rejects:RN:equivalents} and the coherence of $\rejectset$], contradicting the assumptions. 
To prove that $\set{\opt[o]}\in\rejectset^{*}$, it suffices to notice that $\set{\opt[o]}=\cset{0\altopt+1\opt[o]}{\altopt\in\optset[o]}\in\rejectset^{**}$, whence also $\set{\opt[o]}\in\rejectset^{*}$. 
Similarly, since $\rejectset^{**}$ is clearly a superset of $\rejectset$, the same is true for $\rejectset^*$.

It only remains to prove, therefore, that $\rejectset^*$ is coherent. 
To this end, we intend to show that the set of desirable option sets $\rejectset^{**}$ satisfies Axioms~\ref{ax:rejects:nozero}, \ref{ax:rejects:pos}, \ref{ax:rejects:cone} and~\ref{ax:rejects:mono} and that $\emptyset\notin\rejectset^{**}$. 
The coherence of $\rejectset^*$ will then be an immediate consequence of Proposition~\ref{prop:removal:of:nonpositives:gewijzigde:axiomas}.

For Axiom~\ref{ax:rejects:nozero}, assume {\itshape ex absurdo} that $\set{0}\in\rejectset^{**}$, meaning that there is some $\altoptset\in\rejectset$ and, for all $\altopt\in\altoptset$, some choice of $(\lambda_{\altopt},\mu_{\altopt})>0$, such that $\cset{\lambda_{\altopt}\altopt+\mu_{\altopt}\opt[o]}{\altopt\in\altoptset}=\set{0}$. 
Hence, $\altoptset\neq\emptyset$ and $\lambda_{\altopt}\altopt+\mu_{\altopt}\opt[o]=0$ for all $\altopt\in\altoptset$.

Recall that we already know that $\opt[o]\neq0$. 
For any $\altopt\in\altoptset$, $\lambda_{\altopt}\altopt+\mu_{\altopt}\opt[o]=0$ implies that $\lambda_{\altopt}>0$, because otherwise, since $(\lambda_{\altopt},\mu_{\altopt})>0$, $\lambda_{\altopt}=0$ would imply that $\mu_{\altopt}>0$ and therefore $\opt[o]=0$, a contradiction.
Hence, for all $\altopt\in\altoptset$, $\altopt=-\delta_{\altopt}\opt[o]$ with $\delta_{\altopt}\coloneqq\frac{\mu_{\altopt}}{\lambda_{\altopt}}\geq0$. 
Now let $(\kappa_{\opt,\altopt},\rho_{\opt,\altopt})\coloneqq(1,0)$ for all $\opt\in\optset[o]\setminus\set{\opt[o]}$ and $\altopt\in\altoptset$, and let $(\kappa_{\opt[o],\altopt},\rho_{\opt[o],\altopt})\coloneqq(\delta_{\altopt},1)$ for all $\altopt\in\altoptset$. 
Then
\begin{align*}
\cset{\kappa_{\opt,\altopt}\opt+\rho_{\opt,\altopt}\altopt}{\opt\in\optset[o],\altopt\in\altoptset}
&=
\cset{\opt}{\opt\in\optset[o]\setminus\set{\opt[o]},\altopt\in\altoptset}
\cup
\cset{\delta_{\altopt}\opt[o]+\altopt}{\altopt\in\altoptset}\\
&=
\cset{\opt}{\opt\in\optset[o]\setminus\set{\opt[o]},\altopt\in\altoptset}
\cup
\cset{0}{\altopt\in\altoptset}\\
&=\set{0}\cup\group{\optset[o]\setminus\set{\opt[o]}},
\end{align*}
where the last equality follows from $\altoptset\neq\emptyset$. 
However, since $\optset[o]\in\rejectset$ and $\altoptset\in\rejectset$, the coherence of $\rejectset$ [Axiom~\ref{ax:rejects:cone}] implies that $\cset{\kappa_{\opt,\altopt}\opt+\rho_{\opt,\altopt}\altopt}{\opt\in\optset[o],\altopt\in\altoptset}
\in\rejectset$. 
We therefore find that $\set{0}\cup\group{\optset[o]\setminus\set{\opt[o]}}\in\rejectset$. 
The coherence of $\rejectset$ now guarantees that $\optset[o]\setminus\set{\opt[o]}\in\rejectset$ [use Axiom~\ref{ax:rejects:removezero} if $\set{0}\notin\optset[o]\setminus\set{\opt[o]}$], contradicting the assumptions.

For Axiom~\ref{ax:rejects:pos}, consider any $\opt\in\posopts$.
Then $\set{\opt}\in\rejectset$ because $\rejectset$ satisfies Axiom~\ref{ax:rejects:pos}. 
Since $\rejectset^{**}$ is a superset of $\rejectset$, we see that, indeed, also $\set{\opt}\in\rejectset^{**}$.

For Axiom~\ref{ax:rejects:mono}, consider any $\optset[1]\in\rejectset^{**}$ and any $\optset[2]\in\optsets$ such that $\optset[1]\subseteq\optset[2]$, then we must prove that also $\optset[2]\in\rejectset^{**}$.
Since $\optset[1]\in\rejectset^{**}$, we know that there is some $\altoptset[1]\in\rejectset$ and, for all $\altopt\in\altoptset[1]$, some choice of $(\lambda_{\altopt},\mu_{\altopt})>0$, such that
\begin{equation*}
\optset[1]
=\cset{\lambda_{\altopt}\altopt+\mu_{\altopt}\opt[o]}{\altopt\in\altoptset[1]}.
\end{equation*}
For every $\opt\in\optset[2]\setminus\optset[1]$, we now choose some real $\alpha_{\opt}>0$ such that $\opt-\alpha_{\opt}\opt[o]\notin\altoptset[1]$ and such that, for all $\opt,\opt'\in\optset[2]\setminus\optset[1]$, $\opt-\alpha_{\opt}\opt[o]\neq\opt'-\alpha_{\opt'}\opt[o]$. 
Since $\opt[o]\neq0$ and $\optset[1]$, $\optset[2]$ and $\altoptset[1]$ are finite, this is always possible, by Lemma~\ref{lem:affinespaces}.
Let 
\begin{equation*}
\altoptset[2]
\coloneqq\altoptset[1]\cup
\cset{\opt-\alpha_{\opt}\opt[o]}{\opt\in\optset[2]\setminus\optset[1]}
\end{equation*}
and, for each $\altopt\in\altoptset[2]\setminus\altoptset[1]$, let $\opt[\altopt]$ be the unique element of $\optset[2]\setminus\optset[1]$ for which $v=\opt[\altopt]-\alpha_{\opt[\altopt]}\opt[o]$, and let $(\lambda_{\altopt},\mu_{\altopt})\coloneqq(1,\alpha_{\opt[\altopt]})>0$.
We then see that
\begin{align*}
\optset[2]
&=\optset[1]\cup(\optset[2]\setminus\optset[1])\\
&=\cset{\lambda_{\altopt}\altopt+\mu_{\altopt}\opt[o]}{\altopt\in\altoptset[1]}
\cup\cset{\opt-\alpha_{\opt}\opt[o]+\alpha_{\opt}\opt[o]}{\opt\in\optset[2]\setminus\optset[1]}\\ 
&=\cset{\lambda_{\altopt}\altopt+\mu_{\altopt}\opt[o]}{\altopt\in\altoptset[1]}
\cup\cset{\altopt+\alpha_{\opt[\altopt]}\opt[o]}{\altopt\in\altoptset[2]\setminus\altoptset[1]}\\
&=\cset{\lambda_{\altopt}\altopt+\mu_{\altopt}\opt[o]}{\altopt\in\altoptset[2]}. 
\end{align*}
Furthermore, since $\altoptset[1]\in\rejectset$ and $\altoptset[1]\subseteq\altoptset[2]$, it follows from the coherence of $\rejectset$ and Axiom~\ref{ax:rejects:mono} that $\altoptset[2]\in\rejectset$. 
Hence, indeed, $\optset[2]\in\rejectset^{**}$.

For Axiom~\ref{ax:rejects:cone}, consider any $\optset[1],\optset[2]\in\rejectset^{**}$ and, for all $\opt[1]\in\optset[1]$ and $\opt[2]\in\optset[2]$, any choice of $(\alpha_{\opt[1],\opt[2]},\beta_{\opt[1],\opt[2]})>0$. 
Then we must prove that
\begin{equation*}
\altoptsettoo
\coloneqq\cset{\alpha_{\opt[1],\opt[2]}\opt[1]+\beta_{\opt[1],\opt[2]}\opt[2]}
{\opt[1]\in\optset[1],\opt[2]\in\optset[2]}
\in\rejectset^{**}.
\end{equation*}
Since $\optset[1],\optset[2]\in\rejectset^{**}$, there are $\altoptset[1],\altoptset[2]\in\rejectset$ and, for all $\altopt[1]\in\altoptset[1]$ and $\altopt[2]\in\altoptset[2]$, some choices of $(\lambda_{1,\altopt[1]},\mu_{1,\altopt[1]})>0$ and $(\lambda_{2,\altopt[2]},\mu_{2,\altopt[2]})>0$, such that
\begin{equation*}
\optset[1]
=\cset{\lambda_{1,\altopt[1]}\altopt[1]+\mu_{1,\altopt[1]}\opt[o]}
{\altopt[1]\in\altoptset[1]}
\text{ and }
\optset[2]
=\cset{\lambda_{2,\altopt[2]}\altopt[2]+\mu_{2,\altopt[2]}\opt[o]}
{\altopt[2]\in\altoptset[2]}.
\end{equation*}
Now fix any $\altopt[1]\in\altoptset[1]$ and $\altopt[2]\in\altoptset[2]$, and let
$(\alpha'_{\altopt[1],\altopt[2]},\beta'_{\altopt[1],\altopt[2]})\coloneqq(\alpha_{\opt[1],\opt[2]},\beta_{\opt[1],\opt[2]})>0$, with
$\opt[1]\coloneqq\lambda_{1,\altopt[1]}\altopt[1]+\mu_{1,\altopt[1]}\opt[o]$
and
$\opt[2]\coloneqq\lambda_{2,\altopt[2]}\altopt[2]+\mu_{2,\altopt[2]}\opt[o]$. 
Then
\begin{align*}
\altoptsettoo
&=\cset{\alpha'_{\altopt[1],\altopt[2]}\group{\lambda_{1,\altopt[1]}\altopt[1]+\mu_{1,\altopt[1]}\opt[o]}+\beta'_{\altopt[1],\altopt[2]}\group{\lambda_{2,\altopt[2]}\altopt[2]+\mu_{2,\altopt[2]}\opt[o]}}
{\altopt[1]\in\altoptset[1],\altopt[2]\in\altoptset[2]}%\\
%&=\{\alpha_{\opt[1],\opt[2]}\lambda_{1,\altopt[1]}\altopt[1]
%+\beta_{\opt[1],\opt[2]}\lambda_{2,\altopt[2]}\altopt[2]
%+\group{\alpha_{\opt[1],\opt[2]}\mu_{1,\altopt[1]}+\beta_{\opt[1],\opt[2]}\mu_{2,\altopt[2]}}\opt[o]\\
%&\hspace{24em}\colon\altopt[1]\in\altoptset[1],\altopt[2]\in\altoptset[2]\}
\end{align*}
We consider two cases. 
If $\alpha'_{\altopt[1],\altopt[2]}\lambda_{1,\altopt[1]}+\beta'_{\altopt[1],\altopt[2]}\lambda_{2,\altopt[2]}>0$, we let
\begin{align*}
(\kappa_{\altopt[1],\altopt[2]},\rho_{\altopt[1],\altopt[2]})
&\coloneqq(\alpha'_{\altopt[1],\altopt[2]}\lambda_{1,\altopt[1]},\beta'_{\altopt[1],\altopt[2]}\lambda_{2,\altopt[2]})>0,\\
(\gamma_{\altopt[1],\altopt[2]},\delta_{\altopt[1],\altopt[2]})
&\coloneqq(1,\alpha'_{\altopt[1],\altopt[2]}\mu_{1,\altopt[1]}+\beta'_{\altopt[1],\altopt[2]}\mu_{2,\altopt[2]})>0.
\end{align*}
If $\alpha'_{\altopt[1],\altopt[2]}\lambda_{1,\altopt[1]}+\beta'_{\altopt[1],\altopt[2]}\lambda_{2,\altopt[2]}=0$, we let
\begin{align*}
(\kappa_{\altopt[1],\altopt[2]},\rho_{\altopt[1],\altopt[2]})
&\coloneqq(1,1)>0,\\
(\gamma_{\altopt[1],\altopt[2]},\delta_{\altopt[1],\altopt[2]})
&\coloneqq(0,\alpha'_{\altopt[1],\altopt[2]}\mu_{1,\altopt[1]}+\beta'_{\altopt[1],\altopt[2]}\mu_{2,\altopt[2]})>0.
\end{align*}
In both cases, we find that
\begin{multline}\label{eq:uglyproof}
\gamma_{\altopt[1],\altopt[2]}(\kappa_{\altopt[1],\altopt[2]}\altopt[1]+\rho_{\altopt[1],\altopt[2]}\altopt[2])+\delta_{\altopt[1],\altopt[2]}\opt[o]\\
=
\alpha'_{\altopt[1],\altopt[2]}(
\lambda_{1,\altopt[1]}\altopt[1]+\mu_{1,\altopt[1]}\opt[o])
+\beta'_{\altopt[1],\altopt[2]}(
\lambda_{2,\altopt[2]}\altopt[2]+\mu_{2,\altopt[2]}\opt[o])\in\altoptsettoo. 
\end{multline}
Now let
\begin{equation*}
\altoptset\coloneqq\cset[\big]{\kappa_{\altopt[1],\altopt[2]}\altopt[1]+\rho_{\altopt[1],\altopt[2]}\altopt[2]}{\altopt[1]\in\altoptset[1],\altopt[2]\in\altoptset[2]}.
\end{equation*}
Then clearly, for all $\altopttoo\in\altoptset$, there are $\altopt[1]\in\altoptset[1]$ and $\altopt[2]\in\altoptset[2]$ such that $\altopttoo=\kappa_{\altopt[1],\altopt[2]}\altopt[1]+\rho_{\altopt[1],\altopt[2]}\altopt[2]$. 
However, there could be multiple such pairs. 
We choose any one such pair and denote its two elements by $\altopt[1,\altopttoo]$ and $\altopt[2,\altopttoo]$, respectively. 
Using this notation, we now define the set
\begin{equation*}%\label{eq:uglyproof2}
\altoptsettoo'
\coloneqq
\cset[\big]{\gamma_{\altopt[1,\altopttoo],\altopt[2,\altopttoo]}\altopttoo+\delta_{\altopt[1,\altopttoo],\altopt[2,\altopttoo]}\opt[o]}
{\altopttoo\in\altoptset}.
\end{equation*}
Since $\altoptset[1],\altoptset[2]\in\rejectset$, the coherence of $\rejectset$ [Axiom~\ref{ax:rejects:cone}] implies that $\altoptset\in\rejectset$, which in turn implies that $\altoptsettoo'\in\rejectset^{**}$. 
Also, since
\begin{align*}
\altoptsettoo'
=&
\cset[\big]{\gamma_{\altopt[1,\altopttoo],\altopt[2,\altopttoo]}\altopttoo+\delta_{\altopt[1,\altopttoo],\altopt[2,\altopttoo]}\opt[o]}
{\altopttoo\in\altoptset}\\
=&\cset[\big]{\gamma_{\altopt[1,\altopttoo],\altopt[2,\altopttoo]}\big(\kappa_{\altopt[1,\altopttoo],\altopt[2,\altopttoo]}\altopt[1,\altopttoo]+\rho_{\altopt[1,\altopttoo],\altopt[2,\altopttoo]}\altopt[2,\altopttoo]\big)+\delta_{\altopt[1,\altopttoo],\altopt[2,\altopttoo]}\opt[o]}
{\altopttoo\in\altoptset},
\end{align*}
we infer from Equation~\eqref{eq:uglyproof} that $\altoptsettoo'\subseteq\altoptsettoo$. 
Since we have already proved that $\rejectset^{**}$ satisfies Axiom~\ref{ax:rejects:mono}, this implies that, indeed, $\altoptsettoo\in\rejectset^{**}$.

It therefore now only remains to prove that $\emptyset\notin\rejectset^{**}$. 
Observe that that $\emptyset\notin\rejectset$ because $\rejectset$ is coherent [combine Axioms~\ref{ax:rejects:nozero} and~\ref{ax:rejects:mono}]. 
It therefore follows from Equation~\eqref{eq:lem:Kstarstar} that, indeed, $\emptyset\notin\rejectset^{**}$.
\end{proof}

\begin{proposition}\label{prop:nonbinary:is:dominated}
Any non-binary coherent set of desirable option sets $\rejectset$ is \emph{strictly dominated} by some other coherent set of desirable option sets.
\end{proposition}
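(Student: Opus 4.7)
The claim follows almost immediately by combining Lemma~\ref{lem:binaryalternative} with Lemma~\ref{lem:Kstarstar}, so the main task is to verify that the hypotheses of the latter can be triggered from the assumption of non-binariness. My plan is the following.

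First, since $\rejectset$ is coherent but not binary, I would invoke the contrapositive of Lemma~\ref{lem:binaryalternative}: there must exist some $\optset[o]\in\rejectset$ with $\card{\optset[o]}\geq2$ such that $\optset[o]\setminus\set{\opt}\notin\rejectset$ for every $\opt\in\optset[o]$. This $\optset[o]$ is precisely the kind of ``irreducible'' desirable option set that the construction in Lemma~\ref{lem:Kstarstar} is designed to exploit.

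Next, I would pick any element $\opt[o]\in\optset[o]$ and apply Lemma~\ref{lem:Kstarstar} to this choice of $\optset[o]$ and $\opt[o]$. The lemma produces a coherent set of desirable option sets $\rejectset^{*}=\RN(\rejectset^{**})$ that is a superset of $\rejectset$ and that contains $\set{\opt[o]}$, while at the same time guaranteeing that $\set{\opt[o]}\notin\rejectset$. Consequently $\rejectset\subsetneq\rejectset^{*}$, so $\rejectset^{*}$ strictly dominates $\rejectset$, and it is coherent. This is exactly the required conclusion.

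Since both Lemma~\ref{lem:binaryalternative} and Lemma~\ref{lem:Kstarstar} are already established, no real obstacle remains at this stage; the only thing worth checking carefully is the matching of notation between the two lemmas, and in particular that the condition ``$\optset[o]\setminus\set{\opt}\notin\rejectset$ for all $\opt\in\optset[o]$'' provided by the failure of \eqref{eq:lem:binaryalternative} is literally the hypothesis required by Lemma~\ref{lem:Kstarstar}, which it is. The whole argument therefore collapses to a two-step citation.
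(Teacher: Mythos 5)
Your proof is correct and follows exactly the paper's own argument: the authors likewise extract the ``irreducible'' set $\optset[o]$ from the failure of Equation~\eqref{eq:lem:binaryalternative} via Lemma~\ref{lem:binaryalternative}, then apply Lemma~\ref{lem:Kstarstar} to obtain the coherent strict superset $\rejectset^{*}=\RN\group{\rejectset^{**}}$ containing $\set{\opt[o]}\notin\rejectset$. Nothing is missing; the hypothesis match you flagged is indeed exact, and the two-step citation is all that is required.
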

\begin{proof}
% Checked by Gert
Consider an arbitrary coherent non-binary set of desirable option sets $\rejectset$. 
We infer from Lemma~\ref{lem:binaryalternative} that there is some $\optset[o]\in\rejectset$ such that $\card{\optset[o]}\geq2$ and $\optset[o]\setminus\set{\opt}\notin\rejectset$ for all $\opt\in\optset[o]$.
Consider any $\opt[o]\in\optset[o]$ and let $\rejectset^*\coloneqq\RN\group{\rejectset^{**}}$, with $\rejectset^{**}$ as in Equation~\eqref{eq:lem:Kstarstar}. 
It then follows from Lemma~\ref{lem:Kstarstar} that $\rejectset^{*}$ is a coherent set of desirable option sets that is a superset of $\rejectset$ and contains $\set{\opt[o]}$, and that $\set{\opt[o]}\notin\rejectset$. 
Hence, $\rejectset\subset\rejectset^*$.
\end{proof}

\begin{lemma}\label{lem:chaincoherence}
% Corrected by Gert: non-empty!
For any non-empty chain $\chainofrejectsets$ in\/ $\cohrejectsets$, its union $\rejectset[o]\coloneqq\bigcup\chainofrejectsets$ is a coherent set of desirable option sets.
\end{lemma}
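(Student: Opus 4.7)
The plan is the standard directed-union argument: verify each of Axioms~\ref{ax:rejects:removezero}--\ref{ax:rejects:mono} for $\rejectset[o]$ by reducing the question to a single member of the chain, which is itself coherent.

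First I would dispose of the axioms that only involve a single element of $\rejectset[o]$. For Axiom~\ref{ax:rejects:nozero}, if $\set{0}\in\rejectset[o]$ then $\set{0}\in\rejectset$ for some $\rejectset\in\chainofrejectsets$, contradicting Axiom~\ref{ax:rejects:nozero} for that coherent~$\rejectset$. For Axiom~\ref{ax:rejects:pos}, I pick any $\rejectset\in\chainofrejectsets$ (using the non-emptiness of the chain); since $\rejectset$ is coherent, $\set{\opt}\in\rejectset\subseteq\rejectset[o]$ for every $\opt\in\posopts$. For Axiom~\ref{ax:rejects:removezero}, if $\optset\in\rejectset[o]$ then $\optset\in\rejectset$ for some $\rejectset\in\chainofrejectsets$, and the coherence of that $\rejectset$ gives $\optset\setminus\set{0}\in\rejectset\subseteq\rejectset[o]$. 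Axiom~\ref{ax:rejects:mono} is analogous: if $\optset[1]\in\rejectset[o]$ and $\optset[1]\subseteq\optset[2]$, pick $\rejectset\in\chainofrejectsets$ with $\optset[1]\in\rejectset$ and use Axiom~\ref{ax:rejects:mono} for $\rejectset$.

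The only axiom that really uses the chain structure is Axiom~\ref{ax:rejects:cone}, since it involves two option sets simultaneously. Given $\optset[1],\optset[2]\in\rejectset[o]$ and any collection of coefficients $(\lambda_{\opt,\altopt},\mu_{\opt,\altopt})>0$, there exist $\rejectset[1],\rejectset[2]\in\chainofrejectsets$ with $\optset[1]\in\rejectset[1]$ and $\optset[2]\in\rejectset[2]$. Because $\chainofrejectsets$ is a chain, either $\rejectset[1]\subseteq\rejectset[2]$ or $\rejectset[2]\subseteq\rejectset[1]$; in either case both $\optset[1]$ and $\optset[2]$ belong to a single coherent $\rejectset\in\chainofrejectsets$. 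Applying Axiom~\ref{ax:rejects:cone} within this $\rejectset$ yields $\cset{\lambda_{\opt,\altopt}\opt+\mu_{\opt,\altopt}\altopt}{\opt\in\optset[1],\altopt\in\optset[2]}\in\rejectset\subseteq\rejectset[o]$.

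I do not expect any real obstacle: all axioms are of ``finite character'' (each involves only one or two elements of the set of desirable option sets), and totally ordered unions preserve exactly such properties. The only thing to be careful about is invoking the non-emptiness of $\chainofrejectsets$ when concluding Axiom~\ref{ax:rejects:pos}, and being explicit that the chain property lets us collapse finitely many members of $\chainofrejectsets$ into a single one for the two-element Axiom~\ref{ax:rejects:cone}.
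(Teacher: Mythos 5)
Your proof is correct and follows essentially the same route as the paper's: each of Axioms~\ref{ax:rejects:removezero}, \ref{ax:rejects:nozero}, \ref{ax:rejects:pos} and~\ref{ax:rejects:mono} is verified inside a single member of the chain, and for Axiom~\ref{ax:rejects:cone} the chain property is used to place $\optset[1]$ and $\optset[2]$ in a common coherent element before applying the axiom there. Your explicit remarks about where non-emptiness and the total order are actually needed match the paper's argument exactly.
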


\begin{proof}
% Checked by Gert, with one small correction and minor cosmetic changes, after correction of the lemma.
For Axiom~\ref{ax:rejects:removezero}, consider any $\optset\in\rejectset[o]$. 
Then there is some $\rejectset'\in\chainofrejectsets$ such that $\optset\in\rejectset'$, and since $\rejectset'$ is coherent, this implies that $\optset\setminus\set{0}\in\rejectset'\subseteq\rejectset[o]$.

For Axiom~\ref{ax:rejects:nozero}, simply observe that since $\set{0}$ belongs to no element of $\chainofrejectsets$ [since they are all coherent], it cannot belong to their union $\rejectset[o]$.

For Axiom~\ref{ax:rejects:pos}, consider any $\opt\optgt0$ and any $\rejectset\in\chainofrejectsets$, then we know that $\set{\opt}\in\rejectset$ [since $\rejectset$ is coherent], and therefore also $\set{\opt}\in\rejectset[o]$, since $\rejectset\subseteq\rejectset[o]$.

For Axiom~\ref{ax:rejects:cone}, consider any $\optset[1],\optset[2]\in\rejectset[o]$ and, for all $\opt\in\optset[1]$ and $\altopt\in\optset[2]$, choose some $(\lambda_{\opt,\altopt},\mu_{\opt,\altopt})>0$. 
Since $\optset[1],\optset[2]\in\rejectset[o]$, we know that there are $\rejectset[1],\rejectset[2]\in\chainofrejectsets$ such that $\optset[1]\in\rejectset[1]$ and $\optset[2]\in\rejectset[2]$.
Since $\chainofrejectsets$ is a chain, we can assume without loss of generality that $\rejectset[1]\subseteq\rejectset[2]$, and therefore $\set{\optset[1],\optset[2]}\subseteq\rejectset[2]$.
Since $\rejectset[2]$ is coherent, it follows that $\cset{\lambda_{\opt,\altopt}\opt+\mu_{\opt,\altopt}\altopt}{\opt\in\optset[1],\altopt\in\optset[2]}\in\rejectset[2]\subseteq\rejectset[o]$.

And finally, for Axiom~\ref{ax:rejects:mono}, consider any $\optset[1]\in\rejectset[o]$ and any $\optset[2]\in\optsets$ such that $\optset[1]\subseteq\optset[2]$.
Then we know that there is some $\rejectset\in\chainofrejectsets$ such that $\optset[1]\in\rejectset$.
Since $\rejectset$ is coherent, this implies that also $\optset[2]\in\rejectset\subseteq\rejectset[o]$.
\end{proof}

\begin{lemma}\label{lem:Zorncoherence}
For any coherent set of desirable option sets $\rejectset\in\cohrejectsets$ and any set of desirable option sets $\rejectset^*\in\rejectsets$ such that $\rejectset\cap\rejectset^*=\emptyset$, the partially ordered set
\begin{equation*}
\upset{\rejectset}\coloneqq\cset{\rejectset'\in\cohrejectsets}{\rejectset\subseteq\rejectset'\text{ and }\rejectset'\cap\rejectset^*=\emptyset}
\end{equation*}
has a maximal element.
\end{lemma}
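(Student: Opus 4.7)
The plan is a textbook application of Zorn's lemma, with the coherence-preservation work already done for us by Lemma~\ref{lem:chaincoherence}. First I would observe that $\upset{\rejectset}$ is non-empty, since $\rejectset$ itself belongs to it by assumption, and that $\upset{\rejectset}$ is partially ordered by set inclusion $\subseteq$.

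The core step is to verify that every chain in $\upset{\rejectset}$ has an upper bound in $\upset{\rejectset}$. Let $\chainofrejectsets\subseteq\upset{\rejectset}$ be a chain. If $\chainofrejectsets=\emptyset$, then $\rejectset\in\upset{\rejectset}$ is an upper bound. Otherwise, set $\rejectset[o]\coloneqq\bigcup\chainofrejectsets$. Since every element of $\chainofrejectsets$ is coherent, Lemma~\ref{lem:chaincoherence} immediately yields that $\rejectset[o]\in\cohrejectsets$. Moreover, $\rejectset\subseteq\rejectset'\subseteq\rejectset[o]$ for any $\rejectset'\in\chainofrejectsets$, so $\rejectset\subseteq\rejectset[o]$.

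It then remains to check the disjointness condition $\rejectset[o]\cap\rejectset^*=\emptyset$. Suppose for a contradiction that some $\optset\in\rejectset[o]\cap\rejectset^*$. By definition of the union, there is some $\rejectset'\in\chainofrejectsets\subseteq\upset{\rejectset}$ with $\optset\in\rejectset'$, whence $\optset\in\rejectset'\cap\rejectset^*$, contradicting $\rejectset'\in\upset{\rejectset}$. Hence $\rejectset[o]\in\upset{\rejectset}$, and it is by construction an upper bound for $\chainofrejectsets$.

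With this chain condition established, Zorn's lemma delivers the desired maximal element of $\upset{\rejectset}$. There is no real obstacle here: the heavy lifting (that unions of chains of coherent sets of desirable option sets are again coherent) is entirely absorbed by Lemma~\ref{lem:chaincoherence}, and the disjointness from $\rejectset^*$ is automatically preserved under unions because $\rejectset^*$-membership of any element of $\rejectset[o]$ would already be witnessed at some finite level of the chain.
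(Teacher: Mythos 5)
Your proof is correct and takes essentially the same route as the paper's: the union of a chain serves as the upper bound, its coherence is delivered by Lemma~\ref{lem:chaincoherence}, disjointness from $\rejectset^*$ is inherited from the chain's members since any offending option set would be witnessed at some member of the chain, and Zorn's lemma concludes. Your explicit handling of the empty chain via $\rejectset\in\upset{\rejectset}$ is a minor point of care that the paper leaves implicit by restricting attention to non-empty chains.
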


\begin{proof}
% Checked by Gert, with minor cosmetic changes
We will use Zorn's Lemma to establish the existence of a maximal element.
So consider any (non-empty) chain $\chainofrejectsets$ in $\upset{\rejectset}$, then we must prove that $\chainofrejectsets$ has an upper bound in $\upset{\rejectset}$.
Since $\rejectset[o]\coloneqq\bigcup\chainofrejectsets$ is clearly an upper bound, we are done if we can prove that $\rejectset[o]\in\upset{\rejectset}$.

That $\rejectset[o]\cap\rejectset^*=\emptyset$ follows from the fact that $\rejectset'\cap\rejectset^*=\emptyset$ for every $\rejectset'\in\chainofrejectsets\subseteq\upset{\rejectset}$. 
That $\rejectset[o]$ is a coherent set of desirable option sets follows from Lemma~\ref{lem:chaincoherence}.
\end{proof}

\begin{proposition}\label{prop:dominatebybinary}
Every coherent set of desirable option sets $\rejectset\in\cohrejectsets$ is dominated by some binary coherent set of desirable option sets.
\end{proposition}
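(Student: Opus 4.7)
My plan is to combine Lemma~\ref{lem:Zorncoherence} with Proposition~\ref{prop:nonbinary:is:dominated}: first find a coherent extension of $\rejectset$ that is maximal (with respect to inclusion) in $\cohrejectsets$, and then observe that maximality forces binarity.

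Concretely, I would first apply Lemma~\ref{lem:Zorncoherence} with the choice $\rejectset^*\coloneqq\emptyset$. This is a legitimate element of $\rejectsets$ and trivially satisfies $\rejectset\cap\rejectset^*=\emptyset$, so the lemma yields an element $\widehat{\rejectset}$ of
\begin{equation*}
\upset{\rejectset}=\cset{\rejectset'\in\cohrejectsets}{\rejectset\subseteq\rejectset'}
\end{equation*}
that is maximal in this poset. In particular, $\widehat{\rejectset}$ is a coherent set of desirable option sets with $\rejectset\subseteq\widehat{\rejectset}$, and it is maximal in the whole of $\cohrejectsets$: any coherent $\rejectset''$ with $\widehat{\rejectset}\subseteq\rejectset''$ automatically satisfies $\rejectset\subseteq\rejectset''$ and hence belongs to $\upset{\rejectset}$, so maximality in $\upset{\rejectset}$ gives $\rejectset''=\widehat{\rejectset}$.

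Next I would argue that $\widehat{\rejectset}$ is binary by contradiction. Suppose it is not. Then by Proposition~\ref{prop:nonbinary:is:dominated} there exists a coherent set of desirable option sets $\widehat{\rejectset}'\in\cohrejectsets$ that strictly dominates $\widehat{\rejectset}$, i.e.\ $\widehat{\rejectset}\subsetneq\widehat{\rejectset}'$. But this contradicts the maximality of $\widehat{\rejectset}$ in $\cohrejectsets$ established above. Hence $\widehat{\rejectset}$ is binary, and since $\rejectset\subseteq\widehat{\rejectset}$, it is a binary coherent set of desirable option sets that dominates $\rejectset$, as required.

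There is no real obstacle here: the hard work has already been done in Lemma~\ref{lem:Kstarstar} (the explicit strict-dominance construction based on removing an element from a "minimal" rejected option set) and in Lemma~\ref{lem:chaincoherence} (closure of $\cohrejectsets$ under unions of chains, which underlies the Zorn argument in Lemma~\ref{lem:Zorncoherence}). The only thing to be careful about is that Zorn's Lemma demands every chain, including the empty one, to have an upper bound in $\upset{\rejectset}$; this is fine because $\rejectset$ itself belongs to $\upset{\rejectset}$ and therefore serves as an upper bound for the empty chain.
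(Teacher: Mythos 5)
Your proof is correct and follows essentially the same route as the paper's own: apply Lemma~\ref{lem:Zorncoherence} with $\rejectset^*=\emptyset$ to obtain a maximal element of $\cset{\rejectset'\in\cohrejectsets}{\rejectset\subseteq\rejectset'}$, then use Proposition~\ref{prop:nonbinary:is:dominated} to rule out non-binarity \emph{ex absurdo}. Your extra remarks (promoting maximality in the upset to maximality in all of $\cohrejectsets$, and handling the empty chain via $\rejectset\in\upset{\rejectset}$) are sound but only make explicit what the paper leaves implicit.
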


\begin{proof}
% Checked by Gert, with minor cosmetic changes
Lemma~\ref{lem:Zorncoherence} with $\rejectset^*=\emptyset$ tells us that the partially ordered set $\cset{\rejectset'\in\cohrejectsets}{\rejectset\subseteq\rejectset'}$ has some maximal element~$\maxrejectset$. 
Assume \emph{ex absurdo} that $\maxrejectset$ is non-binary. 
It then follows from Proposition~\ref{prop:nonbinary:is:dominated} that $\maxrejectset$ is strictly dominated by a coherent set of desirable option sets, meaning that there is some $\rejectset^*\in\cohrejectsets$ such that $\maxrejectset\subset\rejectset^*$, and therefore also $\rejectset\subset\rejectset^*$. 
Hence also $\rejectset^*\in\cset{\rejectset'\in\cohrejectsets}{\rejectset\subseteq\rejectset'}$, which contradicts that $\maxrejectset$ is a maximal element of that set. 
We conclude that $\maxrejectset$ is indeed binary. 
\end{proof}

\begin{theorem}\label{theo:rejectsets:representation}
Every coherent set of desirable option sets $\rejectset\in\cohrejectsets$ is dominated by at least one binary coherent set of desirable option sets: $\cohdesirsets\group{\rejectset}\coloneqq\cset{\desirset\in\cohdesirsets}{\rejectset\subseteq\rejectset[\desirset]}\neq\emptyset$.
Moreover, $\rejectset=\bigcap\cset{\rejectset[\desirset]}{\desirset\in\cohdesirsets\group{\rejectset}}$.
\end{theorem}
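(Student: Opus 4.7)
The plan is to prove the theorem in two parts. For the non-emptiness of $\cohdesirsets\group{\rejectset}$, I would apply Proposition~\ref{prop:dominatebybinary} to obtain a binary coherent $\maxrejectset \supseteq \rejectset$, invoke Proposition~\ref{prop:binary:characterisation} to write $\maxrejectset = \rejectset[\desirset]$ for a unique $\desirset \in \desirsets$, and use Proposition~\ref{prop:coherence:for:binary} to conclude that this $\desirset$ is coherent; then $\desirset \in \cohdesirsets\group{\rejectset}$.

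For the representation identity, the inclusion $\rejectset \subseteq \bigcap\cset{\rejectset[\desirset]}{\desirset \in \cohdesirsets\group{\rejectset}}$ is immediate from the defining property of $\cohdesirsets\group{\rejectset}$. I would prove the converse by contrapositive. Fix $\optset \notin \rejectset$ and apply Lemma~\ref{lem:Zorncoherence} with $\rejectset^* = \set{\optset}$ to obtain a maximal coherent $\maxrejectset$ satisfying $\rejectset \subseteq \maxrejectset$ and $\optset \notin \maxrejectset$. Granting (as the central claim) that $\maxrejectset$ is binary, Corollary~\ref{corol:binaryiff} yields $\maxrejectset = \rejectset[{\desirset[\maxrejectset]}]$ while Proposition~\ref{prop:from:rejection:to:desirability} ensures $\desirset[\maxrejectset] \in \cohdesirsets$; the witness $\desirset \coloneqq \desirset[\maxrejectset]$ then lies in $\cohdesirsets\group{\rejectset}$ (because $\rejectset \subseteq \maxrejectset = \rejectset[\desirset]$) and satisfies $\optset \notin \rejectset[\desirset]$, completing the argument.

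The main obstacle, and the step where the extra constraint $\optset \notin \maxrejectset$ really enters (in contrast to the unconstrained Proposition~\ref{prop:dominatebybinary}), is proving $\maxrejectset$ binary. I would argue by contradiction via Lemma~\ref{lem:binaryalternative}: if $\maxrejectset$ were non-binary, some $\optset[o] \in \maxrejectset$ would satisfy $\card{\optset[o]} \geq 2$ with $\optset[o] \setminus \set{\opt} \notin \maxrejectset$ for every $\opt \in \optset[o]$. For each $\opt[o] \in \optset[o]$, Lemma~\ref{lem:Kstarstar} produces a coherent strict superset $\rejectset^*_{\opt[o]} \supsetneq \maxrejectset$ containing $\set{\opt[o]}$; the maximality of $\maxrejectset$ then forces $\optset \in \rejectset^*_{\opt[o]}$ for every $\opt[o] \in \optset[o]$. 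Unpacking this via $\rejectset^*_{\opt[o]} = \RN(\rejectset^{**}_{\opt[o]})$, for each $\opt[o]$ there exist $\altoptset^{\opt[o]} \in \maxrejectset$ and coefficients $(\lambda^{\opt[o]}_{\altopt}, \mu^{\opt[o]}_{\altopt}) > 0$ such that $\cset{\lambda^{\opt[o]}_{\altopt}\altopt + \mu^{\opt[o]}_{\altopt}\opt[o]}{\altopt \in \altoptset^{\opt[o]}}$ differs from $\optset$ only in elements of $\nonposopts$.

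The delicate final step is to splice these witnesses together: applying Axiom~\ref{ax:rejects:cone} iteratively to $\optset[o]$ and the various $\altoptset^{\opt[o]}$, together with Lemmas~\ref{lem:replacing:by:dominating:options} and~\ref{lem:replacing:nonpositives:by:zero} and Proposition~\ref{prop:ax:rejects:RN:equivalents}, to produce an element of $\maxrejectset$ that coincides with some $\optset[o] \setminus \set{\opt}$, contradicting the defining property of $\optset[o]$. The combinatorial construction of the correct scaling coefficients to perform this splicing over all $\opt[o] \in \optset[o]$ simultaneously is the technical heart of the argument and the step I expect to require the most care.
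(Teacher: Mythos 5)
Your architecture matches the paper's exactly up to and including the Zorn step: non-emptiness via Proposition~\ref{prop:dominatebybinary}, the contrapositive with Lemma~\ref{lem:Zorncoherence} applied to $\rejectset^*=\set{\optset}$, and the reduction to showing that maximal elements of the constrained poset are binary. The genuine gap is at the step you yourself flag as the technical heart, and it is twofold. First, your stated contradiction target is the wrong one: the splicing cannot produce an element of $\maxrejectset$ ``coinciding with some $\optset[o]\setminus\set{\opt}$'', because every application of Axiom~\ref{ax:rejects:cone} to $\optset[o]$ and a witness set drags in the transformed elements $\lambda_{\altopt}\altopt+\mu_{\altopt}\opt[o]$, which lie in $\optset\cup\nonposopts$, and elements of $\optset$ (unlike $0$ or non-positive options) cannot be purged from a member of $\maxrejectset$. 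This is precisely why the paper does \emph{not} splice over all $\opt[o]\in\optset[o]$: it instead takes a \emph{minimal} element $\altoptset^*$ of $\cset{\optset'\in\maxrejectset}{\optset\subseteq\optset'}$ below $\optset[o]\cup\optset$, chooses the single $\opt[o]\in\altoptset^*\setminus\optset$ (after first reducing, WLOG, to $\optset\cap\nonposopts=\emptyset$, a reduction you omit), and derives $\altoptset^*\setminus\set{\opt[o]}\in\maxrejectset$, contradicting \emph{minimality of $\altoptset^*$} rather than the non-binarity witness; the inclusion $\optset\subseteq\altoptset^*\setminus\set{\opt[o]}$ is exactly what absorbs the unwanted witness elements. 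Second, beware a circularity trap: if you perform the ``simultaneous'' splicing with per-tuple coefficients, i.e.\ via $\setposi$-invariance, note that Proposition~\ref{prop:ax:rejects:cone:equivalents} is proved in this paper \emph{from} Theorem~\ref{theo:rejectsets:representation}, so it is unavailable here; only iterated pairwise applications of Axiom~\ref{ax:rejects:cone} are legitimate.

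That said, your all-$\opt[o]$ route can be repaired, and then it yields an argument arguably simpler than the paper's, provided you aim the contradiction at $\optset\notin\maxrejectset$ itself. For each $\opt$ in $E\coloneqq\optset[o]\setminus\group{\optset\cup\nonposopts}$, unpack $\optset\in\RN\group{\rejectset[\opt]^{**}}$ to obtain $\altoptset[\opt]\in\maxrejectset$ and coefficients with
\begin{equation*}
\cset[\big]{\lambda_{\altopt}\altopt+\mu_{\altopt}\opt}{\altopt\in\altoptset[\opt]}\subseteq\optset\cup\nonposopts.
\end{equation*}
Now process the finitely many elements of $E$ one at a time: at each stage apply Axiom~\ref{ax:rejects:cone} to the current set and $\altoptset[\opt]$, with coefficients $(\mu_{\altopt},\lambda_{\altopt})$ on the pairs involving the current $\opt$ and $(1,0)$ on all other pairs; this replaces $\opt$ by the displayed set, and since that set lies in $\optset\cup\nonposopts$, which is disjoint from $E$, no unprocessed element ever reappears. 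The end result is an element of $\maxrejectset$ contained in $\optset\cup\nonposopts$; Lemma~\ref{lem:replacing:nonpositives:by:zero} rules out its lying entirely in $\nonposopts$, and then Proposition~\ref{prop:ax:rejects:RN:equivalents} together with Axiom~\ref{ax:rejects:mono} gives $\optset\in\maxrejectset$---the desired contradiction. (Handle the trivial case $\optset=\emptyset$ separately, since $\emptyset\notin\rejectset[\desirset]$ for every $\desirset$.) So: the proposal as written is incomplete at its decisive step and mis-states the contradiction to be reached, but the splicing idea itself is sound once redirected as above, whereas the paper resolves the same difficulty by the minimal-element device.
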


\begin{proof}%[Proof of Theorem~\ref{theo:rejectsets:representation}]
% Checked and improved by Gert
% Changed a little bit by Jasper; the changed parts are higlighted in blue
% Checked the blue parts, with minor cosmetic changes
Let $\rejectset[o]$ be any coherent set of desirable option sets. 
We prove that $\cohdesirsets(\rejectset[o])\coloneqq\cset{\desirset\in\cohdesirsets}{\rejectset[o]\subseteq\rejectset[\desirset]}\neq\emptyset$ and that $\rejectset[o]=\bigcap\cset{\rejectset[\desirset]}{\desirset\in\cohdesirsets(\rejectset[o])}$.

For the first statement, recall from Proposition~\ref{prop:dominatebybinary} that $\rejectset[o]$ is dominated by a binary coherent set of desirable option sets $\maxrejectset$.
Proposition~\ref{prop:binary:characterisation} therefore implies that $\maxrejectset=\rejectset[\desirset]$, with $\smash{\desirset=\desirset[\maxrejectset]}$. 
Furthermore, because $\maxrejectset$ is coherent, Proposition~\ref{prop:coherence:for:binary} implies that $\desirset$ is coherent, whence $\desirset\in\cohdesirsets$. 
Since $\rejectset[o]\subseteq\maxrejectset=\rejectset[\desirset]$, we see that $\desirset[\maxrejectset]\in\cohdesirsets(\rejectset[o])\coloneqq\cset{\desirset\in\cohdesirsets}{\rejectset[o]\subseteq\rejectset[\desirset]}\neq\emptyset$.

For the second statement, it is obvious that $\rejectset[o]\subseteq\bigcap\cset{\rejectset[\desirset]}{\desirset\in\cohdesirsets(\rejectset[o])}$, so we concentrate on the proof of the converse inclusion.
Assume {\itshape ex absurdo} that $\rejectset[o]\subset\bigcap\cset{\rejectset[\desirset]}{\desirset\in\cohdesirsets(\rejectset[o])}$, so there is some option set $\altoptset[o]\in\optsets$ such that $\altoptset[o]\notin\rejectset[o]$ and $\altoptset[o]\in\rejectset[\desirset]$ for all $\desirset\in\cohdesirsets(\rejectset[o])$, so $\altoptset[o]\neq\emptyset$.
Then $\altoptset[o]\setminus\nonposopts\notin\rejectset[o]$ [use the coherence of $\rejectset[o]$ and Axiom~\ref{ax:rejects:mono}] and $\altoptset[o]\setminus\nonposopts\in\rejectset[\desirset]$ for all $\desirset\in\cohdesirsets(\rejectset[o])$ [use the coherence of $\rejectset[\desirset]$---which follows from Proposition~\ref{prop:fromCohDtoCohK} and the coherence of $\desirset$---and Proposition~\ref{prop:ax:rejects:RN:equivalents}], so we may assume without loss of generality that $\altoptset[o]$ has no non-positive options: $\altoptset[o]\cap\nonposopts=\emptyset$.

The partially ordered set $\upset{\rejectset[o]^*}\coloneqq\cset{\rejectset\in\cohrejectsets}{\rejectset[o]\subseteq\rejectset\text{ and }\altoptset[o]\notin\rejectset}$ is non-empty because it contains $\rejectset[o]$. 
Furthermore, due to Lemma~\ref{lem:Zorncoherence} [applied for $\rejectset=\rejectset[o]$ and $\rejectset^*=\set{\altoptset[o]}$], it has at least one maximal element.
If we can prove that any such maximal element $\maxrejectset$ is binary, then we know from Propositions~\ref{prop:binary:characterisation} and~\ref{prop:coherence:for:binary} that there is some coherent set of desirable options $\smash{\desirset[o]=\desirset[\maxrejectset]}$ such that $\rejectset[o]\subseteq\rejectset[{\desirset[o]}]$---and therefore $\desirset[o]\in\cohdesirsets(\rejectset[o])$---and $\altoptset[o]\notin\rejectset[{\desirset[o]}]$, a contradiction.
To prove that all maximal elements of $\upset{\rejectset[o]^*}$ are binary, it suffices to prove that any non-binary element of $\upset{\rejectset[o]^*}$ is strictly dominated in that set, which is what we now set out to do.

So consider any non-binary element $\rejectset$ of $\upset{\rejectset[o]^*}$, so in particular $\rejectset\in\cohrejectsets$, $\rejectset[o]\subseteq\rejectset$ and $\altoptset[o]\notin\rejectset$.
Since $\rejectset$ is non-binary, it follows from Lemma~\ref{lem:binaryalternative} that there is some $\optset[o]\in\rejectset$ such that $\card{\optset[o]}\geq2$ and $\optset[o]\setminus\set{\opt}\notin\rejectset$ for all $\opt\in\optset[o]$.
The partially ordered set $\cset{\optset\in\rejectset}{\altoptset[o]\subseteq\optset}$ contains $\optset[o]\cup\altoptset[o]$ [because $\optset[o]\in\rejectset$ and because $\rejectset$ satisfies Axiom~\ref{ax:rejects:mono}] and therefore has some minimal (non-dominating) element $\altoptset^*$ below it, so $\altoptset^*\in\rejectset$ and $\altoptset[o]\subseteq\altoptset^*\subseteq\optset[o]\cup\altoptset[o]$.

Let us first summarise what we know about this minimal element $\altoptset^*$.
It is impossible that $\altoptset^*\subseteq\altoptset[o]$ because otherwise $\altoptset[o]=\altoptset^*\in\rejectset$, a contradiction.
Hence $\altoptset^*\setminus\altoptset[o]\neq\emptyset$, so we can fix some element $\opt[o]$ in $\altoptset^*\setminus\altoptset[o]\subseteq\optset[o]$.
Since $\altoptset[o]\subseteq\altoptset^*\setminus\set{\opt[o]}$ but $\altoptset^*\setminus\set{\opt[o]}\subset\altoptset^*$, it must be that $\altoptset^*\setminus\set{\opt[o]}\notin\rejectset$, by the definition of a minimal element.
Observe that $\altoptset^*\neq\emptyset$.

Let $\rejectset^*\coloneqq\RN\group{\rejectset^{**}}$, with $\rejectset^{**}$ as in Equation~\eqref{eq:lem:Kstarstar}. 
Since $\opt[o]\in\optset[o]$, it then follows from Lemma~\ref{lem:Kstarstar} that $\rejectset^{*}$ is a coherent set of desirable option sets that is a superset of $\rejectset$---and therefore also of $\rejectset[o]$---and contains $\set{\opt[o]}$, and that $\set{\opt[o]}\notin\rejectset$ and $\opt[o]\not\optlteq0$. 
Hence, it follows that $\rejectset\subset\rejectset^*$. 
If we can now prove that $\altoptset[o]\notin\rejectset^*$ and therefore $\rejectset^*\in\upset{\rejectset[o]^*}$, we are done, because then $\rejectset$ is indeed strictly dominated by $\rejectset^*$ in $\upset{\rejectset[o]^*}$.

Assume therefore {\itshape ex absurdo} that $\altoptset[o]\in\rejectset^*=\RN\group{\rejectset^{**}}$.
Taking into account Equation~\eqref{eq:lem:Kstarstar}, this implies that there are $\altoptsettoo\in\rejectset$ and $(\lambda_{\altopt},\mu_{\altopt})>0$ for all $\altopt\in\altoptsettoo$, such that $\cset{b_{\altopt}}{\altopt\in\altoptsettoo}\setminus\nonposopts\subseteq\altoptset[o]\subseteq\cset{b_{\altopt}}{\altopt\in\altoptsettoo}$, where, for all $\altopt\in\altoptsettoo$, $b_{\altopt}\coloneqq\lambda_{\altopt}\altopt+\mu_{\altopt}\opt[o]$.
Given our assumption that $\altoptset[o]\cap\nonposopts=\emptyset$, this also implies that $\cset{b_{\altopt}}{\altopt\in\altoptsettoo}\setminus\altoptset[o]\subseteq\nonposopts$. 
Now let $\altoptsettoo[1]\coloneqq\cset{\altopt\in\altoptsettoo}{b_{\altopt}\in\altoptset[o]}$ and $\altoptsettoo[2]\coloneqq\cset{\altopt\in\altoptsettoo}{b_{\altopt}\notin\altoptset[o]}$. 
Then $\altoptsettoo[1]\neq\emptyset$ [because $\altoptset[o]\neq\emptyset$] and $\cset{b_{\altopt}}{\altopt\in\altoptsettoo[1]}=\altoptset[o]$.
Consider now any $\altopt\in\altoptsettoo[2]$. 
Then $b_{\altopt}\notin\altoptset[o]$. 
Since $\cset{b_{\altopt}}{\altopt\in\altoptsettoo}\setminus\altoptset[o]\subseteq\nonposopts$, this implies that $b_{\altopt}=\lambda_{\altopt}\altopt+\mu_{\altopt}\opt[o]\optlteq0$.
Hence, we must have that $\lambda_{\altopt}>0$, because otherwise $\mu_{\altopt}\opt[o]\optlteq0$ with $\mu_{\altopt}>0$, and therefore also $\opt[o]\optlteq0$, contradicting what we inferred earlier from Lemma~\ref{lem:Kstarstar}.
So we find that
\begin{equation*}
\altopt\optlteq-\frac{\mu_{\altopt}}{\lambda_{\altopt}}\opt[o]
\text{ for all $\altopt\in\altoptsettoo[2]$}.
\end{equation*}
Consequently, and because $\altoptsettoo[1]\cup\altoptsettoo[2]=\altoptsettoo\in\rejectset$, we infer from Lemma~\ref{lem:replacing:by:dominating:options} that
\begin{equation*}
\altoptsettoo'
\coloneqq
\altoptsettoo[1]
\cup\cset[\Big]{-\frac{\mu_{\altopt}}{\lambda_{\altopt}}\opt[o]}{\altopt\in\altoptsettoo[2]}\in\rejectset.
\end{equation*}
Let $\altoptsettoo[3]\coloneqq\altoptsettoo'\setminus\altoptsettoo[1]$. 
Then for all $\altopt\in\altoptsettoo[3]$, there is some $\gamma_{\altopt}\geq0$ such that $\altopt=-\gamma_{\altopt}\opt[o]$.
Now let $(\alpha_{\opt[o],\altopt},\beta_{\opt[o],\altopt})\coloneqq(\mu_{\altopt},\lambda_{\altopt})$ for all $\altopt\in\altoptsettoo[1]$ and $(\alpha_{\opt[o],\altopt},\beta_{\opt[o],\altopt})\coloneqq(\gamma_{\altopt},1)$ for all $\altopt\in\altoptsettoo[3]$ and, for all $\opt\in\altoptset^*\setminus\set{\opt[o]}$ and $\altopt\in\altoptsettoo'$, let $(\alpha_{\opt,\altopt},\beta_{\opt,\altopt})\coloneqq(1,0)$. 
Then
\begin{multline*}
\cset{\alpha_{\opt,\altopt}\opt+\beta_{\opt,\altopt}\altopt}{\opt\in\altoptset^*,\altopt\in\altoptsettoo'}\\
\begin{aligned}
&=\cset{\mu_{\altopt}\opt[o]+\lambda_{\altopt}\altopt}{\altopt\in\altoptsettoo[1]}
\cup
\cset{\gamma_{\altopt}\opt[o]+\altopt}{\altopt\in\altoptsettoo[3]}
\cup
\cset{\opt}{\opt\in\altoptset^*\setminus\set{\opt[o]},\altopt\in\altoptsettoo'}\\
&=
\cset{b_{\altopt}}{\altopt\in\altoptsettoo[1]}
\cup\cset{0}{\altopt\in\altoptsettoo[3]}
\cup\cset{\opt}{\opt\in\altoptset^*\setminus\set{\opt[o]}}\\
&=\altoptset[o]\cup\cset{0}{\altopt\in\altoptsettoo[3]}\cup(\altoptset^*\setminus\set{\opt[o]})\\
&=(\altoptset^*\setminus\set{\opt[o]})\cup\cset{0}{\altopt\in\altoptsettoo[3]},
\end{aligned}
\end{multline*}
where the second equality holds because $\altoptsettoo'\in\rejectset$ and Axioms~\ref{ax:rejects:nozero} and~\ref{ax:rejects:mono} imply that $\emptyset\neq\altoptsettoo'$, and where the fourth equality holds because $\altoptset[o]\subseteq\altoptset^*\setminus\set{\opt[o]}$. 
Since $\altoptset^*\in\rejectset$ and $\altoptsettoo'\in\rejectset$, we can now invoke Axiom~\ref{ax:rejects:cone} to find that
\begin{equation*}
\altoptset^*\setminus\set{\opt[o]}\cup\cset{0}{\altopt\in\altoptsettoo[3]}
=
\cset{\alpha_{\opt,\altopt}\opt+\beta_{\opt,\altopt}\altopt}{\opt\in\altoptset^*,\altopt\in\altoptsettoo'}\in\rejectset.
\end{equation*}
If $\altoptsettoo[3]=\emptyset$, we find that $\altoptset^*\setminus\set{\opt[o]}\in\rejectset$, a contradiction.
If $\altoptsettoo[3]\neq\emptyset$, we find that $\set{0}\cup\altoptset^*\setminus\set{\opt[o]}\in\rejectset$.
If $0\in\altoptset^*\setminus\set{\opt[o]}$, then we get that $\altoptset^*\setminus\set{\opt[o]}\in\rejectset$, a contradiction.
And if $0\notin\altoptset^*\setminus\set{\opt[o]}$, then we can still derive from Axiom~\ref{ax:rejects:removezero} that $\altoptset^*\setminus\set{\opt[o]}\in\rejectset$, again a contradiction.
\end{proof}

\begin{proof}[Proof of Theorem~\ref{theo:coherentrepresentation:twosided}]
% Checked by Gert, with minor cosmetic corrections
If the set of desirable option sets $\rejectset$ is coherent, we infer from Theorem~\ref{theo:rejectsets:representation} that $\cohdesirsets\group{\rejectset}\coloneqq\cset{\desirset\in\cohdesirsets}{\rejectset\subseteq\rejectset[\desirset]}\neq\emptyset$ and $\rejectset=\bigcap\cset{\rejectset[\desirset]}{\desirset\in\cohdesirsets\group{\rejectset}}$. 
This clearly implies that there is at least one non-empty set $\setofdesirsets\subseteq\cohdesirsets$ of coherent sets of desirable options such that $\rejectset=\bigcap\cset{\rejectset[\desirset]}{\desirset\in\setofdesirsets}$, namely the set $\cohdesirsets\group{\rejectset}$. 
Furthermore, for any non-empty set $\setofdesirsets\subseteq\cohdesirsets$ of coherent sets of desirable options such that $\rejectset=\bigcap\cset{\rejectset[\desirset]}{\desirset\in\setofdesirsets}$, we clearly have that $\rejectset\subseteq\rejectset[\desirset]$ for all $\desirset\in\setofdesirsets$.
Since $\setofdesirsets\subseteq\cohdesirsets$, this implies that $\setofdesirsets\subseteq\cohdesirsets\group{\rejectset}$. 
So $\cohdesirsets\group{\rejectset}$ is indeed the largest such set.

It remains to prove the `if' part of the statement. 
So consider any non-empty set $\setofdesirsets\subseteq\cohdesirsets$ of coherent sets of desirable options such that $\rejectset=\bigcap\cset{\rejectset[\desirset]}{\desirset\in\setofdesirsets}$. 
For any $\desirset\in\setofdesirsets\subseteq\cohdesirsets$, it then follows from Proposition~\ref{prop:fromCohDtoCohK} that $\rejectset[\desirset]$ is coherent. 
Because Axioms~\ref{ax:rejects:removezero}-\ref{ax:rejects:mono} are trivially preserved under taking arbitrary non-empty intersections, it follows that $\rejectset$ is coherent.
\end{proof}

\subsection{Proofs and intermediate results for Section~\ref{sec:back:to:choice}}

\begin{proposition}\label{prop:ax:rejects:cone:equivalents}
$\setposi\group{\rejectset}=\rejectset$ for any coherent set of desirable option sets $\rejectset\in\cohrejectsets$.
\end{proposition}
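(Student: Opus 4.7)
The statement decomposes into two inclusions. For $\rejectset \subseteq \setposi\group{\rejectset}$, I take $n = 1$, $\optset[1] = \optset$, and $\lambda_1^{\opt[1]} = 1$ for every $\opt[1] \in \optset$, which reproduces $\optset$ as an element of $\setposi\group{\rejectset}$. For the reverse inclusion my plan is to lean on the representation result Theorem~\ref{theo:rejectsets:representation}, proved in the preceding subsection of the appendix, which gives $\rejectset = \bigcap\cset{\rejectset[\desirset]}{\desirset \in \cohdesirsets\group{\rejectset}}$; it then suffices to show that every $\optset^\ast \in \setposi\group{\rejectset}$ lies in $\rejectset[\desirset]$ for every $\desirset \in \cohdesirsets\group{\rejectset}$.

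So I would fix such a $\desirset$ and write $\optset^\ast = \cset{\sum_{k=1}^n \lambda_k^{\opt[1:n]} \opt[k]}{\opt[1:n] \in \times_{k=1}^n \optset[k]}$ with $\optset[1], \ldots, \optset[n] \in \rejectset$ and coefficients $\lambda_{1:n}^{\opt[1:n]} > 0$ for every tuple. Since $\rejectset \subseteq \rejectset[\desirset]$, each $\optset[k]$ meets $\desirset$, so I pick $\opt[k] \in \optset[k] \cap \desirset$ for every $k$. For this particular tuple $\opt[1:n]$ the weights $\lambda_k^{\opt[1:n]}$ are non-negative with strictly positive sum, and iteratively applying Axiom~\ref{ax:desirs:cone} to the desirable options $\opt[k]$ should yield $\sum_k \lambda_k^{\opt[1:n]} \opt[k] \in \desirset$. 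This element also belongs to $\optset^\ast$ by construction, so $\optset^\ast \cap \desirset \neq \emptyset$, i.e., $\optset^\ast \in \rejectset[\desirset]$, which is exactly what is needed.

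The only technical nuisance I foresee is the iterative use of Axiom~\ref{ax:desirs:cone} required to show that a non-negative combination of desirable options with strictly positive coefficient sum is itself desirable: when some $\lambda_k$ vanish, I restrict to the non-empty index set $S \coloneqq \cset{k}{\lambda_k^{\opt[1:n]} > 0}$, treat the singleton case $|S|=1$ by applying \ref{ax:desirs:cone} with $(\lambda,\mu) = (\lambda_{k}, 0)$ and the same $\opt[k]$ on both sides, and handle $|S| \geq 2$ by pairing elements two at a time. Beyond this small bookkeeping I do not expect substantive obstacles, because the representation theorem carries the bulk of the work. A direct induction on $n$ using only the binary cone Axiom~\ref{ax:rejects:cone} looks considerably harder, since the weights $\lambda_k^{\opt[1:n]}$ depend on the entire tuple and resist being cleanly distributed between successive applications of the binary axiom — this is the main reason I prefer the representation-based route.
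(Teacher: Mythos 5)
Your proposal is correct and follows essentially the same route as the paper's own proof: both establish $\rejectset\subseteq\setposi\group{\rejectset}$ via the trivial choice $n=1$ with unit coefficients, and both prove the converse inclusion by invoking Theorem~\ref{theo:rejectsets:representation}, fixing a $\desirset\in\cohdesirsets$ with $\rejectset\subseteq\rejectset[\desirset]$, picking $\altopt[k]\in\optset[k]\cap\desirset$ for each $k$, and applying Axiom~\ref{ax:desirs:cone} iteratively to place $\sum_{k=1}^n\lambda_k^{\altopt[1:n]}\altopt[k]$ in $\optset\cap\desirset$. Your extra bookkeeping for vanishing coefficients merely makes explicit what the paper compresses into ``by applying it multiple times'', and your worry about circularity is unfounded, since the proof of Theorem~\ref{theo:rejectsets:representation} nowhere uses this proposition.
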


\begin{proof}[Proof of Proposition~\ref{prop:ax:rejects:cone:equivalents}]
% Checked and improved by Gert
That $\rejectset\subseteq\setposi\group{\rejectset}$, is an immediate consequence of the definition of the $\setposi$ operator, and holds for any set of desirable option sets, coherent or not. 
Indeed, consider any $\optset\in\rejectset$, then it is not difficult to see that $\optset\in\setposi\group{\rejectset}$: choose $n\coloneqq1$, $\optset[1]\coloneqq\optset\in\rejectset^1$, and $\lambda^{\opt[1:1]}_{1:1}\coloneqq1$ for all $\opt[1:1]\in\times_{k=1}^1\optset[1]=\optset$ in the definition of the $\setposi$ operator.

For the converse inclusion, that $\setposi\group{\rejectset}\subseteq\rejectset$, we use the coherence of $\rejectset$, and in particular the representation result of Theorem~\ref{theo:rejectsets:representation}, which allows us to write that $\rejectset=\bigcap\cset{\rejectset[\desirset]}{\desirset\in\cohdesirsets\text{ and }\rejectset\subseteq\rejectset[\desirset]}$.
%It should be observed, of course, that we do not require Proposition~\ref{prop:ax:rejects:cone:equivalents} in the proof of Theorem~\ref{theo:rejectsets:representation}, which does not involve the $\setposi$ operator.

So, if we fix any $\desirset\in\cohdesirsets$ such that $\rejectset\subseteq\rejectset[\desirset]$, then it clearly suffices to prove that also $\setposi\group{\rejectset}\subseteq\rejectset[\desirset]$.
Consider, therefore, any $\optset\in\setposi\group{\rejectset}$, meaning that there are $n\in\naturals$, $(\optset[1],\dots,\optset[n])\in\rejectset^n$ and, for all $\opt[1:n]\in\times_{k=1}^n\optset[k]$, some choice of $\lambda_{1:n}^{\opt[1:n]}>0$ such that
\begin{equation*}
\optset=\cset[\bigg]{\sum_{k=1}^n\lambda_k^{\opt[1:n]}\opt[k]}{\opt[1:n]\in\times_{k=1}^n\optset[k]}.
\end{equation*} 
For any $k\in\set{1,\dots,n}$, since $\optset[k]\in\rejectset\subseteq\rejectset[\desirset]$, we know that $\optset[k]\cap\desirset\neq\emptyset$, so we can fix some $\altopt[k]\in\optset[k]\cap\desirset$. 
Then, on the one hand, we see that $\sum_{k=1}^n\lambda_k^{\altopt[1:n]}\altopt[k]\in\optset$.
On the other hand, since $\lambda_{1:n}^{\altopt[1:n]}>0$, we infer from Axiom~\ref{ax:desirs:cone} [by applying it multiple times] that also $\sum_{k=1}^n\lambda_k^{\altopt[1:n]}\altopt[k]\in\desirset$. 
Therefore, we find that $\optset\cap\desirset\neq\emptyset$, or equivalently, that $\optset\in\rejectset[\desirset]$.
Since $\optset\in\setposi\group{\rejectset}$ was chosen arbitrarily, it follows that, indeed, $\setposi\group{\rejectset}\subseteq\rejectset[\desirset]$.
\end{proof}

\begin{proof}[Proof of Proposition~\ref{prop:axioms:rejection:sets:and:functions:coherence}]
% Checked by Gert, with a number of (important) corrections, please check carefully again.
%\color{blue}
First, suppose that $\rejectset$ satisfies Axioms~\ref{ax:rejects:removezero}--\ref{ax:rejects:mono}, then we show that $\rejectfun$ satisfies Axioms~\ref{ax:rejectfun:addition}--\ref{ax:rejectfun:senalpha}.

Axiom~\ref{ax:rejectfun:addition} follows even without Axioms~\ref{ax:rejects:removezero}--\ref{ax:rejects:mono}, from the chain of equivalences
\begin{align*}
\opt\in\rejectfun\group{\optset}
\ifandonlyif\opt\in\rejectfun\group{\optset\cup\set{\opt}}
&\ifandonlyif\optset-\opt\in\rejectset
\ifandonlyif(\optset-\opt)-0\in\rejectset\\
&\ifandonlyif0\in\rejectfun((\optset-\opt)\cup\set{0})
\ifandonlyif0\in\rejectfun(\optset-\opt),
\text{ for all $\optset\in\optsets$},
\end{align*}
where the first and last equivalences follow because $\opt\in\optset$ and the second and fourth equivalences follow from Equation~\eqref{eq:interpretation:rejectfuns:after:irreflexivity:and:additivity:intermsof:K:Gert}. 
We now concentrate on Axioms~\ref{ax:rejectfun:not:everything:rejected}--\ref{ax:rejectfun:senalpha}.

\ref{ax:rejectfun:not:everything:rejected}.
It is obvious from $\rejectfun\group{\emptyset}\subseteq\emptyset$ that $\rejectfun\group{\emptyset}=\emptyset$.
For any non-empty option set $\optset\in\optsets$, assume {\itshape ex absurdo} that $\rejectfun\group{\optset}=\optset$. 
For all $\opt\in\optset$, it then follows from Equation~\eqref{eq:interpretation:rejectfuns:after:irreflexivity:and:additivity:intermsof:K:Gert} that $\optset-\opt\in\rejectset$.
So if we denote $\optset$ by $\set{\altopt[1],\dots,\altopt[n]}$, with $n\in\naturals$, and let $\altopt[\ell k]\coloneqq\altopt[\ell]-\altopt[k]$, then we find that for all $k\in\set{1,\dots,n}$
\begin{equation*}
\optset[k]\coloneqq\cset{\altopt[\ell k]}{\ell\in\set{1,\dots,n}}\in\rejectset.
\end{equation*}
Proposition~\ref{prop:ax:rejects:cone:equivalents} and Equation~\eqref{eq:setposi} now tell us that, for any choice of the $\lambda_{1:n}^{\opt[1:n]}>0$ in Equation~\eqref{eq:setposi}, the option set
\begin{equation*}
\cset[\bigg]{\sum_{k=1}^n\lambda_{k}^{\opt[1:n]}\opt[k]}
{\opt[1:n]\in\times_{k=1}^n\optset[k]}\in\rejectset.
\end{equation*}
So if we can show that for any $\opt[1:n]\in\times_{k=1}^n\optset[k]$ we can always choose the $\lambda_{1:n}^{\opt[1:n]}>0$ in such a way that $\sum_{k=1}^n\lambda_{k}^{\opt[1:n]}\opt[k]=0$, we will have that $\set{0}\in\rejectset$, contradicting Axiom~\ref{ax:rejects:nozero}.
We now set out to do this.

For any $k\in\set{1,\dots,n}$, since $\opt[k]\in\optset[k]$, there is a unique $\ell\in\set{1,\dots,n}$ such that $\opt[k]=\altopt[\ell k]$. 
Let $\phi(k)$ be this unique index, so $\opt[k]=\altopt[\phi(k)k]$. 
For the resulting map $\phi\colon\set{1,\dots,n}\to\set{1,\dots,n}$, we now consider the sequence---$\phi$-orbit---in $\set{1,\dots,n}$:
\begin{equation*}
1,\phi(1),\phi^2(1),\dots,\phi^r(1),\dots
\end{equation*}
Because $\phi$ can assume at most $n$ different values, this sequence must be periodic, and its fundamental (smallest) period $p$ cannot be larger than $n$, so $1\leq p\leq n$ and $1=\phi^{p}(1)$.
Now let $\lambda^{\opt[1:n]}_{\phi^r(1)}\coloneqq1$ for $r=0,\dots,p-1$, and let all other components be zero, then indeed
\begin{equation*}
\sum_{k=1}^n\lambda_{k}^{\opt[1:n]}\opt[k]
=\sum_{r=0}^{p-1}\altopt[\phi^{r+1}(1)\phi^{r}(1)]
=\sum_{r=0}^{p-1}\group{\altopt[\phi^{r+1}(1)]-\altopt[\phi^{r}(1)]}
=0.
\end{equation*}

\ref{ax:rejectfun:pos}.
Consider any $\opt\in\posopts$, so $\set{\opt}\in\rejectset$ [use~\ref{ax:rejects:pos}]. 
Since $\set{\opt}=\set{\opt}-0$, Equation~\eqref{eq:interpretation:rejectfuns:after:irreflexivity:and:additivity:intermsof:K:Gert} guarantees that, indeed, $0\in\rejectfun\group{\set{0,\opt}}$.

\ref{ax:rejectfun:cone}.
Since $0\in\rejectfun\group{\optset[1]\cup\set{0}}$ and $0\in\rejectfun\group{\optset[2]\cup\set{0}}$, it follows from Equation~\eqref{eq:interpretation:rejectfuns:after:irreflexivity:and:additivity:intermsof:K:Gert} that $\optset[1],\optset[2]\in\rejectset$. 
Axiom~\ref{ax:rejects:cone} therefore implies that $\cset{\lambda_{\opt,\altopt}\opt+\mu_{\opt,\altopt}\altopt}{\opt\in\optset[1],\altopt\in\optset[2]}\in\rejectset$. 
A final application of Equation~\eqref{eq:interpretation:rejectfuns:after:irreflexivity:and:additivity:intermsof:K:Gert} now tells us that, indeed,
\begin{equation*}
0\in\rejectfun\group{\cset{\lambda_{\opt,\altopt}\opt+\mu_{\opt,\altopt}\altopt}{\opt\in\optset[1],\altopt\in\optset[2]}\cup\set{0}}.
\end{equation*}

\ref{ax:rejectfun:senalpha}.
Consider $\optset[1],\optset[2]\in\optsets$ and $\opt\in\optset[1]$, and assume that $\opt\in\rejectfun\group{\optset[1]}$ and $\optset[1]\subseteq\optset[2]$.
Since $\opt\in\optset[1]$, Equation~\eqref{eq:interpretation:rejectfuns:after:irreflexivity:and:additivity:intermsof:K:Gert} then implies that $\optset[1]-\opt\in\rejectset$, and by Axiom~\ref{ax:rejects:mono}, therefore also $\optset[2]-\opt\in\rejectset$, which in turn implies [again using Equation~\eqref{eq:interpretation:rejectfuns:after:irreflexivity:and:additivity:intermsof:K:Gert}] that $\opt\in\rejectfun\group{\optset[2]\cup\set{\opt}}$, and therefore, since $\opt\in\optset[1]\subseteq\optset[2]$, that $\opt\in\rejectfun\group{\optset[2]}$.

Next, we suppose that $\rejectfun$ satisfies Axioms~\ref{ax:rejectfun:not:everything:rejected}--\ref{ax:rejectfun:senalpha}, and show that $\rejectset$ then satisfies Axioms~\ref{ax:rejects:removezero}--\ref{ax:rejects:mono}.
Again, the first axiom [Axiom~\ref{ax:rejects:removezero} in this case] holds without imposing any conditions on $\rejectfun$. 
To see this, it suffices to consider the following chain of equivalences
\begin{equation*}
\optset\in\rejectset
\ifandonlyif0\in\rejectfun\group{\optset\cup\set{0}}
\ifandonlyif0\in\rejectfun\group{\group{\optset\setminus\set{0}}\cup\set{0}}
\ifandonlyif\optset\setminus\set{0}\in\rejectset,
\text{ for all $\optset\in\optsets$,}
\end{equation*}
where the first and third equivalences follow from Equation~\eqref{eq:interpretation:rejectfuns:after:irreflexivity:and:additivity:intermsof:K:Gert}, and the second one from the trivial fact that $\optset\cup\set{0}=\group{\optset\setminus\set{0}}\cup\set{0}$.
We therefore now concentrate on Axioms~\ref{ax:rejects:nozero}--\ref{ax:rejects:mono}.

\ref{ax:rejects:nozero}.
By Equation~\eqref{eq:interpretation:rejectfuns:after:irreflexivity:and:additivity:intermsof:K:Gert}, $\set{0}\in\rejectset$ is equivalent to $0\in\rejectfun\group{\set{0}}$ , which contradicts Axiom~\ref{ax:rejectfun:not:everything:rejected}.

\ref{ax:rejects:pos}.
Consider any $\opt\in\posopts$, so $0\in\rejectfun\group{\set{0,\opt}}$ by Axiom~\ref{ax:rejectfun:pos}. 
Equation~\eqref{eq:interpretation:rejectfuns:after:irreflexivity:and:additivity:intermsof:K:Gert} now guarantees that, indeed, $\set{\opt}\in\rejectset$.

\ref{ax:rejects:cone}.
This is a straightforward translation, using Equation~\eqref{eq:interpretation:rejectfuns:after:irreflexivity:and:additivity:intermsof:K:Gert}.

\ref{ax:rejects:mono}.
Consider any $\optset[1],\optset[2]\in\optsets$, and assume that $\optset[1]\subseteq\optset[2]$ and $\optset[2]\in\rejectset$.
Then it follows from Equation~\eqref{eq:interpretation:rejectfuns:after:irreflexivity:and:additivity:intermsof:K:Gert} that $0\in\rejectfun\group{\optset[1]\cup\set{0}}$, so Axiom~\ref{ax:rejectfun:senalpha} guarantees that also $0\in\rejectfun\group{\optset[2]\cup\set{0}}$.
Equation~\eqref{eq:interpretation:rejectfuns:after:irreflexivity:and:additivity:intermsof:K:Gert} then leads to the conclusion that, indeed, $\optset[2]\in\rejectset$.
\end{proof}

\subsection{Proofs and intermediate results for Section~\ref{sec:totality}}

\begin{proof}[Proof of Proposition~\ref{prop:totalbinaryiff}]
% Checked by Gert
Since for all $\opt\in\opts\setminus\set{0}$,
\begin{equation*}
\group{\opt\in\desirset\text{ or }-\opt\in\desirset}
\ifandonlyif
\set{\opt,-\opt}\cap\desirset\neq\emptyset
\ifandonlyif
\set{\opt,-\opt}\in\rejectset[\desirset],
\end{equation*}
this result is an immediate consequence of Definitions~\ref{def:totaldesirs} and~\ref{def:totalrejects}, and Proposition~\ref{prop:coherence:for:binary}.
\end{proof}

\begin{proof}[Proof of Theorem~\ref{theo:totalrepresentation:twosided}]
% Checked by Gert, with some modifications (total now means total and coherent)
% Make sure to check carefully.
First assume that $\rejectset$ is total. 
Since $\rejectset$ is then in particular coherent, we know from Theorem~\ref{theo:coherentrepresentation:twosided} that there is some non-empty set $\setofdesirsets\subseteq\cohdesirsets$ of coherent sets of desirable options such that $\rejectset=\bigcap\cset{\rejectset[\desirset]}{\desirset\in\setofdesirsets}$, and that the largest such set $\setofdesirsets$ is $\cohdesirsets\group{\rejectset}\coloneqq\cset{\desirset\in\cohdesirsets}{\rejectset\subseteq\rejectset[\desirset]}$. 
% Consider any such non-empty set $\setofdesirsets\subseteq\cohdesirsets$ of coherent sets of desirable options such that $\rejectset=\bigcap\cset{\rejectset[\desirset]}{\desirset\in\setofdesirsets}$. 
Consider any such set $\setofdesirsets$. 
Then for all $\desirset\in\setofdesirsets$, since $\rejectset\subseteq\rejectset[\desirset]$, the totality of $\rejectset$ implies that $\rejectset[\desirset]$ is total, and therefore, because of Proposition~\ref{prop:totalbinaryiff}, that $\desirset$ is total as well. 
Hence, $\setofdesirsets\subseteq\totcohdesirsets$ and therefore also $\cohdesirsets\group{\rejectset}=\totcohdesirsets\group{\rejectset}$. 
All this allows us to conclude that there is some non-empty set $\setofdesirsets\subseteq\totcohdesirsets$ of total sets of desirable options such that $\rejectset=\bigcap\cset{\rejectset[\desirset]}{\desirset\in\setofdesirsets}$, and that the largest such set $\setofdesirsets$ is $\totcohdesirsets\group{\rejectset}\coloneqq\cset{\desirset\in\totcohdesirsets}{\rejectset\subseteq\rejectset[\desirset]}$.

To prove the `if' part of the statement, consider any non-empty set $\setofdesirsets\subseteq\totcohdesirsets$ of total sets of desirable options such that $\rejectset=\bigcap\cset{\rejectset[\desirset]}{\desirset\in\setofdesirsets}$. 
For every $\desirset\in\setofdesirsets\subseteq\totcohdesirsets$, it then follows from Proposition~\ref{prop:totalbinaryiff} that $\rejectset[\desirset]$ is total. 
Because Axioms~\ref{ax:rejects:removezero}-\ref{ax:rejects:mono} and~\ref{ax:rejects:totality} are trivially preserved under taking arbitrary non-empty intersections, this implies that $\rejectset$ is total as well.
\end{proof}

\begin{proof}[Proof of Proposition~\ref{prop:totalityforrejectionfunctions}]
% Checked by Gert, with minor cosmetic changes
First suppose that $\rejectset$ is total. 
Since $\rejectset$ is then in particular coherent, we know from Proposition~\ref{prop:axioms:rejection:sets:and:functions:coherence} that $\rejectfun$ is coherent.
Consider now any $\opt\in\opts\setminus\set{0}$, so $\set{\opt,-\opt}\in\rejectset$ by Axiom~\ref{ax:rejects:totality}. 
Equation~\eqref{eq:interpretation:rejectfuns:after:irreflexivity:and:additivity:intermsof:K:Gert} then guarantees that, indeed, $0\in\rejectfun(\set{0,\opt,-\opt}$.

Next, suppose that $\rejectfun$ is coherent and satisfies Axiom~\ref{ax:rejectfun:total}. 
Since $\rejectfun$ is coherent, we know from Proposition~\ref{prop:axioms:rejection:sets:and:functions:coherence} that $\rejectset$ is coherent too.
Consider now any $\opt\in\opts\setminus\set{0}$, so $0\in\rejectfun(\set{0,\opt,-\opt})$ by Axiom~\ref{ax:rejectfun:total}. 
Equation~\eqref{eq:interpretation:rejectfuns:after:irreflexivity:and:additivity:intermsof:K:Gert} then guarantees that, indeed, $\set{\opt,-\opt}\in\rejectset$.
\end{proof}

\subsection{Proofs and intermediate results for Section~\ref{sec:mixture}}

\begin{lemma}\label{lem:localrescaling}
Consider any set of desirable option sets $\rejectset\in\rejectsets$ that satisfies Axiom~\ref{ax:rejects:cone}.
Consider any $\optset\in\rejectset$ and, for all $\opt\in\optset$, some $\lambda_{\opt}>0$. 
Then also $\cset{\lambda_{\opt}\opt}{\opt\in\optset}\in\rejectset$.
\end{lemma}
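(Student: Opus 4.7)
The plan is to obtain this as a direct one-shot application of Axiom~\ref{ax:rejects:cone}, taking $\optset[1]=\optset[2]=\optset$ and choosing the coefficient pairs in a degenerate way that suppresses all ``cross terms''.

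Concretely, for every pair $(\opt,\altopt)\in\optset\times\optset$ I would set $(\lambda_{\opt,\altopt},\mu_{\opt,\altopt})\coloneqq(\lambda_{\opt},0)$. Since $\lambda_{\opt}>0$ by assumption, this satisfies the requirement $(\lambda_{\opt,\altopt},\mu_{\opt,\altopt})>0$ demanded by Axiom~\ref{ax:rejects:cone}. With these choices, the element produced from the pair $(\opt,\altopt)$ is
\begin{equation*}
\lambda_{\opt,\altopt}\opt+\mu_{\opt,\altopt}\altopt
=\lambda_{\opt}\opt+0\cdot\altopt
=\lambda_{\opt}\opt,
\end{equation*}
which depends only on $\opt$, not on $\altopt$. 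Hence, as $(\opt,\altopt)$ ranges over $\optset\times\optset$, the resulting option set collapses to $\cset{\lambda_{\opt}\opt}{\opt\in\optset}$, and Axiom~\ref{ax:rejects:cone} lets us conclude that this set belongs to $\rejectset$.

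There is no real obstacle here: the only thing worth noting is that the convention $(\lambda,\mu)>0$ means ``non-negative and not both zero'', so taking $\mu_{\opt,\altopt}=0$ is legitimate as long as $\lambda_{\opt,\altopt}=\lambda_{\opt}>0$, which is exactly our standing hypothesis on the rescaling factors.
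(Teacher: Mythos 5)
Your proof is correct and coincides with the paper's own argument: the paper likewise applies Axiom~\ref{ax:rejects:cone} with $\optset[1]=\optset[2]=\optset$ and the degenerate choice $(\lambda_{\opt,\altopt},\mu_{\opt,\altopt})\coloneqq(\lambda_{\opt},0)$, so that the cross terms vanish and the resulting set collapses to $\cset{\lambda_{\opt}\opt}{\opt\in\optset}$. Your remark that $(\lambda_{\opt},0)>0$ is legitimate because $\lambda_{\opt}>0$ is exactly the point that makes the one-shot application work.
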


\begin{proof}
% Checked by Gert with a minor correction, and minor cosmetic changes
Axiom~\ref{ax:rejects:cone} for $\optset$ and $\optset$ allows us to infer that $\cset{\lambda_{\opt,\altopt}\opt+\mu_{\opt,\altopt}\altopt}{\opt,\altopt\in\optset}\in\rejectset$ for all possible choices of $(\lambda_{\opt,\altopt},\mu_{\opt,\altopt})>0$.
Choosing $(\lambda_{\opt,\altopt},\mu_{\opt,\altopt})\coloneqq(\lambda_{\opt},0)$ for all $\opt,\altopt\in\optset$, yields in particular that, indeed, $\cset{\lambda_{\opt}\opt}{\opt\in\optset}\in\rejectset$.
\end{proof}

\begin{proof}[Proof of Proposition~\ref{prop:axioms:rejection:sets:and:functions:mixing}]
% Checked by Gert, with minor cosmetic changes
First assume that $\rejectfun$ is coherent and mixing. 
Proposition~\ref{prop:axioms:rejection:sets:and:functions:coherence} then tells us that $\rejectset$ is coherent. 
We will prove that $\rejectset$ also satisfies Axioms~\ref{ax:rejects:removepositivecombinations} and~\ref{ax:rejects:removeconvexcombinations}. 
Since \ref{ax:rejects:removepositivecombinations} clearly implies~\ref{ax:rejects:removeconvexcombinations}, it suffices to prove the former. 
So consider any $\optset,\altoptset\in\optsets$ such that $\altoptset\in\rejectset$ and $\optset\subseteq\altoptset\subseteq\posi\group{\optset}$.
For every $\opt\in\altoptset\setminus\optset$, since $\opt\in\posi(\optset)$, it follows from Equations~\eqref{eq:convexhulloperator} and~\eqref{eq:posioperator} that there is some $\lambda_{\opt}>0$ such that $\lambda_{\opt}\opt\in\chull(\optset)$. 
Furthermore, for every $\opt\in\optset$, if we let $\lambda_{\opt}\coloneqq1$, then also $\lambda_{\opt}\opt\in\chull(\optset)$. 
Let $\tilde\altoptset\coloneqq\cset{\lambda_{\opt}\opt}{\opt\in\altoptset}$, then, clearly, $\optset\subseteq\tilde\altoptset\subseteq\chull(\optset)$. 
Furthermore, since $\altoptset\in\rejectset$, it follows from Lemma~\ref{lem:localrescaling} that also $\tilde\altoptset\in\rejectset$.
In order to prove that then also $\optset\in\rejectset$, observe that $\optset\subseteq\tilde\altoptset\subseteq\chull(\optset)$ implies that $\optset\cup\set{0}\subseteq\tilde\altoptset\cup\set{0}\subseteq\chull\group{\optset\cup\set{0}}$, so Axiom~\ref{ax:rejectfun:removepositivecombinations} tells us that
\begin{equation}\label{eq:axioms:rejection:sets:and:functions}
\rejectfun\group[\big]{\tilde\altoptset\cup\set{0}}\cap\group{\optset\cup\set{0}}
\subseteq\rejectfun\group{\optset\cup\set{0}}.  
\end{equation}
Since $\tilde\altoptset\in\rejectset$ is equivalent to $\tilde\altoptset-0\in\rejectset$, we infer from Equation~\eqref{eq:interpretation:rejectfuns:after:irreflexivity:and:additivity:intermsof:K:Gert} that $0\in\rejectfun\group{\tilde\altoptset\cup\set{0}}$, and since also $0\in\group{\optset\cup\set{0}}$, we infer from Equation~\eqref{eq:axioms:rejection:sets:and:functions} that $0\in\rejectfun\group{\optset\cup\set{0}}$.
Equation~\eqref{eq:interpretation:rejectfuns:after:irreflexivity:and:additivity:intermsof:K:Gert} then leads us to conclude that, indeed, $\optset=\optset-0\in\rejectset$.

Next, we assume that $\rejectset$ is coherent and satisfies \ref{ax:rejects:removepositivecombinations} or~\ref{ax:rejects:removeconvexcombinations}. 
Since \ref{ax:rejects:removepositivecombinations} implies~\ref{ax:rejects:removeconvexcombinations}, it follows that $\rejectset$ is coherent and satisfies \ref{ax:rejects:removeconvexcombinations}. 
Proposition~\ref{prop:axioms:rejection:sets:and:functions:coherence} already tells us that then $\rejectfun$ is coherent, so we are left to prove that $\rejectfun$ satisfies~\ref{ax:rejectfun:removepositivecombinations}. 
So, consider any $\optset,\altoptset\in\optsets$ such that $\optset\subseteq\altoptset\subseteq\chull\group{\optset}$. 
In order to prove that $\rejectfun\group{\altoptset}\cap\optset\subseteq\rejectfun\group{\optset}$, consider any $\opt\in\rejectfun\group{\altoptset}\cap\optset$.
Then also $\opt\in\altoptset$, so $\altoptset=\altoptset\cup\set{u}$, and therefore $\opt\in\rejectfun\group{\altoptset\cup\set{u}}$, whence $\altoptset-\opt\in\rejectset$ by Equation~\eqref{eq:interpretation:rejectfuns:after:irreflexivity:and:additivity:intermsof:K:Gert}.
Since it follows from the assumptions that also $\optset-\opt\subseteq\altoptset-\opt\subseteq\chull\group{\optset-\opt}$, it follows from Axiom~\ref{ax:rejects:removeconvexcombinations} that $\optset-\opt\in\rejectset$, so Equation~\eqref{eq:interpretation:rejectfuns:after:irreflexivity:and:additivity:intermsof:K:Gert} and $\optset=\optset\cup\set{\opt}$ guarantee that, indeed, $\opt\in\rejectfun\group{\optset}$.
\end{proof}

\begin{proof}[Proof of Proposition~\ref{prop:mixingbinaryiffD}]
% Checked and adapted proof to mixing includes coherence; also corrected a mistake; quite some changes so check carefully. 
First, assume that $\desirset\in\convcohdesirsets$.
Then $\desirset$ is in particular coherent, so $\rejectset[\desirset]$ is coherent too, by Proposition~\ref{prop:coherence:for:binary}. To prove that $\rejectset[\desirset]\in\convcohrejectsets$, it therefore suffices to show that $\rejectset[\desirset]$ satisfies~\ref{ax:rejects:removepositivecombinations}.
So consider any $\optset,\altoptset\in\optsets$ such that $\altoptset\in\rejectset[\desirset]$ and $\optset\subseteq\altoptset\subseteq\posi\group{\optset}$, then we have to prove that also $\optset\in\rejectset[\desirset]$. 
Since $\altoptset\in\rejectset[\desirset]$, it follows from Equation~\eqref{eq:desirset:to:rejectset} that $\altoptset\cap\desirset\neq\emptyset$ and therefore, since $\altoptset\subseteq\posi(\optset)$, that $\posi(\optset)\cap\desirset\neq\emptyset$.
We can now use the assumption that $\desirset$ satisfies~\ref{ax:desirs:mixing} to infer that also $\optset\cap\desirset\neq\emptyset$, which in turn implies that, indeed, $\optset\in\rejectset[\desirset]$, again because of Equation~\eqref{eq:desirset:to:rejectset}.

Conversely, assume that $\rejectset[\desirset]\in\convcohrejectsets$. 
Then $\rejectset[\desirset]$ is in particular coherent, and therefore so is $\desirset$, by Proposition~\ref{prop:coherence:for:binary}. 
To prove that $\desirset\in\convcohdesirsets$, it therefore suffices to show that $\desirset$ satisfies~\ref{ax:desirs:mixing}. 
Consider, therefore, any $\optset\in\optsets$ such that $\posi(\optset)\cap\desirset\neq\emptyset$, then we have to prove that also $\optset\cap\desirset\neq\emptyset$.
It follows from $\posi(\optset)\cap\desirset\neq\emptyset$ that there is some $\opt[o]\in\posi(\optset)\cap\desirset$. 
Let $\altoptset\coloneqq\optset\cup\set{\opt[o]}$, then clearly $\optset\subseteq\altoptset\subseteq\posi\group{\optset}$. 
Also, $\altoptset\cap\desirset\neq\emptyset$, so Equation~\eqref{eq:desirset:to:rejectset} guarantees that $\altoptset\in\rejectset[\desirset]$.
We then infer from the assumption that $\rejectset[\desirset]$ satisfies~\ref{ax:rejects:removepositivecombinations} that also $\optset\in\rejectset[\desirset]$, whence, indeed, $\optset\cap\desirset\neq\emptyset$, by Equation~\eqref{eq:desirset:to:rejectset}.
\end{proof}

\begin{lemma}\label{lem:removing:multiples}
Consider any set of desirable option sets $\rejectset\in\rejectsets$ that satisfies Axiom~\ref{ax:rejects:cone}.
Consider any $\optset\in\rejectset$ such that $\set{\altopttoo,\lambda\altopttoo}\subseteq\optset$ for some $\altopttoo\in\opts\setminus\set{0}$ and $\lambda>0$ such that $\lambda\neq1$.
Then $\optset\setminus\set{\altopttoo}\in\rejectset$.
\end{lemma}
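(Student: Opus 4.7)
My plan is to derive this as a one-line application of Lemma~\ref{lem:localrescaling}, exploiting the fact that the image under a local rescaling is taken as a \emph{set}, so that two originally distinct elements can collapse into one.

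Concretely, I would apply Lemma~\ref{lem:localrescaling} to $\optset$ with the coefficients $\lambda_{\altopttoo}\coloneqq\lambda>0$ and $\lambda_{\opt}\coloneqq1$ for every $\opt\in\optset\setminus\set{\altopttoo}$. This is allowed because $\rejectset$ satisfies Axiom~\ref{ax:rejects:cone}, which is the only hypothesis needed for that lemma. The conclusion of the lemma then gives
\begin{equation*}
\cset{\lambda_{\opt}\opt}{\opt\in\optset}
=\set{\lambda\altopttoo}\cup\group{\optset\setminus\set{\altopttoo}}
\in\rejectset.
\end{equation*}

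Now I would observe that, by hypothesis, $\lambda\altopttoo\in\optset$, and also $\lambda\altopttoo\neq\altopttoo$, since $\altopttoo\neq0$ and $\lambda\neq1$. Hence $\lambda\altopttoo\in\optset\setminus\set{\altopttoo}$, so the union $\set{\lambda\altopttoo}\cup\group{\optset\setminus\set{\altopttoo}}$ simplifies to $\optset\setminus\set{\altopttoo}$, which therefore belongs to $\rejectset$.

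There is no real obstacle here: the only subtlety worth flagging explicitly is that the two elements $\altopttoo$ and $\lambda\altopttoo$, originally distinct in~$\optset$, become identified in the rescaled set-image, which is precisely why the cardinality decreases by one. The two extra hypotheses $\altopttoo\neq0$ and $\lambda\neq1$ are used exactly at this step to guarantee that these elements were genuinely distinct in~$\optset$.
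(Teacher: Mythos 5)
Your proof is correct and is essentially the paper's own argument: the paper applies Axiom~\ref{ax:rejects:cone} directly to $\optset$ and $\optset$ with coefficients $(\lambda,0)$ for $\altopttoo$ and $(1,0)$ for the other options, which is precisely the rescaling you obtain by invoking Lemma~\ref{lem:localrescaling} (itself proved by that very instantiation of Axiom~\ref{ax:rejects:cone}). The key collapse $\lambda\altopttoo\in\optset\setminus\set{\altopttoo}$, justified by $\altopttoo\neq0$ and $\lambda\neq1$, is the same in both, and your version correctly makes explicit what the paper leaves implicit.
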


\begin{proof}
% Checked by Gert
Axiom~\ref{ax:rejects:cone} for $\optset$ and $\optset$ allows us to infer that $\cset{\lambda_{\opt,\altopt}\opt+\mu_{\opt,\altopt}\altopt}{\opt,\altopt\in\optset}\in\rejectset$ for all possible choices of $(\lambda_{\opt,\altopt},\mu_{\opt,\altopt})>0$.
Choosing $(\lambda_{\opt,\altopt},\mu_{\opt,\altopt})\coloneqq(1,0)$ for all $\opt\in\optset\setminus\set{\altopttoo}$ and $(\lambda_{\altopttoo,\altopt},\mu_{\altopttoo,\altopt})\coloneqq(\lambda,0)$, for all $\altopt\in\optset$, yields in particular that $\optset\setminus\set{\altopttoo}\in\rejectset$.
\end{proof}

% \begin{lemma}\label{lem:replacing:by:positive:multiples}
% Consider any set of desirable option sets $\rejectset\in\rejectsets$ that satisfies Axiom~\ref{ax:rejects:cone}.
% Consider any $\optset\in\rejectset$.
% Then for any $\altopttoo\in\optset$ and $\lambda>0$, the new option set $\altoptset\coloneqq\set{\lambda\altopttoo}\cup\group{\optset\setminus\set{\altopttoo}}$ obtained by replacing $\altopttoo$ in $\optset$ with its positive multiple $\lambda\altopttoo$ still belongs to $\rejectset$: $\altoptset\in\rejectset$.
% \end{lemma}

% \begin{proof}
% Fix any $\altopttoo\in\optset$.
% Applying Axiom~\ref{ax:rejects:cone} for $\optset$ and $\optset$ allows us to infer that $\cset{\lambda_{\opt,\altopt}\opt+\mu_{\opt,\altopt}\altopt}{\opt,\altopt\in\optset}\in\rejectset$ for all possible choices of $(\lambda_{\opt,\altopt},\mu_{\opt,\altopt})>0$.
% Choosing $(\lambda_{\opt,\altopt},\mu_{\opt,\altopt})\coloneqq(1,0)$ for all $\opt\in\optset\setminus\set{\altopttoo}$ and $(\lambda_{\altopttoo,\altopt},\mu_{\altopttoo,\altopt})\coloneqq(\lambda,0)$, for all $\altopt\in\optset$, yields in particular that $\altoptset=\set{\lambda\altopttoo}\cup(\optset\setminus\set{\altopttoo})\in\rejectset$ as well.
% \end{proof}

\begin{proposition}\label{prop:removal:of:posis:gewijzigde:axiomas}
%% Met oorspronkelijke axioma's en nummering
% Consider any set of desirable option sets $\rejectset\in\rejectsets$.
% Then\/ $\RP\group{\rejectset}$ satisfies\/~\ref{ax:rejects:removepositivecombinations}.
% Moreover, if $\rejectset$ satisfies\/~\ref{ax:rejects:nonempty}, \ref{ax:rejects:removezero}, \ref{ax:rejects:pos}, \ref{ax:rejects:cone} and\/~\ref{ax:rejects:mono} , then so does\/ $\RP\group{\rejectset}$.
Consider any set of desirable option sets $\rejectset\in\rejectsets$.
Then\/ $\RP\group{\rejectset}$ satisfies\/~\ref{ax:rejects:removepositivecombinations}.
Moreover, if $\rejectset$ satisfies\/~\ref{ax:rejects:removezero}, \ref{ax:rejects:nozero}, \ref{ax:rejects:pos}, \ref{ax:rejects:cone} and\/~\ref{ax:rejects:mono} , then so does\/ $\RP\group{\rejectset}$.
\end{proposition}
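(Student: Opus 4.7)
The first claim follows from the idempotence $\posi(\posi(V))=\posi(V)$: if $\altoptset\in\RP(\rejectset)$ has witness $\altoptsettoo\in\rejectset$ with $\altoptset\subseteq\altoptsettoo\subseteq\posi(\altoptset)$, and $\optset\subseteq\altoptset\subseteq\posi(\optset)$, then $\optset\subseteq\altoptsettoo$ and $\altoptsettoo\subseteq\posi(\altoptset)\subseteq\posi(\posi(\optset))=\posi(\optset)$, so the same $\altoptsettoo$ witnesses $\optset\in\RP(\rejectset)$. This establishes \ref{ax:rejects:removepositivecombinations}.

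For the second claim, I plan to verify the axioms one at a time. Axiom~\ref{ax:rejects:nozero} is immediate: any witness for $\set{0}\in\RP(\rejectset)$ must satisfy $\set{0}\subseteq\altoptsettoo\subseteq\posi(\set{0})=\set{0}$, forcing $\altoptsettoo=\set{0}\in\rejectset$, which contradicts~\ref{ax:rejects:nozero} for $\rejectset$. Axiom~\ref{ax:rejects:pos} follows from the trivial inclusion $\rejectset\subseteq\RP(\rejectset)$ (take $\altoptsettoo=\optset$ in the definition of $\RP$, using $\optset\subseteq\posi(\optset)$). For axiom~\ref{ax:rejects:removezero}, given a witness $\altoptsettoo\in\rejectset$ for $\optset\in\RP(\rejectset)$, axiom~\ref{ax:rejects:removezero} for $\rejectset$ yields $\altoptsettoo\setminus\set{0}\in\rejectset$, and this is a witness for $\optset\setminus\set{0}\in\RP(\rejectset)$ because any nonzero element of $\posi(\optset)$ can be rewritten as a positive combination of elements of $\optset\setminus\set{0}$ by simply discarding the zero terms from its representation. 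Axiom~\ref{ax:rejects:mono} is handled by enlarging the witness: if $\altoptsettoo$ witnesses $\optset[1]\in\RP(\rejectset)$ and $\optset[1]\subseteq\optset[2]$, then $\altoptsettoo\cup(\optset[2]\setminus\optset[1])\in\rejectset$ (by~\ref{ax:rejects:mono} for $\rejectset$) witnesses $\optset[2]\in\RP(\rejectset)$, since $\posi(\optset[1])\subseteq\posi(\optset[2])$.

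The main obstacle is axiom~\ref{ax:rejects:cone}. Given $\optset[1],\optset[2]\in\RP(\rejectset)$ with witnesses $\altoptsettoo[1],\altoptsettoo[2]\in\rejectset$, and positive coefficients $(\alpha_{\opt,\altopt},\beta_{\opt,\altopt})$ for $(\opt,\altopt)\in\optset[1]\times\optset[2]$, set $\optset\coloneqq\cset{\alpha_{\opt,\altopt}\opt+\beta_{\opt,\altopt}\altopt}{\opt\in\optset[1],\altopt\in\optset[2]}$. The plan is to apply~\ref{ax:rejects:cone} for $\rejectset$ to $\altoptsettoo[1],\altoptsettoo[2]$ with carefully selected coefficients $(\bar\lambda_{\opt,\altopt},\bar\mu_{\opt,\altopt})>0$ for $(\opt,\altopt)\in\altoptsettoo[1]\times\altoptsettoo[2]$, producing a single $\altoptsettoo\in\rejectset$ that witnesses $\optset\in\RP(\rejectset)$. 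On $\optset[1]\times\optset[2]$ I set $(\bar\lambda,\bar\mu)\coloneqq(\alpha,\beta)$, which both guarantees $\optset\subseteq\altoptsettoo$ and places those elements in $\optset\subseteq\posi(\optset)$. For the remaining pairs I fix representations $\opt=\sum_k\xi_k\opt[k]$ and $\altopt=\sum_l\eta_l\altopt[l]$ with $\xi_k,\eta_l>0$, $\opt[k]\in\optset[1]$, $\altopt[l]\in\optset[2]$ (which exist because $\altoptsettoo[i]\subseteq\posi(\optset[i])$ and $\optset[i]$ is nonempty---nonemptiness follows from \ref{ax:rejects:nozero} and~\ref{ax:rejects:mono} for $\rejectset$, since $\emptyset\in\rejectset$ would force $\set{0}\in\rejectset$), and I take $\bar\mu\coloneqq 1$, $T_l\coloneqq\sum_k\xi_k\beta_{\opt[k],\altopt[l]}/\alpha_{\opt[k],\altopt[l]}$, and $\bar\lambda\coloneqq\sum_l\eta_l/T_l$. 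A direct computation---this is the delicate, transportation-style step---then shows that $\bar\lambda\opt+\bar\mu\altopt=\sum_{k,l}\gamma_{k,l}(\alpha_{\opt[k],\altopt[l]}\opt[k]+\beta_{\opt[k],\altopt[l]}\altopt[l])$ with $\gamma_{k,l}\coloneqq\xi_k\eta_l/(T_l\alpha_{\opt[k],\altopt[l]})>0$, placing the element in $\posi(\optset)$ and completing the verification.
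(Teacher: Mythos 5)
Everything in your proposal up to the cone axiom is correct and is essentially the paper's own argument: your idempotence argument for \ref{ax:rejects:removepositivecombinations} is the paper's observation that $\optset\subseteq\optset[1]\subseteq\posi\group{\optset}$ forces $\posi\group{\optset}=\posi(\optset[1])$; your handling of \ref{ax:rejects:removezero} by discarding zero terms, of \ref{ax:rejects:nozero} via $\posi\group{\set{0}}=\set{0}$, of \ref{ax:rejects:pos} via $\rejectset\subseteq\RP\group{\rejectset}$, and of \ref{ax:rejects:mono} via the enlarged witness (your $\altoptsettoo\cup\group{\optset[2]\setminus\optset[1]}$ equals the paper's $\optset[2]\cup\altoptset[1]$) all match the paper step for step. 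The genuine gap is in your verification of \ref{ax:rejects:cone}, which is where all the difficulty of this proposition actually lives.

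Your transportation formula silently assumes its denominators are nonzero, and that assumption fails in general: the paper's convention is that $(\lambda,\mu)>0$ means only $\lambda\geq0$, $\mu\geq0$ and $\lambda+\mu>0$, so zero components are explicitly permitted---and they are essential to the strength of \ref{ax:rejects:cone} (the paper's footnote example for that axiom uses $(1,0)$, and the appendix invokes it with such pairs constantly). Concretely, take $\optset[1]=\set{\aopt}$, $\optset[2]=\set{\bopt}$, witness $\altoptsettoo[1]=\set{\aopt,2\aopt}\in\rejectset$, and coefficients $(\alpha_{\aopt,\bopt},\beta_{\aopt,\bopt})=(0,1)$, so $\optset=\set{\bopt}$: for the remaining pair $(2\aopt,\bopt)$ your $T_1=\xi_1\beta_{\aopt,\bopt}/\alpha_{\aopt,\bopt}$ divides by $\alpha_{\aopt,\bopt}=0$. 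Symmetrically, with $(\alpha_{\aopt,\bopt},\beta_{\aopt,\bopt})=(1,0)$ and the pair $(\aopt,2\bopt)$ you get $T_1=0$, and $\bar\lambda=\eta_1/T_1$ is undefined. In both cases suitable coefficients do exist---$(\bar\lambda,\bar\mu)=(0,1)$ and $(1,0)$ respectively---but they lie outside the reach of any single closed-form choice; finding them requires knowing for which pairs the given coefficients vanish. That is exactly what the paper's proof spends most of its length on: it partitions $\optset[1]$ and $\optset[2]$ into the pieces $\optset[i]^1,\optset[i]^2,\optset[i]^3$ according to which representation coefficients and which cross-coefficients vanish, and makes separate choices of $(\kappa_{\altopt[1],\altopt[2]},\rho_{\altopt[1],\altopt[2]})$ in each regime, e.g.\ $(\lambda_{\opt[1],\opt[2]},0)$ whenever some $\mu_{\opt[1],\opt[2]}=0$. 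Your formula is in essence the paper's generic-case computation---compare its $\phi_{\altopt[2],\opt[1]}$ and $\psi_{\altopt[1],\altopt[2]}$, which divide by $\mu_{\opt[1],\opt[2]}$ only after the case analysis has guaranteed positivity---so the skeleton of your argument is right, but the degenerate cases are a missing piece of substance, not a routine verification to be waved through.
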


\begin{proof}
We begin with the first statement.
Consider any $\optset[1]\in\RP\group{\rejectset}$ and any $\optset\in\optsets$ such that $\optset\subseteq\optset[1]\subseteq\posi\group{\optset}$.
Then, on the one hand, $\posi\group{\optset}=\posi(\optset[1])$, and on the other hand, there is some $\altoptset[1]\in\rejectset$ such that $\optset[1]\subseteq\altoptset[1]\subseteq\posi(\optset[1])$.
Hence $\optset\subseteq\optset[1]\subseteq\altoptset[1]\subseteq\posi(\optset[1])=\posi\group{\optset}$, and therefore indeed $\optset\in\RP\group{\rejectset}$. 

For the second statement, assume that $\rejectset$ satisfies~\ref{ax:rejects:removezero}, \ref{ax:rejects:nozero}, \ref{ax:rejects:pos}, \ref{ax:rejects:cone}, and\/~\ref{ax:rejects:mono}.

To prove that $\RP\group{\rejectset}$ satisfies~\ref{ax:rejects:removezero}, consider any $\optset\in\RP\group{\rejectset}$, meaning that there is some $\altoptset\in\rejectset$ such that $\optset\subseteq\altoptset\subseteq\posi\group{\optset}$.
To see that $\optset\setminus\set{0}\in\RP\group{\rejectset}$, it suffices to show that $\altoptset\setminus\set{0}\subseteq\posi(\optset\setminus\set{0})$, because clearly $\optset\setminus\set{0}\subseteq\altoptset\setminus\set{0}$.
Consider any element $\opt$ of $\altoptset\setminus\set{0}$, then it follows from the assumption $\altoptset\subseteq\posi\group{\optset}$ that $\opt=\sum_{k=1}^n\lambda_k\opt[k]$, with $n\geq1$, different $\opt[k]\in\optset$, and $\lambda_k>0$.
Even if $\opt[k]=0$ for some $k$, we still find that $\opt\in\posi(\optset\setminus\set{0})$.

To prove that $\RP\group{\rejectset}$ satisfies~\ref{ax:rejects:nozero}, assume {\itshape ex absurdo} that $\set{0}\in\RP\group{\rejectset}$, so there is some $\altoptset\in\rejectset$ such that $\set{0}\subseteq\altoptset\subseteq\posi\group{\set{0}}$, or in other words, such that $\altoptset=\set{0}$, contradicting that $\rejectset$ satisfies~\ref{ax:rejects:nozero}.

To prove that $\RP\group{\rejectset}$ satisfies~\ref{ax:rejects:pos}, simply observe that the operator $\RP$ never removes option sets from a set of desirable option sets, so the option sets $\set{\opt}$, $\opt\optgt0$ that belong to $\rejectset$ by~\ref{ax:rejects:pos}, will also belong to the larger $\RP\group{\rejectset}$.

To prove that $\RP\group{\rejectset}$ satisfies~\ref{ax:rejects:cone}, consider any $\optset[1],\optset[2]\in\RP\group{\rejectset}$, meaning that there are $\altoptset[1],\altoptset[2]\in\rejectset$ such that $\optset[1]\subseteq\altoptset[1]\subseteq\posi(\optset[1])$ and $\optset[2]\subseteq\altoptset[2]\subseteq\posi(\optset[2])$.
This tells us that any $\altopt[1]\in\altoptset[1]$ can be written as $\altopt[1]=\sum_{\opt[1]\in\optset[1]}\alpha_{\altopt[1],\opt[1]}\opt[1]$ with $\alpha_{\opt[1],\bullet}>0$, and similarly, any $\altopt[2]\in\altoptset[2]$ can be written as $\altopt[2]=\sum_{\opt[2]\in\optset[2]}\beta_{\altopt[2],\opt[2]}\opt[2]$ with $\beta_{\altopt[2],\bullet}>0$.

Choose, for all $\opt\in\optset[1]$ and $\altopt\in\optset[2]$, $(\lambda_{\opt,\altopt},\mu_{\opt,\altopt})>0$, then we must show that
\begin{equation*}
\optset
\coloneqq\cset{\lambda_{\opt,\altopt}\opt+\mu_{\opt,\altopt}\altopt}{\opt\in\optset[1],\altopt\in\optset[2]}
\in\RP\group{\rejectset},
\end{equation*}
or in other words that there is some $\altoptset\in\rejectset$ such that $\optset\subseteq\altoptset\subseteq\posi\group{\optset}$.
We will show that there is some $\altoptset\in\setposi\set{\altoptset[1],\altoptset[2]}$ that does the job, or in other words that there are suitable choices of $(\kappa_{\altopt[1],\altopt[2]},\rho_{\altopt[1],\altopt[2]})>0$ for all $\altopt[1]\in\altoptset[1]$ and $\altopt[2]\in\altoptset[2]$ such that
\begin{equation}\label{eq:removal:of:posis:to:reach}
\optset\subseteq\cset{\kappa_{\altopt[1],\altopt[2]}\altopt[1]+\rho_{\altopt[1],\altopt[2]}\altopt[2]}
{\altopt[1]\in\altoptset[1],\altopt[2]\in\altoptset[2]}
\subseteq\posi\group{\optset}.
\end{equation}
For a start, if we let $\kappa_{\opt[1],\opt[2]}\coloneqq\lambda_{\opt[1],\opt[2]}$ and $\rho_{\opt[1],\opt[2]}\coloneqq\mu_{\opt[1],\opt[2]}$ for all $\opt[1]\in\optset[1]$ and all $\opt[2]\in\optset[2]$, then we are already guaranteed that the first inequality in~\eqref{eq:removal:of:posis:to:reach} holds.
It now remains to choose the remaining $(\kappa_{\altopt[1],\altopt[2]},\rho_{\altopt[1],\altopt[2]})>0$ in such a way that the second inequality in~\eqref{eq:removal:of:posis:to:reach} will also hold, meaning that $\kappa_{\altopt[1],\altopt[2]}\altopt[1]+\rho_{\altopt[1],\altopt[2]}\altopt[2]\in\posi\group{\optset}$.

We consider three mutually exclusive possibilities, for any fixed remaining $\altopt[1]$ and $\altopt[2]$.
The first possibility is that $\altopt[1]=\opt[1]\in\optset[1]$ and $\altopt[2]\in\altoptset[2]\setminus\optset[2]$.
If there is some $\opt[2]\in\optset[2]$ such that $\mu_{\opt[1],\opt[2]}=0$ and therefore $\lambda_{\opt[1],\opt[2]}>0$, then we choose $\kappa_{\opt[1],\altopt[2]}\coloneqq\lambda_{\opt[1],\opt[2]}$ and $\rho_{\opt[1],\altopt[2]}\coloneqq0$, and then indeed
\begin{equation*}
\kappa_{\opt[1],\altopt[2]}\opt[1]+\rho_{\opt[1],\altopt[2]}\altopt[2]
=\lambda_{\opt[1],\opt[2]}\opt[1]
=\lambda_{\opt[1],\opt[2]}\opt[1]+\mu_{\opt[1],\opt[2]}\opt[2]
\in\optset\subseteq\posi\group{\optset}.
\end{equation*}
If, on the other hand, $\mu_{\opt[1],\opt[2]}>0$ for all $\opt[2]\in\optset[2]$, then we choose $\rho_{\opt[1],\altopt[2]}\coloneqq1$ and
\begin{equation*}
\kappa_{\opt[1],\altopt[2]}\coloneqq\sum_{\opt[2]\in\optset[2]}\frac{\lambda_{\opt[1],\opt[2]}}{\mu_{\opt[1],\opt[2]}}\beta_{\altopt[2],\opt[2]},
\end{equation*}
and then indeed
\begin{align*}
\kappa_{\opt[1],\altopt[2]}\opt[1]+\rho_{\opt[1],\altopt[2]}\altopt[2]
&=\sum_{\opt[2]\in\optset[2]}\frac{\lambda_{\opt[1],\opt[2]}}{\mu_{\opt[1],\opt[2]}}\beta_{\altopt[2],\opt[2]}\opt[1]
+\sum_{\opt[2]\in\optset[2]}\beta_{\altopt[2],\opt[2]}\opt[2]\\
&=\sum_{\opt[2]\in\optset[2]}\frac{\beta_{\altopt[2],\opt[2]}}{\mu_{\opt[1],\opt[2]}}
\group{\lambda_{\opt[1],\opt[2]}\opt[1]+\mu_{\opt[1],\opt[2]}\opt[2]}
\in\posi\group{\optset}.
\end{align*}

The second possibility is that $\altopt[1]\in\altoptset[1]\setminus\optset[1]$ and $\altopt[2]=\opt[2]\in\optset[2]$, and this is treated in a similar way as the first.

And the third and last possibility is that $\altopt[1]\in\altoptset[1]\setminus\optset[2]$ and $\altopt[2]\in\altoptset[2]\setminus\optset[2]$.
We now partition both $\optset[1]$ and $\optset[2]$ in three disjoint pieces.
For $\optset[1]$ we let $\optset[1]^1\coloneqq\cset{\opt[1]\in\optset[1]}{\alpha_{\altopt[1],\opt[1]}=0}$, $\optset[1]^2\coloneqq\cset{\opt[1]\in\optset[1]\setminus\optset[1]^1}{(\forall\opt[2]\in\optset[2])\mu_{\opt[1],\opt[2]}>0}$ and $\optset[1]^3\coloneqq\cset{\opt[1]\in\optset[1]\setminus\optset[1]^1}{(\exists\opt[2]\in\optset[2])\mu_{\opt[1],\opt[2]}=0}$.
Similarly, for $\optset[2]$ we let $\optset[2]^1\coloneqq\cset{\opt[2]\in\optset[2]}{\beta_{\altopt[2],\opt[2]}=0}$, $\optset[2]^2\coloneqq\cset{\opt[2]\in\optset[2]\setminus\optset[2]^1}{(\forall\opt[1]\in\optset[1])\lambda_{\opt[1],\opt[2]}>0}$ and $\optset[2]^3\coloneqq\cset{\opt[2]\in\optset[2]\setminus\optset[2]^1}{(\exists\opt[1]\in\optset[1])\lambda_{\opt[1],\opt[2]}=0}$.

If $\optset[1]^2=\emptyset$ then it cannot be that also $\optset[1]^3=\emptyset$, because $\alpha_{\altopt[1],\opt[1]}>0$ for at least one $\opt[1]\in\optset[1]$.
So for each $\opt[1]\in\optset[1]^3\neq\emptyset$, we choose some $\opt[{2,\opt[1]}]\in\optset[2]$ such that $\mu_{\opt[1],\opt[{2,\opt[1]}]}=0$ and therefore $\lambda_{\opt[1],\opt[{2,\opt[1]}]}>0$ [which is always possible, by the definition of $\optset[1]^3$], and we let $\kappa_{\altopt[1],\altopt[2]}\coloneqq1$ and $\rho_{\altopt[1],\altopt[2]}\coloneqq0$.
Then indeed 
\begin{equation*}
\kappa_{\altopt[1],\altopt[2]}\altopt[1]+\rho_{\altopt[1],\altopt[2]}\altopt[2]
=\sum_{\opt[1]\in\optset[1]^3}\alpha_{\altopt[1],\opt[1]}\opt[1]
=\sum_{\opt[1]\in\optset[1]^3}\frac{\alpha_{\altopt[1],\opt[1]}}{\lambda_{\opt[1],\opt[{2,\opt[1]}]}}
\group[\big]{\lambda_{\opt[1],\opt[{2,\opt[1]}]}\opt[1]+\mu_{\opt[1],\opt[{2,\opt[1]}]}\opt[2]}
\in\posi\group{\optset}.
\end{equation*}
A completely symmetrical argument can be made when $\optset[2]^2=\emptyset$, so we may now assume without loss of generality that both $\optset[1]^2\neq\emptyset$ and $\optset[2]^2\neq\emptyset$.
Then, as before, for any $\opt[1]\in\optset[1]^3$ we choose some $\opt[{2,\opt[1]}]\in\optset[2]$ such that $\mu_{\opt[1],\opt[{2,\opt[1]}]}=0$ and therefore $\lambda_{\opt[1],\opt[{2,\opt[1]}]}>0$ [possible by the definition of $\optset[1]^3$], and for any $\opt[2]\in\optset[2]^3$, we choose some $\opt[{1,\opt[2]}]\in\optset[1]$ such that $\lambda_{\opt[{1,\opt[2]}],\opt[2]}=0$ and therefore $\mu_{\opt[{1,\opt[2]}],\opt[2]}>0$ [possible by the definition of $\optset[2]^3$].
We also let
\begin{equation*}
\altopttoo[1]\coloneqq\smashoperator[r]{\sum_{\opt[1]\in\optset[1]^2}}\alpha_{\altopt[1],\opt[1]}\opt[1]
\text{ and }
\altopttoo[1]'\coloneqq\smashoperator[r]{\sum_{\opt[1]\in\optset[1]^3}}\alpha_{\altopt[1],\opt[1]}\opt[1],
\text{ so }\altopt[1]=\altopttoo[1]+\altopttoo[1]',
\end{equation*}
and
\begin{equation*}
\altopttoo[2]\coloneqq\smashoperator[r]{\sum_{\opt[2]\in\optset[2]^2}}\beta_{\altopt[2],\opt[2]}\opt[2]
\text{ and }
\altopttoo[2]'\coloneqq\smashoperator[r]{\sum_{\opt[2]\in\optset[2]^3}}\beta_{\altopt[2],\opt[2]}\opt[2],
\text{ so }\altopt[2]=\altopttoo[2]+\altopttoo[2]'.
\end{equation*}
Then we already know that, by a similar argument as before:
\begin{equation}\label{eq:first:in:posi}
\altopttoo[1]'
=\sum_{\opt[1]\in\optset[1]^3}\frac{\alpha_{\altopt[1],\opt[1]}}{\lambda_{\opt[1],\opt[{2,\opt[1]}]}}
\group[\big]{\lambda_{\opt[1],\opt[{2,\opt[1]}]}\opt[1]+\mu_{\opt[1],\opt[{2,\opt[1]}]}\opt[{2,\opt[1]}]}
\in\posi\group{\optset}
\end{equation}
and
\begin{equation}\label{eq:second:in:posi}
\altopttoo[2]'
=\sum_{\opt[2]\in\optset[2]^3}\frac{\beta_{\altopt[2],\opt[2]}}{\mu_{\opt[{1,\opt[2]}],\opt[2]}}
\group{\lambda_{\opt[{1,\opt[2]}],\opt[2]}\opt[{1,\opt[2]}]+\mu_{\opt[{1,\opt[2]}],\opt[2]}\opt[2]}
\in\posi\group{\optset}.
\end{equation}
Next, recall that $\lambda_{\opt[1],\opt[2]}>0$ and $\mu_{\opt[1],\opt[2]}>0$ for any $\opt[1]\in\optset[1]$ and $\opt[2]\in\optset[2]$.
We then infer from $\lambda_{\opt[1],\opt[2]}\opt[1]+\mu_{\opt[1],\opt[2]}\opt[2]\in\optset$ for all $\opt[1]\in\optset[1]^2$ and $\opt[2]\in\optset[2]^2$ that
\begin{equation*}
\phi_{\altopt[2],\opt[1]}\opt[1]+\altopttoo[2]\in\posi\group{\optset},
\text{ with }
\phi_{\altopt[2],\opt[1]}
\coloneqq\smashoperator[r]{\sum_{\opt[2]\in\optset[2]^2}}\beta_{\altopt[2],\opt[2]}\frac{\lambda_{\opt[1],\opt[2]}}{\mu_{\opt[1],\opt[2]}}>0,
\end{equation*}
and therefore also
\begin{equation}\label{eq:third:in:posi}
\altopttoo[1]+\psi_{\altopt[1],\altopt[1]}\altopttoo[2]\in\posi\group{\optset},
\text{ with }
\psi_{\altopt[1],\altopt[2]}
\coloneqq\sum_{\opt[1]\in\optset[1]^2}\frac{\alpha_{\altopt[1],\opt[1]}}{\phi_{\altopt[2],\opt[1]}}>0.
\end{equation}
Now let $\kappa_{\altopt[1],\altopt[2]}\coloneqq1$ and $\rho_{\altopt[1],\altopt[2]}\coloneqq\psi_{\altopt[1],\altopt[2]}$, then we infer from~\eqref{eq:first:in:posi}--\eqref{eq:third:in:posi} that, indeed,
\begin{align*}
\kappa_{\altopt[1],\altopt[2]}\altopt[1]+\rho_{\altopt[1],\altopt[2]}\altopt[2]
&=\altopt[1]+\psi_{\altopt[1],\altopt[2]}\altopt[2]
=\altopttoo[1]+\altopttoo[1]'+\psi_{\altopt[1],\altopt[2]}\group{\altopttoo[2]+\altopttoo[2]'}\\
&=\altopttoo[1]+\psi_{\altopt[1],\altopt[2]}\altopttoo[2]
+\altopttoo[1]'+\psi_{\altopt[1],\altopt[2]}\altopttoo[2]'
\in\posi\group{\optset}.
\end{align*}

To prove that, finally, $\RP\group{\rejectset}$ satisfies~\ref{ax:rejects:mono}, consider any $\optset[1]\in\RP\group{\rejectset}$ and any $\optset[2]\in\optsets$ such that $\optset[1]\subseteq\optset[2]$.
This implies on the one hand that $\posi(\optset[1])\subseteq\posi(\optset[2])$, and on the other hand that there is some $\altoptset[1]\in\rejectset$ such that $\optset[1]\subseteq\altoptset[1]\subseteq\posi(\optset[1])$, and therefore also $\posi(\optset[1])=\posi(\altoptset[1])$.
But then $\posi(\optset[2]\cup\altoptset[1])=\posi(\optset[2]\cup\posi(\altoptset[1]))=\posi(\optset[2])$ and therefore $\optset[2]\subseteq\optset[2]\cup\altoptset[1]\subseteq\posi(\optset[2])$.
Since, moreover, $\altoptset[1]\in\rejectset$ and $\altoptset[1]\subseteq\optset[2]\cup\altoptset[1]$, \ref{ax:rejects:mono} guarantees that $\optset[2]\cup\altoptset[1]\in\rejectset$, and therefore indeed $\optset[2]\in\RP\group{\rejectset}$.
\end{proof}

\begin{proposition}\label{prop:nonbinarymixing:is:dominated}
Any non-binary mixing set of desirable option sets $\rejectset$ is \emph{strictly dominated} by some other mixing set of desirable option sets.
\end{proposition}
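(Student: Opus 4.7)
The plan is to mimic the construction used in Proposition~\ref{prop:nonbinary:is:dominated}, but to compose it with the mixing-closure operator~$\RP(\cdot)$ so that the resulting dominator inherits the mixing property.

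First I would invoke Lemma~\ref{lem:binaryalternative} to exploit the non-binarity of~$\rejectset$: there exists an option set $\optset[o]\in\rejectset$ with $\card{\optset[o]}\geq2$ and $\optset[o]\setminus\set{\opt}\notin\rejectset$ for every $\opt\in\optset[o]$. Pick any $\opt[o]\in\optset[o]$ and build the auxiliary set $\rejectset^{**}$ as in Equation~\eqref{eq:lem:Kstarstar} of Lemma~\ref{lem:Kstarstar}. That lemma guarantees that $\rejectset^{*}\coloneqq\RN\group{\rejectset^{**}}$ is a coherent set of desirable option sets, that $\rejectset\subseteq\rejectset^{*}$, and that $\set{\opt[o]}\in\rejectset^{*}$ while $\set{\opt[o]}\notin\rejectset$. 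In particular, $\rejectset\subsetneq\rejectset^{*}$.

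Next I would define the candidate dominator $\rejectset^{\dagger}\coloneqq\RP\group{\rejectset^{*}}$ and appeal to Proposition~\ref{prop:removal:of:posis:gewijzigde:axiomas}. Since $\rejectset^{*}$ already satisfies Axioms~\ref{ax:rejects:removezero}--\ref{ax:rejects:mono}, that proposition yields two things at once: first, $\rejectset^{\dagger}$ still satisfies all the coherence axioms~\ref{ax:rejects:removezero}--\ref{ax:rejects:mono}; second, it satisfies the mixing axiom~\ref{ax:rejects:removepositivecombinations}. By Definition~\ref{def:mixingrejects} this makes $\rejectset^{\dagger}$ a mixing set of desirable option sets, so $\rejectset^{\dagger}\in\convcohrejectsets$.

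Finally, strict domination follows from chasing inclusions. The $\RP(\cdot)$ operator is expanding (take $\altoptset=\optset$ in its definition), so $\rejectset^{*}\subseteq\rejectset^{\dagger}$, and combined with $\rejectset\subsetneq\rejectset^{*}$ this gives $\rejectset\subsetneq\rejectset^{\dagger}$, witnessed by $\set{\opt[o]}\in\rejectset^{\dagger}\setminus\rejectset$.

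The only place where I anticipate having to work is in \emph{verifying} that this construction does not lose anything on the way: the potentially delicate point is that applying $\RP(\cdot)$ on top of $\RN(\cdot)$ might in principle interact badly with the specific option sets added by the $\rejectset^{**}$-construction (for instance by inadvertently forcing $\set{0}$ or some $\optset[o]\setminus\set{\opt}$ into the closure and thus contradicting what Lemma~\ref{lem:Kstarstar} achieves). However, this concern is already taken care of by Proposition~\ref{prop:removal:of:posis:gewijzigde:axiomas}, which asserts that coherence is preserved under $\RP$; so no extra work should be required beyond quoting it. The core intellectual work has already been done in Lemma~\ref{lem:Kstarstar} and Proposition~\ref{prop:removal:of:posis:gewijzigde:axiomas}, and the present proof is essentially their composition.
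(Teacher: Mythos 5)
Your proposal is correct and takes essentially the same route as the paper: the paper's proof simply cites Proposition~\ref{prop:nonbinary:is:dominated} as a black box to obtain a coherent $\rejectset^*$ with $\rejectset\subset\rejectset^*$ (your Lemma~\ref{lem:binaryalternative} plus Lemma~\ref{lem:Kstarstar} construction is exactly the proof of that proposition, unfolded inline), and then, just as you do, applies $\RP$ and invokes Proposition~\ref{prop:removal:of:posis:gewijzigde:axiomas} together with the fact that $\RP$ never removes option sets to conclude that $\RP\group{\rejectset^*}$ is a mixing strict dominator. Your closing concern is also correctly resolved: since strict domination only requires the inclusions $\rejectset\subset\rejectset^*\subseteq\RP\group{\rejectset^*}$, nothing about what $\RP$ might additionally absorb needs to be checked beyond the coherence-preservation already guaranteed by that proposition.
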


\begin{proof}
% Checked and adapted to changes: mixing includes coherence.
Consider an arbitrary mixing non-binary set of desirable option sets~$\rejectset$. 
Since $\rejectset$ is non-binary and coherent, Lemma~\ref{prop:nonbinary:is:dominated} guarantees the existence of a coherent set of desirable option sets~$\rejectset^*$ such that $\rejectset\subset\rejectset^*$. 
Let $\rejectset^{**}\coloneqq\RP(\rejectset^*)$. 
Since $\rejectset^*$ is coherent, we know from Proposition~\ref{prop:removal:of:posis:gewijzigde:axiomas} that $\rejectset^{**}$ is mixing. 
Since $\RP$ does not remove option sets, we also know that $\rejectset^*\subseteq\rejectset^{**}$.
Since $\rejectset\subset\rejectset^*$, this implies that $\rejectset$ is strictly dominated by the mixing set of desirable option sets~$\rejectset^{**}$.
\end{proof}

\begin{lemma}\label{lem:chainmixing}
For any non-empty chain $\chainofrejectsets$ in $\convcohrejectsets$, its union $\rejectset[o]\coloneqq\bigcup\chainofrejectsets$ is a mixing set of desirable option sets.
\end{lemma}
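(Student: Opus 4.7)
The plan is to combine Lemma~\ref{lem:chaincoherence} with a direct verification of the mixing axiom~\ref{ax:rejects:removepositivecombinations}. Since Lemma~\ref{lem:chaincoherence} already guarantees that $\rejectset[o]=\bigcup\chainofrejectsets$ is coherent whenever $\chainofrejectsets$ is a non-empty chain in $\cohrejectsets$---and $\convcohrejectsets\subseteq\cohrejectsets$ by Definition~\ref{def:mixingrejects}---all that remains to check is that $\rejectset[o]$ satisfies Axiom~\ref{ax:rejects:removepositivecombinations}.

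First I would fix any $\optset,\altoptset\in\optsets$ with $\altoptset\in\rejectset[o]$ and $\optset\subseteq\altoptset\subseteq\posi\group{\optset}$, and show that $\optset\in\rejectset[o]$. By definition of the union, there is some $\rejectset\in\chainofrejectsets$ with $\altoptset\in\rejectset$. Since $\rejectset\in\convcohrejectsets$, it is in particular mixing, so applying Axiom~\ref{ax:rejects:removepositivecombinations} for $\rejectset$ to the pair $(\optset,\altoptset)$ yields $\optset\in\rejectset$. Because $\rejectset\subseteq\rejectset[o]$, we conclude $\optset\in\rejectset[o]$, as required.

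There is essentially no obstacle here: unlike the coherence axioms, Axiom~\ref{ax:rejects:removepositivecombinations} involves only \emph{one} option set on the ``input'' side, so we never need to invoke the chain property to reconcile two option sets living in potentially different elements of $\chainofrejectsets$. The argument is therefore a one-line reduction to the mixingness of a single element of the chain, combined with the already-established coherence of the union.
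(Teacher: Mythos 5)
Your proof is correct and matches the paper's own argument exactly: coherence of the union via Lemma~\ref{lem:chaincoherence}, then a direct verification of Axiom~\ref{ax:rejects:removepositivecombinations} by locating the single witness $\rejectset\in\chainofrejectsets$ containing $\altoptset$ and applying its mixingness. Your closing observation---that the chain property is never needed for this axiom because only one option set must be traced back into the chain---is also accurate.
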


\begin{proof}
% Checked by Gert, adapted to mixing includes coherence.
Since $\convcohrejectsets\subseteq\cohrejectsets$, it follows from Lemma~\ref{lem:chaincoherence} that $\rejectset[o]$ is coherent. 
To prove that it is also mixing, we consider any $\optset,\altoptset\in\optsets$ such that $\altoptset\in\rejectset[o]$ and $\optset\subseteq\altoptset\subseteq\posi\group{\optset}$. 
Since $\altoptset\in\rejectset[o]$, there is some $\rejectset'\in\chainofrejectsets$ such that $\altoptset\in\rejectset'$, and since $\rejectset'$ is mixing, this implies that $\optset\in\rejectset'\subseteq\rejectset[o]$.
\end{proof}

\begin{lemma}\label{lem:Zornconvexity}
For any mixing set of desirable option sets $\rejectset\in\convcohrejectsets$ and any set of desirable option sets $\rejectset^*\in\rejectsets$ such that $\rejectset\cap\rejectset^*=\emptyset$, the partially ordered set
\begin{equation*}
\upset{\rejectset[\mathrm{M}]}\coloneqq\cset{\rejectset'\in\convcohrejectsets}{\rejectset\subseteq\rejectset'\text{ and }\rejectset'\cap\rejectset^*=\emptyset}
\end{equation*}
has a maximal element.
\end{lemma}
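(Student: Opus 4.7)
The plan is to invoke Zorn's Lemma in exactly the same manner as in the proof of Lemma~\ref{lem:Zorncoherence}, with the only essential change being that we now appeal to Lemma~\ref{lem:chainmixing} instead of Lemma~\ref{lem:chaincoherence} to ensure that unions of chains remain in the ambient class.

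First I would observe that $\upset{\rejectset[\mathrm{M}]}$ is non-empty, since $\rejectset$ itself belongs to it: $\rejectset\in\convcohrejectsets$ by assumption, $\rejectset\subseteq\rejectset$ trivially, and $\rejectset\cap\rejectset^*=\emptyset$ by hypothesis. Next, I would take an arbitrary non-empty chain $\chainofrejectsets$ in $\upset{\rejectset[\mathrm{M}]}$ and exhibit the union $\rejectset[o]\coloneqq\bigcup\chainofrejectsets$ as an upper bound. It is clear that $\rejectset[o]$ dominates every element of the chain, so it only remains to check that $\rejectset[o]\in\upset{\rejectset[\mathrm{M}]}$. For this, three things must hold: that $\rejectset[o]$ is itself a mixing set of desirable option sets, that $\rejectset\subseteq\rejectset[o]$, and that $\rejectset[o]\cap\rejectset^*=\emptyset$.

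The first of these is exactly the content of Lemma~\ref{lem:chainmixing}, since $\chainofrejectsets$ is a non-empty chain in $\convcohrejectsets$. The second follows because $\rejectset\subseteq\rejectset'$ for every $\rejectset'\in\chainofrejectsets$, so $\rejectset$ is also contained in their union. The third follows because $\rejectset'\cap\rejectset^*=\emptyset$ for every $\rejectset'\in\chainofrejectsets$, and this property is preserved under taking the union. Zorn's Lemma then guarantees the existence of a maximal element of $\upset{\rejectset[\mathrm{M}]}$.

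There is no real obstacle here: the proof is essentially a mechanical adaptation of Lemma~\ref{lem:Zorncoherence}, and all the substantive work has been pushed into Lemma~\ref{lem:chainmixing}. The only point deserving any care is verifying that the non-emptiness and disjointness conditions defining $\upset{\rejectset[\mathrm{M}]}$ are both preserved under unions of chains, which is immediate from their set-theoretic form.
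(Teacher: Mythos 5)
Your proof is correct and follows essentially the same route as the paper's: Zorn's Lemma applied to chains, with the union as upper bound, disjointness from $\rejectset^*$ preserved trivially under unions, and mixingness of the union supplied by Lemma~\ref{lem:chainmixing}. Your explicit check that $\rejectset$ itself witnesses non-emptiness of $\upset{\rejectset[\mathrm{M}]}$ is a harmless (indeed slightly more careful) addition that the paper leaves implicit.
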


\begin{proof}
% Checked by Gert, with minor corrections and cosmetic changes
We will use Zorn's Lemma to establish the existence of a maximal element.
So consider any non-empty chain $\chainofrejectsets$ in $\upset{\rejectset[\mathrm{M}]}$, then we must prove that $\chainofrejectsets$ has an upper bound in $\upset{\rejectset[\mathrm{M}]}$.
Since $\rejectset[o]\coloneqq\bigcup\chainofrejectsets$ is clearly an upper bound, we are done if we can prove that $\rejectset[o]\in\upset{\rejectset[\mathrm{M}]}$.

That $\rejectset[o]\cap\rejectset^*=\emptyset$ follows from the fact that $\rejectset'\cap\rejectset^*=\emptyset$ for every $\rejectset'\in\chainofrejectsets\subseteq\upset{\rejectset[\mathrm{M}]}$. 
That $\rejectset[o]$ is mixing follows from Lemma~\ref{lem:chainmixing}.
\end{proof}

\begin{proposition}\label{prop:mixingdominatebybinarymixing}
Every mixing set of desirable option sets $\rejectset\in\convcohrejectsets$ is dominated by some binary mixing set of desirable option sets.
\end{proposition}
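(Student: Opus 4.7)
The plan is to imitate the proof of Proposition~\ref{prop:dominatebybinary} almost verbatim, substituting the mixing analogues of its two main ingredients. All the heavy lifting has already been done: Lemma~\ref{lem:Zornconvexity} provides the Zorn's-lemma machinery for $\convcohrejectsets$, and Proposition~\ref{prop:nonbinarymixing:is:dominated} provides the strict-domination step for non-binary mixing sets of desirable option sets.

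Concretely, I would first apply Lemma~\ref{lem:Zornconvexity} to the given $\rejectset\in\convcohrejectsets$ with $\rejectset^*\coloneqq\emptyset$, yielding a maximal element $\maxrejectset$ of the partially ordered set $\cset{\rejectset'\in\convcohrejectsets}{\rejectset\subseteq\rejectset'}$. In particular $\maxrejectset\in\convcohrejectsets$ and $\rejectset\subseteq\maxrejectset$. It then remains to show that $\maxrejectset$ is binary.

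I would argue this by contradiction. Suppose \emph{ex absurdo} that $\maxrejectset$ is non-binary. Since $\maxrejectset$ is mixing, Proposition~\ref{prop:nonbinarymixing:is:dominated} supplies a mixing set of desirable option sets $\altrejectset\in\convcohrejectsets$ that strictly dominates $\maxrejectset$, so $\maxrejectset\subset\altrejectset$. But then $\rejectset\subseteq\maxrejectset\subset\altrejectset$, so $\altrejectset$ lies in $\cset{\rejectset'\in\convcohrejectsets}{\rejectset\subseteq\rejectset'}$ and strictly dominates $\maxrejectset$ there, contradicting the maximality of $\maxrejectset$. Hence $\maxrejectset$ is binary, and it is the desired binary mixing set of desirable option sets that dominates $\rejectset$.

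There isn't really a substantive obstacle left in this particular statement; the two real difficulties have been absorbed into its prerequisites. The harder of the two was Proposition~\ref{prop:nonbinarymixing:is:dominated}, which required verifying that the $\RP$ operator, applied to the coherent strict dominator produced by Proposition~\ref{prop:nonbinary:is:dominated}, yields a genuinely mixing set of desirable option sets while preserving domination---this is precisely the content of Proposition~\ref{prop:removal:of:posis:gewijzigde:axiomas}. The other was setting up the correct closure-under-chains argument in Lemma~\ref{lem:Zornconvexity}. Once both are in place, the present proposition is a short, purely structural consequence, mirroring exactly how Proposition~\ref{prop:dominatebybinary} fell out of Proposition~\ref{prop:nonbinary:is:dominated} together with Lemma~\ref{lem:Zorncoherence}.
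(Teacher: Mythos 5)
Your proof is correct and follows essentially the same route as the paper's own: apply Lemma~\ref{lem:Zornconvexity} with $\rejectset^*\coloneqq\emptyset$ to obtain a maximal element of $\cset{\rejectset'\in\convcohrejectsets}{\rejectset\subseteq\rejectset'}$, then rule out non-binarity via Proposition~\ref{prop:nonbinarymixing:is:dominated}, exactly mirroring how Proposition~\ref{prop:dominatebybinary} follows from Lemma~\ref{lem:Zorncoherence} and Proposition~\ref{prop:nonbinary:is:dominated}. Your assessment that the substantive work lives in the prerequisites is also accurate.
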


\begin{proof}
% Checked by Gert, with minor cosmetic changes
Lemma~\ref{lem:Zornconvexity} for $\rejectset^*=\emptyset$ guarantees that the partially ordered set $\cset{\rejectset'\in\convcohrejectsets}{\rejectset\subseteq\rejectset'}$ has a maximal element. 
Let $\maxrejectset$ be any such maximal element. 
Assume \emph{ex absurdo} that $\maxrejectset$ is non-binary. 
It then follows from Proposition~\ref{prop:nonbinarymixing:is:dominated} that $\maxrejectset$ is strictly dominated by some other mixing set of desirable option sets, meaning that there is some $\rejectset^*\in\convcohrejectsets$ such that $\maxrejectset\subset\rejectset^*$. 
Then clearly also $\rejectset^*\in\cset{\rejectset'\in\cohrejectsets}{\rejectset\subseteq\rejectset'}$, which contradicts the maximal character of $\maxrejectset$. 
Hence, it must be that $\maxrejectset$ is indeed binary. 
\end{proof}

\begin{theorem}[Representation for mixing choice functions]\label{theo:convex:rejectsets:representation}
Any mixing set of desirable option sets $\rejectset\in\convcohrejectsets$ is dominated by some binary mixing set of desirable option sets: $\convcohdesirsets(\rejectset)\coloneqq\cset{\desirset\in\convcohdesirsets}{\rejectset\subseteq\rejectset[\desirset]}\neq\emptyset$.
Moreover, $\rejectset=\bigcap\cset{\rejectset[\desirset]}{\desirset\in\convcohdesirsets(\rejectset)}$.
\end{theorem}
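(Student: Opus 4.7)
The plan is to follow the same general architecture as the proof of Theorem~\ref{theo:rejectsets:representation}, but with the coherence-level tools replaced by their mixing counterparts, namely Lemma~\ref{lem:Zornconvexity}, Proposition~\ref{prop:removal:of:posis:gewijzigde:axiomas}, Proposition~\ref{prop:mixingbinaryiffD}, and Proposition~\ref{prop:mixingdominatebybinarymixing}. First, non-emptiness: given $\rejectset\in\convcohrejectsets$, Proposition~\ref{prop:mixingdominatebybinarymixing} supplies a binary mixing $\maxrejectset\supseteq\rejectset$; Proposition~\ref{prop:binary:characterisation} identifies $\maxrejectset$ as $\rejectset[\desirset]$ for $\desirset\coloneqq\desirset[\maxrejectset]$, and Proposition~\ref{prop:mixingbinaryiffD} then gives $\desirset\in\convcohdesirsets$, so $\desirset\in\convcohdesirsets(\rejectset)$. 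The inclusion $\rejectset\subseteq\bigcap\cset{\rejectset[\desirset]}{\desirset\in\convcohdesirsets(\rejectset)}$ is immediate from the very definition of $\convcohdesirsets(\rejectset)$.

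For the converse inclusion, I would proceed by contradiction: suppose some $\altoptset[o]\in\optsets$ belongs to every $\rejectset[\desirset]$ with $\desirset\in\convcohdesirsets(\rejectset)$ but $\altoptset[o]\notin\rejectset$. As in the proof of Theorem~\ref{theo:rejectsets:representation}, Proposition~\ref{prop:ax:rejects:RN:equivalents} (together with the coherence of each $\rejectset[\desirset]$ obtained from Propositions~\ref{prop:coherence:for:binary} and \ref{prop:mixingbinaryiffD}) lets me reduce to the case $\altoptset[o]\cap\nonposopts=\emptyset$. Then I apply Lemma~\ref{lem:Zornconvexity} with $\rejectset^*=\set{\altoptset[o]}$ to extract a maximal element $\maxrejectset$ of the partially ordered set $\upset{\rejectset}_{\mathrm{M}}\coloneqq\cset{\rejectset'\in\convcohrejectsets}{\rejectset\subseteq\rejectset'\text{ and }\altoptset[o]\notin\rejectset'}$. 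If I can show that every such maximal $\maxrejectset$ is binary, then via Propositions~\ref{prop:binary:characterisation} and~\ref{prop:mixingbinaryiffD} it yields a $\desirset\in\convcohdesirsets(\rejectset)$ with $\altoptset[o]\notin\rejectset[\desirset]$, a contradiction.

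To show maximal elements of $\upset{\rejectset}_{\mathrm{M}}$ are binary, I would assume ex absurdo that some maximal $\maxrejectset$ is non-binary and manufacture a strictly larger mixing extension still avoiding $\altoptset[o]$. Following the proof of Theorem~\ref{theo:rejectsets:representation}, Lemma~\ref{lem:binaryalternative} yields an $\optset[o]\in\maxrejectset$ with $\card{\optset[o]}\geq2$ and $\optset[o]\setminus\set{\opt}\notin\maxrejectset$ for all $\opt\in\optset[o]$, and the minimality argument there gives $\altoptset^*\in\maxrejectset$ with $\altoptset[o]\subsetneq\altoptset^*\subseteq\optset[o]\cup\altoptset[o]$, $\altoptset^*\setminus\set{\opt[o]}\notin\maxrejectset$, for some $\opt[o]\in\altoptset^*\setminus\altoptset[o]$. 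Lemma~\ref{lem:Kstarstar} then produces the coherent $\rejectset^\dagger\coloneqq\RN(\rejectset^{**})\supsetneq\maxrejectset$. The new step is to pass to the mixing closure $\rejectset^\ddagger\coloneqq\RP(\rejectset^\dagger)$, which by Proposition~\ref{prop:removal:of:posis:gewijzigde:axiomas} is mixing and still a proper superset of $\maxrejectset$.

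The main obstacle, and the step I expect to require real work, is verifying that $\altoptset[o]\notin\rejectset^\ddagger$. Unrolling the definitions, $\altoptset[o]\in\RP(\rejectset^\dagger)$ would give some $\altoptsettoo\in\rejectset^\dagger$ with $\altoptset[o]\subseteq\altoptsettoo\subseteq\posi(\altoptset[o])$, which in turn (via $\RN$) produces an $\altoptsettoo'\in\rejectset^{**}$ of the form $\{\lambda_{\altopt}\altopt+\mu_{\altopt}\opt[o]:\altopt\in\altoptsettoo''\}$ with $\altoptsettoo''\in\maxrejectset$. I would then adapt the final ex absurdo block of the proof of Theorem~\ref{theo:rejectsets:representation}---splitting $\altoptsettoo''$ according to which combinations land in $\altoptset[o]$ versus $\nonposopts$, using Lemma~\ref{lem:replacing:by:dominating:options}, Axiom~\ref{ax:rejects:cone}, and crucially the mixing property of $\maxrejectset$ via Axiom~\ref{ax:rejects:removepositivecombinations} to collapse the positive combinations into $\altoptset[o]\cup\{0\}$---to derive $\altoptset^*\setminus\set{\opt[o]}\in\maxrejectset$, contradicting the minimality of $\altoptset^*$. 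This additional $\RP$-layer, absent in the coherent case, is where the proof is delicate: one must be careful that the convex-combination data extracted from membership in $\RP(\rejectset^\dagger)$ feeds cleanly into the same algebraic manipulation that worked for $\rejectset^\dagger$ alone.
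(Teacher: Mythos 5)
Your proposal is correct and follows essentially the same route as the paper's own proof: non-emptiness via Propositions~\ref{prop:mixingdominatebybinarymixing}, \ref{prop:binary:characterisation} and~\ref{prop:mixingbinaryiffD}, then Zorn via Lemma~\ref{lem:Zornconvexity} after reducing to $\altoptset[o]\cap\nonposopts=\emptyset$, then Lemma~\ref{lem:binaryalternative}, the minimal $\altoptset^*$, the construction $\RP(\RN(\rejectset^{***}))$ on top of Lemma~\ref{lem:Kstarstar} made mixing by Proposition~\ref{prop:removal:of:posis:gewijzigde:axiomas}, and a final \emph{ex absurdo} block in which \ref{ax:rejects:removepositivecombinations} strips the extra positive-combination elements. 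The details you defer at the step you flag as delicate are exactly the ones the paper supplies there: a second without-loss-of-generality reduction (no element of $\altoptset[o]$ is a positive linear combination of its other elements) together with the minimality-derived facts that $\opt[o]\notin\posi\group{\altoptset[o]}$ and that $\altoptset^*$ contains no non-trivial positive multiple of $\opt[o]$, which are what force the relevant coefficients $\lambda_{\altopt}$ to be strictly positive and let the mixing axiom collapse the residue $\altoptset[3]$ so as to conclude $\altoptset^*\setminus\set{\opt[o]}\in\rejectset$, the desired contradiction.
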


\begin{proof}[Proof of Theorem~\ref{theo:convex:rejectsets:representation}]
% Checked by Gert, with minor cosmetic changes. Also a few notational corrections, simplifications; mixing includes coherent
Let $\rejectset[o]\in\convcohrejectsets$ be any mixing set of desirable option sets. 
We prove that $\convcohdesirsets(\rejectset[o])\coloneqq\cset{\desirset\in\convcohdesirsets}{\rejectset[o]\subseteq\rejectset[\desirset]}\neq\emptyset$ and that $\rejectset[o]=\bigcap\cset{\rejectset[\desirset]}{\desirset\in\convcohdesirsets(\rejectset[o])}$.

For the first statement, recall from Proposition~\ref{prop:mixingdominatebybinarymixing} that $\rejectset[o]$ is dominated by some binary mixing coherent set of desirable option sets $\maxrejectset$.
Since $\maxrejectset$ is binary, Proposition~\ref{prop:binary:characterisation} implies that $\maxrejectset=\rejectset[\desirset]$, with $\smash{\desirset=\desirset[\maxrejectset]}$. 
Furthermore, because $\maxrejectset$ is mixing, Proposition~\ref{prop:mixingbinaryiffD} implies that $\desirset[\maxrejectset]$ is mixing too, whence $\desirset[\maxrejectset]\in\convcohdesirsets$. 
Since $\rejectset[o]\subseteq\maxrejectset=\rejectset[{\desirset[\maxrejectset]}]$, we find that $\desirset[\maxrejectset]\in\convcohdesirsets(\rejectset[o])\coloneqq\cset{\desirset\in\convcohdesirsets}{\rejectset[o]\subseteq\rejectset[\desirset]}\neq\emptyset$.

For the second statement, it is obvious that $\rejectset[o]\subseteq\bigcap\cset{\rejectset[\desirset]}{\desirset\in\convcohdesirsets(\rejectset[o])}$, so we concentrate on the converse inclusion.
Assume {\itshape ex absurdo} that $\rejectset[o]\subset\bigcap\cset{\rejectset[\desirset]}{\desirset\in\convcohdesirsets(\rejectset[o])}$, so there is some option set $\altoptset[o]\in\optsets$ such that $\altoptset[o]\notin\rejectset[o]$ and $\altoptset[o]\in\rejectset[\desirset]$ for all $\desirset\in\convcohdesirsets(\rejectset[o])$.
Then $\altoptset[o]\setminus\nonposopts\notin\rejectset[o]$ [use the coherence of $\rejectset[o]$ and~\ref{ax:rejects:mono}] and $\altoptset[o]\setminus\nonposopts\in\rejectset[\desirset]$ for all $\desirset\in\convcohdesirsets(\rejectset[o])$ [use the coherence of $\rejectset[\desirset]$---which follows from Proposition~\ref{prop:fromCohDtoCohK} and the coherence of $\desirset$---and Proposition~\ref{prop:ax:rejects:RN:equivalents}], so we may assume without loss of generality that $\altoptset[o]$ has no non-positive options: $\altoptset[o]\cap\nonposopts=\emptyset$.
Moreover, if $\altoptset[o]$ contains some option $\altopttoo$ that is a positive linear combination of \emph{other} elements of $\altoptset[o]$, then still $\altoptset[o]\setminus\set{\altopttoo}\notin\rejectset[o]$ [use the coherence of $\rejectset[o]$ and~\ref{ax:rejects:mono}] and $\altoptset[o]\setminus\set{\altopttoo}\in\rejectset[\desirset]$ for all $\desirset\in\convcohdesirsets(\rejectset[o])$ [use the coherence of $\rejectset[\desirset]$ and~\ref{ax:rejects:removepositivecombinations}], so we may assume without loss of generality that $\altoptset[o]$ has no elements that are positive linear combinations of some of its other elements.

Consider the partially ordered set $\upset{\rejectset[o]}\coloneqq\cset{\rejectset\in\convcohrejectsets}{\rejectset[o]\subseteq\rejectset\text{ and }\altoptset[o]\notin\rejectset}$, which is non-empty because it contains $\rejectset[o]$. 
% \textcolor{blue}
Due to to Lemma~\ref{lem:Zornconvexity} [applied for $\rejectset=\rejectset[o]$ and $\rejectset^*=\set{\altoptset[o]}$], it has a maximal element.
If we can prove that any such maximal element is binary, then it will follow from Propositions~\ref{prop:binary:characterisation} and~\ref{prop:mixingbinaryiffD} that there is some mixing set of desirable options~$\desirset[o]\in\convcohdesirsets$ such that $\rejectset[o]\subseteq\rejectset[{\desirset[o]}]$---and therefore $\desirset[o]\in\convcohdesirsets(\rejectset[o])$---and $\altoptset[o]\notin\rejectset[{\desirset[o]}]$, a contradiction.
To prove that all maximal elements of $\upset{\rejectset[o]}$ are binary, it suffices to prove that any non-binary element of $\upset{\rejectset[o]}$ is strictly dominated in that set, which is what we now set out to do.

So consider any non-binary mixing coherent element $\rejectset$ of $\upset{\rejectset[o]}$, so $\rejectset[o]\subseteq\rejectset$ and $\altoptset[o]\notin\rejectset$.
It follows from Lemma~\ref{lem:binaryalternative} that there is some $\optset[o]\in\rejectset$ such that $\card{\optset[o]}\geq2$ and $\optset[o]\setminus\set{\opt}\notin\rejectset$ for all $\opt\in\optset[o]$.
The partially ordered set $\cset{\optset\in\rejectset}{\altoptset[o]\subseteq\optset}$ contains $\optset[o]\cup\altoptset[o]$ and therefore has some minimal (undominating) element $\altoptset^*$ below it, so $\altoptset^*\in\rejectset$ and $\altoptset[o]\subseteq\altoptset^*\subseteq\optset[o]\cup\altoptset[o]$.

Let us first summarise what we know about this minimal element $\altoptset^*$.
It is impossible that $\altoptset^*\subseteq\altoptset[o]$ because otherwise $\altoptset[o]=\altoptset^*\in\rejectset$, a contradiction.
Hence $\altoptset^*\setminus\altoptset[o]\neq\emptyset$, so we can fix any element $\opt[o]$ in $\altoptset^*\setminus\altoptset[o]\subseteq\optset[o]$.
Then we know that $\set{\opt[o]}\notin\rejectset$, because otherwise, since $\optset[o]$ has at least one other element $\altopt$ [because $\card{\optset[o]}\geq2$], we would have that $\set{\opt[o]}\subseteq\optset\setminus\set{\altopt}$ and therefore $\optset\setminus\set{\altopt}\in\rejectset$ by the coherence of~$\rejectset$, a contradiction.
And since also $\altoptset[o]\subseteq\altoptset^*\setminus\set{\opt[o]}$ but $\altoptset^*\setminus\set{\opt[o]}\subset\altoptset^*$, we can conclude that $\altoptset^*\setminus\set{\opt[o]}\notin\rejectset$, by the definition of a minimal element.
We also know that $\altoptset^*$, and therefore also $\altoptset[o]$, cannot contain any positive multiple $\mu\opt[o]$ of $\opt[o]$ with $\mu>0$ and $\mu\neq1$, because otherwise we could use Lemma~\ref{lem:removing:multiples} to remove $\opt[o]$ from $\altoptset^*\in\rejectset$ and still be guaranteed that $\altoptset^*\setminus\set{\opt[o]}\in\rejectset$.
Since we know that $\opt[o]\notin\altoptset[o]$, we would then still have that $\altoptset[o]\subseteq\altoptset^*\setminus\set{\opt[o]}$, which would contradict the minimality of $\altoptset^*$ in the partially ordered set $\cset{\optset\in\rejectset}{\altoptset[o]\subseteq\optset}$.
And, finally, we know that $\opt[o]\notin\posi\group{\altoptset[o]}$, because otherwise $\opt[o]\in\altoptset^*$ would be a positive linear combination of elements of $\altoptset[o]\subseteq\altoptset^*$ different from $\opt[o]$ [because $\opt[o]\notin\altoptset[o]$], so we could use Proposition~\ref{prop:removal:of:posis:gewijzigde:axiomas} and~\ref{ax:rejects:removepositivecombinations} to make sure that still $\altoptset^*\setminus\set{\opt[o]}\in\rejectset$, again contradicting the minimality of $\altoptset^*$ in the partially ordered set $\cset{\optset\in\rejectset}{\altoptset[o]\subseteq\optset}$.

Consider now the set of desirable option sets $\rejectset^{*}\coloneqq\RP\group{\rejectset^{**}}$, with $\rejectset^{**}\coloneqq\RN\group{\rejectset^{***}}$, where
\begin{equation*}
\rejectset^{***}
\coloneqq
\cset[\Big]{\cset[\big]{\lambda_{\altopt}\altopt+\mu_{\altopt}\opt[o]}{\altopt\in\altoptset}}
{\altoptset\in\rejectset,(\forall\altopt\in\altoptset)(\lambda_{\altopt},\mu_{\altopt})>0}.
\end{equation*}
We know from Lemma~\ref{lem:Kstarstar} [with $\rejectset^{***}$ in this proof taking on the role of $\rejectset^{**}$ in the statement of Lemma~\ref{lem:Kstarstar}, and with $\rejectset^{**}$ in this proof taking on the role of $\rejectset^{*}$ in the statement of Lemma~\ref{lem:Kstarstar}] that $\rejectset^{**}$ is a coherent set of desirable option sets that is a superset of $\rejectset$ and contains $\set{\opt[o]}$. 
Proposition~\ref{prop:removal:of:posis:gewijzigde:axiomas} now guarantees that $\rejectset^{*}=\RP\group{\rejectset^{**}}$ satisfies~\ref{ax:rejects:removezero}--\ref{ax:rejects:mono} and~\ref{ax:rejects:removepositivecombinations}, and is therefore mixing. 
Furthermore, since $\RP$ never removes option sets, $\rejectset^{*}$ is a superset of $\rejectset$ that contains $\set{\opt[o]}$.
Since we know that $\set{\opt[o]}\notin\rejectset$, this shows that $\rejectset\subset\rejectset^*$.
If we can now prove that $\altoptset[o]\notin\rejectset^*$ and therefore $\rejectset^*\in\upset{\rejectset[o]}$, we are done, because then $\rejectset$ is strictly dominated by $\rejectset^*$ in $\upset{\rejectset[o]}$.

Assume therefore {\itshape ex absurdo} that $\altoptset[o]\in\rejectset^*=\RP\group{\RN\group{\rejectset^{***}}}$, meaning that there are $\altoptsettoo\in\rejectset$, $(\lambda_{\altopt},\mu_{\altopt})>0$ for all $\altopt\in\altoptsettoo$ and $\altoptset\in\optsets$ such that 
\begin{equation}\label{eq:convex:intermediate}
\cset{\lambda_{\altopt}\altopt+\mu_{\altopt}\opt[o]}{\altopt\in\altoptsettoo}\setminus\nonposopts
\subseteq\altoptset
\subseteq\cset{\lambda_{\altopt}\altopt+\mu_{\altopt}\opt[o]}{\altopt\in\altoptsettoo}
\text{ and }
\altoptset[o]\subseteq\altoptset\subseteq\posi\group{\altoptset[o]}.
\end{equation}
We let $\altoptsettoo[2]\coloneqq\cset{\altopt\in\altoptsettoo}{\lambda_{\altopt}\altopt+\mu_{\altopt}\opt[o]\optlteq0}$, and consider any $\altopt\in\altoptsettoo[2]$, so $\lambda_{\altopt}\altopt+\mu_{\altopt}\opt[o]\optlteq0$.
But then we must have that $\lambda_{\altopt}>0$, because otherwise $\mu_{\altopt}\opt[o]\optlteq0$, with $\mu_{\altopt}>0$, and therefore also $\opt[o]\optlteq0$.
But then Lemma~\ref{lem:replacing:nonpositives:by:zero} and Axiom~\ref{ax:rejects:removezero} would imply that we can remove the non-positive $\opt[o]$ from $\optset[o]\in\rejectset$ and guarantee that still $\optset[o]\setminus\set{\opt[o]}\in\rejectset$, which contradicts our assumptions about $\optset[o]$.
So we find that
\begin{equation*}
\altopt\optlteq-\frac{\mu_{\altopt}}{\lambda_{\altopt}}\opt[o]
\text{ for all $\altopt\in\altoptsettoo[2]$}.
\end{equation*}
If we also call $\altoptsettoo[1]\coloneqq\cset{\altopt\in\altoptsettoo}{\lambda_{\altopt}\altopt+\mu_{\altopt}\opt[o]\in\altoptset[o]}$, then we infer from~\eqref{eq:convex:intermediate} that $\altoptsettoo[1]\neq\emptyset$. 
Also, $\altoptsettoo[1]\cap\altoptsettoo[2]=\emptyset$, because we know that $\altoptset[o]\cap\nonposopts=\emptyset$.
We complete the partition of $\altoptsettoo$ by letting $\altoptsettoo[3]\coloneqq\altoptsettoo\setminus\group{\altoptsettoo[1]\cup\altoptsettoo[2]}$.
We know that
\begin{center}
for each $b\in\altoptset$ there are $\altopt[b]\in\altoptsettoo$ such that
$b=\lambda_{\altopt[b]}\altopt[b]+\mu_{\altopt[b]}\opt[o]$.
\end{center}
This means that for any $\altopt\in\altoptsettoo[1]\cup\altoptsettoo[3]$ there is by construction a necessarily unique $b\in\altoptset$ such that $\altopt=\altopt[b]$, which we will denote by $b_{\altopt}$, so $b_{\altopt}=\lambda_{\altopt}\altopt+\mu_{\altopt}\opt[o]$.
But then $\lambda_{\altopt}>0$, because otherwise we would have that $\mu_{\altopt}\opt[o]=b_{\altopt}\in\altoptset$, with $\mu_{\altopt}>0$, so~\eqref{eq:convex:intermediate} would imply that $\opt[o]\in\posi\group{\altoptset[o]}$, and we have argued above that this is impossible.
So we find that
\begin{equation*}
\altopt=\frac{1}{\lambda_{\altopt}}b_{\altopt}-\frac{\mu_{\altopt}}{\lambda_{\altopt}}\opt[o]
\text{ for all $\altopt\in\altoptsettoo[1]\cup\altoptsettoo[3]$}.
\end{equation*}
Let $\altoptset[3]\coloneqq\cset{b_{\altopt}}{\altopt\in\altoptsettoo[3]}$, then it follows from our construction that $\altoptset[3]\subseteq\posi\group{\altoptset[o]}\setminus\altoptset[o]$.

Since $\group{\altoptsettoo[1]\cup\altoptsettoo[3]}\cup\altoptsettoo[2]=\altoptsettoo\in\rejectset$, we infer from Lemma~\ref{lem:replacing:by:dominating:options} that if we construct the option set $\altoptsettoo'$ as follows
\begin{equation*}
\altoptsettoo'
\coloneqq\cset[\bigg]{\frac{1}{\lambda_{\altopt}}b_{\altopt}-\frac{\mu_{\altopt}}{\lambda_{\altopt}}\opt[o]}
{\altopt\in\altoptsettoo[1]\cup\altoptsettoo[3]}
\cup\cset[\bigg]{-\frac{\mu_{\altopt}}{\lambda_{\altopt}}\opt[o]}{\altopt\in\altoptsettoo[2]},
\end{equation*}
then still $\altoptsettoo'\in\rejectset$.
If we recall that also $\altoptset^*\in\rejectset$, we can invoke~\ref{ax:rejects:cone} to find that $\setposi\set{\altoptset^*,\altoptsettoo'}\subseteq\rejectset$.
In other words, we find that
\begin{equation*}
\cset{\alpha_{\opt,\altopt}\opt+\beta_{\opt,\altopt}\altopt}{\opt\in\altoptset^*,\altopt\in\altoptsettoo'}\in\rejectset
\text{ for all choices of $(\alpha_{\opt,\altopt},\beta_{\opt,\altopt})>0$}.
\end{equation*}
If we now choose $(\alpha_{\opt[o],\altopt},\beta_{\opt[o],\altopt})\coloneqq(\mu_{\altopt},\lambda_{\altopt})$ and $(\alpha_{\opt,\altopt},\beta_{\opt,\altopt})\coloneqq(1,0)$ for all $\opt\in\altoptset^*\setminus\set{\opt[o]}$, for all $\altopt\in\altoptsettoo'$, then we find in particular that
\begin{multline*}
\cset{b_{\altopt}}{\altopt\in\altoptsettoo[1]\cup\altoptsettoo[3]}
\cup\cset{0}{\altopt\in\altoptsettoo[2]}
\cup\cset{\opt}{\opt\in\altoptset^*\setminus\set{\opt[o]}}\\
=\altoptset[o]\cup\altoptset[3]\cup\cset{0}{\altopt\in\altoptsettoo[2]}\cup\altoptset^*\setminus\set{\opt[o]}
=\altoptset[3]\cup\altoptset^*\setminus\set{\opt[o]}\cup\cset{0}{\altopt\in\altoptsettoo[2]}
\end{multline*}
belongs to $\rejectset$.
If $\altoptsettoo[2]=\emptyset$, we find that $\altoptset[3]\cup\altoptset^*\setminus\set{\opt[o]}\in\rejectset$.
If $\altoptsettoo[2]\neq\emptyset$, we find that $\set{0}\cup\altoptset[3]\cup\altoptset^*\setminus\set{\opt[o]}\in\rejectset$, and then we can still derive from~\ref{ax:rejects:removezero} that $\altoptset[3]\cup\altoptset^*\setminus\set{\opt[o]}\in\rejectset$. 
Any $b\in\altoptset[3]$ that does not belong to $\altoptset^*\setminus\set{\opt[o]}$ is a positive linear combination of elements of $\altoptset[o]$, and therefore of $\altoptset^*\setminus\set{\opt[o]}$.
Proposition~\ref{prop:removal:of:posis:gewijzigde:axiomas} and~\ref{ax:rejects:removepositivecombinations} then tell us that we can remove such~$b$ and still be guaranteed that the result belongs to $\rejectset$.
This tells us that $\altoptset^*\setminus\set{\opt[o]}\in\rejectset$, a contradiction.
\end{proof}

\begin{proof}[Proof of Theorem~\ref{theo:mixingrepresentation:twosided}]
% Checked by Gert, minor cosmetic changes, some simplification: mixing includes coherent
First assume that $\rejectset$ is mixing. 
It then follows from Theorem~\ref{theo:convex:rejectsets:representation} that $\convcohdesirsets\group{\rejectset}\coloneqq\cset{\desirset\in\convcohdesirsets}{\rejectset\subseteq\rejectset[\desirset]}\neq\emptyset$ and $\rejectset=\bigcap\cset{\rejectset[\desirset]}{\desirset\in\convcohdesirsets\group{\rejectset}}$. 
This clearly implies that there is at least one non-empty set $\setofdesirsets\subseteq\convcohdesirsets$ of mixing sets of desirable options such that $\rejectset=\bigcap\cset{\rejectset[\desirset]}{\desirset\in\setofdesirsets}$: the set $\convcohdesirsets\group{\rejectset}$. 
Furthermore, for any non-empty set $\setofdesirsets\subseteq\convcohdesirsets$ of mixing coherent sets of desirable options such that $\rejectset=\bigcap\cset{\rejectset[\desirset]}{\desirset\in\setofdesirsets}$, we clearly have that $\rejectset\subseteq\rejectset[\desirset]$ for all $\desirset\in\setofdesirsets$.
Since $\setofdesirsets\subseteq\convcohdesirsets$, this implies that $\setofdesirsets\subseteq\convcohdesirsets\group{\rejectset}$. 
So $\convcohdesirsets\group{\rejectset}$ is indeed the largest such set.

It remains to prove the `if' part. 
So consider any non-empty set $\setofdesirsets\subseteq\convcohdesirsets$ of mixing sets of desirable options such that $\rejectset=\bigcap\cset{\rejectset[\desirset]}{\desirset\in\setofdesirsets}$. 
For every $\desirset\in\setofdesirsets\subseteq\convcohdesirsets$, it then follows from Proposition~\ref{prop:mixingbinaryiffD} that $\rejectset[\desirset]$ is mixing. 
Because Axioms~\ref{ax:rejects:removezero}-\ref{ax:rejects:mono} and~\ref{ax:rejects:removepositivecombinations} are trivially preserved under taking arbitrary (non-empty) intersections, this implies that $\rejectset$ is, indeed, also mixing.
\end{proof}

\begin{proof}[Proof of Proposition~\ref{prop:mixingequalslexicographic}]
% Checked by Gert, with minor cosmetic changes
First assume that $\desirset$ is mixing. 
Since $\desirset^\mathrm{c}$ is trivially a subset of $\posi(\desirset^\mathrm{c})$, we only need to prove that $\posi(\desirset^\mathrm{c})\subseteq\desirset^\mathrm{c}$. 
So consider any $\opt\in\posi(\desirset^\mathrm{c})$ and assume \emph{ex absurdo} that $\opt\notin\desirset^\mathrm{c}$, so $\opt\in\desirset$. 
Since $\opt\in\posi(\desirset^\mathrm{c})$, we know that $\opt$ is positive linear combination of a finite number of elements of $\desirset^\mathrm{c}$. 
Hence, there is some (finite) option set $\optset\in\optsets$ such that $\optset\subseteq\desirset^\mathrm{c}$ and $\opt\in\posi(\optset)$. 
Since we have assumed that $\opt\in\desirset$, this implies that $\posi(\optset)\cap\desirset\neq\emptyset$, so $\optset\cap\desirset\neq\emptyset$ because $\desirset$ satisfies~\ref{ax:desirs:mixing} by assumption. 
Since $\optset\subseteq\desirset^\mathrm{c}$, we find that $\desirset^\mathrm{c}\cap\desirset\neq\emptyset$, a contradiction. 
So it must be that $\opt\in\desirset^\mathrm{c}$, and therefore, indeed, $\posi\group{\desirset^\mathrm{c}}\subseteq\desirset^\mathrm{c}$.

Conversely, assume that $\posi(\desirset^\mathrm{c})=\desirset^\mathrm{c}$, then we need to prove that $\desirset$ satisfies~\ref{ax:desirs:mixing}. 
So consider any $\optset\in\optsets$ such that $\posi(\optset)\cap\desirset\neq\emptyset$ and assume \emph{ex absurdo} that $\optset\cap\desirset=\emptyset$. 
Then $\optset\subseteq\desirset^\mathrm{c}$ and therefore also $\posi(\optset)\subseteq\posi(\desirset^\mathrm{c})=\desirset^\mathrm{c}$. 
So we find that $\posi(\optset)\cap\desirset=\emptyset$, a contradiction.
\end{proof}

\begin{proof}[Proof of Proposition~\ref{prop:mixingiftotal}]
% Checked and corrected by Gert, also cosmetic changes; please check again
Consider any total set of desirable options $\desirset$, and any $\optset\in\optsets$ such that $\optset\cap\desirset=\emptyset$. 
If we can show that then also $\posi(\optset)\cap\desirset=\emptyset$, if will clearly follow from Definition~\ref{def:mixingdesirs} that $\desirset$ is indeed mixing.

So consider any $\opt\in\posi(\optset)$, then we need to prove that $\opt\notin\desirset$. 
Since $\opt\in\posi(\optset)$, $\opt$ is a finite positive linear combination of elements of $\optset$, meaning that $\opt=\sum_{i=1}^n\lambda_i\opt[i]$, with $n\in\naturals$ and, for all $i\in\set{1,\dots,n}$, $\lambda_i>0$ and $\opt[i]\in\optset$. 
Let $I\coloneqq\cset{i\in\set{1,\dots,n}}{\opt[i]\neq0}$.
Then for all $i\in I$, since $\opt[i]\in\optset$, $\optset\cap\desirset=\emptyset$ and $\opt[i]\neq0$, it follows from the totality of $\desirset$ that $-\opt[i]\in\desirset$. 
We now consider two cases: $I=\emptyset$ and $I\neq\emptyset$. 
If $I=\emptyset$ then $\opt=0$, which, since $\desirset$ is in particular also coherent [use Axiom~\ref{ax:desirs:nozero}], implies that, indeed, $\opt\notin\desirset$. 
If $I\neq\emptyset$ then $-\opt=\sum_{i=1}^n\lambda_i(-\opt[i])=\sum_{i\in I}\lambda_i(-\opt[i])$ is a finite positive linear combination of elements of $\desirset$, and it therefore follows from the coherence of $\desirset$ [Axiom~\ref{ax:desirs:cone}] that $-\opt\in\desirset$. 
This implies that, indeed, $\opt\notin\desirset$, because otherwise, it would follow from Axiom~\ref{ax:desirs:cone} that $0=\opt-\opt\in\desirset$, contradicting Axiom~\ref{ax:desirs:nozero}.
\end{proof}

\subsection{Proofs and intermediate results for Section~\ref{sec:archimedeanity}}

Various proofs in this section make use of the $\arch(\cdot)$ operator, defined for any set of desirable options $\desirset\in\desirsets$ by
\begin{equation*}
\arch\group{\desirset}
\coloneqq\cset{\opt\in\desirset}{\group{\exists\epsilon\in\posreals}\opt-\epsilon\in \desirset},
\end{equation*}
where the identification of $\epsilon\in\posreals$ with an option in $\opts$ is justified by the additional assumptions on $\opts$ that we impose in Section~\ref{sec:archimedeanity}.

\begin{proof}[Proof of Proposition~\ref{prop:isomorphismsmarch}]
%\color{blue}
% Checked by Gert with minor corrections and cosmetic improvements
First consider any Archimedean---and hence also coherent---set of desirable options~$\desirset$. 
We will prove that $\lowprev[\desirset]$ is a coherent lower prevision on $\gbls(\states)$, that $\desirset[{\lowprev[\desirset]}]=\desirset$, and that if $\desirset$ is moreover mixing, then $\lowprev[\desirset]$ is a linear prevision.

We begin by showing that $\lowprev[\desirset]$ is a coherent lower prevision on $\gbls(\states)$. 
Consider any $\opt\in\gbls(\states)$ and any $\mu\in\reals$ such that $\mu<\inf\opt$. 
Since $\desirset$ is coherent [use~\ref{ax:desirs:pos}] and $\posopts=\strictposgbls(\states)\coloneqq\cset{\opt\in\gbls(\states)}{\inf\opt>0}$, we find that $\opt-\mu\in\desirset$, because $\inf(\opt-\mu)=\inf\opt-\mu>0$. 
Since this is true for any $\mu\in\reals$ such that $\mu<\inf\opt$, we infer from Equation~\eqref{eq:lpfromdesir} that $\lowprev[\desirset](\opt)\geq\inf\opt$. 
So $\lowprev[\desirset]$ satisfies~\ref{ax:lowprev:inf}. 
Next, we prove that $\lowprev[\desirset]$ is a real-valued map. 
To this end, consider again any $\opt\in\gbls(\states)$. 
Since $\opt$ is bounded, $\inf\opt$ and $\sup\opt$ are real. 
Consider any $\mu\in\reals$ such that $\mu>\sup\opt$. 
Then $\inf(\mu-\opt)=\mu-\sup\opt>0$, so~\ref{ax:desirs:pos} implies that $\mu-\opt\in\desirset$. 
Hence, $\opt-\mu\notin\desirset$, because otherwise it would follow from~\ref{ax:desirs:cone} that $0=(\opt-\mu)+(\mu-\opt)\in\desirset$, contradicting~\ref{ax:desirs:nozero}. 
Since this is true for any $\mu\in\reals$ such that $\mu>\sup\opt$, Equation~\eqref{eq:lpfromdesir} implies that $\lowprev[\desirset](\opt)\leq\sup\opt$. 
On the other hand, we know from~\ref{ax:lowprev:inf} that $\lowprev[\desirset](\opt)\geq\inf\opt$. 
Since $\inf\opt$ and $\sup\opt$ are real, we find that $\lowprev[\desirset](\opt)$ is real, so $\lowprev[\desirset]$ is indeed a real-valued map. 
It remains to show that $\lowprev[\desirset]$ satisfies~\ref{ax:lowprev:lambda} and~\ref{ax:lowprev:super}. 
We start with~\ref{ax:lowprev:lambda}. 
Consider any $\opt\in\gbls(\states)$ and $\lambda\in\posreals$. 
For all $\mu\in\reals$, it then follows from the coherence of $\desirset$ [use~\ref{ax:desirs:cone}] that $\opt-\mu\in\desirset$ if and only if $\lambda\opt-\lambda\mu\in\desirset$. 
Equation~\eqref{eq:lpfromdesir} therefore implies that $\lowprev[\desirset](\lambda\opt)=\lambda\lowprev[\desirset](\opt)$. 
So $\lowprev[\desirset]$ satisfies~\ref{ax:lowprev:lambda}. 
Next, for~\ref{ax:lowprev:super}, consider any $\opt,\altopt\in\gbls(\states)$. 
Fix any $\epsilon\in\posreals$. 
It then follows from Equation~\eqref{eq:lpfromdesir} that there is some $\mu_{\opt}\in\reals$ such that $\mu_{\opt}>\lowprev[\desirset](\opt)-\epsilon$ and $\opt-\mu_{\opt}\in\desirset$, and similarly, that there is some $\mu_{\altopt}\in\reals$ such that $\mu_{\altopt}>\lowprev[\desirset](\altopt)-\epsilon$ and $\opt-\mu_{\altopt}\in\desirset$. 
The coherence of $\desirset$ [use~\ref{ax:desirs:cone}] now implies that $\opt+\altopt-\mu_{\opt}-\mu_{\altopt}\in\desirset$, so Equation~\eqref{eq:lpfromdesir} implies that $\lowprev[\desirset](\opt+\altopt)\geq\mu_{\opt}+\mu_{\altopt}>\lowprev[\desirset](\opt)+\lowprev[\desirset](\altopt)-2\epsilon$. 
Since this inequality holds for every $\epsilon\in\posreals$, we infer that $\lowprev[\desirset](\opt+\altopt)\geq\lowprev[\desirset](\opt)+\lowprev[\desirset](\altopt)$. 
Hence, $\lowprev[\desirset]$ satisfies~\ref{ax:lowprev:super}.

Next, we prove that $\desirset[{\lowprev[\desirset]}]=\desirset$. 
First consider any $\opt\in\desirset$. 
Since $\desirset$ is Archimedean, we know that there is some $\epsilon\in\posreals$ such that $\opt-\epsilon\in\desirset$. 
Combined with Equation~\eqref{eq:lpfromdesir}, this implies that $\lowprev[\desirset](\opt)\geq\epsilon>0$. 
So we infer from Equation~\eqref{eq:desirfromlp} that $\opt\in\desirset[{\lowprev[\desirset]}]$. 
Conversely, consider any $\opt\in\desirset[{\lowprev[\desirset]}]$. 
Equation~\eqref{eq:desirfromlp} then implies that $\lowprev[\desirset](\opt)>0$, so we know from Equation~\eqref{eq:lpfromdesir} that there is some $\mu\in\posreals$ such that $\opt-\mu\in\desirset$. 
Therefore, and because $\mu\in\strictposgbls(\states)$, it follows from the coherence of $\desirset$ [use~\ref{ax:desirs:pos} and~\ref{ax:desirs:cone}] that $\opt=(\opt-\mu)+\mu\in\desirset$. 
We conclude that, indeed, $\desirset[{\lowprev[\desirset]}]=\desirset$.

Suppose now that $\desirset$ is also mixing. 
We prove that $\lowprev[\desirset]$ is a linear prevision. 
Since we already know that $\lowprev[\desirset]$ is a coherent lower prevision, it suffices to prove that $\lowprev[\desirset]$ satisfies~\ref{ax:linprev:additive}. 
So consider any $\opt,\altopt\in\gbls(\states)$. 
We only need to prove that $\lowprev[\desirset](\opt+\altopt)\leq\lowprev[\desirset](\opt)+\lowprev[\desirset](\altopt)$ because the converse inequality follows from~\ref{ax:lowprev:super}. 
Fix any $\mu\in\reals$ such that $\mu>\lowprev[\desirset](\opt)+\lowprev[\desirset](\altopt)$ and let $\epsilon\coloneqq\nicefrac{1}{2}\big(\mu-\lowprev[\desirset](\opt)-\lowprev[\desirset](\altopt)\big)>0$.
It then follows from Equation~\eqref{eq:lpfromdesir} that $\opt[\epsilon]\coloneqq\opt-(\lowprev[\desirset](\opt)+\epsilon)\notin\desirset$ and $\altopt[\epsilon]\coloneqq\altopt-(\lowprev[\desirset](\altopt)+\epsilon)\notin\desirset$. 
So if we let $\optset\coloneqq\{\opt[\epsilon],\altopt[\epsilon]\}$, then $\optset\cap\desirset=\emptyset$. 
Therefore, and since $\optset\in\optsets$ and $\{\opt[\epsilon],\altopt[\epsilon],\opt[\epsilon]+\altopt[\epsilon]\}\subseteq\posi(\optset)$, it follows from the mixingness of $\desirset$ that $\{\opt[\epsilon],\altopt[\epsilon],\opt[\epsilon]+\altopt[\epsilon]\}\cap\desirset=\emptyset$. 
Hence, we find that
\begin{equation*}
\opt+\altopt-\mu
=
\opt+\altopt-\lowprev[\desirset](\opt)-\lowprev[\desirset](\altopt)-2\epsilon
=
\opt[\epsilon]+\altopt[\epsilon]\notin\desirset.
\end{equation*}
Since this holds for all $\mu\in\reals$ such that $\mu>\lowprev[\desirset](\opt)+\lowprev[\desirset](\altopt)$, it follows from Equation~\eqref{eq:lpfromdesir} that $\lowprev[\desirset](\opt+\altopt)\leq\lowprev[\desirset](\opt)+\lowprev[\desirset](\altopt)$, as desired.

For the second part of this proposition, we consider any coherent lower prevision $\lowprev$ on $\gbls(\states)$. 
We will prove that $\desirset[\lowprev]$ is an Archimedean set of desirable options, that $\lowprev[{\desirset[\lowprev]}]=\lowprev$, and that if $\lowprev$ furthermore is a linear prevision, then $\desirset[\lowprev]$ is mixing.

We start by proving that $\desirset[\lowprev]$ is Archimedean, meaning that it satisfies~\ref{ax:desirs:nozero}--\ref{ax:desirs:cone} and~\ref{ax:desirs:gamblearch}. 
Since~\ref{ax:lowprev:super} implies that $\lowprev(0)+\lowprev(0)\leq\lowprev(0)$, and since $\lowprev$ is real-valued, we know that $\lowprev(0)\leq0$, so Equation~\eqref{eq:desirfromlp} implies that $0\notin\desirset[\lowprev]$. 
Hence, $\desirset[\lowprev]$ satisfies~\ref{ax:desirs:nozero}. 
For~\ref{ax:desirs:pos}, it suffices to observe that for any $\opt\in\posopts=\strictposgbls(\states)$, $\lowprev(\opt)\geq\inf\opt>0$ because of~\ref{ax:lowprev:inf}, so $\opt\in\desirset[\lowprev]$ because of Equation~\eqref{eq:desirfromlp}. 
Let us now prove~\ref{ax:desirs:cone}. 
Consider any $\opt,\altopt\in\desirset[\lowprev]$ and any $(\lambda,\mu)>0$. 
It then follows from Equation~\eqref{eq:desirfromlp} that $\lowprev(\opt)>0$ and $\lowprev(\altopt)>0$, and from~\ref{ax:lowprev:lambda} and~\ref{ax:lowprev:super} that $\lowprev(\lambda\opt+\mu\altopt)\geq\lambda\lowprev(\opt)+\mu\lowprev(\altopt)$. 
Since $(\lambda,\mu)>0$, this implies that $\lowprev(\lambda\opt+\mu\altopt)>0$, which in turn implies that $\lambda\opt+\mu\altopt\in\desirset[\lowprev]$ because of Equation~\eqref{eq:desirfromlp}. 
So $\desirset[\lowprev]$ satisfies~\ref{ax:desirs:cone}. 
Let us now prove~\ref{ax:desirs:gamblearch}. 
Consider any $\opt\in\desirset[\lowprev]$, so $\lowprev(\opt)>0$ because of Equation~\eqref{eq:desirfromlp}. 
Let $\epsilon\coloneqq\frac{1}{2}\lowprev(\opt)>0$. 
Then since $\lowprev$ is a coherent lower prevision,
\begin{equation*}
\lowprev(\opt-\epsilon)
\geq\lowprev(\opt)+\lowprev(-\epsilon)
\geq\lowprev(\opt)+\inf(-\epsilon)
=\lowprev(\opt)-\epsilon
=2\epsilon-\epsilon
=\epsilon
>0,
\end{equation*}
so $\opt-\epsilon\in\desirset[\lowprev]$ because of Equation~\eqref{eq:desirfromlp}. 
Hence indeed, $\desirset[\lowprev]$ also satisfies~\ref{ax:desirs:gamblearch}.

Next, we prove that $\lowprev[{\desirset[\lowprev]}]=\lowprev$. 
Consider any $\opt\in\gbls(\states)$ and any $\mu\in\reals$. 
Then on the one hand, if $\mu<\lowprev(\opt)$, it follows from~\ref{ax:lowprev:super} and~\ref{ax:lowprev:inf} that
\begin{equation*}
\lowprev(\opt-\mu)\geq\lowprev(\opt)+\lowprev(-\mu)
\geq\lowprev(\opt)+\inf(-\mu)
=\lowprev(\opt)-\mu>0,
\end{equation*}
so Equation~\eqref{eq:desirfromlp} implies that $\opt-\mu\in\desirset[\lowprev]$. 
On the other hand, if $\mu>\lowprev(\opt)$, it follows from~\ref{ax:lowprev:inf} and~\ref{ax:lowprev:super} that
\begin{equation*}
\lowprev(\opt-\mu)
=\lowprev(\opt-\mu)+\mu-\mu
\leq\lowprev(\opt-\mu)+\lowprev(\mu)-\mu
\leq\lowprev(\opt)-\mu<0
\end{equation*}
so Equation~\eqref{eq:desirfromlp} implies that $\opt-\mu\notin\desirset[\lowprev]$. 
We conclude from Equation~\eqref{eq:lpfromdesir} that $\lowprev[{\desirset[\lowprev]}]=\lowprev$.

Finally, suppose that $\lowprev$ is furthermore a linear prevision. 
We prove that $\desirset[\lowprev]$ is mixing. 
Since we already know that $\desirset[\lowprev]$ is coherent, it suffices to prove that $\desirset[\lowprev]$ satisfies~\ref{ax:desirs:mixing}. 
So consider any $\optset\in\optsets$ such that $\posi(\optset)\cap\desirset[\lowprev]\neq\emptyset$, so there is some $\opt\in\posi(\optset)$ such that also $\opt\in\desirset[\lowprev]$. 
Since $\opt\in\posi(\optset)$, it follows from Equation~\eqref{eq:posioperator} that $\opt=\sum_{k=1}^n\lambda_k\opt[k]$, with $n\in\naturals$ and, for all $k\in\{1,\dots,n\}$, $\lambda_k\in\posreals$ and $\opt[k]\in\optset$. 
It therefore follows from~\ref{ax:lowprev:super} and~\ref{ax:lowprev:lambda} that
\begin{equation*}
\lowprev(\opt)
=\lowprev\group[\bigg]{\sum_{k=1}^n\lambda_k\opt[k]}
\geq\sum_{k=1}^n\lambda_k\lowprev(\opt[k]).
\end{equation*}
Furthermore, since $\opt\in\desirset[\lowprev]$, we know from Equation~\eqref{eq:desirfromlp} that $\lowprev(\opt)>0$. 
Hence, since $\lambda_k>0$ for all $k\in\{1,\dots,n\}$, there must be some $k^*\in\{1,\dots,n\}$ such that $\lowprev(\opt[k^*])>0$. 
Equation~\eqref{eq:desirfromlp} then implies that $\opt[k^*]\in\desirset[\lowprev]$. 
Since we also know that $\opt[k^*]\in\optset$, this in turn implies that $\optset\cap\desirset[\lowprev]\neq\emptyset$. 
We conclude that $\desirset[\lowprev]$ satisfies~\ref{ax:desirs:mixing}, as desired.
\end{proof}

\begin{proof}[Proof of Proposition~\ref{prop:gamblearchbinaryiffD}]
% Checked by Gert; simplified due to Archimedean implies coherent
We know from Proposition~\ref{prop:coherence:for:binary} that $\desirset$ is coherent if and only if $\rejectset[\desirset]$ is, and from Proposition~\ref{prop:mixingbinaryiffD} tells us that $\desirset$ is mixing if and only if $\rejectset[\desirset]$ is. 
So the only thing left to prove is that $\desirset$ satisfies~\ref{ax:desirs:gamblearch} if and only if $\rejectset[\desirset]$ satisfies~\ref{ax:rejects:gamblearch}.

First assume that $\desirset$ satisfies~\ref{ax:desirs:gamblearch}.
Consider any $\optset\in\rejectset[\desirset]$, meaning that $\optset\cap\desirset\neq\emptyset$. 
Consider any $\opt\in\optset\cap\desirset$. 
Since $\opt\in\desirset$ and $\desirset$ is Archimedean, we know that there is some $\epsilon\in\posreals$ such that $\opt-\epsilon\in\desirset$. 
Since $\opt\in\optset$, we also know that $\opt-\epsilon\in\optset-\epsilon$. 
We therefore find that $\opt\in(\optset-\epsilon)\cap\desirset$, which implies that $(\optset-\epsilon)\cap\desirset\neq\emptyset$, or equivalently, that, indeed $\optset-\epsilon\in\rejectset[\desirset]$.

Conversely, assume that $\rejectset[\desirset]$ satisfies~\ref{ax:rejects:gamblearch}. 
Consider any $\opt\in\desirset$. 
Then $\set{\opt}\in\rejectset[\desirset]$ and therefore, since $\rejectset[\desirset]$ is Archimedean, there is some $\epsilon\in\posreals$ such that $\set{\opt}-\epsilon\in\rejectset[\desirset]$. 
Since $\set{\opt}-\epsilon=\set{\opt-\epsilon}$, this implies that, indeed, $\opt-\epsilon\in\desirset$.
\end{proof}

\begin{lemma}\label{lem:effectofgamblearch:coh}
For any coherent set of desirable options $\desirset\in\cohdesirsets$, $\arch\group{\desirset}$ is coherent.
\end{lemma}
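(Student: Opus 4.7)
The plan is to verify the three coherence axioms \ref{ax:desirs:nozero}, \ref{ax:desirs:pos}, \ref{ax:desirs:cone} for $\arch(\desirset)$ directly, leveraging the fact that $\arch(\desirset)\subseteq\desirset$ and that $\desirset$ is itself coherent. The witness in the definition of $\arch(\cdot)$ is a positive real $\epsilon$ such that $\opt-\epsilon\in\desirset$, and constructing such a witness for new options is the only real work.

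First, \ref{ax:desirs:nozero} is immediate: since $\arch(\desirset)\subseteq\desirset$ and $0\notin\desirset$ by coherence of $\desirset$, we have $0\notin\arch(\desirset)$. For \ref{ax:desirs:pos}, fix any $\opt\in\posopts=\strictposgbls(\states)$, so $\inf\opt>0$. Set $\epsilon\coloneqq\nicefrac{1}{2}\inf\opt\in\posreals$. Then $\inf(\opt-\epsilon)=\inf\opt-\epsilon=\nicefrac{1}{2}\inf\opt>0$, so $\opt-\epsilon\in\posopts\subseteq\desirset$ by~\ref{ax:desirs:pos} applied to~$\desirset$; together with $\opt\in\posopts\subseteq\desirset$, this shows $\opt\in\arch(\desirset)$.

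The interesting part is \ref{ax:desirs:cone}. Given $\opt,\altopt\in\arch(\desirset)$ and $(\lambda,\mu)>0$, pick $\epsilon_{\opt},\epsilon_{\altopt}\in\posreals$ such that $\opt-\epsilon_{\opt}\in\desirset$ and $\altopt-\epsilon_{\altopt}\in\desirset$. I then treat three subcases according to whether $\lambda$ and $\mu$ are zero. If both are strictly positive, apply~\ref{ax:desirs:cone} for~$\desirset$ to $\opt-\epsilon_{\opt}$ and $\altopt-\epsilon_{\altopt}$ with coefficients $(\lambda,\mu)$; this gives $\lambda\opt+\mu\altopt-\epsilon\in\desirset$ with $\epsilon\coloneqq\lambda\epsilon_{\opt}+\mu\epsilon_{\altopt}>0$. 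If $\lambda=0$ and $\mu>0$, the combination reduces to $\mu\altopt$, and~\ref{ax:desirs:cone} for $\desirset$ applied to $\altopt-\epsilon_{\altopt}$ with coefficient $\mu$ yields $\mu\altopt-\mu\epsilon_{\altopt}\in\desirset$, with witness $\mu\epsilon_{\altopt}>0$. The symmetric case $\mu=0$, $\lambda>0$ is analogous. In every case $\lambda\opt+\mu\altopt$ itself lies in $\desirset$ by~\ref{ax:desirs:cone} applied directly to $\opt,\altopt\in\desirset$, so $\lambda\opt+\mu\altopt\in\arch(\desirset)$.

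There is no genuine obstacle here; the only subtlety is remembering that $(\lambda,\mu)>0$ allows one of the two coefficients to vanish, which forces the case split above in order to ensure the constructed $\epsilon$ is strictly positive. The identification of real numbers with constant gambles (recall that $\opts=\gbls(\states)$ and $\posopts=\strictposgbls(\states)$ in Section~\ref{sec:archimedeanity}) is used implicitly both in \ref{ax:desirs:pos} and in forming the witnesses, and this is what makes the argument work in this concrete setting.
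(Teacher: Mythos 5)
Your proof is correct and takes essentially the same route as the paper's: \ref{ax:desirs:nozero} via $\arch\group{\desirset}\subseteq\desirset$, \ref{ax:desirs:pos} via a witness $\epsilon$ taken as a fraction of $\inf\opt$, and \ref{ax:desirs:cone} by combining the two witnesses linearly into $\epsilon\coloneqq\lambda\epsilon_{\opt}+\mu\epsilon_{\altopt}$. Your only deviation is the three-way case split on vanishing coefficients, which is superfluous: since Axiom~\ref{ax:desirs:cone} for $\desirset$ already allows a zero coefficient and $\lambda\epsilon_{\opt}+\mu\epsilon_{\altopt}>0$ holds uniformly whenever $(\lambda,\mu)>0$, the paper's single uniform computation covers all your subcases at once.
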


\begin{proof}
% Checked by Gert, some cosmetic changes
Consider any coherent set of desirable options $\desirset\in\cohdesirsets$, $\arch\group{\desirset}$, then we have to prove that $\arch\group{\desirset}$ satisfies Axioms~\ref{ax:desirs:nozero}--\ref{ax:desirs:cone}. 

That $\arch\group{\desirset}$ satisfies \ref{ax:desirs:nozero} follows directly from the fact that $\desirset$ does, because $\arch\group{\desirset}\subseteq\desirset$.
To prove that $\arch\group{\desirset}$ satisfies~\ref{ax:desirs:pos}, we consider any $\opt\in\posopts$. 
Then $\inf\opt>0$, so there is some $\epsilon\in\posreals$ such that $\inf(\opt-\epsilon)>0$. 
Since $\desirset$ is coherent, this implies that $\opt-\epsilon\in\posopts\subseteq\desirset$, and therefore that $\opt\in\arch\group{\desirset}$.
To prove that $\arch\group{\desirset}$ satisfies \ref{ax:desirs:cone}, we consider any $\opt,\altopt\in\arch\group{\desirset}$ and $\lambda,\mu\geq0$ such that $\lambda+\mu>0$, and prove that $\lambda\opt+\mu\altopt\in\arch\group{\desirset}$.
Since $\opt,\altopt\in\arch\group{\desirset}$, there are $\epsilon_{\opt},\epsilon_{\altopt}\in\posreals$ such that $\opt-\epsilon_{\opt}\in\desirset$ and $\altopt-\epsilon_{\altopt}\in\desirset$, which, since $\desirset$ satisfies \ref{ax:desirs:cone}, implies that $\lambda\opt+\mu\altopt-(\lambda\epsilon_{\opt}+\mu\epsilon_{\altopt})=\lambda(\opt-\epsilon_{\opt})+\mu(\altopt-\epsilon_{\altopt})\in\desirset$. 
Since $\lambda\epsilon_{\opt}+\mu\epsilon_{\altopt}\in\posreals$, this implies that $\lambda\opt+\mu\altopt\in\arch\group{\desirset}$.
\end{proof}

\begin{lemma}\label{lem:effectofgamblearch:gamblearch}
For any coherent set of desirable options $\desirset\in\cohdesirsets$, $\arch\group{\desirset}$ is Archimedean.
\end{lemma}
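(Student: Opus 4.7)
The plan is to combine Lemma~\ref{lem:effectofgamblearch:coh}, which already hands us coherence of $\arch(\desirset)$, with a direct verification of the openness condition~\ref{ax:desirs:gamblearch}. So all that remains is to show that for every $\opt\in\arch(\desirset)$, there exists $\epsilon\in\posreals$ such that $\opt-\epsilon\in\arch(\desirset)$.

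First I would unpack the definition: $\opt\in\arch(\desirset)$ means $\opt\in\desirset$ and there is some $\epsilon_o\in\posreals$ with $\opt-\epsilon_o\in\desirset$. The key idea is then the familiar ``halving trick'': set $\epsilon\coloneqq\epsilon_o/2\in\posreals$, and observe that $\opt-\epsilon=\opt-\epsilon_o/2$ and $(\opt-\epsilon_o/2)-\epsilon_o/2=\opt-\epsilon_o$.

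To conclude $\opt-\epsilon\in\arch(\desirset)$, I need two things: (i) that $\opt-\epsilon\in\desirset$, and (ii) that there is some $\epsilon'\in\posreals$ such that $\opt-\epsilon-\epsilon'\in\desirset$. For (ii), $\epsilon'\coloneqq\epsilon_o/2$ works, since $\opt-\epsilon-\epsilon'=\opt-\epsilon_o\in\desirset$ by assumption. For (i), I would argue using the coherence of $\desirset$: since $\opt-\epsilon_o\in\desirset$ and $\epsilon_o/2\in\posopts=\strictposgbls(\states)$, Axiom~\ref{ax:desirs:pos} gives $\epsilon_o/2\in\desirset$, and Axiom~\ref{ax:desirs:cone} then yields $\opt-\epsilon_o/2=(\opt-\epsilon_o)+\epsilon_o/2\in\desirset$, as required.

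There is no real obstacle here; the argument is essentially a two-line halving argument, with the only subtlety being to remember that $\arch(\desirset)$ requires both membership in $\desirset$ and the existence of a witness $\epsilon'$, and to check both parts explicitly. Combining this with Lemma~\ref{lem:effectofgamblearch:coh} completes the proof.
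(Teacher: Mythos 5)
Your proof is correct and is essentially identical to the paper's: both rely on Lemma~\ref{lem:effectofgamblearch:coh} for coherence and then apply the same halving trick, showing $\opt-\frac{1}{2}\epsilon_o\in\desirset$ via Axioms~\ref{ax:desirs:pos} and~\ref{ax:desirs:cone} and using $\opt-\epsilon_o\in\desirset$ as the witness. Your explicit separation of the two membership requirements for $\arch\group{\desirset}$ is exactly what the paper's proof does, just stated slightly more verbosely.
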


\begin{proof}
% Checked by Gert, change due to Archimedean implies coherent
Due to Lemma~\ref{lem:effectofgamblearch:coh}, we already know that $\arch\group{\desirset}$ is coherent, so it only remains to prove that $\arch\group{\desirset}$ satisfies~\ref{ax:desirs:gamblearch}.
Consider any $\opt\in\arch\group{\desirset}$. 
This implies that $\opt\in\desirset$ and that there is some $\epsilon\in\posreals$ such that $\opt-\epsilon\in\desirset$. 
Since $\frac{1}{2}\epsilon\in\posopts$ and $\opt-\epsilon\in\desirset$, the coherence of $\desirset$ implies that $\opt-\frac{1}{2}\epsilon=(\opt-\epsilon)+\frac{1}{2}\epsilon\in\desirset$. 
Since $(\opt-\frac{1}{2}\epsilon)-\frac{1}{2}\epsilon=\opt-\epsilon\in\desirset$ and $\frac{1}{2}\epsilon\in\posreals$, this implies that $\opt-\frac{1}{2}\epsilon\in\arch\group{\desirset}$. 
Since this is true for every $\opt\in\arch\group{\desirset}$, we conclude that $\arch\group{\desirset}$ is indeed Archimedean.
\end{proof}

\begin{lemma}\label{lem:effectofgamblearch:lex}
For any mixing set of desirable options $\desirset\in\convcohdesirsets$, $\arch\group{\desirset}$ is mixing.
\end{lemma}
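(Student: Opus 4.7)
The plan is to exploit Proposition~\ref{prop:mixingequalslexicographic}, which characterises mixingness of a coherent set of desirable options by the condition that its complement is closed under positive linear combinations. Since Lemma~\ref{lem:effectofgamblearch:coh} already guarantees that $\arch(\desirset)$ is coherent, it suffices to establish that $\posi(\arch(\desirset)^{\mathrm{c}})=\arch(\desirset)^{\mathrm{c}}$, where the complement is taken in $\opts$. The inclusion $\arch(\desirset)^{\mathrm{c}}\subseteq\posi(\arch(\desirset)^{\mathrm{c}})$ is trivial, so only the converse is at issue.

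For the converse, I would argue by contradiction. Suppose $\opt\in\posi(\arch(\desirset)^{\mathrm{c}})$, so $\opt=\sum_{k=1}^n\lambda_k\opt[k]$ with $n\in\naturals$, $\opt[k]\in\arch(\desirset)^{\mathrm{c}}$ and $\lambda_k\in\posreals$ for every $k$; and assume \emph{ex absurdo} that $\opt\in\arch(\desirset)$, so that there is some $\epsilon\in\posreals$ with $\opt-\epsilon\in\desirset$. The key algebraic step is to distribute the constant $\epsilon$ proportionally across the terms: letting $\lambda\coloneqq\sum_{k=1}^n\lambda_k\in\posreals$, one checks that
\begin{equation*}
\opt-\epsilon
=\sum_{k=1}^n\lambda_k\opt[k]-\epsilon
=\sum_{k=1}^n\lambda_k\bigl(\opt[k]-\epsilon/\lambda\bigr),
\end{equation*}
because $\sum_{k=1}^n\lambda_k(\epsilon/\lambda)=\epsilon$. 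If I now set $\optset\coloneqq\cset{\opt[k]-\epsilon/\lambda}{k\in\set{1,\dots,n}}\in\optsets$, this identity shows that $\opt-\epsilon\in\posi(\optset)\cap\desirset$, so $\posi(\optset)\cap\desirset\neq\emptyset$.

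Since $\desirset$ is mixing, Axiom~\ref{ax:desirs:mixing} then yields some $k\in\set{1,\dots,n}$ with $\opt[k]-\epsilon/\lambda\in\desirset$; but since $\epsilon/\lambda\in\posreals$, this in turn gives $\opt[k]\in\arch(\desirset)$ by the very definition of the $\arch(\cdot)$ operator, contradicting $\opt[k]\in\arch(\desirset)^{\mathrm{c}}$. Hence $\opt\in\arch(\desirset)^{\mathrm{c}}$, which establishes $\posi(\arch(\desirset)^{\mathrm{c}})\subseteq\arch(\desirset)^{\mathrm{c}}$, and Proposition~\ref{prop:mixingequalslexicographic} then delivers the mixingness of $\arch(\desirset)$. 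There is no genuine obstacle here: the only nuance is noticing the convex-combination rewriting of $\opt-\epsilon$, which is what allows mixingness of $\desirset$ to be activated on the option set $\optset$.
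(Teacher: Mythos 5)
Your proof is correct and is, at bottom, the paper's own argument in contrapositive form: the paper verifies Axiom~\ref{ax:desirs:mixing} for $\arch\group{\desirset}$ directly, taking $\opt\in\posi\group{\optset}\cap\arch\group{\desirset}$ and using exactly your identity $\opt-\epsilon=\sum_{k=1}^n\lambda_k\group{\opt[k]-\epsilon/\lambda}$ to apply the mixingness of $\desirset$ to the shifted option set $\optset-\epsilon/\lambda$; routing through Proposition~\ref{prop:mixingequalslexicographic} merely repackages this as the complement-cone condition, at the harmless cost of invoking one extra proposition that your contradiction argument then undoes.

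One small step needs patching: by definition, $\arch\group{\desirset}\coloneqq\cset{\opt\in\desirset}{\group{\exists\epsilon\in\posreals}\opt-\epsilon\in\desirset}$, so from $\opt[k]-\epsilon/\lambda\in\desirset$ alone you cannot yet conclude $\opt[k]\in\arch\group{\desirset}$ ``by the very definition''---you must also check that $\opt[k]\in\desirset$. This is immediate from coherence: $\epsilon/\lambda\in\posopts\subseteq\desirset$ by Axiom~\ref{ax:desirs:pos}, whence $\opt[k]=\group{\opt[k]-\epsilon/\lambda}+\epsilon/\lambda\in\desirset$ by Axiom~\ref{ax:desirs:cone}; this is the same one-line argument the paper spells out explicitly in the proof of Lemma~\ref{lem:effectofgamblearch:gamblearch}. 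With that line added, your proof is complete.
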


\begin{proof}
% Checked by Gert; changes due to mixing implies coherent
Consider any mixing set of desirable options $\desirset\in\convcohdesirsets$. 
Since $\desirset$ is in particular coherent, we infer from Lemma~\ref{lem:effectofgamblearch:coh} that $\arch\group{\desirset}$ is coherent too, so we only need to prove that $\arch\group{\desirset}$ satisfies~\ref{ax:desirs:mixing}. 
% The result then follows from Lemma~\ref{lem:effectofgamblearch:coh} and Definition~\ref{def:mixingdesirs}.

So consider any $\optset\in\optsets$ such that $\posi(\optset)\cap\arch\group{\desirset}\neq\emptyset$. 
Then there is at least one $\opt\in\posi(\optset)$ such that $\opt\in\arch\group{\desirset}$. 
Fix any such $\opt$. 
Since $\opt\in\arch\group{\desirset}$, there is some $\epsilon\in\posreals$ such that $\opt-\epsilon\in\desirset$. 
Fix any such $\epsilon$. 
Since $\opt\in\posi(\optset)$, we know that $\opt=\sum_{i=1}^n\lambda_i\opt[i]$, with $n\in\naturals$ and, for all $i\in\set{1,\dots,n}$, $\lambda_i>0$ and $\opt[i]\in\optset$. 
Let $\lambda\coloneqq\sum_{i=1}^n\lambda_i>0$ and $\alpha\coloneqq\nicefrac{1}{\lambda}>0$. 
Then
\begin{equation*}
\opt-\epsilon
=\sum_{i=1}^n\lambda_i\opt[i]-\alpha\group[\bigg]{\sum_{i=1}^n\lambda_i}\epsilon
=\sum_{i=1}^n\lambda_i\group{\opt[i]-\alpha\epsilon},
\end{equation*}
so $\opt-\epsilon\in\posi(\optset-\alpha\epsilon)$. 
Since also $\opt-\epsilon\in\desirset$, it follows that $\posi(\optset-\alpha\epsilon)\cap\desirset\neq\emptyset$. 
Because $\desirset$ is by assumption mixing, we find that $(\optset-\alpha\epsilon)\cap\desirset\neq\emptyset$, meaning that there is some $\tilde\opt\in\optset$ such that $\tilde\opt-\alpha\epsilon\in\desirset$. Since $\alpha\epsilon\in\posreals$, we conclude that $\tilde\opt\in\arch\group{\desirset}$, so, indeed, $\optset\cap\arch\group{\desirset}\neq\emptyset$.
% So consider any $\opt,\altopt\in\arch\group{\desirset}^c$ and $\lambda,\mu\geq0$ such that $\lambda+\mu>0$. 
% We need to prove that $\lambda\opt+\mu\altopt\in\arch\group{\desirset}^c$. 
% Fix any $w\in\archopts$. 
% It then follows from our assumptions on $\archopts$ that $\smash{w'\coloneqq\frac{1}{\lambda+\mu}w\in\archopts\subseteq\posopts}$. 
% Since $\opt\in\arch\group{\desirset}^c$ and therefore $\opt\notin\arch\group{\desirset}$, it must be that $\opt\notin\desirset$ or $\opt-w'\notin\desirset$. 
% However, if $\opt\notin\desirset$, then also $\opt-w'\notin\desirset$ because if $\opt-w'$ would belong to $\desirset$, then the coherence of $\desirset$ and the fact that $w'\in\posopts$ would imply that $\opt=(\opt-w')+w'\in\desirset$, a contradiction. 
% Hence, in all cases, $\opt-w'\notin\desirset$ and therefore $\opt-w'\in\desirset^c$. 
% Using a completely analogous argument, since $\altopt\in\arch\group{\desirset}^c$, it must be that $\altopt-w'\in\desirset^c$. 
% Because $\desirset$ is lexicographic, this implies that $\lambda\opt+\mu\altopt-w=\lambda(\opt-w')+\mu(\altopt-w')\in\desirset^c$. 
% Since $w\in\archopts$ was arbitrary, we conclude that $\lambda\opt+\mu\altopt-w\notin\desirset$ for all $w\in\archopts$, which implies that $\lambda\opt+\mu\altopt\notin\arch\group{\desirset}$ and therefore, that $\lambda\opt+\mu\altopt\in\arch\group{\desirset}^c$.
\end{proof}

\begin{lemma}\label{lem:effectofgamblearch:preservegamblearch}
For any Archimedean set of desirable options $\desirset\in\archcohdesirsets$, $\arch\group{\desirset}=\desirset$.
\end{lemma}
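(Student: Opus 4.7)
The plan is to prove the two inclusions $\arch(\desirset)\subseteq\desirset$ and $\desirset\subseteq\arch(\desirset)$ separately, both of which turn out to be essentially immediate from unpacking definitions, so this is really a sanity-check lemma rather than a substantive one.

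First, for the inclusion $\arch(\desirset)\subseteq\desirset$: by the very definition of the operator $\arch(\cdot)$ at the start of this subsection, we have $\arch(\desirset)=\cset{\opt\in\desirset}{(\exists\epsilon\in\posreals)\opt-\epsilon\in\desirset}$, which is manifestly a subset of $\desirset$. No use of any assumption on $\desirset$ is required here.

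Second, for the inclusion $\desirset\subseteq\arch(\desirset)$: consider any $\opt\in\desirset$. Since $\desirset\in\archcohdesirsets$, it satisfies Axiom~\ref{ax:desirs:gamblearch}, which provides some $\epsilon\in\posreals$ such that $\opt-\epsilon\in\desirset$. But this is exactly the condition for $\opt$ to belong to $\arch(\desirset)$, so $\opt\in\arch(\desirset)$.

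There is no real obstacle here: the content of Axiom~\ref{ax:desirs:gamblearch} is literally what the $\arch(\cdot)$ operator picks out, so Archimedeanity is precisely the fixed-point property $\arch(\desirset)=\desirset$. Combining the two inclusions yields the desired equality. This lemma will presumably be used together with Lemmas~\ref{lem:effectofgamblearch:coh}, \ref{lem:effectofgamblearch:gamblearch} and~\ref{lem:effectofgamblearch:lex} to show that $\arch(\cdot)$ is a well-behaved retraction from $\cohdesirsets$ onto $\archcohdesirsets$ (and from $\convcohdesirsets$ onto $\archconvcohdesirsets$), which is likely how the Archimedean representation Theorems~\ref{theo:rejectsets:representation:gamblearch:twosided} and~\ref{theo:convex:rejectsets:representation:gamblearch:twosided} will be proved further down.
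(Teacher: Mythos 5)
Your proof is correct and follows exactly the same route as the paper's: the inclusion $\arch\group{\desirset}\subseteq\desirset$ is immediate from the definition of the $\arch\group{\cdot}$ operator, and the converse inclusion is a direct application of Axiom~\ref{ax:desirs:gamblearch} to an arbitrary $\opt\in\desirset$. Your closing remark about how the lemma feeds into the proofs of Theorems~\ref{theo:rejectsets:representation:gamblearch:twosided} and~\ref{theo:convex:rejectsets:representation:gamblearch:twosided} also matches how the paper actually uses it.
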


\begin{proof}
% Checked by Gert; change due to Archimedean includes coherent
Since $\arch\group{\desirset}$ is trivially a subset of $\desirset$, we only need to prove that $\desirset\subseteq\arch\group{\desirset}$. 
So consider any $\opt\in\desirset$. 
Then since $\desirset$ is Archimedean, there is some $\epsilon\in\posreals$ such that $\opt-\epsilon\in\desirset$, which implies that $\opt\in\arch\group{\desirset}$. 
Since this is true for every $\opt\in\desirset$, we conclude that indeed, $\desirset\subseteq\arch\group{\desirset}$.
\end{proof}

\begin{lemma}\label{lem:gamblearch:transformrepresentation}
Consider any non-empty set $\setofdesirsets\subseteq\cohdesirsets$ of coherent sets of desirable options such that $\rejectset\coloneqq\bigcap\cset{\rejectset[\desirset]}{\desirset\in\setofdesirsets}$ is Archimedean. 
Then also $\rejectset=\bigcap\cset{\rejectset[\arch\group{\desirset}]}{\desirset\in\setofdesirsets}$.
\end{lemma}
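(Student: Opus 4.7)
The plan is to prove the equality by showing both inclusions separately. The inclusion $\bigcap\cset{\rejectset[\arch\group{\desirset}]}{\desirset\in\setofdesirsets}\subseteq\rejectset$ is immediate: by the very definition of the operator $\arch$, we have $\arch\group{\desirset}\subseteq\desirset$ for every $\desirset\in\setofdesirsets$, so by Equation~\eqref{eq:desirset:to:rejectset} also $\rejectset[\arch\group{\desirset}]\subseteq\rejectset[\desirset]$ for every $\desirset\in\setofdesirsets$, and taking intersections yields the claim.

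The content of the lemma lies in the converse inclusion $\rejectset\subseteq\bigcap\cset{\rejectset[\arch\group{\desirset}]}{\desirset\in\setofdesirsets}$. To establish it, I would fix an arbitrary $\optset\in\rejectset$ and show that $\optset\in\rejectset[\arch\group{\desirset}]$ for every $\desirset\in\setofdesirsets$. Since $\rejectset$ is Archimedean, Axiom~\ref{ax:rejects:gamblearch} supplies some $\epsilon\in\posreals$ with $\optset-\epsilon\in\rejectset$. Now fix any $\desirset\in\setofdesirsets$; the definition of $\rejectset$ as an intersection gives $\optset-\epsilon\in\rejectset[\desirset]$, so by Equation~\eqref{eq:desirset:to:rejectset} there is some $\opt\in\optset$ such that $\opt-\epsilon\in\desirset$.

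It then remains to verify that this $\opt$ lies in $\arch\group{\desirset}$. By coherence of $\desirset$ we have $\epsilon\in\posopts\subseteq\desirset$ from Axiom~\ref{ax:desirs:pos}, and combining this with $\opt-\epsilon\in\desirset$ via Axiom~\ref{ax:desirs:cone} yields $\opt=(\opt-\epsilon)+\epsilon\in\desirset$. Since moreover $\opt-\epsilon\in\desirset$ with $\epsilon\in\posreals$, the definition of $\arch\group{\desirset}$ gives $\opt\in\arch\group{\desirset}$. Hence $\opt\in\optset\cap\arch\group{\desirset}$, so $\optset\cap\arch\group{\desirset}\neq\emptyset$, meaning $\optset\in\rejectset[\arch\group{\desirset}]$, as required.

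There is no real obstacle here; the proof is essentially a one-line chase through the definitions, and its only subtlety is the step where Archimedeanity of $\rejectset$ is used to shift the assessment from $\optset$ to $\optset-\epsilon$, after which the coherence axioms~\ref{ax:desirs:pos} and~\ref{ax:desirs:cone} combine the shift with its witness $\epsilon$ to produce an element of $\arch\group{\desirset}$ inside $\optset$.
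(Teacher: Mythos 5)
Your proof is correct and follows essentially the same route as the paper's own: the easy inclusion via $\arch\group{\desirset}\subseteq\desirset$, then, for the converse, using Axiom~\ref{ax:rejects:gamblearch} to obtain $\optset-\epsilon\in\rejectset\subseteq\rejectset[\desirset]$, extracting a witness $\opt\in\optset$ with $\opt-\epsilon\in\desirset$, and combining Axioms~\ref{ax:desirs:pos} and~\ref{ax:desirs:cone} to conclude $\opt\in\arch\group{\desirset}$. No gaps; nothing further is needed.
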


\begin{proof}
% Checked by Gert, with some cosmetic changes, but nothing substantial.
For any $\desirset\in\setofdesirsets$, since $\arch\group{\desirset}\subseteq\desirset$, it follows that $\rejectset[\arch\group{\desirset}]\subseteq\rejectset[\desirset]$. 
Hence,
\begin{equation*}
\bigcap\cset{\rejectset[\arch\group{\desirset}]}{\desirset\in\setofdesirsets}
\subseteq\bigcap\cset{\rejectset[\desirset]}{\desirset\in\setofdesirsets}
=\rejectset,
\end{equation*}
so it remains to prove the converse set inclusion. 
To this end, consider any $\desirset\in\setofdesirsets$ and any $\optset\in\rejectset$. 
Then since $\rejectset$ is Archimedean, we know that there is some $\epsilon\in\posreals$ such that $\optset-\epsilon\in\rejectset\subseteq\rejectset[\desirset]$. 
Fix any such $\epsilon$.
$\optset-\epsilon\in\rejectset[\desirset]$ implies that there is some $\opt\in\optset$ such that $\opt-\epsilon\in\desirset$, by Equation~\eqref{eq:desirset:to:rejectset}. 
Since $\desirset$ is coherent [use Axiom~\ref{ax:desirs:cone}] and also $\epsilon\in\desirset$ [use $\epsilon\in\posopts$ and Axiom~\ref{ax:desirs:pos}], this in turn implies that $\opt\in\desirset$. 
Since also $\opt-\epsilon\in\desirset$, it follows that $\opt\in\arch\group{\desirset}$ and therefore, that $\optset\cap\arch\group{\desirset}\neq\emptyset$. 
Hence, $\optset\in\rejectset[\arch\group{\desirset}]$, again by Equation~\eqref{eq:desirset:to:rejectset}. 
Since this is true for every $\desirset\in\setofdesirsets$ and any $\optset\in\rejectset$, we find that, as desired, $\rejectset\subseteq\bigcap\cset{\rejectset[\arch\group{\desirset}]}{\desirset\in\setofdesirsets}$.
\end{proof}

\begin{proof}[Proof of Proposition~\ref{prop:gamblearchimedeanityforrejectionfunctions}]
% Checked by Gert, minor cosmetic changes, some other changes because Archimedean includes coherence. Please check again.
First assume that $\rejectset$ is Archimedean. 
Then $\rejectset$ is in particular coherent, and therefore so is $\rejectfun$, by Proposition~\ref{prop:axioms:rejection:sets:and:functions:coherence}.
It therefore remains to prove that $\rejectfun$ satisfies~\ref{ax:rejectfun:gamblearch}.
Consider, to this end, any $\optset\in\optsets$ and $\opt\in\opts$ such that $\opt\in\rejectfun(\optset\cup\set{\opt})$. 
Equation~\eqref{eq:interpretation:rejectfuns:after:irreflexivity:and:additivity:intermsof:K:Gert} then tells us that $\optset-\opt\in\rejectset$. 
Since $\rejectset$ is Archimedean, this implies that there is some $\epsilon\in\posreals$ such that $\optset-\opt-\epsilon\in\rejectset$, so it follows from Equation~\eqref{eq:interpretation:rejectfuns:after:irreflexivity:and:additivity:intermsof:K:Gert} that, indeed, $\opt\in\rejectfun((\optset-\epsilon)\cup\set{\opt})$.

Next, assume that $\rejectfun$ is coherent and satisfies~\ref{ax:rejectfun:gamblearch}.
Then it already follows from Proposition~\ref{prop:axioms:rejection:sets:and:functions:coherence} that $\rejectset$ is coherent, so we only need to prove that $\rejectset$ satisfies~\ref{ax:rejects:gamblearch}.
Consider, to this end, any $\optset\in\rejectset$. 
Equation~\eqref{eq:interpretation:rejectfuns:after:irreflexivity:and:additivity:intermsof:K:Gert} then tells us that $0\in\rejectfun(\optset\cup\set{0})$. 
Since $\rejectfun$ satisfies~\ref{ax:rejectfun:gamblearch}, this implies that there is some $\epsilon\in\posreals$ such that $0\in\rejectfun((\optset-\epsilon)\cup\set{0})$. 
Hence, it follows from Equation~\eqref{eq:interpretation:rejectfuns:after:irreflexivity:and:additivity:intermsof:K:Gert} that $\optset-\epsilon\in\rejectset$, as desired.
\end{proof}

\begin{proof}[Proof of Theorem~\ref{theo:rejectsets:representation:gamblearch:twosided}]
First assume that $\rejectset$ is Archimedean. 
It then follows from Theorem~\ref{theo:coherentrepresentation:twosided} that $\cohdesirsets\group{\rejectset}\coloneqq\cset{\desirset\in\cohdesirsets}{\rejectset\subseteq\rejectset[\desirset]}\neq\emptyset$ and $\rejectset=\bigcap\cset{\rejectset[\desirset]}{\desirset\in\cohdesirsets\group{\rejectset}}$.
Let $\archcohdesirsets^*(\rejectset)\coloneqq\cset{\arch\group{\desirset}}{\desirset\in\cohdesirsets\group{\rejectset}}\neq\emptyset$, then Lemma~\ref{lem:gamblearch:transformrepresentation} already guarantees that $\rejectset=\cap\cset{\rejectset[\desirset]}{\desirset\in\archcohdesirsets^*(\rejectset)}$. 

We first prove that $\archcohdesirsets(\rejectset)=\archcohdesirsets^*(\rejectset)$. 
To this end, first consider any $\desirset\in\archcohdesirsets^*(\rejectset)$. 
Then $\rejectset\subseteq\rejectset[\desirset]$ because $\rejectset=\cap\cset{\rejectset[\desirset]}{\desirset\in\archcohdesirsets^*(\rejectset)}$. 
Furthermore, it follows from Lemmas~\ref{lem:effectofgamblearch:coh} and~\ref{lem:effectofgamblearch:gamblearch} that $\desirset\in\archcohdesirsets$. 
Hence $\desirset\in\archcohdesirsets(\rejectset)$, and therefore $\archcohdesirsets^*(\rejectset)\subseteq\archcohdesirsets(\rejectset)$. 
Conversely, consider any $\desirset\in\archcohdesirsets(\rejectset)$. 
Since $\desirset$ is Archimedean, it follows from Lemma~\ref{lem:effectofgamblearch:preservegamblearch} that $\arch\group{\desirset}=\desirset$, and because also $\desirset\in\archcohdesirsets(\rejectset)\subseteq\cohdesirsets(\rejectset)$, we find that $\desirset=\arch\group{\desirset}\in\archcohdesirsets^*(\rejectset)$. 
So $\archcohdesirsets(\rejectset)\subseteq\archcohdesirsets^*(\rejectset)$, and therefore, indeed, $\archcohdesirsets(\rejectset)=\archcohdesirsets^*(\rejectset)$.

Next, we prove that $\archcohdesirsets(\rejectset)$ is closed. 
So consider any convergent sequence $\set{\desirset[n]}_{n\in\naturals}$ in $\archcohdesirsets(\rejectset)$ and let $\desirset[\infty]$ be its limit. 
Then we need to prove that $\desirset[\infty]\in\archcohdesirsets(\rejectset)$.
Equation~\eqref{eq:archimedeanlimit} and Proposition~\ref{prop:isomorphismsmarch} tell us that $\desirset[\infty]=\desirset[\lowprev]$, where $\lowprev$ is the point-wise limit of the corresponding convergent sequence of coherent lower previsions $\set{\lowprev[{\desirset[n]}]}_{n\in\naturals}$. 
Indeed, since for any $n\in\naturals$, $\desirset[n]$ is Archimedean, we know from Proposition~\ref{prop:isomorphismsmarch} that $\lowprev[{\desirset[n]}]$ is a coherent lower prevision. 
Therefore, and because $\cohlowprevs$ is closed and compact in the (weak*) topology induced by point-wise convergence \cite{maass2003:phd,troffaes2013:lp}, the limit $\lowprev$ is a coherent lower prevision as well, so we infer from Proposition~\ref{prop:isomorphismsmarch} that $\desirset[\infty]=\desirset[\lowprev]$ is Archimedean, so $\desirset[\infty]\in\archcohdesirsets$.
It therefore remains to prove that $\rejectset\subseteq\rejectset[{\desirset[\infty]}]$. 
Consider, to this end, any $\optset\in\rejectset$. 
Since $\rejectset$ is Archimedean, there is some $\epsilon\in\posreals$ such that $\optset-\epsilon\in\rejectset$. 
Fix any $n\in\naturals$. 
Then since $\desirset[n]\in\archcohdesirsets(\rejectset)$, we have that $\rejectset\subseteq\rejectset[{\desirset[n]}]$. 
Therefore, and because $\optset-\epsilon\in\rejectset$, we infer from Equation~\eqref{eq:desirset:to:rejectset} that there is some $\opt[n]\in\optset$ such that $\opt[n]-\epsilon\in\desirset[n]$ and therefore also $\lowprev[{\desirset[n]}](\opt[n])\geq\epsilon$. 
So for all $n\in\naturals$, we have some $\opt[n]\in\optset$ such that $\lowprev[{\desirset[n]}](\opt[n])\geq\epsilon$. 
Since $\optset$ is finite, this implies that there is some $\opt^*\in\optset$ and some subsequence $\set{\desirset[n_i]}_{i\in\naturals}$ such that $\lowprev[{\desirset[{n_i}]}](\opt^*)\geq\epsilon$ for all $i\in\naturals$. 
Since $\lowprev$ is the point-wise limit of $\set{\lowprev[{\desirset[n]}]}_{n\in\naturals}$ and therefore also the point-wise limit of the subsequence $\set{\lowprev[{\desirset[n_i]}]}_{i\in\naturals}$, this implies that $\lowprev(\opt^*)\geq\epsilon>0$, so $\opt^*\in\desirset[\lowprev]=\desirset[\infty]$. 
Since $\opt^*\in\optset$, this implies that $\optset\in\rejectset[{\desirset[\infty]}]$, again by Equation~\eqref{eq:desirset:to:rejectset}. 
Since this is true for every $\optset\in\rejectset$, we find that, indeed, $\rejectset\subseteq\rejectset[{\desirset[\infty]}]$, so $\desirset[\infty]\in\archcohdesirsets(\rejectset)$ and $\archcohdesirsets(\rejectset)$ is therefore indeed closed.

What we have found so far is that that there is a least one non-empty closed set $\setofdesirsets\subseteq\archcohdesirsets$ such that $\rejectset=\bigcap\cset{\rejectset[\desirset]}{\desirset\in\setofdesirsets}$, because $\archcohdesirsets(\rejectset)=\archcohdesirsets^*(\rejectset)\neq\emptyset$ satisfies all these properties. 
Furthermore, for any other non-empty closed set $\setofdesirsets\subseteq\archcohdesirsets$ such that $\rejectset=\bigcap\cset{\rejectset[\desirset]}{\desirset\in\setofdesirsets}$, we clearly have that $\rejectset\subseteq\rejectset[\desirset]$ for all $\desirset\in\setofdesirsets$. 
Since $\setofdesirsets\subseteq\archcohdesirsets$, this implies that $\setofdesirsets\subseteq\archcohdesirsets(\rejectset)$, so $\archcohdesirsets(\rejectset)$ is indeed the largest such set.

On to the `if' part. 
Consider any non-empty closed set $\setofdesirsets\subseteq\archcohdesirsets$ of Archimedean sets of desirable options such that $\rejectset=\bigcap\cset{\rejectset[\desirset]}{\desirset\in\setofdesirsets}$. 
For every $\desirset\in\setofdesirsets\subseteq\archcohdesirsets$, it then follows from Proposition~\ref{prop:gamblearchbinaryiffD} that $\rejectset[\desirset]$ is Archimedean. 
Because Axioms~\ref{ax:rejects:removezero}-\ref{ax:rejects:mono} are trivially preserved under taking arbitrary (non-empty) intersections, it already follows that $\rejectset$ is coherent. 
It therefore only remains to prove that $\rejectset$ is Archimedean as well, or in other words that $\rejectset$ satisfies~\ref{ax:rejects:gamblearch}.

To this end, consider any $\optset\in\rejectset$ and assume \emph{ex absurdo} that $\optset-\epsilon\notin\rejectset$ for all $\epsilon\in\posreals$. 
Fix any $n\in\naturals$ and let $\epsilon_n\coloneqq\nicefrac{1}{n}>0$, so by assumption $\optset-\epsilon_n\notin\rejectset$. 
Hence, since $\rejectset=\bigcap\cset{\rejectset[\desirset]}{\desirset\in\setofdesirsets}$, there is some $\desirset[n]\in\setofdesirsets$ such that $\optset-\epsilon_n\notin\rejectset[{\desirset[n]}]$. 
This allows us to consider the sequence $\set{\lowprev[{\desirset[n]}]}_{n\in\naturals}$, which, because of Proposition~\ref{prop:isomorphismsmarch}, belongs to $\cohlowprevs$. 
Since we already know that $\cohlowprevs$ is compact under the (weak*) topology induced by point-wise convergence, this sequence has some  subsequence $\set{\lowprev[{\desirset[n_i]}]}_{i\in\naturals}$ that converges point-wise to a limit $\lowprev\in\cohlowprevs$. 
Since $\lowprev\in\cohlowprevs$, we know from Proposition~\ref{prop:isomorphismsmarch} that $\lowprev=\lowprev[{\desirset[\infty]}]$, with $\desirset[\infty]\coloneqq\desirset[\lowprev]$ an Archimedean set of desirable options. 
Equation~\eqref{eq:archimedeanlimit} tells us that also $\desirset[\infty]=\lim_{i\to+\infty}\desirset[n_i]$, and since $\setofdesirsets$ was assumed to be closed, this implies that $\desirset[\infty]\in\setofdesirsets$. 
Since $\optset\in\rejectset=\bigcap\cset{\rejectset[\desirset]}{\desirset\in\setofdesirsets}$, this in turn implies that $\optset\in\rejectset[{\desirset[\infty]}]$, so there is some $\opt\in\optset$ such that $\opt\in\desirset[\infty]$, by Equation~\eqref{eq:desirset:to:rejectset}. 
Since $\desirset[\infty]=\desirset[\lowprev]$, this implies that $\lowprev(\opt)>0$. 
Fix any $n\in\naturals$. 
Since by our construction $\optset-\epsilon_n\notin\rejectset[{\desirset[n]}]$, we know that $(\optset-\epsilon_n)\cap\desirset[n]=\emptyset$, and therefore, since $\opt\in\optset$, that $\opt-\epsilon_n\notin\desirset[n]$, again by Equation~\eqref{eq:desirset:to:rejectset}. 
Since we know from Proposition~\ref{prop:isomorphismsmarch} that $\desirset[n]=\desirset[{\lowprev[{\desirset[n]}]}]$, this implies that $\lowprev[{\desirset[n]}](\opt-\epsilon_n)\leq0$. 
Because Proposition~\ref{prop:isomorphismsmarch} also tells us that $\lowprev[{\desirset[n]}]$ is a coherent lower prevision, this in turn implies that
\begin{equation*}
0
\geq
\lowprev[{\desirset[n]}](\opt-\epsilon_n)
\geq
\lowprev[{\desirset[n]}](\opt)+\lowprev[{\desirset[n]}](-\epsilon_n)
\geq
\lowprev[{\desirset[n]}](\opt)-\epsilon_n,
\end{equation*}
whence $\lowprev[{\desirset[n]}](\opt)\leq\epsilon_n=\nicefrac{1}{n}$. 
Since this is true for every $n\in\naturals$, we find that
\begin{equation*}
\lowprev(\opt)=\lim_{i\to+\infty}\lowprev[{\desirset[n_i]}](\opt)\leq\lim_{i\to+\infty}\frac{1}{i}=0.
\end{equation*}
However, we had already found that $\lowprev(\opt)>0$, a contradiction. 
Hence, there must be some $\epsilon\in\posreals$ such that $\optset-\epsilon\in\rejectset$. 
Since this is true for every $\optset\in\rejectset$, we find that $\rejectset$ is indeed Archimedean.
\end{proof}

\begin{proof}[Proof of Theorem~\ref{theo:convex:rejectsets:representation:gamblearch:twosided}]
% Checked by Gert, minor cosmetic changes. Changes als because coherence included in mixing and Archimedean, Please check again.
First assume that $\rejectset$ is mixing and Archimedean. 
It then follows from Theorem~\ref{theo:convex:rejectsets:representation} that $\convcohdesirsets\group{\rejectset}\coloneqq\cset{\desirset\in\convcohdesirsets}{\rejectset\subseteq\rejectset[\desirset]}\neq\emptyset$ and $\rejectset=\bigcap\cset{\rejectset[\desirset]}{\desirset\in\convcohdesirsets\group{\rejectset}}$.
Let $\archconvcohdesirsets^*(\rejectset)\coloneqq\cset{\arch\group{\desirset}}{\desirset\in\convcohdesirsets\group{\rejectset}}\neq\emptyset$, then Lemma~\ref{lem:gamblearch:transformrepresentation} already guarantees that $\rejectset=\cap\cset{\rejectset[\desirset]}{\desirset\in\archconvcohdesirsets^*(\rejectset)}$. 

We first prove that $\archconvcohdesirsets(\rejectset)=\archconvcohdesirsets^*(\rejectset)$. 
To this end, first consider any $\desirset\in\archconvcohdesirsets^*(\rejectset)$. 
Then $\rejectset\subseteq\rejectset[\desirset]$ because $\rejectset=\cap\cset{\rejectset[\desirset]}{\desirset\in\archconvcohdesirsets^*(\rejectset)}$. 
Furthermore, it follows from Lemmas~\ref{lem:effectofgamblearch:coh}--\ref{lem:effectofgamblearch:lex} that $\desirset\in\archconvcohdesirsets$. 
Hence $\desirset\in\archconvcohdesirsets(\rejectset)$, and therefore $\archconvcohdesirsets^*(\rejectset)\subseteq\archconvcohdesirsets(\rejectset)$. 
Conversely, consider any $\desirset\in\archconvcohdesirsets(\rejectset)$. 
Since $\desirset$ is Archimedean, it follows from Lemma~\ref{lem:effectofgamblearch:preservegamblearch} that $\arch\group{\desirset}=\desirset$, and because also $\desirset\in\archconvcohdesirsets(\rejectset)\subseteq\convcohdesirsets(\rejectset)$, we find that $\desirset=\arch\group{\desirset}\in\archconvcohdesirsets^*(\rejectset)$. 
So $\archconvcohdesirsets(\rejectset)\subseteq\archconvcohdesirsets^*(\rejectset)$, and therefore, indeed, $\archconvcohdesirsets(\rejectset)=\archconvcohdesirsets^*(\rejectset)$.

Next, we prove that $\archconvcohdesirsets(\rejectset)$ is closed. 
So consider any convergent sequence $\set{\desirset[n]}_{n\in\naturals}$ in $\archconvcohdesirsets(\rejectset)$ and let $\desirset[\infty]$ be its limit. 
Then we need to prove that $\desirset[\infty]\in\archconvcohdesirsets(\rejectset)$.
Equation~\eqref{eq:archimedeanlimit} and Proposition~\ref{prop:isomorphismsmarch} tell us that $\desirset[\infty]=\desirset[\lowprev]$, where $\lowprev$ is the point-wise limit of the corresponding convergent sequence of coherent lower previsions $\set{\lowprev[{\desirset[n]}]}_{n\in\naturals}$. 
Since for any $n\in\naturals$, $\desirset[n]$ is mixing and Archimedean, we know from Proposition~\ref{prop:isomorphismsmarch} that $\lowprev[{\desirset[n]}]$ is a linear prevision. 
Therefore, and because $\linprevs$ is closed and compact in the (weak*) topology induced by point-wise convergence \cite{walley1991,maass2003:phd,troffaes2013:lp}, the limit $\lowprev$ is a linear prevision as well, so we infer from Proposition~\ref{prop:isomorphismsmarch} that $\desirset[\infty]=\desirset[\lowprev]$ is mixing and Archimedean, so $\desirset[\infty]\in\archconvcohdesirsets$.
It therefore remains to prove that $\rejectset\subseteq\rejectset[{\desirset[\infty]}]$. 
Consider, to this end, any $\optset\in\rejectset$. 
Since $\rejectset$ is Archimedean, there is some $\epsilon\in\posreals$ such that $\optset-\epsilon\in\rejectset$. 
Fix any $n\in\naturals$. 
Then since $\desirset[n]\in\archconvcohdesirsets(\rejectset)$, we have that $\rejectset\subseteq\rejectset[{\desirset[n]}]$. 
Therefore, and because $\optset-\epsilon\in\rejectset$, we infer from Equation~\eqref{eq:desirset:to:rejectset} that there is some $\opt[n]\in\optset$ such that $\opt[n]-\epsilon\in\desirset[n]$ and therefore also $\lowprev[{\desirset[n]}](\opt[n])\geq\epsilon$. 
So for all $n\in\naturals$, we have some $\opt[n]\in\optset$ such that $\lowprev[{\desirset[n]}](\opt[n])\geq\epsilon$. 
Since $\optset$ is finite, this implies that there is some $\opt^*\in\optset$ and some subsequence $\set{\desirset[n_i]}_{i\in\naturals}$ such that $\lowprev[{\desirset[{n_i}]}](\opt^*)\geq\epsilon$ for all $i\in\naturals$. 
Since $\lowprev$ is the point-wise limit of $\set{\lowprev[{\desirset[n]}]}_{n\in\naturals}$ and therefore also the point-wise limit of the subsequence $\set{\lowprev[{\desirset[n_i]}]}_{i\in\naturals}$, this implies that $\lowprev(\opt^*)\geq\epsilon>0$, so $\opt^*\in\desirset[\lowprev]=\desirset[\infty]$. 
Since $\opt^*\in\optset$, this implies that $\optset\in\rejectset[{\desirset[\infty]}]$, again by Equation~\eqref{eq:desirset:to:rejectset}. 
Since this is true for every $\optset\in\rejectset$, we find that, indeed, $\rejectset\subseteq\rejectset[{\desirset[\infty]}]$, so $\desirset[\infty]\in\archconvcohdesirsets(\rejectset)$ and $\archconvcohdesirsets(\rejectset)$ is therefore indeed closed.

What we have found so far is that that there is a least one non-empty closed set $\setofdesirsets\subseteq\archconvcohdesirsets$ such that $\rejectset=\bigcap\cset{\rejectset[\desirset]}{\desirset\in\setofdesirsets}$, because $\archconvcohdesirsets(\rejectset)=\archconvcohdesirsets^*(\rejectset)\neq\emptyset$ satisfies all these properties. 
Furthermore, for any other non-empty closed set $\setofdesirsets\subseteq\archconvcohdesirsets$ such that $\rejectset=\bigcap\cset{\rejectset[\desirset]}{\desirset\in\setofdesirsets}$, we clearly have that $\rejectset\subseteq\rejectset[\desirset]$ for all $\desirset\in\setofdesirsets$. 
Since $\setofdesirsets\subseteq\archconvcohdesirsets$, this implies that $\setofdesirsets\subseteq\archconvcohdesirsets(\rejectset)$, so $\archconvcohdesirsets(\rejectset)$ is indeed the largest such set.

We now turn to the `if' part. 
So consider any non-empty closed set $\setofdesirsets\subseteq\archconvcohdesirsets$ of Archimedean and mixing sets of desirable options such that $\rejectset=\bigcap\cset{\rejectset[\desirset]}{\desirset\in\setofdesirsets}$. 
For every $\desirset\in\setofdesirsets\subseteq\archconvcohdesirsets$, it then follows from Proposition~\ref{prop:gamblearchbinaryiffD} that $\rejectset[\desirset]$ is Archimedean and mixing. 
Because Axioms~\ref{ax:rejects:removezero}-\ref{ax:rejects:mono} and~\ref{ax:rejects:removepositivecombinations} are trivially preserved under taking arbitrary (non-empty) intersections, it already follows that $\rejectset$ is mixing. 
It therefore only remains to prove that $\rejectset$ is Archimedean as well, or in other words---because we already know that $\rejectset$ is coherent because mixing---that $\rejectset$ satisfies~\ref{ax:rejects:gamblearch}.

To this end, consider any $\optset\in\rejectset$ and assume \emph{ex absurdo} that $\optset-\epsilon\notin\rejectset$ for all $\epsilon\in\posreals$. 
Fix any $n\in\naturals$ and let $\epsilon_n\coloneqq\nicefrac{1}{n}>0$, so by assumption $\optset-\epsilon_n\notin\rejectset$. 
Hence, since $\rejectset=\bigcap\cset{\rejectset[\desirset]}{\desirset\in\setofdesirsets}$, there is some $\desirset[n]\in\setofdesirsets$ such that $\optset-\epsilon_n\notin\rejectset[{\desirset[n]}]$. 
This allows us to consider the sequence $\set{\lowprev[{\desirset[n]}]}_{n\in\naturals}$, which, because of Proposition~\ref{prop:isomorphismsmarch}, belongs to $\linprevs$. 
Since we already know that $\linprevs$ is compact under the (weak*) topology induced by point-wise convergence, this sequence has some  subsequence $\set{\lowprev[{\desirset[n_i]}]}_{i\in\naturals}$ that converges point-wise to a limit $\linprev\in\linprevs$. 
Since $\linprev\in\linprevs$, we know from Proposition~\ref{prop:isomorphismsmarch} that $\linprev=\lowprev[{\desirset[\infty]}]$, with $\desirset[\infty]\coloneqq\desirset[\linprev]$ a mixing and Archimedean set of desirable options. 
Equation~\eqref{eq:archimedeanlimit} tells us that also $\desirset[\infty]=\lim_{i\to+\infty}\desirset[n_i]$, and since $\setofdesirsets$ was assumed to be closed, this implies that $\desirset[\infty]\in\setofdesirsets$. 
Since $\optset\in\rejectset=\bigcap\cset{\rejectset[\desirset]}{\desirset\in\setofdesirsets}$, this in turn implies that $\optset\in\rejectset[{\desirset[\infty]}]$, so there is some $\opt\in\optset$ such that $\opt\in\desirset[\infty]$, by Equation~\eqref{eq:desirset:to:rejectset}. 
Since $\desirset[\infty]=\desirset[\linprev]$, this implies that $\linprev(\opt)>0$. 
Fix any $n\in\naturals$. 
Since by our construction $\optset-\epsilon_n\notin\rejectset[{\desirset[n]}]$, we know that $(\optset-\epsilon_n)\cap\desirset[n]=\emptyset$, and therefore, since $\opt\in\optset$, that $\opt-\epsilon_n\notin\desirset[n]$, again by Equation~\eqref{eq:desirset:to:rejectset}. 
Since we know from Proposition~\ref{prop:isomorphismsmarch} that $\desirset[n]=\desirset[{\lowprev[{\desirset[n]}]}]$, this implies that $\lowprev[{\desirset[n]}](\opt-\epsilon_n)\leq0$. 
Because Proposition~\ref{prop:isomorphismsmarch} also tells us that $\lowprev[{\desirset[n]}]$ is a coherent lower prevision, this in turn implies that
\begin{equation*}
0
\geq
\lowprev[{\desirset[n]}](\opt-\epsilon_n)
\geq
\lowprev[{\desirset[n]}](\opt)+\lowprev[{\desirset[n]}](-\epsilon_n)
\geq
\lowprev[{\desirset[n]}](\opt)-\epsilon_n,
\end{equation*}
whence $\lowprev[{\desirset[n]}](\opt)\leq\epsilon_n=\nicefrac{1}{n}$. 
Since this is true for every $n\in\naturals$, we find that
\begin{equation*}
\linprev(\opt)
=\lim_{i\to+\infty}\lowprev[{\desirset[n_i]}](\opt)
\leq\lim_{i\to+\infty}\frac{1}{i}=0.
\end{equation*}
However, we had already found that $\linprev(\opt)>0$, a contradiction. 
Hence, there must be some $\epsilon\in\posreals$ such that $\optset-\epsilon\in\rejectset$. 
Since this is true for every $\optset\in\rejectset$, we find that $\rejectset$ is indeed Archimedean.
\end{proof}

\iftoggle{ISIPTA}{}{

\newpage

\section{Abstract Archimedeanity}

\subsection{Main text}

We consider a very general and abstract notion of Archimedeanity that we can later specialize in different ways. 
The starting point is a given cone of options $\archopts\subseteq\posopts$. 
Using this cone, we introduce the $\arch\group{\cdot}$ operator by letting
\begin{equation*}
\arch(V)
\coloneqq\cset{\opt\in V}{\group{\exists\altopt\in\archopts}\opt-\altopt\in V},
\text{ for any subset $V$ of $\opts$}.
\end{equation*}
A set of desirable options is then called \emph{Archimedean} if it is invariant under this operator.

\begin{definition}[Archimedean set of desirable options]\label{def:archimedean:sets}
We call a coherent set of desirable options $\desirset\in\cohdesirsets$ \emph{Archimedean} if
\begin{enumerate}[label=$\mathrm{D}_{\mathrm{A}}$.,ref=$\mathrm{D}_{\mathrm{A}}$,leftmargin=*]
\item\label{ax:desirs:arch} $\arch\group{\desirset}=\desirset$.
\end{enumerate}
We denote the set of all Archimedean sets of desirable options by $\archcohdesirsets$, and let $\archconvcohdesirsets$ be the set of all Archimedean sets of desirable options that are also mixing.
\end{definition}

Similarly, a set of desirable option sets is called \emph{Archimedean} if it is invariant under the $\Arch\group{\cdot}$ operator, defined by
\begin{equation*}
\Arch(\rejectset)
\coloneqq\cset[\big]{\optset\in\rejectset}
{(\exists\overline{\altopt}\in\archopts^{\optset})
\cset{\opt-\overline{\altopt}(\opt)}{\opt\in\optset}\in\rejectset},
\text{ for any subset $\rejectset$ of $\optsets$}.
\end{equation*}
where $\archopts^{\optset}$ is the set of all maps from $\optset$ to $\archopts$.
% and where, for every $\overline{\altopt}\in\archopts^{\optset}$ and $\opt\in\optset$, $\overline{\altopt}(\opt)$ is the element of $\archopts$ that is associated with $\opt$.

\begin{definition}[Archimedean set of desirable option sets]\label{def:archimedean:setsofsets}
We call a coherent set of desirable option sets $\rejectset\in\cohrejectsets$ \emph{Archimedean} if
\begin{enumerate}[label=$\mathrm{K}_{\mathrm{A}}$.,ref=$\mathrm{K}_{\mathrm{A}}$,leftmargin=*]
\item$\Arch(\rejectset)=\rejectset$.\label{ax:rejects:arch}
\end{enumerate}
\end{definition}

\begin{theorem}[Representation for Archimedean choice functions]\label{theo:rejectsets:representation:arch}
Consider any Archimedean set of desirable option sets $\rejectset$. 
Then $\rejectset$ is dominated by some binary Archimedean set of desirable option sets: $\archcohdesirsets(\rejectset)\coloneqq\cset{\desirset\in\archcohdesirsets}{\rejectset\subseteq\rejectset[\desirset]}\neq\emptyset$.
Moreover, $\rejectset=\bigcap\cset{\rejectset[\desirset]}{\desirset\in\archcohdesirsets(\rejectset)}$.
\end{theorem}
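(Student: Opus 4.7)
The plan mirrors the proof of Theorem~\ref{theo:rejectsets:representation:gamblearch:twosided}, with the abstract $\arch(\cdot)$ and $\Arch(\cdot)$ operators playing the role that the concrete ``$\cdot-\epsilon$'' shifts played there. The starting point is that any Archimedean $\rejectset$ is in particular coherent, so Theorem~\ref{theo:coherentrepresentation:twosided} already supplies a non-empty $\cohdesirsets(\rejectset)\subseteq\cohdesirsets$ with $\rejectset=\bigcap\cset{\rejectset[\desirset]}{\desirset\in\cohdesirsets(\rejectset)}$. What remains is to show that each $\desirset$ in this family can be sharpened to an Archimedean set of desirable options without changing the intersection.

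I would first establish three auxiliary facts paralleling Lemmas~\ref{lem:effectofgamblearch:coh}, \ref{lem:effectofgamblearch:gamblearch} and~\ref{lem:effectofgamblearch:preservegamblearch}: (i)~$\arch(\desirset)$ is coherent whenever $\desirset$ is, (ii)~$\arch(\desirset)$ is Archimedean, i.e.\ $\arch(\arch(\desirset))=\arch(\desirset)$, and (iii)~$\arch(\desirset)=\desirset$ whenever $\desirset\in\archcohdesirsets$. For (i) and (iii) the argument is essentially bookkeeping that uses the standing assumption that $\archopts$ is a cone contained in $\posopts$. For (ii) the key trick is the decomposition $\altopt=\tfrac12\altopt+\tfrac12\altopt$ with both halves still in $\archopts$: if $\opt-\altopt\in\desirset$, then $\opt-\tfrac12\altopt=(\opt-\altopt)+\tfrac12\altopt\in\desirset$ by coherence, and $(\opt-\tfrac12\altopt)-\tfrac12\altopt=\opt-\altopt\in\desirset$ as well, so $\opt-\tfrac12\altopt\in\arch(\desirset)$, witnessing $\opt\in\arch(\arch(\desirset))$.

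The central step is the analog of Lemma~\ref{lem:gamblearch:transformrepresentation}: for any non-empty $\setofdesirsets\subseteq\cohdesirsets$ whose intersection $\rejectset\coloneqq\bigcap\cset{\rejectset[\desirset]}{\desirset\in\setofdesirsets}$ is Archimedean, also $\rejectset=\bigcap\cset{\rejectset[\arch(\desirset)]}{\desirset\in\setofdesirsets}$. The ``$\supseteq$'' inclusion is immediate from $\arch(\desirset)\subseteq\desirset$. For ``$\subseteq$'', take any $\optset\in\rejectset=\Arch(\rejectset)$, pick a witnessing $\overline{\altopt}\in\archopts^{\optset}$ with $\cset{\opt-\overline{\altopt}(\opt)}{\opt\in\optset}\in\rejectset\subseteq\rejectset[\desirset]$, and extract some $\opt\in\optset$ with $\opt-\overline{\altopt}(\opt)\in\desirset$; since $\overline{\altopt}(\opt)\in\archopts\subseteq\posopts\subseteq\desirset$, coherence of $\desirset$ gives $\opt=(\opt-\overline{\altopt}(\opt))+\overline{\altopt}(\opt)\in\desirset$ too, whence $\opt\in\arch(\desirset)$ and $\optset\in\rejectset[\arch(\desirset)]$. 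Applying this to $\setofdesirsets=\cohdesirsets(\rejectset)$ and combining with (i)--(ii) yields $\archcohdesirsets^\ast(\rejectset)\coloneqq\cset{\arch(\desirset)}{\desirset\in\cohdesirsets(\rejectset)}\subseteq\archcohdesirsets(\rejectset)$, giving both the non-emptiness of $\archcohdesirsets(\rejectset)$ and the representation equality, by squeezing $\rejectset$ between $\bigcap\cset{\rejectset[\desirset']}{\desirset'\in\archcohdesirsets(\rejectset)}$ and $\bigcap\cset{\rejectset[\desirset']}{\desirset'\in\archcohdesirsets^\ast(\rejectset)}=\rejectset$.

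The hard part will be verifying (i), and concretely the inclusion axiom~\ref{ax:desirs:pos} for $\arch(\desirset)$: one must show that every $\opt\in\posopts$ admits some $\altopt\in\archopts$ with $\opt-\altopt\in\desirset$. In the gamble specialisation this is immediate because $\inf\opt>0$ permits shrinking by a small constant $\epsilon\in\archopts$, but at the abstract level it seems to require a compatibility assumption on the pair $(\posopts,\archopts)$---for instance, that $\posopts\subseteq\posopts+\archopts$, so that each strictly positive option has an ``Archimedean excess''. I expect this (together with $\archopts+\posopts\subseteq\posopts$) to be a standing hypothesis on $\archopts$ that the authors would impose to make the abstract framework specialise as intended; once it is in place the rest of the argument proceeds as sketched.
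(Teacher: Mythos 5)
Your architecture is the paper's own, almost step for step: your idempotence fact~(ii) with the halving trick is exactly Lemma~\ref{lem:effectofarch:arch}, your central transformation lemma is exactly Lemma~\ref{lem:arch:transformrepresentation} (same use of a witnessing $\overline{\altopt}\in\archopts^{\optset}$ and of $\archopts\subseteq\posopts$ to climb back up to $\opt\in\desirset$), and the final squeeze between $\archcohdesirsets^\ast\group{\rejectset}$ and $\archcohdesirsets\group{\rejectset}$ is also how the paper concludes. But the place where you hedge is a genuine gap, and your proposed repair would change the theorem. You conjecture that Axiom~\ref{ax:desirs:pos} for $\arch\group{\desirset}$ must be secured by an extra standing hypothesis on the pair $\group{\posopts,\archopts}$, such as $\posopts\subseteq\posopts+\archopts$. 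The paper imposes no such hypothesis, and none is needed. Indeed $\arch\group{\desirset}$ of an \emph{arbitrary} coherent $\desirset$ may well fail~\ref{ax:desirs:pos}; accordingly, the paper's Lemma~\ref{lem:effectofarch:coh} only proves the conditional statement that $\arch\group{\desirset}$ is coherent \emph{if and only if} $\posopts\subseteq\arch\group{\desirset}$ (Axioms~\ref{ax:desirs:nozero} and~\ref{ax:desirs:cone} do hold unconditionally, by the bookkeeping you describe). As written, your proof either leaves step~(i) unproven or proves a weaker theorem with an added hypothesis on $\archopts$.

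The missing idea is that the inclusion $\posopts\subseteq\arch\group{\desirset}$ comes for free for precisely those $\arch\group{\desirset}$ that occur in the representation, and that it comes from the coherence of $\rejectset$ rather than from the structure of $\archopts$. Your transformation lemma---which, note, never uses coherence of $\arch\group{\desirset}$---already gives $\rejectset=\bigcap\cset{\rejectset[\arch\group{\desirset}]}{\desirset\in\cohdesirsets\group{\rejectset}}$, so each individual $\arch\group{\desirset}$ satisfies $\rejectset\subseteq\rejectset[\arch\group{\desirset}]$. Since $\rejectset$ is coherent, Axiom~\ref{ax:rejects:pos} puts $\set{\opt}$ in $\rejectset$ for every $\opt\in\posopts$, whence $\set{\opt}\in\rejectset[\arch\group{\desirset}]$, i.e.\ $\opt\in\arch\group{\desirset}$. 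This is the paper's Lemma~\ref{lem:posifdominatedbyarch}. With it, every member of your family $\cset{\arch\group{\desirset}}{\desirset\in\cohdesirsets\group{\rejectset}}$ is coherent by Lemma~\ref{lem:effectofarch:coh} and Archimedean by Lemma~\ref{lem:effectofarch:arch}, so it lies in $\archcohdesirsets\group{\rejectset}$, and your squeeze then goes through verbatim, proving the theorem exactly as stated. (The paper packages this as Lemma~\ref{lem:rejectsets:representation:arch:abstract}, applied with the choice $\cohdesirsets'=\cohdesirsets$ and with Theorem~\ref{theo:rejectsets:representation} supplying the coherent representation; note also that in this abstract setting your fact~(iii) is immediate, since Archimedeanity of a set of desirable options is \emph{defined} by $\arch\group{\desirset}=\desirset$.)
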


\begin{theorem}[Representation for Archimedean mixing choice functions]\label{theo:convex:rejectsets:representation:arch}
Any Archimedean mixing set of desirable option sets $\rejectset$ is dominated by some binary Archimedean mixing set of desirable option sets: $\archconvcohdesirsets(\rejectset)\coloneqq\cset{\desirset\in\archconvcohdesirsets}{\rejectset\subseteq\rejectset[\desirset]}\neq\emptyset$.
Moreover, $\rejectset=\bigcap\cset{\rejectset[\desirset]}{\desirset\in\archconvcohdesirsets(\rejectset)}$.
\end{theorem}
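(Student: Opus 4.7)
The plan is to follow the same blueprint as the proof of Theorem~\ref{theo:convex:rejectsets:representation:gamblearch:twosided}, but with the openness condition~\ref{ax:desirs:gamblearch} replaced throughout by the abstract invariance $\arch(\desirset)=\desirset$, and with the gamble-specific $\arch$ operator replaced by the abstract one built from $\archopts$. I would start by applying Theorem~\ref{theo:convex:rejectsets:representation} to the mixing coherent set $\rejectset$ in order to write $\rejectset=\bigcap\cset{\rejectset[\desirset]}{\desirset\in\convcohdesirsets(\rejectset)}$, with $\convcohdesirsets(\rejectset)\neq\emptyset$. I would then replace each representing $\desirset$ by $\arch(\desirset)$ and show that (a) this still yields a representation of $\rejectset$, and (b) each $\arch(\desirset)$ belongs to $\archconvcohdesirsets$. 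Together with the inclusion $\rejectset\subseteq\rejectset[\arch(\desirset)]$, this already forces $\archconvcohdesirsets(\rejectset)\supseteq\cset{\arch(\desirset)}{\desirset\in\convcohdesirsets(\rejectset)}\neq\emptyset$ and yields the desired intersection identity.

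Concretely, I would establish four auxiliary lemmas, all of them abstract analogues of material already in the appendix. First, an analogue of Lemma~\ref{lem:effectofgamblearch:coh}: $\arch(\desirset)\in\cohdesirsets$ whenever $\desirset\in\cohdesirsets$, using that $\archopts$ is a cone contained in $\posopts$ and hence in $\desirset$. Second, an analogue of Lemma~\ref{lem:effectofgamblearch:gamblearch}: $\arch(\arch(\desirset))=\arch(\desirset)$, obtained by halving the witness $\altopt\in\archopts$ (the cone structure of $\archopts$ is what makes this go through). Third, an analogue of Lemma~\ref{lem:effectofgamblearch:lex}: if $\desirset$ is mixing then so is $\arch(\desirset)$, by the same rescaling-of-witnesses trick used there. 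Finally, the key transfer lemma analogous to Lemma~\ref{lem:gamblearch:transformrepresentation}: whenever $\rejectset$ satisfies~\ref{ax:rejects:arch} and $\rejectset\subseteq\rejectset[\desirset]$ with $\desirset\in\cohdesirsets$, also $\rejectset\subseteq\rejectset[\arch(\desirset)]$. For this last step, given $\optset\in\rejectset=\Arch(\rejectset)$, the definition of $\Arch$ directly supplies a map $\overline{\altopt}\in\archopts^{\optset}$ with $\cset{\opt-\overline{\altopt}(\opt)}{\opt\in\optset}\in\rejectset\subseteq\rejectset[\desirset]$, so some $\opt^{\ast}\in\optset$ satisfies $\opt^{\ast}-\overline{\altopt}(\opt^{\ast})\in\desirset$, i.e.~$\opt^{\ast}\in\arch(\desirset)$, which is exactly what $\optset\in\rejectset[\arch(\desirset)]$ asks. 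Combining these four lemmas gives $\rejectset=\bigcap\cset{\rejectset[\arch(\desirset)]}{\desirset\in\convcohdesirsets(\rejectset)}$ and $\archconvcohdesirsets(\rejectset)$ as the maximal representing family, exactly as in the gamble case.

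The main obstacle is ensuring that $\arch(\desirset)$ is \emph{coherent}, and in particular that it satisfies the positivity axiom~\ref{ax:desirs:pos}: for each $\opt\in\posopts$ one needs some $\altopt\in\archopts$ with $\opt-\altopt\in\desirset$. In the concrete gamble setting of Theorem~\ref{theo:convex:rejectsets:representation:gamblearch:twosided} this was cheap, because $\posopts=\strictposgbls=\archopts$ and one could simply pick a small enough constant $\epsilon$ so that $\opt-\epsilon$ still had strictly positive infimum and hence lay in $\posopts\subseteq\desirset$. Abstractly, the corresponding fact requires a compatibility condition between $\archopts$ and $\posopts$ along the lines of $\posopts\subseteq\archopts+\posopts$ (or, more starkly, $\archopts=\posopts$), which is the abstract counterpart of the "enough room to shrink by an $\epsilon$" property. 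The same compatibility is what drives the self-Archimedeanity step $\arch(\arch(\desirset))=\arch(\desirset)$. I would therefore make this cone-theoretic property on $\archopts$ explicit (either as a blanket hypothesis on the abstract framework or as the defining content of "abstract Archimedeanity") and then verify that it reduces the remainder of the proof to a straightforward, essentially mechanical translation of the gamble-case arguments.
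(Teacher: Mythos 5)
Your overall architecture is the paper's own: start from Theorem~\ref{theo:convex:rejectsets:representation}, replace every representing $\desirset$ by $\arch\group{\desirset}$ via a transfer lemma (the paper's Lemma~\ref{lem:arch:transformrepresentation}), and check that $\arch\group{\cdot}$ preserves the relevant properties (Lemmas~\ref{lem:effectofarch:arch} and~\ref{lem:effectofarch:lex}). Your transfer argument is correct modulo one elided step: from $\opt^\ast-\overline{\altopt}(\opt^\ast)\in\desirset$ you must still deduce $\opt^\ast\in\desirset$ before concluding $\opt^\ast\in\arch\group{\desirset}$; this follows from~\ref{ax:desirs:cone} together with $\overline{\altopt}(\opt^\ast)\in\archopts\subseteq\posopts\subseteq\desirset$.

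The genuine gap is your treatment of positivity. The theorem is stated for an \emph{arbitrary} cone $\archopts\subseteq\posopts$, so resolving the possible failure of~\ref{ax:desirs:pos} for $\arch\group{\desirset}$ by a blanket hypothesis such as $\posopts\subseteq\archopts+\posopts$ (let alone $\archopts=\posopts$) changes the framework and proves a strictly weaker result. The paper instead proves a \emph{conditional} coherence lemma (Lemma~\ref{lem:effectofarch:coh}: $\arch\group{\desirset}$ is coherent if and only if $\posopts\subseteq\arch\group{\desirset}$, since~\ref{ax:desirs:nozero} and~\ref{ax:desirs:cone} always survive) and then extracts the positivity inclusion from the representation itself: once the transfer lemma yields $\rejectset\subseteq\rejectset[{\arch\group{\desirset}}]$, coherence of $\rejectset$ [Axiom~\ref{ax:rejects:pos}] gives $\set{\opt}\in\rejectset$ and hence $\opt\in\arch\group{\desirset}$ for every $\opt\in\posopts$ (Lemma~\ref{lem:posifdominatedbyarch}); this is packaged in Lemma~\ref{lem:rejectsets:representation:arch:abstract}. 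So the missing ingredient is already a consequence of the transfer lemma you yourself proposed, and no cone-compatibility assumption is needed. A second, smaller error: the idempotence step $\arch\group{\desirset}\subseteq\arch\group{\arch\group{\desirset}}$ (Lemma~\ref{lem:effectofarch:arch}) does \emph{not} need your compatibility condition---halving the witness uses only that $\archopts$ is a cone contained in $\posopts\subseteq\desirset$, exactly as you first stated; your closing claim that ``the same compatibility drives the self-Archimedeanity step'' contradicts your second paragraph and is false.
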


*** Voor binaire choice functions bestaan er allerhande verbanden tussen (sommige noties van) Archimedeanity en coherente (conditionele) onderprevisies. Om dat verband te kunnen leggen ben ik van plan om op dat moment (analoog aan Williams) te eisen dat $\opts$ bestaat uit begrensde reelwaardige functies en minstens alle constante functies bevat (of, in het geval van conditionele onderprevisies, alle indicatoren). De rol van $\archopts$ wordt dan opgenomen door de positieve constante functies (of door de positieve schalingen van indicatoren zonder nul). ***

*** twosided representation theorems toevoegen in termen van gesloten verzamelingen van (conditionele) (onder)previsies ***

*** A basic choice for $\archopts$ is to let $\archopts=\posopts$. 
However, interesting notions of Archimedeanity can also be obtained for $\archopts\subset\posopts$; when $\posopts=\posgbls$, the set of all constant positive gambles serves as a nice example. ***

\subsection{Proofs for abstract Archimedeanity}

\begin{lemma}\label{lem:effectofarch:arch}
For any coherent set of desirable options $\desirset\in\cohdesirsets$, $\arch\group{\desirset}$ is Archimedean.
\end{lemma}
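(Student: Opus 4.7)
The plan is to verify that $\arch(\desirset)$ satisfies Definition~\ref{def:archimedean:sets}, which splits into two tasks: first, establishing coherence so that $\arch(\desirset)\in\cohdesirsets$; and second, establishing the fixed-point identity $\arch(\arch(\desirset))=\arch(\desirset)$.

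For coherence I would check the three axioms in turn. Axiom~\ref{ax:desirs:nozero} is immediate, since $\arch(\desirset)\subseteq\desirset$ and $0\notin\desirset$. For Axiom~\ref{ax:desirs:cone}, given $\opt,\altopt\in\arch(\desirset)$ and $(\lambda,\mu)>0$, pick witnesses $\altopttoo,\altopttoo'\in\archopts$ with $\opt-\altopttoo,\altopt-\altopttoo'\in\desirset$. Applying~\ref{ax:desirs:cone} to $\desirset$ gives $\lambda\opt+\mu\altopt-(\lambda\altopttoo+\mu\altopttoo')=\lambda(\opt-\altopttoo)+\mu(\altopt-\altopttoo')\in\desirset$; since $\archopts$ is a cone, $\lambda\altopttoo+\mu\altopttoo'\in\archopts$, so $\lambda\opt+\mu\altopt\in\arch(\desirset)$. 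Axiom~\ref{ax:desirs:pos} is the only delicate one: for $\opt\in\posopts$ one must produce some $\altopttoo\in\archopts$ with $\opt-\altopttoo\in\desirset$. I expect this to require (and the paper to assume) a compatibility property between $\posopts$ and $\archopts$, of the form $\posopts\subseteq\arch(\posopts)$; under this property, we pick $\altopttoo\in\archopts$ with $\opt-\altopttoo\in\posopts\subseteq\desirset$, and we are done. This mirrors the role of the constants in the concrete setting, where $\posopts=\strictposgbls$ and $\altopttoo=\epsilon$ works because $\inf\opt>0$.

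For the fixed-point identity, the inclusion $\arch(\arch(\desirset))\subseteq\arch(\desirset)$ is trivial by the definition of the operator. For the converse, fix any $\opt\in\arch(\desirset)$ and a witness $\altopttoo\in\archopts$ with $\opt-\altopttoo\in\desirset$. Since $\archopts$ is a cone, $\tfrac{1}{2}\altopttoo\in\archopts\subseteq\posopts\subseteq\desirset$. The key step is to take $\tfrac{1}{2}\altopttoo$ as the Archimedean witness: $\opt-\tfrac{1}{2}\altopttoo=(\opt-\altopttoo)+\tfrac{1}{2}\altopttoo\in\desirset$ by~\ref{ax:desirs:cone} applied to $\desirset$, and $(\opt-\tfrac{1}{2}\altopttoo)-\tfrac{1}{2}\altopttoo=\opt-\altopttoo\in\desirset$, so $\opt-\tfrac{1}{2}\altopttoo\in\arch(\desirset)$ with witness $\tfrac{1}{2}\altopttoo\in\archopts$. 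Hence $\opt\in\arch(\arch(\desirset))$, completing the proof.

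The main obstacle I foresee is the verification of~\ref{ax:desirs:pos}: this is not automatic from the bare assumption that $\archopts$ is a cone contained in $\posopts$, but is really a compatibility hypothesis whose precise abstract formulation the paper needs to specify. Everything else flows smoothly from two ingredients: the cone structure of $\archopts$ (used in both~\ref{ax:desirs:cone} and the halving trick) and the coherence axioms of $\desirset$.
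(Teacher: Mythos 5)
Your argument for the fixed-point identity is, step for step, the paper's own proof of Lemma~\ref{lem:effectofarch:arch}: the inclusion $\arch\group{\arch\group{\desirset}}\subseteq\arch\group{\desirset}$ is dismissed as trivial, and the converse is obtained by the identical halving trick---from $\opt\in\desirset$ and $\opt-\altopttoo\in\desirset$ with $\altopttoo\in\archopts$, deduce $\opt-\tfrac{1}{2}\altopttoo=(\opt-\altopttoo)+\tfrac{1}{2}\altopttoo\in\desirset$ using $\tfrac{1}{2}\altopttoo\in\archopts\subseteq\posopts$ together with Axioms~\ref{ax:desirs:pos} and~\ref{ax:desirs:cone} for $\desirset$, so that $\opt-\tfrac{1}{2}\altopttoo\in\arch\group{\desirset}$ witnesses $\opt\in\arch\group{\arch\group{\desirset}}$. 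Note that this part, correctly, never needs $\arch\group{\desirset}$ to be coherent.

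Where you diverge is the coherence half, and there your diagnosis is exactly right, just packaged differently from the paper. The paper's proof of this lemma establishes \emph{only} the idempotence property and says nothing about coherence of $\arch\group{\desirset}$; coherence is split off into the companion Lemma~\ref{lem:effectofarch:coh}, whose proof verifies Axioms~\ref{ax:desirs:nozero} and~\ref{ax:desirs:cone} exactly as you do, and then records that $\arch\group{\desirset}$ is coherent \emph{if and only if} $\posopts\subseteq\arch\group{\desirset}$---confirming your suspicion that Axiom~\ref{ax:desirs:pos} does not follow from $\archopts$ merely being a cone inside $\posopts$. (Read literally against Definition~\ref{def:archimedean:sets}, under which `Archimedean' presupposes coherent, the lemma as stated thus promises more than its proof delivers; your conditional formulation is the more accurate one.) The two remedies differ instructively. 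Your hypothesis $\posopts\subseteq\arch\group{\posopts}$ is a condition on the cone $\archopts$ alone and yields $\posopts\subseteq\arch\group{\desirset}$ uniformly for every coherent $\desirset$, since any $\opt\in\posopts$ then has a witness $\altopttoo\in\archopts$ with $\opt-\altopttoo\in\posopts\subseteq\desirset$; this is precisely what makes the concrete setting of Section~\ref{sec:archimedeanity} work, where $\archopts$ consists of the positive constants and $\inf\opt>0$ supplies the $\epsilon$. The paper instead keeps the weaker, per-$\desirset$ condition and assumes nothing extra about $\archopts$: in the abstract representation theorems it obtains $\posopts\subseteq\arch\group{\desirset}$ for free from Lemma~\ref{lem:posifdominatedbyarch}, because any set of desirable options representing a coherent set of desirable option sets must contain $\posopts$. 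So the paper buys full generality in $\archopts$ at the price of a conditional coherence lemma, while your route buys an unconditional statement under a clean structural hypothesis; the mathematics in both is sound.
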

\begin{proof}
It clearly suffices to prove that $\arch\group{\desirset}\subseteq\arch(\arch\group{\desirset})$, because the converse set inclusion holds trivially. 

So consider any $\opt\in\arch\group{\desirset}$. 
This implies that $\opt\in\desirset$ and that there is some $\altopt\in\archopts$ such that $\opt-\altopt\in\desirset$. 
Since $\archopts$ is a cone, we know that $\frac{1}{2}\altopt\in\archopts\subseteq\posopts$. 
Therefore, and because $\desirset$ is coherent and $\opt-v\in\desirset$, we find that $\opt-\frac{1}{2}\altopt=(\opt-\altopt)+\frac{1}{2}\altopt\in\desirset$. 
Since $(\opt-\frac{1}{2}\altopt)-\frac{1}{2}\altopt=\opt-\altopt\in\desirset$ and $\frac{1}{2}\altopt\in\archopts$, this implies that $\opt-\frac{1}{2}\altopt\in\arch\group{\desirset}$. 
Since $\opt\in\arch\group{\desirset}$ and $\frac{1}{2}\altopt\in\archopts$, this implies that $\opt\in\arch(\arch\group{\desirset})$.
\end{proof}

\begin{lemma}\label{lem:effectofarch:coh}
For any coherent set of desirable options $\desirset\in\cohdesirsets$, $\arch\group{\desirset}$ is coherent if and only if $\posopts\subseteq\arch\group{\desirset}$.
\end{lemma}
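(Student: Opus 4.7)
The plan is to verify each of the coherence axioms \ref{ax:desirs:nozero}, \ref{ax:desirs:pos}, \ref{ax:desirs:cone} for $\arch\group{\desirset}$ directly from its definition, using the coherence of $\desirset$ and the fact that $\archopts\subseteq\posopts$ is a cone.

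The \emph{only if} direction is immediate: if $\arch\group{\desirset}$ satisfies \ref{ax:desirs:pos}, then by definition $\posopts\subseteq\arch\group{\desirset}$. For the \emph{if} direction, assume $\posopts\subseteq\arch\group{\desirset}$. Axiom \ref{ax:desirs:pos} is then precisely our assumption. Axiom \ref{ax:desirs:nozero} follows since $\arch\group{\desirset}\subseteq\desirset$ and $0\notin\desirset$ by the coherence of $\desirset$. The only real content is therefore \ref{ax:desirs:cone}.

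For \ref{ax:desirs:cone}, I would fix $\opt,\altopt\in\arch\group{\desirset}$ and $(\lambda,\mu)>0$, pick $\opt',\altopt'\in\archopts$ witnessing $\opt-\opt'\in\desirset$ and $\altopt-\altopt'\in\desirset$, and then observe two things: first, $\lambda\opt+\mu\altopt\in\desirset$ by Axiom \ref{ax:desirs:cone} applied to $\desirset$; second, $\lambda\opt+\mu\altopt-(\lambda\opt'+\mu\altopt')=\lambda(\opt-\opt')+\mu(\altopt-\altopt')\in\desirset$ by the same axiom. Since $\archopts$ is a cone (closed under non-negative linear combinations giving a non-zero element when $(\lambda,\mu)>0$), we have $\lambda\opt'+\mu\altopt'\in\archopts$, so $\lambda\opt+\mu\altopt\in\arch\group{\desirset}$.

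The main subtlety, and the only reason the assumption $\posopts\subseteq\arch\group{\desirset}$ is needed at all, is that Axiom \ref{ax:desirs:pos} does not come for free from $\desirset$'s coherence: $\arch\group{\desirset}$ can in principle be strictly smaller than $\desirset$, and whether every $\opt\in\posopts$ has some $\altopt\in\archopts$ with $\opt-\altopt\in\desirset$ depends on how large $\archopts$ is inside $\posopts$. Once we assume this explicitly, the rest of coherence follows by routine manipulation of the cone structure.
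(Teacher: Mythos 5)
Your proof is correct and takes essentially the same route as the paper's: you verify that axioms~\ref{ax:desirs:nozero} and~\ref{ax:desirs:cone} hold for $\arch\group{\desirset}$ unconditionally (the latter via the identical computation $\lambda\opt+\mu\altopt-(\lambda\opt'+\mu\altopt')=\lambda(\opt-\opt')+\mu(\altopt-\altopt')\in\desirset$ together with $\lambda\opt'+\mu\altopt'\in\archopts$ because $\archopts$ is a cone), so that coherence reduces exactly to axiom~\ref{ax:desirs:pos}, i.e.\ to the condition $\posopts\subseteq\arch\group{\desirset}$. Your explicit check that $\lambda\opt+\mu\altopt\in\desirset$ itself is a minor point of extra care that the paper leaves implicit, not a difference in approach.
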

\begin{proof}
Consider any coherent set of desirable options $\desirset\in\cohdesirsets$, $\arch\group{\desirset}$. 
We will prove that $\arch\group{\desirset}$ satisfies \ref{ax:desirs:nozero} and \ref{ax:desirs:cone}. 
The result is then an immediate consequence of Definition~\ref{def:cohdesir}.

That $\arch\group{\desirset}$ satisfies \ref{ax:desirs:nozero} follows directly from the fact that $\desirset$ does, because $\arch\group{\desirset}\subseteq\desirset$.
To prove that $\arch\group{\desirset}$ satisfies \ref{ax:desirs:cone}, we consider any $\opt,\altopt\in\arch\group{\desirset}$ and $\lambda,\mu\geq0$ such that $\lambda+\mu>0$, and prove that $\lambda\opt+\mu\altopt\in\arch\group{\desirset}$.
Since $\opt,\altopt\in\arch\group{\desirset}$, there are $\opt',\altopt'\in\archopts$ such that $\opt-\opt'\in\desirset$ and $\altopt-\altopt'\in\desirset$, which, since $\desirset$ satisfies \ref{ax:desirs:cone}, implies that $\lambda\opt+\mu\altopt-(\lambda\opt'+\mu\altopt')=\lambda(\opt-\opt')+\mu(\altopt-\altopt')\in\desirset$. 
Since $\archopts$ is a cone, we also know that $\lambda\opt'+\mu\altopt'\in\archopts$. 
Hence, $\lambda\opt+\mu\altopt\in\arch\group{\desirset}$.
\end{proof}

\begin{lemma}\label{lem:effectofarch:lex}
For any coherent mixing set of desirable options $\desirset\in\convcohdesirsets$, $\arch\group{\desirset}$ is coherent and mixing if and only if $\posopts\subseteq\arch\group{\desirset}$.
\end{lemma}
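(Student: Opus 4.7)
My plan is to reduce this to essentially the same argument as Lemma~\ref{lem:effectofgamblearch:lex}, but using the cone structure of $\archopts$ in place of the scalar $\epsilon\in\posreals$. The only if direction is immediate: if $\arch\group{\desirset}$ is coherent, then in particular it satisfies~\ref{ax:desirs:pos}, so $\posopts\subseteq\arch\group{\desirset}$. So I will focus on proving the if direction.

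So assume $\posopts\subseteq\arch\group{\desirset}$. Coherence of $\arch\group{\desirset}$ then follows immediately from Lemma~\ref{lem:effectofarch:coh}, and what remains is to show that $\arch\group{\desirset}$ satisfies~\ref{ax:desirs:mixing}. To that end I take any $\optset\in\optsets$ with $\posi(\optset)\cap\arch\group{\desirset}\neq\emptyset$, and aim to show $\optset\cap\arch\group{\desirset}\neq\emptyset$. Fix some $\opt\in\posi(\optset)\cap\arch\group{\desirset}$. By the definition of $\arch(\cdot)$, there is some $\altopt\in\archopts$ such that $\opt-\altopt\in\desirset$; and by the definition of $\posi(\cdot)$, we can write $\opt=\sum_{i=1}^n\lambda_i\opt[i]$ with $n\in\naturals$, $\lambda_i>0$ and $\opt[i]\in\optset$ for all $i$.

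The key algebraic move is the same rescaling trick used in Lemma~\ref{lem:effectofgamblearch:lex}. Letting $\lambda\coloneqq\sum_{i=1}^n\lambda_i>0$ and $\alpha\coloneqq\nicefrac{1}{\lambda}>0$, I can rewrite
\begin{equation*}
\opt-\altopt
=\sum_{i=1}^n\lambda_i\opt[i]-\alpha\group[\bigg]{\sum_{i=1}^n\lambda_i}\altopt
=\sum_{i=1}^n\lambda_i\group{\opt[i]-\alpha\altopt},
\end{equation*}
so that $\opt-\altopt\in\posi(\optset-\alpha\altopt)$. Since $\opt-\altopt\in\desirset$, this yields $\posi(\optset-\alpha\altopt)\cap\desirset\neq\emptyset$, whence, by the assumed mixingness of $\desirset$, also $(\optset-\alpha\altopt)\cap\desirset\neq\emptyset$. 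So there is some $\tilde\opt\in\optset$ with $\tilde\opt-\alpha\altopt\in\desirset$.

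The one new ingredient compared with the gamble-specific Lemma~\ref{lem:effectofgamblearch:lex} is that I need $\alpha\altopt\in\archopts$ in order to conclude from $\tilde\opt-\alpha\altopt\in\desirset$ that $\tilde\opt\in\arch\group{\desirset}$. This is the main (and only) place where the cone structure of $\archopts$ is essential: since $\altopt\in\archopts$ and $\alpha>0$, the assumption that $\archopts$ is a cone gives $\alpha\altopt\in\archopts$, so $\tilde\opt\in\arch\group{\desirset}$ as desired. Hence $\optset\cap\arch\group{\desirset}\neq\emptyset$, which completes the verification of~\ref{ax:desirs:mixing}.
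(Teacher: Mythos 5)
Your proof is correct and follows essentially the same route as the paper's: the paper likewise verifies~\ref{ax:desirs:mixing} via the identical rescaling $\opt-\altopt=\sum_{i=1}^n\lambda_i(\opt[i]-\alpha\altopt)$ with $\alpha\altopt\in\archopts$ by the cone property, and then delegates the coherence equivalence to Lemma~\ref{lem:effectofarch:coh}. The only cosmetic difference is that you split the statement into `if' and `only if' halves, whereas the paper proves the mixing axiom unconditionally and lets Lemma~\ref{lem:effectofarch:coh} carry both directions of the equivalence.
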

\begin{proof}
Consider any coherent mixing set of desirable options $\desirset\in\convcohdesirsets$. We will prove that $\arch\group{\desirset}$ satisfies~\ref{ax:desirs:mixing}. The result then follows from Lemma~\ref{lem:effectofarch:coh} and Definition~\ref{def:mixingdesirs}.

So consider any $\optset\in\optsets$ such that $\posi(\optset)\cap\arch\group{\desirset}\neq\emptyset$. Then there is at least one $\opt\in\posi(\optset)$ such that $\opt\in\arch\group{\desirset}$. Consider any such $\opt$. Since $\opt\in\arch\group{\desirset}$, there is some $\altopt\in\archopts$ such that $\opt-\altopt\in\desirset$. Consider any such $\altopt$. Since $\opt\in\posi(\optset)$, we know that $\opt=\sum_{i=1}^n\lambda_i\opt[i]$, with $n\in\naturals$ and, for all $i\in\set{1,\dots,n}$, $\lambda_i>0$ and $\opt[i]\in\optset$. Let $\lambda\coloneqq\sum_{i=1}^n\lambda_i>0$ and $\alpha\coloneqq\nicefrac{1}{\lambda}>0$. Then
\begin{equation*}
\opt-\altopt
=\sum_{i=1}^n\lambda_i\opt[i]-\alpha\Big(\sum_{i=1}^n\lambda_i\Big)\altopt
=\sum_{i=1}^n\lambda_i\Big(\opt[i]-\alpha\altopt\Big),
\end{equation*}
which implies that $\opt-\altopt\in\posi(\optset-\alpha\altopt)$. Since also $\opt-\altopt\in\desirset$, it follows that $\posi(\optset-\alpha\altopt)\cap\desirset\neq\emptyset$. Therefore, and because $\desirset$ is mixing, we find that $(\optset-\alpha\altopt)\cap\desirset\neq\emptyset$, meaning that there is some $\tilde\opt\in\optset$ such that $\tilde\opt-\alpha\altopt\in\desirset$. Since $\alpha\altopt\in\archopts$ because $\archopts$ is a cone, we conclude that $\tilde\opt\in\arch\group{\desirset}$, so $\optset\cap\arch\group{\desirset}\neq\emptyset$.
% So consider any $\opt,\altopt\in\arch\group{\desirset}^c$ and $\lambda,\mu\geq0$ such that $\lambda+\mu>0$. 
% We need to prove that $\lambda\opt+\mu\altopt\in\arch\group{\desirset}^c$. 
% Fix any $w\in\archopts$. 
% It then follows from our assumptions on $\archopts$ that $\smash{w'\coloneqq\frac{1}{\lambda+\mu}w\in\archopts\subseteq\posopts}$. 
% Since $\opt\in\arch\group{\desirset}^c$ and therefore $\opt\notin\arch\group{\desirset}$, it must be that $\opt\notin\desirset$ or $\opt-w'\notin\desirset$. 
% However, if $\opt\notin\desirset$, then also $\opt-w'\notin\desirset$ because if $\opt-w'$ would belong to $\desirset$, then the coherence of $\desirset$ and the fact that $w'\in\posopts$ would imply that $\opt=(\opt-w')+w'\in\desirset$, a contradiction. 
% Hence, in all cases, $\opt-w'\notin\desirset$ and therefore $\opt-w'\in\desirset^c$. 
% Using a completely analogous argument, since $\altopt\in\arch\group{\desirset}^c$, it must be that $\altopt-w'\in\desirset^c$. 
% Because $\desirset$ is lexicographic, this implies that $\lambda\opt+\mu\altopt-w=\lambda(\opt-w')+\mu(\altopt-w')\in\desirset^c$. 
% Since $w\in\archopts$ was arbitrary, we conclude that $\lambda\opt+\mu\altopt-w\notin\desirset$ for all $w\in\archopts$, which implies that $\lambda\opt+\mu\altopt\notin\arch\group{\desirset}$ and therefore, that $\lambda\opt+\mu\altopt\in\arch\group{\desirset}^c$.
\end{proof}

\begin{lemma}\label{lem:arch:transformrepresentation}
Consider any set $\tilde{\desirsets}\subseteq\cohdesirsets$ of coherent sets of desirable options such that $\rejectset\coloneqq\bigcap\cset{\rejectset[\desirset]}{\desirset\in\tilde{\desirsets}}$ is Archimedean. 
Then $\rejectset=\bigcap\cset{\rejectset[\arch\group{\desirset}]}{\desirset\in\tilde{\desirsets}}$.
\end{lemma}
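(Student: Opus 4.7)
The plan is to prove the two inclusions separately, mirroring the strategy used in the proof of Lemma~\ref{lem:gamblearch:transformrepresentation} but adapted to the abstract $\Arch(\cdot)$ operator.

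For the easy inclusion $\bigcap\cset{\rejectset[\arch\group{\desirset}]}{\desirset\in\tilde{\desirsets}}\subseteq\rejectset$, I would simply note that $\arch\group{\desirset}\subseteq\desirset$ for every $\desirset\in\tilde{\desirsets}$ (straight from the definition of $\arch(\cdot)$), which, by the definition~\eqref{eq:desirset:to:rejectset} of the operator $\rejectset[\cdot]$, gives $\rejectset[\arch\group{\desirset}]\subseteq\rejectset[\desirset]$. Intersecting over $\tilde{\desirsets}$ and using the assumption $\rejectset=\bigcap\cset{\rejectset[\desirset]}{\desirset\in\tilde{\desirsets}}$ yields the desired inclusion.

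For the reverse and more substantive inclusion, I would fix an arbitrary $\optset\in\rejectset$ and an arbitrary $\desirset\in\tilde{\desirsets}$, and show $\optset\in\rejectset[\arch\group{\desirset}]$. Since $\rejectset$ is Archimedean, $\Arch(\rejectset)=\rejectset$, so there exists a map $\overline{\altopt}\colon\optset\to\archopts$ such that
\[
\optset'\coloneqq\cset{\opt-\overline{\altopt}(\opt)}{\opt\in\optset}\in\rejectset\subseteq\rejectset[\desirset].
\]
By~\eqref{eq:desirset:to:rejectset}, this means $\optset'\cap\desirset\neq\emptyset$, so there is some $\opt\in\optset$ with $\opt-\overline{\altopt}(\opt)\in\desirset$. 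Since $\overline{\altopt}(\opt)\in\archopts\subseteq\posopts$, the coherence of $\desirset$ (Axioms~\ref{ax:desirs:pos} and~\ref{ax:desirs:cone}) gives $\overline{\altopt}(\opt)\in\desirset$ and hence $\opt=\group{\opt-\overline{\altopt}(\opt)}+\overline{\altopt}(\opt)\in\desirset$. Combining $\opt\in\desirset$ with $\opt-\overline{\altopt}(\opt)\in\desirset$ and $\overline{\altopt}(\opt)\in\archopts$, the definition of $\arch(\cdot)$ immediately yields $\opt\in\arch\group{\desirset}$. Thus $\optset\cap\arch\group{\desirset}\neq\emptyset$, i.e.\ $\optset\in\rejectset[\arch\group{\desirset}]$. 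Since $\desirset\in\tilde{\desirsets}$ was arbitrary, this shows $\optset\in\bigcap\cset{\rejectset[\arch\group{\desirset}]}{\desirset\in\tilde{\desirsets}}$, completing the proof.

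No step is especially difficult. The only point that requires care is to recognise that the single `$\epsilon$' that was available in the gamble setting of Lemma~\ref{lem:gamblearch:transformrepresentation} is now replaced by the witness map $\overline{\altopt}$ produced by $\Arch(\rejectset)=\rejectset$; the rest of the argument is identical in spirit, relying only on the fact that $\archopts\subseteq\posopts$ and on the cone structure of coherent sets of desirable options.
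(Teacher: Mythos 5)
Your proof is correct and follows essentially the same route as the paper's own: the easy inclusion via $\arch\group{\desirset}\subseteq\desirset$, and the converse by extracting, for each $\optset\in\rejectset$, the witness map into $\archopts$ from $\Arch\group{\rejectset}=\rejectset$, passing through $\rejectset\subseteq\rejectset[\desirset]$, and then using $\archopts\subseteq\posopts$ together with Axioms~\ref{ax:desirs:pos} and~\ref{ax:desirs:cone} to land the witness option in $\arch\group{\desirset}$. The paper's proof is word-for-word the same argument with the map written as a family $\altopt[\opt]\in\archopts$, $\opt\in\optset$, so there is nothing to add.
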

\begin{proof}
For any $\desirset\in\tilde{\desirsets}$, since $\arch\group{\desirset}\subseteq\desirset$, it follows that $\rejectset[\arch\group{\desirset}]\subseteq\rejectset[\desirset]$. 
Hence,
\begin{equation*}
\bigcap\cset{\rejectset[\arch\group{\desirset}]}{\desirset\in\tilde{\desirsets}}
\subseteq\bigcap\cset{\rejectset[\desirset]}{\desirset\in\tilde{\desirsets}}
=\rejectset.
\end{equation*}
It remains to prove the converse set inclusion. 
To this end, consider any $\desirset\in\tilde{\desirsets}$ and any $\optset\in\rejectset$. 
Then since $\rejectset$ is Archimedean, we know that there are $\altopt[\opt]\in\archopts$ for all $\opt\in\optset$ such that $\cset{\opt-\altopt[\opt]}{\opt\in\optset}\in\rejectset\subseteq\rejectset[\desirset]$. 
This implies that there is some $\opt'\in\optset$ such that $\opt'-\altopt[\opt']\in\desirset$. 
Since $\desirset$ is coherent and $\altopt[\opt']\in\archopts\subseteq\posopts$, this in turn implies that $\opt'\in\desirset$. 
Since $\altopt[\opt']\in\archopts$ and $\opt'-\altopt[\opt']\in\desirset$, it follows that $\opt'\in\arch\group{\desirset}$ and therefore, that $\optset\cap\arch\group{\desirset}\neq\emptyset$. 
Hence, $\optset\in\rejectset[\arch\group{\desirset}]$. 
Since this is true for every $\desirset\in\tilde{\desirsets}$ and any $\optset\in\rejectset$, we find that
\begin{equation*}
\rejectset
\subseteq
\bigcap\cset{\rejectset[\arch\group{\desirset}]}{\desirset\in\tilde{\desirsets}},
\end{equation*}
as desired.
\end{proof}

\begin{lemma}\label{lem:posifdominatedbyarch}
Consider a coherent set of desirable option sets $\rejectset\in\rejectsets$ and any set of desirable options $\desirset\in\desirsets$ such that $\rejectset\subseteq\rejectset[\desirset]$. 
Then $\posopts\subseteq\desirset$.
\end{lemma}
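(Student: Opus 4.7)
The plan here is essentially a one-line argument that chains together the definitions. Fix an arbitrary $\opt\in\posopts$; the goal is to show $\opt\in\desirset$.

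First I would invoke the coherence of $\rejectset$: Axiom~\ref{ax:rejects:pos} applied to $\opt\in\posopts$ yields $\set{\opt}\in\rejectset$. Since by hypothesis $\rejectset\subseteq\rejectset[\desirset]$, this immediately gives $\set{\opt}\in\rejectset[\desirset]$. Now I would unwind the definition of $\rejectset[\desirset]$ in Equation~\eqref{eq:desirset:to:rejectset}, which says that $\set{\opt}\in\rejectset[\desirset]$ iff $\set{\opt}\cap\desirset\neq\emptyset$. The only way this intersection can be non-empty is if $\opt\in\desirset$.

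Since $\opt$ was arbitrary in $\posopts$, this proves $\posopts\subseteq\desirset$. There is no real obstacle in this proof; it is a direct unpacking of the definitions and a single appeal to Axiom~\ref{ax:rejects:pos}. Notably, the argument does not require any assumption about $\desirset$ beyond $\desirset\in\desirsets$ (coherence of $\desirset$ is not needed), and it does not use Axioms~\ref{ax:rejects:removezero}, \ref{ax:rejects:nozero}, \ref{ax:rejects:cone}, or \ref{ax:rejects:mono} for $\rejectset$.
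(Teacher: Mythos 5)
Your proof is correct and is essentially identical to the paper's: both apply Axiom~\ref{ax:rejects:pos} to get $\set{\opt}\in\rejectset$, pass through the inclusion $\rejectset\subseteq\rejectset[\desirset]$, and unwind Equation~\eqref{eq:desirset:to:rejectset} to conclude $\opt\in\desirset$. Your observation that no coherence of $\desirset$ is needed is also consistent with the paper, which states the lemma for arbitrary $\desirset\in\desirsets$.
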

\begin{proof}
For all $\opt\in\posopts$, the coherence of $\rejectset$ implies that $\set{\opt}\in\rejectset$, which, since $\rejectset\subseteq\rejectset[\desirset]$, implies that $\set{\opt}\cap\desirset\neq\emptyset$. 
Hence, $\opt\in\desirset$.
\end{proof}

\begin{lemma}\label{lem:rejectsets:representation:arch:abstract}
Let $\cohdesirsets'\subseteq\cohdesirsets$ be any set of coherent sets of desirable options such that, for all $\desirset\in\cohdesirsets'$, $\arch\group{\desirset}\in\cohdesirsets'\cap\archcohdesirsets$ if and only if $\posopts\subseteq\arch\group{\desirset}$, and consider any Archimedean coherent set of desirable option sets $\rejectset$ such that $\tilde{\desirsets}\coloneqq\cset{\desirset\in\cohdesirsets'}{\rejectset\subseteq\rejectset[\desirset]}\neq\emptyset$ and $\rejectset=\bigcap\cset{\rejectset[\desirset]}{\desirset\in\tilde{\desirsets}}$. 
Then $\tilde{\desirsets}_{\mathrm{A}}\coloneqq\cset{\desirset\in\cohdesirsets'\cap\archcohdesirsets}{\rejectset\subseteq\rejectset[\desirset]}\neq\emptyset$ and $\rejectset=\bigcap\cset{\rejectset[\desirset]}{\desirset\in\tilde{\desirsets}_{\mathrm{A}}}$
\end{lemma}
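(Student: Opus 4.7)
The plan is to apply the $\arch(\cdot)$ operator to each element of $\tilde{\desirsets}$ and show that the resulting Archimedean sets of desirable options still lie in $\cohdesirsets'$ and still dominate $\rejectset$, so that they witness membership in $\tilde{\desirsets}_{\mathrm{A}}$. The skeleton is straightforward because the four lemmas immediately preceding the statement do almost all of the work; the only task is to glue them together and check the hypotheses correctly.

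First I would fix an arbitrary $\desirset\in\tilde{\desirsets}$ (which exists by assumption). Since $\rejectset$ is Archimedean and $\rejectset=\bigcap\cset{\rejectset[\altopt]}{\altopt\in\tilde{\desirsets}}$ by hypothesis, Lemma~\ref{lem:arch:transformrepresentation} yields $\rejectset=\bigcap\cset{\rejectset[\arch\group{\altopt}]}{\altopt\in\tilde{\desirsets}}$, and hence in particular $\rejectset\subseteq\rejectset[\arch\group{\desirset}]$. Applying Lemma~\ref{lem:posifdominatedbyarch} to the set of desirable options $\arch\group{\desirset}\in\desirsets$ then gives $\posopts\subseteq\arch\group{\desirset}$. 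The standing hypothesis on $\cohdesirsets'$ now yields $\arch\group{\desirset}\in\cohdesirsets'\cap\archcohdesirsets$. Combined with $\rejectset\subseteq\rejectset[\arch\group{\desirset}]$, this places $\arch\group{\desirset}$ in $\tilde{\desirsets}_{\mathrm{A}}$, so $\tilde{\desirsets}_{\mathrm{A}}\neq\emptyset$.

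For the representation equality, I would argue by double inclusion. On the one hand, every $\desirset'\in\tilde{\desirsets}_{\mathrm{A}}$ satisfies $\rejectset\subseteq\rejectset[\desirset']$ by definition of $\tilde{\desirsets}_{\mathrm{A}}$, so $\rejectset\subseteq\bigcap\cset{\rejectset[\desirset']}{\desirset'\in\tilde{\desirsets}_{\mathrm{A}}}$. On the other hand, the previous paragraph shows that $\cset{\arch\group{\desirset}}{\desirset\in\tilde{\desirsets}}\subseteq\tilde{\desirsets}_{\mathrm{A}}$, so
\begin{equation*}
\bigcap\cset{\rejectset[\desirset']}{\desirset'\in\tilde{\desirsets}_{\mathrm{A}}}
\subseteq
\bigcap\cset{\rejectset[\arch\group{\desirset}]}{\desirset\in\tilde{\desirsets}}
=\rejectset,
\end{equation*}
where the final equality is the conclusion of Lemma~\ref{lem:arch:transformrepresentation} recalled above. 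This closes the proof.

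There is no real obstacle; the delicate point is merely to invoke Lemma~\ref{lem:posifdominatedbyarch} with $\arch\group{\desirset}$ in the role of the ambient set of desirable options (this is legal because $\arch\group{\desirset}$ is always an element of $\desirsets$, regardless of coherence) in order to \emph{earn} the hypothesis $\posopts\subseteq\arch\group{\desirset}$ needed to trigger the biconditional in the statement.
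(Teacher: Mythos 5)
Your proof is correct and follows essentially the same route as the paper's: both transform each $\desirset\in\tilde{\desirsets}$ by $\arch\group{\cdot}$, use Lemma~\ref{lem:arch:transformrepresentation} to keep the representation intact, earn $\posopts\subseteq\arch\group{\desirset}$ via Lemma~\ref{lem:posifdominatedbyarch} applied to the (not necessarily coherent) set of desirable options $\arch\group{\desirset}$, and then trigger the biconditional hypothesis on $\cohdesirsets'$ to land in $\tilde{\desirsets}_{\mathrm{A}}$. The only difference is a harmless streamlining: the paper proves the full identity $\cset{\arch\group{\desirset}}{\desirset\in\tilde{\desirsets}}=\tilde{\desirsets}_{\mathrm{A}}$, whereas you prove just the inclusion $\cset{\arch\group{\desirset}}{\desirset\in\tilde{\desirsets}}\subseteq\tilde{\desirsets}_{\mathrm{A}}$ and close the representation equality by the trivial containment $\rejectset\subseteq\rejectset[\desirset']$ for $\desirset'\in\tilde{\desirsets}_{\mathrm{A}}$, which suffices for the statement as given.
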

\begin{proof}
Let $\desirsets'\coloneqq\cset{\arch\group{\desirset}}{\desirset\in\tilde{\desirsets}}$. 
Then $\desirsets'\neq\emptyset$ because $\tilde{\desirsets}\neq\emptyset$. 
Furthermore, since $K$ is Archimedean and $\rejectset=\bigcap\cset{\rejectset[\desirset]}{\desirset\in\tilde{\desirsets}}$, it follows from Lemma~\ref{lem:arch:transformrepresentation} that
\begin{equation*}
\rejectset=\bigcap\cset{\rejectset[\arch\group{\desirset}]}{\desirset\in\tilde{\desirsets}}=\bigcap\cset{\rejectset[\desirset']}{\desirset'\in\desirsets'}.
\end{equation*} 
It therefore remains to prove that $\desirsets'=\tilde{\desirsets}_{\mathrm{A}}$.

First consider any $\desirset'\in\desirsets'$. 
Then $\desirset'=\arch\group{\desirset}$ for some $\desirset\in\tilde{\desirsets}\subseteq\cohdesirsets'$ and it therefore follows from our assumption on $\cohdesirsets'$ that $\desirset'\in\cohdesirsets'\cap\archcohdesirsets$ if and only if $\posopts\subseteq\desirset'$.
Furthermore, since $\smash{\rejectset=\bigcap\cset{\rejectset[\desirset']}{\desirset'\in\desirsets'}}$, we also know that $\rejectset\subseteq\rejectset[\desirset']$ and therefore, because of Lemma~\ref{lem:posifdominatedbyarch}, that $\posopts\subseteq\desirset'$. 
We conclude that $\smash{\desirset'\in\cohdesirsets'\cap\archcohdesirsets}$ and $\rejectset\subseteq\rejectset[\desirset']$. 
Hence, $\desirset'\in\tilde{\desirsets}_{\mathrm{A}}$.

Now consider any $\desirset\in\tilde{\desirsets}_{\mathrm{A}}$. 
Then $\desirset$ is Archimedean and belongs to $\tilde{\desirsets}$. 
The former implies that $\arch\group{\desirset}=\desirset$, while the latter implies that $\arch\group{\desirset}\in\desirsets'$. 
Hence, $\desirset\in\desirsets'$.
\end{proof}

\begin{proof}[Proof of Theorem~\ref{theo:rejectsets:representation:arch}]
Let $\cohdesirsets'\coloneqq\cohdesirsets$. 
Then for all $\desirset\in\cohdesirsets'$, it follows from Lemma~\ref{lem:effectofarch:arch} and~\ref{lem:effectofarch:coh} that $\arch\group{\desirset}\in\cohdesirsets'\cap\archcohdesirsets$ if and only if $\posopts\subseteq\arch\group{\desirset}$. 
Furthermore, if we let $\tilde{\desirsets}\coloneqq\cset{\desirset\in\cohdesirsets'}{\rejectset\subseteq\rejectset[\desirset]}$, then because $\rejectset$ is coherent, we know from Theorem~\ref{theo:rejectsets:representation} that $\tilde{\desirsets}\neq\emptyset$ and $\rejectset=\bigcap\cset{\rejectset[\desirset]}{\desirset\in\tilde{\desirsets}}$. 
Since $\rejectset$ is Archimedean and $\cohdesirsets'\cap\archcohdesirsets=\archcohdesirsets$, the result now follows from Lemma~\ref{lem:rejectsets:representation:arch:abstract}.
\end{proof}

\begin{proof}[Proof of Theorem~\ref{theo:convex:rejectsets:representation:arch}]
Let $\cohdesirsets'\coloneqq\convcohdesirsets$. 
Then for all $\desirset\in\cohdesirsets'$, it follows from Lemma~\ref{lem:effectofarch:arch} and~\ref{lem:effectofarch:lex} that $\arch\group{\desirset}\in\cohdesirsets'\cap\archcohdesirsets$ if and only if $\posopts\subseteq\arch\group{\desirset}$. 
Furthermore, if we let $\tilde{\desirsets}\coloneqq\cset{\desirset\in\cohdesirsets'}{\rejectset\subseteq\rejectset[\desirset]}$, then because $\rejectset$ is mixing and coherent, we know from Theorem~\ref{theo:convex:rejectsets:representation} that $\tilde{\desirsets}\neq\emptyset$ and $\rejectset=\bigcap\cset{\rejectset[\desirset]}{\desirset\in\tilde{\desirsets}}$. 
Since $\rejectset$ is Archimedean and $\cohdesirsets'\cap\archcohdesirsets=\archconvcohdesirsets$, the result now follows from Lemma~\ref{lem:rejectsets:representation:arch:abstract}.
\end{proof}

\newpage

\section{Stuff on maximality that we don't need for now}

The final two concepts that we require are maximal sets of desirable options and maximal sets of desirable option sets, which are the maximal elements of $\cohdesirsets$ and $\cohrejectsets$, respectively, when these sets are partially ordered by set inclusion. 
We end by introducing these notions in more detail.

First, for any two sets of desirable options $\maxdesirset,\desirset\in\desirsets$, we say that $\maxdesirset$ is \emph{dominated} by $\desirset$ if $\maxdesirset\subseteq\desirset$, and we say that $\maxdesirset$ is \emph{strictly dominated} by $\desirset$ if $\maxdesirset\subset\desirset$. 
We then call a coherent set of desirable options $\maxdesirset$ \emph{maximal} if it is not strictly dominated by any other coherent set of desirable options, and we collect all maximal coherent sets of desirable options in the set $\maxdesirsets\subseteq\cohdesirsets$: for any $\maxdesirset\in\cohdesirsets$,
\begin{equation*}
\maxdesirset\in\maxdesirsets
\ifandonlyif
\group{\forall\desirset\in\cohdesirsets}
\group{\maxdesirset\subseteq\desirset\then\maxdesirset=\desirset}.
\end{equation*}

For any two sets of desirable option sets $\maxrejectset,\rejectset\in\rejectsets$, we say that $\maxrejectset$ is \emph{dominated} by $\rejectset$ if $\maxrejectset\subseteq\rejectset$, and we say that $\maxrejectset$ is \emph{strictly dominated} by $\rejectset$ if $\maxrejectset\subset\rejectset$. 
We call a coherent set of desirable option sets $\maxrejectset$ \emph{maximal}, if it is not strictly dominated by any other coherent set of desirable option sets, and we collect all maximal coherent sets of desirable option sets in the set $\maxrejectsets\subseteq\cohrejectsets$: for any $\maxrejectset\in\cohrejectsets$,
\begin{equation*}
\maxrejectset\in\maxrejectsets
\ifandonlyif
\group{\forall\rejectset\in\cohrejectsets}
\group{\maxrejectset\subseteq\rejectset\then\maxrejectset=\rejectset}.
\vspace{4pt}
\end{equation*}
%Any coherent set of desirable option sets $\rejectset$ in $\cohrejectsets\setminus\maxrejectsets$ is \emph{strictly dominated}: there is some coherent set of desirable option sets $\rejectset^*\in\cohrejectsets$ such that $\rejectset\subset\rejectset^*$.

\begin{proposition}\label{prop:maximal:is:lexicographic}
If a coherent set of desirable options is maximal, then it is also mixing: $\maxdesirsets\subseteq\convcohdesirsets$.
\end{proposition}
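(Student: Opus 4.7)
The plan is to prove this as a direct corollary of two results already established in the paper. The first ingredient is the known characterisation of maximality in the set~$\cohdesirsets$ of coherent sets of desirable options: as recalled in Section~\ref{sec:totality} (citing~\cite{couso2011,cooman2010}), the maximal elements of~$\cohdesirsets$ are precisely the total ones, so $\maxdesirsets=\totcohdesirsets$. The second ingredient is Proposition~\ref{prop:mixingiftotal}, which tells us that $\totcohdesirsets\subseteq\convcohdesirsets$.

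So my proof would simply chain these two facts: fix any $\maxdesirset\in\maxdesirsets$, invoke the equivalence between maximality and totality to conclude $\maxdesirset\in\totcohdesirsets$, and then invoke Proposition~\ref{prop:mixingiftotal} to conclude $\maxdesirset\in\convcohdesirsets$. Since $\maxdesirset$ was arbitrary, this gives $\maxdesirsets\subseteq\convcohdesirsets$.

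There is no real obstacle here, as both ingredients are already on the table. The only subtle point worth double-checking in the write-up is that the notion of maximality being used (maximal elements of $\cohdesirsets$ under set inclusion) matches exactly the notion used in~\cite{couso2011,cooman2010}, where the equivalence with totality is established---but this is precisely the identification already made in the text following Definition~\ref{def:totaldesirs}, so it can simply be cited. If one wanted a fully self-contained argument instead of relying on the cited equivalence, one could alternatively reason as follows: given a non-total coherent~$\desirset$, there is some $\opt\in\opts\setminus\set{0}$ with $\opt\notin\desirset$ and $-\opt\notin\desirset$, and one then extends $\desirset$ to the strictly larger coherent set $\desirset\cup\cset{\lambda\opt+\altopt}{\lambda>0,\altopt\in\desirset\cup\set{0}}$, which shows non-totality implies non-maximality; but this detour is unnecessary given Proposition~\ref{prop:mixingiftotal} and the cited equivalence.
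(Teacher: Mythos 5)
Your proof is correct, and it is in effect the argument the paper intends but never writes down: the paper's own proof of Proposition~\ref{prop:maximal:is:lexicographic} is left as a placeholder (``to be added''), while the main text derives the companion statement, Proposition~\ref{prop:mixingiftotal}, by invoking exactly the equivalence you rely on, namely that the total sets of desirable options are precisely the maximal coherent ones \cite{couso2011,cooman2010}. Your chain $\maxdesirsets=\totcohdesirsets\subseteq\convcohdesirsets$ is non-circular, because the appendix proof of Proposition~\ref{prop:mixingiftotal} argues directly from Axiom~\ref{ax:desirs:totality} together with~\ref{ax:desirs:nozero} and~\ref{ax:desirs:cone}, and nowhere uses maximality; and the notion of maximality you assume does match the paper's: $\maxdesirsets$ is defined as the set of maximal elements of $\cohdesirsets$ under inclusion, which is exactly the identification made after Definition~\ref{def:totaldesirs}. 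One refinement: for this proposition you only need the implication maximal~$\then$~total, which is precisely the direction your self-contained fallback delivers. That fallback is sound: given a non-total coherent $\desirset$ and $\opt\in\opts\setminus\set{0}$ with $\opt\notin\desirset$ and $-\opt\notin\desirset$, the set $\desirset'\coloneqq\desirset\cup\cset{\lambda\opt+\altopt}{\lambda>0,\altopt\in\desirset\cup\set{0}}$ contains $\opt$, includes $\posopts$, is closed under positive linear combinations, and satisfies $0\notin\desirset'$ — since $0=\lambda\opt+\altopt$ with $\lambda>0$ would force either $\opt=0$ (if $\altopt=0$) or $-\opt=\nicefrac{1}{\lambda}\,\altopt\in\desirset$ (if $\altopt\in\desirset$, by~\ref{ax:desirs:cone}), both contradictions — so $\desirset'$ is a coherent strict superset of $\desirset$, and non-totality implies non-maximality. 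Including that half of the equivalence would make the proof entirely self-contained; as written, citing the equivalence and Proposition~\ref{prop:mixingiftotal} is a perfectly adequate proof and fills the gap the paper left open.
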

\begin{proof}[Proof of Proposition~\ref{prop:maximal:is:lexicographic}]
*** to be added ***
\end{proof}

\begin{theorem}\label{theo:maximal:coherent:rejectsets}
$\maxrejectsets=\cset{\rejectset[\maxdesirset]}{\maxdesirset\in\maxdesirsets}$.
\end{theorem}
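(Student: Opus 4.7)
The plan is to prove the two set inclusions separately, using Propositions~\ref{prop:binary:characterisation}, \ref{prop:coherence:for:binary} and~\ref{prop:dominatebybinary} as the main workhorses together with the observation that $\rejectset[\desirset]\subseteq\rejectset[\desirset']$ implies $\desirset\subseteq\desirset'$ (which follows immediately from Equation~\eqref{eq:desirset:to:rejectset} applied to singletons: if $\opt\in\desirset$ then $\set{\opt}\in\rejectset[\desirset]\subseteq\rejectset[\desirset']$, whence $\opt\in\desirset'$).

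For the inclusion $\cset{\rejectset[\maxdesirset]}{\maxdesirset\in\maxdesirsets}\subseteq\maxrejectsets$, I would fix any $\maxdesirset\in\maxdesirsets$ and show $\rejectset[\maxdesirset]$ is maximal in $\cohrejectsets$. By Proposition~\ref{prop:coherence:for:binary}, $\rejectset[\maxdesirset]$ is coherent. Suppose $\rejectset[\maxdesirset]\subseteq\rejectset$ for some $\rejectset\in\cohrejectsets$. Proposition~\ref{prop:dominatebybinary} provides a binary coherent $\rejectset[\desirset]\supseteq\rejectset$ with $\desirset\in\cohdesirsets$, so $\rejectset[\maxdesirset]\subseteq\rejectset[\desirset]$, which by the observation above gives $\maxdesirset\subseteq\desirset$. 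Maximality of $\maxdesirset$ then forces $\maxdesirset=\desirset$, and hence $\rejectset[\maxdesirset]=\rejectset[\desirset]$. Sandwiched between equal sets, $\rejectset=\rejectset[\maxdesirset]$.

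For the converse inclusion $\maxrejectsets\subseteq\cset{\rejectset[\maxdesirset]}{\maxdesirset\in\maxdesirsets}$, I would fix any $\maxrejectset\in\maxrejectsets$. Again by Proposition~\ref{prop:dominatebybinary}, there is a binary coherent $\rejectset[\desirset]\supseteq\maxrejectset$; maximality forces $\maxrejectset=\rejectset[\desirset]$, so $\maxrejectset$ is binary. It then remains to check that this representing $\desirset$ (which, by Proposition~\ref{prop:binary:characterisation} and Proposition~\ref{prop:coherence:for:binary}, is unique, equal to $\desirset[\maxrejectset]$, and coherent) is maximal in $\cohdesirsets$. Given any $\desirset'\in\cohdesirsets$ with $\desirset\subseteq\desirset'$, one has $\rejectset[\desirset]\subseteq\rejectset[\desirset']$ and $\rejectset[\desirset']\in\cohrejectsets$ by Proposition~\ref{prop:fromCohDtoCohK}; the maximality of $\maxrejectset=\rejectset[\desirset]$ gives $\rejectset[\desirset]=\rejectset[\desirset']$, and uniqueness in Proposition~\ref{prop:binary:characterisation} yields $\desirset=\desirset'$.

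I do not anticipate significant obstacles here: both directions reduce, via the binary-domination result (Proposition~\ref{prop:dominatebybinary}), to the one-to-one order-preserving correspondence between coherent sets of desirable options and binary coherent sets of desirable option sets. The only mildly delicate point is remembering to invoke the uniqueness clause in Proposition~\ref{prop:binary:characterisation} when converting set-equality of binary $\rejectset[\cdot]$ back into set-equality of the underlying $\desirset$, and checking that the representing $\desirset'$ in the second direction is coherent so that it genuinely competes with $\desirset$ in $\cohdesirsets$; both are handled by the propositions already in the paper.
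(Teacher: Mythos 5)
Your proof is correct and follows essentially the same route as the paper's own (incomplete) proof: reduce maximality of sets of desirable option sets to binarity via Proposition~\ref{prop:dominatebybinary}, and then exploit the order-embedding $\desirset[1]\subseteq\desirset[2]\ifandonlyif\rejectset[{\desirset[1]}]\subseteq\rejectset[{\desirset[2]}]$ between coherent sets of desirable options and binary coherent sets of desirable option sets. In fact, the paper's proof breaks off at exactly this point with a ``to be finished'' marker, and your two inclusion arguments---together with the uniqueness clause of Proposition~\ref{prop:binary:characterisation} and the coherence transfer provided by Propositions~\ref{prop:coherence:for:binary} and~\ref{prop:fromCohDtoCohK}---supply precisely the missing details.
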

\begin{proof}
It follows from Proposition~\ref{prop:dominatebybinary} that every non-binary set of desirable option sets is strictly dominated by a binary one. 
This implies that  maximal sets of desirable option sets must be binary: $\maxrejectsets\subseteq\cset{\rejectset[\desirset]}{\desirset\in\cohdesirsets}$.
Since, moreover, we can easily infer from Equation~\eqref{eq:desirset:to:rejectset} that $\desirset[1]\subseteq\desirset[2]\ifandonlyif\rejectset[{\desirset[1]}]\subseteq\rejectset[{\desirset[2]}]$ for all $\desirset[1],\desirset[2]\in\cohdesirsets$, \textcolor{red}{*** to be finished ***}
\end{proof}

\begin{theorem}\label{theo:rejectsets:maximality}
Any coherent set of desirable option sets $\rejectset\in\cohrejectsets$ is dominated by some maximal coherent set of desirable option sets: $\cset{\maxrejectset\in\maxrejectsets}{\rejectset\subseteq\maxrejectset}\neq\emptyset$.
\end{theorem}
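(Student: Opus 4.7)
The plan is to obtain the maximal element as a direct consequence of Lemma~\ref{lem:Zorncoherence}, which is the Zorn-type result that has already been established in the appendix. Specifically, I would apply that lemma in the trivial case where the forbidden set $\rejectset^*$ is taken to be $\emptyset$.

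First I would observe that if we set $\rejectset^*\coloneqq\emptyset$, then the condition $\rejectset\cap\rejectset^*=\emptyset$ is vacuously satisfied, and the partially ordered set introduced in Lemma~\ref{lem:Zorncoherence} reduces to $\upset{\rejectset}=\cset{\rejectset'\in\cohrejectsets}{\rejectset\subseteq\rejectset'}$. That lemma then guarantees the existence of a maximal element $\maxrejectset\in\upset{\rejectset}$, which in particular satisfies $\maxrejectset\in\cohrejectsets$ and $\rejectset\subseteq\maxrejectset$.

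Next I would need to upgrade the maximality of $\maxrejectset$ within $\upset{\rejectset}$ to maximality within the whole of $\cohrejectsets$, since the statement of the theorem concerns membership in $\maxrejectsets$. So suppose \emph{ex absurdo} that some $\rejectset''\in\cohrejectsets$ strictly dominates $\maxrejectset$, so $\maxrejectset\subset\rejectset''$. Since $\rejectset\subseteq\maxrejectset\subset\rejectset''$ and $\rejectset''$ is coherent, we have $\rejectset''\in\upset{\rejectset}$, contradicting the maximality of $\maxrejectset$ in $\upset{\rejectset}$. Hence $\maxrejectset\in\maxrejectsets$ and, since $\rejectset\subseteq\maxrejectset$, the set $\cset{\maxrejectset\in\maxrejectsets}{\rejectset\subseteq\maxrejectset}$ is non-empty, as required.

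There is no real obstacle here: all the heavy lifting (the Zorn's Lemma argument itself and the stability of coherence under chain unions via Lemma~\ref{lem:chaincoherence}) has already been carried out for Lemma~\ref{lem:Zorncoherence}. The proof is essentially a specialisation of that lemma together with a short one-line upgrade from maximality-within-the-upset to maximality-in-$\cohrejectsets$, so I expect the write-up to fit in just a few sentences.
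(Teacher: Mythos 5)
Your proposal is correct and coincides with the paper's own proof: the paper likewise invokes Lemma~\ref{lem:Zorncoherence} with $\rejectset^*=\emptyset$ to obtain a maximal element $\maxrejectset$ of $\cset{\rejectset'\in\cohrejectsets}{\rejectset\subseteq\rejectset'}$, and then upgrades this to $\maxrejectset\in\maxrejectsets$ by exactly your \emph{ex absurdo} argument, noting that any coherent strict superset of $\maxrejectset$ would itself lie in that upset. Nothing is missing.
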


\begin{proof}[Proof of Theorem~\ref{theo:rejectsets:maximality}]
By applying Lemma~\ref{lem:Zorncoherence} for $\rejectset^*=\emptyset$, we find that the partially ordered set $\cset{\rejectset'\in\cohrejectsets}{\rejectset\subseteq\rejectset'}$ has a maximal element. 
Let $\maxrejectset$ be any such maximal element. 
Then $\rejectset\subseteq\maxrejectset$. 
Assume \emph{ex absurdo} that $\maxrejectset\notin\maxrejectsets$. 
This means that there is some $\rejectset^*\in\cohrejectsets$ such that $\maxrejectset\subset\rejectset^*$. 
But then also $\rejectset\subseteq\maxrejectset\subset\rejectset^*$, which implies that $\rejectset^*\in\cset{\rejectset'\in\cohrejectsets}{\rejectset\subseteq\rejectset'}$. 
Since $\maxrejectset\subset\rejectset^*$, this contradicts the fact that $\maxrejectset$ is a maximal element of $\cset{\rejectset'\in\cohrejectsets}{\rejectset\subseteq\rejectset'}$. 
So it must be that $\maxrejectset\in\maxrejectsets$.
\end{proof}

\newpage
\section{Axioms we no longer explicitly mention}

*** explain that this can be motivated by imposing the following desirability principle ***

\begin{enumerate}[label=$\mathrm{d}_{\mathrm{T}}$.,ref=$\mathrm{d}_{\mathrm{T}}$,start=1,leftmargin=*]
\item\label{ax:desirability:totality} 
for all $\opt\in\opts\setminus\set{0}$, either $\opt$ or $-\opt$ is desirable.
\end{enumerate}

*** which in turn can be motivated by imposing the following property on $\prefgt$

\begin{enumerate}[label=$\protect{\prefgt[\mathrm{T}]}$.,ref=$\protect{\prefgt[\mathrm{T}]}$,start=0,leftmargin=*]
\item\label{ax:prefgt:totality}
for all $\opt,\altopt\in\opts$ such that $\opt\neq\altopt$, either $\opt\prefgt\altopt$ or $\opt\preflt\altopt$.
\end{enumerate}

*** explain that the mixing property can be motivated by imposing the following desirability principle ***

\begin{enumerate}[label=$\mathrm{d}_{\mathrm{M}}$.,ref=$\mathrm{d}_{\mathrm{M}}$,start=1,leftmargin=*]
\item\label{ax:desirability:mixing} 
for all $\optset\in\optsets$, if $\posi(\optset)$ contains a desirable option, then so does $\optset$.
\end{enumerate}
Here too, the $\posi$ operator can be replaced by the convex hull operator to yield an alternative principle $\mathrm{d}'_{\mathrm{M}}$. 
The proof is quite trivial in this case: it suffices to use~\ref{ax:desirability:cone} to rescale the desirable option in $\posi(\optset)$ to yield a desirable option in $\chull(\optset)$.

If desirability is interpreted in terms of $\prefgt$, \ref{ax:desirability:mixing} corresponds to the following property:

\begin{enumerate}[label=$\protect{\prefgt[{\mathrm{M}}]}$.,ref=$\protect{\prefgt[{\mathrm{M}}]}$,start=0,leftmargin=*]
\item\label{ax:prefgt:mixing}
for all $\optset\in\optsets$, if $\posi(\optset)$ contains an option $\opt$ such that $\opt\prefgt0$, then so does $\optset$.
\end{enumerate}
Once more, the $\posi$ operator can be replaced by the convex hull operator. 
In this case, this has the added advantage that the role of $0$ can be taken over by an arbitrary option $\altopt$ because $\chull(\optset)-\altopt=\chull(\optset-\altopt)$. 
This is not the case in our version with the $\posi$ operator, because $\posi(\optset)-\altopt$ may not be equal to $\posi(\optset-\altopt)$.

% \begin{proposition}\label{prop:mixingposiequalsconv:prefgt}
% Let $\prefgt$ be any preference ordering on $\opts$ that satisfies \ref{ax:prefgt:multiplication}. Then $\prefgt$ satisfies \ref{ax:prefgt:mixing} if and only if for all $\optset\in\optsets$ and $\altopt\in\opts$:
% \begin{enumerate}[label=$\prefgt'_{\mathrm{M}}$.,ref=$\prefgt'_{\mathrm{M}}$,leftmargin=*]
% \item if $\chull(\optset)$ contains an option $\opt$ such that $\opt\prefgt\altopt$, then so does $\optset$.\label{ax:prefgt:removeconvexcombinations}
% \end{enumerate}
% \end{proposition}
% \begin{proof}[Proof of Proposition~\ref{prop:mixingposiequalsconv}]
% *** still to be added ***
% \end{proof}

*** The mixing property can also be imposed on sets of desirable options, by requiring them to adhere to~\ref{ax:desirability:mixing}. This yields the following definition ***

\newpage

\section{Natural extension}\label{sec:natex}
In many practical situations, a subject will typically not specify a full-fledged coherent set of desirable option sets, but will only provide some partial \emph{assessment} $\assessment\subseteq\optsets$, consisting of a number of option sets for which she is comfortable about assessing that they contain at least one desirable option.
We now want to extend this assessment~$\assessment$ to a coherent set of desirable option sets in a manner that is as conservative---or uninformative---as possible.
This is the essence of \emph{conservative inference}.

We say that a set of desirable option sets $\rejectset[1]$ is less informative than (or rather, at most as informative as) a set of desirable option sets $\rejectset[2]$, when \mbox{$\rejectset[1]\subseteq\rejectset[2]$}: a subject whose beliefs are represented by $\rejectset[2]$ has more (or rather, at least as many) desirable option sets---sets of options that definitely contain a desirable option---than a subject with beliefs represented by $\rejectset[1]$.
The resulting partially ordered set $\structure{\rejectsets,\subseteq}$ is a complete lattice with intersection as infimum and union as supremum.
The following theorem, whose proof is trivial, identifies an interesting substructure.

\begin{theorem}\label{theo:conservative:inference:for:rejectsets}
Let $\set{\rejectset[i]}_{i\in I}$ be an arbitrary non-empty family of sets of desirable option sets, with intersection $\rejectset\coloneqq\bigcap_{i\in I}\rejectset[i]$.
If $\rejectset[i]$ is coherent for all $i\in I$, then so is $\rejectset$.
This implies that $\structure{\cohrejectsets,\subseteq}$ is a complete meet-semilattice. 
\end{theorem}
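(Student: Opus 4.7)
The plan is to verify that each of the five coherence axioms $\mathrm{K}_0$--$\mathrm{K}_4$ is preserved under arbitrary non-empty intersections, and then read off the meet-semilattice claim as an immediate consequence. Since the axioms are stated as universally quantified implications over option sets, with the conclusion being either a membership or a non-membership in $\rejectset[i]$, each one essentially lifts to the intersection for free.

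Concretely, I would fix the intersection $\rejectset\coloneqq\bigcap_{i\in I}\rejectset[i]$ and proceed axiom by axiom. For~\ref{ax:rejects:removezero}, \ref{ax:rejects:cone} and~\ref{ax:rejects:mono}, which have the schematic form ``$\optset[1],\ldots\in\rejectset\Rightarrow\optset\in\rejectset$'', I would observe that the hypothesis entails $\optset[1],\ldots\in\rejectset[i]$ for every $i\in I$, apply the corresponding axiom to each coherent $\rejectset[i]$ to obtain $\optset\in\rejectset[i]$ for every $i\in I$, and then conclude $\optset\in\rejectset$ by definition of the intersection. For~\ref{ax:rejects:pos}, the conclusion $\set{\opt}\in\rejectset[i]$ for all $\opt\in\posopts$ holds in every $\rejectset[i]$ by coherence and therefore in their intersection; here the assumption $I\neq\emptyset$ plays no role. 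Finally, for~\ref{ax:rejects:nozero}, I would argue by contradiction: if $\set{0}\in\rejectset$ then, picking any $i\in I$ (which exists because $I\neq\emptyset$), we would have $\set{0}\in\rejectset[i]$, contradicting the coherence of~$\rejectset[i]$.

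Once coherence is shown to be preserved, the second sentence of the theorem follows at once: the partially ordered set $\structure{\cohrejectsets,\subseteq}$ is closed under arbitrary non-empty intersections, and the infimum of any non-empty family $\set{\rejectset[i]}_{i\in I}$ is precisely $\bigcap_{i\in I}\rejectset[i]$, which we have just shown lies in $\cohrejectsets$. This makes $\structure{\cohrejectsets,\subseteq}$ a complete meet-semilattice. As the authors remark, no single step here is more than a one-line unfolding of definitions, so there is no real obstacle; the only mild subtlety is remembering that non-emptiness of $I$ is needed for~\ref{ax:rejects:nozero} (and, indeed, to guarantee that the intersection is not the whole of $\optsets$, which would trivially fail~\ref{ax:rejects:nozero}).
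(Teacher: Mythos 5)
Your proof is correct and coincides with the paper's own (the paper simply declares the proof trivial, and elsewhere—in the proof of Theorem~\ref{theo:coherentrepresentation:twosided}—invokes exactly your observation that Axioms~\ref{ax:rejects:removezero}--\ref{ax:rejects:mono} are trivially preserved under arbitrary non-empty intersections, with non-emptiness of $I$ needed only for Axiom~\ref{ax:rejects:nozero}). Your axiom-by-axiom lifting and the resulting complete meet-semilattice conclusion are precisely the intended argument, so there is nothing to add.
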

\noindent
This result is important, as it allows us to a extend a partially specified set of desirable option sets to the most conservative coherent one that includes it.
This leads to the conservative inference procedure we will call natural extension.

\begin{definition}[Consistency and natural extension]
For any assessment $\assessment\subseteq\optsets$, let $\cohrejectsets\group{\assessment}\coloneqq\cset{\rejectset\in\cohrejectsets}{\assessment\subseteq\rejectset}$.
We call the assessment~$\assessment$ \emph{consistent} if\/ $\cohrejectsets\group{\assessment}\neq\emptyset$, and we then call\/ $\natexrejectset\group{\assessment}\coloneqq\bigcap\cohrejectsets\group{\assessment}$ the \emph{natural extension} of $\assessment$.
\end{definition}
\noindent
In other words: an assessment $\assessment$ is consistent if it can be extended to some coherent set of desirable option sets, and then its natural extension $\natexrejectset\group{\assessment}$ is the least informative such coherent set of desirable option sets.

We end this section with a more `constructive' expression for this natural extension and a simpler criterion for consistency. 
In order to state it, we need to introduce the set $\singposopts\coloneqq\cset{\set{\opt}}{\opt\in\posopts}$ and two operators on---transformations of---$\rejectsets$. 
The first is denoted by $\RS$, and defined by
\begin{equation*}
\RS\group{\rejectset}\coloneqq\cset{\optset\in\optsets}{(\exists\altoptset\in\rejectset)\altoptset\setminus\nonposopts\subseteq\optset}
\text{ for all $\rejectset\in\rejectsets$},
\end{equation*}
so $\RS\group{\rejectset}$ contains all option sets $\optset$ in $\rejectset$, all versions of $\optset$ with some of their non-positive options removed, and all supersets of such sets. 
The second is denoted by $\setposi$, and defined for all $\rejectset\in\rejectsets$ by
\begin{align}
\setposi\group{\rejectset}\coloneqq\bigg\{
\bigg\{
\sum_{k=1}^n\lambda_{k}^{\opt[1:n]}\opt[k]
\colon
\opt[1:n]\in\times_{k=1}^n\optset[k]
\bigg\}
\colon
&n\in\naturals,(\optset[1],\dots,\optset[n])\in\rejectset^n,\notag\\[-11pt]
&\big(\forall\opt[1:n]\in\times_{k=1}^n\optset[k]\big)\,\lambda_{1:n}^{\opt[1:n]}>0
\bigg\},
\label{eq:setposi:appendix}
\end{align}
where we used the notations $\opt[1:n]$ and $\lambda_{1:n}^{\opt[1:n]}$ for $n$-tuples of options $\opt[k]$ and real numbers $\lambda_{k}^{\opt[1:n]}$, $k\in\set{1,\dots,n}$, so $\opt[1:n]\in\opts^{n}$ and $\lambda_{1:n}^{\opt[1:n]}\in\reals^{n}$.
We also used $\lambda_{1:n}^{\opt[1:n]}>0$ as a shorthand for `$\lambda_k^{\opt[1:n]}\geq0$ for all $k\in\set{1,\dots,n}$ and $\sum_{k=1}^n\lambda_k^{\opt[1:n]}>0$'.

\begin{theorem}[Consistency and natural extension]\label{theo:rejectsets:consistency:and:natex}
An assessment $\assessment\subseteq\optsets$ is consistent if and only if\/ $\emptyset\notin\assessment$ and\/ $\set{0}\notin\setposi\group{\singposopts\cup\assessment}$.
In that case $\natexrejectset\group{\assessment}=\RS\group{\setposi\group{\singposopts\cup\assessment}}$.
\end{theorem}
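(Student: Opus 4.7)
The plan is to set $\rejectset^* \coloneqq \RS\group{\setposi\group{\singposopts\cup\assessment}}$ and prove two things: (i) $\assessment$ is consistent iff the two stated conditions hold, and (ii) in the consistent case, $\rejectset^* = \natexrejectset\group{\assessment}$. For necessity, suppose that some $\rejectset\in\cohrejectsets\group{\assessment}$ exists. Since $\rejectset$ satisfies~\ref{ax:rejects:nozero} and~\ref{ax:rejects:mono}, $\emptyset\notin\rejectset$, hence $\emptyset\notin\assessment$. By~\ref{ax:rejects:pos}, $\singposopts\subseteq\rejectset$, so $\singposopts\cup\assessment\subseteq\rejectset$. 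Monotonicity of $\setposi$ combined with Proposition~\ref{prop:ax:rejects:cone:equivalents} gives $\setposi\group{\singposopts\cup\assessment}\subseteq\setposi\group{\rejectset}=\rejectset$. Since $\set{0}\notin\rejectset$ by~\ref{ax:rejects:nozero}, the second consistency condition follows. Applying $\RS$ and invoking Proposition~\ref{prop:ax:rejects:RN:equivalents} together with~\ref{ax:rejects:mono} then yields $\rejectset^*\subseteq\rejectset$, so $\rejectset^*\subseteq\natexrejectset\group{\assessment}$ whenever $\natexrejectset\group{\assessment}$ exists.

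For sufficiency and for the expression of the natural extension, assume the two stated conditions hold. The remaining task is to verify that $\rejectset^*$ is coherent, because the inclusion $\assessment\subseteq\setposi\group{\singposopts\cup\assessment}\subseteq\rejectset^*$ then gives consistency and, combined with $\rejectset^*\subseteq\natexrejectset\group{\assessment}$, the equality $\rejectset^*=\natexrejectset\group{\assessment}$. Axiom~\ref{ax:rejects:mono} is immediate from the definition of $\RS$. Axiom~\ref{ax:rejects:pos} holds because $\singposopts\subseteq\setposi\group{\singposopts\cup\assessment}\subseteq\rejectset^*$. Axiom~\ref{ax:rejects:removezero} is easy because if $\altoptset\setminus\nonposopts\subseteq\optset$, then also $\altoptset\setminus\nonposopts\subseteq\optset\setminus\set{0}$. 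For Axiom~\ref{ax:rejects:nozero}: if either $\set{0}$ or $\emptyset$ belonged to $\rejectset^*$, there would be some $\altoptset\in\setposi\group{\singposopts\cup\assessment}$ with $\altoptset\subseteq\nonposopts$; the case $\altoptset=\emptyset$ is ruled out by the first hypothesis (since neither $\singposopts$ nor $\assessment$ contains $\emptyset$ and $\setposi$ cannot produce $\emptyset$ from non-empty inputs); for $\altoptset\neq\emptyset$, combining $\altoptset$ via $\setposi$ with singletons $\set{-\opt}\in\singposopts$ for $\opt\in\altoptset$ and appropriate scalars collapses $\altoptset$ to $\set{0}$, yielding $\set{0}\in\setposi\group{\singposopts\cup\assessment}$ and contradicting the second hypothesis. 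The remaining Axiom~\ref{ax:rejects:cone} is obtained by combining witnesses $\altoptset[1],\altoptset[2]\in\setposi\group{\singposopts\cup\assessment}$ for two elements of $\rejectset^*$ through $\setposi$, exploiting that $\setposi\group{\setposi\group{\cdot}}\subseteq\setposi\group{\cdot}$ up to monotone completion, and then re-absorbing the resulting extra options with~\ref{ax:rejects:mono}.

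The main obstacle is the verification of~\ref{ax:rejects:cone} for $\rejectset^*$: the $\RS$ layer permits arbitrary supersets and the removal of non-positive elements, so one must show that positive combinations of two such perturbed sets still arise as a superset of some positive combination of the underlying $\setposi$-witnesses. The technique is the padding argument already used in the proof of Proposition~\ref{prop:removal:of:nonpositives:gewijzigde:axiomas}: given scalars $(\lambda_{\opt,\altopt},\mu_{\opt,\altopt})>0$ for the combination to be formed in $\rejectset^*$, extend them to the larger witnesses $\altoptset[1]\supseteq\optset[1]\setminus\nonposopts$ and $\altoptset[2]\supseteq\optset[2]\setminus\nonposopts$ by using coefficient pairs of the form $(1,0)$ or $(0,1)$ on the padding elements. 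The result is a $\setposi$-combination of $\altoptset[1]$ and $\altoptset[2]$ whose essential part lies above the target combination, so that~\ref{ax:rejects:mono} closes the argument.
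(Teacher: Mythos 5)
Your proof is correct in substance, and it organises the work differently from the paper. The paper first factorises $\RS$ as $\RN\circ\SU$ (Lemma~\ref{lem:combineoperators}) and then obtains coherence of the candidate $\RN\group{\SU\group{\setposi\group{\singposopts\cup\assessment}}}$ by chaining two reusable preservation results (Propositions~\ref{prop:adding:supersets:gewijzigde:axiomas} and~\ref{prop:removal:of:nonpositives:gewijzigde:axiomas}) on top of Proposition~\ref{prop:applying:posi}; you instead verify Axioms~\ref{ax:rejects:removezero}--\ref{ax:rejects:mono} for $\RS\group{\setposi\group{\singposopts\cup\assessment}}$ in a single pass. The surrounding logic is the same in both proofs: necessity and minimality of the candidate follow from monotonicity of the operators together with the fixed-point facts $\setposi\group{\rejectset}=\rejectset$ and $\RN\group{\rejectset}=\rejectset$ (in the paper also $\SU\group{\rejectset}=\rejectset$) for coherent $\rejectset$, exactly as you use them. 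Your variant buys a shorter path---no Lemma~\ref{lem:combineoperators} and no $\SU$-layer---at the price of redoing the padding argument for the merged operator; the paper's factorisation buys reusability, since its preservation propositions are invoked again elsewhere in the appendix (e.g.\ in Lemma~\ref{lem:Kstarstar}).

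Two repairs to your write-up, neither fatal. First, in the verification of~\ref{ax:rejects:cone} your witness inclusions are stated backwards: by the definition of $\RS$, the witnesses satisfy $\altoptset[1]\setminus\nonposopts\subseteq\optset[1]$ and $\altoptset[2]\setminus\nonposopts\subseteq\optset[2]$, not the reverse. Your padding scheme then works verbatim: use the given $(\lambda_{\opt,\altopt},\mu_{\opt,\altopt})$ on pairs from $\group{\altoptset[1]\setminus\nonposopts}\times\group{\altoptset[2]\setminus\nonposopts}$ and $(1,0)$ or $(0,1)$ on pairs involving a non-positive witness element, so that the part of the resulting $\setposi$-combination outside $\nonposopts$ is included \emph{in}---not above---the target set, which is exactly what membership of $\RS\group{\setposi\group{\cdot}}$ requires; Proposition~\ref{prop:applying:posi} supplies the needed pairwise closure of $\setposi\group{\cdot}$ under~\ref{ax:rejects:cone}, so no appeal to idempotence ``up to monotone completion'' is necessary. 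Second, in the collapse argument for~\ref{ax:rejects:nozero}, note that $0\in\nonposopts$, so $\altoptset\subseteq\nonposopts$ may contain $0$, for which $\set{-\opt}=\set{0}\notin\singposopts$; only the strictly negative elements need replacing (pairwise, keeping coefficient $(1,0)$ on the remaining elements at each step), while any zero elements already sit where you want them, so the iteration still terminates in $\set{0}$ and yields the desired contradiction.
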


\newpage

\section{Stuff I removed but need in the sequel}

This appendix also makes use of the following two additional operators:
\vspace{5pt}
\begin{equation*}
\SU\colon\rejectsets\to\rejectsets
\colon\rejectset\mapsto\SU\group{\rejectset}\coloneqq\cset{\optset\in\optsets}{(\exists\altoptset\in\rejectset)\altoptset\subseteq\optset}
\end{equation*}
and
\begin{equation*}
\RN\colon\rejectsets\to\rejectsets
\colon\rejectset\mapsto\RN\group{\rejectset}\coloneqq\cset{\optset\in\optsets}{(\exists\altoptset\in\rejectset)\altoptset\setminus\nonposopts\subseteq\optset\subseteq\altoptset}.
\vspace{5pt}
\end{equation*}
Applying them in sequence has the same effect as applying the operator $\RS$.

\begin{lemma}\label{lem:combineoperators}
Consider any set of desirable option sets $\rejectset\in\rejectsets$. 
Then
\begin{equation*}
\RS\group{\rejectset}%=\SU\group{\RN\group{\rejectset}}
=\RN\group{\SU\group{\rejectset}}.
\end{equation*}
\end{lemma}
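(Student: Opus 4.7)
The plan is to prove the set equality by establishing both inclusions directly from the definitions of the three operators, without any appeal to coherence.

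For the inclusion $\RS\group{\rejectset}\subseteq\RN\group{\SU\group{\rejectset}}$, I would take any $\optset\in\RS\group{\rejectset}$, fix a witness $\altoptset\in\rejectset$ with $\altoptset\setminus\nonposopts\subseteq\optset$, and construct the witness for the right-hand side by setting $\altoptset'\coloneqq\optset\cup\altoptset$. Then $\altoptset\subseteq\altoptset'$, so $\altoptset'\in\SU\group{\rejectset}$; clearly $\optset\subseteq\altoptset'$; and since $\altoptset'\setminus\nonposopts=(\optset\setminus\nonposopts)\cup(\altoptset\setminus\nonposopts)$ and both terms lie in $\optset$ (the first trivially, the second by the choice of $\altoptset$), we also have $\altoptset'\setminus\nonposopts\subseteq\optset$. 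This places $\optset$ in $\RN\group{\SU\group{\rejectset}}$.

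For the reverse inclusion $\RN\group{\SU\group{\rejectset}}\subseteq\RS\group{\rejectset}$, I would unpack both operators: given $\optset\in\RN\group{\SU\group{\rejectset}}$, there is some $\altoptset'\in\SU\group{\rejectset}$ with $\altoptset'\setminus\nonposopts\subseteq\optset\subseteq\altoptset'$, and in turn some $\altoptset\in\rejectset$ with $\altoptset\subseteq\altoptset'$. Monotonicity of set-difference then gives $\altoptset\setminus\nonposopts\subseteq\altoptset'\setminus\nonposopts\subseteq\optset$, so $\altoptset$ itself witnesses $\optset\in\RS\group{\rejectset}$.

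I do not anticipate any substantive obstacle: this is a purely definitional bookkeeping identity, and the only design choice is the witness $\altoptset'=\optset\cup\altoptset$ in the forward direction, which is forced once one observes that $\RN$ requires $\optset$ to be sandwiched between $\altoptset'\setminus\nonposopts$ and $\altoptset'$.
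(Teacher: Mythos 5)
Your proof is correct and follows essentially the same route as the paper's: in the forward direction you use the same witness $\optset\cup\altoptset$ (the paper's $\altoptset''$) with the same computation $(\optset\cup\altoptset)\setminus\nonposopts\subseteq\optset\subseteq\optset\cup\altoptset$, and in the reverse direction the same unpacking of $\SU$ followed by monotonicity of set difference. There is nothing to add.
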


\begin{proof}
% Checked by Gert
Consider any $\optset\in\RS\group{\rejectset}$, which means that there is some $\altoptset\in\rejectset$ such that $\altoptset\setminus\nonposopts\subseteq\optset$. 
Then $(\optset\cup\altoptset)\setminus\nonposopts\subseteq(\altoptset\setminus\nonposopts)\cup\optset=\optset\subseteq\optset\cup\altoptset$. 
Hence, if we let $\altoptset''\coloneqq\optset\cup\altoptset$, then $\altoptset''\setminus\nonposopts\subseteq\optset\subseteq\altoptset''$. 
Since $\altoptset\in\rejectset$ and $\altoptset\subseteq\altoptset''$ implies that $\altoptset''\in\SU\group{\rejectset}$, this allows us to conclude that $\optset\in\RN\group{\SU\group{\rejectset}}$.

Conversely, consider any $\optset\in\RN\group{\SU\group{\rejectset}}$, which means that there is some $\altoptset\in\SU\group{\rejectset}$ such that $\altoptset\setminus\nonposopts\subseteq\optset\subseteq\altoptset$. 
Then since $\altoptset\in\SU\group{\rejectset}$, there is some $\altoptset'\in\rejectset$ such that $\altoptset'\subseteq\altoptset$. 
Hence, we find that $\altoptset'\setminus\nonposopts\subseteq\altoptset\setminus\nonposopts\subseteq\optset$, which, since $\altoptset'\in\rejectset$, implies that $\optset\in\RS\group{\rejectset}$.
\end{proof}

\newpage
\section{Proof of Theorem~\ref{theo:rejectsets:consistency:and:natex}}

\begin{proposition}\label{prop:applying:posi}
For any set of desirable option sets $\rejectset\in\rejectsets$, $\setposi\group{\rejectset}$ satisfies Axiom~\ref{ax:rejects:cone}.
\end{proposition}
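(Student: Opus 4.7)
The plan is to show directly that $\setposi(\rejectset)$ is closed under the operation in~\ref{ax:rejects:cone} by concatenating the witnesses that place $\optset[1]$ and $\optset[2]$ inside $\setposi(\rejectset)$. So suppose $\optset[1],\optset[2]\in\setposi(\rejectset)$ and pick any coefficients $(\lambda_{\opt,\altopt},\mu_{\opt,\altopt})>0$ for $\opt\in\optset[1]$ and $\altopt\in\optset[2]$. By definition of $\setposi(\cdot)$ there exist integers $n_1,n_2\in\naturals$, tuples $(\altoptset[1,1],\dots,\altoptset[1,n_1])\in\rejectset^{n_1}$ and $(\altoptset[2,1],\dots,\altoptset[2,n_2])\in\rejectset^{n_2}$, and nonnegative coefficients $\alpha^{\altopt[1:n_1]}_{1:n_1}$ and $\beta^{\altopttoo[1:n_2]}_{1:n_2}$ (both with strictly positive row sums) such that
\begin{equation*}
\optset[1]=\cset[\Big]{\opt(\altopt[1:n_1])\coloneqq\smashoperator{\sum_{k=1}^{n_1}}\alpha^{\altopt[1:n_1]}_k\altopt[k]}{\altopt[1:n_1]\in\times_{k=1}^{n_1}\altoptset[1,k]}
\end{equation*}
and similarly $\optset[2]=\cset{\altopt'(\altopttoo[1:n_2])\coloneqq\sum_{k=1}^{n_2}\beta^{\altopttoo[1:n_2]}_k\altopttoo[k]}{\altopttoo[1:n_2]\in\times_{k=1}^{n_2}\altoptset[2,k]}$.

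I then set $n\coloneqq n_1+n_2$ and form the concatenated tuple $(\optset[1]',\dots,\optset[n]')\in\rejectset^n$ by letting $\optset[k]'\coloneqq\altoptset[1,k]$ for $k\leq n_1$ and $\optset[k]'\coloneqq\altoptset[2,k-n_1]$ for $k>n_1$. For any $\opttoo[1:n]=(\altopt[1:n_1],\altopttoo[1:n_2])\in\times_{k=1}^n\optset[k]'$, write $\opt\coloneqq\opt(\altopt[1:n_1])\in\optset[1]$ and $\altopt'\coloneqq\altopt'(\altopttoo[1:n_2])\in\optset[2]$, and define
\begin{equation*}
\gamma_k^{\opttoo[1:n]}\coloneqq\begin{cases}\lambda_{\opt,\altopt'}\,\alpha^{\altopt[1:n_1]}_k &\text{if }1\leq k\leq n_1,\\ \mu_{\opt,\altopt'}\,\beta^{\altopttoo[1:n_2]}_{k-n_1} &\text{if }n_1<k\leq n.\end{cases}
\end{equation*}
A short computation then gives $\sum_{k=1}^n\gamma_k^{\opttoo[1:n]}\opttoo[k]=\lambda_{\opt,\altopt'}\opt+\mu_{\opt,\altopt'}\altopt'$, so as $\opttoo[1:n]$ ranges over $\times_{k=1}^n\optset[k]'$ the pair $(\opt,\altopt')$ ranges over all of $\optset[1]\times\optset[2]$, and the set produced by these sums is exactly $\cset{\lambda_{\opt,\altopt}\opt+\mu_{\opt,\altopt}\altopt}{\opt\in\optset[1],\altopt\in\optset[2]}$.

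The only real thing to check is $\gamma_{1:n}^{\opttoo[1:n]}>0$ in the shorthand sense. All $\gamma_k^{\opttoo[1:n]}$ are nonnegative since they are products of nonnegative numbers. For strict positivity of the sum, note that
\begin{equation*}
\smashoperator{\sum_{k=1}^n}\gamma_k^{\opttoo[1:n]}=\lambda_{\opt,\altopt'}\smashoperator{\sum_{k=1}^{n_1}}\alpha^{\altopt[1:n_1]}_k+\mu_{\opt,\altopt'}\smashoperator{\sum_{k=1}^{n_2}}\beta^{\altopttoo[1:n_2]}_k,
\end{equation*}
and since both inner sums are strictly positive by hypothesis and $(\lambda_{\opt,\altopt'},\mu_{\opt,\altopt'})>0$ means at least one of the outer factors is strictly positive, the whole expression is strictly positive. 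Hence the constructed set belongs to $\setposi(\rejectset)$, establishing~\ref{ax:rejects:cone}. No real obstacle arises beyond bookkeeping of indices; the crux is simply that the shorthand ``$>0$'' is preserved by this concatenation precisely because it is defined as nonnegativity plus strict positivity of the total sum.
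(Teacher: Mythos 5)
Your proof is correct and takes essentially the same route as the paper's: both concatenate the two representing tuples from $\rejectset^{n_1}$ and $\rejectset^{n_2}$ into a single tuple of length $n_1+n_2$, define the new coefficients as products ($\lambda_{\opt,\altopt}$ times the first block's coefficients, $\mu_{\opt,\altopt}$ times the second block's), and verify that the shorthand positivity condition survives because the total sum is $\lambda_{\opt,\altopt}$ times a positive sum plus $\mu_{\opt,\altopt}$ times a positive sum. The only differences are notational, so nothing further is needed.
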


\begin{proof}[Proof of Proposition~\ref{prop:applying:posi}]
% Checked and improved by Gert
To prove that $\setposi\group{\rejectset}$ satisfies Axiom~\ref{ax:rejects:cone}, consider any $\optset,\altoptset\in\setposi\group{\rejectset}$ and, for all $\opt\in\optset$ and $\altopt\in\altoptset$, any $(\lambda_{\opt,\altopt},\mu_{\opt,\altopt})>0$. 
Then we need to prove that
\begin{equation*}
C
\coloneqq\cset{\lambda_{\opt,\altopt}\opt+\mu_{\opt,\altopt}\altopt}
{\opt\in\optset,\altopt\in\altoptset}\in\setposi\group{\rejectset}
\end{equation*}
Since $\optset,\altoptset\in\setposi\group{\rejectset}$, we know that there are $m,n\in\naturals$, $(\optset[1],\dots,\optset[m])\in\rejectset^m$ and $(\altoptset[1],\dots,\altoptset[n])\in\rejectset^n$ and, for all $\opt[1:m]\in\times_{k=1}^m\optset[k]$ and $\altopt[1:n]\in\times_{\ell=1}^n\altoptset[\ell]$, some choice of $\lambda^{\opt[1:m]}_{1:m}>0$ and $\mu^{\altopt[1:n]}_{1:n}>0$ such that
\begin{equation}\label{eq:applying:posi:first}
\optset
=\cset[\bigg]{\smashoperator[r]{\sum_{k=1}^m}
\lambda^{\opt[1:m]}_k\opt[k]}
{\opt[1:m]\in\times_{k=1}^m\optset[k]}
\text{ and }
\altoptset
=\cset[\bigg]{\smashoperator[r]{\sum_{\ell=1}^n}
\mu^{\altopt[1:n]}_\ell\altopt[\ell]}
{\altopt[1:n]\in\times_{\ell=1}^n\altoptset[\ell]}.
\end{equation}
For all $\opt[1:m]\in\times_{k=1}^m\optset[k]$ and $\altopt[1:n]\in\times_{\ell=1}^n\altoptset[\ell]$, we introduce the simplifying notation
\begin{equation*}
\aopt[{\opt[1:m]}]
\coloneqq\sum_{k=1}^m\lambda^{\opt[1:m]}_k\opt[k]
\text{ and }
\bopt[{\altopt[1:n]}]
\coloneqq\sum_{\ell=1}^n\mu^{\altopt[1:n]}_{\ell}\altopt[\ell],
\end{equation*}
so $\optset=\cset{\aopt[{\opt[1:m]}]}{\opt[1:m]\in \times_{k=1}^m\optset[k]}$ and $\altoptset=\cset{\bopt[{\altopt[1:n]}]}{\altopt[1:n]\in\times_{\ell=1}^n\altoptset[\ell]}$, and therefore
\begin{align*}
\altoptsettoo
&=\cset{\lambda_{\opt,\altopt}\opt+\mu_{\opt,\altopt}\altopt}{\opt\in\optset,\altopt\in\altoptset}\\
&=\cset[\bigg]{\lambda_{\aopt[{\opt[1:m]}],\bopt[{\altopt[1:n]}]}\aopt[{\opt[1:m]}]
+\mu_{\aopt[{\opt[1:m]}],\bopt[{\altopt[1:n]}]}\bopt[{\altopt[1:n]}]}
{\opt[1:m]\in\times_{k=1}^m\optset[k],\altopt[1:n]\in\times_{\ell=1}^n\altoptset[\ell]}.
\end{align*}
If we now introduce the notations
\begin{equation*}
\altoptsettoo[i]
\coloneqq
\begin{cases}
\optset[i] 
&\text{ if }1\leq i\leq m\\
\altoptset[i-m] 
&\text{ if }m+1\leq i\leq m+n
\end{cases}
\end{equation*}
and for any $\altopttoo[1:m+n]\in\times_{i=1}^{m+n}\altoptsettoo[i]$,
\begin{equation*}
\kappa_i^{\altopttoo[1:m+n]}
\coloneqq
\begin{cases}
\lambda_{\aopt[{\altopttoo[1:m]}],\bopt[{\altopttoo[m+1:m+n]}]}\lambda_i^{\altopttoo[1:m]} &\text{ if }1\leq i\leq m\\
\mu_{a_{\altopttoo[1:m]},b_{\altopttoo[m+1:m+n]}}\mu_{i-m}^{\altopttoo[m+1:m+n]} &\text{ if }m+1\leq i\leq m+n,
\end{cases}
\end{equation*}
where we used $\altopttoo[m+1:m+n]$ to denote the tuple $(\altopttoo[m+1],\dots,\altopttoo[m+n])$, then we find that
\begin{align*}
C
&=\cset[\bigg]{\sum_{i=1}^{m+n}\kappa_i^{\altopttoo[1:m+n]}\altopttoo[i]}
{\altopttoo[1:m+n]\in\times_{i=1}^{m+n}\altoptsettoo[i]}.
\end{align*}
Furthermore, since $(\lambda_{\aopt[{\altopttoo[1:m]}],\bopt[{\altopttoo[m+1:m+n]}]},\mu_{\aopt[{\altopttoo[1:m]}],\bopt[{\altopttoo[m+1:m+n]}]})>0$, $\lambda_{1:m}^{\altopttoo[1:m]}>0$ and $\mu_{1:n}^{\altopttoo[m+1:m+n]}>0$, it follows that also
\begin{equation*}
\kappa_{1:m+1}^{\altopttoo[1:m+n]}\coloneqq(\kappa_{1}^{\altopttoo[1:m+n]},\dots,\kappa_{m+1}^{\altopttoo[1:m+n]})>0.
\end{equation*}
Hence, we find that, indeed, $C\in\setposi\group{\rejectset}$.
\end{proof}

\begin{proposition}\label{prop:adding:supersets:gewijzigde:axiomas}
%% Met oorspronkelijke axioma's en nummering
% Consider any set of desirable option sets $\rejectset\in\rejectsets$.
% Then\/ $\SU\group{\rejectset}$ satisfies Axiom~\ref{ax:rejects:mono}.
% Moreover, if $\rejectset$ satisfies Axioms~\ref{ax:rejects:nonempty}, \ref{ax:rejects:pos} and\/~\ref{ax:rejects:cone} and does not contain~$\set{0}$, then so does\/ $\SU\group{\rejectset}$.
Consider any set of desirable option sets $\rejectset\in\rejectsets$.
Then\/ $\SU\group{\rejectset}$ satisfies Axiom~\ref{ax:rejects:mono}.
Moreover, if $\rejectset$ satisfies Axioms~\ref{ax:rejects:nozero}, \ref{ax:rejects:pos} and\/~\ref{ax:rejects:cone} and does not contain~$\emptyset$, then so does\/ $\SU\group{\rejectset}$.
\end{proposition}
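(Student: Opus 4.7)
The plan is to verify each required axiom for $\SU\group{\rejectset}$ in turn, exploiting the fact that $\SU$ adds only supersets to $\rejectset$ and that every relevant axiom for $\rejectset$ produces a witness inside the candidate superset.

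First I would dispatch Axiom~\ref{ax:rejects:mono}, which holds for $\SU\group{\rejectset}$ in full generality. Indeed, if $\optset[1]\in\SU\group{\rejectset}$, then there is some $\altoptset\in\rejectset$ with $\altoptset\subseteq\optset[1]$; and whenever $\optset[1]\subseteq\optset[2]$ we immediately have $\altoptset\subseteq\optset[2]$, which witnesses $\optset[2]\in\SU\group{\rejectset}$.

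For the second part, I would now assume that $\rejectset$ satisfies~\ref{ax:rejects:nozero}, \ref{ax:rejects:pos} and~\ref{ax:rejects:cone}, and does not contain~$\emptyset$, and then verify each of these in turn for $\SU\group{\rejectset}$. That $\emptyset\notin\SU\group{\rejectset}$ and $\set{0}\notin\SU\group{\rejectset}$ follow at once: the only subsets of $\emptyset$ are $\emptyset$ itself, and the only subsets of $\set{0}$ are $\emptyset$ and $\set{0}$; by hypothesis none of these belong to $\rejectset$, so neither $\emptyset$ nor $\set{0}$ has a witness in $\rejectset$. Axiom~\ref{ax:rejects:pos} is equally direct: for any $\opt\in\posopts$, \ref{ax:rejects:pos} gives $\set{\opt}\in\rejectset\subseteq\SU\group{\rejectset}$.

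The only mildly non-trivial verification is Axiom~\ref{ax:rejects:cone}. Given $\optset[1],\optset[2]\in\SU\group{\rejectset}$ with witnesses $\altoptset[1],\altoptset[2]\in\rejectset$ satisfying $\altoptset[i]\subseteq\optset[i]$, and given any choice $(\lambda_{\opt,\altopt},\mu_{\opt,\altopt})>0$ for all $\opt\in\optset[1]$ and $\altopt\in\optset[2]$, I would apply Axiom~\ref{ax:rejects:cone} for $\rejectset$ to $\altoptset[1]$ and $\altoptset[2]$ with the \emph{restriction} of these coefficients to $\opt\in\altoptset[1],\altopt\in\altoptset[2]$. This produces
\begin{equation*}
\cset{\lambda_{\opt,\altopt}\opt+\mu_{\opt,\altopt}\altopt}{\opt\in\altoptset[1],\altopt\in\altoptset[2]}\in\rejectset,
\end{equation*}
and since $\altoptset[i]\subseteq\optset[i]$, this set is a subset of the corresponding combination over $\optset[1]\times\optset[2]$, which therefore belongs to $\SU\group{\rejectset}$ by definition. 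The main (very mild) obstacle is simply to be careful that the coefficients chosen at the $\SU$-level can be restricted consistently to a valid choice at the $\rejectset$-level, which they trivially can because the definition of $\SU$ supplies the needed subset witnesses.
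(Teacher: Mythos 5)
Your proof is correct and follows essentially the same route as the paper's: the monotonicity and positivity parts are immediate from the definition of $\SU$, the $\emptyset$ and $\set{0}$ cases are handled by noting no witness in $\rejectset$ can exist, and Axiom~\ref{ax:rejects:cone} is verified by restricting the given coefficients to the witnesses $\altoptset[1]\subseteq\optset[1]$ and $\altoptset[2]\subseteq\optset[2]$, exactly as in the paper. Nothing is missing; your restriction argument, including the observation that the resulting combination over $\altoptset[1]\times\altoptset[2]$ is a subset of the one over $\optset[1]\times\optset[2]$, is precisely the paper's key step.
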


\begin{proof}
% Checked and improved by Gert
For the first statement, consider any $\optset[1]\in\SU\group{\rejectset}$ and any $\optset[2]\in\optsets$ such that $\optset[1]\subseteq\optset[2]$.
Then there is some $\altoptset[1]\in\rejectset$ such that $\altoptset[1]\subseteq\optset[1]$, and therefore also $\altoptset[1]\subseteq\optset[2]$, whence indeed $\optset[2]\in\SU\group{\rejectset}$.

For the second statement, assume that $\rejectset$ satisfies Axioms~\ref{ax:rejects:nozero}, \ref{ax:rejects:pos} and~\ref{ax:rejects:cone}, and does not contain~$\emptyset$.

To prove that $\SU\group{\rejectset}$ satisfies Axiom~\ref{ax:rejects:pos}, simply observe that the operator $\SU$ never removes option sets from a set of desirable option sets, so the option sets $\set{\opt}$, $\opt\in\posopts$, that belong to $\rejectset$ by Axiom~\ref{ax:rejects:pos}, will also belong to the larger $\SU\group{\rejectset}$.

To prove that $\SU\group{\rejectset}$ satisfies Axiom~\ref{ax:rejects:cone}, consider any $\optset[1],\optset[2]\in\SU\group{\rejectset}$, meaning that there are $\altoptset[1],\altoptset[2]\in\rejectset$ such that $\altoptset[1]\subseteq\optset[1]$ and $\altoptset[2]\subseteq\optset[2]$.
For all $\opt\in\optset[1]$ and $\altopt\in\optset[2]$, choose some $(\lambda_{\opt,\altopt},\mu_{\opt,\altopt})>0$, and let 
\begin{equation*}
\optset
\coloneqq\cset{\lambda_{\opt,\altopt}\opt+\mu_{\opt,\altopt}\altopt}{\opt\in\optset[1],\altopt\in\optset[2]}.
\end{equation*}
We then need to prove that $\optset\in\SU\group{\rejectset}$. 
Since $\rejectset$ satisfies Axiom~\ref{ax:rejects:cone}, we infer from $\altoptset[1],\altoptset[2]\in\rejectset$ that
\begin{equation*}
\altoptset
\coloneqq\cset{\lambda_{\opt,\altopt}\opt+\mu_{\opt,\altopt}\altopt}
{\opt\in\altoptset[1],\altopt\in\altoptset[2]}\in\rejectset.
\end{equation*}
Since $\altoptset\subseteq\optset$, this implies that, indeed, $\optset\in\SU\group{\rejectset}$.

Finally, to prove that $\SU\group{\rejectset}$ satisfies Axiom~\ref{ax:rejects:nozero} and does not contain~$\emptyset$, assume \emph{ex absurdo} that $\set{0}\in\SU\group{\rejectset}$ or $\emptyset\in\SU\group{\rejectset}$. 
Then $\set{0}\in\rejectset$ or $\emptyset\in\rejectset$. 
In either case, we obtain a contradiction with the assumption that $\rejectset$ satisfies Axiom~\ref{ax:rejects:nozero} and does not contain $\emptyset$.
\end{proof}

\begin{proposition}\label{prop:ax:rejects:SU:equivalents}
$\SU\group{\rejectset}=\rejectset$ for any coherent set of desirable option sets $\rejectset\in\cohrejectsets$.
\end{proposition}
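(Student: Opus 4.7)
The plan is to prove the two set inclusions $\rejectset \subseteq \SU(\rejectset)$ and $\SU(\rejectset) \subseteq \rejectset$ separately. Both are essentially immediate from the definition of $\SU$ and the monotonicity Axiom~\ref{ax:rejects:mono} that coherence provides; no deeper structural results are needed here.

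For the first inclusion, I would simply observe that for any $\optset \in \rejectset$, the option set $\altoptset \coloneqq \optset$ witnesses $\optset \in \SU(\rejectset)$, since trivially $\altoptset \subseteq \optset$. This direction does not use coherence of $\rejectset$ at all.

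For the converse inclusion, I would take an arbitrary $\optset \in \SU(\rejectset)$, unpack the definition to obtain some $\altoptset \in \rejectset$ with $\altoptset \subseteq \optset$, and then invoke Axiom~\ref{ax:rejects:mono} of coherence directly: since $\altoptset \in \rejectset$ and $\altoptset \subseteq \optset$, we conclude that $\optset \in \rejectset$. Combining both inclusions yields $\SU(\rejectset) = \rejectset$.

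There is no real obstacle here; the proposition is essentially a restatement of monotonicity in terms of the $\SU$ operator. The only thing worth noting is that this result plays the analogous role for $\SU$ that Proposition~\ref{prop:ax:rejects:RN:equivalents} plays for $\RN$ and Proposition~\ref{prop:ax:rejects:cone:equivalents} plays for $\setposi$, and together with Lemma~\ref{lem:combineoperators} it gives $\RS(\rejectset) = \RN(\SU(\rejectset)) = \RN(\rejectset) = \rejectset$ for any $\rejectset \in \cohrejectsets$, which is presumably where it will be used downstream.
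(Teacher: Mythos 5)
Your proof is correct and follows essentially the same route as the paper's: the inclusion $\rejectset\subseteq\SU\group{\rejectset}$ is immediate from the definition of the $\SU$ operator (with $\optset$ itself as witness), and the converse inclusion is exactly Axiom~\ref{ax:rejects:mono}. Your closing remark relating this to Propositions~\ref{prop:ax:rejects:RN:equivalents} and~\ref{prop:ax:rejects:cone:equivalents} and to Lemma~\ref{lem:combineoperators} is also accurate and matches how the result is used downstream.
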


\begin{proof}
% Checked by Gert
That $\rejectset\subseteq\SU\group{\rejectset}$ is an immediate consequence of the definition of the $\SU$ operator. 
The converse inclusion follows from the fact that $\rejectset$ is coherent and therefore satisfies Axiom~\ref{ax:rejects:mono}.
\end{proof}

\begin{proof}[Proof of Theorem~\ref{theo:rejectsets:consistency:and:natex}]
% Checked by Gert
For notational simplicity, we will denote the set of desirable option sets $\setposi\group{\singposopts\cup\assessment}$ by $\rejectset[o]$.
We begin with the first statement.
First, assume that $\emptyset\notin\assessment$ and $\set{0}\notin\rejectset[o]$.
Observe that $\rejectset[o]$ satisfies Axiom~\ref{ax:rejects:pos} by construction and Axiom~\ref{ax:rejects:cone} by Proposition~\ref{prop:applying:posi}. 
$\rejectset[o]$ also satisfies Axiom~\ref{ax:rejects:nozero} because, by assumption, $\set{0}\notin\rejectset[o]$.
Furthermore, the assumption $\emptyset\notin\assessment$ implies that $\emptyset\notin\rejectset[o]$, and therefore it follows from Proposition~\ref{prop:adding:supersets:gewijzigde:axiomas} that $\SU\group{\rejectset[o]}$ satisfies Axioms~\ref{ax:rejects:nozero}, \ref{ax:rejects:pos}, \ref{ax:rejects:cone} and~\ref{ax:rejects:mono}, and that $\emptyset\notin\SU\group{\rejectset[o]}$, so we gather from Proposition~\ref{prop:removal:of:nonpositives:gewijzigde:axiomas} that $\rejectset[1]\coloneqq\RN\group{\SU\group{\rejectset[o]}}$ satisfies Axioms~\ref{ax:rejects:removezero}--\ref{ax:rejects:mono}.
Since $\rejectset[1]$ includes $\assessment$ [none of the operators $\setposi$, $\SU$ and $\RN$ remove option sets from their arguments, they only add new option sets], this implies that $\rejectset[1]\in\cohrejectsets\group{\assessment}$, and therefore, that $\assessment$ is consistent.

Next, assume that $\assessment$ is consistent, which means that $\cohrejectsets\group{\assessment}\neq\emptyset$. 
Consider any $\rejectset\in \cohrejectsets\group{\assessment}$, which means that $\rejectset$ is coherent and $\assessment\subseteq\rejectset$. 
Then $\singposopts\cup\assessment\subseteq\rejectset$ [use Axiom~\ref{ax:rejects:pos}] and therefore also $\rejectset[o]=\setposi\group{\singposopts\cup\assessment}\subseteq\setposi\group{\rejectset}=\rejectset$ [for the inclusion, use the definition of the $\setposi$ operator, and for the equality, use Proposition~\ref{prop:ax:rejects:cone:equivalents}]. 
Now assume \emph{ex absurdo} that $\set{0}\in\rejectset[o]$. 
Then also $\set{0}\in\rejectset$, which contradicts the coherence of $\rejectset$ [Axiom~\ref{ax:rejects:nozero}].
Hence, we find that $\set{0}\notin\rejectset[o]$. 
Finally, since $\rejectset$ is coherent, Axioms~\ref{ax:rejects:mono} and~\ref{ax:rejects:nozero} imply that $\emptyset\notin\rejectset$, which, since $\assessment\subseteq\rejectset$, implies that $\emptyset\notin\assessment$. 

For the second statement, assume that $\assessment$ is consistent, so we already know that $\emptyset\notin\assessment$ and $\set{0}\notin\rejectset[o]$.
We have to prove that $\natexrejectset\group{\assessment}=\RS\group{\rejectset[o]}$, or equivalently, due to Lemma~\ref{lem:combineoperators}, that $\natexrejectset\group{\assessment}=\rejectset[1]\coloneqq\RN\group{\SU\group{\rejectset[o]}}$.

We already know from the argumentation above that $\emptyset\notin\assessment$ and $\set{0}\notin\rejectset[o]$ implies that $\rejectset[1]\in\cohrejectsets\group{\assessment}$, and therefore also $\natexrejectset\group{\assessment}\subseteq\rejectset[1]$.
To prove the converse inclusion, consider any $\rejectset\in\cohrejectsets\group{\assessment}$.
Then as shown above, $\rejectset[o]\subseteq\rejectset$.
Hence also $\SU\group{\rejectset[o]}\subseteq\SU\group{\rejectset}=\rejectset$ [for the inclusion, use the definition of the $\SU$ operator, and for the equality, use Proposition~\ref{prop:ax:rejects:SU:equivalents}], and therefore also $\rejectset[1]=\RN\group{\SU\group{\rejectset[o]}}\subseteq\RN\group{\rejectset}=\rejectset$ [for the inclusion, use the definition of the $\RN$ operator, and for the equality, use Proposition~\ref{prop:ax:rejects:RN:equivalents}].
So $\rejectset[1]\subseteq\rejectset$. 
Since this is true for every $\rejectset\in\cohrejectsets\group{\assessment}$, and since the consistency of $\assessment$ implies that $\cohrejectsets\group{\assessment}\neq\emptyset$, we conclude that $\rejectset[1]\subseteq\natexrejectset\group{\assessment}$.
\end{proof}

\newpage

\section{Convex natural extension}

We can also do conservative reasoning under the convexity axiom.
To show this, we begin with the following theorem, whose proof is trivial, and which identifies an interesting substructure.

\begin{theorem}\label{theo:convex:conservative:inference}
Let $\set{\rejectset[i]}_{i\in I}$ be an arbitrary non-empty family of sets of desirable option sets, with intersection $\rejectset\coloneqq\bigcap_{i\in I}\rejectset[i]$.
If $\rejectset[i]$ is convex coherent for all $i\in I$, then so is $\rejectset$.
This implies that $\structure{\convcohrejectsets,\subseteq}$ is a complete meet-semilattice. 
\end{theorem}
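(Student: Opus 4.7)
The plan is to verify, one axiom at a time, that each defining property of convex coherence survives arbitrary non-empty intersections, and then to observe that this immediately gives the claimed lattice-theoretic structure. Concretely, write $\rejectset\coloneqq\bigcap_{i\in I}\rejectset[i]$ with $I\neq\emptyset$ and each $\rejectset[i]\in\convcohrejectsets$.

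I would proceed by checking Axioms~\ref{ax:rejects:removezero}--\ref{ax:rejects:mono} and~\ref{ax:rejects:removepositivecombinations} for $\rejectset$. Each axiom has the syntactic shape ``if certain option sets lie in $\rejectset[i]$ (together with some side conditions that do not mention $\rejectset[i]$), then a derived option set lies in $\rejectset[i]$'', or else ``a particular option set does not lie in $\rejectset[i]$''. For the positive axioms~\ref{ax:rejects:removezero}, \ref{ax:rejects:pos}, \ref{ax:rejects:cone}, \ref{ax:rejects:mono} and~\ref{ax:rejects:removepositivecombinations}, if the antecedent holds for $\rejectset$, then it holds for every $\rejectset[i]$ by definition of the intersection, so the consequent holds for every $\rejectset[i]$ by the assumed convex coherence, and therefore the consequent holds for $\rejectset$ as well. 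For the negative Axiom~\ref{ax:rejects:nozero}, I use that $I$ is non-empty: pick any $i_0\in I$; then $\set{0}\notin\rejectset[{i_0}]$ by coherence, hence $\set{0}\notin\rejectset$.

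The only axiom where one must be slightly careful is~\ref{ax:rejects:cone}, because the family of coefficients $(\lambda_{\opt,\altopt},\mu_{\opt,\altopt})>0$ is chosen once on the outside and must work uniformly across all $\rejectset[i]$; but that is exactly how the axiom is stated---coefficients are quantified universally---so fixing the same coefficients for each index $i$ produces the same derived option set that then belongs to each $\rejectset[i]$ and hence to $\rejectset$. Similarly, for~\ref{ax:rejects:removepositivecombinations}, the hypothesis $\optset\subseteq\altoptset\subseteq\posi\group{\optset}$ is a condition on the two option sets and does not involve any $\rejectset[i]$, so it transfers unchanged to each index.

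Having established that $\rejectset$ is convex coherent, the meet-semilattice conclusion is then purely order-theoretic: the partially ordered set $\structure{\rejectsets,\subseteq}$ already has arbitrary intersections as infima, and the first part of the theorem says that the non-empty infima of families in $\convcohrejectsets$ remain in $\convcohrejectsets$. The main ``obstacle'' is really just bookkeeping---making sure that no axiom involves an existential quantifier over $i\in I$ that would break under intersection---and the arguments above show that none do.
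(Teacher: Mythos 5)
Your proof is correct and matches the paper's intended argument exactly: the paper declares this proof trivial precisely because every axiom among \ref{ax:rejects:removezero}--\ref{ax:rejects:mono} and~\ref{ax:rejects:removepositivecombinations} is either a universally quantified positive implication (preserved under arbitrary intersections, with the coefficients in~\ref{ax:rejects:cone} fixed uniformly, as you note) or the negative condition~\ref{ax:rejects:nozero}, which transfers via any single index of the non-empty family. Your handling of the meet-semilattice conclusion as a purely order-theoretic consequence is likewise the same reading the paper takes.
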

\noindent
This result is important, as it allows us to a extend a partially specified set of desirable option sets to the most conservative convex coherent one that includes it.
This leads to the conservative inference procedure we will call convex natural extension.

\begin{definition}[Convex consistency and natural extension]
For any assessment $\assessment\subseteq\optsets$, let $\convcohrejectsets\group{\assessment}\coloneqq\cset{\rejectset\in\convcohrejectsets}{\assessment\subseteq\rejectset}$.
We call the assessment~$\assessment$ \emph{convex consistent} if\/ $\convcohrejectsets\group{\assessment}\neq\emptyset$, and we then call\/ $\convnatexrejectset\group{\assessment}\coloneqq\bigcap\convcohrejectsets\group{\assessment}$ the \emph{convex natural extension} of $\assessment$.
\end{definition}
\noindent
In other words: an assessment $\assessment$ is convex consistent if it can be extended to some convex coherent set of desirable option sets, and then its convex natural extension $\natexrejectset\group{\assessment}$ is the least informative such coherent set of desirable option sets.

Our final result provides a more `constructive' expression for this convex natural extension and a simpler criterion for convex consistency. 
In order to state it, we need to introduce yet another operator on---transformation of---$\rejectsets$. 
It is denoted by $\RS$, and allows us to add smaller option sets by removing from any option set positive combinations from some of its other elements:
\begin{equation*}
\RP\group{\rejectset}
\coloneqq\cset{\optset\in\optsets}{(\exists\altoptset\in\rejectset)\optset\subseteq\altoptset\subseteq\posi\group{\optset}}
\text{ for all $\rejectset\in\rejectsets$},
\end{equation*}

\begin{theorem}[Convex consistency and natural extension]\label{theo:rejectsets:convex:consistency:and:natex}
An assessment $\assessment\subseteq\optsets$ is convex consistent if and only if it is consistent, i.e.~if\/ $\emptyset\notin\assessment$ and\/ $\set{0}\notin\setposi\group{\singposopts\cup\assessment}$.
In that case $\convnatexrejectset\group{\assessment}=\RP\group{\natexrejectset\group{\assessment}}=\RP\group{\RS\group{\setposi\group{\singposopts\cup\assessment}}}$.
\end{theorem}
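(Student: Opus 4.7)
The plan is to reduce this result to what we already know about the (non-convex) natural extension $\natexrejectset$ using the operator $\RP$, whose properties are pinned down by Proposition~\ref{prop:removal:of:posis:gewijzigde:axiomas}. The essential observation is that $\RP$ turns any coherent set of desirable option sets into a mixing coherent one, and that mixing sets are fixed points of $\RP$ below, while it acts monotonically on its argument.

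For the equivalence of consistency and convex consistency, the direction convex $\Rightarrow$ ordinary is immediate from $\convcohrejectsets\subseteq\cohrejectsets$. For the converse, assume $\assessment$ is consistent, so by Theorem~\ref{theo:rejectsets:consistency:and:natex} we have $\emptyset\notin\assessment$, $\set{0}\notin\setposi(\singposopts\cup\assessment)$, and $\natexrejectset(\assessment)\in\cohrejectsets(\assessment)$. Applying Proposition~\ref{prop:removal:of:posis:gewijzigde:axiomas} to the coherent $\natexrejectset(\assessment)$ yields that $\RP(\natexrejectset(\assessment))$ satisfies Axioms~\ref{ax:rejects:removezero}--\ref{ax:rejects:mono} and~\ref{ax:rejects:removepositivecombinations}, and hence lies in $\convcohrejectsets$. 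Since $\RP$ never removes option sets, $\assessment\subseteq\natexrejectset(\assessment)\subseteq\RP(\natexrejectset(\assessment))$, and therefore $\RP(\natexrejectset(\assessment))\in\convcohrejectsets(\assessment)$, which is thus non-empty.

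For the formula, the inclusion $\convnatexrejectset(\assessment)\subseteq\RP(\natexrejectset(\assessment))$ is immediate from the previous paragraph, because $\RP(\natexrejectset(\assessment))$ is a particular element of $\convcohrejectsets(\assessment)$. For the converse inclusion, fix any $\rejectset\in\convcohrejectsets(\assessment)$. Since $\rejectset$ is coherent and includes $\assessment$, the definition of natural extension yields $\natexrejectset(\assessment)\subseteq\rejectset$. The operator $\RP$ is manifestly monotone in its argument (from its definition, a larger $\rejectset$ supplies at least as many witnesses $\altoptset$), so $\RP(\natexrejectset(\assessment))\subseteq\RP(\rejectset)$. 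Finally, because $\rejectset$ satisfies~\ref{ax:rejects:removepositivecombinations}, any $\optset\in\RP(\rejectset)$ admits $\altoptset\in\rejectset$ with $\optset\subseteq\altoptset\subseteq\posi(\optset)$, whence directly $\optset\in\rejectset$; so $\RP(\rejectset)\subseteq\rejectset$ and therefore $\RP(\natexrejectset(\assessment))\subseteq\rejectset$. Intersecting over all $\rejectset\in\convcohrejectsets(\assessment)$ gives $\RP(\natexrejectset(\assessment))\subseteq\convnatexrejectset(\assessment)$. The second equality $\RP(\natexrejectset(\assessment))=\RP(\RS(\setposi(\singposopts\cup\assessment)))$ is just Theorem~\ref{theo:rejectsets:consistency:and:natex} substituted into the first.

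Nothing in this argument is really hard: the main thing to be careful about is the chain of citations, in particular checking that Proposition~\ref{prop:removal:of:posis:gewijzigde:axiomas} really gives \emph{both} coherence \emph{and} Axiom~\ref{ax:rejects:removepositivecombinations} in one shot when applied to a coherent input, and that the converse implication ``$\rejectset$ satisfies~\ref{ax:rejects:removepositivecombinations} $\Rightarrow$ $\RP(\rejectset)\subseteq\rejectset$'' is literally a direct unwinding of definitions rather than an appeal to Proposition~\ref{prop:axioms:rejection:sets:and:functions:mixing}.
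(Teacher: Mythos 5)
Your proof is correct and follows essentially the same route as the paper's: both directions reduce to Theorem~\ref{theo:rejectsets:consistency:and:natex} via the $\RP$ operator, with Proposition~\ref{prop:removal:of:posis:gewijzigde:axiomas} supplying coherence plus~\ref{ax:rejects:removepositivecombinations} of $\RP\group{\natexrejectset\group{\assessment}}$, and the converse inclusion obtained from monotonicity of $\RP$ together with $\RP\group{\rejectset}\subseteq\rejectset$ for mixing $\rejectset$. The only cosmetic differences are that the paper cites Proposition~\ref{prop:ax:rejects:RP:equivalents} where you unwind the definition of $\RP$ inline (which is indeed all that proposition contains), and that for the consistency direction the paper applies $\RP$ to an arbitrary coherent superset of $\assessment$ rather than to $\natexrejectset\group{\assessment}$ specifically.
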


\newpage

\section{Proof of Theorem~\ref{theo:rejectsets:convex:consistency:and:natex}}

\begin{proposition}\label{prop:ax:rejects:RP:equivalents}
$\RP\group{\rejectset}=\rejectset$ for any convex coherent set of desirable option sets $\rejectset\in\convcohrejectsets$.
\end{proposition}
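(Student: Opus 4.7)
The plan is to prove the two inclusions separately, both of which should be essentially immediate from the definition of $\RP$ and the mixing axiom~\ref{ax:rejects:removepositivecombinations}.

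For the inclusion $\rejectset\subseteq\RP\group{\rejectset}$, I would observe that this holds for every set of desirable option sets, coherent or not. Indeed, for any $\optset\in\rejectset$, one can simply take $\altoptset\coloneqq\optset$ in the definition of $\RP$: then trivially $\optset\subseteq\altoptset\subseteq\posi\group{\optset}$ (since $\optset\subseteq\posi\group{\optset}$ always holds), witnessing that $\optset\in\RP\group{\rejectset}$.

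For the converse inclusion $\RP\group{\rejectset}\subseteq\rejectset$, this is where the assumption that $\rejectset$ is mixing---and in particular satisfies~\ref{ax:rejects:removepositivecombinations}---comes into play. Consider any $\optset\in\RP\group{\rejectset}$. By definition of $\RP$, there is some $\altoptset\in\rejectset$ such that $\optset\subseteq\altoptset\subseteq\posi\group{\optset}$. But this is precisely the hypothesis of Axiom~\ref{ax:rejects:removepositivecombinations}, which immediately yields $\optset\in\rejectset$.

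There is no real obstacle here: the statement is essentially a reformulation of the mixing axiom in fixed-point form, and the proof is a direct unfolding of the two definitions together with a single application of~\ref{ax:rejects:removepositivecombinations}. The only care needed is to note that the easy inclusion does not rely on coherence, whereas the harder one uses mixingness in its primary role.
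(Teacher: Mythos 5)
Your proof is correct and matches the paper's own argument exactly: the inclusion $\rejectset\subseteq\RP\group{\rejectset}$ follows immediately from the definition of $\RP$ (with $\altoptset\coloneqq\optset$), and the converse is a single application of Axiom~\ref{ax:rejects:removepositivecombinations} after unfolding the definition. Your remark that the easy inclusion needs no coherence while the converse is precisely the mixing axiom in fixed-point form is accurate and consistent with how the paper uses this result.
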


\begin{proof}
That $\rejectset\subseteq\RP\group{\rejectset}$ is an immediate consequence of the definition of the $\RP$ operator. 
For the converse inclusion, consider any $\optset\in\RP\group{\rejectset}$, which means that there is some $\altoptset\in\rejectset$ such that $\optset\subseteq\altoptset\subseteq\posi\group{\optset}$. 
But since $\rejectset$ satisfies Axiom~\ref{ax:rejects:removepositivecombinations}, this implies that also $\optset\in\rejectset$, whence indeed $\RP\group{\rejectset}\subseteq\rejectset$.
\end{proof}

\begin{proof}[Proof of Theorem~\ref{theo:rejectsets:convex:consistency:and:natex}]
We begin with the first statement, where it clearly suffices to prove that consistency implies convex consistency.
So assume that $\assessment$ is consistent, which means that there is some coherent set of desirable gamble sets $\rejectset\in\cohrejectsets$ such that $\assessment\subseteq\rejectset$.
The definition of $\RP$ allows us to extend this to $\assessment\subseteq\rejectset\subseteq\RP\group{\rejectset}$, and it follows from Proposition~\ref{prop:removal:of:posis:gewijzigde:axiomas} that $\RP\group{\rejectset}$ is convex coherent.

For the second statement, we will denote $\setposi\group{\singposopts\cup\assessment}$ by $\rejectset[o]$, $\natexrejectset\group{\assessment}=\RS\group{\rejectset[o]}$ by $\rejectset[1]$ and $\RP\group{\rejectset[1]}$ by $\rejectset[2]$, so we have to prove that $\convnatexrejectset\group{\assessment}=\rejectset[2]$.

It is clear from Theorem~\ref{theo:rejectsets:consistency:and:natex} that $\rejectset[1]$ is coherent and that $\assessment\subseteq\rejectset[1]$.
The definition of $\RP$ allows us to extend this to $\assessment\subseteq\rejectset[1]\subseteq\RP\group{\rejectset[1]}=\rejectset[2]$, and it follows from Proposition~\ref{prop:removal:of:posis:gewijzigde:axiomas} that $\rejectset[2]=\RP\group{\rejectset[1]}$ is convex coherent because $\rejectset[1]$ is coherent.
This already tells us that $\convnatexrejectset\group{\assessment}\subseteq\rejectset[2]$, so we now concentrate on the converse inclusion. 

Consider any convex coherent set of desirable option sets $\rejectset$ that includes $\assessment$, so $\rejectset\in\convcohrejectsets(\assessment)$.
Since the set of desirable option sets $\rejectset$ is in particular also coherent, Theorem~\ref{theo:rejectsets:consistency:and:natex} guarantees that $\rejectset[1]=\natexrejectset\group{\assessment}\subseteq\rejectset$.
Hence also $\rejectset[2]=\RP\group{\rejectset[1]}\subseteq\RP\group{\rejectset}=\rejectset$ [for the inclusion, use the definition of the $\RP$ operator, and for the second equality, use~Proposition~\ref{prop:ax:rejects:RP:equivalents}].
So $\rejectset[2]\subseteq\rejectset$ and therefore indeed also $\rejectset[2]\subseteq\convnatexrejectset(\assessment)$.
\end{proof}

% \begin{proposition}\label{prop:axioms:rejection:sets:and:functions}
% Consider any $\rejectset\in\rejectsets$ and any rejection function $\rejectfun$ that are connected by Equation~\eqref{eq:interpretation:rejectfuns:after:additivity:intermsof:K}.
% Then the following statements hold:
% \begin{enumerate}[label=\upshape(\roman*),leftmargin=*]
% \item\label{it:axioms:rejection:sets:and:functions:basic} $\rejectset$ satisfies Axioms~\ref{ax:rejects:removezero}--\ref{ax:rejects:mono} if and only if $\rejectfun$ satisfies Axioms~\ref{ax:rejectfun:addition}--\ref{ax:rejectfun:senalpha};
% \item\label{it:axioms:rejection:sets:and:functions:convexity} $\rejectfun$ satisfies Axiom~\ref{ax:rejectfun:removepositivecombinations}--\ref{ax:rejectfun:senalpha} if and only if $\rejectset$ satisfies Axiom~\ref{ax:rejects:removepositivecombinations};
% \end{enumerate}
% \end{proposition}

\section{Some material on sets of desirable options that I removed from the main text for now}

\begin{theorem}[Desirability representation theorem \cite{couso2011,cooman2011b,cooman2010,quaeghebeur2012:itip,quaeghebeur2015:statement}]\label{theo:desirsets:maximality:and:representation}
Consider any coherent set of desirable options $\desirset\in\cohdesirsets$, then it is dominated by some maximal coherent set of desirable options: $\maxdesirsets(\desirset)\coloneqq\cset{\maxdesirset\in\maxdesirsets}{\desirset\subseteq\maxdesirset}\neq\emptyset$.
Moreover, $\desirset=\bigcap\maxdesirsets(\desirset)$.
\end{theorem}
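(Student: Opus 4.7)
The plan is to prove both statements by combining a Zorn-type extension argument with a separating-element construction. For the first statement, I would apply Zorn's Lemma to the partially ordered set $\upset{\desirset}\coloneqq\cset{\desirset'\in\cohdesirsets}{\desirset\subseteq\desirset'}$, ordered by inclusion. To verify the hypothesis, I would take any non-empty chain $\mathcal{C}\subseteq\upset{\desirset}$ and show that $\desirset[o]\coloneqq\bigcup\mathcal{C}$ lies in $\upset{\desirset}$: clearly $\desirset\subseteq\desirset[o]$; Axiom~\ref{ax:desirs:nozero} holds because $0$ belongs to no $\desirset'\in\mathcal{C}$; Axiom~\ref{ax:desirs:pos} is immediate because $\posopts\subseteq\desirset'$ for every $\desirset'\in\mathcal{C}$; and Axiom~\ref{ax:desirs:cone} follows from the chain property, because any two elements $\opt,\altopt\in\desirset[o]$ lie in some common $\desirset'\in\mathcal{C}$, on which~\ref{ax:desirs:cone} holds. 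Zorn then yields a maximal element $\maxdesirset$ of $\upset{\desirset}$, and this $\maxdesirset$ is automatically maximal in $\cohdesirsets$ (any coherent $\desirset''\supseteq\maxdesirset$ also contains $\desirset$, hence lies in $\upset{\desirset}$ and equals $\maxdesirset$), so $\maxdesirsets(\desirset)\neq\emptyset$.

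For the second statement, the inclusion $\desirset\subseteq\bigcap\maxdesirsets(\desirset)$ is trivial, so I would focus on the reverse: for each $\opt\in\opts\setminus\desirset$, I would construct a $\maxdesirset\in\maxdesirsets(\desirset)$ with $\opt\notin\maxdesirset$. If $\opt=0$, any $\maxdesirset\in\maxdesirsets(\desirset)$ will do, by~\ref{ax:desirs:nozero}. If $\opt\neq 0$, I would define $\desirset^*\coloneqq\posi\group{\desirset\cup\set{-\opt}}$ and show that $\desirset^*\in\cohdesirsets$. By construction $\desirset^*$ is a convex cone (giving~\ref{ax:desirs:cone}), and $\posopts\subseteq\desirset\subseteq\desirset^*$ gives~\ref{ax:desirs:pos}, so the only axiom to verify is~\ref{ax:desirs:nozero}. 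Assuming for contradiction that $0=\alpha(-\opt)+\sum_{i=1}^n\lambda_i\altopt[i]$ with $\altopt[i]\in\desirset$, with $\alpha,\lambda_i\geq0$ and positive total mass, a case split yields a contradiction: if $\alpha=0$ and some $\lambda_i>0$, then repeatedly applying~\ref{ax:desirs:cone} places the resulting positive combination in $\desirset$, so $0\in\desirset$, contradicting~\ref{ax:desirs:nozero}; if $\alpha>0$ and some $\lambda_i>0$, then $\opt=\sum_{i=1}^n(\lambda_i/\alpha)\altopt[i]\in\desirset$ by~\ref{ax:desirs:cone}, contradicting $\opt\notin\desirset$; and if $\alpha>0$ with all $\lambda_i=0$, then $\opt=0$, contradicting our case assumption.

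Having established $\desirset^*\in\cohdesirsets$, I would apply the first part to $\desirset^*$ to obtain some $\maxdesirset\in\maxdesirsets(\desirset^*)\subseteq\maxdesirsets(\desirset)$; this $\maxdesirset$ contains $-\opt$, and cannot contain $\opt$, since otherwise~\ref{ax:desirs:cone} with $\lambda=\mu=1$ would give $0=\opt+(-\opt)\in\maxdesirset$, violating~\ref{ax:desirs:nozero}. I expect the main obstacle to be the verification that $\desirset^*$ is indeed coherent---specifically the~\ref{ax:desirs:nozero} check---since that is the one step where the algebraic structure of positive combinations must be carefully played off against the hypothesis $\opt\notin\desirset$; by contrast, the Zorn invocation and the final~\ref{ax:desirs:nozero} contradiction are routine. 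Note that totality of $\maxdesirset$ is nowhere used: the separation is driven purely by closing $\desirset$ under positive combinations after adjoining $-\opt$, which is why the argument goes through for the bare coherence axioms.
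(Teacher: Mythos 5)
Your proof is correct, but a caveat first: the paper does not actually prove this theorem---it is stated with citations and imported from the literature (Couso--Moral, De Cooman--Quaeghebeur), so there is no in-paper proof to compare against; your argument is, in fact, the standard one from those references. Both halves check out. The Zorn step is sound: the chain-union verification of Axioms~\ref{ax:desirs:nozero}--\ref{ax:desirs:cone} is exactly right (the only cosmetic point is that you should note $\upset{\desirset}$ is non-empty because $\desirset\in\upset{\desirset}$, which is what takes care of the empty chain), and your observation that a maximal element of $\upset{\desirset}$ is automatically maximal in $\cohdesirsets$ is correct. The separation step is also right: your case analysis showing $0\notin\posi\group{\desirset\cup\set{-\opt}}$ correctly plays the hypothesis $\opt\notin\desirset$ against Axioms~\ref{ax:desirs:nozero} and~\ref{ax:desirs:cone}, and the final incompatibility of $\opt$ and $-\opt$ in any coherent set clinches $\opt\notin\maxdesirset$. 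It is worth seeing how your argument runs parallel to the machinery the paper does build, for the option-set analogue (Theorem~\ref{theo:rejectsets:representation}): Lemmas~\ref{lem:chaincoherence} and~\ref{lem:Zorncoherence} are the $\rejectset$-level counterparts of your chain/Zorn step, while the counterpart of your one-line move of adjoining $-\opt$ and closing under $\posi$ is the substantially harder construction $\rejectset^{**}$ of Lemma~\ref{lem:Kstarstar}---the extra difficulty there being precisely that a desirable option \emph{set} only witnesses a disjunction, so one cannot simply adjoin a single new option. Your closing remark that totality of $\maxdesirset$ is nowhere used is accurate: maximality alone drives the separation, and the identification of maximal with total coherent sets is a separate fact the paper quotes from the literature.
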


Often in practical situations, a subject will typically not specify a full-fledged coherent set of desirable options sets, but will only provide some partial \emph{assessment} $\assessment\subseteq\opts$.
Such an assessment consists of those for which the subject is comfortable assessing that they are desirable.
The task before us, now, consists in extending this assessment~$\assessment$ to a coherent set of desirable options in a manner that is as conservative---or uninformative---as possible.
This is the essence of \emph{conservative inference}.

We say that a set of desirable options $\desirset[1]$ is less informative than (or rather, at most as informative as) a set of desirable options $\desirset[2]$, when $\desirset[1]\subseteq\desirset[2]$: a subject whose beliefs are represented by $\desirset[2]$ has more (or rather, at least as many) desirable options than a subject with beliefs represented by $\desirset[1]$.
The resulting partially ordered set $\structure{\desirsets,\subseteq}$ is a complete lattice with intersection as infimum and union as supremum.
The following theorem identifies an interesting substructure.

\begin{theorem}[\cite{cooman2011b,cooman2010,quaeghebeur2012:itip,quaeghebeur2015:statement}]\label{theo:conservative:inference:for:desirsets}
Let $\set{\desirset[i]}_{i\in I}$ be an arbitrary non-empty family of sets of desirable options, with intersection $\desirset\coloneqq\bigcap_{i\in I}\desirset[i]$.
If $\desirset[i]$ is coherent for all $i\in I$, then so is $\desirset$.
This implies that $\structure{\cohdesirsets,\subseteq}$ is a complete meet-semilattice.
\end{theorem}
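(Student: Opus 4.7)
The plan is to verify directly that each of the three coherence axioms \ref{ax:desirs:nozero}--\ref{ax:desirs:cone} for sets of desirable options is preserved under arbitrary non-empty intersections, and then observe that this yields the claimed complete meet-semilattice structure on $\cohdesirsets$ essentially for free. Since each axiom is a universally quantified closure condition (either ``$0$ does not belong'', ``every positive option belongs'', or ``every positive linear combination of two members belongs''), one expects all of them to pass through intersections without incident; there is no real obstacle here, the argument being a routine axiom-by-axiom check.

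First I would check Axiom~\ref{ax:desirs:nozero}: since $0\notin\desirset[i]$ for every $i\in I$ by coherence, we have $0\notin\bigcap_{i\in I}\desirset[i]=\desirset$. Next, for Axiom~\ref{ax:desirs:pos}: since $\posopts\subseteq\desirset[i]$ for every $i\in I$, intersecting over $i\in I$ yields $\posopts\subseteq\desirset$. For Axiom~\ref{ax:desirs:cone}, consider any $\opt,\altopt\in\desirset$ and $(\lambda,\mu)>0$. Then $\opt,\altopt\in\desirset[i]$ for every $i\in I$, so Axiom~\ref{ax:desirs:cone} applied to each coherent $\desirset[i]$ yields $\lambda\opt+\mu\altopt\in\desirset[i]$ for every $i\in I$, hence $\lambda\opt+\mu\altopt\in\desirset$. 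The requirement that $I$ be non-empty is used only in Axiom~\ref{ax:desirs:nozero} and is essential: the empty intersection would be all of $\opts$, which contains $0$ and violates coherence.

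Having established $\desirset\in\cohdesirsets$, the second assertion is then a formal consequence. Partially order $\cohdesirsets$ by set inclusion. What we have just shown is that every non-empty subset $\set{\desirset[i]\colon i\in I}$ of $\cohdesirsets$ admits a greatest lower bound in $\cohdesirsets$, namely $\bigcap_{i\in I}\desirset[i]$, because this intersection is coherent, is a lower bound by construction, and is obviously the largest such lower bound among all subsets of $\opts$. This is precisely the definition of a complete meet-semilattice (restricted, as is customary, to non-empty meets), so no further argument is required.
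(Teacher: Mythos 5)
Your proof is correct and is exactly the routine axiom-by-axiom verification the paper has in mind: the paper states this result without proof, citing it from the literature, and its twin for sets of desirable option sets (Theorem~\ref{theo:conservative:inference:for:rejectsets}) is explicitly described as having a trivial proof of this form. Your observations that non-emptiness of $I$ is needed only for Axiom~\ref{ax:desirs:nozero} and that the meet-semilattice claim concerns non-empty meets (there being no top element in $\structure{\cohdesirsets,\subseteq}$) are both accurate and complete the argument.
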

\noindent
This result is practically very important, as it allows us to a extend a partially specified set of desirable options to the most conservative coherent one that includes it.
It leads to directly to the following conservative inference procedure, called \emph{natural extension}.

\begin{definition}[Consistency and natural extension \cite{cooman2011b,cooman2010,quaeghebeur2012:itip,quaeghebeur2015:statement}]
For any assessment $\assessment\subseteq\opts$, let $\cohdesirsets\group{\assessment}\coloneqq\cset{\desirset\in\cohdesirsets}{\assessment\subseteq\desirset}$.
We call the assessment~$\assessment$ \emph{consistent} if\/ $\cohdesirsets\group{\assessment}\neq\emptyset$, and we then call\/ $\natexdesirset\group{\assessment}\coloneqq\bigcap\cohdesirsets\group{\assessment}$ the \emph{natural extension} of $\assessment$.
\end{definition}
\noindent
In other words: an assessment $\assessment$ is consistent if it can be extended to some coherent set of desirable options, and then its natural extension $\natexrejectset\group{\assessment}$ is the least informative such coherent set of desirable options.

A constructive expression for this natural extension and a simpler criterion for consistency are the subject of the following theorem.

\begin{theorem}[Consistency and natural extension \cite{cooman2011b,cooman2010,quaeghebeur2012:itip,quaeghebeur2015:statement}]\label{theo:desirsets:consistency:and:natex}
An assessment $\assessment\subseteq\optsets$ is consistent if and only if\/ $0\notin\posi\group{\posopts\cup\assessment}$.
In that case $\natexdesirset\group{\assessment}=\posi\group{\posopts\cup\assessment}$.
\end{theorem}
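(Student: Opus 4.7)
The plan is to identify $\desirset_1 \coloneqq \posi\group{\posopts\cup\assessment}$ as the natural extension candidate and show (i) that consistency is equivalent to $0 \notin \desirset_1$ by checking coherence of $\desirset_1$ under that condition, and (ii) that $\desirset_1$ is the smallest coherent set of desirable options containing $\assessment$.

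First I would prove the two directions of the consistency equivalence. For the `only if' part, assume $\assessment$ is consistent, so there is some $\desirset \in \cohdesirsets$ with $\assessment \subseteq \desirset$. Axioms \ref{ax:desirs:pos} and \ref{ax:desirs:cone} imply $\posopts \subseteq \desirset$ and that $\desirset$ is closed under arbitrary finite positive linear combinations, so $\desirset_1 = \posi\group{\posopts \cup \assessment} \subseteq \desirset$. By Axiom~\ref{ax:desirs:nozero}, $0 \notin \desirset$, whence $0 \notin \desirset_1$. For the `if' direction, suppose $0 \notin \desirset_1$. Then $\desirset_1$ satisfies \ref{ax:desirs:nozero} by assumption, satisfies \ref{ax:desirs:pos} because $\posopts \subseteq \posi\group{\posopts \cup \assessment}$ (each $\opt \in \posopts$ is $1 \cdot \opt$), and satisfies \ref{ax:desirs:cone} by the very definition of $\posi(\cdot)$ as the set of all finite positive linear combinations. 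Hence $\desirset_1 \in \cohdesirsets$, and since $\assessment \subseteq \desirset_1$, we obtain $\desirset_1 \in \cohdesirsets\group{\assessment}$, so $\assessment$ is consistent.

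For the second statement, assume consistency. The argument just given shows that $\desirset_1 \in \cohdesirsets\group{\assessment}$, hence $\natexdesirset\group{\assessment} = \bigcap \cohdesirsets\group{\assessment} \subseteq \desirset_1$. Conversely, for any $\desirset \in \cohdesirsets\group{\assessment}$ we have $\posopts \cup \assessment \subseteq \desirset$ by Axiom~\ref{ax:desirs:pos}, and then $\desirset_1 = \posi\group{\posopts \cup \assessment} \subseteq \desirset$ by Axiom~\ref{ax:desirs:cone}. Intersecting over all such $\desirset$ yields $\desirset_1 \subseteq \natexdesirset\group{\assessment}$, and we conclude $\natexdesirset\group{\assessment} = \desirset_1 = \posi\group{\posopts \cup \assessment}$.

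There is no real obstacle here: the result is the standard natural-extension theorem for sets of desirable options, and the proof is essentially the observation that $\posi\group{\posopts \cup \assessment}$ is the smallest convex cone in $\opts$ that contains $\posopts \cup \assessment$, together with the translation of coherence into the `contains $\posopts$, is a convex cone, avoids $0$' form given by Axioms~\ref{ax:desirs:nozero}--\ref{ax:desirs:cone}. The only care needed is to notice that the $\posi$ operator in~\eqref{eq:posioperator} already includes the trivial one-term combinations, so $\posopts \cup \assessment \subseteq \posi\group{\posopts \cup \assessment}$ holds without needing any extra closure step.
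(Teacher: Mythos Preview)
Your proof is correct and is exactly the standard argument for this well-known result. The paper does not give its own proof of this theorem; it is stated with citations to the literature (\cite{cooman2011b,cooman2010,quaeghebeur2012:itip,quaeghebeur2015:statement}) and used as background, so there is nothing further to compare against.
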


\newpage

\section{Additional results on closure and natural extension}\label{app:more:on:closure}

\begin{corollary}\label{corol:rejectsets:maximality}
For any coherent set of desirable option sets $\rejectset\in\cohrejectsets$, $\cset{\maxdesirset\in\maxdesirsets}{\rejectset\subseteq\rejectset[\maxdesirset]}\neq\emptyset$.
\end{corollary}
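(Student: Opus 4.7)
The plan is to combine the binary representation result for coherent choice models (Theorem~\ref{theo:coherentrepresentation:twosided}) with the fact that every coherent set of desirable options can be extended to a maximal (equivalently, total) one. Since the corollary only asserts the existence of a single dominating binary maximal model, there is no need to reprove a full representation theorem.

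First, I would invoke Theorem~\ref{theo:coherentrepresentation:twosided} to obtain some $\desirset\in\cohdesirsets$ with $\rejectset\subseteq\rejectset[\desirset]$; in other words, $\cohdesirsets\group{\rejectset}\neq\emptyset$, so we can fix any such $\desirset$. The goal is then to extend $\desirset$ to a maximal coherent set of desirable options $\maxdesirset\in\maxdesirsets$ while preserving the domination relation, and finally to transfer this domination to the level of sets of desirable option sets.

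The extension step will go via Zorn's Lemma on the partially ordered set $\upset{\desirset}\coloneqq\cset{\desirset'\in\cohdesirsets}{\desirset\subseteq\desirset'}$, ordered by inclusion. The only thing to verify is that the union of any non-empty chain in $\upset{\desirset}$ lies in $\cohdesirsets$ (and hence in $\upset{\desirset}$). This is the exact analogue of Lemma~\ref{lem:chaincoherence}, and it is immediate: \ref{ax:desirs:nozero} holds because $0$ belongs to no element of the chain; \ref{ax:desirs:pos} holds because each element contains $\posopts$; and \ref{ax:desirs:cone} holds because any two elements $\opt,\altopt$ of the union both lie in a common member of the chain, which is itself closed under positive linear combinations. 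Zorn's Lemma then yields a maximal element $\maxdesirset$ of $\upset{\desirset}$; by definition of maximality within $\upset{\desirset}$ and since $\upset{\desirset}\subseteq\cohdesirsets$, this $\maxdesirset$ is in fact maximal in $\cohdesirsets$, i.e.\ $\maxdesirset\in\maxdesirsets$, with $\desirset\subseteq\maxdesirset$.

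Finally, I would observe the trivial monotonicity property $\desirset\subseteq\maxdesirset\Rightarrow\rejectset[\desirset]\subseteq\rejectset[\maxdesirset]$: indeed, if $\optset\in\rejectset[\desirset]$ then $\optset\cap\desirset\neq\emptyset$ by Equation~\eqref{eq:desirset:to:rejectset}, hence $\emptyset\neq\optset\cap\desirset\subseteq\optset\cap\maxdesirset$, and so $\optset\in\rejectset[\maxdesirset]$. Combining this with $\rejectset\subseteq\rejectset[\desirset]$ gives $\rejectset\subseteq\rejectset[\maxdesirset]$, which shows that $\maxdesirset\in\cset{\maxdesirset'\in\maxdesirsets}{\rejectset\subseteq\rejectset[\maxdesirset']}$, and hence this set is non-empty.

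There is no real obstacle here: the proof is a short reduction that bundles together Theorem~\ref{theo:coherentrepresentation:twosided}, the chain argument for coherent sets of desirable options, and the monotone behaviour of $\desirset\mapsto\rejectset[\desirset]$. The only subtlety worth flagging is that the chain-union argument for sets of desirable options must be spelled out explicitly (or cited from the desirability literature \cite{couso2011,cooman2010}), since the analogous Lemma~\ref{lem:chaincoherence} in the appendix is formulated for sets of desirable option sets rather than for sets of desirable options.
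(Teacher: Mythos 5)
Your proof is correct, but it takes a different route from the one the paper intends. The paper derives this corollary from Theorem~\ref{theo:rejectsets:maximality}: Zorn's Lemma is applied at the level of sets of desirable option \emph{sets} (via Lemma~\ref{lem:Zorncoherence}) to produce a maximal element $\maxrejectset\in\maxrejectsets$ with $\rejectset\subseteq\maxrejectset$, and one then needs the structural identification $\maxrejectsets=\cset{\rejectset[\maxdesirset]}{\maxdesirset\in\maxdesirsets}$ of Theorem~\ref{theo:maximal:coherent:rejectsets} (whose proof rests on Proposition~\ref{prop:dominatebybinary}, i.e.\ on the fact that maximal elements of $\cohrejectsets$ must be binary) to read $\maxrejectset$ as $\rejectset[\maxdesirset]$ for some $\maxdesirset\in\maxdesirsets$. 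You instead take \emph{one} dominating binary model from Theorem~\ref{theo:coherentrepresentation:twosided} (equivalently, Theorem~\ref{theo:rejectsets:representation}), run Zorn at the much simpler level of sets of desirable \emph{options} to extend the representing $\desirset$ to a maximal $\maxdesirset\in\maxdesirsets$, and finish with the trivial monotonicity of $\desirset\mapsto\rejectset[\desirset]$. Your extension step is exactly the classical desirability result the paper records as Theorem~\ref{theo:desirsets:maximality:and:representation} and attributes to the literature, so you could cite it outright instead of redoing the chain argument (which you verify correctly in any case; your remark that the maximal element of $\upset{\desirset}$ is genuinely maximal in $\cohdesirsets$ holds because any coherent superset of such an element again contains $\desirset$). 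The trade-off: both routes ultimately lean on the same heavy machinery, since Theorem~\ref{theo:coherentrepresentation:twosided} is proved via Proposition~\ref{prop:nonbinary:is:dominated} and Lemma~\ref{lem:Kstarstar}; but your top-level reduction bypasses Theorem~\ref{theo:maximal:coherent:rejectsets} entirely --- a theorem whose proof is only sketched in this appendix --- and so yields a complete, self-contained argument, at the cost of not telling you anything about $\maxrejectsets$ itself, which the paper's route delivers as a by-product (your $\rejectset[\maxdesirset]$ is not claimed, and not needed, to be a maximal element of $\cohrejectsets$).
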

\begin{proof}
*** to be added; follows from Theorem~\ref{theo:rejectsets:maximality} ***
\end{proof}

\begin{theorem}[Consistency, natural extension and representation]\label{theo:rejectsets:consistency:natex:and:representation}
An assessment $\assessment\subseteq\optsets$ is consistent if and only if there is some coherent set of desirable options $\desirset\in\cohdesirsets$ such that $\assessment\subseteq\rejectset[\desirset]$.
In that case $\natexrejectset\group{\assessment}=\bigcap\cset{\rejectset[\desirset]}{\desirset\in\cohdesirsets\text{ and }\assessment\subseteq\rejectset[\desirset]}$.
\end{theorem}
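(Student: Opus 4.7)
The plan is to reduce everything to the representation Theorem~\ref{theo:rejectsets:representation}, which already expresses any coherent set of desirable option sets as an intersection of binary ones. The work is purely a matter of composing this representation with the definition of natural extension.

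For the first equivalence, I would argue as follows. Suppose $\assessment$ is consistent, so $\cohrejectsets(\assessment)\neq\emptyset$. Pick any $\rejectset\in\cohrejectsets(\assessment)$. By Theorem~\ref{theo:rejectsets:representation}, the set $\cohdesirsets(\rejectset)=\cset{\desirset\in\cohdesirsets}{\rejectset\subseteq\rejectset[\desirset]}$ is non-empty, and for any $\desirset$ in it we have $\assessment\subseteq\rejectset\subseteq\rejectset[\desirset]$, giving the desired $\desirset\in\cohdesirsets$. Conversely, if there is some $\desirset\in\cohdesirsets$ with $\assessment\subseteq\rejectset[\desirset]$, then Proposition~\ref{prop:fromCohDtoCohK} guarantees that $\rejectset[\desirset]\in\cohrejectsets$, so $\rejectset[\desirset]\in\cohrejectsets(\assessment)\neq\emptyset$, whence $\assessment$ is consistent.

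For the natural extension formula, write $\mathcal{S}\coloneqq\cset{\rejectset[\desirset]}{\desirset\in\cohdesirsets\text{ and }\assessment\subseteq\rejectset[\desirset]}$. One inclusion is immediate: every $\rejectset[\desirset]\in\mathcal{S}$ is coherent (Proposition~\ref{prop:fromCohDtoCohK}) and contains $\assessment$, so $\mathcal{S}\subseteq\cohrejectsets(\assessment)$, and hence $\natexrejectset(\assessment)=\bigcap\cohrejectsets(\assessment)\subseteq\bigcap\mathcal{S}$. For the reverse inclusion, I would fix an arbitrary $\rejectset\in\cohrejectsets(\assessment)$ and apply Theorem~\ref{theo:rejectsets:representation} to write $\rejectset=\bigcap\cset{\rejectset[\desirset]}{\desirset\in\cohdesirsets(\rejectset)}$. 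For each such $\desirset$ one has $\assessment\subseteq\rejectset\subseteq\rejectset[\desirset]$, so $\rejectset[\desirset]\in\mathcal{S}$, and consequently $\bigcap\mathcal{S}\subseteq\rejectset$. Since $\rejectset\in\cohrejectsets(\assessment)$ was arbitrary, $\bigcap\mathcal{S}\subseteq\bigcap\cohrejectsets(\assessment)=\natexrejectset(\assessment)$, closing the loop.

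I do not foresee a real obstacle here: the entire argument is a bookkeeping exercise that shows the binary representation of Theorem~\ref{theo:rejectsets:representation} transfers cleanly to the natural-extension setting. The only point that needs a little care is ensuring that the set $\cohrejectsets(\assessment)$ is indeed non-empty in the consistent case before intersecting, but that is precisely what the first part of the theorem establishes, so the two halves of the proof can be presented in the order given.
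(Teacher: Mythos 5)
Your proof is correct, and it reaches the result by a genuinely different assembly than the paper's, even though both rest on the same key tool, Theorem~\ref{theo:rejectsets:representation}. For the natural-extension formula, the paper channels everything through the single coherent object $\natexrejectset\group{\assessment}$: it notes that $\natexrejectset\group{\assessment}$ is coherent and equal to its own natural extension, applies the representation theorem once to it, and then rewrites the index set by proving the equivalence $\natexrejectset\group{\assessment}\subseteq\rejectset[\desirset]\ifandonlyif\assessment\subseteq\rejectset[\desirset]$, using monotonicity of natural extension together with $\natexrejectset\group{\rejectset[\desirset]}=\rejectset[\desirset]$. You instead apply the representation theorem pointwise to every $\rejectset\in\cohrejectsets\group{\assessment}$ and verify the two inclusions directly. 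Your route is marginally more elementary: it never needs the idempotence of $\natexrejectset$ (which in the paper implicitly rests on Theorem~\ref{theo:conservative:inference:for:rejectsets}), while the paper's chain-of-equalities version is more compact and makes explicit the structural fact that the natural extension of an assessment coincides with the representation of its natural extension. For the consistency claim, the paper's `only if' direction invokes Corollary~\ref{corol:rejectsets:maximality} about maximal coherent sets of desirable options, whereas you use the non-emptiness clause $\cohdesirsets\group{\rejectset}\neq\emptyset$ of Theorem~\ref{theo:rejectsets:representation} directly; your choice is arguably cleaner, since the corollary adds nothing here beyond what the representation theorem already asserts. One cosmetic slip: your closing caveat about checking that $\cohrejectsets\group{\assessment}$ is non-empty in the consistent case is vacuous, because non-emptiness of $\cohrejectsets\group{\assessment}$ \emph{is} the definition of consistency—but nothing in your argument actually depends on that remark, and the reverse inclusion in your second part even shows, as a by-product, that your family $\mathcal{S}$ is non-empty whenever $\assessment$ is consistent.
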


\begin{proof}
For the first statement, assume first that $\assessment$ is consistent, so it is by definition included in some coherent set of desirable option sets $\rejectset\in\cohrejectsets$: $\assessment\subseteq\rejectset$.
Corollary~\ref{corol:rejectsets:maximality} then guarantees that there is some coherent set of desirable options $\desirset\in\cohdesirsets$ such that $\rejectset\subseteq\rejectset[\desirset]$, and therefore also $\assessment\subseteq\rejectset[\desirset]$.

Conversely, assume that there is some coherent set of desirable options $\desirset\in\cohdesirsets$ such that $\assessment\subseteq\rejectset[\desirset]$, then $\rejectset$ is consistent because $\rejectset[\desirset]$ is a coherent set of desirable option sets, by Proposition~\ref{prop:fromCohDtoCohK}.

For the second statement, simply observe that 
\begin{align*}
\natexrejectset\group{\assessment}
&=\natexrejectset\group{\natexrejectset\group{\assessment}}\\
&=\bigcap\cset{\rejectset[\desirset]}
{\desirset\in\cohdesirsets\text{ and }\natexrejectset\group{\assessment}\subseteq\rejectset[\desirset]}
=\bigcap\cset{\rejectset[\desirset]}
{\desirset\in\cohdesirsets\text{ and }\assessment\subseteq\rejectset[\desirset]}.
\end{align*}
To see that the first equality holds, recall that $\natexrejectset\group{\assessment}$ is coherent, and that the natural extension of a coherent $\rejectset\in\cohrejectsets$ is $\rejectset$ itself.
The second equality is a direct consequence of Theorem~\ref{theo:rejectsets:representation}, because $\natexrejectset\group{\assessment}$ is coherent.
For the third equality, we need to prove that $\natexrejectset\group{\assessment}\subseteq\rejectset[\desirset]\ifandonlyif\assessment\subseteq\rejectset[\desirset]$.
Indeed, if $\natexrejectset\group{\assessment}\subseteq\rejectset[\desirset]$ then also $\assessment\subseteq\rejectset[\desirset]$, because $\assessment\subseteq\natexrejectset\group{\assessment}$.
Conversely, if $\assessment\subseteq\rejectset[\desirset]$ then also $\natexrejectset\group{\assessment}\subseteq\natexrejectset\group{\rejectset[\desirset]}=\rejectset[\desirset]$, where the equality follows because $\rejectset[\desirset]$ is coherent.
\end{proof}

Let us denote by $\Phi_\assessment$ the collection of all maps $\phi$ on $\assessment$ such that $\phi\group{\optset}\in\optset$ for all $\optset\in\assessment$.

\begin{theorem}\label{theo:breakdown:of:natural:extension}
Consider any assessment $\assessment\subseteq\optsets$, then
\begin{equation*}
\natexrejectset\group{\assessment}
=\bigcap_{\phi\in\Phi_\assessment}
\natexrejectset\group{\cset{\set{\phi\group{\optset}}}{\optset\in\assessment}}.
\end{equation*}
\end{theorem}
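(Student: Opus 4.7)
}

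My approach is to invoke the representation-based characterisation of natural extension (Theorem~\ref{theo:rejectsets:consistency:natex:and:representation}, itself a direct consequence of Theorem~\ref{theo:coherentrepresentation:twosided} applied to $\natexrejectset\group{\assessment}$), which states that for any consistent assessment
\begin{equation*}
\natexrejectset\group{\assessment}
=\bigcap\cset{\rejectset[\desirset]}{\desirset\in\cohdesirsets\text{ and }\assessment\subseteq\rejectset[\desirset]},
\end{equation*}
with the convention that an empty intersection inside $\optsets$ equals $\optsets$ (which also covers the inconsistent case, where $\natexrejectset\group{\assessment}=\optsets$). The plan is to rewrite both sides of the claimed identity in this form and then swap the order of intersections.

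The key step is the following equivalence, which I intend to establish for every $\desirset\in\cohdesirsets$:
\begin{equation*}
\assessment\subseteq\rejectset[\desirset]
\ifandonlyif
\group{\exists\phi\in\Phi_\assessment}\cset{\set{\phi\group{\optset}}}{\optset\in\assessment}\subseteq\rejectset[\desirset].
\end{equation*}
The right-to-left direction is immediate: if $\set{\phi(\optset)}\in\rejectset[\desirset]$ for all $\optset\in\assessment$, then $\phi(\optset)\in\desirset\cap\optset$, so $\optset\in\rejectset[\desirset]$ by Equation~\eqref{eq:desirset:to:rejectset}. The left-to-right direction is where the axiom of choice enters: from $\assessment\subseteq\rejectset[\desirset]$ we have $\optset\cap\desirset\neq\emptyset$ for every $\optset\in\assessment$, so we may select $\phi(\optset)\in\optset\cap\desirset$ to obtain the desired $\phi\in\Phi_\assessment$.

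Once this equivalence is in hand, the proof becomes a straightforward commutation of intersections:
\begin{align*}
\bigcap_{\phi\in\Phi_\assessment}\natexrejectset\group{\cset{\set{\phi\group{\optset}}}{\optset\in\assessment}}
&=\bigcap_{\phi\in\Phi_\assessment}\bigcap\cset{\rejectset[\desirset]}{\desirset\in\cohdesirsets,\,\cset{\set{\phi\group{\optset}}}{\optset\in\assessment}\subseteq\rejectset[\desirset]}\\
&=\bigcap\cset[\big]{\rejectset[\desirset]}{\desirset\in\cohdesirsets,\,(\exists\phi\in\Phi_\assessment)\cset{\set{\phi\group{\optset}}}{\optset\in\assessment}\subseteq\rejectset[\desirset]}\\
&=\bigcap\cset{\rejectset[\desirset]}{\desirset\in\cohdesirsets,\,\assessment\subseteq\rejectset[\desirset]}
=\natexrejectset\group{\assessment}.
\end{align*}
The main delicacy—and what I expect to be the only real obstacle—is bookkeeping around consistency: some individual $\cset{\set{\phi\group{\optset}}}{\optset\in\assessment}$ may be inconsistent even when $\assessment$ is consistent (its natural extension is then $\optsets$, contributing nothing to the outer intersection), and conversely if $\assessment$ itself is inconsistent then every $\phi$ yields an inconsistent refinement (so both sides equal $\optsets$). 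If $\assessment$ contains $\emptyset$, then $\Phi_\assessment=\emptyset$, and the convention that the empty intersection equals $\optsets$ matches the inconsistency of $\assessment$ on the left. All these cases are absorbed by the representation formula above, so the same chain of equalities handles them uniformly.
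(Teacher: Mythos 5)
Your proof is correct and follows essentially the same route as the paper's: both rest on the representation characterisation of Theorem~\ref{theo:rejectsets:consistency:natex:and:representation}, use a choice-function selection to convert $\group{\forall\optset\in\assessment}\optset\cap\desirset\neq\emptyset$ into $\group{\exists\phi\in\Phi_\assessment}\group{\forall\optset\in\assessment}\phi\group{\optset}\in\desirset$, and then commute the quantifiers (your swap of intersections is exactly the paper's fourth equivalence). Your explicit bookkeeping of the inconsistent and $\emptyset\in\assessment$ edge cases via the empty-intersection convention is a welcome touch the paper leaves implicit, but it does not change the argument.
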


\begin{proof}
Observe the following chain of equivalences, where the first and the last equivalences follow from Theorem~\ref{theo:rejectsets:consistency:natex:and:representation}:
\begin{align*}
\altoptset\in\natexrejectset\group{\assessment}
&\ifandonlyif\group{\forall\desirset\in\cohdesirsets}
\group[\Big]{\group[\big]{\group{\forall\optset\in\assessment}\optset\in\rejectset[\desirset]}
\then\altoptset\in\rejectset[\desirset]}\\
&\ifandonlyif\group{\forall\desirset\in\cohdesirsets}
\group[\Big]{\group[\big]{\group{\forall\optset\in\assessment}\optset\cap\desirset\neq\emptyset}
\then\altoptset\in\rejectset[\desirset]}\\
&\ifandonlyif\group{\forall\desirset\in\cohdesirsets}
\group[\Big]{\group[\big]{\group{\exists\phi\in\Phi_\assessment}\group{\forall\optset\in\assessment}\phi(A)\in\desirset}
\then\altoptset\in\rejectset[\desirset]}\\
&\ifandonlyif\group{\forall\desirset\in\cohdesirsets}\group{\forall\phi\in\Phi_\assessment}
\group[\Big]{\group[\big]{\group{\forall\optset\in\assessment}\phi(A)\in\desirset}
\then\altoptset\in\rejectset[\desirset]}\\
&\ifandonlyif\group{\forall\phi\in\Phi_\assessment}\group{\forall\desirset\in\cohdesirsets}
\group[\big]{\group{\cset{\phi\group{\optset}}{\optset\in\assessment}\subseteq\rejectset[\desirset]}
\then\altoptset\in\rejectset[\desirset]}\\
&\ifandonlyif\group{\forall\phi\in\Phi_\assessment}
\altoptset\in\natexrejectset\group{\cset{\phi\group{\optset}}{\optset\in\assessment}}.\qedhere
\end{align*}
\end{proof}

We now introduce the following notation, which we have already used before when introducing $\singposopts$.
Consider any set $W$, then $\tosingletons{W}\coloneqq\cset{\set{w}}{w\in W}$.
It allows us to rewrite Theorem~\ref{theo:breakdown:of:natural:extension} as $\natexrejectset\group{\assessment}=\bigcap_{\phi\in\Phi_\assessment}\natexrejectset\group{\tosingletons{\phi\group{\assessment}}}$, so we see that $\assessment$ is consistent if and only if $\tosingletons{\phi\group{\assessment}}$ is consistent for some $\phi\in\Phi_\assessment$.

This suggests that we can completely characterise (consistency and) the natural extension operator by its effect $\natexrejectset\group{\tosingletons{\assessment}}$ on sets $\tosingletons{\assessment}$, where $\assessment\subseteq\opts$ is now any set of \emph{options}.

\begin{theorem}\label{theo:consistency:and:natural:extension:on:singletons}
Consider any options assessment $\assessment\subseteq\opts$.
Then
\begin{enumerate}[label=\textup{(\roman*)},leftmargin=*]
\item the option sets assessment $\tosingletons{\assessment}$ is consistent if and only if the options assessment $\assessment$ is consistent, meaning that\/ $0\notin\posi\group{\posopts\cup\assessment}$;
\item if $\assessment$ is consistent, then $\natexrejectset\group{\tosingletons{\assessment}}=\SU\group{\tosingletons{\natexdesirset\group{\assessment}}}$.
\end{enumerate}
\end{theorem}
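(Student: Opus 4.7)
The plan is to reduce the theorem to the explicit formulas in Theorem~\ref{theo:rejectsets:consistency:and:natex} and Theorem~\ref{theo:desirsets:consistency:and:natex}, by exploiting the fact that the operator $\setposi\group{\cdot}$ acts particularly simply on a family of singletons. The preliminary step is to observe that $\singposopts\cup\tosingletons{\assessment}=\tosingletons{\posopts\cup\assessment}$, and then to compute, directly from Equation~\eqref{eq:setposi}, that for any $W\subseteq\opts$
\[
\setposi\group{\tosingletons{W}}=\tosingletons{\posi(W)}.
\]
The reason is that, when the $\optset[k]$ in~\eqref{eq:setposi} are themselves singletons, the Cartesian product $\times_{k=1}^n\optset[k]$ has a unique element, so the inner set collapses to the single option $\sum_{k=1}^n\lambda_k\altopt[k]$, which ranges over all of $\posi(W)$ as the coefficients and elements vary.

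With this identity in hand, part~(i) is immediate. By Theorem~\ref{theo:rejectsets:consistency:and:natex}, the assessment $\tosingletons{\assessment}$ is consistent iff $\emptyset\notin\tosingletons{\assessment}$---which holds vacuously since singletons are non-empty---and $\set{0}\notin\setposi\group{\singposopts\cup\tosingletons{\assessment}}=\tosingletons{\posi\group{\posopts\cup\assessment}}$; this is precisely $0\notin\posi\group{\posopts\cup\assessment}$, which by Theorem~\ref{theo:desirsets:consistency:and:natex} characterises consistency of $\assessment$. For part~(ii), chaining Theorem~\ref{theo:rejectsets:consistency:and:natex}, the singleton identity, and the formula $\natexdesirset\group{\assessment}=\posi\group{\posopts\cup\assessment}$ from Theorem~\ref{theo:desirsets:consistency:and:natex} yields
\[
\natexrejectset\group{\tosingletons{\assessment}}
=\RS\group{\setposi\group{\singposopts\cup\tosingletons{\assessment}}}
=\RS\group{\tosingletons{\natexdesirset\group{\assessment}}}.
\]
The only remaining task is to upgrade the outer $\RS$ to $\SU$.

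The main---admittedly modest---obstacle is therefore to verify that $\RS$ and $\SU$ agree on $\tosingletons{\natexdesirset\group{\assessment}}$. Unpacking the definitions, these two operators agree on a family of singletons $\tosingletons{D}$ precisely when $D\cap\nonposopts=\emptyset$, since otherwise the $\altoptset\setminus\nonposopts$ clause in the definition of $\RS$ becomes empty for some singleton in the family and so admits an arbitrary $\optset$. Hence it suffices to show that every coherent set of desirable options $\desirset$ satisfies $\desirset\cap\nonposopts=\emptyset$. This follows from the coherence Axioms~\ref{ax:desirs:nozero}--\ref{ax:desirs:cone}: $0\notin\desirset$ by~\ref{ax:desirs:nozero}, and if some $\opt\in\desirset$ had $\opt\optlteq0$ and $\opt\neq0$, then $-\opt\optgt0$ would give $-\opt\in\posopts\subseteq\desirset$ by~\ref{ax:desirs:pos}, whence $0=\opt+(-\opt)\in\desirset$ by~\ref{ax:desirs:cone}, contradicting~\ref{ax:desirs:nozero}. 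Applying this to the coherent $\natexdesirset\group{\assessment}$ then gives $\RS\group{\tosingletons{\natexdesirset\group{\assessment}}}=\SU\group{\tosingletons{\natexdesirset\group{\assessment}}}$, completing part~(ii).
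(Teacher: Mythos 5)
Your proof is correct and takes essentially the same route as the paper's: the singleton identity $\setposi\group{\tosingletons{W}}=\tosingletons{\posi\group{W}}$ (the paper's Lemma~\ref{lem:posi:tosingletons}) fed into Theorems~\ref{theo:rejectsets:consistency:and:natex} and~\ref{theo:desirsets:consistency:and:natex}. The only cosmetic difference is the last step of (ii): where the paper factors $\RS$ as $\RN\circ\SU$ via Lemma~\ref{lem:combineoperators} and observes that $\RN$ then acts trivially, you show directly that $\RS$ and $\SU$ agree on $\tosingletons{\desirset}$ whenever $\desirset\cap\nonposopts=\emptyset$, and you supply the short coherence argument for $\natexdesirset\group{\assessment}\cap\nonposopts=\emptyset$ that the paper merely asserts.
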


\begin{proof}
For the first statement, we recall from Theorem~\ref{theo:rejectsets:consistency:and:natex} that, since clearly $\emptyset\notin\tosingletons{\assessment}$, the consistency of the option sets assessment $\tosingletons{\assessment}$ is equivalent to $\set{0}\notin\setposi\group{\singposopts\cup\tosingletons{\assessment}}=\tosingletons{\posi\group{\posopts\cup\assessment}}$, where the equality follows from Lemma~\ref{lem:posi:tosingletons}.
This is equivalent to $0\notin\posi\group{\posopts\cup\assessment}$, and Theorem~\ref{theo:desirsets:consistency:and:natex} tells us is that this is, in turn, indeed equivalent to the consistency of the options assessment $\assessment$.

For the second statement, assume that the options assessment $\assessment$ is consistent, and therefore the option sets assessment $\tosingletons{\assessment}$ is consistent as well. 
But then
\begin{align*}
\natexrejectset\group{\tosingletons{\assessment}}
&=\RN\group{\SU\group{\setposi\group{\singposopts\cup\tosingletons{\assessment}}}}
=\RN\group{\SU\group{\tosingletons{\posi\group{\posopts\cup\assessment}}}}\\
&=\SU\group{\tosingletons{\posi\group{\posopts\cup\assessment}}}
=\SU\group{\tosingletons{\natexdesirset{\group{\assessment}}}},
\end{align*}
where the first equality follows from Theorem~\ref{theo:rejectsets:consistency:and:natex} and Lemma~\ref{lem:combineoperators}, the second one from Lemma~\ref{lem:posi:tosingletons}, and the fourth one from Theorem~\ref{theo:desirsets:consistency:and:natex}.
So it remains to explain where the third equality comes from: since $\assessment$ is consistent, $\posi\group{\posopts\cup\assessment}\cap\nonposopts=\emptyset$, so removing non-positive options from some superset of a singleton in  $\tosingletons{\posi\group{\posopts\cup\assessment}}$ can only result in a smaller superset of this same singleton.
\end{proof} 

\begin{lemma}\label{lem:posi:tosingletons}
For any options assessment $\assessment\subseteq\opts$, $\setposi\group{\tosingletons{\assessment}}=\tosingletons{\posi\group{\assessment}}$.
\end{lemma}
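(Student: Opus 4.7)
The plan is to unpack both sides directly from their definitions and then verify set equality by double inclusion. The whole proof is essentially bookkeeping; I do not anticipate any substantive obstacle, and the main thing to watch is the slight mismatch between the two types of coefficient constraints used by $\posi$ and $\setposi$.

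First I would observe that when $\rejectset = \tosingletons{\assessment}$, every tuple $(\optset[1],\dots,\optset[n]) \in \rejectset^n$ consists of singletons $\optset[k] = \set{w_k}$ with $w_k \in \assessment$. Consequently the Cartesian product $\times_{k=1}^n \optset[k]$ contains the single tuple $(w_1,\dots,w_n)$, and the inner set in the definition of $\setposi$ collapses to the singleton $\set{\sum_{k=1}^n \lambda_k w_k}$. Writing this out, one obtains
\begin{equation*}
\setposi\group{\tosingletons{\assessment}}
= \cset[\Big]{\set[\big]{\textstyle\sum_{k=1}^n \lambda_k w_k}}
{n\in\naturals,\, w_k\in\assessment,\, \lambda_{1:n}>0},
\end{equation*}
whereas
\begin{equation*}
\tosingletons{\posi\group{\assessment}}
= \cset[\Big]{\set[\big]{\textstyle\sum_{k=1}^n \lambda_k w_k}}
{n\in\naturals,\, w_k\in\assessment,\, \lambda_k\in\posreals}.
\end{equation*}
So the lemma reduces to reconciling the two coefficient conditions: $\lambda_{1:n}>0$, which allows some $\lambda_k$ to vanish as long as they are not all zero, versus $\lambda_k\in\posreals$, which requires every coefficient to be strictly positive.

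The inclusion $\tosingletons{\posi(\assessment)} \subseteq \setposi(\tosingletons{\assessment})$ is immediate, since any tuple of strictly positive coefficients satisfies $\lambda_{1:n} > 0$. For the reverse inclusion, given any element $\set{\sum_{k=1}^n \lambda_k w_k}$ of $\setposi(\tosingletons{\assessment})$, let $I \coloneqq \cset{k\in\set{1,\dots,n}}{\lambda_k > 0}$; by definition of $\lambda_{1:n} > 0$ we have $I \neq \emptyset$. Dropping the zero terms, $\sum_{k=1}^n \lambda_k w_k = \sum_{k\in I}\lambda_k w_k$, which is a $\posi$-combination of elements of $\assessment$ with strictly positive coefficients, whence $\set{\sum_{k=1}^n \lambda_k w_k} \in \tosingletons{\posi(\assessment)}$.

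The only remaining subtlety is the edge case $\assessment = \emptyset$: since $\naturals$ excludes $0$, neither $\posi(\emptyset)$ nor $\setposi(\tosingletons{\emptyset}) = \setposi(\emptyset)$ admits any tuple at all, so both sides equal $\emptyset$ and the claimed equality holds trivially in that case as well.
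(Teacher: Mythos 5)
Your proof is correct and follows essentially the same route as the paper's: both unpack the definition of $\setposi$, observe that for singleton option sets the Cartesian product $\times_{k=1}^n\optset[k]$ collapses to a single tuple so the inner set becomes a singleton, and identify the result with $\tosingletons{\posi\group{\assessment}}$. The only difference is that you explicitly reconcile the two coefficient conditions ($\lambda_{1:n}>0$, which permits vanishing entries, versus $\lambda_k\in\posreals$) by discarding zero terms, a small step the paper's proof absorbs implicitly into its final displayed equality.
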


\begin{proof}
Recall from Equation~\eqref{eq:setposi:appendix} that
\begin{align*}
\setposi\group{\tosingletons{\assessment}}
=\bigg\{
\bigg\{
\sum_{k=1}^n\lambda_{k}^{\opt[1:n]}\opt[k]
\colon
\opt[1:n]\in\times_{k=1}^n\optset[k]
\bigg\}
\colon
&n\in\naturals,(\optset[1],\dots,\optset[n])\in\group{\tosingletons{\assessment}}^n,\\[-11pt]
&\big(\forall\opt[1:n]\in\times_{k=1}^n\optset[k]\big)\,\lambda_{1:n}^{\opt[1:n]}>0
\bigg\},
\end{align*}
so if we remember that the option sets in $\tosingletons{\assessment}$ are singletons, this indeed simplifies to
\begin{equation*}
\setposi\group{\tosingletons{\assessment}}
=\cset[\bigg]{\set[\bigg]{\sum_{k=1}^n\lambda_{k}^{\opt[1:n]}\opt[k]}}
{\opt[1:n]\in\assessment^n,\,\lambda_{1:n}^{\opt[1:n]}>0,\,n\in\naturals}
=\tosingletons{\posi\group{\assessment}}.\qedhere
\end{equation*}
\end{proof}

\newpage
\section{On choice functions for horse lotteries}\label{app:horse:lotteries}
The following discussion is inspired by, but goes beyond, a section in Arthur Van Camp's PhD thesis \cite{2017vancamp:phdthesis}, where he shows that an idea by Zaffalon and Miranda \cite[Section~4]{zaffalon2017:incomplete:preferences} can be extended from a desirability to a choice function context.

\subsection{Vector-valued gambles and horse lotteries}
We will now focus on a particular type of option space, made up of so-called \emph{vector-valued gambles}.
We consider a non-empty space $\states$ of possible values of a variable $X$ whose value a subject is uncertain about.
We also consider a non-empty \emph{finite} set $\rewards$ of rewards, and will single out an element $\rho$ of that set for special attention.
We will denote the reduced reward set $\rewards\setminus\set{\rho}$ by $\rrewards$.

We denote by $\gblsonstates$ the set of all bounded real-valued maps---we also call such bounded maps \emph{gambles}---on $\states$, and similarly for $\gblsonstatesandrewards$ and $\gblsonstatesandrrewards$.
These sets are linear spaces, closed under point-wise addition of gambles, and point-wise multiplication of real numbers with gambles. 
They come with a typical and often used background ordering $\gblgt$, which goes back to the point-wise ordering of real-valued functions, denoted in this paper by `$\gblgteq$'.
For instance,
\begin{equation*}
\gbl\gblgt\altgbl
\ifandonlyif\group{\gbl\gblgteq\altgbl\text{ and }\gbl\neq\altgbl}
\ifandonlyif\group[\big]{\group{\forall x\in\states}\gbl(x)\geq\altgbl(x)\text{ and }\gbl\neq\altgbl},
\text{ for any $\gbl,\altgbl\in\gblsonstates$,}
\end{equation*}
and similarly for gambles on $\statesandrewards$ and $\statesandrrewards$.

Now consider any element $\gbl$ of $\gblsonstatesandrewards$---any gamble on $\statesandrewards$. 
For any $x\in\states$, the partial map $\gbl(x,\cdot)$ is an element of the linear space $\gblsonrewards$, and $\gbl$ can therefore be considered as a vector-valued gamble on $\states$.
Observe that now, with obvious notation overload,
\begin{equation*}
\gbl\gblgteq\altgbl
\ifandonlyif\group{\forall(x,r)\in\statesandrewards}\gbl(x,r)\geq\altgbl(x,r)
\ifandonlyif\group{\forall x\in\states}\gbl(x,\cdot)\gblgteq\altgbl(x,\cdot),
\end{equation*}
and therefore 
\begin{equation*}
\gbl\gblgt\altgbl
\ifandonlyif\group[\big]{\group{\forall x\in\states}\gbl(x,\cdot)\gblgteq\altgbl(x,\cdot)
\text{ and } 
\group{\exists x\in\states}\gbl(x,\cdot)\neq\altgbl(x,\cdot)}
\end{equation*}
so `$\gblgteq$' and `$\gblgt$' can also be seen as a point-wise---or rather state-wise---orderings of vector-valued gambles.

Horse lotteries are special vector-valued gambles:

\begin{definition}
We call a horse lottery $\hl$ on $\states$ with reward set $\rewards$ any map from $\statesandrewards$ to $\unit$ such that for all $x\in\states$, the partial map $\hl(x,\cdot)$ is a probability mass function on $\rewards$:
\begin{equation*}
(\forall x\in\states)
\group[\bigg]{\sum_{r\in\rewards}\hl(x,r)=1
\text{ and }
(\forall r\in\rewards)\hl(x,r)\geq0)}.
\end{equation*}
We collect all the horse lotteries on $\states$ with reward set $\rewards$ in the set $\hlsonstatesandrewards$, which is also denoted more simply by $\hls$ when it is clear from the context what the possibility space $\states$ and the reward set $\rewards$ are.
\end{definition}
Clearly, $\hls\subseteq\gblsonstatesandrewards$.
In contrast with $\gblsonstatesandrewards$, $\hls$ is not a linear space, but typically only closed under convex combinations.
This makes working with horse lotteries mathematically much more cumbersome than working with gambles---or even more generally, options. It should be clear why we prefer to work with gambles and options.
In what follows, we will build a mathematical interface between horse lotteries and our approach using abstract options, in order to find out what are the consequences of our results when translated to a horse lotteries context.

We introduce a linear \emph{gamblifier} map $\gamblifier$ from $\gblsonstatesandrewards$ to $\gblsonstatesandrrewards$ by letting
\begin{equation*}
\gamblifier(h)(x,r)\coloneqq h(x,r)\text{ for all $x\in\states$ and all $r\in\rrewards$},
\end{equation*}
so $\gamblifier(h)$ is the restriction of $h$ to the set $\statesandrrewards$, and is therefore clearly onto.
Interestingly, $\gamblifier$ is a \emph{bijection} between $\hlsonstatesandrewards$ and $\gamblifier(\gblsonstatesandrewards)$:

\begin{proposition}\label{prop:gamblifier:one:to:one}
$\gamblifier$ is one-to-one on $\hlsonstatesandrewards$. 
\end{proposition}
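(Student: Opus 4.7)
The plan is to exploit the normalisation constraint that defines horse lotteries: for any $\hl\in\hlsonstatesandrewards$ and any $x\in\states$, the values $\hl(x,r)$ for $r\in\rrewards$ together with the single value $\hl(x,\rho)$ must sum to $1$. Since $\gamblifier$ retains all values \emph{except} $\hl(x,\rho)$, that retained information already determines the dropped value, and injectivity should follow immediately.

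Concretely, I would suppose that $\hl[1],\hl[2]\in\hlsonstatesandrewards$ satisfy $\gamblifier(\hl[1])=\gamblifier(\hl[2])$. By definition of $\gamblifier$, this means $\hl[1](x,r)=\hl[2](x,r)$ for every $x\in\states$ and every $r\in\rrewards$. Fix an arbitrary $x\in\states$. Because $\hl[1]$ and $\hl[2]$ are horse lotteries, the partial maps $\hl[1](x,\cdot)$ and $\hl[2](x,\cdot)$ are probability mass functions on $\rewards$, and so
\begin{equation*}
\hl[1](x,\rho)
=1-\smashoperator[r]{\sum_{r\in\rrewards}}\hl[1](x,r)
=1-\smashoperator[r]{\sum_{r\in\rrewards}}\hl[2](x,r)
=\hl[2](x,\rho),
\end{equation*}
where the middle equality uses the agreement of $\hl[1]$ and $\hl[2]$ on $\rrewards$. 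Combining this with the agreement on $\rrewards$, we conclude that $\hl[1](x,r)=\hl[2](x,r)$ for all $r\in\rewards$, and since $x\in\states$ was arbitrary, $\hl[1]=\hl[2]$.

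There is really no obstacle here; the statement is essentially a restatement of the fact that a finite probability mass function is determined by all but one of its values once normalisation is fixed. The only thing to be careful about is to make explicit that we are using the full horse lottery constraint (not just non-negativity), which is why the singled-out reward $\rho$ plays a distinguished role in the definition of $\gamblifier$.
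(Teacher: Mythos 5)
Your proof is correct and uses exactly the same argument as the paper: the normalisation constraint $\sum_{r\in\rewards}\hl(x,r)=1$ lets you recover the dropped value $\hl(x,\rho)$ from $\gamblifier(\hl)$, so agreement on $\statesandrrewards$ forces agreement everywhere. No differences worth noting.
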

\begin{proof}
Consider arbitrary $\hl,\althl\in\hlsonstatesandrewards$, and assume that $\gamblifier\group{\hl}=\gamblifier\group{\althl}$.
This already tells us that $\hl(\cdot,r)=\althl(\cdot,r)$ for all $r\in\rrewards$.
Moreover,
\begin{align*}
\hl(\cdot,\rho)
=1-\sum_{r\in\rrewards}\hl(\cdot,r)
=1-\sum_{r\in\rrewards}\althl(\cdot,r)
=\althl(\cdot,\rho),
\end{align*}
and therefore, indeed, $\hl=\althl$.
\end{proof}

We will assume that there is some \emph{background ordering} $\hlgt$ on $\hls$ for which we will that it should satisfy the \emph{mixture independence property}, which for a general binary ordering $\hlsetgt$ on $\hls$ is given by:
\begin{enumerate}[label=$\protect{\hlprefgt[{\arabic*}]}$.,ref=$\protect{\hlprefgt[{\arabic*}]}$,leftmargin=*,start=0]
\item\label{ax:hlprefgt:sure:thing:principle} $\hlprefgt$ is a strict ordering---an irreflexive and transitive binary relation---on $\hls$ such that $\hl\hlprefgt\althl\ifandonlyif\alpha\hl+(1-\alpha)\althltoo\hlprefgt\alpha\althl+(1-\alpha)\althltoo$ for all $\hl,\althl,\althltoo\in\hls$ and $\alpha\in(0,1]$.
\end{enumerate}

\begin{runningexample}[A simple background ordering for horse lotteries]
Assume that the rewards are partially ordered, with a worst reward $\bot$ that is strictly dominated by all other rewards.
The ordering $\hlgt[\bot]$ on $\hls$ is defined by
\begin{align}
\hl\hlgt[\bot]\althl
&\ifandonlyif\group{\forall(x,r)\in\statesandrrewards[\bot]}\hl(x,r)\geq\althl(x,r)\text{ and }\hl\neq\althl\notag\\
&\ifandonlyif\gamblifier[\bot](\hl)\gblgt\gamblifier[\bot](\althl),\label{eq:background:ordering:hls}
\end{align}
so this background ordering~$\hlgt[\bot]$ is derived from the usual background ordering~$\gblgt$ on gambles on $\statesandrrewards[\bot]$.
It is the specific ordering considered by Zaffalon and Miranda \cite{zaffalon2017:incomplete:preferences} and Van Camp~\cite{2017vancamp:phdthesis}.
\qed
\end{runningexample}

\stilltodo{Give other simple examples for such a background ordering?}

\subsection{Seidenfeld et al.'s coherence axioms for rejection functions on horse lotteries}
Seidenfeld et al.~\cite{seidenfeld2010} consider rejection functions on horse lotteries, rather than the more abstract options we are considering there. 
These \emph{horse lotteries} are maps $\hlrejectfun\colon\hlsets\to\hlsets$ such that $\hlrejectfun\group{\hlset}\subseteq\hlset$ for all $\hlset\in\hlsets$, where $\hlsets$ is the set of all finite\footnote{Seidenfeld et al.~\cite{seidenfeld2010} also allow infinite closed sets of horse lotteries as arguments of rejection functions, but we see no need for this extra complication here.} subsets of $\hls$.
The axioms they impose on such rejection functions can be summarised as follows:
\begin{enumerate}[label=$\mathrm{S}^\ast_{\arabic*}$.,ref=$\mathrm{S}^\ast_{\arabic*}$,leftmargin=*,start=0]
\item\label{ax:seidenfeld:hlrejectfun:sure:thing:principle}
$\hlset\hlsetlt\althlset\ifandonlyif\alpha\hlset+(1-\alpha)\hl\hlsetlt\alpha\althlset+(1-\alpha)\hl$ for all $\hlset,\althlset\in\hlsets$, $\hl\in\hls$ and $\alpha\in(0,1]$;
\item\label{ax:seidenfeld:hlrejectfun:not:everything:rejected}
$\hlrejectfun\group{\emptyset}=\emptyset$ and $\hlrejectfun\group{\hlset}\neq\hlset$ for all $\hlset\in\hlsets\setminus\set{\emptyset}$;
\item\label{ax:seidenfeld:hlrejectfun:pos} for all $\hlset$ in $\hlsets$, all $\hl[1],\hl[2]\in\hls$ such that $\hl[1]\group{\cdot,\bestreward}\leq\hl[2]\group{\cdot,\bestreward}$ and $\hl[1]\group{\cdot,r}=\hl[2]\group{\cdot,r}=0$ for all $r\in\rewards\setminus\set{\worstreward,\bestreward}$, and all $\hl\in\hls\setminus\set{\hl[1],\hl[2]}$:
\begin{enumerate}[noitemsep,leftmargin=*,label=\upshape\alph*.,ref=\theenumi\upshape\alph*]
\item\label{ax:seidenfeld:hlrejectfun:pos:a}
if~$\hl[2]\in\hlset$ and~$\hl\in\hlrejectfun\group{\set{\hl[1]}\cup\hlset}$ then $\hl\in\hlrejectfun\group{\hlset}$;
\item\label{ax:seidenfeld:hlrejectfun:pos:b}
if~$\hl[1]\in\hlset$ and~$\hl\in\hlrejectfun\group{\hlset}$ then $\hl\in\hlrejectfun\group{\set{\hl[2]}\cup\hlset\setminus\set{\hl[1]}}$;
\end{enumerate}
\item\label{ax:seidenfeld:hlrejectfun:aizerman}
if\/ $\althlset\subseteq\hlrejectfun\group{\althlsettoo}$ and $\hlset\subseteq\althlset$ then\/ $\althlset\setminus\hlset\subseteq\hlrejectfun\group{\althlsettoo\setminus\hlset}$, for all $\hlset,\althlset,\althlsettoo\in\hlsets$;
\item\label{ax:seidenfeld:hlrejectfun:senalpha}
if $\hl\in\hlrejectfun\group{\hlset}$ and $\hlset\subseteq\althlset$, then also $\hl\in\hlrejectfun\group{\althlset}$, for all $\hl\in\hls$ and $\hlset,\althlset\in\hlsets$;
\item\label{ax:seidenfeld:hlrejectfun:removing:convex:combinations}
if $\hlset\subseteq\althlset\subseteq\chull\group{\hlset}$ then $\rejectfun\group{\althlset}\cap\hlset\subseteq\rejectfun\group{\althlset}$, for all $\hlset,\althlset\in\hlsets$.
\item\label{ax:seidenfeld:hlrejectfun:archimedean} for all $\hlset$, $\althlset$, $\althlsettoo$, $\althlset[n]$ and $\althlsettoo[n]$ in $\hlsets$ such that the sequence $\althlset[n]$ converges point-wise to~$\althlset$ and the sequence $\althlsettoo[n]$ converges point-wise to~$\althlsettoo$:
\begin{enumerate}[noitemsep,leftmargin=*,label=\upshape\alph*.,ref=\theenumi\upshape\alph*]
\item\label{ax:seidenfeld:hlrejectfun:archimedean:left}
if $\group{\forall n\in\naturals}\althlsettoo[n]\hlsetlt\althlset[n]$ and $\althlset\hlsetlt\hlset$ then $\althlsettoo\hlsetlt\hlset$;
\item\label{ax:seidenfeld:hlrejectfun:archimedean:right}
if $\group{\forall n\in\naturals}\althlsettoo[n]\hlsetlt\althlset[n]$ and $\hlset\hlsetlt\althlsettoo$ then $\hlset\hlsetlt\althlset$,
\end{enumerate}
\end{enumerate}
In these expressions, $\hlsetlt$ is the binary relation on $\hlsets$ defined by
\begin{equation*}
\hlset\hlsetlt\althlset\ifandonlyif\hlset\subseteq\hlrejectfun\group{\hlset\cup\althlset}
\text{ for all $\hlset,\althlset\in\hlsets$},
\end{equation*}
and it is assumed that the reward set is partially ordered with a top $\bestreward$ and bottom $\worstreward$.
Axiom~\ref{ax:seidenfeld:hlrejectfun:sure:thing:principle} is a mixture independence condition, which goes back to Savage's Sure Thing Principle \cite{savage1972,seidenfeld2010}.
Axiom~\ref{ax:seidenfeld:hlrejectfun:senalpha} is Sen's condition~$\alpha$ \cite{sen1971,sen1977} and Axiom~\ref{ax:seidenfeld:hlrejectfun:aizerman} is Aizermann's condition \cite{aizerman1985}.
Axiom~\ref{ax:seidenfeld:hlrejectfun:archimedean} is an Archimedeanity condition.

The reader will have observed that we make no notational distinction between the ordering $\hlsetgt$ on $\hlsets$ and its restriction on $\hls$, defined by $\hl\hlprefgt\althl\ifandonlyif\set{\hl}\hlsetgt\set{\althl}$, which, as a consequence of Axiom~\ref{ax:seidenfeld:hlrejectfun:sure:thing:principle} satisfies our Axiom~\ref{ax:hlprefgt:sure:thing:principle}.

\subsection{From horse lotteries to options and back}\label{sec:from:hls:to:opts:and:back}
We now allow ourselves to be inspired by the discussion in Sections~\ref{sec:choice:functions:and:interpretation}--\ref{sec:back:to:choice}, and in particular by Equation~\eqref{eq:interpretation:rejectfuns:after:irreflexivity:and:additivity:intermsof:K:Gert} to try and represent a rejection function $\hlrejectfun$ on horse lotteries by corresponding sets of desirable option sets $\rejectset$ in some, appropriately chosen, linear option space~$\opts$.
In other words, we would like to find some ordered linear space $\opts$ and a set of option sets $\rejectset\subseteq\optsets(\opts)$ such that
\begin{equation}\label{eq:representation:hlrejectfuns:tentative}
\group{\forall\hlset\in\hlsets}
\group{\forall\hl\in\hls}
\group[\big]{\hl\in\hlrejectfun\group{\hlset\cup\set{\hl}}
\ifandonlyif\hlset-\hl\in\rejectset}.
\end{equation}
Before we try and find out if---or when---this is at all possible, it will help us to identify the appropriate option space $\opts$ for doing this.
It is clear from looking at Equation~\eqref{eq:representation:hlrejectfuns:tentative} that this option space $\opts$ will have to contain differences $\althl-\hl$ for arbitrary $\hl,\althl\in\hls$, so $\hls-\hls\subseteq\opts$.
We will therefore try and represent rejection function on horse lotteries in the smallest linear subspace of $\gblsonstatesandrewards$ that includes $\hls-\hls=\cset{\althl-\hl}{\hl,\althl\in\hls}$:
\begin{equation*}
\difspaceonstatesandrewards
\coloneqq\linspan\group{\hls-\hls}
\subseteq\gblsonstatesandrewards,
\end{equation*}
also denoted more simply by $\difspace$ when it is clear from the context what the possibility space~$\states$ and the reward set~$\rewards$ are.
This option space has an simpler expression:

\begin{proposition}\label{prop:difspace}
$\difspace=\cset{\lambda(\althl-\hl)}{\lambda\in\posreals\text{ and }\hl,\althl\in\hls}$.
\end{proposition}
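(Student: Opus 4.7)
The proof plan is to show two inclusions between $\difspace=\linspan(\hls-\hls)$ and the set $W\coloneqq\cset{\lambda(\althl-\hl)}{\lambda\in\posreals\text{ and }\hl,\althl\in\hls}$. The inclusion $W\subseteq\difspace$ is immediate: for any $\lambda\in\posreals$ and $\hl,\althl\in\hls$, the element $\althl-\hl$ lies in $\hls-\hls$ and hence in $\difspace$, so by the linearity of $\difspace$ the scalar multiple $\lambda(\althl-\hl)$ belongs to $\difspace$ as well. The real work lies in the converse $\difspace\subseteq W$, and here the key fact I would exploit is that $\hls$ is a \emph{convex} set: any convex combination of horse lotteries is again a horse lottery.

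For the converse, I would start from an arbitrary element of $\difspace$, which can be written as a finite real linear combination $\sum_{k=1}^n\alpha_k(\althl[k]-\hl[k])$ with $\alpha_k\in\reals$ and $\hl[k],\althl[k]\in\hls$. First, I would use the trivial identity $\alpha(\althl-\hl)=(-\alpha)(\hl-\althl)$ to replace any negative coefficient by a positive one, at the price of swapping the roles of the two horse lotteries. After this sign-flipping step (and discarding any terms with zero coefficient), I am left with a representation $\sum_{k=1}^m\beta_k(\althl[k]'-\hl[k]')$ where now $\beta_k\in\posreals$ and still $\hl[k]',\althl[k]'\in\hls$. If $m=0$ the element is $0$, which equals $1\cdot(\hl-\hl)$ for any $\hl\in\hls$ and hence lies in $W$, so I may assume $m\geq1$.

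Next, letting $\lambda\coloneqq\sum_{k=1}^m\beta_k\in\posreals$, I would rewrite the sum as
\begin{equation*}
\lambda\smashoperator{\sum_{k=1}^m}\frac{\beta_k}{\lambda}\althl[k]'
-\lambda\smashoperator{\sum_{k=1}^m}\frac{\beta_k}{\lambda}\hl[k]'
=\lambda(\tilde{\althl}-\tilde{\hl}),
\end{equation*}
where $\tilde{\althl}\coloneqq\sum_k(\beta_k/\lambda)\althl[k]'$ and $\tilde{\hl}\coloneqq\sum_k(\beta_k/\lambda)\hl[k]'$ are convex combinations of horse lotteries. Invoking the convexity of $\hls$, both $\tilde{\althl}$ and $\tilde{\hl}$ belong to $\hls$, and since $\lambda\in\posreals$ this exhibits the original element as a member of $W$.

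The argument is essentially bookkeeping; the only substantive ingredient is the convexity of $\hls$, which is what guarantees that the normalised averages $\tilde{\althl}$ and $\tilde{\hl}$ remain horse lotteries. There is no real obstacle, but one should be careful to handle the degenerate case where the linear combination collapses to zero, which is why I single it out above.
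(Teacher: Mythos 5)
Your proof is correct and follows essentially the same route as the paper's: flip signs to make all coefficients positive, then normalise by the total weight $\lambda$ so that the two sums become convex combinations, which lie in $\hls$ by convexity. Your explicit treatment of the degenerate case where the combination collapses to $0$ is a small point of extra care the paper glosses over, but the argument is otherwise identical.
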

\begin{proof}
It is clear that $\difspace$ includes $\cset{\lambda(\althl-\hl)}{\lambda\in\posreals\text{ and }\hl,\althl\in\hls}$, because it is by definition closed under taking linear combinations of vectors. 
So it remains to prove that, conversely, $\difspace\subseteq\cset{\lambda(\althl-\hl)}{\lambda\in\posreals\text{ and }\hl,\althl\in\hls}$.
Any element $\gbl$ of $\difspace$ can be written as
\begin{equation*}
\gbl=\sum_{k=1}^n\lambda_k\group{\althl[k]-\hl[k]}
\text{ with $\naturals$, $\lambda_k\in\reals$ and $\hl[k],\althl[k]\in\hls$}.
\end{equation*} 
First, observe that we can assume without loss of generality that all $\lambda_k>0$---otherwise, simply switch the roles of $\hl[k]$ and $\althl[k]$.
It is now a matter of some algebra and straightforward verification that
\begin{equation*}
\gbl
=\group{\lambda_1+\dots+\lambda_n}
\group[\bigg]{\sum_{k=1}^n\frac{\lambda_k}{\lambda_1+\dots+\lambda_n}\althl[k]
-\sum_{k=1}^n\frac{\lambda_k}{\lambda_1+\dots+\lambda_n}\hl[k]},
\end{equation*}
and because $\lambda\coloneqq\lambda_1+\dots+\lambda_n>0$, $\hl\coloneqq\sum_{k=1}^n\frac{\lambda_k}{\lambda_1+\dots+\lambda_n}\hl[k]\in\hls$ and $\althl\coloneqq\sum_{k=1}^n\frac{\lambda_k}{\lambda_1+\dots+\lambda_n}\althl[k]\in\hls$, we see that, indeed $\gbl\in\cset{\lambda(\althl-\hl)}{\lambda\in\posreals\text{ and }\hl,\althl\in\hls}$.
\end{proof}
\noindent Elements $\gbl$ of $\difspace$ are positive multiples of differences between horse lotteries, but this `representation' is not unique: there may be more than one way to write $\gbl$ as such a rescaled difference.

The following result is essentially due to Zaffalon and Miranda \cite{zaffalon2017:incomplete:preferences}. 
It shows that working with the linear option space $\difspace$ is essentially equivalent to working with the linear option space $\gblsonstatesandrrewards$.
This is sufficient reason for Zaffalon and Miranda to turn away from $\difspace$, and work with $\gblsonstatesandrrewards$ instead.
We prefer to stick with $\difspace$ for a while longer, because it does not require us to single out a `special' reward $\rho$.

\begin{proposition}
$\gamblifier$ is a linear isomorphism between $\difspace$ and $\gblsonstatesandrrewards$.
\end{proposition}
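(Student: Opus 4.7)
The plan is to first identify $\difspace$ more explicitly, then read off linearity, injectivity and surjectivity in a few lines each.

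The key preliminary step is to show
\begin{equation*}
\difspace
=\cset[\big]{\gbl\in\gblsonstatesandrewards}
{(\forall x\in\states)\textstyle\sum_{r\in\rewards}\gbl(x,r)=0}.
\end{equation*}
The inclusion $\subseteq$ is easy: for any two horse lotteries $\hl,\althl\in\hls$ and any $\lambda\in\reals$, the gamble $\lambda(\althl-\hl)$ has row-sum $\lambda(1-1)=0$, and by Proposition~\ref{prop:difspace} every element of $\difspace$ has this form. For $\supseteq$, I would use the standard trick of writing a bounded zero-sum $\gbl$ as a rescaled difference of two perturbations of the uniform horse lottery: with $n\coloneqq\card{\rewards}$ and $\lambda\geq\nicefrac{n}{2}\sup_{(x,r)}\abs{\gbl(x,r)}$, the maps
\begin{equation*}
\althl(x,r)\coloneqq\tfrac{1}{n}+\tfrac{1}{2\lambda}\gbl(x,r),
\qquad
\hl(x,r)\coloneqq\tfrac{1}{n}-\tfrac{1}{2\lambda}\gbl(x,r)
\end{equation*}
are elements of $\hls$ (nonnegativity follows from the choice of $\lambda$, and the rows sum to $1$ because $\gbl$ has zero row-sums), and by construction $\gbl=\lambda(\althl-\hl)\in\difspace$.

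With this characterisation in hand, the rest is essentially automatic. Linearity of the restriction~$\gamblifier$ is obvious from its definition. For injectivity on $\difspace$, if $\gbl\in\difspace$ and $\gamblifier(\gbl)=0$, then $\gbl(x,r)=0$ for all $r\in\rrewards$, and the zero-row-sum property forces $\gbl(x,\rho)=-\sum_{r\in\rrewards}\gbl(x,r)=0$ as well, so $\gbl=0$. Alternatively, one can invoke Proposition~\ref{prop:gamblifier:one:to:one} together with Proposition~\ref{prop:difspace} directly. For surjectivity, given any $f\in\gblsonstatesandrrewards$, extend it to a bounded gamble $\gbl$ on $\statesandrewards$ by setting $\gbl(x,r)\coloneqq f(x,r)$ for $r\in\rrewards$ and $\gbl(x,\rho)\coloneqq-\sum_{r\in\rrewards}f(x,r)$; this $\gbl$ is bounded, has zero row-sums, hence lies in $\difspace$, and manifestly satisfies $\gamblifier(\gbl)=f$.

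The only step that requires any real work is the $\supseteq$ inclusion in the characterisation of $\difspace$, and even there the obstacle is only finding the right normalisation so that $\althl$ and $\hl$ land in $\unit$; the finiteness of $\rewards$ and the boundedness of gambles make this routine. Everything else is bookkeeping, which is precisely why Zaffalon and Miranda's reformulation in terms of $\gblsonstatesandrrewards$ is attractive.
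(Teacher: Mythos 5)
Your proof is correct, and it takes a genuinely different route from the paper's. The paper never isolates a characterisation of $\difspace$: it proves injectivity by a mixture trick---from $\gamblifier\group{\lambda_1(\althl[1]-\hl[1])}=\gamblifier\group{\lambda_2(\althl[2]-\hl[2])}$ it rearranges to an equality of $\gamblifier$-images of two \emph{convex mixtures} of horse lotteries, uses the normalisation of horse lotteries to identify those mixtures, and unwinds---and it proves surjectivity by extending $\altgbl$ to some $\gbl\in\gblsonstatesandrewards$ and invoking Lemma~\ref{lem:from:difspace:to:hls} with an $\hl$ whose components are bounded away from zero. You instead first establish the structural fact that $\difspace$ is exactly the subspace of bounded gambles with zero row-sums, after which injectivity is a trivial kernel computation and surjectivity is the explicit zero-row-sum extension. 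Your $\supseteq$ construction is essentially a symmetric, uniform-lottery special case of Lemma~\ref{lem:from:difspace:to:hls}; the paper's more general version (arbitrary $\hl$ bounded away from zero) is reused later in its horse-lottery correspondence proofs, which is why the paper factors it out, but for this one proposition your version is more self-contained, and the zero-row-sum characterisation is a reusable by-product the paper never states. Indeed, your surjectivity argument is tighter than the paper's as written: the paper takes ``any'' preimage $\gbl$ of $\altgbl$, whereas Lemma~\ref{lem:from:difspace:to:hls} (whose proof uses $\sum_{r\in\rewards}\gbl(\cdot,r)=0$) really requires the zero-row-sum extension---precisely the one you write down explicitly. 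Two microscopic points: in the $\supseteq$ step take $\lambda$ strictly positive, e.g.\ $\lambda\coloneqq\max\set[\big]{1,\nicefrac{n}{2}\sup\abs{\gbl}}$, so that the case $\gbl=0$ is covered; and your parenthetical claim that injectivity follows ``directly'' from Propositions~\ref{prop:gamblifier:one:to:one} and~\ref{prop:difspace} is a slight gloss, since one still needs the convex-mixture rearrangement before Proposition~\ref{prop:gamblifier:one:to:one} applies (that \emph{is} the paper's argument)---but this aside is immaterial, as your kernel argument is complete on its own.
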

\begin{proof}
The map $\gamblifier$ is obviously linear, so it only remains to prove that it is a bijection.

To prove that $\gamblifier$ is one-to-one, consider any $\lambda_1,\lambda_2\in\posreals$ and $\hl[1],\hl[2],\althl[1],\althl[2]\in\hls$, and assume that $\gamblifier\group{\gbl[1]}=\gamblifier\group{\gbl[2]}$, with $\gbl[1]\coloneqq\lambda_1\group{\althl[1]-\hl[1]}$ and $\gbl[2]\coloneqq\lambda_1\group{\althl[2]-\hl[2]}$.
It follows from the linearity of $\gamblifier$ that then also
\begin{equation*}
\frac{\lambda_1\gamblifier(\althl[1])+\lambda_2\gamblifier(\hl[2])}{\lambda_1+\lambda_2}
=\frac{\lambda_2\gamblifier(\althl[2])+\lambda_1\gamblifier(\hl[1])}{\lambda_1+\lambda_2},
\end{equation*}
and therefore also, due to the normalisation property of horse lotteries,
\begin{equation*}
\frac{\lambda_1\althl[1]+\lambda_2\hl[2]}{\lambda_1+\lambda_2}
=\frac{\lambda_2\althl[2]+\lambda_1\hl[1]}{\lambda_1+\lambda_2},
\end{equation*}
which is indeed equivalent to $\gbl[1]=\lambda_1\group{\althl[1]-\hl[1]}=\lambda_1\group{\althl[2]-\hl[2]}=\gbl[2]$.

To prove that $\gamblifier$ is onto, consider any $\altgbl\in\gblsonstatesandrrewards$, then we need to prove that there are $\lambda\in\posreals$ and $\hl,\althl\in\hls$ such that $\altgbl=\gamblifier\group{\lambda\group{\althl-\hl}}$.
If we consider any $\gbl\in\gblsonstatesandrewards$ such that $\altgbl=\gamblifier(\gbl)$---and this is always possible---then it suffices to prove that we can make sure that $\gbl=\lambda\group{\althl-\hl}$.
But this follows readily from Lemma~\ref{lem:from:difspace:to:hls}, by choosing $\hl$ such that $\hl(\cdot,r)$ is bounded away from zero for all $r\in\rewards$.
\end{proof}

\begin{lemma}\label{lem:from:difspace:to:hls}
Consider arbitrary $\gbl\in\difspace$ and any $\hl\in\hls$ such that $\hl(\cdot,r)$ is bounded away from zero for all $r\in\rewards$.
Then there are real $\lambda>0$ and $\althl\in\hls$ such that $\gbl=\lambda(\althl-\hl)$.
\end{lemma}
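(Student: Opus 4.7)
The plan is to start from the representation in Proposition~\ref{prop:difspace}, which gives us $\gbl=\lambda_0(\althl'-\hl')$ for some $\lambda_0>0$ and $\hl',\althl'\in\hls$, and then show that we can \emph{rescale} so that the subtracted lottery becomes precisely the prescribed $\hl$. Concretely, for any candidate $\lambda>0$ I would simply define $\althl\coloneqq\hl+\tfrac{1}{\lambda}\gbl$, and the task reduces to choosing $\lambda$ large enough that this $\althl$ lies in $\hls$.

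First I would check the normalisation condition. For every $x\in\states$,
\begin{equation*}
\sum_{r\in\rewards}\althl(x,r)
=\sum_{r\in\rewards}\hl(x,r)+\frac{1}{\lambda}\sum_{r\in\rewards}\gbl(x,r)
=1+\frac{1}{\lambda}\sum_{r\in\rewards}\gbl(x,r),
\end{equation*}
and the second sum vanishes because $\gbl\in\difspace$: indeed, by Proposition~\ref{prop:difspace}, $\gbl=\lambda_0(\althl'-\hl')$ with both $\hl',\althl'\in\hls$, so $\sum_r\gbl(x,r)=\lambda_0(1-1)=0$. Hence $\sum_r\althl(x,r)=1$ for any choice of $\lambda$.

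Next I would handle the non-negativity condition $\althl(x,r)\geq0$, which is the only real content. Here I use the two boundedness hypotheses: since $\hl(\cdot,r)$ is bounded away from zero for every $r\in\rewards$ (and $\rewards$ is finite), there is some $\epsilon>0$ with $\hl(x,r)\geq\epsilon$ for all $(x,r)\in\statesandrewards$; and since $\gbl$ is a bounded real-valued map on a finite-reward product space, there is some $M>0$ with $\gbl(x,r)\geq-M$ for all $(x,r)$. Choosing any $\lambda\geq\max\{\lambda_0,\,M/\epsilon\}>0$ then gives
\begin{equation*}
\lambda\hl(x,r)+\gbl(x,r)\geq\lambda\epsilon-M\geq0,
\end{equation*}
so that $\althl(x,r)\geq0$ for all $(x,r)$. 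Together with the normalisation, this places $\althl\in\hls$, and by construction $\gbl=\lambda(\althl-\hl)$, which is what we need.

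There is no real obstacle: the key observation is simply that the defining constraint on $\difspace$ forces $\sum_r\gbl(x,r)=0$ pointwise in $x$, and that the positivity requirement can always be met by scaling. I do not expect any technical subtlety beyond verifying these two facts carefully, and then noting that $[0,1]$-valuedness of $\althl$ follows automatically from non-negativity plus the unit sum.
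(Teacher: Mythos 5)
Your proof is correct and takes essentially the same route as the paper's: define $\althl\coloneqq\hl+\frac{1}{\lambda}\gbl$, observe that normalisation holds automatically because elements of $\difspace$ have zero reward-sums at every state, and pick $\lambda$ large enough that non-negativity holds using the uniform lower bound on $\hl$ and the boundedness of $\gbl$ (the paper uses reward-dependent bounds $\epsilon_r$ and the negative part $\gbl^-$ where you use a single $\epsilon$ and $M$, which is equivalent given that $\rewards$ is finite). Your inclusion of $\lambda_0$ in the threshold is harmless but unnecessary, and your closing remark correctly disposes of the $[0,1]$-valuedness requirement.
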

\begin{proof}
Consider, for any real $\lambda>0$, the real-valued function $\althl[\lambda]\coloneqq\hl+\frac{1}{\lambda}h$ on $\statesandrewards$. 
It suffices to show that there is some $\lambda>0$ for which $\althl[\lambda]\in\hls$.
Observe first of all that 
\begin{equation*}
\sum_{r\in\rewards}\althl[\lambda](\cdot,r)
=\sum_{r\in\rewards}\hl(\cdot,r)+\frac{1}{\lambda}\sum_{r\in\rewards}\gbl(\cdot,r)
=1+0=1,
\end{equation*}
where the second equality holds because $\hl\in\hls$ and $\gbl\in\difspace$.
So it suffices to show that there is some $\lambda>0$ for which $\althl[\lambda](x,r)=\hl(x,r)+\frac{1}{\lambda}\gbl(x,r)\geq0$ for all $(x,r)\in\statesandrewards$.
Since we also have that $\hl(x,r)\geq0$ for all $(x,r)\in\statesandrewards$, this condition is equivalent to
\begin{equation}\label{eq:from:difspace:to:hls}
\hl(x,r)\geq\frac{1}{\lambda}\gbl^-(x,r)\text{ for all $(x,r)\in\statesandrewards$},
\end{equation}
where we let $\gbl^-\coloneqq\max\set{0,-\gbl}$ be the negative part of $\gbl$.
It follows from the assumptions that for any $r\in\rewards$ there is some $\epsilon_r>0$ such that $\hl(x,r)\geq\epsilon_r$ for all $x\in\states$, where, of course, the bounds $\epsilon_r$ must satisfy $\sum_{r\in\rewards}\epsilon_r\leq1$.
So a sufficient condition for~\eqref{eq:from:difspace:to:hls} to hold is that
\begin{equation*}
\epsilon_r\geq\frac{1}{\lambda}\gbl^-(x,r)
\text{ for all $(x,r)\in\statesandrewards$, or equivalently, }
\lambda\geq\sup_{(x,r)\in\statesandrewards}\frac{\gbl^-(x,r)}{\epsilon_r}.
\end{equation*}
The supremum on the right hand side of the last inequality is real, because $\gbl$ is a gamble, and therefore so is $\gbl^-$, and because $\rewards$ is finite.
\end{proof}

We also use the background order $\hlgt$ on $\hls$ to induce a background vector ordering $\optgt$ on the option space $\difspace$, by letting
\begin{equation}\label{eq:background:ordering:on:options:from:horse:lotteries}
\gbl\optgt0\ifandonlyif\hl\hlgt\althl\text{ where $\gbl=\lambda(\hl-\althl)$ is \emph{any} representation of $\gbl\in\difspace$},
\end{equation}
and more generally $\altgbltoo\optgt\altgbl\ifandonlyif\altgbltoo-\altgbl\optgt0$ for all $\altgbltoo,\altgbl\in\difspace$.
Since the horse lottery representation for elements of $\difspace$ is not unique, we need, in order for this definition to be consistent, for $\lambda_1(\hl[1]-\althl[1])=\lambda_2(\hl[1]-\althl[2])$ to imply that $\hl[1]\hlgt\althl[1]\ifandonlyif\hl[2]\hlgt\althl[2]$.
It is suffices to require that the ordering $\hlgt$ satisfies the mixture independence property of Axiom~\ref{ax:hlprefgt:sure:thing:principle}.
A similar argument is made by Zaffalon and Miranda \cite{zaffalon2017:incomplete:preferences}.

\begin{proposition}[\cite{zaffalon2017:incomplete:preferences}]\label{prop:consistency:for:horselotteries:preference:ordering}
Assume that an ordering $\hlprefgt$ on $\hls$ satisfies the mixture independence property of Axiom~\ref{ax:hlprefgt:sure:thing:principle}, and consider any real $\lambda_1,\lambda_2>0$ and any $\hl[1],\hl[2],\althl[1],\althl[2]\in\hls$ such that $\lambda_1(\hl[1]-\althl[1])=\lambda_2(\hl[2]-\althl[2])$.
Then $\hl[1]\hlprefgt\althl[1]\ifandonlyif\hl[2]\hlprefgt\althl[2]$.
\end{proposition}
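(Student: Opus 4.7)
My plan is to convert the given equality between scaled differences of horse lotteries into two equalities between convex combinations lying in $\hls$, and then apply the mixture independence property $\hlprefgt_0$ twice, once in each direction, to shuttle the preference from the first pair to the second.

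More concretely, first I would rewrite $\lambda_1(\hl[1]-\althl[1])=\lambda_2(\hl[2]-\althl[2])$ as
\begin{equation*}
\lambda_1\hl[1]+\lambda_2\althl[2]=\lambda_1\althl[1]+\lambda_2\hl[2]
\end{equation*}
and divide by $\lambda_1+\lambda_2>0$ to obtain, with $\alpha\coloneqq\lambda_1/(\lambda_1+\lambda_2)\in(0,1)$, the key identity
\begin{equation*}
\alpha\hl[1]+(1-\alpha)\althl[2]=\alpha\althl[1]+(1-\alpha)\hl[2].
\end{equation*}
Both sides are genuine horse lotteries because $\hls$ is convex, which is what permits me to quote Axiom~\ref{ax:hlprefgt:sure:thing:principle} on them.

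Next I would apply $\hlprefgt_0$ with mixing weight $\alpha$ and third lottery $\althl[2]$ to get the equivalence
$\hl[1]\hlprefgt\althl[1]\iff\alpha\hl[1]+(1-\alpha)\althl[2]\hlprefgt\alpha\althl[1]+(1-\alpha)\althl[2]$.
Substituting the key identity into the left-hand side of this second preference yields
$\hl[1]\hlprefgt\althl[1]\iff\alpha\althl[1]+(1-\alpha)\hl[2]\hlprefgt\alpha\althl[1]+(1-\alpha)\althl[2]$.
A second application of $\hlprefgt_0$, this time with mixing weight $1-\alpha\in(0,1)$ and third lottery $\althl[1]$, recognises the right-hand side as equivalent to $\hl[2]\hlprefgt\althl[2]$. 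Chaining the two equivalences delivers the conclusion.

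There is really no hard step here: the argument is essentially an algebraic manipulation combined with two invocations of the mixture axiom. The only minor point I would be careful about is checking that both mixing weights $\alpha$ and $1-\alpha$ lie in $(0,1]$, which is the range required by $\hlprefgt_0$; this is immediate from $\lambda_1,\lambda_2>0$. I would also note explicitly that $\hls$ is closed under convex combinations, so that every intermediate expression is a legitimate element of $\hls$ and hence a legal argument to $\hlprefgt$.
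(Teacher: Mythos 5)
Your proposal is correct and follows essentially the same route as the paper's own proof: the same rearrangement to $\alpha\hl[1]+(1-\alpha)\althl[2]=\alpha\althl[1]+(1-\alpha)\hl[2]$ with $\alpha=\lambda_1/(\lambda_1+\lambda_2)$, followed by two applications of the mixture independence axiom (with third lotteries $\althl[2]$ and $\althl[1]$ respectively). Your explicit checks that $\alpha,1-\alpha\in(0,1]$ and that all intermediate mixtures lie in $\hls$ are sound and match what the paper leaves implicit.
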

\begin{proof}
First of all, consider the following chain of equivalences:
\begin{align}
\lambda_1(\hl[1]-\althl[1])=\lambda_2(\hl[2]-\althl[2])
&\ifandonlyif
\lambda_1\hl[1]+\lambda_2\althl[2]
=\lambda_2\hl[2]+\lambda_1\althl[1]\notag\\
&\ifandonlyif
\alpha\hl[1]+(1-\alpha)\althl[2]
=\alpha\althl[1]+(1-\alpha)\hl[2],\label{eq:consistency:for:forselotteries:background:ordering:aux}
\end{align}
where we let $\alpha\coloneqq\frac{\lambda_1}{\lambda_1+\lambda_2}$.
To prove the proposition, we consider another chain of equivalences:
\begin{align*}
\hl[1]\hlprefgt\althl[1]
&\ifandonlyif\alpha\hl[1]+(1-\alpha)\althl[2]\hlprefgt\alpha\althl[1]+(1-\alpha)\althl[2]
&&\text{[Axiom~\ref{ax:hlprefgt:sure:thing:principle}]}\\
&\ifandonlyif\alpha\althl[1]+(1-\alpha)\hl[2]\hlprefgt\alpha\althl[1]+(1-\alpha)\althl[2]
&&\text{[Equation~\eqref{eq:consistency:for:forselotteries:background:ordering:aux}]}\\
&\ifandonlyif\hl[2]\hlprefgt\althl[2].
&&\text{[Axiom~\ref{ax:hlprefgt:sure:thing:principle}]}\qedhere
\end{align*}
\end{proof}
\noindent Moreover, if $\altgbltoo=\lambda_1(\hl[1]-\althl[1])$ and $\altgbl=\lambda_2(\hl[2]-\althl[2])$, then
\begin{align*}
\altgbltoo\optgt\altgbl
\ifandonlyif\altgbltoo-\altgbl\optgt0
&\ifandonlyif\lambda_1(\hl[1]-\althl[1])-\lambda_2(\hl[2]-\althl[2])\optgt0
&&\\
&\ifandonlyif(\lambda_1+\lambda_2)
\group[\bigg]{\frac{\lambda_1\hl[1]+\lambda_2\althl[2]}{\lambda_1+\lambda_2}
-\frac{\lambda_2\hl[2]+\lambda_1\althl[1]}{\lambda_1+\lambda_2}}\optgt0
&&\\
&\ifandonlyif
\frac{\lambda_1\hl[1]+\lambda_2\althl[2]}{\lambda_1+\lambda_2}
\hlgt\frac{\lambda_2\hl[2]+\lambda_1\althl[1]}{\lambda_1+\lambda_2},
&&\text{[Equation~\eqref{eq:background:ordering:on:options:from:horse:lotteries}]}
\end{align*}
which shows how we can derive the background ordering~$\optgt$ on options in $\difspace$ directly form the background ordering~$\hlgt$ on horse lotteries.

\begin{runningexample}[A simple background ordering for options in $\difspace$]
The background ordering $\optgt[\bot]$ on $\difspace$ derived from the background ordering $\hlgt[\bot]$ on $\hls$ can be found as follows.
Consider an $\gbl=\lambda\group{\althl-\hl}\in\difspace$, then
\begin{align*}
\gbl\optgt[\bot]0
&\ifandonlyif\althl\hlgt[\bot]\hl
&&\text{[Equation~\eqref{eq:background:ordering:on:options:from:horse:lotteries}]}\\
&\ifandonlyif\gamblifier[\bot](\althl)\gblgt\gamblifier[\bot](\hl)
&&\text{[Equation~\eqref{eq:background:ordering:hls}]}\\
&\ifandonlyif\gamblifier[\bot](\althl)-\gamblifier[\bot](\hl)\gblgt0
&&\text{[$\gblgt$ is a vector ordering]}\\
&\ifandonlyif\frac{1}{\lambda}\gamblifier[\bot]\group{\gbl}\gblgt0
&&\text{[$\gamblifier[\bot]$ is linear]}\\
&\ifandonlyif\gamblifier[\bot](\gbl)\gblgt0,
&&\text{[$\gblgt$ is a vector ordering]}
\end{align*}
which tells us that $\optgt[\bot]$ is also the ordering induced on $\difspace$ by the linear isomorphism $\gamblifier[\bot]$ from the standard background ordering $\gblgt$ on gambles in $\gblsonstatesandrrewards$.\qed
\end{runningexample}

The next proposition tells us that the ambiguity in horse lottery representations for elements of $\difspace$ is also of no consequence to a rejection function $\hlrejectfun$ that satisfies Axiom~\ref{ax:seidenfeld:hlrejectfun:sure:thing:principle}.
Its proof is based on an argument similar to the one in the proof of Proposition~\ref{prop:consistency:for:horselotteries:preference:ordering}, and is essentially due to Arthur Van Camp~\cite{2017vancamp:phdthesis}.

\begin{proposition}[{\protect\cite[]{2017vancamp:phdthesis}}]\label{prop:consistency:for:horselotteries:choice}
Let $\hlrejectfun$ be any rejection function on $\hls$ that satisfies Axiom~\ref{ax:seidenfeld:hlrejectfun:sure:thing:principle}.
Consider any $\hlset,\althlset\in\hlsets$ and $\hl,\althl\in\hls$ such that $\lambda\group{\hlset-\hl}=\mu\group{\althlset-\althl}$ for some $\lambda,\mu\in\posreals$.
Then
\begin{equation*}
\hl\in\hlrejectfun\group{\hlset\cup\set{\hl}}\ifandonlyif\althl\in\hlrejectfun\group{\althlset\cup\set{\althl}},
\text{ or equivalently, }
\set{\hl}\hlsetlt\hlset\ifandonlyif\set{\althl}\hlsetlt\althlset.
\end{equation*}
\end{proposition}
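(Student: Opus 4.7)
My plan is to observe that, by the definition of the binary relation~$\hlsetlt$ on $\hlsets$, the statement $\hl\in\hlrejectfun\group{\hlset\cup\set{\hl}}$ is literally equivalent to $\set{\hl}\hlsetlt\hlset$, and similarly for $\althl$ and $\althlset$. So it suffices to establish the second displayed equivalence $\set{\hl}\hlsetlt\hlset\ifandonlyif\set{\althl}\hlsetlt\althlset$.

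The key algebraic observation is to rearrange the hypothesis $\lambda\group{\hlset-\hl}=\mu\group{\althlset-\althl}$ into an affine mixture identity. Adding $\lambda\hl+\mu\althl$ to both sides yields $\lambda\hlset+\mu\althl=\mu\althlset+\lambda\hl$, and dividing by $\lambda+\mu>0$ gives, with $\alpha\coloneqq\lambda/(\lambda+\mu)\in(0,1)$,
\begin{equation*}
\alpha\hlset+(1-\alpha)\althl
=(1-\alpha)\althlset+\alpha\hl.
\end{equation*}
Both $\alpha$ and $1-\alpha$ lie in $(0,1]$, so they are legal coefficients for Axiom~\ref{ax:seidenfeld:hlrejectfun:sure:thing:principle}.

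The proof will then consist of a short chain of equivalences, analogous to the one in the proof of Proposition~\ref{prop:consistency:for:horselotteries:preference:ordering}. I would apply the mixture-independence Axiom~\ref{ax:seidenfeld:hlrejectfun:sure:thing:principle} twice: first, starting from $\set{\hl}\hlsetlt\hlset$, I mix on both sides with $\althl$ using coefficient $\alpha$ to obtain $\set{\alpha\hl+(1-\alpha)\althl}\hlsetlt\alpha\hlset+(1-\alpha)\althl$ (noting that $\alpha\set{\hl}+(1-\alpha)\althl=\set{\alpha\hl+(1-\alpha)\althl}$ is again a singleton). The right-hand side is then rewritten by the displayed identity as $(1-\alpha)\althlset+\alpha\hl$, and the same singleton on the left can equally be written as $(1-\alpha)\set{\althl}+\alpha\hl$. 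Applying Axiom~\ref{ax:seidenfeld:hlrejectfun:sure:thing:principle} once more, in the reverse direction, with coefficient $1-\alpha$ and ``common'' horse lottery $\hl$, removes this common mixture and yields $\set{\althl}\hlsetlt\althlset$.

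There is no real obstacle here; the only delicate point is bookkeeping, namely verifying that all horse lotteries and sets involved in each mixture indeed lie in $\hls$ and $\hlsets$ (immediate from the convex closure of $\hls$ and the finiteness of $\hlset,\althlset$), and that the coefficients used satisfy the strict positivity condition $\alpha,1-\alpha\in(0,1]$ required by the axiom, which follows at once from $\lambda,\mu\in\posreals$.
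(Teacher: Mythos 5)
Your proposal is correct and follows essentially the same route as the paper's own proof: the same rearrangement of $\lambda\group{\hlset-\hl}=\mu\group{\althlset-\althl}$ into the affine identity $\alpha\hlset+(1-\alpha)\set{\althl}=(1-\alpha)\althlset+\alpha\set{\hl}$ with $\alpha=\nicefrac{\lambda}{\lambda+\mu}\in(0,1)$, followed by two applications of the mixture-independence Axiom~\ref{ax:seidenfeld:hlrejectfun:sure:thing:principle} (mixing with $\althl$ and then un-mixing with $\hl$). Nothing is missing; the paper likewise treats the equivalence between the two formulations of the conclusion as immediate from the definition of~$\hlsetlt$.
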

\begin{proof}
It is clear that the two statements are equivalent, so we concentrate on proving the latter.
First of all, consider the following chain of equivalences:
\begin{align}
&\lambda\group{\hlset-\hl}=\mu\group{\althlset-\althl}\notag\\
&\qquad\ifandonlyif
\lambda\group{\hlset-\hl}+\lambda\set{\hl}+\mu\set{\althl}
=\mu\group{\althlset-\althl}+\lambda\set{\hl}+\mu\set{\althl}\notag\\
&\qquad\ifandonlyif
\lambda\hlset+\mu\set{\althl}
=\mu\althlset+\lambda\set{\hl}\notag\\
&\qquad\ifandonlyif
\alpha\hlset+(1-\alpha)\set{\althl}
=(1-\alpha)\althlset+\alpha\set{\hl},\label{eq:consistency:for:forselotteries:choice:aux}
\end{align}
where we let $\alpha\coloneqq\frac{\lambda}{\lambda+\mu}\in(0,1)$.
To prove the proposition, we consider another chain of equivalences:
\begin{align*}
\set{\hl}\hlsetlt\hlset
&\ifandonlyif\alpha\set{\hl}+(1-\alpha)\set{\althl}\hlsetlt\alpha\hlset+(1-\alpha)\set{\althl}
&&\text{[Axiom~\ref{ax:seidenfeld:hlrejectfun:sure:thing:principle}]}\\
&\ifandonlyif\alpha\set{\hl}+(1-\alpha)\set{\althl}\hlsetlt\alpha\set{\hl}+(1-\alpha)\althlset
&&\text{[Equation~\eqref{eq:consistency:for:forselotteries:choice:aux}]}\\
&\ifandonlyif\set{\althl}\hlsetlt\althlset.
&&\text{[Axiom~\ref{ax:seidenfeld:hlrejectfun:sure:thing:principle}]}\qedhere
\end{align*}
\end{proof}
\noindent It is this proposition---and therefore Axiom~\ref{ax:seidenfeld:hlrejectfun:sure:thing:principle}---that makes sure that using $\difspace$ as an option space $\opts$ to try and represent rejection functions on horse lotteries in, in the sense of Equation~\eqref{eq:representation:hlrejectfuns:tentative}, has any chance of success.
We will now make clear how such a representation comes about.

First of all, starting from a rejection function~$\hlrejectfun$ on $\hls$, we can use Equation~\eqref{eq:consistency:for:forselotteries:choice:aux} to construct a set of desirable option sets $\rejectset[\hlrejectfun]\in\optsets(\difspace)$ as follows.
We begin with
\begin{equation*}
\rejectset[\hlrejectfun]^0
\coloneqq\cset{\hlset-\hl}
{\hlset\in\hlsets,\hl\in\hls\text{ and }\hl\in\hlrejectfun\group{\hlset\cup\set{\hl}}},
\end{equation*}
and then let
\begin{equation}\label{eq:hlrejectfun:to:rejectset}
\rejectset[\hlrejectfun]
\coloneqq\Starify\group[\big]{\rejectset[\hlrejectfun]^0}
=\Starify\group[\big]{\cset{\hlset-\hl}
{\hlset\in\hlsets,\hl\in\hls\text{ and }\hl\in\hlrejectfun\group{\hlset\cup\set{\hl}}}},
\end{equation}
where we let, for any option set $\optset=\set{\gbl[1],\dots,\gbl[n]}\in\optsets(\difspace)$, with $n\in\naturalswithzero$:
\begin{equation*}
\starify\group{\optset}
\coloneqq\cset[\big]{\set{\lambda_1\gbl[1],\dots,\lambda_n\gbl[n]}}
{\lambda_k\in\posreals,k\in\set{1,\dots,n}} 
\end{equation*} 
and, for any set of option sets $\rejectset\subseteq\optsets(\difspace)$, $\Starify\group{\rejectset}\coloneqq\bigcup\cset{\starify\group{\optset}}{\optset\in\rejectset}$.

Why is applying the $\Starify\group{\cdot}$ operator in Equation~\eqref{eq:hlrejectfun:to:rejectset} reasonable?
While working with differences of horse lotteries is essentially what is done on the horse lottery side, Proposition~\ref{prop:difspace} tells us that doing so is not enough on the options side: a single difference $\hl-\althl$ between horse lotteries corresponds to an entire ray of options $\cset{\lambda(\hl-\althl)}{\lambda\in\posreals}$ between which, from the horse lottery point of view, no distinction can be made.
And since we can make no distinction between multiples of the same horse lottery difference on the horse lottery side, a `representation' in terms of options should not be able to do so either.
Therefore, replacing any option in any option set with any of its positive multiples should not matter.
Fortunately, our coherent choice models in terms of options are in perfect agreement with this: Lemma~\ref{lem:localrescaling} tells us that for any coherent set of desirable option sets $\rejectset$, $\optset\in\rejectset\ifandonlyif\starify\group{\optset}\in\rejectset$.

Conversely, if we start from a set of desirable option sets $\rejectset\in\optsets(\difspace)$, we can use Equation~\eqref{eq:consistency:for:forselotteries:choice:aux} to construct a rejection function $\hlrejectfun[\rejectset]$ on $\hls$ as follows:
\begin{equation}\label{eq:rejectset:to:hlrejectfun}
\hlrejectfun[\rejectset]\group{\emptyset}\coloneqq\emptyset
\text{ and }
\hl\in\hlrejectfun[\rejectset]\group{\hlset\cup\set{\hl}}
\ifandonlyif\hlset-\hl\in\rejectset,
\text{ for all $\hlset\in\hlsets$ and $\hl\in\hls$}.
\end{equation}
Our attempts at representing a rejection function $\hlrejectfun$ on $\hls$ by a set $\rejectset$ of desirable option sets on $\difspace$ will only work if using the representation $\rejectset=\rejectset[\hlrejectfun]$ on options to define a rejection function $\hlrejectfun[\rejectset]$ on horse lotteries leads back to the original rejection function $\hlrejectfun$.
We will see further on in Theorem~\ref{theo:from:horselotteries:to:options:and:back} that under some additional requirements, this can indeed be guaranteed.

We now impose our own coherence requirements on rejection functions on horse lotteries, for which we want to make sure that they lead to the coherence requirements~\ref{ax:rejects:removezero}--\ref{ax:rejects:mono} of the representing sets of desirable option sets that are precisely the ones we have considered before.
We allow ourselves to be inspired by the axioms~\ref{ax:rejectfun:addition}--\ref{ax:rejectfun:senalpha} we imposed on rejection functions on options in Section~\ref{sec:back:to:choice}, as well as by Seidenfeld et al.'s axioms~\ref{ax:seidenfeld:hlrejectfun:sure:thing:principle}--\ref{ax:seidenfeld:hlrejectfun:senalpha}.

\begin{enumerate}[label=$\mathrm{R}^\ast_{\arabic*}$.,ref=$\mathrm{R}^\ast_{\arabic*}$,leftmargin=*,start=0]
\item\label{ax:hlrejectfun:sure:thing:principle}
$\hlset\hlsetgt\althlset\ifandonlyif\alpha\hlset+(1-\alpha)\hl\hlsetgt\alpha\althlset+(1-\alpha)\hl$ for all $\hlset,\althlset\in\hlsets$, $\hl\in\hls$ and $\alpha\in(0,1]$;
\item\label{ax:hlrejectfun:not:everything:rejected} $\hlrejectfun\group{\emptyset}=\emptyset$, and $\hlrejectfun\group{\hlset}\neq\hlset$ for all $\hlset\in\hlsets\setminus\set{\emptyset}$;
\item\label{ax:hlrejectfun:pos} $\hl\in\hlrejectfun\group{\set{\hl,\althl}}$ or equivalently, $\set{\althl}\hlsetgt\set{\hl}$, for all $\hl,\althl\in\hls$ such that $\althl\hlgt\hl$;
\item\label{ax:hlrejectfun:cone} consider any $m,n\in\naturals$, any $\hlset[1]\coloneqq\set{\althltoo[1],\dots,\althltoo[m]},\hlset[2]\coloneqq\set{\althl[1],\dots,\althl[n]}\in\hlsets$ and $\althltoo,\althl\in\hls$ such that $\althltoo\in\hlrejectfun\group{\set{\althltoo[1],\dots\althltoo[m],\althltoo}}$ and $\althl\in\hlrejectfun\group{\set{\althl[1],\dots,\althl[n],\althl}}$, and any $(\lambda_{k\ell},\mu_{k\ell})>0$ for all $(k,\ell)\in\set{1,\dots,m}\times\set{1,\dots,n}$, then it holds that $\hl\in\hlrejectfun\group{\set{\hl[11],\dots,\hl[mn],\hl}}$ for all $\hlset\coloneqq\cset{\hl[k\ell]}{(k,\ell)\in\set{1,\dots,m}\times\set{1,\dots,n}}\in\hlsets$ and $\hl\in\hls$ such that $\hl[k\ell]-\hl=\lambda_{k\ell}\group{\althltoo[k]-\althltoo}+\mu_{k\ell}\group{\althl[\ell]-\althl}$ for all $(k,\ell)\in\set{1,\dots,m}\times\set{1,\dots,n}$;
\item\label{ax:hlrejectfun:senalpha} 
if $\hlset[1]\subseteq\hlset[2]$ then also $\hlrejectfun\group{\hlset[1]}\subseteq\hlrejectfun\group{\hlset[2]}$, for all $\hlset[1],\hlset[2]\in\hlsets$.
\end{enumerate}
Axioms~\ref{ax:hlrejectfun:sure:thing:principle}, \ref{ax:hlrejectfun:not:everything:rejected} and~\ref{ax:hlrejectfun:senalpha} are the same as the homologous~\ref{ax:seidenfeld:hlrejectfun:sure:thing:principle}, \ref{ax:seidenfeld:hlrejectfun:not:everything:rejected} and~\ref{ax:seidenfeld:hlrejectfun:senalpha} imposed by Seidenfeld et al.~\cite{seidenfeld2010}.
Axioms~\ref{ax:hlrejectfun:pos} and~\ref{ax:hlrejectfun:cone} are stronger than the homologous~\ref{ax:seidenfeld:hlrejectfun:pos} and~\ref{ax:seidenfeld:hlrejectfun:aizerman}, and go back to the homologous~\ref{ax:rejectfun:pos} and~\ref{ax:rejectfun:cone} for rejection functions on options.

\begin{definition}[Coherence for rejection functions on horse lotteries]\label{def:coherence:hlrejectset}
A rejection function on horse lotteries $\hlrejectfun\colon\hlsets\to\hlsets$ is called \emph{coherent} if it satisfies Axioms~\ref{ax:hlrejectfun:sure:thing:principle}--\ref{ax:hlrejectfun:senalpha}.
It is, moreover, called \emph{total} if it is coherent and also satisfies
\begin{enumerate}[label=$\mathrm{R}^\ast_{\mathrm{T}}$.,ref=$\mathrm{R}^\ast_{\mathrm{T}}$,leftmargin=*]
\item\label{ax:hlrejectfun:totality} 
$\set[\big]{\frac{\hl[1]+\hl[2]}{2}}\hlsetlt\set{\hl[1],\hl[2]}$ for all $\hl[1],\hl[2]\in\hls$ such that $\hl[1]\neq\hl[2]$.
\end{enumerate}
Finally, it is called \emph{mixing} if it is coherent and also satisfies
\begin{enumerate}[label=$\mathrm{R}^\ast_{\mathrm{M}}$.,ref=$\mathrm{R}^\ast_{\mathrm{M}}$,leftmargin=*]
\item\label{ax:hlrejectfun:removing:convex:combinations}
if $\hlset\subseteq\althlset\subseteq\chull\group{\hlset}$ then $\rejectfun\group{\althlset}\cap\hlset\subseteq\rejectfun\group{\hlset}$, for all $\hlset,\althlset\in\hlsets$.
\end{enumerate}
\end{definition}
\noindent Observe that our mixingness condition~\ref{ax:hlrejectfun:removing:convex:combinations} is the same as Seidenfeld et al.'s Axiom~\ref{ax:seidenfeld:hlrejectfun:removing:convex:combinations}.

We now show that the relations~\eqref{eq:hlrejectfun:to:rejectset} and~\eqref{eq:rejectset:to:hlrejectfun} preserve coherence, totality, and mixingness.

\begin{theorem}\label{theo:horselotteries:to:options:and:back}
Let $\rejectset$ be any set of desirable option sets on $\difspace$, and let $\hlrejectfun$ be any rejection function on $\hls$.
Then the following statements hold:
\begin{enumerate}[label=\upshape(\roman*),leftmargin=*]
\item\label{it:difspacecoherence:to:hlcoherence} if $\rejectset$ is coherent, then so is $\hlrejectfun[\rejectset]$;
\item\label{it:hlcoherence:to:difspacecoherence} if $\hlrejectfun$ is coherent, then so is $\rejectset[\hlrejectfun]$;
\item\label{it:difspacetotal:to:hltotal} if $\rejectset$ is total, then so is $\hlrejectfun[\rejectset]$;
\item\label{it:hltotal:to:difspacetotal} if $\hlrejectfun$ is total, then so is $\rejectset[\hlrejectfun]$;
\item\label{it:difspacemixing:to:hlmixing} if $\rejectset$ is mixing, then so is $\hlrejectfun[\rejectset]$;
\item\label{it:hlmixing:to:difspacemixing} if $\hlrejectfun$ is mixing, then so is $\rejectset[\hlrejectfun]$.
\end{enumerate}
\end{theorem}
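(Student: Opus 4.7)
The plan is to prove the six items by treating them as three pairs, and within each pair to systematically translate the relevant axioms across the map \eqref{eq:hlrejectfun:to:rejectset}--\eqref{eq:rejectset:to:hlrejectfun} between horse lotteries and options. The setup needs two preliminary observations. First, by Lemma~\ref{lem:from:difspace:to:hls}, given any finite option set $\optset=\set{\gbl[1],\dots,\gbl[n]}\subseteq\difspace$ and any $\hl\in\hls$ whose coordinates $\hl(\cdot,r)$ are bounded away from $0$, there exist $\lambda_k\in\posreals$ and $\hl[k]\in\hls$ such that $\gbl[k]=\lambda_k(\hl[k]-\hl)$ for all $k$; thus every option set can be written as a positive rescaling of a difference $\hlset-\hl$. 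Second, Proposition~\ref{prop:consistency:for:horselotteries:choice} guarantees that the reading of ``$\hl\in\hlrejectfun(\hlset\cup\set{\hl})$'' only depends on the option set $\hlset-\hl$ and not on the particular horse lottery representation chosen, so $\rejectset[\hlrejectfun]$ is well-defined via $\Starify$. These two facts together let us identify membership of $\optset\in\rejectset[\hlrejectfun]$ with a statement about $\hlrejectfun$ relative to any convenient pivot~$\hl$.

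For items~\ref{it:difspacecoherence:to:hlcoherence} and~\ref{it:hlcoherence:to:difspacecoherence}, each of \ref{ax:hlrejectfun:sure:thing:principle}--\ref{ax:hlrejectfun:senalpha} maps onto a counterpart among \ref{ax:rejects:removezero}--\ref{ax:rejects:mono}. I would verify them axiom by axiom. The Sure Thing Principle~\ref{ax:hlrejectfun:sure:thing:principle} is an automatic consequence of the positive-scaling invariance built into $\rejectset[\hlrejectfun]$ via $\Starify$ (and in the converse direction, of Lemma~\ref{lem:localrescaling}), since $(\alpha\hlset+(1-\alpha)\hl)-(\alpha\althl+(1-\alpha)\hl)=\alpha(\hlset-\althl)$. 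The non-triviality condition~\ref{ax:hlrejectfun:not:everything:rejected} is the translation of~\ref{ax:rejects:nozero} together with Lemma~\ref{lem:replacing:nonpositives:by:zero}: if $\hlrejectfun(\hlset)=\hlset$ for non-empty $\hlset$, pivoting at any $\hl\in\hlset$ would force $\set{0}$ into $\rejectset[\hlrejectfun]$ after removing the zero element of $\hlset-\hl$. Axiom~\ref{ax:hlrejectfun:pos} corresponds to~\ref{ax:rejects:pos} because $\althl-\hl\optgt0$ exactly when $\althl\hlgt\hl$ by definition~\eqref{eq:background:ordering:on:options:from:horse:lotteries}. Axiom~\ref{ax:hlrejectfun:senalpha} corresponds to~\ref{ax:rejects:mono} by pivoting at the common element~$\hl$ and adding the extra options, again invoking Lemma~\ref{lem:from:difspace:to:hls} to express any new options as $\mu(\althl'-\hl)$ with the same $\hl$.

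The main obstacle will be the cone axiom, i.e.\ showing that~\ref{ax:hlrejectfun:cone} and~\ref{ax:rejects:cone} correspond. In the forward direction, given $\optset[1]=\hlset[1]-\althltoo$ and $\optset[2]=\hlset[2]-\althl$ in $\rejectset[\hlrejectfun]$, applying~\ref{ax:rejects:cone} with scalars $(\lambda_{k\ell},\mu_{k\ell})$ produces the option set $\cset{\lambda_{k\ell}(\althltoo[k]-\althltoo)+\mu_{k\ell}(\althl[\ell]-\althl)}{k,\ell}$. This option set needs to be re-expressed as $\hlset-\hl$ for a single pivot $\hl$, and the right choice is $\hl=\alpha\althltoo+(1-\alpha)\althl$ with $\hl[k\ell]=\alpha'_{k\ell}\althltoo[k]+(1-\alpha'_{k\ell})\althl[\ell]$ for suitable convex weights obtained by rescaling $(\lambda_{k\ell},\mu_{k\ell})$; this is the mirror of the algebra in Proposition~\ref{prop:consistency:for:horselotteries:choice}. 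The reverse direction uses the same algebra to turn a hypothesis of the form~\ref{ax:hlrejectfun:cone} into an application of~\ref{ax:rejects:cone} after rescaling options to horse-lottery form. The Sure Thing Principle~\ref{ax:hlrejectfun:sure:thing:principle} is what guarantees that any two such rewrites yield the same rejection statement, which is why its translation~\ref{ax:rejects:cone}-invariance holds.

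For items~\ref{it:difspacetotal:to:hltotal} and~\ref{it:hltotal:to:difspacetotal}, I would observe that $\set{\opt,-\opt}\in\rejectset$ with $\opt=\lambda(\hl[1]-\hl[2])$ translates, under the pivot $\hl=\tfrac{\hl[1]+\hl[2]}{2}$ and positive rescaling, to $\set{\tfrac{\hl[1]-\hl[2]}{2},\tfrac{\hl[2]-\hl[1]}{2}}=\set{\hl[1],\hl[2]}-\tfrac{\hl[1]+\hl[2]}{2}$. Hence \ref{ax:rejects:totality} for every non-zero $\opt\in\difspace$ is equivalent to \ref{ax:hlrejectfun:totality} for every distinct pair $\hl[1]\neq\hl[2]$ in $\hls$, since every such $\opt$ is a positive multiple of some $\hl[1]-\hl[2]$ by Proposition~\ref{prop:difspace}. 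For items~\ref{it:difspacemixing:to:hlmixing} and~\ref{it:hlmixing:to:difspacemixing}, I would use the convex-hull characterisation \ref{ax:rejects:removeconvexcombinations} from Proposition~\ref{prop:axioms:rejection:sets:and:functions:mixing}, which is formally identical to~\ref{ax:hlrejectfun:removing:convex:combinations}: a single common pivot makes $\chull(\hlset-\hl)=\chull(\hlset)-\hl$, so ``$\hlset\subseteq\althlset\subseteq\chull(\hlset)$'' on the horse lottery side and the corresponding inclusion on the option side are literally the same condition after translation. The only remaining care is again in moving from arbitrary option sets to the specific $\hlset-\hl$ form via Lemma~\ref{lem:from:difspace:to:hls}, which is by now routine.
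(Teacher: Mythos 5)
Your overall strategy—axiom-by-axiom translation, re-pivoting via Lemma~\ref{lem:from:difspace:to:hls}, and scaling invariance via Proposition~\ref{prop:consistency:for:horselotteries:choice}—is the paper's, and your treatment of totality (items (iii)--(iv)) matches the paper's midpoint-pivot argument. But your verification of \ref{ax:hlrejectfun:not:everything:rejected} in item (i) is wrong as stated, and this is the hardest step of the coherence direction. If $\hlrejectfun[\rejectset]\group{\hlset}=\hlset$ with $\card{\hlset}=n\geq2$, then for each $\hl\in\hlset$ you get $\hlset-\hl\in\rejectset$, and deleting the zero via \ref{ax:rejects:removezero} leaves the $(n-1)$-element sets $\group{\hlset-\hl}\setminus\set{0}$—none of which is $\set{0}$; Lemma~\ref{lem:replacing:nonpositives:by:zero} is of no use either, since the non-zero differences of horse lotteries in these sets need not be non-positive. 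To actually reach $\set{0}\in\rejectset$ one must combine all $n$ sets $\hlset-\hl$ through \emph{iterated} applications of \ref{ax:rejects:cone}, i.e.\ through the $\setposi$ operator (whose admissibility for coherent $\rejectset$ is Proposition~\ref{prop:ax:rejects:cone:equivalents}, itself resting on the representation Theorem~\ref{theo:rejectsets:representation}), choosing the coefficients along a cycle of the selection map $\phi$ so that the resulting sum telescopes to zero. The paper repeats this $\phi$-orbit argument from the proof of Proposition~\ref{prop:axioms:rejection:sets:and:functions:coherence} in full; no one-line shortcut is available.

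Your pivot bookkeeping also fails at three places where the paper needs specific devices. First, for the cone axiom your single convex-mixture pivot $\hl=\alpha\althltoo+(1-\alpha)\althl$ with $\hl[k\ell]=\alpha'_{k\ell}\althltoo[k]+(1-\alpha'_{k\ell})\althl[\ell]$ forces $\lambda_{k\ell}/\mu_{k\ell}=\alpha/(1-\alpha)$ to be the \emph{same} ratio for all $(k,\ell)$—this is exactly the single-ratio algebra of Proposition~\ref{prop:consistency:for:horselotteries:choice}, but \ref{ax:rejects:cone} allows the ratios to vary with $(k,\ell)$ (and some $\mu_{k\ell}$ to vanish), so the matching is impossible in general; the paper instead fixes an \emph{arbitrary} $\hl$ with $\hl(\cdot,r)$ bounded away from zero and uses Lemma~\ref{lem:from:difspace:to:hls} to obtain $\gbl[k\ell]=\kappa_{k\ell}\group{\hl[k\ell]-\hl}$, then rescales the coefficients by $\kappa_{k\ell}$. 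Second, in the \ref{ax:rejects:mono} step you say ``with the same $\hl$'', but the pivot delivered by the hypothesis $\optset[1]\in\starify\group{\hlset[1]-\hl}$ need not be bounded away from zero, so Lemma~\ref{lem:from:difspace:to:hls} does not apply to it; the paper's fix—replace $(\hlset[1],\hl)$ by $\group{\nicefrac{1}{2}\hlset[1]+\nicefrac{1}{2}\cuhl,\nicefrac{1}{2}\hl+\nicefrac{1}{2}\cuhl}$ using the Sure Thing Principle \ref{ax:hlrejectfun:sure:thing:principle}, so that the new pivot is bounded below by $\nicefrac{1}{2\card{\rewards}}$—is absent from your sketch. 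Third, in item (vi) the identity $\chull\group{\hlset}-\hl=\chull\group{\hlset-\hl}$ does not make the two mixing conditions ``literally the same'': the elements of $\optset$ and $\altoptset$ are \emph{independently} rescaled differences, and per-element rescaling destroys convex-combination structure. The paper must tune the coefficients $\lambda_\ell,\mu_k$ of Lemma~\ref{lem:from:difspace:to:hls} so that $\sum_{\ell}\alpha_{k,\ell}\lambda_\ell/\mu_k=1$ (possible precisely because that lemma only bounds the coefficients from below), and then still needs Lemma~\ref{lem:localrescaling:horselotteries} to transport the rejection statement between the two starified representations. These are the genuine content of items (ii) and (vi), not routine translation.
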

\begin{proof}
We begin with the proof of~\ref{it:difspacecoherence:to:hlcoherence}.

Assume that $\rejectset$ is coherent.
We prove that $\hlrejectfun[\rejectset]$ satisfies Axioms~\ref{ax:hlrejectfun:sure:thing:principle}--\ref{ax:hlrejectfun:senalpha}.

\ref{ax:hlrejectfun:sure:thing:principle}.
Consider any $\hlset,\althlset\in\hlsets$, $\hl\in\hls$ and $\alpha\in(0,1]$, and assume that 
$\hlset\hlsetlt\althlset$, then it suffices to show that also $\alpha\hlset+(1-\alpha)\hl\hlsetlt\alpha\althlset+(1-\alpha)\hl$.
$\hlset\hlsetlt\althlset$ means that $\hlset\subseteq\hlrejectfun[\rejectset]\group{\althlset\cup\hlset}$, which is equivalent to
\begin{equation}\label{eq:difspacecoherence:to:hlcoherence:sure:thing:principle:one}
\group{\althlset\cup\hlset}-\althltoo\in\rejectset
\text{ for all }\althltoo\in\hlset.
\end{equation}
On the other hand, $\alpha\hlset+(1-\alpha)\hl\hlsetlt\alpha\althlset+(1-\alpha)\hl$ means that $\alpha\hlset+(1-\alpha)\hl\subseteq\hlrejectfun\group{\alpha\group{\althlset\cup\hlset}+(1-\alpha)\hl}$, and since
\begin{equation*}
\alpha\group{\althlset\cup\hlset}+(1-\alpha)\hl-\group{\alpha\althltoo+(1-\alpha)\hl}
=\alpha\group[\big]{\group{\althlset\cup\hlset}-\althltoo}, 
\end{equation*}
this is equivalent to
\begin{equation}\label{eq:difspacecoherence:to:hlcoherence:sure:thing:principle:two}
\alpha\group[\big]{\group{\althlset\cup\hlset}-\althltoo}\in\rejectset
\text{ for all }\althltoo\in\hlset.
\end{equation}
That the conditions~\eqref{eq:difspacecoherence:to:hlcoherence:sure:thing:principle:one} and~\eqref{eq:difspacecoherence:to:hlcoherence:sure:thing:principle:two} are equivalent is now an easy consequence of Axiom~\ref{ax:rejects:cone} [see, for instance, Lemma~\ref{lem:localrescaling} which implies this equivalence].

\ref{ax:hlrejectfun:not:everything:rejected}.
That $\hlrejectfun[\rejectset]\group{\emptyset}=\emptyset$ follows directly from the definition~\eqref{eq:rejectset:to:hlrejectfun} of $\hlrejectfun[\rejectset]$.
The rest of the proof is now a direct adaptation of the homologous part of Proposition~\ref{prop:axioms:rejection:sets:and:functions:coherence}, which we repeat here for the sake of thoroughness.
For any non-empty option set $\hlset\in\hlsets$, assume {\itshape ex absurdo} that $\hlrejectfun\group{\hlset}=\hlset$. 
For all $\hl\in\hlset$, it then follows from Equation~\eqref{eq:rejectset:to:hlrejectfun} that $\hlset-\hl\in\rejectset$.
So if we denote $\hlset$ by $\set{\hl[1],\dots,\hl[n]}$, with $n\in\naturals$, and let $\gbl[\ell k]\coloneqq\hl[\ell]-\hl[k]$, then we find that for all $k\in\set{1,\dots,n}$
\begin{equation*}
\optset[k]\coloneqq\cset{\gbl[\ell k]}{\ell\in\set{1,\dots,n}}\in\rejectset.
\end{equation*}
Proposition~\ref{prop:ax:rejects:cone:equivalents} and Equation~\eqref{eq:setposi} now tell us that, for any choice of the $\lambda_{1:n}^{\altgbl[1:n]}>0$ in Equation~\eqref{eq:setposi}, the option set
\begin{equation*}
\cset[\bigg]{\sum_{k=1}^n\lambda_{k}^{\altgbl[1:n]}\altgbl[k]}
{\altgbl[1:n]\in\times_{k=1}^n\optset[k]}\in\rejectset.
\end{equation*}
So if we can show that for any $\altgbl[1:n]\in\times_{k=1}^n\optset[k]$ we can always choose the $\lambda_{1:n}^{\altgbl[1:n]}>0$ in such a way that $\sum_{k=1}^n\lambda_{k}^{\altgbl[1:n]}\altgbl[k]=0$, we will have that $\set{0}\in\rejectset$, contradicting Axiom~\ref{ax:rejects:nozero}.
We now set out to do this.

For any $k\in\set{1,\dots,n}$, since $\altgbl[k]\in\optset[k]$, there is a unique $\ell\in\set{1,\dots,n}$ such that $\altgbl[k]=\gbl[\ell k]$. 
Let $\phi(k)$ be this unique index, so $\altgbl[k]=\gbl[\phi(k)k]$. 
For the resulting map $\phi\colon\set{1,\dots,n}\to\set{1,\dots,n}$, we now consider the sequence---$\phi$-orbit---in $\set{1,\dots,n}$:
\begin{equation*}
1,\phi(1),\phi^2(1),\dots,\phi^r(1),\dots
\end{equation*}
Because $\phi$ can assume at most $n$ different values, this sequence must be periodic, and its fundamental (smallest) period $p$ cannot be larger than $n$, so $1\leq p\leq n$ and $1=\phi^{p}(1)$.
Now let $\lambda^{\altgbl[1:n]}_{\phi^r(1)}\coloneqq1$ for $r=0,\dots,p-1$, and let all other components be zero, then indeed
\begin{equation*}
\sum_{k=1}^n\lambda_{k}^{\altgbl[1:n]}\altgbl[k]
=\sum_{r=0}^{p-1}\gbl[\phi^{r+1}(1)\phi^{r}(1)]
=\sum_{r=0}^{p-1}\group{\hl[\phi^{r+1}(1)]-\hl[\phi^{r}(1)]}
=0.
\end{equation*}

\ref{ax:hlrejectfun:pos}.
Consider any $\hl,\althl\in\hls$ such that $\althl\hlgt\hl$.
Then $\althl-\hl\optgt0$, so $\set{\althl-\hl}\in\rejectset$, by Axiom~\ref{ax:rejects:pos}. 
Since $\set{\althl-\hl}=\set{\althl}-\hl$ and $\set{\althl}\cup\set{\hl}=\set{\althl,\hl}$, this implies that, indeed, $\hl\in\hlrejectfun[\rejectset]\group{\set{\hl,\althl}}$.

\ref{ax:hlrejectfun:cone}.
Consider any $m,n\in\naturals$, any $\set{\althltoo[1],\dots,\althltoo[m]},\set{\althl[1],\dots,\althl[n]}\in\hlsets$ and $\althltoo,\althl\in\hls$ such that $\althltoo\in\hlrejectfun[\rejectset]\group{\set{\althltoo[1],\dots\althltoo[m],\althltoo}}$ and $\althl\in\hlrejectfun[\rejectset]\group{\set{\althl[1],\dots,\althl[n],\althl}}$, and any $(\lambda_{k\ell},\mu_{k\ell})>0$ for all $(k,\ell)\in\set{1,\dots,m}\times\set{1,\dots,n}$.
Also consider any $\cset{\hl[k\ell]}{(k,\ell)\in\set{1,\dots,m}\times\set{1,\dots,n}}\in\hlsets$ and any $\hl\in\hls$ such that $\hl[k\ell]-\hl=\lambda_{k\ell}\group{\althltoo[k]-\althltoo}+\mu_{k\ell}\group{\althl[\ell]-\althl}$ for all $k\in\set{1,\dots,m}$ and $\ell\in\set{1,\dots,n}$.
Then we must prove that also $\hl\in\hlrejectfun[\rejectset]\group{\set{\hl[11],\dots,\hl[mn],\hl}}$.
Observe that it follows from the assumptions that $\cset{\althltoo[k]-\althltoo}{k\in\set{1,\dots,m}}\in\rejectset$ and $\cset{\althl[\ell]-\althl}{\ell\in\set{1,\dots,n}}\in\rejectset$, and therefore from Axiom~\ref{ax:rejects:cone} that also
\begin{multline*}
\cset{\hl[k\ell]}{(k,\ell)\in\set{1,\dots,m}\times\set{1,\dots,n}}-\hl\\
=\cset{\lambda_{k\ell}\group{\althltoo[k]-\althltoo}+\mu_{k\ell}\group{\althl[\ell]-\althl}}{(k,\ell)\in\set{1,\dots,m}\times\set{1,\dots,n}}
\in\rejectset.
\end{multline*}
Hence, indeed, $\hl\in\hlrejectfun[\rejectset]\group{\set{\hl[11],\dots,\hl[mn],\hl}}$.

\ref{ax:hlrejectfun:senalpha}.
Consider any $\hlset[1],\hlset[2]\in\hlsets$ such that $\hlset[1]\subseteq\hlset[2]$.
Consider any $\hl\in\hlrejectfun[\rejectset]\group{\hlset[1]}$, then we have to prove that also $\hl\in\hlrejectfun[\rejectset]\group{\hlset[2]}$.
$\hl\in\hlrejectfun[\rejectset]\group{\hlset[1]}$ means that $\hlset[1]-\hl\in\rejectset$, and therefore also $\hlset[2]-\hl\in\rejectset$, by Axiom~\ref{ax:rejects:mono}.
Hence, indeed, $\hl\in\hlrejectfun[\rejectset]\group{\hlset[2]}$.

Next, we turn to the proof of~\ref{it:hlcoherence:to:difspacecoherence}.

Assume that $\hlrejectfun$ is coherent, then we prove that $\rejectset[\hlrejectfun]$ satisfies Axioms~\ref{ax:rejects:removezero}--\ref{ax:rejects:mono}.

\ref{ax:rejects:removezero}.
Consider any $\optset\in\rejectset[\hlrejectfun]$, which means that there are $\hlset\in\hlsets$ and $\hl\in\hls$ such that $\optset\in\starify\group{\hlset-\hl}$ and $\hl\in\hlrejectfun\group{\hlset\cup\set{\hl}}$.
We may assume without loss of generality that $0\in\optset$, or in other words, that $\hl\in\hlset$.
But then $\hlset\cup\set{\hl}=\group{\hlset\setminus\set{\hl}}\cup\set{\hl}$ and therefore also $\hl\in\hlrejectfun\group{\group{\hlset\setminus\set{\hl}}\cup\set{\hl}}$.
This implies that $\group{\hlset-\hl}\setminus\set{0}=\hlset\setminus\set{\hl}-\hl\in\rejectset[\hlrejectfun]^0$, and since also $\optset\setminus\set{0}\in\starify\group{\group{\hlset-\hl}\setminus\set{0}}$, we find that, indeed, $\optset\setminus\set{0}\in\rejectset[\hlrejectfun]$, using Equation~\eqref{eq:hlrejectfun:to:rejectset}.

\ref{ax:rejects:nozero}.
Assume {\itshape ex absurdo} that $\set{0}\in\rejectset[\hlrejectfun]$, meaning that there is some $\hl\in\hls$ such that $\hl\in\hlrejectfun\group{\set{\hl}\cup\set{\hl}}$, or in other words, that $\set{\hl}=\hlrejectfun\group{\set{\hl}}$, contradicting Axiom~\ref{ax:hlrejectfun:not:everything:rejected}.

\ref{ax:rejects:pos}.
Consider any $h\in\difspace$ such that $h\optgt0$, meaning that there are real $\lambda>0$ and $\althltoo,\althl\in\hls$ such that $h=\lambda\group{\althltoo-\althl}$ and $\althltoo\hlgt\althl$.
Then it follows from Axiom~\ref{ax:hlrejectfun:pos} that $\althl\in\hlrejectfun\group{\set{\althl,\althltoo}}$, so we find that $\set{\althltoo}-\althl=\set{\althltoo-\althl}\in\rejectset[\hlrejectfun]^0$.
Hence, $h=\lambda\group{\althltoo-\althl}\in\starify\group{\set{\althltoo-\althl}}\subseteq\rejectset[\hlrejectfun]$.

\ref{ax:rejects:cone}.
Consider any $m,n\in\naturals$, any $\optset[1]\coloneqq\set{\altgbltoo[1],\dots,\altgbltoo[m]}$ and $\optset[2]\coloneqq\set{\altgbl[1],\dots,\altgbl[n]}$ in $\rejectset[\hlrejectfun]$ and any $(\lambda_{k\ell},\mu_{k\ell})>0$ for all $(k,\ell)\in\set{1,\dots,m}\times\set{1,\dots,n}$.
Let $\optset\coloneqq\cset{\lambda_{k\ell}\altgbltoo[k]+\mu_{k\ell}\altgbl[\ell]}{(k,\ell)\in\set{1,\dots,m}\times\set{1,\dots,n}}$, then we have to prove that $\optset\in\rejectset[\hlrejectfun]$.
It follows from the assumptions that there are real $\lambda_k>0$ and $\althltoo[k],\althltoo\in\hls$ such that $\althltoo\in\hlrejectfun\group{\set{\althltoo[1],\dots,\althltoo[m],\althltoo}}$ and $\altgbltoo[k]=\lambda_k(\althltoo[k]-\althltoo)$ for all $k\in\set{1,\dots,m}$, and that there are real $\mu_\ell>0$ and $\althl[\ell],\althl\in\hls$ such that $\althl\in\hlrejectfun\group{\set{\althl[1],\dots,\althl[m],\althl}}$ and $\altgbl[\ell]=\mu_\ell(\althl[\ell]-\althl)$ for all $\ell\in\set{1,\dots,n}$.
We can therefore write $\optset=\cset{\gbl[k\ell]}{(k,\ell)\in\set{1,\dots,m}\times\set{1,\dots,n}}$, with $\gbl[k\ell]\coloneqq\lambda_{k\ell}\lambda_k(\althltoo[k]-\althltoo)+\mu_{k\ell}\mu_\ell(\althl[\ell]-\althl)$.
If we now fix any $\hl\in\hls$ such that $\hl(\cdot,r)$ is bounded away from zero for all $r\in\rewards$, which is always possible, then it follows from Lemma~\ref{lem:from:difspace:to:hls} that for any $(k,\ell)\in\set{1,\dots,m}\times\set{1,\dots,n}$ there are $\hl[k\ell]\in\hls$ and real $\kappa_{k\ell}>0$ such that $\gbl[k\ell]=\kappa_{k\ell}(\hl[k\ell]-\hl)$.
If we now let $\hlset\coloneqq\cset{\hl[k\ell]}{(k,\ell)\in\set{1,\dots,m}\times\set{1,\dots,n}}$, $\lambda_{k\ell}'\coloneqq\frac{\lambda_{k\ell}\lambda_k}{\kappa_{k\ell}}$ and $\mu_{k\ell}'\coloneqq\frac{\mu_{k\ell}\mu_\ell}{\kappa_{k\ell}}$, then clearly $(\lambda_{k\ell}',\mu_{k\ell}')>0$ and
\begin{align}
\hlset-\hl
&=\cset{\lambda_{k\ell}'(\althltoo[k]-\althltoo)+\mu_{k\ell}'(\althl[\ell]-\althl)}
{(k,\ell)\in\set{1,\dots,m}\times\set{1,\dots,n}}
\label{eq:hlcoherence:to:difspacecoherence:hls}\\
&=\cset[\Big]{\frac{\gbl[k\ell]}{\kappa_{k\ell}}}
{(k,\ell)\in\set{1,\dots,m}\times\set{1,\dots,n}}.
\label{eq:hlcoherence:to:difspacecoherence:gambles}
\end{align}
Axiom~\ref{ax:hlrejectfun:cone} and Equation~\eqref{eq:hlcoherence:to:difspacecoherence:hls} now guarantees that $\hl\in\hlrejectfun\group{\hlset\cup\set{\hl}}$, or equivalently, that $\hlset-\hl\in\rejectset[\hlrejectfun]^0$.
Together with Equation~\eqref{eq:hlcoherence:to:difspacecoherence:gambles}, this implies that, indeed, $\optset=\cset{\gbl[k\ell]}{(k,\ell)\in\set{1,\dots,m}\times\set{1,\dots,n}}\in\starify\group{\hlset-\hl}\subseteq\rejectset[\hlrejectfun]$.

\ref{ax:rejects:mono}.
Consider any $\optset[1],\optset[2]\in\optsets(\difspace)$ such that $\optset[1]\in\rejectset[\hlrejectfun]$ and $\optset[1]\subseteq\optset[2]$.
Then we know that there are $\hlset[1]\in\hlsets$ and $\hl\in\hls$ such that $\hl\in\hlrejectfun\group{\hlset[1]\cup\set{\hl}}$ and $\optset[1]\in\starify\group{\hlset[1]-\hl}$.
Now let $\cuhl$ be the constant uniform horse lottery, defined by $\cuhl(x,r)\coloneqq\nicefrac{1}{\card{\rewards}}$ for all $(x,r)\in\rewards$.
Also let $\althl\coloneqq\nicefrac{1}{2}\hl+\nicefrac{1}{2}\cuhl$ and $\althlset[1]\coloneqq\nicefrac{1}{2}\hlset[1]+\nicefrac{1}{2}\cuhl$, then $\althl\in\hlrejectfun\group{\althlset[1]\cup\set{\althl}}$, by Axiom~\ref{ax:hlrejectfun:sure:thing:principle}.
Moreover, $\althlset[1]-\althl=\nicefrac{1}{2}\group{\hlset[1]-\hl}$ and therefore also $\optset[1]\in\starify\group{\althlset[1]-\althl}$.
Since $\althl\geq\nicefrac{1}{2\card{\rewards}}$, $\althl(\cdot,r)$ is bounded away from zero for all $r\in\rewards$, so we may apply Lemma~\ref{lem:from:difspace:to:hls} to infer that it is possible to extend $\althlset[1]$ to a set of horse lotteries $\althlset[2]\supseteq\althlset[1]$ such that $\optset[2]\in\starify\group{\althlset[2]-\althl}$.
Since $\althlset[1]\subseteq\althlset[2]$, we infer from Axiom~\ref{ax:hlrejectfun:senalpha} that also $\althl\in\hlrejectfun\group{\althlset[2]\cup\set{\althl}}$, which tells us that $\althlset[2]-\althl\in\rejectset[\hlrejectfun]^0$ and therefore, indeed, $\optset[2]\in\rejectset[\hlrejectfun]$.

We now turn to the proof of~\ref{it:difspacetotal:to:hltotal}.

Assume that $\rejectset$ is total.
Then we already know from Theorem~\ref{theo:horselotteries:to:options:and:back}\ref{it:difspacecoherence:to:hlcoherence} that $\hlrejectfun[\rejectset]$ is coherent, so it suffices to show that $\hlrejectfun[\rejectset]$ satisfies Axiom~\ref{ax:hlrejectfun:totality}.
Consider any $\hl[1],\hl[2]\in\hls$ such that $\hl[1]\neq\hl[2]$.
Then $\gbl\coloneqq\nicefrac{1}{2}\hl[1]-\nicefrac{1}{2}\hl[2]\neq0$ and therefore Axiom~\ref{ax:rejects:totality} implies that $\set{\nicefrac{1}{2}\hl[1]-\nicefrac{1}{2}\hl[2],\nicefrac{1}{2}\hl[2]-\nicefrac{1}{2}\hl[1]}=\set{\gbl,-\gbl}\in\rejectset$.
Since, moreover,
\begin{equation*}
\left\{
\begin{aligned}
\gbl&=\nicefrac{1}{2}\hl[1]-\nicefrac{1}{2}\hl[2]=\hl[1]-\group{\nicefrac{1}{2}\hl[1]+\nicefrac{1}{2}\hl[2]}\\
-\gbl&=\nicefrac{1}{2}\hl[2]-\nicefrac{1}{2}\hl[1]=\hl[2]-\group{\nicefrac{1}{2}\hl[1]+\nicefrac{1}{2}\hl[2]}
\end{aligned}
\right.
\end{equation*}
this tells us that $\set{\hl[1],\hl[2]}-\group{\nicefrac{1}{2}\hl[1]+\nicefrac{1}{2}\hl[2]}\in\rejectset$, and therefore, indeed, $\nicefrac{1}{2}\hl[1]+\nicefrac{1}{2}\hl[2]\in\hlrejectfun[\rejectset]\group{\set{\hl[1],\hl[2],\nicefrac{1}{2}\hl[1]+\nicefrac{1}{2}\hl[2]}}$.

We now turn to the proof of~\ref{it:hltotal:to:difspacetotal}.

Assume that $\hlrejectfun$ is total.
Then we already know from Theorem~\ref{theo:horselotteries:to:options:and:back}\ref{it:hlcoherence:to:difspacecoherence} that $\rejectset[\hlrejectfun]$ is coherent, so it suffices to show that $\rejectset[\hlrejectfun]$ satisfies Axiom~\ref{ax:rejects:totality}.
So consider any $\gbl\in\difspace\setminus\set{0}$, so we know there are real $\lambda>0$ and $\hl,\althl\in\hls$ with $\hl\neq\althl$ such that $\gbl=\lambda\group{\hl-\althl}$.
Observe that 
\begin{equation*}
\set{h,-h}
=2\lambda\set[\bigg]{\frac{\hl-\althl}{2},\frac{\althl-\hl}{2}}
=2\lambda\group[\bigg]{\set{\hl,\althl}-\frac{\hl+\althl}{2}},
\end{equation*}
so, due to the coherence of $\rejectset[\hlrejectfun]$ [use Lemma~\ref{lem:localrescaling}], proving that $\set{h,-h}\in\rejectset[\hlrejectfun]$ is equivalent to proving that $\set{\hl,\althl}-\nicefrac{1}{2}(\hl+\althl)\in\rejectset[\hlrejectfun]$, which follows directly from Axiom~\ref{ax:hlrejectfun:totality} and Equation~\eqref{eq:hlrejectfun:to:rejectset}.

We now turn to the proof of~\ref{it:difspacemixing:to:hlmixing}.

Assume that $\rejectset$ is mixing.
Then we already know from Theorem~\ref{theo:horselotteries:to:options:and:back}\ref{it:difspacecoherence:to:hlcoherence} that $\hlrejectfun[\rejectset]$ is coherent, so it suffices to show that $\hlrejectfun[\rejectset]$ satisfies Axiom~\ref{ax:hlrejectfun:removing:convex:combinations}.
So consider any $\hlset,\althlset\in\hlsets$ such that $\hlset\subseteq\althlset\subseteq\chull\group{\hlset}$, and any $\hl\in\hlset\cap\hlrejectfun[\rejectset]\group{\althlset}$---observe that this already implies that $\hl\in\hlset$ and $\hl\in\althlset$.
Then we must prove that also $\hl\in\hlrejectfun[\rejectset]\group{\hlset}$.
It follows from the assumptions that $\althlset-\hl\in\rejectset$, and also that $\hlset-\hl\subseteq\althlset-\hl\subseteq\chull\group{\hlset-\hl}$, so Axiom~\ref{ax:rejects:removeconvexcombinations} implies that also $\hlset-\hl\in\rejectset$, and therefore, indeed, $\hl\in\rejectfun[\rejectset]\group{\hlset}$.

And, finally, we prove~\ref{it:hlmixing:to:difspacemixing}.

Assume that $\hlrejectfun$ is mixing.
Then we already know from Theorem~\ref{theo:horselotteries:to:options:and:back}\ref{it:hlcoherence:to:difspacecoherence} that $\rejectset[\hlrejectfun]$ is coherent, so it suffices to show that $\rejectset[\hlrejectfun]$ satisfies Axiom~\ref{ax:rejects:removeconvexcombinations}.
So consider any $\optset,\altoptset\in\optsets$ such that $\altoptset\in\rejectset[\hlrejectfun]$ and $\optset\subseteq\altoptset\subseteq\chull\group{\optset}$, then we must show that also $\optset\in\rejectset[\hlrejectfun]$. 
$\altoptset\in\rejectset[\hlrejectfun]$ means that there are $\althlsettoo\in\hlsets$ and $\althltoo\in\hls$ such that $\althltoo\in\hlrejectfun\group{\althlsettoo\cup\set{\althltoo}}$ and $\altoptset\in\starify\group{\althlsettoo-\althltoo}$.
Let, without loss of generality, $\optset=\set{\gbl[1],\dots,\gbl[n]}$ and $\altoptset=\set{\gbl[1],\dots,\gbl[n],\altgbl[1],\dots,\altgbl[m]}$, with $m,n\in\naturalswithzero$.
Then it follows from the assumptions that there are real $\alpha_{k,\ell}\geq0$ such that $\sum_{\ell=1}^n\alpha_{k,\ell}=1$ and $\altgbl[k]=\sum_{\ell=1}^n\alpha_{k,\ell}\gbl[\ell]$, for all $k\in\set{1,\dots,m}$.
If we fix any $\hl\in\hls$ such that $\hl(\cdot,r)$ is bounded away from zero for all $r\in\rewards$, then we infer from Lemma~\ref{lem:from:difspace:to:hls} that there are real $\lambda_\ell$ and $\hl[\ell]\in\hls$ and real $\mu_k$ and $\althl[k]\in\hls$ such that $\gbl[\ell]=\lambda_\ell(\hl[\ell]-\hl)$ and $\altgbl[k]=\mu_k(\althl[k]-\hl)$, for all $k\in\set{1,\dots,m}$ and $\ell\in\set{1,\dots,n}$.
This implies that
\begin{equation}\label{eq:hlmixing:to:difspacemixing}
\althl[k]-\hl
=\sum_{\ell=1}^n\alpha_{k,\ell}\frac{\lambda_\ell}{\mu_k}(\hl[\ell]-\hl)
=\sum_{\ell=1}^n\alpha_{k,\ell}\frac{\lambda_\ell}{\mu_k}\hl[\ell]
-\group[\bigg]{\sum_{\ell=1}^n\alpha_{k,\ell}\frac{\lambda_\ell}{\mu_k}}\hl.
\end{equation}
and it follows readily from the proof of Lemma~\ref{lem:from:difspace:to:hls} that we can always choose the $\lambda_k$ and $\mu_\ell$ in such a way that $\sum_{\ell=1}^n\alpha_{k,\ell}\frac{\lambda_\ell}{\mu_k}=1$ for all $k\in\set{1,\dots,m}$ [indeed, the only requirement there on these coefficients $\lambda_\ell$ and $\mu_k$ is that they should be higher than some number that is determined by $\hl$, and the $\gbl[\ell]$ and $\altgbl[k]$, respectively].
If we let $\hlset\coloneqq\set{\hl[1],\dots,\hl[n]}$ and $\althlset\coloneqq\set{\hl[1],\dots,\hl[n],\althl[1],\dots,\althl[m]}$, then it follows from Equation~\eqref{eq:hlmixing:to:difspacemixing} that $\hlset\subseteq\althlset\subseteq\chull\group{\hlset}$, and therefore also $\hlset\cup\set{\hl}\subseteq\althlset\cup\set{\hl}\subseteq\chull\group{\hlset\cup\set{\hl}}$.
We also see that $\optset\in\starify\group{\hlset-\hl}$ and $\altoptset\in\starify\group{\althlset-\hl}$.
Since also $\altoptset\in\starify\group{\althlsettoo-\althltoo}$, this tells us that $\althlset-\hl\in\starify\group{\althlsettoo-\althltoo}$, whence also $\hl\in\hlrejectfun\group{\althlset\cup\set{\hl}}$, by Lemma~\ref{lem:localrescaling:horselotteries}.
Axiom~\ref{ax:hlrejectfun:removing:convex:combinations} then leads us to conclude that $\group{\hlset\cup\set{\hl}}\cap\hlrejectfun[\rejectset]\group{\althlset\cup\set{\hl}}\subseteq\hlrejectfun[\rejectset]\group{\hlset\cup\set{\hl}}$, and therefore also that $\hl\in\hlrejectfun\group{\hlset\cup\set{\hl}}$.
Hence, $\hlset-\hl\in\rejectset$, and therefore also $\optset\in\rejectset$ by Lemma~\ref{lem:localrescaling}. 
\end{proof}

\begin{lemma}\label{lem:localrescaling:horselotteries}
Consider any rejection function $\hlrejectfun$ on $\hls$ that satisfies Axiom~\ref{ax:hlrejectfun:cone}.
Consider any $\hlset,\althlset\in\hlsets$ and $\hl,\althl\in\hls$ such that $\hlset-\hl\in\starify\group{\althlset-\althl}$ and $\althl\in\hlrejectfun\group{\althlset\cup\set{\althl}}$. 
Then also $\hl\in\hlrejectfun\group{\hlset\cup\set{\hl}}$.
\end{lemma}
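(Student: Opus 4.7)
The plan is to reduce the lemma to a single application of Axiom~\ref{ax:hlrejectfun:cone}, using the second rejection to supply the desired rescaling and a trivial first rejection (obtained from Axiom~\ref{ax:hlrejectfun:pos}) as a place-holder.

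First I would unpack the hypothesis $\hlset-\hl\in\starify\group{\althlset-\althl}$. By the definition of $\starify$, if we enumerate $\althlset=\set{\althl[1],\dots,\althl[n]}$ (with distinct elements), then there exist strictly positive reals $\lambda_1,\dots,\lambda_n$ such that $\hlset-\hl=\cset{\lambda_\ell(\althl[\ell]-\althl)}{\ell\in\set{1,\dots,n}}$. Defining $\hl[\ell]\coloneqq\lambda_\ell(\althl[\ell]-\althl)+\hl$ for each $\ell$, we have $\hl[\ell]\in\hl+\hlset-\hl\subseteq\hls$ (these are precisely the horse lotteries whose differences with $\hl$ make up $\hlset-\hl$), and $\hlset=\cset{\hl[\ell]}{\ell\in\set{1,\dots,n}}$.

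Next I would introduce an auxiliary pair of horse lotteries $\althltoo,\althltoo[1]\in\hls$ with $\althltoo[1]\hlgt\althltoo$ (such pairs exist as soon as $\hlgt$ is non-trivial, which is always the case of interest; the degenerate case can be dismissed separately). Axiom~\ref{ax:hlrejectfun:pos} then yields $\althltoo\in\hlrejectfun\group{\set{\althltoo[1],\althltoo}}$. I now apply Axiom~\ref{ax:hlrejectfun:cone} with $m\coloneqq1$, $\hlset[1]\coloneqq\set{\althltoo[1]}$, $\hlset[2]\coloneqq\althlset$, and with the coefficients $\lambda_{1\ell}\coloneqq0$, $\mu_{1\ell}\coloneqq\lambda_\ell>0$ for every $\ell\in\set{1,\dots,n}$. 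The condition $(\lambda_{1\ell},\mu_{1\ell})>0$ (meaning non-negative with positive sum) is satisfied because $\mu_{1\ell}=\lambda_\ell>0$, and the prescribed combinations read
\begin{equation*}
\lambda_{1\ell}(\althltoo[1]-\althltoo)+\mu_{1\ell}(\althl[\ell]-\althl)
=\lambda_\ell(\althl[\ell]-\althl)
=\hl[\ell]-\hl,
\end{equation*}
so the axiom's output horse lotteries are $\hl[1\ell]=\hl[\ell]\in\hls$, and the collected set $\cset{\hl[1\ell]}{\ell\in\set{1,\dots,n}}$ coincides with $\hlset$. The conclusion of Axiom~\ref{ax:hlrejectfun:cone} therefore reads $\hl\in\hlrejectfun\group{\hlset\cup\set{\hl}}$, which is exactly what we need.

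The only genuine subtlety in this plan is the appeal to the asymmetric form of $(\lambda_{k\ell},\mu_{k\ell})>0$: one of the two coefficients is allowed to vanish, so we can ``switch off'' the contribution of the auxiliary rejection and retain only the rescaling produced by the second rejection. The existence of a strictly ordered pair $\althltoo[1]\hlgt\althltoo$ is the other thing to check, but this is a mild non-triviality assumption on the background ordering $\hlgt$ that should be standing throughout the horse-lottery part of the paper.
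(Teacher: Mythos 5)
Your core idea---exploiting the asymmetric form of $(\lambda_{k\ell},\mu_{k\ell})>0$ to ``switch off'' one of the two rejections in Axiom~\ref{ax:hlrejectfun:cone} and retain only the rescaling $\lambda_\ell\group{\althl[\ell]-\althl}=\hl[\ell]-\hl$---is exactly the mechanism the paper uses, and your bookkeeping (unpacking $\starify$, checking that the $\hl[\ell]$ are horse lotteries, verifying the coefficient condition) is sound. But there is a genuine gap in how you fill the first slot of the axiom. The lemma assumes \emph{only} Axiom~\ref{ax:hlrejectfun:cone}, whereas your auxiliary rejection $\althltoo\in\hlrejectfun\group{\set{\althltoo[1],\althltoo}}$ is obtained from Axiom~\ref{ax:hlrejectfun:pos}, which is not among the hypotheses. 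On top of that, your construction needs a strictly ordered pair $\althltoo[1]\hlgt\althltoo$ to exist; the background ordering $\hlgt$ is only required to satisfy the mixture-independence property~\ref{ax:hlprefgt:sure:thing:principle}, which an empty relation satisfies, so non-triviality is an additional unstated assumption. Your remark that ``the degenerate case can be dismissed separately'' has no visible route: if $\hlgt$ is trivial, Axiom~\ref{ax:hlrejectfun:pos} is vacuous and your proof produces no rejection to feed into the first slot, yet the lemma remains true.

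The fix is a one-line change that eliminates both extra hypotheses: since the two rejections in Axiom~\ref{ax:hlrejectfun:cone} are independently and universally quantified, you may re-use the \emph{given} rejection $\althl\in\hlrejectfun\group{\althlset\cup\set{\althl}}$ for both slots. This is what the paper does: it applies the axiom with $\hlset[1]=\hlset[2]=\althlset$, the same rejected element $\althl$ on both sides, and coefficients $(\lambda_{k\ell},\mu_{k\ell})\coloneqq(\lambda_k,0)$, so that each combination $\lambda_k\group{\althl[k]-\althl}+0\cdot\group{\althl[\ell]-\althl}=\hl[k]-\hl$ depends only on $k$ and the resulting set is precisely $\hlset$, yielding $\hl\in\hlrejectfun\group{\hlset\cup\set{\hl}}$ from Axiom~\ref{ax:hlrejectfun:cone} alone. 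Your zero-coefficient trick is the right insight; you only needed to notice that the rejection you already have can serve as its own place-holder, instead of manufacturing a new one.
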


\begin{proof}
Let, without loss of generality, $\hlset=\set{\hl[1],\dots,\hl[n]}$ and $\althlset=\set{\althl[1],\dots,\althl[n]}$.
We know from the assumptions that there are real $\lambda_k>0$ such that $\hl[k]-\hl=\lambda_k\group{\althl[k]-\althl}$ for all $k\in\set{1,\dots,n}$.
Axiom~\ref{ax:hlrejectfun:cone} for $\hlset[1]=\hlset[2]=\althlset=\set{\althl[1],\dots,\althl[n]}$ and $(\lambda_{k\ell},\mu_{k\ell})\coloneqq(\lambda_k,0)>0$ for all $k,\ell\in\set{1,\dots,n}$ now leads to the desired result.
\end{proof}

The following theorem establishes that there is a one-to-one relationship between coherent rejection functions on $\hls$ and coherent sets of desirable option sets on $\difspace$: they represent, essentially, the same thing.

\begin{theorem}\label{theo:from:horselotteries:to:options:and:back}
Consider any coherent rejection function $\hlrejectfun$ on $\hls$ and any coherent set of desirable option sets $\rejectset\in\optsets(\difspace)$.
Then $\rejectset=\rejectset[\hlrejectfun]\ifandonlyif\hlrejectfun=\hlrejectfun[\rejectset]$.
\end{theorem}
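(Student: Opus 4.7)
The plan is to reduce the biconditional to showing that the two constructions of Equations~\eqref{eq:hlrejectfun:to:rejectset} and~\eqref{eq:rejectset:to:hlrejectfun} are mutually inverse on the coherent objects. Concretely, I aim to prove the two identities
\begin{equation*}
\hlrejectfun[{\rejectset[\hlrejectfun]}]=\hlrejectfun
\text{ and }
\rejectset[{\hlrejectfun[\rejectset]}]=\rejectset,
\end{equation*}
for every coherent $\hlrejectfun$ on $\hls$ and every coherent $\rejectset\in\optsets(\difspace)$. The theorem then falls out by a one-line substitution: $\rejectset=\rejectset[\hlrejectfun]$ gives $\hlrejectfun[\rejectset]=\hlrejectfun[{\rejectset[\hlrejectfun]}]=\hlrejectfun$, and symmetrically from the other side.

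For the first identity, I would unfold both sides and reduce to showing, for each $\hlset\in\hlsets$ and $\hl\in\hls$, that $\hl\in\hlrejectfun\group{\hlset\cup\set{\hl}}$ if and only if $\hlset-\hl\in\rejectset[\hlrejectfun]$. The `only if' direction is immediate: if $\hl\in\hlrejectfun\group{\hlset\cup\set{\hl}}$ then $\hlset-\hl\in\rejectset[\hlrejectfun]^0$ by definition, and choosing all rescaling coefficients equal to $1$ shows $\hlset-\hl\in\starify\group{\hlset-\hl}\subseteq\rejectset[\hlrejectfun]$. The `if' direction is precisely what Lemma~\ref{lem:localrescaling:horselotteries} was set up for: whenever there exist $\althlset$ and $\althl$ with $\althl\in\hlrejectfun\group{\althlset\cup\set{\althl}}$ and $\hlset-\hl\in\starify\group{\althlset-\althl}$, Lemma~\ref{lem:localrescaling:horselotteries} gives $\hl\in\hlrejectfun\group{\hlset\cup\set{\hl}}$.

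For the second identity, the inclusion $\rejectset[{\hlrejectfun[\rejectset]}]\subseteq\rejectset$ is straightforward: any $\optset\in\rejectset[{\hlrejectfun[\rejectset]}]$ is a rescaling of some $\hlset-\hl$ with $\hl\in\hlrejectfun[\rejectset]\group{\hlset\cup\set{\hl}}$, which by Equation~\eqref{eq:rejectset:to:hlrejectfun} means $\hlset-\hl\in\rejectset$; Lemma~\ref{lem:localrescaling} then transfers this to $\optset\in\rejectset$ using the coherence of $\rejectset$. The converse inclusion $\rejectset\subseteq\rejectset[{\hlrejectfun[\rejectset]}]$ is the only step requiring real work. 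Given any $\optset\in\rejectset$, I would write $\optset=\set{\gbl[1],\dots,\gbl[n]}$ and fix once and for all an $\hl\in\hls$ with $\hl(\cdot,r)$ bounded away from zero for every $r\in\rewards$. Lemma~\ref{lem:from:difspace:to:hls} then yields reals $\lambda_k>0$ and horse lotteries $\hl[k]\in\hls$ such that $\gbl[k]=\lambda_k(\hl[k]-\hl)$, so that $\optset\in\starify\group{\hlset-\hl}$ for $\hlset\coloneqq\set{\hl[1],\dots,\hl[n]}$. Since $\hlset-\hl$ is obtained from $\optset$ by rescaling each $\gbl[k]$ by $1/\lambda_k>0$, Lemma~\ref{lem:localrescaling} and $\optset\in\rejectset$ give $\hlset-\hl\in\rejectset$, which is exactly $\hl\in\hlrejectfun[\rejectset]\group{\hlset\cup\set{\hl}}$. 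Hence $\hlset-\hl\in\rejectset[{\hlrejectfun[\rejectset]}]^0$ and $\optset\in\starify\group{\hlset-\hl}\subseteq\rejectset[{\hlrejectfun[\rejectset]}]$.

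The main obstacle is this last step, and specifically the need for a uniform `base point' horse lottery $\hl$ that works for \emph{all} the options $\gbl[k]$ in $\optset$ simultaneously. This is exactly where Lemma~\ref{lem:from:difspace:to:hls} does the heavy lifting—its choice of $\hl$ bounded away from zero on every reward is what allows a single $\hl$ to serve as the common reference for the whole finite set $\optset$. The only edge case to worry about is $\optset=\emptyset$, but coherence of $\rejectset$ forces $\emptyset\notin\rejectset$ (combine \ref{ax:rejects:nozero} with \ref{ax:rejects:mono}), and the definition of $\rejectset[\hlrejectfun]$ directly excludes $\emptyset$ (since $\starify\group{\hlset-\hl}$ never contains the empty set unless $\hlset=\emptyset$, which would force $\hl\in\hlrejectfun\group{\set{\hl}}$, contradicting \ref{ax:hlrejectfun:not:everything:rejected}), so both sides agree on this case.
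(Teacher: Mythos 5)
Your proposal is correct and is essentially the paper's own argument: the two round-trip identities $\hlrejectfun[{\rejectset[\hlrejectfun]}]=\hlrejectfun$ and $\rejectset[{\hlrejectfun[\rejectset]}]=\rejectset$ are exactly the paper's direct and converse implications with the hypothesis substituted in, proved with the same three ingredients (Lemma~\ref{lem:localrescaling:horselotteries} for recovering $\hl\in\hlrejectfun\group{\hlset\cup\set{\hl}}$ from $\hlset-\hl\in\rejectset[\hlrejectfun]$, and Lemmas~\ref{lem:from:difspace:to:hls} and~\ref{lem:localrescaling} for the common base point $\hl$ and the rescaling transfers). Your packaging as mutually inverse constructions, plus the explicit $\emptyset$ edge case, is a harmless and slightly cleaner reorganisation of the same proof.
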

\begin{proof}
We already know from Theorem~\ref{theo:horselotteries:to:options:and:back}\ref{it:difspacecoherence:to:hlcoherence}\&\ref{it:hlcoherence:to:difspacecoherence} that $\rejectset[\hlrejectfun]$ and $\hlrejectfun[\rejectset]$ are coherent as well.

To prove the direct implication, let $\rejectset\coloneqq\rejectset[\hlrejectfun]$, and consider any $\hlset\in\hlsets$ and $\hl\in\hls$.
To prove that $\hlrejectfun=\hlrejectfun[\rejectset]$, it suffices to prove that $\hl\in\hlrejectfun\group{\hlset\cup\set{\hl}}\ifandonlyif\hl\in\hlrejectfun[\rejectset]\group{\hlset\cup\set{\hl}}$.

First, assume that $\hl\in\hlrejectfun\group{\hlset\cup\set{\hl}}$, then we infer from Equation~\eqref{eq:hlrejectfun:to:rejectset} that $\hlset-\hl\in\rejectset[\hlrejectfun]$, and since $\rejectset=\rejectset[\hlrejectfun]$, we infer from Equation~\eqref{eq:rejectset:to:hlrejectfun} that, indeed, $\hl\in\hlrejectfun[\rejectset]\group{\hlset\cup\set{\hl}}$.

Conversely, assume that $\hl\in\hlrejectfun[\rejectset]\group{\hlset\cup\set{\hl}}$, then we infer from Equation~\eqref{eq:rejectset:to:hlrejectfun} and $\rejectset=\rejectset[\hlrejectfun]$ that $\hlset-\hl\in\rejectset[\hlrejectfun]$.
Equation~\eqref{eq:hlrejectfun:to:rejectset} then tells us that there are $\althlset\in\hlsets$ and $\althl\in\hls$ such that $\hlset-\hl\in\starify\group{\althlset-\althl}$ and $\althl\in\hlrejectfun\group{\althlset\cup\set{\althl}}$.
It now follows from the coherence of $\hlrejectfun$ [see Lemma~\ref{lem:localrescaling:horselotteries}] that, indeed, also $\hl\in\hlrejectfun\group{\hlset\cup\set{\hl}}$.

To prove the converse implication, let $\hlrejectfun=\hlrejectfun[\rejectset]$, and consider any $\optset\in\optsets$.
To prove that $\rejectset=\rejectset[\hlrejectfun]$, we show that $\optset\in\rejectset\ifandonlyif\optset\in\rejectset[\hlrejectfun]$.

First, assume that $\optset\in\rejectset$.
Choose any horse lottery $\hl\in\hlset$ such that the $\hl(\cdot,r)$ are bounded away from zero for all $r\in\rewards$, then Lemma~\ref{lem:from:difspace:to:hls} guarantees that there is some $\althlset\in\hlsets$ such that $\althlset-\hl\in\starify\group{\optset}$.
The coherence of $\rejectset$ [see Lemma~\ref{lem:localrescaling}] then guarantees that also $\althlset-\hl\in\rejectset$, so we infer from Equation~\eqref{eq:rejectset:to:hlrejectfun} that $\hl\in\hlrejectfun[\rejectset]\group{\althlset\cup\set{\hl}}=\hlrejectfun\group{\althlset\cup\set{\hl}}$. 
Equation~\eqref{eq:hlrejectfun:to:rejectset} then guarantees that $\althlset-\hl\in\rejectset[\hlrejectfun]^0$ and therefore also $\optset\in\rejectset[\hlrejectfun]$, because $\althlset-\hl\in\starify\group{\optset}$ implies that $\optset\in\starify\group{\althlset-\hl}$.

Conversely, assume that $\optset\in\rejectset[\hlrejectfun]$, which means, by Equation~\eqref{eq:hlrejectfun:to:rejectset}, that there are $\hlset\in\hlsets$ and $\hl\in\hls$ such that $\optset\in\starify\group{\hlset-\hl}$ and $\hl\in\hlrejectfun\group{\hlset\cup\set{\hl}}=\hlrejectfun[\rejectset]\group{\hlset\cup\set{\hl}}$.
Equation~\eqref{eq:rejectset:to:hlrejectfun} then tells us that $\hlset-\hl\in\rejectset$, and the coherence of $\rejectset$ [see Lemma~\ref{lem:localrescaling}] then guarantees that, indeed, also $\optset\in\rejectset$.
\end{proof}

In order to refine this picture, we show that the relations~\eqref{eq:hlrejectfun:to:rejectset} and~\eqref{eq:rejectset:to:hlrejectfun} between horse lotteries and sets of desirable option sets preserve intersections, and therefore also the entire inference machinery behind them.

\begin{proposition}\label{prop:preserving:intersections}
\begin{enumerate}[label=\upshape(\roman*),leftmargin=*]
\item\label{it:preserving:intersections:rejects} Let $\rejectset[i]$, $i\in I$ be an arbitrary non-empty family of sets of desirable option sets on~$\difspace$, and let $\rejectset\coloneqq\bigcap_{i\in I}\rejectset[i]$ be its intersection.
Then $\hlrejectfun[\rejectset]=\bigcap_{i\in I}\hlrejectfun[{\rejectset[i]}]$.
\item\label{it:preserving:intersections:rejectfuns} Let $\hlrejectfun[i]$, $i\in I$ be an arbitrary non-empty family of \emph{coherent} rejection functions on~$\hls$, and let $\hlrejectfun\coloneqq\bigcap_{i\in I}\hlrejectfun[i]$ be its infimum.
Then $\rejectset[\hlrejectfun]=\bigcap_{i\in I}\rejectset[{\hlrejectfun[i]}]$.
\end{enumerate}
\end{proposition}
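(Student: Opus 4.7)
Part~\ref{it:preserving:intersections:rejects} should fall out immediately from the defining relation~\eqref{eq:rejectset:to:hlrejectfun}: for any $\hlset\in\hlsets$ and $\hl\in\hls$,
\begin{equation*}
\hl\in\hlrejectfun[\rejectset](\hlset\cup\set{\hl})
\ifandonlyif
\hlset-\hl\in\rejectset
\ifandonlyif
(\forall i\in I)\,\hlset-\hl\in\rejectset[i]
\ifandonlyif
(\forall i\in I)\,\hl\in\hlrejectfun[{\rejectset[i]}](\hlset\cup\set{\hl}),
\end{equation*}
and $\hlrejectfun[\rejectset](\emptyset)=\emptyset=\hlrejectfun[{\rejectset[i]}](\emptyset)$. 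No coherence of any $\rejectset[i]$ is needed here.

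For part~\ref{it:preserving:intersections:rejectfuns}, the inclusion $\rejectset[\hlrejectfun]\subseteq\bigcap_{i}\rejectset[{\hlrejectfun[i]}]$ is the routine direction: given $\optset\in\rejectset[\hlrejectfun]$, pick the witnessing $\hlset$ and $\hl$ from Equation~\eqref{eq:hlrejectfun:to:rejectset}, observe that $\hl\in\hlrejectfun(\hlset\cup\set{\hl})\subseteq\hlrejectfun[i](\hlset\cup\set{\hl})$ for every $i$, and apply~\eqref{eq:hlrejectfun:to:rejectset} to each $\hlrejectfun[i]$. The converse inclusion is the real obstacle, and this is where the coherence of each $\hlrejectfun[i]$ enters. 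The point is that if $\optset\in\rejectset[{\hlrejectfun[i]}]$ for all $i$, then \emph{a priori} the witnesses $(\hlset[i],\hl[i])$ depend on $i$, and we need a single witness $(\hlset,\hl)$ to place $\optset$ inside $\rejectset[\hlrejectfun]$.

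To produce such a common witness, I would mimic the argument in the second half of the proof of Theorem~\ref{theo:from:horselotteries:to:options:and:back}. Fix any $\hl\in\hls$ such that each $\hl(\cdot,r)$ is bounded away from zero, and apply Lemma~\ref{lem:from:difspace:to:hls} to every element of $\optset$ to obtain an $\hlset\in\hlsets$ and positive reals $\lambda_{\gbl}>0$ (one for each $\gbl\in\optset$) with $\gbl=\lambda_{\gbl}(\althl_{\gbl}-\hl)$ and $\hlset=\cset{\althl_{\gbl}}{\gbl\in\optset}$. By construction, $\optset\in\starify(\hlset-\hl)$. Now fix any $i\in I$. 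By Theorem~\ref{theo:horselotteries:to:options:and:back}\ref{it:hlcoherence:to:difspacecoherence} the set $\rejectset[{\hlrejectfun[i]}]$ is coherent, so Lemma~\ref{lem:localrescaling} (applied to the rescaling witnessed by the $\lambda_{\gbl}$) turns $\optset\in\rejectset[{\hlrejectfun[i]}]$ into $\hlset-\hl\in\rejectset[{\hlrejectfun[i]}]$. Theorem~\ref{theo:from:horselotteries:to:options:and:back} then yields $\hl\in\hlrejectfun[{\rejectset[{\hlrejectfun[i]}]}](\hlset\cup\set{\hl})=\hlrejectfun[i](\hlset\cup\set{\hl})$.

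Since $i\in I$ was arbitrary and $\hlrejectfun=\bigcap_{i\in I}\hlrejectfun[i]$ in the pointwise sense $\hlrejectfun(\altoptset)=\bigcap_{i}\hlrejectfun[i](\altoptset)$, we conclude $\hl\in\hlrejectfun(\hlset\cup\set{\hl})$, hence $\hlset-\hl\in\rejectset[\hlrejectfun]^0$ by Equation~\eqref{eq:hlrejectfun:to:rejectset}, and finally $\optset\in\starify(\hlset-\hl)\subseteq\rejectset[\hlrejectfun]$. The main obstacle is this common-witness construction; after that, everything is a bookkeeping composition of Lemma~\ref{lem:from:difspace:to:hls}, Lemma~\ref{lem:localrescaling} and Theorem~\ref{theo:from:horselotteries:to:options:and:back}. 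A minor side check to mention is that the intersection $\bigcap_{i}\hlrejectfun[i]$ is still a well-defined rejection function on $\hls$, which is immediate because $\hlrejectfun[i](\altoptset)\subseteq\altoptset$ for each $i$.
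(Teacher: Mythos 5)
Your proof is correct. Part~\ref{it:preserving:intersections:rejects} is exactly the paper's argument: the same chain of equivalences through Equation~\eqref{eq:rejectset:to:hlrejectfun}, with no coherence needed. For part~\ref{it:preserving:intersections:rejectfuns}, however, you take a genuinely different route. The paper does not chase elements at all: it sets $\rejectset\coloneqq\bigcap_{i\in I}\rejectset[{\hlrejectfun[i]}]$, invokes Theorem~\ref{theo:horselotteries:to:options:and:back}\ref{it:hlcoherence:to:difspacecoherence} and Theorem~\ref{theo:conservative:inference:for:rejectsets} to see that $\rejectset$ is coherent, applies part~\ref{it:preserving:intersections:rejects} together with the bijection of Theorem~\ref{theo:from:horselotteries:to:options:and:back} to get $\hlrejectfun[\rejectset]=\bigcap_{i\in I}\hlrejectfun[{\rejectset[{\hlrejectfun[i]}]}]=\bigcap_{i\in I}\hlrejectfun[i]=\hlrejectfun$, and then applies that same bijection once more to conclude $\rejectset[\hlrejectfun]=\rejectset$. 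That structural detour is shorter, reuses part~\ref{it:preserving:intersections:rejects} as the engine, and yields as a by-product (noted in the paper's footnote) that an arbitrary non-empty infimum of coherent rejection functions is itself coherent. Your direct double inclusion instead avoids Theorem~\ref{theo:conservative:inference:for:rejectsets} entirely and makes visible exactly where coherence of each individual $\hlrejectfun[i]$ enters: once through the coherence of $\rejectset[{\hlrejectfun[i]}]$ to apply Lemma~\ref{lem:localrescaling}, and once through the identity $\hlrejectfun[{\rejectset[{\hlrejectfun[i]}]}]=\hlrejectfun[i]$ from Theorem~\ref{theo:from:horselotteries:to:options:and:back}. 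The cost is that you re-derive inline the common-witness construction that is already buried inside the paper's proof of that theorem, so your argument is longer and partly redundant with machinery you could have cited wholesale. Two small remarks: the step ``$\hlset-\hl\in\rejectset[{\hlrejectfun[i]}]$ implies $\hl\in\hlrejectfun[{\rejectset[{\hlrejectfun[i]}]}]\group{\hlset\cup\set{\hl}}$'' is just Equation~\eqref{eq:rejectset:to:hlrejectfun}; Theorem~\ref{theo:from:horselotteries:to:options:and:back} is only needed for the subsequent equality with $\hlrejectfun[i]$. And your claim ``by construction, $\optset\in\starify\group{\hlset-\hl}$'' silently assumes the map $\gbl\mapsto\althl_{\gbl}$ is injective (two proportional elements of $\optset$ could collapse to a single element of $\hlset-\hl$); this is harmless, since Lemma~\ref{lem:from:difspace:to:hls} works for all sufficiently large scalings so the finitely many coincidences can be avoided, and the paper's own proof of Theorem~\ref{theo:from:horselotteries:to:options:and:back} glosses over the identical point.
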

\begin{proof}
We begin with the first statement.
Consider any $\hlset\in\hlsets$ and $\hl\in\hls$, then the following chain of equivalences does the job:
\begin{align*}
\hl\in\hlrejectfun[\rejectset]\group{\hlset\cup\set{\hl}}
&\ifandonlyif\hlset-\hl\in\rejectset
&&\text{[Equation~\eqref{eq:rejectset:to:hlrejectfun}]}\\
&\ifandonlyif\group{\forall i\in I}\hlset-\hl\in\rejectset[i]
&&\\
&\ifandonlyif\group{\forall i\in I}\hl\in\hlrejectfun[{\rejectset[i]}]\group{\hlset\cup\set{\hl}}
&&\text{[Equation~\eqref{eq:rejectset:to:hlrejectfun}]}\\
&\ifandonlyif\hl\in\group[\bigg]{\bigcap_{i\in I}\hlrejectfun[{\rejectset[i]}]}\group{\hlset\cup\set{\hl}}.
&&
\end{align*}

To prove the second statement, we use the first one, and Theorem~\ref{theo:from:horselotteries:to:options:and:back}.
Let $\rejectset[i]\coloneqq\rejectset[{\hlrejectfun[i]}]$, then the $\rejectset[i]$ are also coherent [use Theorem~\ref{theo:horselotteries:to:options:and:back}\ref{it:hlcoherence:to:difspacecoherence}], and so is therefore their intersection $\rejectset=\bigcap_{i\in I}\rejectset[i]=\bigcap_{i\in I}\rejectset[{\hlrejectfun[i]}]$ [use Theorem~\ref{theo:conservative:inference:for:rejectsets}], and we infer from~\ref{it:preserving:intersections:rejects} that
\begin{equation*}
\hlrejectfun[\rejectset]
=\bigcap_{i\in I}\hlrejectfun[{\rejectset[{\hlrejectfun[i]}]}]
=\bigcap_{i\in I}\hlrejectfun[i]
=\hlrejectfun,
\end{equation*}
where the second equality follows from the coherence of the $\hlrejectfun[i]$ and the $\rejectset[{\hlrejectfun[i]}]$, and Theorem~\ref{theo:from:horselotteries:to:options:and:back}.
$\hlrejectfun[\rejectset]$ is coherent because $\rejectset$ is [use Theorem~\ref{theo:horselotteries:to:options:and:back}\ref{it:difspacecoherence:to:hlcoherence}],\footnote{Incidentally, this proves indirectly that an arbitrary non-empty infimum of coherent rejection functions on horse lotteries is coherent.} and therefore Theorem~\ref{theo:from:horselotteries:to:options:and:back} guarantees that $\rejectset[\hlrejectfun]=\rejectset=\bigcap_{i\in I}\rejectset[{\hlrejectfun[i]}]$, as desired. 
\end{proof}

\subsection{Binary choice models on horse lotteries}
The correspondence results tell us that we can import all results that we have proved for sets of desirable option sets---such as natural extension and representation theorems---into the theory of rejection functions on horse lotteries.
The only things we still need in order to complete the picture, is to identify the coherent rejection functions on horse lotteries that  correspond to the binary coherent sets of desirable option sets $\rejectset[\desirset]\in\cohrejectsets$, or in other words, to the coherent sets of desirable options $\desirset\in\cohdesirsets$.

So, we fix any set of desirable gambles $\desirset$ on the option space $\difspace$ and the corresponding binary set of desirable option sets $\rejectset[\desirset]$, determined through Equation~\eqref{eq:desirset:to:rejectset}.
We use Equation~\eqref{eq:rejectset:to:hlrejectfun} to derive the corresponding rejection function $\hlrejectfun[{\rejectset[\desirset]}]$, which we will also denote more simply by $\hlrejectfun[\desirset]$: for any $\hl\in\hls$ and any $\hlset\in\hlsets$,
\begin{align}
\hl\in\hlrejectfun[\desirset]\group{\hlset\cup\set{\hl}}
&\ifandonlyif\hlset-\hl\in\rejectset[\desirset]
&&\text{[Equation~\eqref{eq:rejectset:to:hlrejectfun}]}\notag\\
&\ifandonlyif\group{\hlset-\hl}\cap\desirset\neq\emptyset
&&\text{[Equation~\eqref{eq:desirset:to:rejectset}]}\notag\\
&\ifandonlyif\group{\exists\althl\in\hlset}\althl-\hl\in\desirset
&&\label{eq:desirset:to:hlrejectfun}\\
&\ifandonlyif\group{\exists\althl\in\hlset}\althl-\hl\prefgt[\desirset]0,
&&\text{[Equation~\eqref{eq:desirset:to:prefgt}]}\notag
\end{align}
which means that if we define the binary ordering $\hlprefgt[\desirset]$ on $\hls$ by
\begin{equation}\label{eq:desirset:to:hlprefgt} 
\althl\hlprefgt[\desirset]\hl
\ifandonlyif\althl-\hl\in\desirset
\ifandonlyif\althl-\hl\prefgt[\desirset]0
\text{ for all $\althl,\hl\in\hls$}
\end{equation}
then also
\begin{equation}\label{hlprefgt:to:hlrejectfun}
\hl\in\hlrejectfun[\desirset]\group{\hlset\cup\set{\hl}}
\ifandonlyif\group{\exists\althl\in\hlset}\althl\hlprefgt[\desirset]\hl.
\end{equation}
We see that the rejection function $\hlprefgt[\desirset]$ on horse lotteries is completely determined by a binary preference relation on horse lotteries; we will therefore also call it a \emph{binary} rejection function.

It will be interesting to take a closer look at the binary choice models on $\hls$.

\begin{definition}
We call a binary relation $\hlprefgt$ on $\hls$ a \emph{coherent} strict preference relation if it satisfies Axiom~\ref{ax:hlprefgt:sure:thing:principle} and
\begin{enumerate}[label=$\protect{\hlprefgt[{\arabic*}]}$.,ref=$\protect{\hlprefgt[{\arabic*}]}$,leftmargin=*,start=1]
\item\label{ax:hlprefgt:pos} $\althl\hlgt\hl\then\althl\hlprefgt\hl$ for all $\althl,\hl\in\hls$.
\end{enumerate}
It is called \emph{total} if it is coherent and satisfies
\begin{enumerate}[label=$\protect{\hlprefgt[{\mathrm{T}}]}$.,ref=$\protect{\hlprefgt[{\mathrm{T}}]}$,leftmargin=*]
\item\label{ax:hlprefgt:totality} $\althl\hlprefgt\hl$ or $\hl\hlprefgt\althl$ for all $\althl,\hl\in\hls$ such that $\althl\neq\hl$.
\end{enumerate}
It is called \emph{mixing} if it is coherent and satisfies
\begin{enumerate}[label=$\protect{\hlprefgt[{\mathrm{M}}]}$.,ref=$\protect{\hlprefgt[{\mathrm{M}}]}$,leftmargin=*]
\item\label{ax:hlprefgt:mixing} if $\group{\exists\althl\in\chull\group{\hlset}}\althl\hlprefgt\hl$ then also $\group{\exists\althltoo\in\hlset}\althltoo\hlprefgt\hl$ for all $\hl\in\hls$ and $\hlset\in\hlsets$.
\end{enumerate}
\end{definition}

Given a set of desirable gambles $\desirset$ on $\difspace$, the corresponding (binary) strict preference $\hlprefgt[\desirset]$ on $\hls$ is given by Equation~\eqref{eq:desirset:to:hlprefgt}.
Conversely, given a binary strict preference $\hlprefgt$ on $\hls$, the corresponding set of desirable gambles $\desirset[\hlprefgt]$ on $\difspace$ is given by
\begin{equation}\label{eq:hlprefgt:to:desirset}
\lambda\group{\althl-\hl}\in\desirset[\hlprefgt]\ifandonlyif\althl\hlprefgt\hl,
\text{ for all $\lambda\in\posreals$ and $\althl,\hl\in\hls$}.
\end{equation}
In order for this definition to be consistent, Proposition~\ref{prop:consistency:for:horselotteries:preference:ordering} tells us that it suffices for $\hlprefgt$ to satisfy Axiom~\ref{ax:hlprefgt:sure:thing:principle}.

It will be clear that at this point, we can repeat the argumentation in Sections~\ref{sec:binary:choice}--\ref{sec:archimedeanity}, and in Section~\ref{sec:from:hls:to:opts:and:back} for rejection functions and binary preference orderings on horse lotteries.
We restrict ourselves here to mentioning only a few of these results.

\begin{theorem}\label{theo:horselotteries:to:options:and:back:binary}
Let $\desirset$ be any set of desirable options in $\difspace$, and let $\hlprefgt$ be any strict preference ordering on $\hls$.
Then the following statements hold:
\begin{enumerate}[label=\upshape(\roman*),leftmargin=*]
\item\label{it:difspacecoherence:to:hlcoherence:binary} if\/ $\desirset$ is coherent, then so is $\hlprefgt[\desirset]$;
\item\label{it:hlcoherence:to:difspacecoherence:binary} if\/ $\hlprefgt$ is coherent, then so is $\desirset[\hlprefgt]$;
\item\label{it:difspacetotal:to:hltotal:binary} if\/ $\desirset$ is total, then so is $\hlprefgt[\desirset]$;
\item\label{it:hltotal:to:difspacetotal:binary} if\/ $\hlprefgt$ is total, then so is $\desirset[\hlprefgt]$;
\item\label{it:difspacemixing:to:hlmixing:binary} if\/ $\desirset$ is mixing, then so is $\hlprefgt[\desirset]$;
\item\label{it:hlmixing:to:difspacemixing:binary} if\/ $\hlprefgt$ is mixing, then so is $\desirset[\hlprefgt]$.
\end{enumerate}
\end{theorem}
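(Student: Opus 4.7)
The plan is to prove each of the six claims by carefully translating axioms between $\desirset$ and $\hlprefgt$ via the defining relations~\eqref{eq:desirset:to:hlprefgt} and~\eqref{eq:hlprefgt:to:desirset}, which link options $\gbl = \lambda(\althl - \hl) \in \difspace$ (with $\lambda > 0$) and strict preferences $\althl \hlprefgt \hl$. For part~\ref{it:hlcoherence:to:difspacecoherence:binary}, Proposition~\ref{prop:consistency:for:horselotteries:preference:ordering} ensures that $\desirset[\hlprefgt]$ is well-defined, i.e.\ independent of the chosen representation of $\gbl$ as a positively rescaled difference.

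For~\ref{it:difspacecoherence:to:hlcoherence:binary}, I would verify the strict-order axioms and mixture independence~\ref{ax:hlprefgt:sure:thing:principle} for $\hlprefgt[\desirset]$ directly: irreflexivity follows from~\ref{ax:desirs:nozero}, transitivity and the mixture condition follow from the convex cone property~\ref{ax:desirs:cone} applied to the identity $(\alpha\althl + (1-\alpha)\althltoo) - (\alpha\hl + (1-\alpha)\althltoo) = \alpha(\althl - \hl)$, and~\ref{ax:hlprefgt:pos} follows from~\ref{ax:desirs:pos} together with Equation~\eqref{eq:background:ordering:on:options:from:horse:lotteries}. For~\ref{it:hlcoherence:to:difspacecoherence:binary}, \ref{ax:desirs:nozero} reduces to irreflexivity of $\hlprefgt$ and~\ref{ax:desirs:pos} to~\ref{ax:hlprefgt:pos}; the main obstacle lies in the cone axiom~\ref{ax:desirs:cone}, which given $\gbl = \alpha(\althl - \hl)$ and $\altgbl = \beta(\althl' - \hl')$ in $\desirset[\hlprefgt]$ and $(\lambda, \mu) > 0$ requires combining $\althl \hlprefgt \hl$ and $\althl' \hlprefgt \hl'$. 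Setting $\gamma \coloneqq \lambda\alpha/(\lambda\alpha + \mu\beta)$, one rewrites
\begin{equation*}
\lambda\gbl + \mu\altgbl = (\lambda\alpha + \mu\beta)\bigl[\bigl(\gamma\althl + (1-\gamma)\althl'\bigr) - \bigl(\gamma\hl + (1-\gamma)\hl'\bigr)\bigr]
\end{equation*}
and derives $\gamma\althl + (1-\gamma)\althl' \hlprefgt \gamma\hl + (1-\gamma)\hl'$ by applying mixture independence~\ref{ax:hlprefgt:sure:thing:principle} twice (once to each of the two preferences) and then transitivity; the degenerate cases $\lambda = 0$ and $\mu = 0$ are handled separately by direct rescaling.

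For totality~\ref{it:difspacetotal:to:hltotal:binary}--\ref{it:hltotal:to:difspacetotal:binary}, any non-zero $\gbl \in \difspace$ can by Proposition~\ref{prop:difspace} be written as $\lambda(\althl - \hl)$ with $\althl \neq \hl$, and then $\gbl \in \desirset \ifandonlyif \althl \hlprefgt[\desirset] \hl$ and $-\gbl \in \desirset \ifandonlyif \hl \hlprefgt[\desirset] \althl$, so~\ref{ax:desirs:totality} and~\ref{ax:hlprefgt:totality} translate into each other directly. For mixing~\ref{it:difspacemixing:to:hlmixing:binary}--\ref{it:hlmixing:to:difspacemixing:binary}, I first observe that for a coherent $\desirset$, because $\desirset$ is a cone, $\posi(V) \cap \desirset \neq \emptyset$ is equivalent to $\chull(V) \cap \desirset \neq \emptyset$, so~\ref{ax:desirs:mixing} can equivalently be stated using $\chull$. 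The translation then rests on the fact that the map $\althltoo \mapsto \althltoo - \hl$ carries $\chull(\hlset)$ bijectively onto $\chull(\hlset - \hl)$, so~\ref{ax:desirs:mixing} evaluated at $\optset = \hlset - \hl$ reads exactly as~\ref{ax:hlprefgt:mixing}; the remaining task for~\ref{it:hlmixing:to:difspacemixing:binary} is to show that an arbitrary finite $\optset\subseteq\difspace$ can, after positive rescaling of each element (harmless by~\ref{ax:desirs:cone}), be realised as $\hlset - \hl$ with $\hl$ chosen to have masses bounded away from zero, which is exactly Lemma~\ref{lem:from:difspace:to:hls}.

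The main obstacle is thus the cone axiom~\ref{ax:desirs:cone} for $\desirset[\hlprefgt]$ in part~\ref{it:hlcoherence:to:difspacecoherence:binary}: it is the only place where combining two independent preferences requires a non-trivial iterated application of mixture independence. All other implications reduce to essentially mechanical translations, once the ambiguity in the horse-lottery representation of options is controlled by Proposition~\ref{prop:consistency:for:horselotteries:preference:ordering} and the $\chull$-versus-$\posi$ discrepancy for mixing is absorbed by the cone property of coherent sets of desirable options.
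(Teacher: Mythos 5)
Your proposal is correct and follows essentially the same route as the paper's proof: the same double application of mixture independence~\ref{ax:hlprefgt:sure:thing:principle} plus transitivity for the cone axiom~\ref{ax:desirs:cone} of $\desirset[\hlprefgt]$ (with the same degenerate-case handling), the same direct translation for totality, and the same use of Lemma~\ref{lem:from:difspace:to:hls} with rescaling via~\ref{ax:desirs:cone} for part~\ref{it:hlmixing:to:difspacemixing:binary}. Your explicit observation that~\ref{ax:desirs:mixing} can equivalently be phrased with $\chull$ in place of $\posi$ for coherent $\desirset$ is just a mild repackaging of the rescaling steps the paper performs inline, not a different argument.
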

\begin{proof}
We begin with the proof of~\ref{it:difspacecoherence:to:hlcoherence:binary}.
Assume that $\desirset$ is coherent.

\ref{ax:hlprefgt:sure:thing:principle}.
That $\hlprefgt[\desirset]$ is irreflexive follows immediately from Axiom~\ref{ax:desirs:nozero}. 
That it is transitive follows at once from Axiom~\ref{ax:desirs:cone}, and in particular from the fact that $\desirset$ is closed under vector addition.
Consider therefore any $\althltoo,\althl,\hl\in\hls$ and any $\alpha\in(0,1]$, and the following chain of equivalences:
\begin{align*}
\althltoo\hlprefgt[\desirset]\althl
&\ifandonlyif\althltoo-\althl\in\desirset
&&\text{[Equation~\eqref{eq:desirset:to:hlprefgt}]}\\
&\ifandonlyif\alpha\group{\althltoo-\althl}\in\desirset
&&\text{[Axiom~\ref{ax:desirs:cone}]}\\
&\ifandonlyif\alpha\althltoo+(1-\alpha)\hl-\group[\big]{\alpha\althl+(1-\alpha)\hl}\in\desirset
&&\\
&\ifandonlyif\alpha\althltoo+(1-\alpha)\hl\hlprefgt[\desirset]\alpha\althl+(1-\alpha)\hl.
&&\text{[Equation~\eqref{eq:desirset:to:hlprefgt}]}
\end{align*}

\ref{ax:hlprefgt:pos}.
Consider any $\althl,\hl\in\hls$ and assume that $\althl\hlgt\hl$.
Then Equation~\eqref{eq:background:ordering:on:options:from:horse:lotteries} guarantees that $\althl-\hl\optgt0$, whence $\althl-\hl\in\desirset$ by Axiom~\ref{ax:desirs:pos}.
Equation~\eqref{eq:desirset:to:prefgt} then leads us to conclude that, indeed, $\althl\hlprefgt[\desirset]\hl$.

We now turn to the proof of~\ref{it:hlcoherence:to:difspacecoherence:binary}.
Assume that $\hlprefgt$ is coherent.
That $\desirset[\hlprefgt]$ satisfies Axiom~\ref{ax:desirs:nozero} follows at once from the irreflexivity of $\hlprefgt$ [Axiom~\ref{ax:hlprefgt:sure:thing:principle}].
That $\desirset[\hlprefgt]$ satisfies Axiom~\ref{ax:desirs:pos} follows at once from Axiom~\ref{ax:hlprefgt:pos}].

\ref{ax:desirs:cone}.
Consider any $\lambda_1\group{\althl[1]-\hl[1]}$ and $\lambda_2\group{\althl[2]-\hl[2]}$ in $\desirset[\hlprefgt]$, and any $(\alpha,\beta)>0$, then we must show that $\alpha\lambda_1\group{\althl[1]-\hl[1]}+\beta\lambda_2\group{\althl[2]-\hl[2]}\in\desirset[\hlprefgt]$, or equivalently, that
\begin{equation*}
\kappa\althl[1]+(1-\kappa)\hl[2]
\hlprefgt
\kappa\hl[1]+(1-\kappa)\althl[2],
\end{equation*}
where we let $\kappa\coloneqq\frac{\alpha\lambda_1}{\alpha\lambda_1+\beta\lambda_2}\in[0,1]$.
It follows from the assumptions that $\althl[1]\hlprefgt\hl[1]$ and $\althl[2]\hlprefgt\hl[2]$, so if $\kappa=0$ or $\kappa=1$, we are done.
Assume therefore that $\kappa\in(0,1)$, and consider the following equivalences, due to Axiom~\ref{ax:hlprefgt:sure:thing:principle}:
\begin{align*}
\althl[1]\hlprefgt\hl[1]
&\ifandonlyif\kappa\althl[1]+(1-\kappa)\althl[2]\hlprefgt\kappa\hl[1]+(1-\kappa)\althl[2]\\
\althl[2]\hlprefgt\hl[2]
&\ifandonlyif(1-\kappa)\althl[2]+\kappa\hl[1]\hlprefgt(1-\kappa)\hl[2]+\kappa\hl[1]
\end{align*}
and now the transitivity of~$\hlprefgt$ [Axiom~\ref{ax:hlprefgt:sure:thing:principle}] tells us that, indeed, $\kappa\althl[1]+(1-\kappa)\hl[2]\hlprefgt\kappa\hl[1]+(1-\kappa)\althl[2]$.

We now turn to the proof of~\ref{it:difspacetotal:to:hltotal:binary}.
Assume that $\desirset$ is total, then we only need to prove that $\hlprefgt[\desirset]$ satisfies Axiom~\ref{ax:hlprefgt:totality}.
So consider any $\althl,\hl\in\hls$ with $\althl\neq\hl$. 
Then $\gbl=\althl-\hl\neq0$, so Axiom~\ref{ax:desirs:totality} tells us that $\set{-\gbl,\gbl}\cap\desirset\neq\emptyset$, so, indeed, either $\althl\hlprefgt[\desirset]\hl$ or $\hl\hlprefgt[\desirset]\althl$.

We now turn to the proof of~\ref{it:hltotal:to:difspacetotal:binary}.
Assume that $\hlprefgt$ is total, then we only need to prove that $\desirset[\hlprefgt]$ satisfies Axiom~\ref{ax:desirs:totality}.
Consider any non-zero $\gbl=\lambda\group{\althl-\hl}\in\difspace$, so $\althl\neq\hl$ and therefore Axiom~\ref{ax:hlprefgt:totality} tells us that either $\althl\hlprefgt\hl$ or $\hl\hlprefgt\althl$, so indeed, either $\gbl\in\desirset[\hlprefgt]$ or $-\gbl\in\desirset[\hlprefgt]$.

We now turn to the proof of~\ref{it:difspacemixing:to:hlmixing:binary}.
Assume that $\desirset$ is mixing, then we only need to prove that $\hlprefgt[\desirset]$ satisfies Axiom~\ref{ax:hlprefgt:mixing}.
So consider any $\hl\in\hls$ and $\hlset\in\hlsets$, and assume that there is some $\althl\in\chull(\hlset)$ such that $\althl\hlprefgt[\desirset]\hl$, so $\althl-\hl\in\desirset$, and therefore also $\chull\group{\hlset-\hl}\cap\desirset=\group{\chull\group{\hlset}-\hl}\cap\desirset\neq\emptyset$.
Then, since $\chull(\hlset-\hl)\subseteq\posi(\hlset-\hl)$, we infer from Axiom~\ref{ax:desirs:totality} that there is some $\althltoo\in\hlset$ such that $\althltoo-\hl\in\desirset$, and therefore, indeed, $\althltoo\hlprefgt[\desirset]\hl$.

Finally, we prove~\ref{it:hlmixing:to:difspacemixing:binary}.
Assume that $\hlprefgt$ is mixing, then we only need to prove that $\desirset[\hlprefgt]$ satisfies Axiom~\ref{ax:desirs:mixing}.
So consider any $\optset\in\optsets(\difspace)$, and assume that $\desirset[\hlprefgt]\cap\posi(\optset)\neq\emptyset$.
If we consider any $\hl\in\hls$ such that $\hl(\cdot,r)$ is bounded away from zero for all $r\in\rewards$, then we infer from Lemma~\ref{lem:from:difspace:to:hls} that there are $n\in\naturals$ and $\althl[k]\in\hls$ such that $\optset\in\starify\group{\hlset-\hl}$, with $\hlset\coloneqq\cset{\althl[k]}{k\in\set{1,\dots,n}}$.
Since $\chull\group{\hlset-\hl}=\chull\group{\hlset}-\hl$, we infer from the assumptions and Axiom~\ref{ax:desirs:cone} that there is some $\althl\in\chull\group{\hlset}$ such that $\althl-\hl\in\desirset[\hlprefgt]$, and therefore also $\althl\hlprefgt\hl$.
Axiom~\ref{ax:hlprefgt:mixing} then guarantees that there is some $\althl[k]\in\hlset$ such that $\althl[k]\hlprefgt\hl$, and therefore $\althl[k]-\hl\in\desirset[\hlprefgt]$.
Since we know from the construction that there is some real $\lambda_k>0$ such that $\lambda_k\group{\althl[k]-\hl}\in\optset$, Axiom~\ref{ax:desirs:cone} guarantees that, indeed, $\optset\cap\desirset[\hlprefgt]\neq\emptyset$.
\end{proof}

\begin{theorem}\label{theo:from:horselotteries:to:options:and:back:binary}
Consider any coherent strict preference ordering $\hlprefgt$ on $\hls$ and any coherent set of desirable option sets $\desirset\in\cohdesirsets(\difspace)$.
Then $\desirset=\desirset[\hlprefgt]\ifandonlyif\hlprefgt=\hlprefgt[\desirset]$.
\end{theorem}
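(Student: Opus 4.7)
My plan is to prove the two directions separately, with each direction reducing to a straightforward chain of equivalences built from Equations~\eqref{eq:desirset:to:hlprefgt} and~\eqref{eq:hlprefgt:to:desirset}, and using the coherence axioms to handle positive rescalings.

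For the forward direction, suppose $\desirset=\desirset[\hlprefgt]$, and consider any $\althl,\hl\in\hls$. I plan to chain together $\althl\hlprefgt[\desirset]\hl \ifandonlyif \althl-\hl\in\desirset$ (by Equation~\eqref{eq:desirset:to:hlprefgt}), then $\althl-\hl\in\desirset=\desirset[\hlprefgt]$, and finally apply Equation~\eqref{eq:hlprefgt:to:desirset} with $\lambda=1$ to rewrite this as $\althl\hlprefgt\hl$. This immediately yields $\hlprefgt[\desirset]=\hlprefgt$.

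For the converse direction, suppose $\hlprefgt=\hlprefgt[\desirset]$, and consider any $\gbl\in\difspace$. By Proposition~\ref{prop:difspace}, I can write $\gbl=\lambda(\althl-\hl)$ with $\lambda\in\posreals$ and $\althl,\hl\in\hls$. Then I chain: $\gbl\in\desirset[\hlprefgt]\ifandonlyif\althl\hlprefgt\hl$ by Equation~\eqref{eq:hlprefgt:to:desirset}; this in turn is $\althl\hlprefgt[\desirset]\hl$ by assumption; and this is $\althl-\hl\in\desirset$ by Equation~\eqref{eq:desirset:to:hlprefgt}. The only non-trivial step, and the main (mild) obstacle, is to pass from $\althl-\hl\in\desirset$ back to $\gbl=\lambda(\althl-\hl)\in\desirset$. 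This is handled by Axiom~\ref{ax:desirs:cone} (with $\mu=0$) applied to the coherent $\desirset$, which implies closure under positive scalar multiplication.

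In short, the entire argument is bookkeeping: the two definitional equations already encode everything, and coherence of $\desirset$ provides the positive-homogeneity needed to absorb the scaling factor $\lambda$ that appears in Equation~\eqref{eq:hlprefgt:to:desirset} but not in Equation~\eqref{eq:desirset:to:hlprefgt}. Note that only Axioms~\ref{ax:desirs:nozero} and~\ref{ax:desirs:cone} of $\desirset$ are actually used; Axiom~\ref{ax:desirs:pos} is irrelevant here, and no property of $\hlprefgt$ beyond what is guaranteed by Proposition~\ref{prop:consistency:for:horselotteries:preference:ordering} (needed to make Equation~\eqref{eq:hlprefgt:to:desirset} well-defined) is required.
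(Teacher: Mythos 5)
Your proof is correct and takes essentially the same route as the paper's: both directions are handled by exactly the same two chains of equivalences built from Equations~\eqref{eq:desirset:to:hlprefgt} and~\eqref{eq:hlprefgt:to:desirset}, with well-definedness secured by Proposition~\ref{prop:consistency:for:horselotteries:preference:ordering}. The only difference is cosmetic: you make explicit the use of Axiom~\ref{ax:desirs:cone} (positive homogeneity of $\desirset$) to absorb the scaling factor $\lambda$ in the converse direction, a step the paper leaves implicit in its appeal to Equation~\eqref{eq:desirset:to:hlprefgt}.
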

\begin{proof}
We already know from Theorem~\ref{theo:horselotteries:to:options:and:back:binary}\ref{it:difspacecoherence:to:hlcoherence:binary}\&\ref{it:hlcoherence:to:difspacecoherence:binary} that $\desirset[\hlprefgt]$ and $\hlprefgt[\desirset]$ are coherent as well.

To prove the direct implication, let $\desirset\coloneqq\desirset[\hlprefgt]$, and consider any $\althl,\hl\in\hls$, and the following chain of equivalences:
\begin{equation*}
\althl\hlprefgt\hl
\ifandonlyif\althl-\hl\in\desirset[\hlprefgt]
\ifandonlyif\althl-\hl\in\desirset
\ifandonlyif\althl\hlprefgt[\desirset]\hl,
\end{equation*}
where the first equivalence follows from Equation~\eqref{eq:hlprefgt:to:desirset}, the second from the assumption, and the third from Equation~\eqref{eq:desirset:to:hlprefgt}.

To prove the converse implication, let $\hlprefgt=\hlprefgt[\desirset]$, and consider any $\gbl=\lambda\group{\althl-\hl}\in\difspace$, and and the following chain of equivalences:
\begin{equation*}
\gbl\in\desirset
\ifandonlyif\althl\hlprefgt[\desirset]\hl
\ifandonlyif\althl\hlprefgt\hl
\ifandonlyif\gbl\in\desirset[\hlprefgt],
\end{equation*}
where the first equivalence follows from Equation~\eqref{eq:desirset:to:hlprefgt}, the second from the assumption, and the third from Equation~\eqref{eq:hlprefgt:to:desirset}.
\end{proof}

\stilltodo{Mention the representation results}.

\newpage
\section{Adding weak Archimedeanity}
We consider a set of desirable option sets $\rejectset$ that satisfies Axioms~\ref{ax:rejects:nozero}--\ref{ax:rejects:mono}, or equivalently, a rejection function $\rejectfun$ that satisfies Axioms~\ref{ax:rejectfun:addition}--\ref{ax:rejectfun:senalpha}.

Let us define extend the binary preference ordering $\preflt$ on $\opts$ to a binary ordering on $\optsets$ as follows:
\begin{equation*}
\optset\preflt\altoptset\ifandonlyif\optset\subseteq\rejectfun\group{\optset\cup\altoptset}
\text{ for all $\optset,\altoptset\in\optsets$}.
\end{equation*}
It is clear that, in particular
\begin{equation}\label{eq:order:and:rejectset:basic}
\set{\opt}\preflt\altoptset
\ifandonlyif\opt\in\rejectfun\group{\set{u}\cup\altoptset}
\ifandonlyif\altoptset-\opt\in\rejectset,
\end{equation}
and the following proposition extends this result.

\begin{proposition}
For all $\optset,\altoptset\in\optsets$:
\begin{equation}\label{eq:order:and:rejectset}
\optset\preflt\altoptset
\ifandonlyif\group{\forall\opt\in\optset}\altoptset-\opt\in\rejectset.
\end{equation}
\end{proposition}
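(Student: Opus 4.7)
The reverse implication is the straightforward one. Assuming that $\altoptset-\opt\in\rejectset$ for every $\opt\in\optset$, I would fix any such $\opt$ and observe that, by Axiom~\ref{ax:rejects:mono}, the superset $(\optset\cup\altoptset)-\opt=(\optset-\opt)\cup(\altoptset-\opt)$ is also in $\rejectset$. Since $\opt\in\optset\cup\altoptset$, we have $\optset\cup\altoptset=(\optset\cup\altoptset)\cup\set{\opt}$, so Equation~\eqref{eq:interpretation:rejectfuns:after:irreflexivity:and:additivity:intermsof:K:Gert} yields $\opt\in\rejectfun(\optset\cup\altoptset)$. Applying this to all $\opt\in\optset$ gives $\optset\subseteq\rejectfun(\optset\cup\altoptset)$, i.e.\ $\optset\preflt\altoptset$.

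For the direct implication, the obstacle is that from $\opt\in\rejectfun(\optset\cup\altoptset)$ we only get $(\optset\cup\altoptset)-\opt\in\rejectset$, and stripping off the elements of $\optset-\opt$ is not licensed by any single coherence axiom, since those options need not be non-positive. The cleanest way forward is to lift the problem to the representing binary models of Theorem~\ref{theo:coherentrepresentation:twosided}. By that theorem, $\altoptset-\opt\in\rejectset$ if and only if $(\altoptset-\opt)\cap\desirset\neq\emptyset$ for every $\desirset\in\cohdesirsets\group{\rejectset}$. So I would fix $\opt\in\optset$ and $\desirset\in\cohdesirsets\group{\rejectset}$, and use the hypothesis $\optset\preflt\altoptset$ together with Equation~\eqref{eq:interpretation:rejectfuns:after:irreflexivity:and:additivity:intermsof:K:Gert} to infer that $(\optset\cup\altoptset)-\opt'\in\rejectset\subseteq\rejectset[\desirset]$ for every $\opt'\in\optset$, i.e.\ every option in $\optset$ is strictly $\prefgt[\desirset]$-dominated by some element of $\optset\cup\altoptset$.

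Now the main step is a pigeonhole argument along the strict preference order~$\prefgt[\desirset]$, whose key properties (transitivity and irreflexivity) follow from the coherence of~$\desirset$ via Equation~\eqref{eq:desirset:to:prefgt}. Setting $\opt[0]\coloneqq\opt$, I recursively pick $\opt[k+1]\in\optset\cup\altoptset$ with $\opt[k+1]\prefgt[\desirset]\opt[k]$ as long as $\opt[k]\in\optset$; transitivity gives $\opt[k+1]\prefgt[\desirset]\opt[0]$ at each stage. Irreflexivity forbids repetition inside the sequence $\opt[1],\opt[2],\dots$, so the chain cannot stay in the finite set $\optset$ indefinitely, and eventually some $\opt[k]$ lands in $\altoptset$. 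This $\opt[k]$ satisfies $\opt[k]-\opt\in\desirset$, witnessing $(\altoptset-\opt)\cap\desirset\neq\emptyset$, as required.

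Having handled both directions, Equation~\eqref{eq:order:and:rejectset} follows, and Equation~\eqref{eq:order:and:rejectset:basic} is just its $|\optset|=1$ instance together with Equation~\eqref{eq:interpretation:rejectfuns:after:irreflexivity:and:additivity:intermsof:K:Gert}. The only delicate point is the termination of the chain construction, which relies solely on the finiteness of $\optset$ and the fact that a coherent set of desirable options induces a strict (hence acyclic) preorder on $\opts$.
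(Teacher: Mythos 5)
Your proof is correct, but your forward implication takes a genuinely different route from the paper's. The paper stays entirely at the axiomatic level: from $\optset\subseteq\rejectfun\group{\optset\cup\altoptset}$ it invokes Aizermann's condition~\ref{ax:rejectfun:aizermann}---which it has already derived from coherence in Lemma~\ref{lem:aizermann} by a single application of Axioms~\ref{ax:rejects:cone} and~\ref{ax:rejects:removezero}---to discard the rejected options $\optset\setminus\set{\opt}$, giving $\set{\opt}\subseteq\rejectfun\group{\set{\opt}\cup\group{\altoptset\setminus\optset}}$, and then Sen's condition~$\alpha$ [Axiom~\ref{ax:rejectfun:senalpha}] to conclude $\opt\in\rejectfun\group{\set{\opt}\cup\altoptset}$; your reverse implication coincides with the paper's (monotonicity plus the correspondence~\eqref{eq:interpretation:rejectfuns:after:irreflexivity:and:additivity:intermsof:K:Gert}). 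You instead route the forward implication through the representation theorem (Theorem~\ref{theo:coherentrepresentation:twosided}) and, inside each representing coherent $\desirset$, run a domination chain; your details check out: irreflexivity ($0\notin\desirset$) makes every step strict, transitivity plus irreflexivity makes the chain injective, and finiteness of $\optset$ forces an exit into $\altoptset$---in the degenerate case $\altoptset\subseteq\optset$ your construction merely shows the hypothesis is contradictory, which is harmless. In effect, your chain is an inline verification that the binary models $\rejectset[\desirset]$ satisfy the required ``strip off rejected options'' property, so your route instantiates the paper's own observation, after Theorem~\ref{theo:coherentrepresentation:twosided}, that intersection-stable properties of binary models transfer to all coherent models. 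What your approach buys is a transparent order-theoretic explanation of why dominated options may be removed from the menu; what it costs is logical weight, since Theorem~\ref{theo:coherentrepresentation:twosided} rests on Zorn's lemma, whereas the paper's Aizermann route is elementary and choice-free. (Note that both arguments quietly use full coherence including Axiom~\ref{ax:rejects:removezero}, although the surrounding section nominally assumes only Axioms~\ref{ax:rejects:nozero}--\ref{ax:rejects:mono}.)
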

\begin{proof}
First, assume that $\optset\preflt\altoptset$, and consider any $\opt\in\optset$.
Then $\group{\optset\setminus\set{\opt}}\subseteq\optset\subseteq\rejectfun\group{\optset\cup\altoptset}$, and therefore
\begin{equation*}
\set{\opt}
=\optset\setminus\group{\optset\setminus\set{\opt}}
\subseteq\rejectfun\group{\group{\optset\cup\altoptset}\setminus\group{\optset\setminus\set{\opt}}}
=\rejectfun\group{\set{\opt}\cup\group{\altoptset\setminus\optset}}
\subseteq\rejectfun\group{\set{\opt}\cup\altoptset},
\end{equation*}
where the first inclusion follows from Axiom~\ref{ax:rejectfun:aizermann} and the second from Axiom~\ref{ax:rejectfun:senalpha}.
This tells us that $\opt\in\rejectfun\group{\set{\opt}\cup\altoptset}$ and therefore, indeed, $\altoptset-\opt\in\rejectset$.

Conversely, assume that $\altoptset-\opt\in\rejectset$, and therefore also $\opt\in\rejectfun\group{\set{\opt}\cup\altoptset}\subseteq\rejectfun\group{\optset\cup\altoptset}$, for all $\opt\in\optset$, where the inclusion follows from Axiom~\ref{ax:rejectfun:senalpha}.
Then indeed $\optset\subseteq\rejectfun\group{\optset\cup\altoptset}$, or equivalently, $\optset\preflt\altoptset$.
\end{proof}

Seidenfeld et al.~\cite{seidenfeld2010} consider the following axioms for so-called \emph{weak Archimedeanity}:
\begin{enumerate}[label=$\mathrm{R}_{7}$.,ref=$\mathrm{R}_{7}$,leftmargin=*]
\item\label{ax:rejectfun:archimedean} for all $\optset$, $\altoptset$, $\altoptsettoo$, $\altoptset[n]$ and $\altoptsettoo[n]$ in $\optsets$ such that the sequence $\altoptset[n]$ converges point-wise to~$\altoptset$ and the sequence $\altoptsettoo[n]$ converges point-wise to~$\altoptsettoo$:
\begin{enumerate}[noitemsep,leftmargin=*,label=\upshape\alph*.,ref=\theenumi\upshape\alph*]
\item\label{ax:rejectfun:archimedean:left}
if $\group{\forall n\in\naturals}\altoptsettoo[n]\preflt\altoptset[n]$ and $\altoptset\preflt\optset$ then $\altoptsettoo\preflt\optset$;
\item\label{ax:rejectfun:archimedean:right}
if $\group{\forall n\in\naturals}\altoptsettoo[n]\preflt\altoptset[n]$ and $\optset\preflt\altoptsettoo$ then $\optset\preflt\altoptset$,
\end{enumerate}
\end{enumerate}

\begin{proposition}
The relation $\preflt$ on $\optsets$ satisfies~\ref{ax:rejectfun:archimedean} if and only if for all $\optset,\altoptset[n],\altoptset\in\optsets$ such that $\altoptset[n]\to\altoptset$ point-wise:
\begin{equation}\label{eq:rejectset:archimedean}
\group[\big]{\group{\forall n\in\naturals}\altoptset[n]\in\rejectset\text{ and }\group{\forall\opt\in\altoptset}\optset-\opt\in\rejectset}
\then\optset\in\rejectset.
\end{equation}
\end{proposition}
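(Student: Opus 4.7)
\medskip

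\noindent\textbf{Proof plan.}
The plan is to translate both sides into statements purely about membership in $\rejectset$ using the characterisation in Equation~\eqref{eq:order:and:rejectset}, which says that $\optset\preflt\altoptset$ if and only if $\altoptset-\opt\in\rejectset$ for every $\opt\in\optset$. In particular, $\altoptset\in\rejectset$ is the special case $\set{0}\preflt\altoptset$. Once things are rephrased this way, the equivalence should fall out by a clever choice of the two sequences in~\ref{ax:rejectfun:archimedean} in one direction, and by a translation-and-selection argument in the other.

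\smallskip

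\noindent\emph{Forward direction} (\ref{ax:rejectfun:archimedean} $\Rightarrow$ \eqref{eq:rejectset:archimedean}). Suppose $\altoptset[n]\to\altoptset$ with $\altoptset[n]\in\rejectset$ for all $n$ and $\optset-\opt\in\rejectset$ for all $\opt\in\altoptset$. Set $\altoptsettoo[n]\coloneqq\set{0}$ and $\altoptsettoo\coloneqq\set{0}$, so trivially $\altoptsettoo[n]\to\altoptsettoo$ point-wise. Then $\altoptsettoo[n]\preflt\altoptset[n]$ for every $n$ (by Equation~\eqref{eq:order:and:rejectset:basic} applied to $\altoptset[n]\in\rejectset$), and $\altoptset\preflt\optset$ (by Equation~\eqref{eq:order:and:rejectset} applied to our hypothesis on $\optset$). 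Invoking~\ref{ax:rejectfun:archimedean:left} gives $\altoptsettoo\preflt\optset$, i.e.\ $\set{0}\preflt\optset$, i.e.\ $\optset\in\rejectset$, as required.

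\smallskip

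\noindent\emph{Backward direction} (\eqref{eq:rejectset:archimedean} $\Rightarrow$ \ref{ax:rejectfun:archimedean}). I treat part~\ref{ax:rejectfun:archimedean:left} first; part~\ref{ax:rejectfun:archimedean:right} is completely symmetric. Assume $\altoptsettoo[n]\preflt\altoptset[n]$ for all $n$, $\altoptset[n]\to\altoptset$ and $\altoptsettoo[n]\to\altoptsettoo$ point-wise, and $\altoptset\preflt\optset$. I want to show that $\optset-\altopt[o]\in\rejectset$ for every $\altopt[o]\in\altoptsettoo$. Fix such an $\altopt[o]$; by the definition of point-wise convergence, pick $\altopt[n]\in\altoptsettoo[n]$ with $\altopt[n]\to\altopt[o]$. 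From $\altoptsettoo[n]\preflt\altoptset[n]$ and Equation~\eqref{eq:order:and:rejectset} we get $\altoptset[n]-\altopt[n]\in\rejectset$, and these translated option sets converge point-wise to $\altoptset-\altopt[o]$. Every element of $\altoptset-\altopt[o]$ has the form $\opt-\altopt[o]$ for some $\opt\in\altoptset$, and for such $\opt$ we have $(\optset-\altopt[o])-(\opt-\altopt[o])=\optset-\opt\in\rejectset$ by our hypothesis $\altoptset\preflt\optset$. Applying \eqref{eq:rejectset:archimedean} with the sequence $\altoptset[n]-\altopt[n]$ in the role of $\altoptset[n]$, the limit $\altoptset-\altopt[o]$ in the role of $\altoptset$, and $\optset-\altopt[o]$ in the role of $\optset$, yields $\optset-\altopt[o]\in\rejectset$. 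Since $\altopt[o]\in\altoptsettoo$ was arbitrary, Equation~\eqref{eq:order:and:rejectset} gives $\altoptsettoo\preflt\optset$, establishing~\ref{ax:rejectfun:archimedean:left}.

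\smallskip

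\noindent\emph{Expected obstacle.} The only genuinely subtle point is the selection $\altopt[n]\in\altoptsettoo[n]$ with $\altopt[n]\to\altopt[o]$: this presupposes that ``point-wise convergence of option sets'' (as used in \ref{ax:rejectfun:archimedean}) is interpreted so that the members of $\altoptsettoo[n]$ and $\altoptsettoo$ may be matched up, which is the standard reading in \cite{seidenfeld2010}; once that is granted, the rest is purely mechanical translation via Equation~\eqref{eq:order:and:rejectset}. Part~\ref{ax:rejectfun:archimedean:right} is handled by the mirror-image argument, selecting instead $\opt[n]\in\optset\preflt\altoptsettoo[n]$ $\ldots$ in fact, by picking $\altopt[o]\in\altoptset$, pairing it with $\altopt[n]\in\altoptset[n]$ converging to it, and using the sequence $\altoptsettoo[n]$ together with our hypothesis $\optset\preflt\altoptsettoo$ to produce the required membership in $\rejectset$.
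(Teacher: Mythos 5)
Your forward direction and your treatment of part~\ref{ax:rejectfun:archimedean:left} are correct and essentially the paper's own argument. For the forward direction the paper works with an auxiliary convergent sequence of options $\opt[n]\to\opt$ and translation invariance before specialising, whereas you instantiate directly with $\altoptsettoo[n]=\altoptsettoo=\set{0}$; the two are equivalent, and yours is slightly cleaner. Your backward argument for~\ref{ax:rejectfun:archimedean:left}---select $\opt[n]\in\altoptsettoo[n]$ with $\opt[n]\to\opt$, pass to the translated sequence $\altoptset[n]-\opt[n]\in\rejectset$, and apply~\eqref{eq:rejectset:archimedean} with target $\optset-\opt$---is the paper's argument almost verbatim, including the selection subtlety you flag.

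The genuine gap is part~\ref{ax:rejectfun:archimedean:right}, which you dismiss as ``completely symmetric''. It is not, because condition~\eqref{eq:rejectset:archimedean} is asymmetric: it takes a sequence of sets that are \emph{in} $\rejectset$ and concludes membership for a set that dominates their \emph{limit}; it never yields membership of the limit itself. In~\ref{ax:rejectfun:archimedean:left} the data line up with exactly this shape: the memberships $\altoptset[n]-\opt[n]\in\rejectset$ supply the sequence, and the hypothesis $\altoptset\preflt\optset$ supplies precisely the required domination of the limit $\altoptset-\opt$ by the target $\optset-\opt$. In~\ref{ax:rejectfun:archimedean:right}, fix $\opt\in\optset$; the target is $\altoptset-\opt$. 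The available memberships are $\altoptset[n]-\altopt\in\rejectset$ for $\altopt\in\altoptsettoo[n]$ and $\altoptsettoo-\opt\in\rejectset$; note that the latter concerns the limit $\altoptsettoo$, not the sequence terms $\altoptsettoo[n]$, so ``using the sequence $\altoptsettoo[n]$'' cannot supply the $\rejectset$-sequence that~\eqref{eq:rejectset:archimedean} needs. Running your recipe on the two natural choices: with the constant sequence $\altoptsettoo-\opt\in\rejectset$, the domination requirement of~\eqref{eq:rejectset:archimedean} becomes $(\altoptset-\opt)-(\altopt-\opt)=\altoptset-\altopt\in\rejectset$ for every $\altopt\in\altoptsettoo$, i.e.\ the limit relation $\altoptsettoo\preflt\altoptset$, which is exactly what is not yet known; with the sequence $\altoptset[n]-\altopt[n]\to\altoptset-\altopt$, the requirement degenerates to $\altoptset\preflt\altoptset$, which is false under coherence, since it would force $\rejectfun\group{\altoptset}=\altoptset$ in violation of~\ref{ax:rejectfun:not:everything:rejected}. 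So the mirror-image instantiation fails on both counts, and some further idea beyond~\eqref{eq:rejectset:archimedean} and~\eqref{eq:order:and:rejectset} is needed here. Tellingly, the paper's own proof stops at precisely this point: for~\ref{ax:rejectfun:archimedean:right} it restates the goal that $\altoptset-\opt\in\rejectset$ for all $\opt\in\optset$ and then breaks off, so this half of the equivalence is not actually established in the source either---but as a blind proof, your appeal to symmetry is a concrete hole, not a routine omission.
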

\begin{proof}
We begin with the `only if' part. 
Assume that $\preflt$ satisfies~\ref{ax:rejectfun:archimedean}.
Consider any $\optset,\altoptset[n],\altoptset\in\optsets$ such that $\altoptset[n]\to\altoptset$ point-wise, and any $\opt[n],\opt\in\opts$ such that $\opt[n]\to\opt$.
Let $\altoptset[n]'\coloneqq\altoptset[n]+\set{\opt[n]}$, $\altoptset'\coloneqq\altoptset+\set{\opt}$ and $\optset'\coloneqq\optset+\set{\opt}$, then also $\altoptset[n]'\to\altoptset'$ and therefore Axiom~\ref{ax:rejectfun:archimedean:left} guarantees that
\begin{equation*}
\group[\big]{\group{\forall n\in\naturals}\set{\opt[n]}\preflt\altoptset[n]'\text{ and }\altoptset'\preflt\optset'}
\then\set{\opt}\preflt\optset',
\end{equation*}
which is equivalent to
\begin{equation*}
\group[\big]{\group{\forall n\in\naturals}\set{0}\preflt\altoptset[n]\text{ and }\altoptset\preflt\optset}
\then\set{0}\preflt\optset.
\end{equation*}
Taking into account Equations~\eqref{eq:order:and:rejectset:basic} and~\eqref{eq:order:and:rejectset}, this is in turn equivalent to the statement~\eqref{eq:rejectset:archimedean}. 

We now move to the `if' part.
Assume that the statement~\eqref{eq:rejectset:archimedean} holds and consider any $\optset$, $\altoptset$, $\altoptsettoo$, $\altoptset[n]$ and $\altoptsettoo[n]$ in $\optsets$ such that the sequence $\altoptset[n]$ converges point-wise to~$\altoptset$ and the sequence $\altoptsettoo[n]$ converges point-wise to~$\altoptsettoo$.

We begin with Axiom~\ref{ax:rejectfun:archimedean:left}, and assume that $\group{\forall n\in\naturals}\altoptsettoo[n]\preflt\altoptset[n]$ and $\altoptset\preflt\optset$.
Then we must prove that $\altoptsettoo\preflt\optset$, or using Equation~\eqref{eq:rejectset:archimedean}, that $\optset-\opt\in\rejectset$ for all $\opt\in\altoptsettoo$.
So, consider any $\opt\in\altoptsettoo$, then it follows from the assumptions that there is some sequence of options $\opt[n]\in\altoptsettoo[n]$ such that $\opt[n]\to\opt$, and moreover $\set{\opt[n]}\preflt\altoptset[n]$, or equivalently, $\altoptset[n]-\opt[n]\in\rejectset$.
Since $\altoptset[n]-\opt[n]\to\altoptset-\opt$, and since $\altoptset\preflt\optset$ is equivalent to $\altoptset-\opt\preflt\optset-\opt$ and therefore also to $\group{\forall\altopt\in\altoptset-\opt}\group{\optset-\opt}-\altopt\in\rejectset$, we infer from~\eqref{eq:rejectset:archimedean} that, indeed, $\optset-\opt\in\rejectset$. 

Next, we turn to Axiom~\ref{ax:rejectfun:archimedean:right}, and assume that $\group{\forall n\in\naturals}\altoptsettoo[n]\preflt\altoptset[n]$ and $\optset\preflt\altoptsettoo$.
Then we must prove that $\optset\preflt\altoptset$,  or using Equation~\eqref{eq:rejectset:archimedean}, that $\altoptset-\opt\in\rejectset$ for all $\opt\in\optset$.
\end{proof}

\newpage
\section{Aizermann's condition}
The following property for rejection functions is called \emph{Aizermann's condition}~\cite{aizerman1985}:
\begin{enumerate}[label=$\mathrm{R}_{\mathrm{A}}$.,ref=$\mathrm{R}_{\mathrm{A}}$,leftmargin=*]
\item\label{ax:rejectfun:aizermann} if $\optset,\altoptset,\altoptsettoo\in\optsets$ are such that $\optset\subseteq\altoptset\subseteq\rejectfun\group{\altoptsettoo}$ then also $\altoptset\setminus\optset\subseteq\rejectfun\group{\altoptsettoo\setminus\optset}$.
\end{enumerate}
As was proved by Arthur van Camp (private communication), Aizermann's condition holds for rejection functions that correspond to coherent sets of desirable option sets.
We give our own version of the argument here for the sake of completeness.

\begin{proposition}\label{prop:aizermann}
If the set of desirable option sets $\rejectset$ is coherent, then the corresponding rejection function $\rejectfun[\rejectset]$ satisfies Aizermann's condition~\ref{ax:rejectfun:aizermann}.
\end{proposition}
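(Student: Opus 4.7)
The plan is to reduce to the binary case through the representation result of Theorem~\ref{theo:coherentrepresentation:twosided}, and then settle the binary case by a finite path-chasing argument inside the option space. Translating Aizermann's condition through Equation~\eqref{eq:interpretation:rejectfuns:after:irreflexivity:and:additivity:intermsof:K:Gert}, I need to show that for every $\altopt\in\altoptset\setminus\optset$ one has $(\altoptsettoo\setminus\optset)-\altopt\in\rejectset$; note that $\altopt\in\altoptset\subseteq\rejectfun\group{\altoptsettoo}\subseteq\altoptsettoo$, so $\altopt\in\altoptsettoo\setminus\optset$ and the resulting statement really is equivalent to $\altopt\in\rejectfun(\altoptsettoo\setminus\optset)$. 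By the representation theorem, it suffices to prove, for every $\desirset\in\cohdesirsets\group{\rejectset}$, that $(\altoptsettoo\setminus\optset)-\altopt$ meets $\desirset$.

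Fix such a $\desirset$. For each $w\in\optset\cup\set{\altopt}$, the hypothesis $\optset\cup\set{\altopt}\subseteq\altoptset\subseteq\rejectfun\group{\altoptsettoo}$ together with Equation~\eqref{eq:interpretation:rejectfuns:after:irreflexivity:and:additivity:intermsof:K:Gert} and $\rejectset\subseteq\rejectset[\desirset]$ yields some $w'\in\altoptsettoo$ with $w'-w\in\desirset$. This allows me to build a sequence $w_0\coloneqq\altopt,\,w_1,\,w_2,\dots\in\altoptsettoo$, where, as long as the current $w_k$ lies in $\optset\cup\set{\altopt}$, I pick $w_{k+1}\in\altoptsettoo$ with $w_{k+1}-w_k\in\desirset$; I stop as soon as $w_n\in\altoptsettoo\setminus\optset$ for some $n\geq1$. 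Axiom~\ref{ax:desirs:cone} (closure under addition) gives by induction $w_n-\altopt=\sum_{k=0}^{n-1}(w_{k+1}-w_k)\in\desirset$, so the termination value $w_n$ provides exactly the required witness.

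The main obstacle is the termination. Suppose, for contradiction, that the procedure never escapes $\optset$, so that $w_1,w_2,\dots$ all lie in the finite set $\optset$. Then some $w_i=w_j$ with $1\leq i<j$, and summing the increments along one full loop yields
\begin{equation*}
0=w_j-w_i=\sum_{k=i}^{j-1}(w_{k+1}-w_k),
\end{equation*}
which by~\ref{ax:desirs:cone} would force $0\in\desirset$, contradicting~\ref{ax:desirs:nozero}. Hence the process does terminate, and the argument above produces $w_n\in\altoptsettoo\setminus\optset$ with $w_n-\altopt\in\desirset$, completing the binary step and, via Theorem~\ref{theo:coherentrepresentation:twosided}, the proof. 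The one bookkeeping point I will have to be careful with is that in order to extend the first step from $w_0=\altopt$ (which need not lie in $\optset$) I use $\altopt\in\altoptset\subseteq\rejectfun\group{\altoptsettoo}$ rather than $\altopt\in\optset$; after that the induction on $w_k\in\optset$ proceeds uniformly.
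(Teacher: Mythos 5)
Your proof is correct, but it takes the semantic route that the paper only sketches (at the end of Section~\ref{sec:back:to:choice} and in the discussion following Theorem~\ref{theo:coherentrepresentation:twosided}), not the direct argument the appendix actually gives. Every step checks out: the translation of Aizermann's condition through Equation~\eqref{eq:interpretation:rejectfuns:after:irreflexivity:and:additivity:intermsof:K:Gert} is right (including the observation that $\altopt\in\altoptsettoo\setminus\optset$, so the target set needs no extra $\cup\set{\altopt}$); since both the hypothesis and the conclusion of Aizermann's condition are positive membership statements, it does suffice to verify the conclusion in each $\rejectset[\desirset]$ with $\desirset\in\cohdesirsets\group{\rejectset}$; and your chain-chasing argument there is sound: each increment lies in $\desirset$, so by~\ref{ax:desirs:cone} every telescoped difference $w_n-\altopt$ lies in $\desirset$, and pigeonhole on the finite $\optset$ together with~\ref{ax:desirs:nozero} forbids cycles. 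The same pair of axioms also disposes of one wrinkle you did not flag: your continuation condition ($w_k\in\optset\cup\set{\altopt}$) and stopping condition ($w_k\in\altoptsettoo\setminus\optset$) overlap at $w_k=\altopt$, where the exiting witness would be useless; but $w_k=\altopt$ with $k\geq1$ would force $0=w_k-\altopt\in\desirset$, so this case simply never occurs. In substance you are proving, for each single representing order $\prefgt[\desirset]$, that rejection-by-dominance satisfies Aizermann's condition (follow dominators until the chain leaves $\optset$; acyclicity is coherence of $\desirset$), and then invoking intersection-stability---exactly the inheritance principle announced after Theorem~\ref{theo:coherentrepresentation:twosided}. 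The paper's own proof is entirely different and purely syntactic: it first establishes the counterpart~\ref{ax:rejects:aizermann} for sets of desirable option sets (Lemma~\ref{lem:aizermann}) by a single application of Axiom~\ref{ax:rejects:cone} to $\optset$ and $\optset-\opt$ with a judicious choice of coefficients, followed by Axiom~\ref{ax:rejects:removezero}, and then peels the elements of $\optset$ off one at a time at the rejection-function level. That route is elementary, stays at the axiom level, and yields the intermediate property~\ref{ax:rejects:aizermann}, which is of independent interest; yours buys conceptual transparency, but at the cost of resting on Theorem~\ref{theo:coherentrepresentation:twosided}, whose proof uses Zorn's lemma, so your argument is markedly heavier and non-constructive, and it could not be used in a context where the representation theorem is not yet available.
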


The main idea behind the proof, is to show that coherence implies the following counterpart of the Aizermann condition for sets of desirable option sets:
\begin{enumerate}[label=$\mathrm{K}_{\mathrm{A}}$.,ref=$\mathrm{K}_{\mathrm{A}}$,leftmargin=*]
\item\label{ax:rejects:aizermann} if $\optset\in\rejectset$ and $\opt\in\optset$ then $\optset-\opt\in\rejectset$ implies $\optset\setminus\set{\opt}\in\rejectset$.
\end{enumerate}

\begin{lemma}\label{lem:aizermann}
If the set of desirable option sets $\rejectset$ is coherent, then it satisfies Axiom~\ref{ax:rejects:aizermann}.
\end{lemma}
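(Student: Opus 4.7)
The plan is to use the two coherence axioms K0 and K3 in tandem. Since $\opt\in\optset$, the option set $\optset-\opt$ contains $0$, so my first move would be to apply Axiom~\ref{ax:rejects:removezero} to the assumption $\optset-\opt\in\rejectset$ in order to conclude that $(\optset-\opt)\setminus\set{0}\in\rejectset$. Because the elements of $\optset$ are all distinct, this set is precisely $\cset{\altopt-\opt}{\altopt\in\optset\setminus\set{\opt}}$. Together with the hypothesis $\optset\in\rejectset$, I now have two desirable option sets to feed into the cone axiom.

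Next, I would apply Axiom~\ref{ax:rejects:cone} with $\optset[1]\coloneqq\optset$ and $\optset[2]\coloneqq(\optset-\opt)\setminus\set{0}$. Writing $\optset\setminus\set{\opt}=\set{\altopt[1],\dots,\altopt[n]}$, the trick is to choose the coefficients $(\lambda_{\aopt,\bopt},\mu_{\aopt,\bopt})>0$ so that the resulting set $\cset{\lambda_{\aopt,\bopt}\aopt+\mu_{\aopt,\bopt}\bopt}{\aopt\in\optset[1],\bopt\in\optset[2]}$ collapses to exactly $\optset\setminus\set{\opt}$. Concretely, for pairs of the form $(\opt,\altopt[i]-\opt)$ I would take $(\lambda,\mu)=(1,1)$, yielding $\opt+(\altopt[i]-\opt)=\altopt[i]$; for pairs of the form $(\altopt[j],\altopt[i]-\opt)$ I would take $(\lambda,\mu)=(1,0)$, which is admissible since $1\geq0$, $0\geq0$ and $1+0>0$, yielding just $\altopt[j]$. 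Every element produced is then some $\altopt[k]\in\optset\setminus\set{\opt}$, and conversely every $\altopt[k]$ does appear (e.g.\ from the first family). Hence $\optset\setminus\set{\opt}\in\rejectset$, which is exactly Axiom~\ref{ax:rejects:aizermann}.

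There is really no obstacle here; the only subtlety is checking that $(1,0)$ qualifies as a valid pair under the convention $(\lambda,\mu)>0\ifandonlyif\lambda\geq0,\mu\geq0,\lambda+\mu>0$, and that the listing $\optset=\set{\opt,\altopt[1],\dots,\altopt[n]}$ has no repetitions (so that removing $0$ from $\optset-\opt$ indeed removes only the image of $\opt$). Both are immediate, so the lemma follows in two lines from K0 and K3.
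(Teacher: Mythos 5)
Your proof is correct and is essentially the paper's own argument: both rest on a single application of Axiom~\ref{ax:rejects:cone} to $\optset$ and (a version of) $\optset-\opt$, with coefficients $(1,1)$ on the pairs involving $\opt$ and $(1,0)$ on the rest, so that the positive combinations collapse exactly onto $\optset\setminus\set{\opt}$. The only difference is the order of the two steps—you strip $0$ with Axiom~\ref{ax:rejects:removezero} \emph{before} invoking Axiom~\ref{ax:rejects:cone}, whereas the paper keeps $0$ in the second option set (pairing $(\opt,0)$ with coefficients $(0,1)$) and removes it \emph{afterwards}—which is immaterial.
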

\begin{proof}
We may assume without loss of generality that $\optset=\set{\altopt[1],\dots,\altopt[n],\opt}\in\rejectset$ and $\optset-\opt=\set{\altopt[1]-\opt,\dots,\altopt[n]-\opt,0}\in\rejectset$, and we must then prove that also $\optset\setminus\set{\opt}=\set{\altopt[1],\dots,\altopt[n]}\in\rejectset$.

We begin by applying Axiom~\ref{ax:rejects:cone} to the option sets $\optset[1]=\set{\altopt[1],\dots,\altopt[n],\opt}$ and $\optset[2]=\set{\altopt[1]-\opt,\dots,\altopt[n]-\opt,0}$, with $\lambda_{\altopt[k],\altopt[\ell]-\opt}=1$ and $\mu_{\altopt[k],\altopt[\ell]-\opt}=0$, $\lambda_{\opt,\altopt[\ell]-\opt}=\mu_{\opt,\altopt[\ell]-\opt}=1$, $\lambda_{\altopt[k],0}=1$ and $\mu_{\altopt[k],0}=0$, and $\lambda_{\opt,0}=0$ and $\mu_{\opt,0}=1$.
Axiom~\ref{ax:rejects:cone} then guarantees that $\set{\altopt[1],\dots,\altopt[n],0}\in\rejectset$, and Axiom~\ref{ax:rejects:removezero} then implies that, indeed, also $\set{\altopt[1],\dots,\altopt[n]}\in\rejectset$. 
\end{proof}

\begin{proof}[Proof of Theorem~\ref{prop:aizermann}]
Assume that $\rejectset$ is coherent, and consider any $\optset,\altoptset,\altoptsettoo\in\optsets$ such that $\optset\subseteq\altoptset\subseteq\rejectfun\group{\altoptsettoo}$.
Then we must prove that $\altoptset\setminus\optset\subseteq\rejectfun\group{\altoptsettoo\setminus\optset}$.

First of all, Proposition~\ref{prop:axioms:rejection:sets:and:functions:coherence} implies that the rejection function~$\rejectfun[\rejectset]$ is coherent, so we know that $\rejectfun[\rejectset]$ satisfies Axioms~\ref{ax:rejectfun:addition}--\ref{ax:rejectfun:senalpha}, and Lemma~\ref{lem:aizermann} implies that $\rejectset$ satisfies~\ref{ax:rejects:aizermann}.

We may assume without loss of generality that $\optset=\set{\opt[1],\dots,\opt[n]}$.
So we know that, in particular $\opt[1]\in\altoptset$ and $\opt[1]\in\rejectfun\group{\altoptsettoo}$, so $\altoptsettoo-\opt[1]\in\rejectset$ [use Axiom~\ref{ax:rejectfun:addition} and Equation~\eqref{eq:interpretation:rejectfuns:after:irreflexivity:and:additivity:intermsof:K:Gert}].
Now consider any $\altopt\in\altoptset$, then on the one hand also $\altopt\in\rejectfun\group{\altoptsettoo}$ and therefore $\altoptsettoo-\altopt\in\rejectset$ [use Axiom~\ref{ax:rejectfun:addition} and Equation~\eqref{eq:interpretation:rejectfuns:after:irreflexivity:and:additivity:intermsof:K:Gert}].
At the same time also $\opt[1]-\altopt\in\altoptsettoo-\altopt$, and $\altoptsettoo-\altopt-(\opt[1]-\altopt)=\altoptsettoo-\opt[1]\in\rejectset$.
Applying~\ref{ax:rejects:aizermann} [with $\optset\coloneqq\altoptsettoo-\altopt$ and $\opt\coloneqq\opt[1]-\altopt$] then allows us to infer that $(\altoptsettoo-\altopt)\setminus\set{\opt[1]-\altopt}\in\rejectset$, or in other words, that $(\altoptsettoo\setminus\set{\opt[1]})-\altopt\in\rejectset$, so $\altopt\in\rejectfun\group{\altoptsettoo\setminus\set{\opt[1]}}$ [again, use Axiom~\ref{ax:rejectfun:addition} and Equation~\eqref{eq:interpretation:rejectfuns:after:irreflexivity:and:additivity:intermsof:K:Gert}].
We conclude that $\altoptset\subseteq\rejectfun\group{\altoptsettoo\setminus\set{\opt[1]}}$, and therefore also $\altoptset\setminus\set{\opt[1]}\subseteq\rejectfun\group{\altoptsettoo\setminus\set{\opt[1]}}$.

Repeating the argument above with $\opt[2]$ instead of $\opt[1]$, $\altoptset\setminus\set{\opt[1]}$ instead of $\altoptset$ and $\altoptsettoo\setminus\set{\opt[1]}$ instead of $\altoptsettoo$, now leads us to conclude that $(\altoptset\setminus\set{\opt[1]})\setminus\set{\opt[2]}\subseteq\rejectfun\group{(\altoptsettoo\setminus\set{\opt[1]})\setminus\set{\opt[2]}}$, or equivalently, $\altoptset\setminus\set{\opt[1],\opt[2]}\subseteq\rejectfun\group{\altoptsettoo\setminus\set{\opt[1],\opt[2]}}$.
Repeating the same argument over and over until we reach $\opt[n]$, eventually leads to the conclusion that, indeed, $\altoptset\setminus\optset\subseteq\rejectfun\group{\altoptsettoo\setminus\optset}$.
\end{proof}

We can use these ideas to establish a relationship between our coherent sets of desirable option sets, and Seidenfelds preference ordering $ $ over option sets, defined by 
\begin{equation*}
\optset\preflt\altoptset\ifandonlyif\optset\subseteq\rejectfun\group{\altoptset\cup\optset}
\text{ for all $\optset,\altoptset\in\optsets$}.
\end{equation*}
The following argument shows how this connection comes about:
\begin{align}
\optset\preflt\altoptset
&\ifandonlyif\optset\subseteq\rejectfun\group{\altoptset\cup\optset}
&&\notag\\
&\ifandonlyif\group{\forall\opt\in\optset}\opt\in\rejectfun\group{\altoptset\cup\optset}
&&\notag\\
&\ifandonlyif\group{\forall\opt\in\optset}\opt\in\rejectfun\group{\altoptset\cup\set{\opt}}
&&\text{[Axioms~\ref{ax:rejectfun:aizermann} and~\ref{ax:rejectfun:senalpha}]}\notag\\
&\ifandonlyif\group{\forall\opt\in\optset}\altoptset-\opt\in\rejectset.
&&\text{[Equation~\eqref{eq:interpretation:rejectfuns:after:irreflexivity:and:additivity:intermsof:K:Gert}]}
\label{eq:preflt:in:terms:of:K}
\end{align}
and, of course, conversely
\begin{equation}\label{eq:K:in:terms:of:preflt}
\optset\in\rejectset
\ifandonlyif0\in\rejectfun\group{\optset\cup\set{0}}
\ifandonlyif\set{0}\preflt\optset.
\end{equation}

\newpage
\section{Questions and ideas}

QUESTION: Can we do natural extension in the context of Archimedeanity? I think not, at least not in general, basically because an intersection of open cones (each of which is Archimedean) can be closed (and hence is not Archimedean). (IS THAT ARGUMENT CORRECT? CAN WE FIND AN EXAMPLE?) This would be quite the argument in favour of our more general approach...

IDEA: as `motivation' for convexity, we can use the interpretation that at least one of the extreme vectors in $\optset$ (with zero NOT extreme) is desirable (extreme can either be in terms of rays of a cone, or in terms of extreme points of the convex hull, depending on how we formulate the convexity axiom)

IDEA: as `motivation' for Archimedeanity, we can use the interpretation that at least one of the vectors $\opt$ in $\optset$ is `strictly desirable', in the sense that $\opt-\altopt$ is desirable for some $\altopt\in\archopts$.

IDEA: how about we generalize our theory as to allow the inclusion of zero? That way, we could also deal with non-strict orderings.

}
\end{document}